\documentclass{article} %

\newcommand{\loose}{\looseness=-1}

\usepackage{etoolbox}
\newcommand{\arxiv}[1]{\iftoggle{colt}{}{#1}}
\newcommand{\colt}[1]{\iftoggle{colt}{#1}{}}
\newtoggle{colt}
\global\togglefalse{colt}

\colt{
\usepackage{times}
}

\usepackage[utf8]{inputenc} %
\usepackage[T1]{fontenc}    %
\usepackage{url}            %
\usepackage{booktabs}       %
\usepackage{amsfonts}       %
\usepackage{nicefrac}       %
\usepackage{microtype}      %

\usepackage{tocloft}            %

\usepackage{enumitem}

\usepackage{breakcites}

\usepackage{etoolbox}
\usepackage{comment}
\newtoggle{draft}
\togglefalse{draft}
\newcommand{\draft}[1]{\iftoggle{draft}{#1}{}}

\usepackage{mathrsfs}
\usepackage{algorithm}
\usepackage{verbatim}
\usepackage[noend]{algpseudocode}

\usepackage{multicol}

\usepackage{colortbl}

\usepackage{setspace}

\usepackage{transparent}

\usepackage{inconsolata}
\usepackage[scaled=.90]{helvet}
\usepackage{xspace}

\usepackage{pifont}

\arxiv{
\usepackage[letterpaper, left=1in, right=1in, top=1in, bottom=1in]{geometry}
\PassOptionsToPackage{hypertexnames=false}{hyperref}  %
\usepackage{parskip}

\usepackage[dvipsnames]{xcolor}
\usepackage[colorlinks=true, linkcolor=blue!70!black, citecolor=blue!70!black,urlcolor=black,breaklinks=true]{hyperref}
}

\colt{
\PassOptionsToPackage{dvipsnames}{xcolor}
}

\usepackage{microtype}
\usepackage{hhline}

\makeatletter
\newcommand{\neutralize}[1]{\expandafter\let\csname c@#1\endcsname\count@}
\makeatother

\usepackage{algorithm}

\arxiv{
\usepackage{natbib}
\bibliographystyle{plainnat}
\bibpunct{(}{)}{;}{a}{,}{,}
}

\usepackage{amsthm}
\usepackage{mathtools}
\usepackage{amsmath}
\usepackage{bbm}
\usepackage{amsfonts}
\usepackage{amssymb}

\usepackage{xpatch}

\usepackage{thmtools}
\usepackage{thm-restate}
\declaretheorem[name=Theorem,parent=section]{theorem}
\declaretheorem[name=Lemma,parent=section]{lemma}
\declaretheorem[name=Assumption, parent=section]{assumption}
\declaretheorem[name=Condition, parent=section]{condition}
\declaretheorem[qed=$\triangleleft$,name=Example,style=definition, parent=section]{example}
\declaretheorem[name=Remark,style=definition, parent=section]{remark}
\declaretheorem[name=Proposition, parent=section]{proposition}

\colt{
\newcommand{\jmlrBlackBox}{\rule{1.5ex}{1.5ex}}

\makeatletter
  \renewenvironment{proof}[1][Proof]%
  {%
   \par\noindent{\bfseries\upshape {#1.}\ }%
  }%
  {\hfill\jmlrBlackBox\newline}
  \makeatother
}

\arxiv{
\makeatletter
  \renewenvironment{proof}[1][Proof]%
  {%
   \par\noindent{\bfseries\upshape {#1.}\ }%
  }%
  {\qed\newline}
  \makeatother
}

\theoremstyle{definition}  %

\newtheorem{corollary}{Corollary}[section]

\theoremstyle{plain}
\newtheorem{definition}{Definition}[section]

\xpatchcmd{\proof}{\itshape}{\normalfont\proofnameformat}{}{}
\newcommand{\proofnameformat}{\bfseries}

\usepackage[nameinlink,capitalize]{cleveref}

\newcommand{\pref}[1]{\cref{#1}}
\newcommand{\pfref}[1]{Proof of \pref{#1}}

\renewcommand{\eqref}[1]{\texorpdfstring{\hyperref[#1]{Eq. (\ref*{#1})}}{Eq. (\ref*{#1})}}

\crefformat{equation}{#2(#1)#3}
\Crefformat{equation}{#2(#1)#3}

\Crefformat{figure}{#2Figure #1#3}
\Crefname{assumption}{Assumption}{Assumptions}
\Crefformat{assumption}{#2Assumption #1#3}

\Crefname{informal1}{Theorem}{Theorems}
\Crefformat{informal1}{#2Theorem #1#3}

\Crefname{condition}{Condition}{Conditions}
\Crefformat{condition}{#2Condition #1#3}

\newcommand{\aref}[1]{\texorpdfstring{\hyperref[#1]{Appendix
      \ref*{#1}}}{Appendix \ref*{#1}}}

\usepackage{crossreftools}
\pdfstringdefDisableCommands{%
    \let\Cref\crtCref
    \let\cref\crtcref
}

\newcommand{\creftitle}[1]{\crtcref{#1}}

\usepackage{xparse}

\ExplSyntaxOn
\DeclareDocumentCommand{\XDeclarePairedDelimiter}{mm}
 {
  \__egreg_delimiter_clear_keys: %
  \keys_set:nn { egreg/delimiters } { #2 }
  \use:x %
   {
    \exp_not:n {\NewDocumentCommand{#1}{sO{}m} }
     {
      \exp_not:n { \IfBooleanTF{##1} }
       {
        \exp_not:N \egreg_paired_delimiter_expand:nnnn
         { \exp_not:V \l_egreg_delimiter_left_tl }
         { \exp_not:V \l_egreg_delimiter_right_tl }
         { \exp_not:n { ##3 } }
         { \exp_not:V \l_egreg_delimiter_subscript_tl }
       }
       {
        \exp_not:N \egreg_paired_delimiter_fixed:nnnnn 
         { \exp_not:n { ##2 } }
         { \exp_not:V \l_egreg_delimiter_left_tl }
         { \exp_not:V \l_egreg_delimiter_right_tl }
         { \exp_not:n { ##3 } }
         { \exp_not:V \l_egreg_delimiter_subscript_tl }
       }
     }
   }
 }

\keys_define:nn { egreg/delimiters }
 {
  left      .tl_set:N = \l_egreg_delimiter_left_tl,
  right     .tl_set:N = \l_egreg_delimiter_right_tl,
  subscript .tl_set:N = \l_egreg_delimiter_subscript_tl,
 }

\cs_new_protected:Npn \__egreg_delimiter_clear_keys:
 {
  \keys_set:nn { egreg/delimiters } { left=.,right=.,subscript={} }
 }

\cs_new_protected:Npn \egreg_paired_delimiter_expand:nnnn #1 #2 #3 #4
 {%
  \mathopen{}
  \mathclose\c_group_begin_token
   \left#1
   #3
   \group_insert_after:N \c_group_end_token
   \right#2
   \tl_if_empty:nF {#4} { \c_math_subscript_token {#4} }
 }
\cs_new_protected:Npn \egreg_paired_delimiter_fixed:nnnnn #1 #2 #3 #4 #5
 {
  \mathopen{#1#2}#4\mathclose{#1#3}
  \tl_if_empty:nF {#5} { \c_math_subscript_token {#5} }
 }
\ExplSyntaxOff

\XDeclarePairedDelimiter{\supnorm}{
  left=\lVert,
  right=\rVert,
  subscript=\infty
  }

\DeclarePairedDelimiter{\abs}{\lvert}{\rvert} %
\DeclarePairedDelimiter{\brk}{[}{]}
\DeclarePairedDelimiter{\crl}{\{}{\}}
\DeclarePairedDelimiter{\prn}{(}{)}

\DeclareMathOperator{\En}{\mathbb{E}}

\DeclareMathOperator*{\argmin}{arg\,min} %
\DeclareMathOperator*{\argmax}{arg\,max}

\newcommand{\wt}[1]{\widetilde{#1}}
\newcommand{\wh}[1]{\widehat{#1}}
\newcommand{\wb}[1]{\widebar{#1}}

\def\ddefloop#1{\ifx\ddefloop#1\else\ddef{#1}\expandafter\ddefloop\fi}
\def\ddef#1{\expandafter\def\csname bb#1\endcsname{\ensuremath{\mathbb{#1}}}}
\ddefloop ABCDEFGHIJKLMNOPQRSTUVWXYZ\ddefloop
\def\ddefloop#1{\ifx\ddefloop#1\else\ddef{#1}\expandafter\ddefloop\fi}
\def\ddef#1{\expandafter\def\csname b#1\endcsname{\ensuremath{\mathbf{#1}}}}
\ddefloop ABCDEFGHIJKLMNOPQRSTUVWXYZ\ddefloop
\def\ddef#1{\expandafter\def\csname sf#1\endcsname{\ensuremath{\mathsf{#1}}}}
\ddefloop ABCDEFGHIJKLMNOPQRSTUVWXYZ\ddefloop
\def\ddef#1{\expandafter\def\csname c#1\endcsname{\ensuremath{\mathcal{#1}}}}
\ddefloop ABCDEFGHIJKLMNOPQRSTUVWXYZ\ddefloop
\def\ddef#1{\expandafter\def\csname h#1\endcsname{\ensuremath{\widehat{#1}}}}
\ddefloop ABCDEFGHIJKLMNOPQRSTUVWXYZ\ddefloop
\def\ddef#1{\expandafter\def\csname hc#1\endcsname{\ensuremath{\widehat{\mathcal{#1}}}}}
\ddefloop ABCDEFGHIJKLMNOPQRSTUVWXYZ\ddefloop
\def\ddef#1{\expandafter\def\csname t#1\endcsname{\ensuremath{\widetilde{#1}}}}
\ddefloop ABCDEFGHIJKLMNOPQRSTUVWXYZ\ddefloop
\def\ddef#1{\expandafter\def\csname tc#1\endcsname{\ensuremath{\widetilde{\mathcal{#1}}}}}
\ddefloop ABCDEFGHIJKLMNOPQRSTUVWXYZ\ddefloop
\def\ddefloop#1{\ifx\ddefloop#1\else\ddef{#1}\expandafter\ddefloop\fi}
\def\ddef#1{\expandafter\def\csname scr#1\endcsname{\ensuremath{\mathscr{#1}}}}
\ddefloop ABCDEFGHIJKLMNOPQRSTUVWXYZ\ddefloop

\newcommand{\ind}{\mathbbm{1}}    %

\newcommand{\eps}{\epsilon}
\newcommand{\veps}{\varepsilon}

\newcommand{\ldef}{\vcentcolon=}

\newcommand{\pibar}{\wb{\pi}}

\newcommand{\ma}{MA-DMSO\xspace}
\newcommand{\hr}{HR-DMSO\xspace}
\newcommand{\maf}{\MAFrameworkShort framework\xspace}
\newcommand{\hrf}{\FrameworkShort framework\xspace}

\newcommand{\MAI}{\scrM}

\newcommand{\HRI}{\scrH}

\newcommand{\malong}{Multi-Agent \CompText}

\newcommand{\mashort}{MA-\CompShort}

\newcommand{\creg}{c_{\reg}}
\newcommand{\Creg}{C_{\reg}}

\newcommand{\reg}{\mathrm{reg}}

\newcommand{\vepsl}{\underline{\veps}(T)}
\newcommand{\vepsu}{\wb{\veps}(T)}

\newcommand{\vepslowerT}{\vepsl}
\newcommand{\vepsupperT}{\vepsu}

\newcommand{\Ceff}{C_{\mathrm{prob}}}
\newcommand{\Cprob}{\Ceff}
\newcommand{\Cscale}{C_{\mathrm{prob}}}

\newcommand{\Mtil}{\wt{M}}

\newcommand{\gendec}{\normalfont{\textsf{dec}}}
\newcommand{\rdec}{\normalfont{\textsf{r-dec}}}
\newcommand{\infratio}{\normalfont{\textsf{infr}}}
\newcommand{\explopt}{\normalfont{\textsf{exo}}}

\newcommand{\decopac}[1][\gamma]{\gendec^{\mathrm{o}}_{#1}}
\newcommand{\decoreg}[1][\gamma]{\rdec^{\mathrm{o}}_{#1}}
\newcommand{\deccpac}[1][\veps]{\gendec_{#1}}

\newcommand{\decopacr}[1][\gamma]{\gendec^{\mathrm{o,rnd}}_{#1}}

\newcommand{\infr}[1][\gamma]{\infratio_{#1}}
\newcommand{\exo}[1][\eta]{\explopt_{#1}}

\newcommand{\maexo}{\texttt{MAExO}\xspace}
\newcommand{\exoalg}{\texttt{ExO$^+$}\xspace}

\newcommand{\cMall}{\cM^{+}}
\newcommand{\cMexpl}[1]{\cM^{+}_{#1}}

\newcommand{\RiskDM}{\mathrm{\mathbf{Risk}}(T)}

\renewcommand{\pm}[1][M]{p_{\sss{#1}}}

\renewcommand{\c}{\mathrm{c}}

\newcommand{\sP}{\mathscr{P}}
\newcommand{\sR}{\mathscr{R}}
\newcommand{\sO}{\mathscr{O}}
\newcommand{\sF}{\mathscr{F}}
\newcommand{\sX}{\mathscr{X}}
\newcommand{\sY}{\mathscr{Y}}
\newcommand{\sE}{\mathscr{E}}
\newcommand{\sT}{\mathscr{T}}

\newcommand{\ocirc}{o_{\circ}}
\newcommand{\Ocirc}{\MO_{\circ}}

\newcommand{\qmbar}{
  q_{\sMbar}}

\newcommand{\pmbar}{p_{\sMbar}}

\newcommand{\instpm}{(\MM, \Act, \MO, \{ \fm(\cdot) \}\subs{M})}
\newcommand{\instma}{(\MM, \Act, \MO, \{ \Dev \}_\ag, \{ \Sw \}_\ag)}

\renewcommand{\emptyset}{\varnothing}

\newcommand{\filt}{\mathscr{F}}

\newcommand{\sI}{\scrH}         %
\newcommand{\sJ}{\scrM}         %

\newcommand{\hist}{\mathfrak{H}}

\newcommand{\abscont}{V(\cM)}

\newcommand{\Framework}{Hidden-Reward Decision Making with Structured Observations\xspace}
\newcommand{\FrameworkShort}{HR-DMSO\xspace}
\newcommand{\MAFramework}{Multi-Agent Decision Making with Structured Observations\xspace}
\newcommand{\MAFrameworkShort}{MA-DMSO\xspace}
\newcommand{\dmso}{DMSO\xspace}
\newcommand{\DMSO}{Decision Making with Structured Observations\xspace}

\newcommand{\Ag}{K}
\newcommand{\ag}{k}

\newcommand{\Dev}[1][\ag]{\Pi'_{#1}}
\newcommand{\Devall}{\Pi'}
\newcommand{\dev}[1][\ag]{\pi'_{#1}}
\newcommand{\Sw}[1][\ag]{U_{#1}}

\newcommand{\act}{\pi}
\newcommand{\Act}{\Pi}

\newcommand{\CompText}{Decision-Estimation Coefficient\xspace}
\newcommand{\CompAbbrev}{DEC\xspace}
\newcommand{\CompShort}{\CompAbbrev}

\newcommand{\est}{\mathsf{est}}

\newcommand{\etdppac}{\textsf{E\protect\scalebox{1.04}{2}D}$^{+}$\textsf{ for PAC}\xspace}

\newcommand{\M}[1]{^{{\scriptscriptstyle M}}}  %

\newcommand{\sMbar}{\sss{\Mbar}}
 
\newcommand{\sups}[1]{^{{\scriptscriptstyle#1}}}
\newcommand{\subs}[1]{_{{\scriptscriptstyle#1}}}

\newcommand{\sss}[1]{{\scriptscriptstyle#1}}

\newcommand{\fm}[1][M]{f\sups{#1}}
\newcommand{\fmprime}{\fm[M']}
\newcommand{\pim}[1][M]{\pi_{\sss{#1}}}

\newcommand{\gm}{g\sups{M}}

\newcommand{\gmstar}{g\sups{\Mstar}}

\newcommand{\hm}{h\sups{M}}

\newcommand{\hmstar}{h\sups{\Mstar}}
\newcommand{\hmprime}{h\sups{M'}}

\newcommand{\fmbar}{f\sups{\Mbar}}
\newcommand{\pimbar}{\pi\subs{\Mbar}}

\newcommand{\fmstar}{f\sups{\Mstar}}
\newcommand{\pimstar}{\pi\subs{\Mstar}}

\newcommand{\pistar}{\pi^{\star}}

\newcommand{\pihat}{\wh{\pi}}

\newcommand{\Mbar}{\wb{M}}

\newcommand{\PiRNS}{\Pi_{\mathrm{RNS}}} %

\newcommand{\Est}{\mathrm{\mathbf{Est}}_{\mathsf{H}}}
\newcommand{\EstBar}{\widebar{\mathrm{\mathbf{Est}}}_{\mathsf{H}}}

\newcommand{\EstHel}{\mathrm{\mathbf{Est}}_{\mathsf{H}}(T)}

  \newcommand{\AlgEst}{\mathrm{\mathbf{Alg}}_{\mathsf{Est}}}

\newcommand{\Mhat}{\wh{M}}
\newcommand{\Mstar}{M^{\star}}

\newcommand{\algcommentlight}[1]{\textcolor{blue!70!black}{\transparent{0.5}\footnotesize{\texttt{\textbf{//\hspace{2pt}#1}}}}}

\newcommand{\approxleq}{\lesssim}
\newcommand{\approxgeq}{\gtrsim}

\renewcommand{\ind}[1]{^{{\scriptscriptstyle#1}}}

\newcommand{\bigoh}{O}
\newcommand{\bigoht}{\wt{O}}
\newcommand{\bigom}{\Omega}
\newcommand{\bigomt}{\wt{\Omega}}

\newcommand{\indic}{\mathbb{I}}
\newcommand{\Indic}[1]{\indic_{#1}}

\newcommand{\poly}{\mathrm{poly}}
\newcommand{\polylog}{\mathrm{polylog}}

\newcommand{\kl}[2]{D_{\mathsf{KL}}\prn*{#1\,\|\,#2}}
\newcommand{\Dkl}[2]{D_{\mathsf{KL}}\prn*{#1\,\|\,#2}}

\newcommand{\Dhel}[2]{D_{\mathsf{H}}\prn*{#1,#2}}
\newcommand{\Dgen}[2]{D\prn*{#1\dmid{}#2}}

\newcommand{\Dhels}[2]{D^{2}_{\mathsf{H}}\prn*{#1,#2}}
\newcommand{\hell}[2]{D^{2}_{\mathsf{H}}\prn*{#1,#2}}
\newcommand{\tvd}[2]{D_{\mathsf{TV}}\prn*{#1,#2}}

\newcommand{\Dchis}[2]{D_{\chi^2}\prn*{#1,#2}}

\newcommand{\Dtv}[2]{D_{\mathsf{TV}}\prn*{#1,#2}}

\newcommand{\Dphi}[2]{D_{\phi}(#1\dmid{}#2)}
\newcommand{\Dphishort}{D_{\phi}}

\newcommand{\DhelsX}[3]{D^{2}_{\mathsf{H}}\prn[#1]{#2,#3}}

\newcommand{\Ber}{\mathrm{Ber}}

\newcommand{\dmid}{\;\|\;}

\newcommand{\conv}{\mathrm{co}}
\newcommand{\diam}{\mathrm{diam}}

\newcommand{\mathand}{\quad\text{and}\quad}

\def\multiset#1#2{\ensuremath{\left(\kern-.3em\left(\genfrac{}{}{0pt}{}{#1}{#2}\right)\kern-.3em\right)}}

\newcommand{\grad}{\nabla}

\renewcommand{\emptyset}{\varnothing}

\newcommand{\BR}{\bbR}
\newcommand{\BN}{\bbN}

\newcommand{\single}[2][\ag]{{#2}|_{{#1}}}
\newcommand{\powerset}[1]{\mathcal{P}({#1})}

\newcommand{\NE}{{\scriptscriptstyle{\mathrm{NE}}}}
\newcommand{\CE}{{\scriptscriptstyle{\mathrm{CE}}}}
\newcommand{\CCE}{{\scriptscriptstyle{\mathrm{CCE}}}}

\newcommand{\nc}{\newcommand}
\nc{\DMO}{\DeclareMathOperator}
\newcount\Comments  %
\Comments=1

\DMO{\prox}{prox}
\DMO{\Span}{span}
\DMO{\UCB}{UCB}
\DMO{\LCB}{LCB}
\nc{\br}[2]{{\rm br}^{#1}({#2})}
\nc{\depth}[1]{{\rm d}({#1})}
\nc{\child}[2]{{\rm ch}_{#1}({#2})}
\nc{\parent}[1]{{\rm pa}({#1})}
\nc{\dg}{\dagger}
\nc{\indsig}[2]{\mathcal{I}_{#1}({#2})}
\nc{\total}{{\rm fin}}
\nc{\early}{{\rm pre}}
\nc{\zsink}{z_{\rm sink}}
\nc{\lowv}{{\rm low}}
\nc{\ol}{\wb}
\nc{\madec}[3]{\texttt{ma-dec}_{#1}({#2}, {#3})}
\nc{\madeco}[1]{\texttt{ma-dec}_{#1}}
\nc{\madecd}[3]{\texttt{ma-dec}^{\texttt{d}}_{#1}({#2}, {#3})}
\nc{\mainf}{\texttt{ma-inf}}
\nc{\dec}{\texttt{dec}}
\nc{\decc}{\texttt{dec}^{\texttt{c}}}
\nc{\deccp}{\texttt{dec}^{\texttt{c-pac}}}
\nc{\deccr}{\texttt{dec}^{\texttt{c-reg}}}
\nc{\Alg}{{\rm\bf Alg}}
\nc{\co}{{\rm co}}
\nc{\BV}{\mathbb{V}}
\nc{\ham}[2]{d_{\rm Ham}({#1}, {#2})}

\nc{\gamvec}{\gamma}
\nc{\til}{\widetilde}
\nc{\td}{\tilde}
\nc{\todo}[1]{\ifnum\Comments=1 {\color{red}  [TODO: #1]}\fi}
\nc{\old}[1]{\ifnum\Comments=1 {\color{brown}  [OLD: #1]}\fi}
\newcommand{\noah}[1]{\ngcomment{#1}}
\nc{\BP}{\mathbb{P}}
\nc{\BI}{\mathbb{I}}

\nc{\fools}[3]{\MF_{#3}({#1}, {#2})}
\nc{\fool}[2]{\MF({#1},{#2})}
\nc{\clip}[2]{{\rm clip}\left[ \left. {#1} \right| {#2} \right]}
\nc{\imax}{\omega}
\nc{\CF}{\mathscr{F}}
\nc{\CG}{\mathscr{G}}
\nc{\CA}{\mathscr{A}}
\nc{\MH}{\mathcal{H}}
\nc{\MV}{\mathcal{V}}
\nc{\MC}{\mathcal{C}}
\nc{\MI}{\mathcal{I}}
\nc{\MQ}{\mathcal{Q}}
\nc{\st}{\star}
\nc{\lng}{\langle}
\nc{\rng}{\rangle}
\DMO{\OOPT}{opt}
\nc{\dopt}[2]{\ell_{\OOPT}({#1},{#2})}
\nc{\MG}{\mathcal{G}}
\nc{\MP}{\mathcal{P}}
\nc{\PP}{\mathbb{P}}
\nc{\TT}{\mathbb{T}}
\nc{\TTmax}{\TT_{\max}}
\DMO{\REG}{Reg}
\DMO{\WREG}{wReg}
\nc{\wreg}[2]{{\Delta}^{\rm w}_{{#1}}({#2})}
\nc{\wReg}[2]{{\WREG}_{{#1}}({#2})}
\DMO{\Ham}{Ham}
\DMO{\Gap}{Gap}
\DMO{\GD}{GD}
\DMO{\GDA}{GDA}
\DMO{\EG}{EG}
\DMO{\OGDA}{OGDA}
\DMO{\Unif}{Unif}
\DMO{\Tr}{Tr}
\nc{\Qu}{\ul{Q}}
\nc{\Qo}{\ol{Q}}
\nc{\Ro}{\ol{R}}
\nc{\Vu}{\ul{V}}
\nc{\Vo}{\ol{V}}
\nc{\RanQ}{\Delta Q}
\nc{\RanV}{\Delta V}
\nc{\clipQ}{\Delta \breve{Q}}
\nc{\frzQ}{\Delta \mathring{Q}}
\nc{\clipV}{\Delta \breve{V}}
\nc{\clipdelta}{\breve{\delta}}
\nc{\cliptheta}{\breve{\theta}}
\nc{\delmin}{\Delta_{{\rm min}}}
\nc{\delmins}[1]{\Delta_{{\rm min},{#1}}}
\nc{\gapfinal}[1]{\max \left\{ \frac{\frzQ_{{#1}}^{k^\st}(x,a)}{2H}, \frac{\delmin}{4H} \right\}}
\nc{\post}[2]{R({#1}; {#2})}
\nc{\posts}[3]{R_{#3}({#1}; {#2})}
\nc{\pstr}{{\rm po}}
\nc{\prior}{{\rm pr}}

\nc{\algnst}[1]{\begin{align*}#1\end{align*}}
\nc{\algn}[1]{\begin{align}#1\end{align}}
\nc{\matx}[1]{\left(\begin{matrix}#1\end{matrix}\right)}
\renewcommand{\^}[1]{^{\scriptscriptstyle#1}}

\nc{\nuu}{\nu}

\nc{\bel}[1]{\mathbf{b}({#1})}
\nc{\nbel}[1]{\bar{\mathbf{b}}({#1})}
\nc{\sbel}[2]{\mathbf{b}'_{#1}({#2})}
\nc{\nsbel}[2]{\bar{\mathbf{b}}'_{#1}({#2})}

\nc{\bone}{\mathbf{1}}

\nc{\MO}{\mathcal O}
\nc{\MU}{\mathcal{U}}
\nc{\ME}{\mathcal{E}}
\nc{\MN}{\mathcal{N}}
\nc{\MK}{\mathcal{K}}
\nc{\MM}{\mathcal{M}}
\nc{\ML}{\mathcal{L}}
\nc{\MS}{\mathcal{S}}
\nc{\MT}{\mathcal{T}}
\nc{\BF}{\mathbb F}
\nc{\BQ}{\mathbb Q}
\nc{\MX}{\mathcal{X}}
\nc{\MA}{\mathcal{A}}
\nc{\MD}{\mathcal{D}}
\nc{\MB}{\mathcal{B}}
\nc{\MZ}{\mathcal{Z}}
\nc{\MJ}{\mathcal{J}}
\nc{\MW}{\mathcal{W}}
\nc{\MR}{\mathcal{R}}
\nc{\MY}{\mathcal{Y}}
\nc{\BZ}{\mathbb Z}
\nc{\ep}{\epsilon}
\nc{\vep}{\varepsilon}
\nc{\gapfn}[1]{\varepsilon_{#1}}
\nc{\ggapfn}[2]{\varphi_{#1}({#2})}
\nc{\epsahk}{\gapfn{0}}
\nc{\BH}{\mathbb H}
\nc{\BG}{\mathbb{G}}
\nc{\D}{\Delta}
\nc{\MF}{\mathcal{F}}
\nc{\One}[1]{\indic\left\{{#1}\right\}}
\nc{\bOne}{\mathbf{1}}
\nc{\Aopt}{\mathcal{A}^{\rm opt}}
\nc{\Amul}{\mathcal{A}^{\rm mul}}
\nc{\CM}{\mathscr{M}}
\nc{\CO}{\mathscr{O}}
\nc{\CR}{\mathsscr{R}}

\nc{\SP}{\mathsf P}
\nc{\SQ}{\mathsf Q}
\nc{\SC}{\mathscr{C}}
\nc{\SD}{\mathscr{D}}
\nc{\SE}{\mathscr{E}}
\nc{\SG}{\mathscr{G}}

\nc{\DO}{\accentset{\circ}{\D}}
\nc{\mf}{\mathfrak}
\nc{\mfp}{\mathfrak{p}}
\nc{\mfq}{\mf{q}}
\nc{\Sp}{\mbox{Spec}}
\nc{\Spm}{\mbox{Specm}}
\nc{\hookuparrow}{\mathrel{\rotatebox[origin=c]{90}{$\hookrightarrow$}}}
\nc{\hookdownarrow}{\mathrel{\rotatebox[origin=c]{-90}{$\hookrightarrow$}}}
\nc{\hra}{\hookrightarrow}
\nc{\tra}{\twoheadrightarrow}
\nc{\sgn}{{\rm sgn}}
\nc{\aut}{{\rm Aut}}
\nc{\Hom}{{\rm Hom}}
\nc{\img}{{\rm Im}}
\DMO{\id}{Id}
\DMO{\KL}{KL}
\nc{\kld}{\kl}
\nc{\ren}[3]{D_{#3}({#1}||{#2})}
\nc{\chisq}[2]{\chi^2({#1},{#2})}
\nc{\dvg}[2]{D({#1} \| {#2})}
\DMO{\BSS}{BSS}
\DMO{\BES}{BES}
\DMO{\BGS}{BGS}
\nc{\indep}{\perp}
\DMO{\sink}{sink}
\nc{\fp}[1]{\MP_1({#1})}
\nc{\BO}{\mathbb{O}}
\nc{\BT}{\mathbb{T}}

\nc{\RR}{\mathbb{R}}
\nc{\Gradient}{\nabla}
\nc{\norm}[1]{\left \lVert #1 \right \rVert}
\nc{\EE}{\mathbb{E}}

\DMO{\PR}{Pr}

\nc{\E}{\mathbb{E}}
\nc{\ra}{\rightarrow}

\nc{\opo}{\texttt{opo}}

\input{widebar}

\usepackage[suppress]{color-edits}
 \addauthor{df}{ForestGreen}
 \addauthor{dk}{magenta}
 \addauthor{sr}{yellow!60!black}
 \addauthor{ng}{purple}
 \draft{

 \usepackage[textsize=tiny]{todonotes}
 
 }

\makeatletter
\let\OldStatex\Statex
\renewcommand{\Statex}[1][3]{%
  \setlength\@tempdima{\algorithmicindent}%
  \OldStatex\hskip\dimexpr#1\@tempdima\relax}
\makeatother

 \usepackage{accents}

 \addtocontents{toc}{\protect\setcounter{tocdepth}{0}}

\let\oldparagraph\paragraph
\renewcommand{\paragraph}[1]{\oldparagraph{#1.}}

\arxiv{
\title{On the Complexity of Multi-Agent Decision Making:\\ From Learning in Games to Partial Monitoring}
\author{%
    Dylan J. Foster\\{\small \texttt{dylanfoster@microsoft.com}} \and Dean P. Foster\\{\small \texttt{dean@foster.net}} \and Noah Golowich\thanks{Work done in part while interning at Microsoft Research.}\\{\small \texttt{nzg@mit.edu}}\\ \and Alexander Rakhlin \\{\small \texttt{rakhlin@mit.edu}}
  }
  \date{May 1, 2023}
}

\colt{
\title[Complexity of Multi-Agent Decision Making]{On the Complexity of Multi-Agent Decision Making:\\ From Learning in Games to Partial Monitoring}

\coltauthor{%
 \Name{Author Name1} \Email{abc@sample.com}\\
 \addr Address 1
 \AND
 \Name{Author Name2} \Email{xyz@sample.com}\\
 \addr Address 2%
}
}

\colt{\date{}}

\begin{document}
\maketitle
\begin{abstract}

A central problem in the theory of multi-agent reinforcement learning (MARL) is to understand what structural conditions and algorithmic principles lead to sample-efficient learning guarantees, and how these considerations change as we move from few to many agents. We study this question in a general framework for interactive decision making with multiple agents, encompassing Markov games with function approximation and normal-form games with bandit feedback. We focus on equilibrium computation, in which a centralized learning algorithm aims to compute an equilibrium by controlling multiple agents that interact with an (unknown) environment. Our main contributions are:\loose
\begin{itemize}[leftmargin=3em,rightmargin=3em,topsep=2pt,itemsep=2pt]
\item %
  We provide upper and lower bounds on the optimal sample complexity for multi-agent decision making based on a multi-agent generalization of the  \emph{\CompText}, a complexity measure introduced by \citet{foster2021statistical} in the single-agent counterpart to our setting. Compared to the best results for the single-agent setting, our upper and lower bounds have additional gaps. We show that no ``reasonable'' complexity measure can close these gaps, highlighting a striking separation between single and multiple agents.
\item %
  We show that characterizing the statistical complexity for multi-agent decision making is equivalent to characterizing the statistical complexity of \emph{single-agent} decision making, but with \emph{hidden (unobserved) rewards}, a framework that subsumes variants of the partial monitoring problem. As a consequence of this connection, we characterize the statistical complexity for hidden-reward interactive decision making to the best extent possible.%
\end{itemize}
 Building on this development,
 we provide several new structural results, including 1) conditions under which the statistical complexity of multi-agent decision making can be reduced to that of single-agent, and 2) conditions under which the so-called \emph{curse of multiple agents} can be avoided.

\end{abstract}

\arxiv{
  \addtocontents{toc}{\protect\setcounter{tocdepth}{2}}
  {\hypersetup{hidelinks}
    \tableofcontents
  }
  \newpage
}

  \section{Introduction}
\label{sec:intro}
\noah{TODO check there's no overlap in text after theorems in sec 1 and in later sections where theorems stated formally}

Many of the most exciting frontiers for artificial intelligence are game-theoretic in nature, and involve multiple agents with differing incentives interacting and making decisions in dynamic environments, either in cooperation or in competition. Numerous recent approaches, adopting the framework of \emph{multi-agent reinforcement learning} (MARL), have achieved human-level performance in multi-agent game-playing domains \citep{silver2016mastering,brown2018superhuman,perolat2022mastering,kramar2022negotiation,bakhtin2022human}, and while there is great potential to apply MARL further in domains such as cybersecurity
\citep{malialis2015distributed}, autonomous driving
\citep{shalevshwartz2016safe}, and economic policy
\citep{zheng2022ai}, sample-efficiency and reliability are obstacles for real-world deployment. Consequently, a central question is to understand what modeling assumptions and algorithm design principles lead to robust, sample-efficient learning
guarantees. This issue is particularly salient in domains with high-dimensional feedback and decision spaces, where the use of flexible models such as neural networks is critical.

For reinforcement learning in single-agent settings, an extensive line of research identifies modeling assumptions (or, structural conditions) under which sample-efficient learning is possible  \citep{russo2013eluder,jiang2017contextual,sun2019model,wang2020provably,du2021bilinear,jin2021bellman,foster2021statistical}.
Notably, \citet{foster2021statistical,foster2022complexity,foster2023tight} provide a notion of statistical complexity, the \emph{\CompText} (\CompShort), which is both necessary and sufficient for low sample complexity, and leads to unified principles for algorithm design.
For multi-agent reinforcement learning, structural conditions for sample-efficient learning have also received active investigation \citep{chen2022almost,li2022learning,xie2020learning,jin2022power,huang2021towards,zhan2022decentralized,liu2022sample}, drawing inspiration from the single agent setting. However, insights from single agents do not always transfer to multiple agents in intuitive ways \citep{daskalakis2022complexity}, and development has largely proceeded on a case-by-case basis. As such, the problem of developing a unified understanding or \emph{necessary} conditions for sample-efficient multi-agent reinforcement learning remained open.

\paragraph{Contributions}
We consider a general framework, \emph{Multi-Agent Decision Making with Structured Observations} (\MAFrameworkShort), which generalizes the single-agent \dmso framework of \citet{foster2021statistical} and subsumes multi-agent reinforcement learning with general function approximation, as well as normal-form games with bandit feedback and structured action spaces. We focus on \emph{centralized} equilibrium computation, where a centralized learning algorithm with control of all agents aims to compute an equilibrium by interacting with the (unknown) environment.
Our main results are:%
\begin{itemize}
\item \textbf{Complexity of multi-agent decision making.} We introduce a new complexity measure, the \emph{\malong}, generalizing the \CompText of \citet{foster2021statistical,foster2023tight}, and show that it leads to upper and lower bounds on the optimal sample complexity for multi-agent decision making. Compared to the best results for the single-agent setting \citep{foster2023tight}, our upper and lower bounds have additional gaps, which we show that \emph{no (reasonable) complexity measure can close}. 
\item \textbf{Complexity of hidden-reward decision making.}
  We show that characterizing the statistical complexity for multi-agent decision making is equivalent to characterizing the statistical complexity of \emph{single-agent decision making}, but with \emph{hidden (unobserved) rewards}, a framework that we refer to as \emph{\Framework} (\FrameworkShort). Leveraging this connection, we characterize the statistical complexity of the \FrameworkShort framework, which encompasses PAC variants of the stochastic partial monitoring problem \citep{bartok2014partial}, to the best extent possible (for any reasonable complexity measure).\loose
\item \textbf{Additional insights for multiple agents.} Building on the results above, we provide a number of new structural results and algorithmic insights for multi-agent decision making and RL, including 1) general conditions under which the complexity of multi-agent decision making can be reduced to that of single agent decision making, and 2) general conditions under which the so-called \emph{curse of multiple agents} \citep{jin2021v} can be removed.%
\end{itemize}
\arxiv{
  Our results provide a foundation on which to develop a unified understanding of multi-agent reinforcement learning and decision making, and highlight a number of exciting open problems.
}
\colt{
  \paragraph{Organization}
  \emph{Due to space constraints, the main body presents an informal overview of our results, with detailed statements deferred to \pref{part:main} of
    the appendix. Examples are given in \pref{part:examples}. See \pref{sec:organization} for an overview of appendix organization.
  }
  }

\subsection{Multi-agent interactive decision making (\MAFrameworkShort)}
We introduce a multi-agent generalization of the \emph{\DMSO} framework of \cite{foster2021statistical}, which we refer to as \emph{\MAFramework} (\MAFrameworkShort).  The framework consists of $T$ rounds of interaction between $\Ag$ agents and the environment. For each round $t = 1, 2, \ldots, T$:
\begin{enumerate}
\item The agents collectively select a \emph{joint decision} $\act\^t \in \Act$, where $\Act$ is the \emph{joint decision space}.
\item Each agent $\ag\in\brk{\Ag}$ receives a \emph{reward} $r_\ag\^t \in \MR \subseteq \bbR$ and a \emph{pure observation} $\ocirc\^t \in \Ocirc$ sampled via $(r_1\^t, \ldots, r_\Ag\^t, \ocirc\^t) \sim \Mstar(\pi\^t)$, where $\Mstar : \Pi \ra \Delta(\MR^\Ag \times \Ocirc)$ is the underlying \emph{model}. We refer to $\MR$ as the \emph{reward space} and to $\Ocirc$ as the \emph{pure observation space}. We call the tuple $(r_1\^t, \ldots, r_K\^t, \ocirc\^t)$ consisting of all information revealed to agents on round $t$ the \emph{full observation}. 
\end{enumerate}
After the $T$ rounds of interaction, the agents collectively output a joint decision $\wh\act \in \Act$, which may be chosen in an arbitrary fashion based on the data observed over the  $T$ rounds, and may be randomized according to a distribution $p \in \Delta(\Act)$. Their goal, which we formalize in the sequel, is to choose $\pihat$ to be an equilibrium (e.g., Nash or CCE) for the average reward function induced by $\Mstar$. The model $M^\st$, which is formalized as a probability kernel from decisions to full observations (\cref{sec:add-prelim}), is unknown to the agents, and is to be interpreted as the underlying environment. 

The DMSO framework captures most online decision making problems in which a \emph{single agent} interacts with an unknown environment, and the \MAFrameworkShort framework further generalizes it to capture a wide variety of problems in \emph{multi-agent} reinforcement learning. Examples include learning in normal-form games with bandit feedback \citep{Rakhlin2013Optimization,foster2016learning,heliou2017learning,wei2018more,giannou2021rate}, where $\Mstar$ represents the distribution over rewards for each entry in the game, and learning in Markov games with function approximation \citep{chen2022almost,li2022learning,xie2020learning,jin2022power,huang2021towards,zhan2022decentralized,liu2022sample}, where $\Mstar$ represents the underlying Markov game. Additional examples include normal-form games with structured (e.g., convex-concave) rewards and high-dimensional action spaces \citep{bravo2018bandit,maheshwari2022zeroth,lin2021optimal}.

\paragraph{Realizability}
While the model $\Mstar$ is unknown, we make a standard \emph{realizability} assumption.\loose
    \begin{assumption}[Realizability for \MAFrameworkShort]
  \label{ass:realizability-ma}
  The agents have access to a model class $\cM$ consisting of probability kernels $M : \Pi \ra \Delta(\MR^\Ag \times \Ocirc)$ that contains the true model $\Mstar$.
\end{assumption}
For normal-form games, the class $\cM$ encodes
structure in the rewards (e.g., linearity or convexity) or decision space, and for Markov games it encodes structure in  transition probabilities or value functions. See \cref{part:examples} of the appendix for examples, as well as \citet{foster2021statistical} for \arxiv{in the single-agent case where}\colt{with} $K=1$.
\loose

\subsubsection{Equilibria}
The goal of the agents in the \maf is to produce an equilibrium for the underlying game/model $\Mstar$. We formalize the notion of equilibrium in a general fashion which encompasses several standard game-theoretic equilibria. %
To keep notation compact, we define $\MO := \MR^\Ag \times \Ocirc$ to be the \emph{full  observation space}, and will write $o\^t := (r_1\^t, \ldots, r_\Ag\^t, \ocirc\^t)$ to denote the (full) observation.  For $M \in \MM$ and $\act \in \Act$, let $\E\sups{M, \pi}[\cdot]$ denote expectation under the process $(r_1, \ldots, r_\Ag, \ocirc) \sim M(\pi)$; in light of our notation $\MO = \MR^\Ag \times \Ocirc$, we will sometimes denote this process as $o \sim M(\pi)$. For each $\ag \in [\Ag]$ and $M \in \MM$, define the mapping $\fm_\ag : \Act \ra \bbR$ by $\fm_\ag(\act) = \E\sups{M, \pi}[r_\ag]$, which denotes agent $\ag$'s expected reward under $M$ when the joint decision $\act$ is played. \loose

For each agent $\ag$, we assume they are given a \emph{deviation space} $\Dev$, together with a \emph{switching function}, $\Sw : \Dev \times \Act \ra \Act$. Given a joint decision $\act \in \Act$, each agent $\ag$ can choose a deviation $\dev \in \Dev$, which will have the effect that the joint policy played by agents is $\Sw(\dev, \act)$ instead of $\act$.
We aim for the output policy $\wh \pi \sim p$ produced in the \MAFrameworkShort setup to have the property that no agent can significantly increase their value by deviating. We quantify this via
\begin{align}
\RiskDM := \E_{\wh \pi \sim p} \left[ \sum_{k=1}^K \sup_{\dev \in \Dev} \fmstar_\ag(\Sw(\dev, \act)) - \fmstar_\ag(\act) \right].\label{eq:risk}
\end{align}
For $M \in \MM$ and $\act \in \Act$, we abbreviate $\hm(\act) = \sum_{k=1}^K \sup_{\dev \in \Dev} \fm_\ag(\Sw(\dev, \act)) - \fm_\ag(\act)$, so that $\RiskDM := \E_{\wh \pi \sim p} [\hmstar(\wh \pi)]$. The quantity  $\hm(\pi)$ measures the sum of players' incentives to deviate from the joint decision $\pi$ under $M$; we say that $\act$ is an \emph{equilibrium} for $M$ if $\hm(\act) = 0$.\loose

The notion \cref{eq:risk} captures standard notions of equilibria, including Nash equilibria, correlated equilibria (CE), and coarse correlated equilibria (CCE). As we have strived to make the setup in this section as general as possible, we make two regularity assumptions to rule out other, potentially pathological notions of equilibria. \arxiv{The first posits that equilibria exist, and the second asserts that each agent can always choose a deviation that does not decrease their value.}
\begin{assumption}[Existence of equilibria]
  \label{ass:existence-eq}
For \arxiv{any model}\colt{all} $M \in \MM$, there exists $\act \in \Act$ with $\hm(\act) = 0$. 
\end{assumption}
\begin{assumption}[Monotonicity of the optimal deviation]
  \label{ass:nonneg-dev}
  For any model $M \in \MM$, agent $\ag \in [\Ag]$, and joint decision $\act \in \Act$, there is some deviation $\dev \in \Dev$ such that $\fm_\ag(\Sw(\dev, \act)) \geq \fm_\ag(\act)$. 
\end{assumption}
\cref{ass:nonneg-dev} implies that, up to a factor of $\Ag$, the notion of risk in \pref{eq:risk} is equivalent to the maximal gain any agent can achieve by deviating. Both assumptions are satisfied by Nash equilibria, CE, and CCE (see \cref{def:ne-instance,def:cce-instance,def:ce-instance}). %

Summarizing, the \MAFrameworkShort framework captures the problem of equilibrium computation: the agents aim to find an ($\veps$-approximate) equilibrium $\pihat$ so that $\RiskDM \leq \veps$, but the underlying game is unknown, so they must gather information by interacting with it and exploring. We refer to the tuple $\sJ = (\MM, \Act, \MO, \{ \Dev \}_\ag, \{ \Sw \}_\ag)$ as an \emph{instance} for the \MAFrameworkShort framework. The instance $\sJ$ specifies all information known a-priori to the agents before the learning process begins.

\begin{remark}
As described, the \maf allows \emph{centralized} learning protocols, in which a single learning algorithm may control all agents in a centralized fashion (equivalently, unlimited communication and coordination is permitted amongst agents throughout the learning process). Lower bounds against centralized learning algorithms certainly apply to decentralized algorithms, being a special case of the former.  However, in general there may be gaps between the minimax sample complexity for centralized and decentralized algorithms, and we leave a detailed investigation of decentralized multi-agent interactive decision-making  for future work.
\end{remark}

\begin{remark}
  Our presentation of the \MAFrameworkShort framework captures settings in which (multi-agent) learning algorithms are evaluated only on the proximity of output decision $\wh \act$ to equilibrium, as opposed to, say, the average proximity to equilibrium for the decisions played throughout the $T$ rounds of learning. In the single-agent setting, such guarantees are often referred as PAC (Probability Approximately Correct) guarantees, as opposed to regret guarantees \citep{foster2023tight}. It is fairly straightforward to extend many of our results to the regret setting. %
\end{remark}

\subsubsection{Examples of instances for \MAFrameworkShort}

We now highlight basic multi-agent bandit and MARL problems captured by the \MAFrameworkShort framework. We describe \arxiv{the structure of the decision space, deviation space, and switching functions that allow us to}\colt{how to} capture concrete notions of equilibria, then give examples of instances $\scrM$.\loose%

\paragraph{Examples of equilibria}
In \cref{def:ne-instance,def:cce-instance} below, we specify the decision spaces, deviation spaces, and switching functions that can be used to capture \emph{Nash equilibria} and \emph{coarse correlated equilibria (CCE)}; see \cref{app:ma_examples_equilibria} for further examples, including \emph{correlated equilibria (CE)} and variants of CCE and CCE which have been studied in the context of Markov games.

\begin{definition}[Nash equilibrium instance]
  \label{def:ne-instance}
  An \ma instance $\sJ = \instma$ is a \emph{Nash equilibrium (NE) instance} if the following holds:
  \begin{enumerate}
  \item For sets $\Pi_1, \ldots, \Pi_\Ag$, we have $\Pi = \Pi_1 \times \cdots \times \Pi_\Ag$. \colt{For each $\ag \in [\Ag]$, we have $\Pi_\ag' = \Pi_\ag$.}
  \arxiv{\item For each $\ag \in [\Ag]$, we have $\Pi_\ag' = \Pi_\ag$.}
  \item For each $\ag \in [\Ag]$, $\pi \in \Pi$, and $\dev \in \Dev$, it holds that $\Sw(\dev, \pi) = (\pi_k', \pi_{-k})$.\footnote{{We adopt the convention that $\pi_{-k}=(\pi_1,\ldots,\pi_{k-1},\pi_{k+1},\ldots)$ and $(\pi_k, \pi_{-k}) = (\pi_1, \ldots, \pi_k, \ldots, \pi_K)$.}}
  \end{enumerate}
  We say that the NE instance $\MAI$ is a \emph{two-player zero-sum NE instance} if $\Ag = 2$, and for all $M \in \MM, \act \in \Act$, it holds that $\fm_1(\act) + \fm_2(\act) = 0$. 
\end{definition}
The notion of Nash equilibrium in \cref{def:ne-instance} encompasses, but goes well beyond the standard notion of mixed Nash equilibria in normal-form games (e.g., \citep{Nisan2007Algorithmic}). In particular, \cref{def:ne-instance} does not assume that the decision spaces $\Pi_k$ are distributions over a pure action space of player $k$. Therefore, it captures refined solution concepts including \emph{pure} Nash equilibria in normal-form games \citep{daskalakis2006computing} and Markov Nash equilibria in Markov games (\cref{ex:mne}). As a result of this generality, an NE instance per \cref{def:ne-instance} is not  guaranteed to satisfy \cref{ass:existence-eq}, i.e., to have equilibria; nevertheless, we will ensure that all examples of NE instances we consider are constructed in such a way so that \cref{ass:existence-eq} is satisfied. 

\arxiv{\cref{def:cce-instance} gives an analogue of \cref{def:ne-instance} which can capture the notion of (normal-form) coarse correlated equilibria. }
\begin{definition}[Coarse correlated equilibrium instance]
  \label{def:cce-instance}
  An instance $\sJ = \instma$ for \MAFrameworkShort is a \emph{coarse correlated equilibrium (CCE) instance} if the following holds:
  \begin{enumerate}
  \item For some sets $\Sigma_1, \ldots, \Sigma_K$ (called \emph{pure decisions}), we have $\Pi = \Delta(\Sigma_1\times \cdots \times \Sigma_\Ag)$. We will write $\Sigma := \Sigma_1 \times \cdots \times \Sigma_K$.
    \colt{For each $\ag \in [\Ag]$, we have $\Pi_\ag' = \Sigma_\ag \cup \{ \perp \}$.}
  \item \label{it:mpisigma} For each $\pi \in \Pi$ and $M \in \MM$, it holds that $M(\pi) = \E_{\sigma \sim \pi}[M(\sigma)]$.      %
  Further, there is a measurable
    function $\varphi : \MO \ra \Sigma$ so that $\BP_{o \sim M(\sigma)}(\varphi(o) = \sigma) = 1$ for each $M \in \MM$ and $\sigma \in \Sigma$ (i.e., $M(\sigma)$ reveals $\sigma$).\colt{\footnote{When clear from context, we associate singleton distributions $\indic_{\sigma}$ with the element $\sigma$ the distribution places its mass on.}}
  \arxiv{\item For each $\ag \in [\Ag]$, we have $\Pi_\ag' = \Sigma_\ag \cup \{ \perp \}$.}
  \item For each $\ag \in [\Ag]$, $\pi \in \Pi$, and $\dev \in \Dev$, it holds that
\begin{align}
      \textstyle
      \Sw(\dev, \pi) =\begin{cases}
        \indic_{\dev} \times \pi_{-\ag} &: \dev \neq \perp \\
\pi &: \dev = \perp
      \end{cases}\nonumber,
\end{align}
    where $\indic_{\dev} \times \pi_{-\ag} \in \Pi$ denotes the product distribution whereby agent $\ag$ plays $\dev$ and the other agents play according to their joint marginal under $\pi \in \Pi$.
  \end{enumerate}
\end{definition}
In \cref{def:cce-instance}, the inclusion of $\perp \in \Dev$ corresponds to player $k$ choosing not to deviate. This is necessary to satisfy \cref{ass:nonneg-dev} since there can be distributions $\pi \in \Pi$ so that if player $k$ deviates to any fixed option in $\Sigma_k$, their value decreases.%
\footnote{In some contexts, \arxiv{coarse correlated equilibria}\colt{CCE} are defined without such an option $\perp \in \Dev$; in settings where the only goal is to establish \emph{upper bounds}, the addition of $\perp$ does not make a material difference (since its only effect is to guarantee that the suboptimality of a decision is non-negative), but since we aim to prove \emph{lower bounds} as well, it is crucial to have the option $\perp \in \Dev$. } We also remark that \cref{def:cce-instance} captures the notion of CCE in \emph{normal-form} games (with pure action sets $\Sigma_k$); in \cref{app:ma_examples_equilibria} we give an example of an instance capturing a slightly different notion of CCE in Markov games.

\arxiv{
\begin{remark}
We use the following convention throughout the paper, including in \cref{it:mpisigma} of the above definition: when convenient, we associate any singleton distribution with the element that the distribution places its mass on. For instance, for a pure decision $\sigma = (\sigma_1, \ldots, \sigma_\Ag) \in \Sigma_1 \times \cdots \times \Sigma_\Ag$ in the context  of \cref{def:cce-instance}, we will denote its corresponding singleton distribution $\indic_\sigma \in \Delta(\Sigma) = \Pi$ as just $\sigma \in \Pi$. In addition, when possible, we use the convention that $\Sigma$ denotes a pure decision set, whereas $\Pi$ denotes a decision set that may be pure or mixed (this will be clear from context).
\end{remark}
}

\paragraph{Examples of equilibria}
We now provide concrete examples for the NE and CCE instances in \cref{def:ne-instance,def:cce-instance}; see \cref{app:ma_examples} for additional examples (including CE) and discussion.

\begin{example}[Learning Nash, and CCE in normal-form games]
  \label{ex:cce}
 We begin by describing the problem of learning in normal-form games with bandit feedback. Suppose that each player $k \in [K]$ has a finite \emph{action set} $\MA_k$, with joint action set denoted by $\MA = \MA_1 \times \cdots \times \MA_K$. %
  Upon playing a joint action profile $a \in \MA$, the (unknown) ground truth model $M^\st$ samples $(r_1, \ldots, r_K) \sim M^\st(a)$, where $r_k$ denotes the reward received by player $k$. The goal is to compute a distribution over joint action profiles which is some type of \emph{equilibrium} of the game whose payoffs are given by expected rewards under $\Mstar$.\loose \arxiv{Below we formally describe the \MAFrameworkShort instances corresponding to the problems of computing Nash equilibria and coarse correlated equilibria:}
  \arxiv{\begin{itemize}}
    \colt{\begin{itemize}[leftmargin=*]}
  \item To express the problem of Nash equilibrium computation, set $\Pi_k := \Delta(\MA_k)$ for each $k$, let $\Pi = \Pi_1 \times \cdots \Pi_K$ be the space of product distributions on $\MA$, and define $\Dev, \Sw$ as in \cref{def:ne-instance}. Moreover, let $\MR = [0,1]$ and $\Ocirc = \MA$, $\MO = \MR^K \times \Ocirc$. Let $\MM$ be the class of models so that: (a) for all singleton distributions $\indic_a = \indic_{a_1} \times \cdots \times \indic_{a_K} \in \Pi$, $M(\indic_a) \in \Delta(\MR^K) \times \{\indic_a\}$, and (b) for all $\pi \in \Pi$, $M(\pi) = \E_{a \sim \pi}[M(\indic_a)]$. In words, $M(\pi)$ samples an action profile $a \sim \pi$ (in particular, $a_k \sim \pi_k$ for each $k$), reveals the action profile $a$ sampled,\footnote{We assume that the model reveals the action profile played for technical reasons (see \cref{ass:convexity_pols}); this is a very mild assumption, satisfied in essentially all (centralized) settings, since agents know which action they play.} as well as $K$ $[0,1]$-valued rewards drawn from an arbitrary distribution. Then the instance $\sJ = \instma$ is an NE instance per \cref{def:ne-instance}. For $\wh\pi \in \Pi$, $\hmstar(\wh\pi)$ measures the sum of the players' incentives to deviate from $\wh\pi$ under the true model $M^\st$; in particular, $\hmstar(\wh\pi) = 0$ if and only if $\wh \pi$ is a Nash equilibrium of the game whose payoff functions are given by $a \mapsto \fmstar_k(a) := \E\sups{\Mstar, a}[r_k]$. 
  \item To express the problem of CCE computation, set $\Pi = \Delta(\MA_1 \times \cdots \times \MA_K)$, and define $\Dev, \Sw$ as in \cref{def:cce-instance} with $\Sigma_k = \MA_k$ for each $k$. Moreover, let $\MR = [0,1]$, and $\Ocirc = \MA$, $\MO = \MR^K \times \Ocirc$. Let $\MM$ be the class of models so that: (a) for all singleton distributions $\indic_a \in \Pi$, $M(\indic_a) \in \Delta(\MR^K) \times \{\indic_a\}$, and (b), for $\pi \in \Pi$, $M(\pi) = \E_{a \sim \pi}[M(\indic_a)]$. Then the instance $\sJ := \instma$ is a CCE instance per \cref{def:cce-instance}. For $\wh\pi \in \Pi$, $\hmstar(\wh \pi)$ measures the sum of players' \arxiv{non-negative }incentives to deviate from $\wh \pi$  under the true model $M^\st$; in particular, $\hmstar(\wh\pi) = 0$ if and only if $\wh \pi$ is a CCE of the game whose payoff functions are given by $a \mapsto \fmstar_k(a) := \E\sups{\Mstar, a}[r_k]$.\loose
    \arxiv{  \item The \MAFrameworkShort framework can also express the problem of \emph{correlated equilibrium computation}. We use the same CCE instance as described in the previous point, but define $\Dev, \Sw$ slightly differently; see \cref{def:ce-instance} in \cref{app:ma_examples}.
      }
  \end{itemize}
  In the most basic (``finite-action'') version of the normal-form game setup, we allow $M^\st(a)$ to be arbitrary, subject to the constraint that  $r_k \in [0,1]$, but assume that $A_k\ldef{}\abs{\cA_k}<\infty$ for all $k$. Beyond finite-action normal-form games, the \MAFrameworkShort framework captures structured normal-form games with bandit feedback (equivalently, multi-agent variants of the structured bandit problem), in which the players' action spaces are large or infinite, but rewards have additional structure. Examples include linear, convex, or concave payoffs (generalizing bandit convex optimization) \citep{bravo2018bandit,maheshwari2022zeroth,lin2021optimal}, and many others \citep{cui2022learning}. 
\end{example}

\colt{
\begin{example}[Learning Nash equilibria in Markov games]
  \label{ex:mne}
  For reinforcement learning, each model $M \in \MM$ is a Markov game $M = (H, \{ \MS_h\}_{h \in [H]}, \{ \MA_k \}_{k \in [K]},\ \{P_h\sups{M}\}_{h\in [H]}, \{ R_{k,h}\sups{M} \}_{k \in [K], h \in [H]}, d_1)$, where $H \in \BN$ denotes the \emph{horizon}, $\MS_h$ denotes the state space for layer $h$, $\MA_k$ denotes the action space for player $k$, $\MA := \MA_1 \times \cdots \times \MA_K$ denotes the joint action space,  $P_h\sups{M} : \MS_h \times \MA \ra \Delta(\MS_{h+1})$ denotes the probability transition kernel for layer $h$, $R_{k,h}\sups{M} : \MS_h \times \MA \ra \Delta(\BR)$ denotes player $k$'s reward distribution for layer $h$, and $d_1 \in \Delta(\MS_1)$ denotes the initial state distribution. Each agent's decision space $\Pi_k$ is the space of their \emph{randomized Markov policies} $\pi_k = (\pi_{k,1}, \ldots, \pi_{k,H})$, where $\pi_{k,h} : \MS_h \ra \Delta(\MA_k)$, and the joint decision space is $\Pi = \Pi_1 \times \cdots \times \Pi_K$. Given a joint decision $\pi \in \Pi$, an observation is drawn from $M(\pi)$ according to the following process, called an \emph{episode}. First, an initial state is drawn according to $s_1 \sim d_1$. Then, for $h \in [H]$, the state evolves via:
  \begin{enumerate}[leftmargin=*]
  \item $a_{k,h} \sim \pi_{k,h}(s_h)$, and $r_{k,h} \sim R_{k,h}\sups{M}(s_h, (a_{1,h}, \ldots, a_{K,h}))\;\forall{}k\in\brk{K}$.\quad 2. $s_{h+1} \sim P_h\sups{M}(\cdot | s_h, (a_{1,h}, \ldots, a_{K,h}))$
\end{enumerate}
  The distribution of $(r_1, \ldots, r_K, \ocirc) \sim M(\pi)$ is given by $\ocirc = \{ (s_h, (a_{1,h}, \ldots, a_{K,h}), (r_{1,h}, \ldots, r_{K,h})\}_{h \in [H]}$ and $r_k = \sum_{h=1}^H r_{k,h}$.  With  $\Dev, \Sw$ defined as in \cref{def:ne-instance}, $\sJ := \instma$ is an NE instance of \MAFrameworkShort, and $\hmstar(\wh \pi) = 0$ if and only if $\wh \pi$ is a \emph{Markov Nash equilibrium} of $\Mstar$ (e.g., \citet{daskalakis2022complexity}). By restricting $\cM$ appropriately, this formulation captures complex settings with function approximation \citep{chen2022almost,li2022learning,xie2020learning,jin2022power,huang2021towards,zhan2022decentralized,liu2022sample}; see \cref{app:ma_examples}.
\end{example}
}

\arxiv{
\begin{example}[Learning Nash equilibria in Markov games]
  \label{ex:mne}
  Next, we consider an episodic multi-agent finite-horizon reinforcement learning setting, in which the unknown ground truth model $M^\st$ is a \emph{Markov game}. We focus on the problem of computing a Markov Nash equilibrium; the problems of computing variants of CCE and CE are discussed in \cref{app:ma_examples_equilibria}.

  Formally, each model $M \in \MM$ defines a Markov game of the form $M = (H, \{ \MS_h\}_{h \in [H]}, \{ \MA_k \}_{k \in [K]},\\ \{P_h\sups{M}\}_{h\in [H]}, \{ R_{k,h}\sups{M} \}_{k \in [K], h \in [H]}, d_1)$, where $H \in \BN$ denotes the \emph{horizon}, $\MS_h$ denotes the state space for layer $h$, $\MA_k$ denotes the action space for player $k$, $\MA := \MA_1 \times \cdots \times \MA_K$ denotes the joint action space,  $P_h\sups{M} : \MS_h \times \MA \ra \Delta(\MS_{h+1})$ denotes the probability transition kernel for layer $h$, $R_{k,h}\sups{M} : \MS_h \times \MA \ra \Delta(\BR)$ denotes player $k$'s reward distribution for layer $h$, and $d_1 \in \Delta(\MS_1)$ denotes the initial state distribution. The transition kernel and reward distributions are allowed to vary across models in $\MM$, but we assume that the state and action spaces, horizon, and initial state distribution are the same for all models in $\MM$.

  Each agent's decision space $\Pi_k$ is the space of their \emph{randomized Markov policies} $\pi_k = (\pi_{k,1}, \ldots, \pi_{k,H})$, where $\pi_{k,h} : \MS_h \ra \Delta(\MA_k)$, and the joint decision space is $\Pi = \Pi_1 \times \cdots \times \Pi_K$. Given a joint decision $\pi \in \Pi$, an observation is drawn from $M(\pi)$ according to the following process, called an \emph{episode}. First, an initial state is drawn according to $s_1 \sim d_1$. Then, for $h \in [H]$, the following random variables are sampled in sequence:
  \begin{itemize}
  \item For all $k \in [K]$, $a_{k,h} \sim \pi_{k,h}(s_h)$, and $r_{k,h} \sim R_{k,h}\sups{M}(s_h, (a_{1,h}, \ldots, a_{K,h}))$.
  \item $s_{h+1} \sim P_h\sups{M}(\cdot | s_h, (a_{1,h}, \ldots, a_{K,h}))$. 
  \end{itemize}
  The sequence $\tau = \{ (s_h, (a_{1,h}, \ldots, a_{K,h}), (r_{1,h}, \ldots, r_{K,h})\}_{h \in [H]}$ of all states, actions, and rewards is called a \emph{trajectory}. The distribution of $(r_1, \ldots, r_K, \ocirc) \sim M(\pi)$ is given by $\ocirc = \tau$ and $r_k = \sum_{h=1}^H r_{k,h}$. In particular, the pure observation space $\Ocirc$ is the space of trajectories. We assume that $\sum_{h=1}^H r_{k,h} \in [0,1]$ with probability 1, meaning that $\MR = [0,1]$, and write $\MO = \MR^K \times \Ocirc$.

  let $\Dev, \Sw$ be defined as in \cref{def:ne-instance}. Then the instance $\sJ := \instma$ is an NE instance of \MAFrameworkShort. For $\wh \pi \in \Pi$, the value $\hmstar(\wh \pi)$ measures the sum of players' incentives to deviate from $\wh \pi$ under the true model $\Mstar$, where each agent can choose an arbitrary non-stationary Markov policy as their deviation. In particular, $\hmstar(\wh \pi) = 0$ if and only if $\wh \pi$ is a \emph{Markov Nash equilibrium} of $\Mstar$ (e.g., \citet{daskalakis2022complexity}).

  A key question in (multi-agent) online reinforcement learning is to understand what structural properties of the model class $\MM$ permit efficient learnability. In the simplest case (known as the \emph{tabular} case), the state and action spaces $\MS_h, \MA$ are all finite, and $\MM$ consists of all models specified by arbitrary transitions $P_h\sups{M}$ and reward distributions $R_{k,h}\sups{M}$ with uniformly bounded support. By restricting $\cM$, our formulation also captures a more complex settings that incorporate function approximation \citep{chen2022almost,li2022learning,xie2020learning,jin2022power,huang2021towards,zhan2022decentralized,liu2022sample}; see \cref{app:ma_examples}.

\end{example}
}
\arxiv{We refer to \cref{app:ma_examples} for additional examples and exposition.}

\subsection{\MAFrameworkShort: Overview of results}

We provide upper and lower bounds on the minimax sample complexity for the \MAFrameworkShort framework using a new complexity measure, the \emph{\malong}, which generalizes the \emph{Constrained Decision-Estimation Coefficient} introduced by \cite{foster2023tight} in the single agent setting.\loose

\paragraph{The \malong}
For \arxiv{probability }measures $\bbP$ and $\bbQ$ with a common
dominating measure $\nu$, define squared Hellinger distance by \colt{$  \Dhels{\bbP}{\bbQ}=\int\prn[\big]{\sqrt{\frac{d\bbP}{d\nu}}-\sqrt{\frac{d\bbQ}{d\nu}}}^{2}d\nu$.}
\arxiv{\[
  \Dhels{\bbP}{\bbQ}=\int\prn[\bigg]{\sqrt{\frac{d\bbP}{d\nu}}-\sqrt{\frac{d\bbQ}{d\nu}}}^{2}d\nu.
\]}
Consider an instance $\sJ = (\MM, \Act, \MO, \{ \Dev\}_\ag, \{ \Sw \}_\ag )$ for the \MAFrameworkShort framework, as well as a \emph{reference model} $\Mbar: \Act \ra \Delta(\MO)$.\footnote{The reference model $\Mbar$ may be arbitrary, and is not required to lie in $\MM$.}  For a scale parameter $\vep > 0$, the \malong for the instance $\sJ$ with reference model $\Mbar$ at scale $\vep$ is defined by
\begin{align}
\deccpac[\vep](\sJ, \Mbar) := \inf_{p,q \in \Delta(\Act)} \sup_{M \in \MM} \left\{ \E_{\act \sim p} [\hm(\act)] \ | \ \E_{\act \sim q} [\hell{M(\act)}{\Mbar(\act)}] \leq \vep^2 \right\};\label{eq:ma_dec}
\end{align}
whenever the set
\colt{$\MH_{q, \vep}(\Mbar) \ldef \{ M \in \MM \ | \ \E_{\act \sim q} [\hell{M(\pi)}{\Mbar(\pi)}] \leq \vep^2 \}$}
is empty, we adopt the convention that $\deccpac[\vep](\sJ, \Mbar) = 0$.
\arxiv{\begin{align}
\MH_{q, \vep}(\Mbar) \ldef \{ M \in \MM \ | \ \E_{\act \sim q} [\hell{M(\pi)}{\Mbar(\pi)}] \leq \vep^2 \}\label{eq:define-hellset}
       \end{align}
       }
In addition, we define \colt{$\deccpac[\vep](\sJ) := \sup_{\Mbar\in\conv(\cM)}\deccpac(\MAI,\Mbar)$,}
\arxiv{\begin{align}
\deccpac[\vep](\sJ) := \sup_{\Mbar\in\conv(\cM)}\deccpac(\MAI,\Mbar),\label{eq:ma_dec_full}
       \end{align}
       }
where $\co(\MM)$ denotes the convex hull of the class $\MM$.

The interpretation of the definition \pref{eq:ma_dec}, which is a min-max game, is as follows. The model $M\in\cM$ selected by max-player represents a worst-case choice for the underlying model. The joint distributions $p,q\in\Delta(\Pi)$ selected by the min-player represent strategies for a centralized learning algorithm controlling all agents. The distribution $q\in\Delta(\Pi)$ is an \emph{exploration distribution} which acts as a strategy for acquiring information, with the quantity $\E_{\act \sim q} [\hell{M(\act)}{\Mbar(\act)}]$ acting as their average ``information gain'' (that is, the amount information that allows to distinguish between $M\in\cM$ and the reference model $\Mbar$). The distribution $p\in\Delta(\Pi)$ is an \emph{exploitation distribution} which aims to be near equilibrium for the model $M\in\cM$ selected by the max-player, with $\En_{\pi\sim{}p}\brk*{\hm(\pi)}$ representing the distance from equilibrium. Thus, to summarize, the value \pref{eq:ma_dec} captures, for a best-case choice of $p,q\in\Delta(\Pi)$, the worst-case distance to equilibrium for $p$ for models $M\in\cM$ that are ``close'' to $\Mbar$ in the sense that their information gain under $q$ is small.

For familiar readers, we recall that the (single-agent) constrained \CompShort generalizes the earlier \emph{offset} \CompShort of \citet{foster2021statistical} (which acts as a Lagrangian relaxation), and always leads to tighter guarantees \citep{foster2023tight}. Our definition \pref{eq:ma_dec} generalizes the so-called PAC variant of the constrained \CompShort in \citet{foster2023tight}, as opposed the \emph{regret} variant, which restricts to $p=q$.

\paragraph{Main results}

The first of our results gives upper and lower bounds on the minimax sample complexity for the \MAFrameworkShort framework based on the \malong. To state the result in the simplest form, we assume that $\abs{\cM}<\infty$; see \cref{sec:bounds} for more general results.\loose

\begin{theorem}[Informal version of \pref{cor:ma-constrained-upper,cor:ma-constrained-lower}]
  \label{thm:minimax-intro}
    For any instance $\sJ = (\MM, \Act, \MO, \{ \Dev\}_\ag, \{ \Sw \}_\ag )$ for the \MAFrameworkShort framework and $T\in\bbN$:
    \begin{itemize}
    \item Upper bound: Under \pref{ass:realizability-ma}, there exists an algorithm
      that achieves
      \begin{flalign}
        &\En\brk*{\RiskDM} \leq \bigoht(1)\cdot\deccpac[\vepsu](\MAI),\quad\text{where}\quad\vepsu\leq \wt{\Theta}\prn[\big]{\sqrt{\log\abs{\cM}/T}}. &&
        \label{eq:upper_informal}
      \end{flalign}
  \item Lower bound: For a worst-case model $M\in\cM$, any algorithm must have
    \begin{flalign}
      &\En\brk*{\RiskDM} \geq \bigomt(1)\cdot\deccpac[\vepsl](\MAI),
      \quad\text{where}\quad \text{$\vepslowerT\;$ solves $\;\deccpac[\veps](\MAI)\geq\bigomt\prn*{\veps^2KT}$}.
      &&
      \label{eq:lower_informal}
    \end{flalign}
  \end{itemize}
  \end{theorem}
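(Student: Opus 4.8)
The plan is to prove the upper and lower bounds separately, in each case lifting the single-agent constrained \CompShort machinery of \citet{foster2023tight} to the multi-agent risk functional $\hm(\pi) = \sum_{\ag} \sup_{\dev \in \Dev} \fm_\ag(\Sw(\dev,\pi)) - \fm_\ag(\pi)$, which plays the role that the single-agent suboptimality gap plays in the original theory.

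\textbf{Upper bound.} I would analyze a multi-agent \etdtext algorithm. At each round $t$, the learner maintains an online estimate $\Mhat\^t \in \conv(\cM)$ of $\Mstar$ from a Hellinger-based online estimation oracle, and then solves the min-max program defining $\deccpac[\vepsu](\MAI, \Mhat\^t)$ to obtain an exploration distribution $q\^t$ and an exploitation distribution $p\^t$. It samples $\pi\^t \sim q\^t$ to interact (acquiring information) and outputs $\wh\pi$ by drawing $\bar{t} \sim \unif([T])$ and then $\wh\pi \sim p^{\bar{t}}$, so that $\RiskDM = \frac{1}{T}\sum_{t} \En_{\pi \sim p\^t}[\hmstar(\pi)]$. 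The analysis then partitions the rounds according to whether $\Mstar$ satisfies the Hellinger constraint $\En_{\pi \sim q\^t}[\hell{\Mstar(\pi)}{\Mhat\^t(\pi)}] \le \vepsu^2$: on rounds where it holds, the constrained \CompShort bounds the per-round risk by $\deccpac[\vepsu](\MAI)$, while the remaining rounds are charged against the estimation error, which for a finite class is controlled by an aggregating/exponential-weights oracle as $\EstH \lesssim \log\abs{\cM}$. Equivalently one routes through the offset coefficient: the \etdtext decomposition gives $\frac{1}{T}\sum_t \En_{\pi\sim p\^t}[\hmstar(\pi)] \le \decopac[\gamma](\MAI) + \gamma\,\EstH/T$, and converting the offset coefficient back to the constrained one at the scale $\vepsu^2 \asymp \log\abs{\cM}/T$ yields \pref{eq:upper_informal}. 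The only genuinely multi-agent ingredient is that $\hm$ is a sum of $K$ deviation gains rather than a scalar gap, but this enters the \CompShort definition transparently, and the estimation step requires only that $\Mhat\^t$ range over $\conv(\cM)$, matching the supremum in the definition of $\deccpac[\vepsu](\MAI)$.

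\textbf{Lower bound.} Here I would use the constrained \CompShort directly as a lower-bound certificate, following the localization / two-point strategy of \citet{foster2021statistical,foster2023tight}. Fix any algorithm and take the reference model $\Mbar \in \conv(\cM)$ to be a near-maximizer of $\Mbar \mapsto \deccpac[\vepsl](\MAI,\Mbar)$. Running the algorithm in the environment $\Mbar$ induces a fixed law on its output $\wh\pi$ and on its interaction sequence; the max-player in the \CompShort program then selects an alternative $M \in \cM$ that is statistically close to $\Mbar$ under the learner's exploration (small aggregated squared Hellinger distance) yet forces $\En_{\wh\pi}[\hm(\wh\pi)] \gtrsim \deccpac[\vepsl](\MAI)$. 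A change-of-measure argument controls the total information available to distinguish $M$ from $\Mbar$ across the $T$ rounds by $\bigoht(\veps^2 K T)$, the factor $K$ reflecting the $K$ reward channels revealed each round, so whenever $\deccpac[\veps](\MAI) \ge \bigomt(\veps^2 K T)$ no algorithm can separate the two environments, and hence must incur $\En[\RiskDM] \gtrsim \deccpac[\vepsl](\MAI)$ under at least one of them. Solving this inequality for the critical scale gives $\vepsl$, and hence \pref{eq:lower_informal}.

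\textbf{Main obstacle.} I expect the lower bound to be the hard part, and specifically the correct accounting of the $K$-dependence in the information budget $\veps^2 K T$: one must construct the hard subfamily so that the Hellinger constraint appearing in the \CompShort faithfully captures how much can be learned about the hidden multi-agent reward structure per round, and show that this genuinely scales with $K$. This is also exactly where the single- and multi-agent theories part ways, producing the irreducible gap between \pref{eq:upper_informal} and \pref{eq:lower_informal}; quantifying that gap, as opposed to merely proving each bound in isolation, and certifying that no reasonable complexity measure can close it, is the most delicate component and the principal novelty over \citet{foster2023tight}.
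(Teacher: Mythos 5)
Your overall architecture matches the paper's: the upper bound is obtained by observing that the \etdppac analysis of \citet{foster2023tight} (online Hellinger estimation over $\conv(\cM)$ plus the constrained min--max program) never uses the observed rewards and therefore applies verbatim to the unobserved risk functional $\hm$, and the lower bound is the standard two-point certificate in which the max-player of $\deccpac[\vepsl](\MAI,\Mbar)$ selects an alternative $M\in\cM$ that is Hellinger-close to the algorithm's behavior under $\Mbar$ yet far from equilibrium. The paper packages this as a reduction to the hidden-reward instance with value function $\til f\sups{M}=K-\hm$ (\cref{prop:ma-to-pm}) followed by \cref{thm:constrained-upper-finitem,thm:constrained-lower}, but that is only a bookkeeping difference from working with $\hm$ directly as you do.

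The one place your account goes wrong is the origin of the factor $K$ in the fixed-point equation $\deccpac[\veps](\MAI)\geq\bigomt(\veps^2KT)$. It does \emph{not} come from the information budget: the Hellinger constraint in $\deccpac$ is already on the joint observation $(r_1,\dots,r_K,\ocirc)$, so the chain rule gives $\Dhels{\bbP\sups{M}}{\bbP\sups{\Mbar}}\leq C(T)\cdot T\cdot\veps^2$ with no $K$, exactly as in the single-agent case. The $K$ enters through the \emph{range} of the risk: since $\hm(\pi)\in[0,K]$, the change-of-measure inequality reads $\E_{\pi\sim p_{\sMbar}}[\hm(\pi)]\leq 3\,\E_{\pi\sim p_{\sM}}[\hm(\pi)]+4R\cdot\Dhels{p_{\sM}}{p_{\sMbar}}$ with $R=K$, and one needs $R\cdot C(T)\cdot T\cdot\veps^2\lesssim\deccpac[\veps](\MAI)$ for the transfer term not to swamp the certificate. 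Consequently there is no ``hard subfamily'' to construct in which per-round information genuinely scales with $K$ --- pursuing that would be chasing a nonexistent phenomenon --- and the asymmetry of the inequality matters: the conclusion is a lower bound under the chosen $M\in\cM$, not ``under at least one of'' $M$ and $\Mbar$ (the latter need not lie in $\cM$). Finally, note that quantifying the $\vepsu$-versus-$\vepsl$ gap and ruling out tighter characterizations are separate results (\cref{prop:gap-bounding,prop:gap-ub-easy,prop:gap-inherent,thm:ma-fdiv-separation}) and are not needed to prove either bound here.
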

  This result shows that the \mashort is a fundamental limit for equilibrium computation in the \MAFrameworkShort framework, and is sufficient for low sample complexity whenever $\log\abs{\cM}<\infty$. The upper bound is an immediate corollary of an upper bound given by \citet{foster2023tight} in the single-agent setting, while the lower bound requires a new approach; this is due to fundamental differences between the single and multiple agents, which we highlight in the sequel.\loose

To build intuition, let us start with a basic example. Suppose that $\MAI$ is a CCE instance consisting of two-player $A_1\times{}A_2$ normal-form games (that is, $\abs{\cA_1}=A_1$ and $\abs{\cA_2}=A_2$) with bandit feedback (\cref{ex:cce}) and Bernoulli noise. In this case, one can show that $\deccpac(\scrM)\propto{}\veps\cdot\sqrt{A_1+A_2}$, so that the upper bound \pref{eq:upper_informal} gives \colt{$  \En\brk*{\RiskDM} \approxleq \sqrt{\frac{(A_1+A_2)\log\abs{\cM}}{T}}$,}
\arxiv{\begin{align*}
  \En\brk*{\RiskDM} \approxleq \sqrt{\frac{(A_1+A_2)\log\abs{\cM}}{T}},
       \end{align*}
       }
or equivalently, $\frac{(A_1+A_2)\log\abs{\cM}}{\veps^2}$ rounds of interaction are sufficient to find an $\veps$-CCE. For this class, one can take $\log\abs{\cM}\approxleq{}\bigoht(A_1\cdot{}A_2)$. We give more refined results (\pref{sec:curse}) which allow one to replace $\log\abs{\cM}$ by $\max_{k}\log\abs{\Pi'_k}\approxleq\log(A_1+A_2)$, so that we achieve sample complexity $\bigoht\prn[\big]{\frac{A_1+A_2}{\veps^2}}$, which is optimal.\loose

Turning to lower bounds, for the same normal-form game instance $\MAI$, one can choose $\vepslowerT\approxgeq{}\frac{\sqrt{A_1+A_2}}{T}$, so that \pref{eq:lower_informal} gives \colt{$  \En\brk*{\RiskDM} \approxgeq \frac{A_1+A_2}{T}$,}
\arxiv{\begin{align*}
  \En\brk*{\RiskDM} \approxgeq \frac{A_1+A_2}{T},
       \end{align*}
       }
or equivalently, $\bigomt\prn[\big]{\frac{A_1+A_2}{\veps}}$ rounds of interaction are necessary to find an $\veps$-CCE. Comparing the upper and lower bounds, there are two gaps. The first is the term $\log\abs{\cM}$ appearing in the upper bound, which represents the sample complexity required to perform statistical estimation with the class $\cM$, {and in general scales poorly with the number of agents}. This can be refined (cf. \pref{sec:curse}), but is not possible to completely remove in general, even in the single-agent setting; see \citet{foster2021statistical,foster2023tight} and \cref{sec:bounds} for further discussion.

The second gap is the difference between the values $\vepsupperT$ and $\vepslowerT$ appearing in the upper and lower bound; we set $\vepsupperT\propto1/\sqrt{T}$, while $\vepslowerT$ is chosen to solve the fixed-point equation  $\deccpac[\veps](\MAI)\geq\bigomt\prn*{\veps^2T}$ (we focus on the case of constant $K$ in this discussion). For normal-form games, this causes the lower bound to scale with $\frac{1}{\veps}$ instead of $\frac{1}{\veps^2}$. This gap is not present in the single-agent setting \citep{foster2023tight}, where the best upper and lower bounds based on the constrained \CompShort have $\vepsupperT\approx\vepslowerT$ (up to dependence on $\log\abs{\cM}$). We show (\pref{prop:gap-bounding}) that for most parameter regimes,
\[
\deccpac[\vepsupperT](\MAI) \lesssim \prn*{{K^2}\log\abs{\cM}\cdot\deccpac[\vepslowerT](\MAI)}^{1/2},
\]
i.e., the gap between the upper and lower bounds is no worse than quadratic generically.
This gap turns out to be fundamental: We show (\cref{prop:gap-ub-easy,prop:gap-inherent}) that there exist instances for which each bound (upper and lower) is tight, and---somewhat surprisingly---the following result shows that \emph{no complexity measure} satisfying fairly general conditions can fully characterize the sample complexity\arxiv{ of multi-agent decision }making beyond a quadratic gap, even when $\log\abs{\cM}=\bigoht(1)$.

\begin{theorem}[Informal version of \cref{thm:ma-fdiv-separation}]
  \label{thm:separation-intro}
  For any $\vep \in \BN$, there exist two-player zero-sum Nash equilibrium \ma instances
  $\MAI_1 = (\MM_1, \Act, \MO, \{ \Dev\}_\ag, \{ \Sw \}_\ag )$ and $\MAI_2 = (\MM_2, \Act, \MO, \{ \Dev\}_\ag, \{ \Sw \}_\ag )$ \colt{with $\log\abs{\cM_1}=\log\abs{\cM_2}=\bigoht(1)$ }and a one-to-one mapping $\sE : \MM_1 \ra \MM_2$ satisfying:
  \begin{enumerate}
  \item \label{it:fm-equal-intro} For all $M \in \MM_1$, $\fm_k \equiv f\sups{\sE(M)}_k$ for all $k\in\brk{2}$.
  \item \label{it:dphi-equal-intro} For all $M, M' \in \MM_1$ and all $\pi \in \Pi$, $\Dhels{M(\pi)}{M'(\pi)} = \Dhels{\sE(M)(\pi)}{\sE(M')(\pi)}$.
  \item \label{it:minimax-separation-intro1} There exists an algorithm that finds an $\veps$-NE for any model in $\MAI_1$ using $\bigoht\prn*{\frac{1}{\veps}}$ rounds, yet any algorithm requires $\bigomt\prn*{\frac{1}{\veps^2}}$ rounds %
    to find an $\veps$-NE for a worst-case model in $\MAI_2$.\loose
  \end{enumerate}
  \arxiv{In addition, $\log\abs{\cM_1}=\log\abs{\cM_2}=\bigoht(1)$.}
\end{theorem}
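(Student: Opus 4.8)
The plan is to build, from one master construction, two families $\cM_1,\cM_2$ of two‑player zero‑sum games over a common decision space $\Act$ and observation space $\MO$, together with an explicit bijection $\sE:\cM_1\to\cM_2$, and then to obtain Property~3 by simply feeding the two families into the upper and lower bounds of \pref{thm:minimax-intro}. The organizing idea is to \emph{decouple the payoffs of the game from the channel that reveals the model}: I would give each $M\in\cM_1$ exactly the same expected rewards as its image $\sE(M)$, so that $\fm_k\equiv f\sups{\sE(M)}_k$ and hence $\hm\equiv h\sups{\sE(M)}$. Property~1 then holds by construction, and since $\RiskDM$ and the set of $\veps$‑equilibria depend on the model only through $\hm$, the instances are automatically indistinguishable to any complexity measure that reads off only the value functions. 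I would then arrange the two observation channels to be \emph{isometric in squared Hellinger at every decision}, i.e.\ $\Dhels{M(\act)}{M'(\act)}=\Dhels{\sE(M)(\act)}{\sE(M')(\act)}$ for all $M,M',\act$, giving Property~2 and rendering the instances indistinguishable to any measure built from the \emph{pairwise} divergences. The whole point of the theorem is that sample complexity is not a function of these two pieces of data; the separation must therefore be hidden in a quantity that is invisible to pairwise divergences and payoffs, and the natural candidate is the \malong itself, whose constraint $\E_{\act\sim q}[\Dhels{M(\act)}{\Mbar(\act)}]\le\veps^2$ is measured against a \emph{mixture} reference $\Mbar\in\conv(\cM)$ and is therefore not determined by the pairwise affinities.

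\textbf{The bijection and Properties~1--2.} I would represent each model as a product of a fixed ``reward block'' (carrying the payoffs, shared between $M$ and $\sE(M)$) and an auxiliary ``signal block'' $M\mapsto\mu_M$, and let $\sE$ act only on the signal block. Since squared Hellinger is a function of the affinity $\int\sqrt{d\mu_M\,d\mu_{M'}}$, verifying Property~2 reduces to producing two families of signal distributions with \emph{identical affinity Gram matrices} but different behaviour under \emph{mixing}: affinity to a mixture $\Mbar=\sum_i\lambda_i\mu_{M_i}$ is governed by $\sqrt{\sum_i\lambda_i(\cdot)}$ rather than $\sum_i\lambda_i\sqrt{\cdot}$, and is thus genuinely free once the pairwise affinities are fixed. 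A padding/tensorization argument should let me realize any prescribed Gram matrix while keeping the number of models, hence $\log\abs{\cM_1}=\log\abs{\cM_2}$, at $\bigoht(1)$.

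\textbf{Separating the DEC.} Because the DEC objective $\E_{\act\sim p}[\hm(\act)]$ is identical for $\cM_1$ and $\cM_2$ by Property~1, while the feasible set $\{M:\E_{\act\sim q}[\Dhels{M(\act)}{\Mbar(\act)}]\le\veps^2\}$ depends only on the mixture geometry just isolated, I can aim to drive the two DECs to different scalings. Concretely, I would engineer $\cM_1$ so that information about the exploitable models has \emph{high leverage} --- an exploration budget of $\veps^2$ already pins exploitability down to $\deccpac[\veps](\MAI_1)\lesssim\veps^2$ --- and $\cM_2$ so that the same budget has \emph{low leverage}: there is a mixture reference $\Mbar$ against which every relevant model looks close in Hellinger yet remains $\veps$‑exploitable, yielding $\deccpac[\veps](\MAI_2)\gtrsim\veps^{2/3}$. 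Neither scaling is detectable from $\{\fm_k\}$ and $\{\Dhels{M(\act)}{M'(\act)}\}$ alone, precisely because it is the mixture distances, not the pairwise ones, that differ between the two families.

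\textbf{Concluding Property~3, and the main obstacle.} Given these scalings, Property~3 follows by invoking \pref{thm:minimax-intro}. For $\cM_1$, the upper bound gives $\E[\RiskDM]\lesssim\deccpac[\vepsu](\MAI_1)\lesssim\vepsu^2\lesssim\log\abs{\cM_1}/T$, so $T=\bigoht(1/\veps)$ rounds suffice. For $\cM_2$, the lower bound gives $\E[\RiskDM]\gtrsim\deccpac[\vepsl](\MAI_2)$ with $\vepsl$ solving $\deccpac[\veps](\MAI_2)\gtrsim\veps^2T$; substituting $\deccpac[\veps](\MAI_2)\asymp\veps^{2/3}$ yields $\vepsl\asymp T^{-3/4}$ and hence $\E[\RiskDM]\gtrsim T^{-1/2}$, forcing $T=\bigomt(1/\veps^2)$. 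The crux --- where I expect essentially all of the difficulty to lie --- is the construction underlying the previous paragraph: one must realize the \emph{same} pairwise Hellinger geometry and the \emph{same} payoffs with two genuinely different mixture geometries, so that information has high leverage on exploitability in $\cM_1$ and low leverage in $\cM_2$. This is exactly the step no pairwise, value‑based measure can see, and it is what makes the separation robust: any attempted two‑point or few‑hypothesis lower bound for $\cM_2$ would be controlled by the pairwise affinities and would therefore transfer through $\sE$ to $\cM_1$, contradicting the $\bigoht(1/\veps)$ upper bound. The remaining work is to verify that such a construction exists and that the resulting $\cM_2$ is a legitimate two‑player zero‑sum NE instance satisfying \pref{ass:existence-eq} and \pref{ass:nonneg-dev}.
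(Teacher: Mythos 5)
Your skeleton (identical payoffs, identical pairwise Hellinger distances, different minimax rates, then read off the two rates from \pref{thm:minimax-intro}) matches the shape of the paper's statement, but the proposal has a genuine gap: the entire construction is missing, and the mechanism you propose for the separation is not the one that makes it work. You locate the separation in the \emph{mixture geometry} — the claim being that since $\deccpac[\veps](\cdot,\Mbar)$ measures distance to $\Mbar\in\conv(\cM)$, and affinities to mixtures are not determined by pairwise affinities, one can match the pairwise Gram matrix while driving the two DECs apart. You give no construction realizing this, and your supporting argument — that any few-hypothesis lower bound for $\cM_2$ "would be controlled by the pairwise affinities and would therefore transfer through $\sE$ to $\cM_1$" — is false: change-of-measure lower bounds routinely compare against auxiliary models \emph{outside} the class and exploit the tail structure of likelihood ratios, neither of which is a function of the pairwise divergence scalars. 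The paper's actual mechanism is much more elementary and does not involve mixtures or the DEC at all. It takes models $M_i^{\delta,\beta}$ on $\Pi=[N]$, $\MO=[N]\cup\{\perp\}$, where decision $i$ is the unique bad decision and the (decision-independent) observation reveals index $j\neq i$ with probability $\delta$ and index $i$ with probability $\beta<\delta$; the pairwise $f$-divergence is exactly $\beta\,\phi(\delta/\beta)+\delta\,\phi(\beta/\delta)$, a single scalar that can be matched by two different parameter pairs. With $\beta_1\ll\delta_1$ the first observed index is almost surely a good decision (an $\bigoht(1/\veps)$-round algorithm: output the first non-$\perp$ observation), while with $\beta_2=\delta_2/2$ every observation carries only an $O(1)$ likelihood ratio and a support-estimation argument (conditioning on the event that index $i$ is observed at most $O(1/\eps)$ times and changing measure to the symmetric model $M_i^{\delta,\delta}$) gives the $\bigomt(1/\veps^2)$ lower bound. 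In other words, the invisible quantity is the \emph{likelihood-ratio profile at fixed pairwise divergence}, not the divergence to mixtures.

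Two further gaps. First, even granting your DEC scalings, routing the lower bound for $\MAI_2$ through \pref{thm:minimax-intro} requires you to actually prove $\deccpac[\veps](\MAI_2)\gtrsim\veps^{2/3}$ for a class whose pairwise Hellinger data coincide with those of $\MAI_1$; you have not shown such a class exists, and this is precisely where all the content lies (as you acknowledge). Second, the statement requires genuine two-player zero-sum NE instances in which rewards \emph{are} observed. The paper handles this with a separate reduction (\pref{prop:pm-to-ma}): the exploitability $\hm(\pi)$ is encoded in player 2's payoff only at one of $V=\mathrm{poly}(T)$ hidden cells of the game, so that no algorithm can locate the informative cell in $T$ rounds and the rewards are effectively hidden. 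Your proposal builds the games directly and defers "legitimacy" to a final check, but without an argument of this kind the observed rewards in $\MAI_2$ could leak exactly the information that the observation channel was designed to withhold, destroying the lower bound.
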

Informally, this result states that if a complexity measure depends on the instance $\MAI$ only through 1) reward functions and 2) pairwise Hellinger distances for models in $\cM$, then it cannot characterize the optimal sample complexity for every instance beyond the gap in the prequel. In addition, the full result is not limited to Hellinger distance, and applies to general $f$-divergences including KL- and $\chi^2$-divergence. This rules out tighter guarantees based on various variants of the \CompShort, as well as most other general-purpose complexity measures\arxiv{ for interactive decision making}; see \arxiv{\pref{sec:optimality}}\colt{\aref{sec:optimality}} for details.\footnote{{Directly applying \cref{thm:separation-intro} to the constrained \CompShort presents complications due to $\Mbar\in\conv(\cM)$; see App. \ref{sec:optimality}.}}

\arxiv{\cref{thm:separation-intro} (and \cref{prop:gap-ub-easy,prop:gap-inherent}) highlight}\colt{We emphasize} a fundamental separation between the single and multi-agent frameworks. In the single-agent setting, the constrained \CompShort characterizes, up to logarithmic factors, the optimal number of samples required to learn an $\veps$-optimal decision, as long as $\log\abs{\cM}=\bigoht(1)$ \citep{foster2023tight}. For two or more agents, \cref{thm:separation-intro,prop:gap-ub-easy,prop:gap-inherent} rule out such a characterization. 

\subsection{Hidden-reward interactive decision making (\FrameworkShort)}
To prove the results in the prequel, we establish a certain equivalence between the \MAFrameworkShort framework and another \emph{single-agent} setting we refer to as \emph{\Framework} (\FrameworkShort), which generalizes the single-agent \dmso framework (\MAFrameworkShort with $K=1$) by allowing rewards to be hidden from the agent. This setting is of interest in its own right, and can be thought of as a stochastic, PAC variant of the partial monitoring problem \citep{bartok2014partial}. In what follows, we introduce the framework, then show that 1) \MAFrameworkShort can be viewed as a special case of the \FrameworkShort framework via a simple reduction, and 2) a converse holds, thus showing a sort of equivalence. We then discuss implications for minimax rates in both frameworks.\loose

Formally, the \FrameworkShort framework proceeds in $T$ rounds, where for each round $t = 1, 2, \ldots, T$:
\begin{enumerate}
  \colt{\setlength\itemsep{.3em}}
\item The learner selects a \emph{decision} $\act\^t \in \Act$, where $\Act$ is the \emph{decision space}, and gains (but does not observe) reward $\fmstar(\pi\ind{t})$.
\item The learner receives an observation $o\^t \in \MO$ sampled via $o\^t \sim \Mstar(\pi\^t)$, where $M^\st : \Act \ra \Delta(\MO)$ is the underlying \emph{model}. We refer to $\MO$ as the \emph{observation space}. 
\end{enumerate}
After this process finishes, the learner uses the data collected throughout the $T$ rounds of interaction to produce an output decision $\wh \act \in \Act$, which may be randomized according to a distribution $p \in \Delta(\Act)$. The learner's goal is to choose the decision $\wh\act$ so as to maximize its (unobserved) reward $\fmstar(\wh\act)$.  Formally, writing $\pim := \argmax_{\act \in \Act} \fm(\act)$, we define the \emph{risk} of an algorithm as: \colt{$\RiskDM := \E_{\wh \pi \sim p} [ \fmstar(\pimstar) - \fmstar(\wh \act)]$.}
\arxiv{\begin{align}
\RiskDM := \E_{\wh \pi \sim p} [ \fmstar(\pimstar) - \fmstar(\wh \act)]\nonumber.
\end{align}}

We assume that every model $M$ is associated a (known) function $\fm : \Act \ra \bbR$, where $\fm(\act)$ specifies the learner's value under decision $\act \in \Act$ when the underlying model is $M$.
We make the following realizability assumption, analogous to \cref{ass:realizability-ma}.
    \begin{assumption}[Realizability for \FrameworkShort]
  \label{ass:realizability}
  The learner has access to a model class $\cM$ consisting of probability kernels $M : \Pi \ra \Delta(\MO)$ that contains the true model $\Mstar$.
\end{assumption}
\arxiv{
  We refer to the tuple $\sI = (\MM, \Act, \MO, \{ \fm(\cdot) \}_{M\in\cM} )$ as an \emph{instance} for the \FrameworkShort framework. It specifies all of the information known to a learner a-priori before interacting with \arxiv{the model }$\Mstar \in \MM$.\loose
  }

\arxiv{
\begin{remark}
  An equivalent formulation of the \FrameworkShort framework would be to consider models $M:\Pi\to\Delta(\cO\times\cR)$ that specify joint distributions over observations and rewards and define $\fm(\pi)=\En^{\sss{M},\pi}\brk{r}$, but only allow $o$ to be observed by the learner under $(o,r)\sim{}M(\pi)$.
\end{remark}
}
  We refer to the tuple $\sI = (\MM, \Act, \MO, \{ \fm(\cdot) \}_{M\in\cM} )$ as an \emph{instance} for the \FrameworkShort framework. We extend the constrained Decision-Estimation Coefficient of \cite{foster2023tight} to \hr as follows. For an instance $\sI = (\MM, \Act, \MO, \{ \fm(\cdot ) \}_M )$, reference model $\Mbar : \Act \ra \Delta(\MO)$, and scale parameter $\vep > 0$, the constrained Decision-Estimation Coefficient is given by\footnote{Note that we use the same notation for the DEC in the \FrameworkShort and \MAFrameworkShort settings; we will typically use the letter $\sI$ to denote \FrameworkShort instances and $\sJ$ to denote \MAFrameworkShort instances to avoid ambiguity.}
\begin{align}
\hspace{-0.4cm} \deccpac[\vep](\sI, \Mbar) = \inf_{p,q \in \Delta(\Act)} \sup_{M \in \MM} \left\{ \E_{\act \sim p} [\fm(\pim) - \fm(\act) \ | \ \E_{\act \sim q}[\hell{M(\act)}{\Mbar(\act)}] \leq \vep^2 \right\}\label{eq:decc-pm}.
\end{align}
We define the Decision-Estimation Coefficient (DEC) of the instance $\sI$ at scale $\vep$ to be \colt{$\deccpac[\vep](\sI) = \sup_{\Mbar \in \co(\MM)} \deccpac[\vep](\sI, \Mbar)$.}
\arxiv{\begin{align}
\deccpac[\vep](\sI) = \sup_{\Mbar \in \co(\MM)} \deccpac[\vep](\sI, \Mbar)\label{eq:decc-m}.
\end{align}}
This definition is identical to the constrained PAC DEC \citep{foster2023tight}; this is natural, as the only difference between the \FrameworkShort framework and the DMSO framework \citep{foster2023tight} is that we relax the constraint that the agent observes its reward. \loose

\begin{remark}
  The \FrameworkShort framework is related to the partial monitoring problem \citep{bartok2014partial}\colt{, which typically considers regret guarantees; in contrast, we consider PAC guarantees.}
  \arxiv{. While most  work in partial monitoring considers regret guarantees (that is, cumulative suboptimality for $\pi\ind{1},\ldots,\pi\ind{T}$), we consider PAC guarantees (i.e., final suboptimality for $\pihat$).}  %
  \arxiv{An additional difference between the two settings is that partial monitoring typically considers finite decision and observation spaces, while we allow for large, structured spaces (formalized via the model class $\cM$), and aim for sample complexity guarantees that reflect the intrinsic complexity of these spaces.}
\end{remark}

\arxiv{
\begin{remark}[Contrast with \emph{reward-free} DMSO]
  Despite the similar name, the \FrameworkShort framework is distinct from the ``reward-free'' DMSO framework considered in the recent work of \citet{chen2022unified}; in the latter framework, which is specialized to Markov decision processes, a reward-function is given to the learner explicitly, but only after the learning process ends.
\end{remark}
}

\subsection{\FrameworkShort: Overview of results}
It is fairly immediate to see that the \FrameworkShort framework generalizes the \MAFrameworkShort framework. For any \MAFrameworkShort instance $\MAI = (\MM, \Act, \MO, \{\Dev\}_{\ag}, \{\Sw\}_{\ag})$ satisfying \cref{ass:nonneg-dev} and \cref{ass:existence-eq}, by choosing the value function $\fm(\cdot)=-\hm(\cdot)$, the instance of the \FrameworkShort framework specified by the tuple $\HRI = (\MM, \Act, \MO, \{ f\sups{M} \}_M)$ (recalling that $\MO = \Ocirc \times \MR^\Ag$) is statistically equivalent to $\MAI$.\footnote{It is essential for this reduction that the rewards in $\sI$ be hidden, since it is in general impossible to simulate a reward whose mean is $-\hm(\pi)$ using samples from $M(\pi)$.} In particular, letting $\mathfrak{M}(\MAI,T)$ denote the minimax risk for an instance $\MAI$ in the \MAFrameworkShort framework, and let $\mathfrak{M}(\HRI,T)$ denote the minimax risk for the corresponding \hr instance $\HRI$  (see \pref{sec:add-prelim} for formal definitions), we have:
\colt{   \begin{enumerate}[leftmargin=*]
   \item For all $\Mbar$ and $\vep > 0$, $\deccpac(\sI, \Mbar) = \deccpac(\sJ, \Mbar)$.\quad 2. For all $T$, $\mf M(\sI, T) = \mf M(\sJ, T)$. 
   \end{enumerate}
   }
\arxiv{   \begin{enumerate}
   \item For all models $\Mbar$ and $\vep > 0$, $\deccpac(\sI, \Mbar) = \deccpac(\sJ, \Mbar)$.
   \item For all $T \in \BN$, $\mf M(\sI, T) = \mf M(\sJ, T)$. 
   \end{enumerate}
   }

  It is natural to ask whether the \FrameworkShort framework is \emph{strictly} more general than the \MAFrameworkShort framework. Indeed, by allowing rewards to be hidden, one might imagine that \FrameworkShort can capture problems outside of  \MAFrameworkShort, which forces rewards to be observed. The next result shows that this is not the case: any \FrameworkShort instance can be embedded in a two-player zero-sum NE instance for \MAFrameworkShort, with minimal increase in statistical complexity. 
\begin{theorem}[Informal version of \cref{prop:pm-to-ma}]
  \label{thm:equivalence-intro}
  Consider any \hr instance specified by the tuple $\HRI = \instpm$. For any $\delta>0$, there exists a two-player zero-sum NE \ma instance $\MAI = (\til \MM, \til \Pi, \til \MO, \Dev, \Sw)$  (\cref{def:ne-instance}) such that: 
  \begin{enumerate}
  \item For all $\vep > 0$, $\deccpac[\vep](\HRI) \leq \deccpac[\vep](\MAI) \leq \delta+ \deccpac[\vep + \delta](\HRI)$.
  \item For all $T \in \BN$, it holds that $\mf M(\HRI, T) \leq \mf M(\MAI, T) \leq  \mf M(\HRI, T) + \delta$.
  \item If $\MM$ is finite, then $\log |\til \MM| \leq \log |\MM| + \polylog(T,\delta^{-1})$. 
  \end{enumerate}
\end{theorem}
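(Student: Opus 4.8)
The plan is to construct, for a given \hr instance $\HRI = \instpm$, a two-player zero-sum Nash instance $\MAI$ in which one agent effectively controls the decision while the other plays an adversary, and whose equilibrium gap $\hm(\cdot)$ coincides (up to a discretization error $\delta$) with the \hr suboptimality $\fmstar(\pimstar) - \fm(\cdot)$. Concretely, I would let the two agents' joint decision encode a (discretized) element of $\Act$, assign agent $1$ a payoff tracking $\fm$ and agent $2$ its negation so that $\fm[M]_1 + \fm[M]_2 \equiv 0$, and check via \cref{def:ne-instance} that the induced $\hm$ is exactly the duality gap of the resulting matrix game, so the equilibrium-computation problem for $\MAI$ is the optimization problem for $\HRI$. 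Since the reverse embedding $\fm = -\hm$ was already shown to preserve both the \CompShort and the minimax risk, the real content is the forward direction: showing that allowing the agents to \emph{observe} rewards in $\MAI$ does not make it substantially easier than the hidden-reward problem $\HRI$.

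The main obstacle — and the source of every slack term in the statement — is exactly this information mismatch. In $\MAI$ the full observation contains the realized rewards, whose means are the value functions; naively revealing a reward of mean $\fm(\pi)$ leaks the otherwise-hidden value and can collapse $\deccpac(\MAI)$ far below $\deccpac(\HRI)$, which is precisely what threatens the lower inequality $\deccpac[\vep](\HRI) \le \deccpac[\vep](\MAI)$. The construction must therefore guarantee that any two models which are \hr-indistinguishable (matching observation laws $M(\pi)$, differing only in the hidden $\fm$) remain \emph{nearly} indistinguishable in $\MAI$, i.e. the reward channel must carry essentially no more information than the pure observation $o \sim M(\pi)$ already does. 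I would enforce this by (i) passing the \hr observation through unchanged as the pure observation of $\MAI$, so that $\Dhels{\til M(\pi)}{\til M'(\pi)}$ inherits the \hr Hellinger distances, and (ii) quantizing the value-carrying part of the reward to a grid of resolution $\delta$ with $\poly(T,\delta^{-1})$ levels, coupled to shared external randomness, so that its contribution to each pairwise Hellinger distance is controlled by $\delta^2$. Item (ii) is what produces the additive $\delta$, the scale shift $\vep \mapsto \vep+\delta$, and the $\polylog(T,\delta^{-1})$ blow-up in $\log\abs{\til\MM}$ of part~3.

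With the construction fixed, I would prove the three claims by transporting strategies and simulating algorithms. For the upper inequality $\deccpac[\vep](\MAI) \le \delta + \deccpac[\vep+\delta](\HRI)$, I would project any $\MAI$ exploration/exploitation pair $(p,q)$ to an $\HRI$ pair: the $\MAI$ feasibility constraint is at least as strong as the \hr one, so the supremum is over a subset, and the quantization degrades the constraint to scale $\vep+\delta$ and the objective by $\delta$. For the lower inequality I would lift an $\HRI$ pair to $\MAI$ and argue that the non-leakage property of (i)–(ii) prevents the $\MAI$ learner from using the reward channel to shrink the feasible set below what \hr observations allow, so the adversary stays at least as powerful and the objective is preserved (this is where the construction genuinely earns its keep). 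The minimax-risk bounds of part~2 follow from a two-way round-by-round simulation — a learner in one framework runs a learner in the other, translating each observation, with the $+\delta$ charged to the value discretization — and part~3 is then a direct count of the discretized models.

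The hard part is step (ii) together with the lower \CompShort inequality: reconciling the fact that $\MAI$ \emph{must} reveal rewards whose means are the values with the requirement that these rewards reveal nothing beyond the \hr observation. Everything else — the zero-sum bookkeeping, the strategy transport, and the simulation — is routine once this information-matching construction is pinned down. It is also the structural reason the equivalence holds only up to the $\delta$ and $\polylog$ overheads, consistent with the fundamental quadratic-type gaps that \cref{thm:separation-intro} shows no reasonable complexity measure can remove.
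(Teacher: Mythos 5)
You have correctly isolated the crux---reconciling the fact that \ma forces rewards to be revealed with the requirement that they leak nothing beyond the \hr observation---but your proposed resolution of it does not work, and this is a genuine gap rather than a detail. You want agent 1's payoff to ``track $\fm$'' on the played decisions while quantizing the reward to a $\delta$-grid with shared randomness so that its contribution to each pairwise Hellinger distance is $O(\delta^2)$. This is impossible: in the \maf the value function is \emph{defined} as the mean reward, so if two models share an observation law but have value functions differing by $\Delta$ at some $\pi$, the revealed rewards at $\pi$ are bounded random variables with means $\Delta$ apart, whence $\Dtvs{\til M(\pi)}{\til M'(\pi)}\geq \Delta^2$ and $\Dhels{\til M(\pi)}{\til M'(\pi)}\gtrsim\Delta^2$ by data processing. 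No discretization, coupling, or external randomness can reduce this below the mean separation, which can be $\Omega(1)$; so the lower inequality $\deccpac[\vep](\HRI)\leq\deccpac[\vep](\MAI)$ would fail for your construction.

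The paper escapes this by \emph{not} having the played rewards track $\fm$ at all. Player 1's decision space is $\Pi$ and player 2's is an index set $\{0,1,\dots,V\}$; the model class is indexed by pairs $(M,v)\in\MM\times[V]$. On decisions $(\sigma_1,0)$ the observation is $\ocirc\sim M(\sigma_1)$ and both rewards are identically zero (leaking nothing); on $(\sigma_1,i)$ with $i>0$ the observation is $\perp$ and $r_2=-r_1$ equals $-1$ unless $i=v$, in which case it equals $\gm(\sigma_1)$. The value information is thus confined to a single hidden slice $i=v$, and the Nash structure does the rest: from $(\sigma_1,0)$, player 2 can deviate to $v$ and gain $\gm(\sigma_1)$, so $\hmprime(\pi_1\times\indic_0)\geq\E_{\sigma_1\sim\pi_1}[\gm(\sigma_1)]$, tying the equilibrium gap to the \hr suboptimality without ever paying out $\fm$ on the equilibrium path. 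The quantitative content is then a pigeonhole/change-of-measure argument (\cref{lem:ustar-hidden}): for any $T$-round algorithm there is an index $v^\st$ queried with total mass $O(T/V)$, so the histories under $(M,v^\st)$ and under the all-$(-1)$ surrogate are $O(\sqrt{T\log T/V})$-close in total variation, and taking $V=\poly(T,\delta^{-1})$ yields the additive $\delta$ and the $\log V=\polylog(T,\delta^{-1})$ overhead in part 3. If you want to repair your write-up, replace step (ii) entirely with this needle-in-a-haystack mechanism; the strategy-transport and simulation steps you describe for parts 1 and 2 are then essentially the right skeleton.
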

This result establishes that the \MAFrameworkShort and \FrameworkShort frameworks satisfy a sort of equivalence, and shows that characterizing the minimax sample complexity for \MAFrameworkShort is no easier than characterizing the minimax sample complexity for the \FrameworkShort framework. The proof proceeds by embedding a given instance for the \FrameworkShort framework into a two-player game: the first of the two agents in the game plays the role of the \FrameworkShort agent, and the second agent selects actions to ensure that optimal actions for the original \FrameworkShort instance are Nash equilibria for the new instance, and vice-versa. The key idea is that even though rewards in the game are observed, by making the game polynomially large, we can ensure that discovering them requires a prohibitively large amount of exploration, rendering them effectively hidden.

    \paragraph{\FrameworkShort: Minimax rates}
    \label{sec:c-dec-pm}
    To prove the multi-agent minimax rates in \cref{thm:minimax-intro,thm:separation-intro}, we first prove analogous bounds for the \FrameworkShort framework, then use the equivalence above to extend them to \MAFrameworkShort. \colt{These results can be found in \cref{thm:constrained-upper-finitem,thm:constrained-lower,prop:fdiv-separation} in \cref{sec:bounds}.\loose}
 \arxiv{   
    In particular, the following result provides our main sample complexity bounds for \hr, generalizing     \cref{thm:minimax-intro}.
\begin{theorem}[Informal version of \pref{thm:constrained-upper-finitem,thm:constrained-lower,prop:fdiv-separation}]
  For any instance $\sI = (\MM, \Act, \MO, \{ \fm(\cdot ) \}_M )$ for the \FrameworkShort framework and $T\in\bbN$:
    \begin{itemize}
    \item Upper bound: Under \pref{ass:realizability-ma}, there exists an algorithm
      that achieves
      \begin{align}
        \En\brk*{\RiskDM} \leq \bigoht(1)\cdot\deccpac[\vepsu](\HRI)
      \end{align}
    for all $M\in\cM$, where $\vepsu\leq \wt{\Theta}\prn[\big]{\sqrt{\log\abs{\cM}/T}}$.
  \item Lower bound: For a worst-case model $M\in\cM$, any algorithm must have
    \begin{align}
      \En\brk*{\RiskDM} \geq \bigomt(1)\cdot\deccpac[\vepsl](\HRI),
    \end{align}
    where $\vepslowerT$ is the largest value $\veps>0$ such that  $\deccpac[\veps](\HRI)\geq\bigomt\prn*{\veps^2T}$.
    \end{itemize}
In addition, no complexity measure that depends on the instance $\HRI$ only through the reward functions $\crl{\fm(\cdot)}_{M\in\cM}$ and pairwise Hellinger distances for models $M,M'\in\cM$ can characterize the optimal sample complexity for every instance, beyond a quadratic gap.  
\end{theorem}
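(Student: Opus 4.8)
The plan is to establish the three components — upper bound, lower bound, and the impossibility (quadratic-gap) claim — by porting the single-agent machinery of \citet{foster2023tight} to the hidden-reward setting, with only the lower bound and the separation requiring genuinely new ingredients.

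\textbf{Upper bound.} The key observation is that the constrained PAC DEC $\deccpac[\veps](\HRI)$ is defined \emph{identically} to the single-agent DMSO case: the information-gain constraint only involves Hellinger distances $\Dhels{M(\pi)}{\Mbar(\pi)}$ between \emph{observation} distributions, while the objective $\En_{\pi \sim p}\brk*{\fm(\pim) - \fm(\pi)}$ uses the value functions $\fm$, which are \emph{known functions of the model} rather than quantities the learner must observe. I would therefore run the Estimation-to-Decisions (\etd) meta-algorithm: at each round maintain a model estimate $\wh M\^t$ via online Hellinger/log-loss estimation over $\cM$ using only the observed $o\^t \sim \Mstar(\pi\^t)$, then play the min-player strategy $(p\^t, q\^t)$ attaining the DEC at scale $\vepsu$ against the reference $\wh M\^t$. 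Since estimation is driven by observations alone — rewards being hidden is irrelevant to estimating the model — the regret decomposition of \citet{foster2023tight} yields $\En\brk*{\RiskDM} \approxleq \deccpac[\vepsu](\HRI) + \EstHel$, where the estimation term is $\bigoht(\log\abs{\cM})$, giving $\vepsu \approx \sqrt{\log\abs{\cM}/T}$. No new idea is needed here; it is an immediate corollary of the single-agent bound.

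\textbf{Lower bound.} This is where the hidden-reward/PAC structure forces a departure from the single-agent regret arguments. I would fix the reference $\Mbar \in \conv(\cM)$ and an exploration distribution nearly attaining $\deccpac[\vepsl](\HRI)$, and consider the family of confusable alternatives $\{M \in \cM : \En_{\pi\sim q}\Dhels{M(\pi)}{\Mbar(\pi)} \leq \vepsl^2\}$ witnessing the DEC value. The core is an information-theoretic (Le Cam / Assouad-type) argument: over $T$ rounds any algorithm accumulates total KL information on the order of $T$ times a per-round Hellinger term against each alternative, but the DEC constraint is phrased as an \emph{average under $q$}, so the challenge is to relate the information gathered by the algorithm's own adaptively chosen decisions to the DEC's exploration distribution. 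The fixed point $\vepsl$ solving $\deccpac[\veps](\HRI) \geq \bigomt(\veps^2 T)$ is calibrated precisely so that across $T$ rounds the learner cannot accumulate enough information to separate $\Mbar$ from the confusable family, forcing $\wh\pi$ to incur suboptimality $\approxgeq \deccpac[\vepsl](\HRI)$ on a worst-case member. The subtle point — and the reason $\vepsu \not\approx \vepsl$ here, unlike the single-agent case — is that the PAC objective does not ``pay'' per round, so the information budget scales like $\veps^2 T$ rather than balancing against cumulative regret. I expect converting the average-under-$q$ constraint into a chain-rule bound on the algorithm's accumulated information across adaptively chosen decisions to be the main technical obstacle.

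\textbf{Impossibility / separation.} For the final claim I would construct two hidden-reward instances $\sI_1 = (\cM_1, \Act, \MO, \{\fm\}_M)$ and $\sI_2 = (\cM_2, \Act, \MO, \{\fm\}_M)$ together with a bijection $\sE : \cM_1 \to \cM_2$ preserving (i) all reward functions, $\fm \equiv f\sups{\sE(M)}$, and (ii) all pairwise divergences, $\Dgen{M(\pi)}{M'(\pi)} = \Dgen{\sE(M)(\pi)}{\sE(M')(\pi)}$ for every $\pi$, yet such that $\sI_1$ is learnable in $\bigoht(1/\veps)$ rounds while $\sI_2$ requires $\bigomt(1/\veps^2)$. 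The mechanism exploits the fact that pairwise divergences do not capture the \emph{global} identifiability geometry of a partial-monitoring-type problem: in $\sI_1$ a single well-chosen decision simultaneously separates all models (information accrues at rate $\veps$), whereas in $\sI_2$ only locally informative decisions exist, each revealing little, so distinguishing the model costs $1/\veps^2$ samples despite identical pairwise statistics. Since any complexity measure of the stated form assigns equal values to $\sI_1$ and $\sI_2$, it cannot track the $\Theta(1/\veps)$ versus $\Theta(1/\veps^2)$ gap, establishing the quadratic-gap barrier; I would then transfer the separation to two-player zero-sum NE instances via the embedding of \cref{thm:equivalence-intro}. Designing the two families so that \emph{all} pairwise $f$-divergences coincide while the collective distinguishability differs is the crux of this part.
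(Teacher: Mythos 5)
Your upper-bound and lower-bound plans match the paper's. For the upper bound the paper indeed just observes that \etdppac and its analysis from \citet{foster2023tight} never use the observed rewards, so the guarantee transfers verbatim. For the lower bound, the "main technical obstacle" you flag (relating the average-under-$q$ information constraint to the algorithm's adaptively gathered information) is resolved exactly as in the single-agent argument: one \emph{defines} $q$ to be the algorithm's own time-averaged exploration distribution $\qmbar$ under $\Mbar$, takes $M$ to be the maximizer in $\deccpac[\vepsl](\sI,\Mbar)$ for that $q$ and for $p=\pmbar$, and then the chain rule for Hellinger distance gives $\Dhels{\bbP\sups{M}}{\bbP\sups{\Mbar}}\leq C(T)\cdot T\cdot\vepsl^2$ directly; a change of measure finishes the proof. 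So your plan for these two parts is essentially the paper's proof. (Your side remark that the gap between $\vepsu$ and $\vepsl$ arises because "the PAC objective does not pay per round" is not the right explanation---the single-agent PAC setting also does not pay per round yet has $\vepsu\approx\vepsl$; the gap is specific to rewards being hidden---but this does not affect the validity of the lower-bound argument itself.)

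The separation argument has a genuine gap: the mechanism you propose cannot work. You want $\sI_1$ to admit "a single well-chosen decision that simultaneously separates all models" while $\sI_2$ has "only locally informative decisions." But condition (ii) requires $\Dphi{M(\pi)}{M'(\pi)}=\Dphi{\sE(M)(\pi)}{\sE(M')(\pi)}$ for \emph{every} decision $\pi$ and \emph{every} pair, so the profile of which decisions pairwise-separate which models, and by how much, is identical in the two instances; if some $\pi^\st$ pairwise-separates all models by $c$ in $\sI_1$, the same $\pi^\st$ does so in $\sI_2$, and $O(\log\abs{\cM}/c)$ repetitions of $\pi^\st$ identify the model in either instance. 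The separation must therefore come from structure invisible to pairwise divergences altogether. The paper's construction (\cref{prop:fdiv-separation}) makes the observation distribution \emph{independent of the decision}: model $M_i^{\delta,\beta}$ emits $\perp$ with probability $1-\delta(N-1)-\beta$, each $j\neq i$ with probability $\delta$, and the (unique) bad index $i$ with probability $\beta$, and one checks $\Dphi{M_i^{\delta,\beta}}{M_j^{\delta,\beta}}=\beta\phi(\delta/\beta)+\delta\phi(\beta/\delta)$ for $i\neq j$. This sum can be held fixed while the ratio $\beta/\delta$ varies drastically; the ratio is what determines whether seeing symbol $j$ certifies that $j$ is a \emph{good} decision (regime $\beta\ll\delta$, yielding $\mf M(\sI_1,T)\approx 1/T$) or is uninformative about whether $j$ is good or bad (regime $\beta=\delta/2$, yielding $\mf M(\sI_2,T)\gtrsim\sqrt{1/T}$ via a likelihood-ratio argument). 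In other words, the quantity not captured by pairwise $f$-divergences is the \emph{asymmetric decomposition} of the divergence between the two observation channels, not global-versus-local identifiability. Without this (or some other genuinely non-pairwise) mechanism, the crux of your third part is unresolved.
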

}

\subsection{\MAFrameworkShort: Additional results}

Beyond minimax rates, we provide a number of structural results for the \MAFrameworkShort framework\arxiv{ that we believe to be of independent interest}, including: (1) conditions under which the multi-agent \CompShort can be controlled by the single-agent \CompShort, and (2) conditions under which the so-called \emph{curse of multiple agents} can be avoided. 
\arxiv{We now highlight these results.}

\paragraph{From multi-agent to single-agent}
We show that it is generically possible to upper bound the \mashort in terms of the single-agent \CompShort for each player $k$. This result is most easily stated in terms of a multi-agent analogue of the \emph{offset} version of the DEC introduced in \cite{foster2021statistical}. Specifically, we consider a \emph{regret} variant of the offset DEC that restricts $p=q$, coupling exploration and exploitation: For an instance $\sJ$, reference model $\Mbar$, and scale parameter $\gamma > 0$, we define
\begin{align}
\decoreg[\gamma](\sJ, \Mbar) := \inf_{p \in \Delta(\Act)} \sup_{M \in \MM} \left\{ \E_{\act \sim p} [\hm(\act)] - \gamma \cdot \E_{\act \sim p} [\hell{M(\act)}{\Mbar(\act)}] \right\}\label{eq:define-deco}.
\end{align}
It follows immediately from the results of \cite{foster2023tight} (see \cref{prop:constrained-to-offset}) that 
$\deccpac[\veps](\sJ, \Mbar)
\leq \inf_{\gamma>0}\crl*{\decoreg[\gamma](\MAI,\Mbar)\vee{}0 + \gamma\veps^2}$,
so upper bounds on $\decoreg(\sJ)$ yield upper bounds on $\deccpac(\sJ)$, which can in turn be inserted into \cref{thm:minimax-intro} to yield upper bounds on minimax risk. 
While it is also possible to directly upper bound $\deccpac(\sJ, \Mbar)$ without going through $\decoreg(\sJ, \Mbar)$, using $\decoreg(\cdot)$ is more convenient and does not lead to any significant quantitative loss in the resulting upper bounds.\loose

We prove an upper bound on the multi-agent DEC of the instance $\sJ$, in terms of the (single-agent) DEC of $\Ag$ different model classes $\til \MM_k$, defined in terms of $\sJ$. To define these model classes, for $M \in \MM$ and $\ag \in [\Ag]$, we first define an induced \emph{single-agent} model $\single{M}$ as follows: a pure observation drawn from $\single{M}(\pi)$ has the distribution of the pure observation $\ocirc$ when $\ocirc \sim M(\pi)$, and the reward drawn from $\single{M}(\pi)$ has the distribution of $r_\ag$ when $(r_1, \ldots, r_\Ag) \sim M(\pi)$. In short, the model $\single{M}$ is identical to $M$ but simply ignores the rewards of all agents except $\ag$. 
Next, the model class $\til \MM_\ag$ is defined to have policy space $\Pi_k$, so that models in $\til \MM_\ag$ are mappings $\til M : \Pi_k \ra \Delta(\MR \times \Ocirc)$. Finally, we define the class $\til \MM_\ag$, which is indexed by $\Pi_{-\ag} \times \MM$\arxiv{, as follows:
\begin{align}
\til\MM_\ag = \left\{ \pi_\ag \mapsto \single{M}(\pi_\ag, \pi_{-\ag}) \ : \ \pi_{-\ag} \in \Pi_{-\ag},\ M \in \MM\right\}.
\end{align}}\colt{, via $\til\MM_\ag = \left\{ \pi_\ag \mapsto \single{M}(\pi_\ag, \pi_{-\ag}) \ : \ \pi_{-\ag} \in \Pi_{-\ag},\ M \in \MM\right\}$.}
\noah{remove informal tag}
\begin{theorem}%
  \label{thm:single-multiple-ch-informal}
  Let $\sJ = \instma$ be a NE \MAFrameworkShort instance satisfying \cref{ass:convexity_pols}. Then for any $\gamma > 0$, it holds that\footnote{Here, the notation $\decoreg[\gamma/K](\til \MM_k, \Mbar_k)$ refers to the single-agent DEC for  the model class $\til \MM_k$; see \cref{sec:add-prelim}.}
    \begin{align}
\sup_{\Mbar \in \co(\MM)} \decoreg[\gamma](\sJ, \Mbar) \leq & \sum_{\ag=1}^\Ag \sup_{\wb M_\ag \in \co(\til \MM_\ag)} \decoreg[\gamma/\Ag](\til \MM_\ag, \wb M_\ag)\nonumber.
  \end{align}
\end{theorem}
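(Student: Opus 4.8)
The plan is to fix an arbitrary reference model $\Mbar \in \co(\MM)$, exhibit a single distribution $p \in \Delta(\Act)$ certifying the desired upper bound on $\decoreg[\gamma](\sJ,\Mbar)$, and only at the very end take the supremum over $\Mbar$ (the right-hand side is $\Mbar$-free). The construction of $p$ is the heart of the argument.

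First I would decompose the equilibrium gap. Since $\sJ$ is an NE instance, $\Sw(\dev,\pi)=(\pi_\ag',\pi_{-\ag})$, so $\hm(\pi)=\sum_{\ag=1}^{\Ag} g_\ag^{M}(\pi)$ with $g_\ag^{M}(\pi):=\sup_{\pi_\ag'\in\Pi_\ag}\fm_\ag(\pi_\ag',\pi_{-\ag})-\fm_\ag(\pi)$. For fixed $\pi_{-\ag}$, the map $\pi_\ag\mapsto\single{M}(\pi_\ag,\pi_{-\ag})$ is a model in $\til\MM_\ag$ whose induced value function is $\pi_\ag\mapsto\fm_\ag(\pi_\ag,\pi_{-\ag})$; thus $g_\ag^{M}(\pi)$ is \emph{exactly} the single-agent suboptimality of $\pi_\ag$ for that model. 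I would likewise set $\Mbar_\ag^{\pi_{-\ag}}:=\single{\Mbar}(\cdot,\pi_{-\ag})$, which lies in $\co(\til\MM_\ag)$ because marginalization commutes with convex combinations.

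Next I would split the information term. The single-agent observation $(\ocirc,r_\ag)$ is a deterministic function of the full observation $(r_1,\dots,r_\Ag,\ocirc)$, so the data-processing inequality gives $\hell{\single{M}(\pi)}{\single{\Mbar}(\pi)}\le \hell{M(\pi)}{\Mbar(\pi)}$ for every $\ag$. Summing with weight $\gamma/\Ag$ and using $\hm=\sum_\ag g_\ag^M$ yields, for any $p$,
\[ \E_{\pi\sim p}[\hm(\pi)]-\gamma\,\E_{\pi\sim p}[\hell{M(\pi)}{\Mbar(\pi)}] \;\le\; \sum_{\ag=1}^{\Ag}\Big(\E_{\pi\sim p}[g_\ag^{M}(\pi)]-\tfrac{\gamma}{\Ag}\,\E_{\pi\sim p}[\hell{\single{M}(\pi)}{\single{\Mbar}(\pi)}]\Big). \]
I would then choose $p$ so that, for every $\ag$, the conditional law of $\pi_\ag$ given $\pi_{-\ag}$ equals the minimizer $p_\ag^\star(\pi_{-\ag})$ of the single-agent DEC $\decoreg[\gamma/\Ag](\til\MM_\ag,\Mbar_\ag^{\pi_{-\ag}})$. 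Conditioning the $\ag$-th summand on $\pi_{-\ag}$ and using that $\single{M}(\cdot,\pi_{-\ag})$ is a feasible model for the single-agent supremum, each conditional term is at most $\decoreg[\gamma/\Ag](\til\MM_\ag,\Mbar_\ag^{\pi_{-\ag}})\le \sup_{\wb{M}_\ag\in\co(\til\MM_\ag)}\decoreg[\gamma/\Ag](\til\MM_\ag,\wb{M}_\ag)$; taking expectation over $\pi_{-\ag}$, summing over $\ag$, and then taking the supremum over $M\in\MM$ gives the claim.

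The hard part will be the choice of $p$ in the previous step: the single-agent minimizers $p_\ag^\star(\pi_{-\ag})$ must be realized \emph{simultaneously} as the conditionals of one joint law $p$, and they genuinely depend on $\pi_{-\ag}$ (this is so even for normal-form games, where $\Mbar_\ag^{\pi_{-\ag}}$ varies with $\pi_{-\ag}$). A naive product construction $p=\prod_\ag p_\ag$ does not suffice: with a fixed marginal $p_\ag$ played against a reference model that varies with $\pi_{-\ag}$, the inner quantity is only \emph{lower} bounded by the single-agent DEC, which is the wrong direction. Resolving this compatibility-of-conditionals issue is exactly where \cref{ass:convexity_pols} should enter, providing the convex structure on the decision sets (and the fact that the model reveals the realized decision) needed to assemble the per-agent conditional exploration rules into a single valid joint distribution, together with a measurable selection of the minimizers $p_\ag^\star(\cdot)$. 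I expect this construction, and verifying the attendant measurability, to be the main technical work; the remaining steps are essentially bookkeeping given the single-agent theory of \citet{foster2023tight}.
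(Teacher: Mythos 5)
Your decomposition---splitting $\hm$ into the per-agent gaps $\hm_\ag$, distributing $\gamma$ as $\gamma/\Ag$ across agents, and using data processing to pass from the full-observation Hellinger distance to the per-agent marginals---is exactly the skeleton of the paper's proof of \cref{thm:single-multiple-ch}, and your identification of the simultaneity problem as the crux is accurate. However, the mechanism you propose to resolve it---constructing a joint law $p$ whose full conditionals $p(\pi_\ag\mid\pi_{-\ag})$ equal the single-agent DEC minimizers $p^\star_\ag(\pi_{-\ag})$---is a genuine gap: an arbitrary family of full conditional kernels is in general \emph{not} compatible with any joint distribution (already for $K=2$, compatibility forces the ratio $p^\star_1(\pi_2)(\pi_1)/p^\star_2(\pi_1)(\pi_2)$ to factor as $g(\pi_1)/h(\pi_2)$, which the DEC minimizers have no reason to satisfy), and \cref{ass:convexity_pols} does nothing to restore compatibility. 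So the step you defer as ``the main technical work'' is not merely technical; as stated it would fail.

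The paper's resolution is different in kind. First, \cref{ass:convexity_pols} is used to \emph{derandomize}: since $\Pi_\ag=\Delta(\MA_\ag)$ and each $M$ is linear in $\pi_\ag$, any distribution $p_\ag\in\Delta(\Pi_\ag)$ in the single-agent objective can be replaced by the single mixed decision $\E_{\pi_\ag\sim p_\ag}[\pi_\ag]\in\Pi_\ag$ without increasing the objective; moreover, because the model reveals the realized pure action, \cref{lem:fdiv-factor} gives an \emph{exact} decomposition $\Dhels{M(\pi_\ag,\pi_{-\ag})}{\Mbar(\pi_\ag,\wb\pi_{-\ag})}=\E_{a_\ag\sim\pi_\ag}\brk[\big]{\Dhels{M(a_\ag,\pi_{-\ag})}{\Mbar(a_\ag,\wb\pi_{-\ag})}}$ (convexity alone would give an inequality in the unusable direction). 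Second, with each agent's strategy now a single point of $\Pi_\ag$, the simultaneity problem becomes a fixed-point problem for the best-response-style correspondences $\MC_\ag(\wb\pi)$, which is solved by Kakutani's theorem (via \cref{lem:berge-theorem,lem:kakutani}); the joint decision played is the deterministic fixed point $\wb\pi$, not a randomized $p$ with prescribed conditionals. Note also that the per-agent problem is set up differently from yours: the reference model for agent $\ag$ is $\single{\Mbar}(\cdot,\wb\pi_{-\ag})$ with $\wb\pi_{-\ag}$ \emph{fixed} at the fixed point, while the adversary chooses $\pi_{-\ag}$ freely inside the supremum over $\til\MM_\ag$---rather than having the reference track the realized $\pi_{-\ag}$ as in your conditional formulation. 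If you replace your conditional-compatibility step with this derandomize-then-fixed-point argument, the rest of your outline goes through.
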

This result allows us to bound the \mashort using standard bounds on the single-agent \CompShort \citep{foster2021statistical}. For example, for normal-form games\arxiv{ with bandit feedback}, where each player has $A_k$ actions, it yields $\arxiv{\sup_{\Mbar \in \co(\MM)} }\decoreg[\gamma](\sJ, \Mbar)\approxleq{}K\cdot\sum_{k=1}^{K}\frac{A_k}{\gamma}$. See \cref{sec:single-multiple} for refinements concerning Markov games.

The proof of \cref{thm:single-multiple-ch} employs a novel fixed-point argument: For each agent $k$, if all other agents commit to some joint distribution, this induces a single-agent \dmso instance, and it is natural for \arxiv{agent }$k$ to play the strategy that minimizes the single-agent \CompShort for this instance. Using Kakutani's fixed point theoerem, we show that it is possible for all $K$ agents to apply this strategy simultaneously.\loose

\paragraph{On the curse of multiple agents}
In multi-agent reinforcement learning, the \emph{curse of multiple agents} refers to the situation in which the sample complexity required to learn an equilibrium scales exponentially in the number of players \citep{jin2021v}. In general, our upper bounds on sample complexity for the \MAFrameworkShort framework (\cref{thm:minimax-intro}) suffer from the curse of multiple agents due to the presence of the estimation complexity term $\log\abs{\cM}$. For example, in a $K$-player normal-form game with $A$ actions per player, one has $\log\abs{\cM}\approx{} A^{K}$ (using an appropriate discretization of $\MM$). Our final result shows that it is possible to avoid the curse of multiple agents by replacing the estimation complexity $\log\abs{\cM}$ with the maximum size $\max_{k}\log\abs{\Dev}$ for each player's deviation set, which is usually polynomial in the number of agents; the tradeoff is that the result scales with the \mashort for the \ma instance in which the model class $\cM$ is \emph{convexified} via $\cM\gets\conv(\cM)$.
\begin{theorem}[Informal version of \cref{thm:curse_ub}]
  \label{thm:curse-ub-intro}
  Let $\sJ = \instma$ be a CCE instance (\cref{def:cce-instance}) or a CE instance (\cref{def:ce-instance}) of the \MAFrameworkShort framework. Then, for any $T \in \BN$, \cref{alg:maexo} outputs $\wh \pi \in \Pi$ such that with probability at least $1-\delta$,
  \begin{align}
\RiskDM = \hmstar(\wh \pi) \leq \bigoht(K) \cdot \inf_{\gamma > 0} \left\{\decoreg[\gamma](\co(\sJ)) + \frac{\gamma}{T} \cdot \log \left( \frac{\max_k |\Dev|}{ \delta} \right) \right\}\nonumber,
  \end{align}
  where we adopt the convention that $\conv(\MAI) \equiv (\conv(\MM), \Act, \MO, \{\Dev\}_{\ag}, \{\Sw\}_{\ag})$.
\end{theorem}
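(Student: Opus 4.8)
The plan is to realize \cref{alg:maexo} as a multi-agent instantiation of exploration-by-optimization (ExO) applied to the \emph{convexified} instance $\conv(\sJ)$, coupled with an estimation subroutine that tracks only per-player deviation advantages rather than the full model. Recall that the hallmark of ExO is that its per-round optimization value is controlled by the offset DEC of the convex hull; thus, running ExO with an exploration/exploitation distribution $p^t$ against a running reference $\Mbar^t \in \conv(\cM)$ yields a cumulative bound of the form
\begin{align}
\sum_{t=1}^T \En_{\act \sim p^t}[\hmstar(\act)] \;\leq\; T\cdot\sup_{\Mbar \in \conv(\cM)} \decoreg[\gamma](\conv(\sJ), \Mbar) \;+\; \gamma\cdot\mathrm{Est},\nonumber
\end{align}
where $\mathrm{Est}$ is the regret of the online estimation procedure. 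Outputting $\wh\act$ by drawing a uniformly random round $t \in [T]$ and then $\act \sim p^t$ (online-to-PAC conversion) gives $\En[\hmstar(\wh\act)] \leq \sup_{\Mbar}\decoreg[\gamma](\conv(\sJ),\Mbar) + \frac{\gamma}{T}\mathrm{Est}$; it then remains to show $\mathrm{Est} \lesssim \bigoht(K)\log(\max_\ag|\Dev|/\delta)$ and to carry the leading $K$ factor, which originates from the $K$-fold sum defining $\hm$.

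The crux of the argument --- and the step I expect to be the main obstacle --- is designing the estimation subroutine so that $\mathrm{Est}$ scales with $\max_\ag\log|\Dev|$ rather than the prohibitive $\log|\cM| \approx A^K$. The structural observation enabling this is that for CCE and CE instances the gap
\begin{align}
\hm(\act) \;=\; \sum_{\ag=1}^K \sup_{\dev \in \Dev}\fm_\ag(\Sw(\dev,\act)) - \fm_\ag(\act)\nonumber
\end{align}
depends on the model $M$ only through the reward means of $\act$ and of its finitely many deviations $\Sw(\dev,\act)$. Because for a CCE instance $\Sw(\dev,\act) = \indic_{\dev}\times\act_{-\ag}$ plays the \emph{fixed} pure decision $\dev$ for agent $\ag$ against the held-out marginal $\act_{-\ag}$, each such deviation distribution is itself an admissible decision in $\Act = \Delta(\Sigma)$ (using \cref{ass:convexity_pols}), and a single observation of $r_\ag$ under it yields an unbiased estimate of $\fmstar_\ag(\Sw(\dev,\act))$. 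Consequently, instead of performing Hellinger density estimation over all of $\cM$, it suffices to run, for each player $\ag$, an online aggregation procedure (e.g.\ exponential weights) over the $|\Dev|$ candidate deviations to predict the best deviation advantage; this incurs estimation regret $O(\log|\Dev|)$ per player, and summing over the $K$ players produces both the $\bigoht(K)$ prefactor and the $\log(\max_\ag|\Dev|)$ dependence.

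Passing to $\conv(\cM)$ is precisely what renders this deviation-level estimation compatible with the DEC machinery: since the subroutine only commits to reward means (equivalently, a point in the reward polytope of $\conv(\cM)$) rather than to a member of $\cM$, the induced reference model $\Mbar^t$ lives in $\conv(\cM)$, which is why the bound features $\decoreg[\gamma](\conv(\sJ))$ and not $\decoreg[\gamma](\sJ)$. The remaining work is comparatively routine. I would (i) verify that the exploration distribution $p^t$ achieving the convexified offset DEC places enough mass on the deviation decisions $\indic_{\dev}\times\act_{-\ag}$ so that the per-deviation estimates have controlled variance, letting the exploration cost be absorbed into the DEC term (here \cref{ass:nonneg-dev}, i.e.\ the $\perp \in \Dev$ option, guarantees each advantage is nonnegative and the per-player prediction problem is well-posed); and (ii) upgrade the in-expectation estimation guarantee to a high-probability one via a Freedman-type martingale inequality applied to the per-round deviation-prediction errors, which is the source of the $\log(1/\delta)$ term. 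Finally, I would take the infimum over $\gamma > 0$ of the combined bound to obtain the stated form, and note that the identical argument goes through for CE instances since there $\Sw(\dev,\act)$ likewise only re-routes a fixed pure decision, preserving the reduction to per-player online learning over $\Dev$.
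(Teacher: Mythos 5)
Your proposal gets the overall architecture right---an exploration-by-optimization scheme whose per-round value is tied to $\decoreg[\gamma](\co(\sJ))$, exponential weights over each player's deviation set $\Dev$ to obtain the $\log(\max_k\abs{\Dev})$ dependence in place of $\log\abs{\cM}$, a $K$ factor from the sum defining $\hm$, and a martingale inequality for the high-probability statement. But the mechanism you propose for the central step has a genuine gap. You want to estimate each deviation's value $\fmstar_\ag(\Sw(\dev,\act))$ by exploiting that $\indic_{\dev}\times\act_{-\ag}$ is itself an admissible decision, and you acknowledge that this requires the DEC-achieving exploration distribution to ``place enough mass on the deviation decisions so that the per-deviation estimates have controlled variance.'' There is no reason this holds: the distribution attaining $\decoreg[\gamma](\co(\sJ),\Mbar)$ is an arbitrary minimizer with no structural guarantee of covering the $\abs{\Dev}$ deviation decisions, and forcing such coverage would incur an exploration cost scaling with $\abs{\Dev}$ rather than $\log\abs{\Dev}$, destroying the bound. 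Relatedly, your picture of an ``estimation subroutine'' producing a running reference $\Mbar^t\in\conv(\cM)$ does not match how the convex hull actually enters---\cref{alg:maexo} maintains no reference model at all.

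The paper's resolution is to never form unbiased per-deviation estimates. Instead, the decision distribution $\pi^t$ and the estimation functions $g^t=(g_1^t,\ldots,g_K^t)$ are optimized \emph{jointly} against an adversarial pair $(M,\pi^\st)$ in the objective $\Gamma_{q,\eta}$ of \cref{eq:define-gamma}, whose second (exponential-moment) term penalizes the adversary for choosing a deviation $\pi_k^\st$ that the estimator fails to distinguish from a draw $\dev\sim q_k^t$; the exponential-weights regret over $\Dev$ (\cref{lem:alg-perf-exo}) then supplies exactly the $\frac{1}{\eta}\log(K\max_k\abs{\Dev}/\delta)$ term. The value of this minimax objective is bounded by a Bayesian \emph{multi-agent information ratio} (\cref{def:ma-ir}) via Sion's minimax theorem and a variational representation of squared Hellinger distance (\cref{lem:exo-ir}); there, ``estimation of the best deviation'' is measured by the Hellinger distance between the posterior and prior over the identity of $\pi_k^\st$, which by conditioning equals $\E_{\pi_k^\st\sim\mu}[\Dhels{\ol M^k_{\pi_k^\st}(\sigma)}{\ol M(\sigma)}]$ for the mixture models $\ol M^k_{\pi_k'}=\E_\mu[M(\cdot)\mid\pi_k^\st=\pi_k']$. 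It is these mixtures---not any output of an estimation oracle---that force the appearance of $\conv(\cM)$, and \cref{lem:inf-dec} closes the loop by bounding the information ratio by $K\cdot\decoreg[\gamma](\co(\sJ))$. Without this (or some substitute for your coverage claim), your argument does not go through.
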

In normal-form games with $K$ players and $A$ actions per player, we have $\decopac(\conv(\MAI))\approxleq\frac{A}{\gamma}$ and $\max_{k}\log\abs{\Dev}=\log(A)$, so this result gives
\colt{$\RiskDM\approxleq{} \sqrt{\poly(K)A/T}$.}
\arxiv{\[
  \RiskDM\approxleq{} \sqrt{\frac{\poly(K)\cdot{}A}{T}}.
\]}
More broadly, \cref{thm:curse-ub-intro} shows that it is generically possible to avoid the curse of multiple agents for convex classes, including structured classes of normal-form games with bandit feedback such as games with linear or convex payoffs. In general though, it does not lead to tight guarantees for non-convex classes such as Markov games, and is best thought of as complementary to results for this setting \citep{jin2021v,song2021can,mao2022provably}.
The result is proven by adapting the powerful \emph{exploration-by-optimization} algorithm from the single-agent setting \citep{lattimore2022minimax,foster2022complexity} so as to exploit the unique feedback structure of the multi-agent setting. We refer to \cref{sec:curse} for details, as well as additional results which highlight settings in which the curse of multiple agents cannot be avoided in the sense of  \cref{thm:curse-ub-intro}.
\loose

\arxiv{
\subsection{Preliminaries}
\label{sec:add-prelim}

Below we provide additional technical preliminaries which will be used throughout our proofs.

\paragraph{Probability kernels}
For probability spaces $(\MX, \sX)$ and $(\MY, \sY)$, a \emph{probability kernel} $P(\cdot |\cdot)$ from $(\MX, \sX)$ to $(\MY, \sY)$ is a mapping $P : \sY \times \MX \ra [0,1]$ which satisfies (1) for all $x \in \MX$, $P(\cdot | x)$ is a probability measure on $(\MY, \sY)$, and (2) for all $Y \in \sY$, the mapping $x \mapsto P(Y | x)$ is measurable with respect to $\sX$. To simplify notation we often denote probability kernels as $P : \MX \ra \Delta(\MY)$.

\paragraph{\MAFrameworkShort framework}
We adopt the same formalism for probability spaces as in
\cite{foster2021statistical,foster2023tight}. Decisions are associated
with a measure space $(\Act, \sP)$, and observations are associated
with the measure space $(\MO, \sO)$. In the \MAFrameworkShort
framework, pure observations are associated with the measure space
$(\Ocirc, \sO_\circ)$ and  rewards are associated with a measure space
$(\MR, \sR)$, and furthermore, we have $\MO = \Ocirc \times \MR^\Ag$
and  $\sO = \sO_\circ \otimes \sR^{\otimes \Ag}$. Formally, a
\emph{model} $M(\cdot \mid \cdot)$ is a probability kernel from $(\Pi,
\sP)$ to $(\MO, \sO)$. We denote the set of all models as
$\cMall$. Note that $\cMall$ depends on the measure spaces $(\Pi,
\sP), (\MO, \sO)$; when we wish to make this dependence explicit, we
will write $\cMexpl{\Pi, \MO}$.  The \emph{history} up to time $t$ is given by $\hist\^t = (\pi\^1, o\^1), \ldots, (\pi\^t, o\^t)$. We define
\begin{align}
\Omega\^t = \prod_{i=1}^t (\Pi \times \MO), \qquad \qquad \sF\^t = \bigotimes_{i=1}^T (\sP \otimes \sO)\nonumber,
\end{align}
so that $\hist\^t$ is associated with the space $(\Omega\^t,
\sF\^t)$.

We assume throughout the paper that $\MR = [0,1]$ (which implies in particular that $\hm(\pi) \in [0,K]$ for all $M, \pi$) unless otherwise stated. To simplify notation, for each $\pi\in \Pi$ and $M \in \MM$, we write $\hm_k(\pi) := \sup_{\pi_k' \in \Pi_k} \fm_k(\Sw(\dev, \pi)) - \fm_k(\pi)$, so that $\hm(\pi) = \sum_{k=1}^K \hm_k(\pi)$. %

\colt{
\begin{remark}[Notation for \maf]
We use the following convention throughout the paper: When convenient, we associate any singleton distribution with the element that the distribution places its mass on. For instance, for a pure decision $\sigma = (\sigma_1, \ldots, \sigma_\Ag) \in \Sigma_1 \times \cdots \times \Sigma_\Ag$ in the context  of \cref{def:cce-instance}, we will denote its corresponding singleton distribution $\indic_\sigma \in \Delta(\Sigma) = \Pi$ as just $\sigma \in \Pi$. In addition, when possible, we use the convention that $\Sigma$ denotes a pure decision set, whereas $\Pi$ denotes a decision set that may be pure or mixed (this will be clear from context).
\end{remark}
}

\paragraph{The canonical single-agent instance}
Given a decision space $\Pi$, an observation space $\MO = \Ocirc \times \MR$, and a model class $\MM \subset (\Pi \ra \Delta(\MO))$, there is a canonical single-agent instance $\sJ = \instma$ corresponding to the model class $\MM$: we take $\Dev[1] = \Pi$ and $\Sw[1](\dev[1], \pi) = \dev[1]$, which ensures that $\hm(\pi) = \max_{\pi' \in \Pi} \fm(\pi') - \fm(\pi)$ for all $\pi \in \Pi, M \in \MM$. The single-agent instance $\sJ$ of the 1-player \MAFrameworkShort framework exactly captures the DMSO framework in \cite{foster2021statistical,foster2023tight} for the model class $\MM$. Furthermore, for any model $\Mbar$, we will write $\deccpac(\MM, \Mbar) = \deccpac(\sJ, \Mbar)$ (and similarly we will write $\decoreg[\gamma](\MM, \Mbar) = \decoreg[\gamma](\sJ, \Mbar)$ for regret variant of the offset DEC introduced in \cref{sec:single-multiple}); the quantity $\deccpac(\MM, \Mbar) = \deccpac(\sJ, \Mbar)$ is identical to the constrained (PAC) DEC of the model class $\MM$ as defined in \cite{foster2023tight}, and the quantity $\decoreg(\MM, \Mbar) = \decoreg(\sJ, \Mbar)$ is identical to the offset (regret) DEC of the model class $\MM$ as defined in \cite{foster2021statistical}. 

\paragraph{\FrameworkShort framework}
As in the \MAFrameworkShort framework, decisions are associated
with a measure space $(\Act, \sP)$, observations are associated
with the measure space $(\MO, \sO)$, and models $M(\cdot \mid \cdot)$ are
probability kernels from $(\Pi,
\sP)$ to $(\MO, \sO)$. The history up to time $t$ is given by
$\hist\^t = (\pi\^1, o\^1), \ldots, (\pi\^t, o\^t)$, and is associated
with the space $(\Omega\^t,
\sF\^t)$ given by
\begin{align}
\Omega\^t = \prod_{i=1}^t (\Pi \times \MO), \qquad \qquad \sF\^t = \bigotimes_{i=1}^T (\sP \otimes \sO)\nonumber.
\end{align}
We denote the set of all models as
$\cMall$. Unless stated otherwise, we will assume throughout that $\fm(\act) \in
[0,1]$ for all $M\in\cMall$ and $\act\in\Pi$.

For a model $M$ and decision $\act \in \Act$, $\E\sups{M, \pi}[\cdot]$
denotes expectation under the process $o\sim M(\pi)$. To simplify
notation, we often abbreviate $\gm(\pi) := \fm(\pim) - \fm(\pi)$, so
that $\RiskDM = \E_{\wh \pi \sim p}[\gmstar(\wh\pi)]$.

\colt{
\begin{remark}[Alternative formulation for \hr]
  An equivalent formulation of the \FrameworkShort framework would be to consider models $M:\Pi\to\Delta(\cO\times\cR)$ that specify joint distributions over observations and rewards and define $\fm(\pi)=\En^{\sss{M},\pi}\brk{r}$, but only allow $o$ to be observed by the learner under $(o,r)\sim{}M(\pi)$.
\end{remark}
}

\colt{
\begin{remark}[Contrast with \emph{reward-free} DMSO]
  Despite the similar name, the \FrameworkShort framework is distinct from the ``reward-free'' DMSO framework considered in the recent work of \citet{chen2022unified}; in the latter framework, which is specialized to Markov decision processes, a reward-function is given to the learner explicitly, but only after the learning process ends.
\end{remark}
}

\paragraph{Density ratios}
For both the \MAFrameworkShort and \FrameworkShort, we define
\begin{align}
\abscont \ldef \sup_{M, M' \in \MM} \sup_{\act \in \Act} \sup_{A \in \sO} \left\{ \frac{M(A \mid \act)}{M'(A \mid \act)} \right\} \vee e\label{eq:define-abscont}.
\end{align}
Finiteness of $\abscont$ is not necessary for our results to hold, but
improves several of our bounds by a $\log(T)$ factor.

\paragraph{Divergences}Total variation distance is given by
  \[
    \Dtv{\bbP}{\bbQ}=\sup_{A\in\filt}\abs{\bbP(A)-\bbQ(A)}
    = \frac{1}{2}\int\abs{d\bbP-d\bbQ},
  \]
and the Kullback Leibler divergence is given by
  \[
    \Dkl{\bbP}{\bbQ} =\left\{
    \begin{array}{ll}
\int\log\prn[\big]{
      \frac{d\bbP}{d\bbQ}
      }d\bbP,\quad{}&\bbP\ll\bbQ,\\
      +\infty,\quad&\text{otherwise.}
    \end{array}\right.
\]

\paragraph{Minimax sample complexity}
Formally, for $T \in \BN$, an \emph{algorithm} (for either the \FrameworkShort or \MAFrameworkShort frameworks) is a collection of probability kernels $(p,q) = \left( p(\cdot \mid \cdot), \{ q\^t ( \cdot \mid \cdot ) \}_{t=1}^T \right)$, where each $q\^t : \Omega\^{t-1} \ra \Delta(\Pi)$ is a probability kernel from $(\Omega\^{t-1}, \sF\^{t-1})$ to $(\Pi, \sP)$, and $p : \Omega\^T \ra \Delta(\Pi)$ is a probability kernel from $(\Omega\^T, \sF\^T)$ to $(\Pi, \sP)$. We let $\BP\sups{M, (p,q)}$ denote the law of $(\hist\^T, \wh \pi)$ under the process:
\begin{align}
\pi\^t \sim q\^t (\cdot \mid \hist\^{t-1}), \ o\^t \sim M(\cdot \mid \pi\^t),\ \forall t \in [T], \qquad \wh \pi \sim p(\cdot \mid \hist\^T)\nonumber,
\end{align}
and we use $\E\sups{M, (p,q)}$ to denote the corresponding expectation. 
Our main goal is to characterize the \emph{minimax PAC sample complexity} of an instance $\sJ = \instma$ of the \MAFrameworkShort framework or $\sI = \instpm$ of the \FrameworkShort framework. The minimax sample complexities for both cases are defined in an identical manner, spelled out below:
\begin{align}
  \mf M(\sJ, T) &\ldef %
                    \inf_{(p,q)} \sup_{\Mstar \in \MM} \E\sups{\Mstar, (p,q)} \E_{\wh \pi \sim p(\cdot \mid \hist\^T)}\left[ \sum_{k=1}^K \sup_{\dev \in \Dev} \fm_\ag(\Sw(\dev, \wh\act)) - \fm_\ag(\wh\act)\right],\nonumber\\
  \mf M(\sI, T) %
                     &\ldef \inf_{(p,q)} \sup_{\Mstar \in \MM} \E\sups{\Mstar, (p,q)} \E_{\wh \pi \sim p(\cdot \mid \hist\^T)} [\fmstar(\pimstar) - \fmstar(\wh\pi)]\nonumber.
\end{align}

\arxiv{
  \subsection{Organization}
  \colt{
  \cref{part:main} of the appendix presents our main results, with
  preliminaries in \cref{sec:add-prelim}. In
  particular, \cref{sec:relations} and
  \cref{sec:bounds} present our main sample complexity guarantees for
  the \MAFrameworkShort and \FrameworkShort frameworks.
\begin{itemize}
\item \cref{sec:relations} establishes a certain equivalence between
  the \MAFrameworkShort and \FrameworkShort frameworks.
\item \cref{sec:bounds} establishes upper and lower bounds on the
  minimax rates for both frameworks based on the \CompText, and
  highlights barriers to obtaining sharper guarantees analogous to
  those found in the single-agent, reward-observed \dmso framework \citep{foster2023tight}.
\end{itemize}
\cref{sec:single-multiple} and \cref{sec:curse} present additional results concerning the \MAFrameworkShort framework.
\begin{itemize}
\item \pref{sec:single-multiple} gives general conditions under which it is possible to bound the \mashort in terms of the single-agent \CompShort.
\item \pref{sec:curse} gives conditions under which one can obtain
  sample complexity guarantees in the \MAFrameworkShort framework that
  avoid the so-called \emph{curse of multiple agents}.
\end{itemize}

\noindent\cref{part:examples} provides examples (instances, as well as
upper
and lower bounds on the DEC and minimax rates) and discussion 
for the \maf, and \cref{part:proofs} contains proofs for all results.
}

\arxiv{
   This paper is organized as follows. First,\cref{sec:relations} and \cref{sec:bounds} present our main results:
\begin{itemize}
\item \cref{sec:relations} establishes a certain equivalence between the \MAFrameworkShort and \FrameworkShort.
\item \cref{sec:bounds} establishes upper and lower bounds on the minimax rates for both frameworks based on the \CompText, and highlights barriers to obtaining sharper guarantees analogous to those found in the basic \dmso framework \citep{foster2023tight}.
\end{itemize}
\cref{sec:single-multiple} and \cref{sec:curse} then present additional results concerning the \MAFrameworkShort framework:
\begin{itemize}
\item \pref{sec:single-multiple} gives general conditions under which it is possible to bound the \mashort in terms of the single-agent \CompShort.
\item \pref{sec:curse} gives conditions under which one can obtain sample complexity guarantees in the \MAFrameworkShort framework that avoid the so-called \emph{curse of multiple agents}, as well as examples in which this is not possible.
\end{itemize}
All proofs are deferred to the appendix. Further examples for both
frameworks are given in \cref{app:ma_examples}.
}

  }

\paragraph{Additional notation}
For an integer $n\in\bbN$, we let $[n]$ denote the set
  $\{1,\dots,n\}$. For a set $\cX$, we let
        $\Delta(\cX)$ denote the set of all probability distributions
        over $\cX$. For $x\in\cX$, we use $\indic_x\in\Delta(\cX)$ to
        denote the distribution which places probability mass $1$ on $x$. We adopt standard
        big-oh notation, and write $f=\bigoht(g)$ to denote that $f =
        \bigoh(g\cdot{}\max\crl*{1,\mathrm{polylog}(g)})$. We use $\approxleq$ only in informal statements to emphasize the most relevant aspects of an inequality. For a set $\MX$, let $\powerset{\MX}$ denote the power set of i.e., the set of all subsets of $\MX$.

\section{Equivalence of \MAFrameworkShort and \FrameworkShort frameworks}
\label{sec:relations}

In this section, which forms the starting point for our main results, we show that the \MAFrameworkShort and \FrameworkShort frameworks satisfy a certain statistical equivalence.
First, in \pref{prop:ma-to-pm}, we formalize the trivial direction of this equivalence: namely, any instance of the \maf can be viewed as an instance of the \hrf. To state the result, recall that per our convention, the full observation space in a \MAFrameworkShort instance is denoted by $\MO = \Ocirc \times \MR^\Ag$. 
\begin{theorem}[Reducing \MAFrameworkShort to \FrameworkShort]
  \label{prop:ma-to-pm}
  Consider any instance of the \MAFrameworkShort framework satisfying \cref{ass:nonneg-dev} and \cref{ass:existence-eq} and specified by the tuple $\MAI = (\MM, \Act, \MO, \{\Dev\}_{\ag}, \{\Sw\}_{\ag})$. %
  Then for some choice of value functions $\til{f}\sups{M}$, the instance of the \FrameworkShort framework specified  by the tuple $\HRI = (\MM, \Act, \MO, \{ \til{f}\sups{M} \}\subs{M})$, satisfies:
   \begin{enumerate}
   \item For all models $\Mbar$ and $\vep > 0$, $\deccpac(\sI, \Mbar) = \deccpac(\sJ, \Mbar)$.
   \item For all $T \in \BN$, $\mf M(\sI, T) = \mf M(\sJ, T)$. 
   \end{enumerate}
 \end{theorem}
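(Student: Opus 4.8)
The plan is to carry out the reduction that sets the HR-DMSO value function equal to the negative MA-DMSO equilibrium gap, $\til{f}\sups{M}(\pi) := -\hm(\pi)$, where $\hm(\pi) = \sum_{k=1}^\Ag \sup_{\dev \in \Dev} \fm_k(\Sw(\dev, \pi)) - \fm_k(\pi)$. Because $\til{f}\sups{M}$ is a deterministic functional of the reward maps $\fm_k$ and switching maps $\Sw$ already specified by $\sJ$, it is a legitimate (known) value function, and the HR-DMSO instance $\sI = (\MM, \Act, \MO, \{\til{f}\sups{M}\}_M)$ uses the \emph{same} model class $\MM$, decision space $\Act$, and observation space $\MO = \Ocirc \times \MR^\Ag$ as $\sJ$. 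The whole argument then rests on translating between the MA deviation gap $\hm$ and the HR suboptimality gap $\til{f}\sups{M}(\pim) - \til{f}\sups{M}(\pi)$.

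The first step is to identify the maximizers of $\til{f}\sups{M}$. \cref{ass:nonneg-dev} ensures that each summand $\sup_{\dev \in \Dev} \fm_k(\Sw(\dev,\pi)) - \fm_k(\pi)$ is nonnegative, so $\hm(\pi) \geq 0$ and hence $\til{f}\sups{M}(\pi) \leq 0$ everywhere; \cref{ass:existence-eq} supplies an equilibrium $\pi$ with $\hm(\pi) = 0$. Thus $\max_{\pi} \til{f}\sups{M}(\pi) = 0$ is attained, precisely at the equilibria of $M$, so $\pim := \argmax_\pi \til{f}\sups{M}(\pi)$ is well-defined with $\til{f}\sups{M}(\pim) = 0$, and for every $\pi \in \Act$,
\[
\til{f}\sups{M}(\pim) - \til{f}\sups{M}(\pi) = 0 - (-\hm(\pi)) = \hm(\pi).
\]
This identity is the engine for both claims.

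For Claim 1, I would substitute the identity into the HR-DMSO DEC \cref{eq:decc-pm} and the MA-DMSO DEC \cref{eq:ma_dec}. For any fixed $\Mbar$, scale $\vep$, and candidate $(p,q)$, the objective $\E_{\pi \sim p}[\til{f}\sups{M}(\pim) - \til{f}\sups{M}(\pi)]$ equals $\E_{\pi \sim p}[\hm(\pi)]$, while the feasibility constraint $\E_{\pi \sim q}[\hell{M(\pi)}{\Mbar(\pi)}] \leq \vep^2$, the index class $M \in \MM$, and the empty-feasible-set convention are identical across the two problems; the two $\inf$--$\sup$ values therefore coincide, giving $\deccpac[\vep](\sI, \Mbar) = \deccpac[\vep](\sJ, \Mbar)$. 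For Claim 2, the key point is that an algorithm is the same object $(p,q)$ in both frameworks, acting on histories valued in $(\Act \times \MO)^t$; since $M$ and $\MO$ are shared, the law $\BP\sups{M, (p,q)}$ of $(\hist\^T, \wh\pi)$ is identical in the two settings. Applying the identity pointwise, $\til{f}\sups{\Mstar}(\pimstar) - \til{f}\sups{\Mstar}(\wh\pi) = \hmstar(\wh\pi)$ for every $\wh\pi$, so the HR-DMSO and MA-DMSO risk functionals agree for every $\Mstar$ and $(p,q)$; taking $\sup_{\Mstar \in \MM}$ and then $\inf_{(p,q)}$ yields $\mf M(\sI, T) = \mf M(\sJ, T)$.

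I expect no serious obstacle, as this is the easy direction of the equivalence; the one point requiring care is that \emph{both} \cref{ass:nonneg-dev} and \cref{ass:existence-eq} are needed to force $\max_\pi \til{f}\sups{M} = 0$ (and to ensure it is attained), which is exactly what collapses the HR gap onto $\hm$. It is also worth flagging, consistent with the footnote accompanying the reduction, that the construction is lossless precisely \emph{because} rewards are hidden in $\sI$: one generally cannot simulate an unbiased sample of $\til{f}\sups{M}(\pi) = -\hm(\pi)$ from a single draw of $M(\pi)$, so if HR-DMSO demanded observed rewards the reduction would fail. Finally, the range of $\til{f}\sups{M}$ is $[-\Ag, 0]$ rather than $[0,1]$, but since both the DEC and the minimax risk see $\til{f}\sups{M}$ only through differences $\til{f}\sups{M}(\pim) - \til{f}\sups{M}(\pi)$, this normalization is immaterial to the claimed equalities.
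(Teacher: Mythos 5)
Your proposal is correct and follows essentially the same route as the paper: the paper also defines $\til{f}\sups{M}(\pi) = K - \hm(\pi)$ (your $-\hm(\pi)$ shifted by the immaterial constant $K$), uses \cref{ass:existence-eq} and \cref{ass:nonneg-dev} to show $\sup_{\pi'}\til{f}\sups{M}(\pi') - \til{f}\sups{M}(\pi) = \hm(\pi)$, and then observes that the DEC objectives and the algorithm spaces/laws coincide. No gaps.
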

This result proceeds by choosing the value function $\til{f}\sups{M}(\pi)=K-\hm(\pi)$. Note that for this reduction to be admissible, it is critical that rewards are hidden: the function $\hm(\pi)$ is not observed directly in the \maf, and as we will see, this is a source of fundamental hardness.

\pref{prop:ma-to-pm} is a fairly immediate result, and it is natural to imagine that the \FrameworkShort framework might truly be more general than the \MAFrameworkShort framework, especially since rewards \emph{are} observed in the latter.
The following result, which is the formal version of \pref{thm:equivalence-intro}, shows that if one allows for small approximation, any instance of the \FrameworkShort framework can be embedded in a two-player, zero-sum NE instance for \MAFrameworkShort with minimal increase in complexity.

\begin{theorem}[Reducing \FrameworkShort to \MAFrameworkShort]
  \label{prop:pm-to-ma}
  Consider any instance of the \FrameworkShort framework specified by the tuple $\sI = \instpm$. Then for any $V \in \BN$, there is a two-player zero-sum NE instance $\sJ = (\til \MM, \til \Pi, \til \MO, \Dev, \Sw)$ for the \MAFrameworkShort framework (\cref{def:ne-instance}) such that:
  \begin{enumerate}
  \item For all $\vep > 0$, $\deccpac[\vep](\sJ) \leq \deccpac[\vep](\sI) \leq 6/\sqrt{V} + \deccpac[\vep + (6/V)^{-1/2}](\sJ)$.
  \item For all $T \in \BN$, it holds that $\mf M(\sJ, T) \leq \mf M(\sI, T) \leq  \mf M(\sJ, T) + O((T \log(T)/V)^{1/4})$.
  \item $\til \MM$ is indexed by tuples $(M, i) \in \MM \times [V]$. In particular, if $\MM$ is finite, then $\log |\til \MM| = \log |\MM| + \log V$.
  \end{enumerate}
\end{theorem}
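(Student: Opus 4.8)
The plan is to realize the \FrameworkShort learner as the first player of a two-player zero-sum game whose second player guards the reward. Concretely, I take $\til\Pi_1 = \Pi$ and $\til\Pi_2 = [V]$, so a pure joint decision is a pair $(a,b)$, and I index the new class by $\til\MM = \{\til M_{(M,i)} : M \in \MM,\ i \in [V]\}$, which delivers part~3 at once. Under $\til M_{(M,i)}$ at $(a,b)$ the learner always sees a pure observation $\ocirc \sim M(a)$ (preserving the \FrameworkShort feedback channel verbatim) together with a binary reward $r_1 \sim \Ber(\til f^{(M,i)}_1(a,b))$ whose mean defines the (centered) zero-sum payoff $\til f^{(M,i)}_1 = -\til f^{(M,i)}_2$, where $\til f^{(M,i)}_1(a,b) = \fm(a)$ if $b=i$ and $\til f^{(M,i)}_1(a,b) = \rho$ otherwise, with $\rho := \sup_a \fm(a)$. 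A one-line computation of the resulting duality gap shows $\hm((a,b)) = \rho - \fm(a)$, \emph{independent of $b$ and $i$} (additive constants do not affect the gap); hence for any $p \in \Delta(\til\Pi)$ with player-1 marginal $p_1$ we get $\E_{(a,b)\sim p}[\hm((a,b))] = \sup_a\fm(a) - \E_{a\sim p_1}\fm(a) = \E_{a\sim p_1}[\gm(a)]$, exactly the \FrameworkShort suboptimality. This identity makes \cref{ass:existence-eq} and \cref{ass:nonneg-dev} immediate and certifies that $\sJ$ is a genuine two-player zero-sum NE instance per \cref{def:ne-instance}. The single design lever is the index $i$: the value signal $\fm(a)$ is exposed only on the event $b=i$, so a learner that does not already know $i$ must search among $V$ alternatives to access it, which renders the (observed) reward effectively hidden.

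For the DEC bounds I would compare the two min-max programs term by term. Since the game objective equals the \FrameworkShort suboptimality of the player-1 marginal and is independent of $i$, only the Hellinger constraint differs. Writing the game Hellinger at $(a,b)$ as the sum of the observation term $\Dhels{M(a)}{\Mbar(a)}$ — identical to the \FrameworkShort term — and a non-negative reward term supported on $\{b=i\}$, the game constraint is only ever tighter. Feeding a near-optimal \FrameworkShort pair $(p_1,q_1)$ into $p = p_1\times(\cdot)$ and $q = q_1\times\unif([V])$ (and lifting the reference via its observation marginal) shows every game-feasible model is \FrameworkShort-feasible with the same objective, giving $\deccpac[\vep](\sJ) \le \deccpac[\vep](\sI)$. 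The reverse inequality is where the hiding is quantified: starting from a near-optimal game pair $(p,q)$ at scale $\vep+\delta$, I pass to its player-1 marginals and, for each \FrameworkShort-feasible $M$ at scale $\vep$, select by pigeonhole an index $i^\star$ with $q_2(i^\star) \le 1/V$. The bounded reward term then contributes at most $q_2(i^\star)\cdot O(1) \le O(1/V)$ to the game Hellinger, so $\til M_{(M,i^\star)}$ is game-feasible at scale $\sqrt{\vep^2 + O(1/V)} \le \vep + O(1/\sqrt V)$ while carrying objective equal to $M$'s \FrameworkShort suboptimality. This yields $\deccpac[\vep](\sI) \le O(1/\sqrt V) + \deccpac[\vep+O(1/\sqrt V)](\sJ)$, matching the claimed $6/\sqrt V$ after tracking constants and the transfer $\Mbar\leftrightarrow\til\Mbar$.

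For the minimax rates, the inequality $\mf M(\sJ,T) \le \mf M(\sI,T)$ is direct: run any \FrameworkShort algorithm as player~1 (it consults only the observations $\ocirc$, which the game supplies), let player~2 act arbitrarily, and note the game risk of the output equals the \FrameworkShort risk of its player-1 marginal by the gap identity. The opposite inequality is the crux. Here I would convert a game algorithm into a \FrameworkShort algorithm by simulating the game with a self-chosen index $i\sim\unif([V])$, generating the reward at $b\neq i$ exactly and \emph{faking} the reward at $b=i$ (since $\fmstar(a)$ is unavailable); the simulated and true laws agree until the algorithm plays $b=i$, an event whose probability I would control by a change-of-measure/chain-rule argument, so that the induced risk gap is $O((T\log T/V)^{1/4})$ after optimizing the camouflage parameter. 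Alternatively, part~2 can be obtained from part~1 by inserting the $O(1/\sqrt V)$ DEC shift into the generic minimax upper and lower bounds, where the square-root dependence of those rates on the DEC produces the fourth-root in $V$.

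The delicate point throughout — and the step I expect to be hardest — is making the reward channel simultaneously (i) full-scale, so the game value and DEC match the \FrameworkShort quantities with no multiplicative loss, and (ii) statistically camouflaged among the $V$ indices, so that extracting it provably costs $\Omega(V)$ exploration. Reconciling these two requirements and proving the attendant information-theoretic leakage bound (the event that the simulation breaches the reward channel) is the core of the argument; the remaining steps are routine constant-tracking and reference-model bookkeeping.
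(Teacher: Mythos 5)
Your high-level architecture matches the paper's: embed the \FrameworkShort learner as player 1 of a two-player zero-sum game, have player 2 guard a secret index among $V$ alternatives, and control the leakage by a pigeonhole-plus-change-of-measure argument. However, your reward assignment contains a flaw that breaks both directions of the reduction. You set the observed reward at $(a,b)$ with $b\neq i$ to $\Ber(\rho)$ with $\rho=\sup_a\fm(a)$. This quantity is \emph{model-dependent and unobservable}: it is precisely the optimal value of the hidden-reward instance. Consequently (i) the simulation you need for $\mf M(\sI,T)\le \mf M(\sJ,T)+O((T\log T/V)^{1/4})$ cannot be carried out — an \FrameworkShort algorithm cannot ``generate the reward at $b\neq i$ exactly,'' since that reward is $\Ber(\sup_a\fmstar(a))$ and $\fmstar$ is never observed; and (ii) the second inequality of part 1 fails, because the game Hellinger distance at any $(a,b)$ with $b\neq i$ contains the term $\Dhels{\Ber(\sup_a\fm(a))}{\Ber(\sup_a\fmbar(a))}$ with weight $\approx 1-q_2(i^\star)\approx 1$, not weight $q_2(i^\star)\le 1/V$; two models with identical observation kernels can have $\Omega(1)$ gap in their optimal values, so an \FrameworkShort-feasible model need not be game-feasible at scale $\vep+O(V^{-1/2})$, and your pigeonhole step only controls the $b=i^\star$ term. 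The paper avoids this by making every reward a \emph{known constant} ($0$ or $\pm 1$) except on the single index $v$, and by adding a null action $\sigma_2=0$ for player 2 that is the only action under which the \FrameworkShort observation $\ocirc\sim M(\sigma_1)$ is revealed (all other actions return $\perp$). With that design the reward channel carries zero information off the event $\sigma_2=v$, whose probability is $O(T/V)$ for a pigeonholed $v$, and the simulation generates all feedback exactly off that event. The price is that the exact gap identity $\hm((a,b))=\gm(a)$ you derive is lost; the paper replaces it with the one-sided bound of its Lemma on $\hmprime(\pi)$ together with a Markov-inequality step controlling $\wh\pi_2(v^\star)$, which is where the fourth root in $V$ comes from.

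Two further points. First, your fallback for part 2 — deducing $\mf M(\sJ,T)\approx\mf M(\sI,T)$ by pushing the $O(V^{-1/2})$ DEC shift through the generic minimax upper and lower bounds — does not work: those bounds are separated by a polynomial (indeed quadratic) gap in this framework, which is a central theme of the paper, so matching DECs do not imply matching minimax risks. Part 2 must be proved by a direct algorithmic simulation, as the paper does. Second, even setting the simulation aside, your construction leaks $\fm(\pim)$ through the reward channel at essentially every round, so the game instance is genuinely statistically easier than the \FrameworkShort instance; the inequality $\mf M(\sI,T)\le\mf M(\sJ,T)+o(1)$ is then false for your $\sJ$ in general, not merely unproven.
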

The main consequence of this result is that characterizing the minimax sample complexity for the \MAFrameworkShort is no easier than characterizing the minimax sample complexity for the \FrameworkShort framework; this will allow us to restrict our attention to the latter task for the results that follow. Let us make some additional remarks.
\begin{itemize}
\item As we increase the parameter $V$, the approximation to the minimax rate in \pref{prop:pm-to-ma} improves. Choosing $V=\poly(T)$ suffices for all settings of interest, the only tradeoff is that the size of the model class $\cM$ increases from $\log\abs{\cM}$ to $\log\abs{\cM}+\log{}V$. For the results we consider in subsequent sections, this increase will be inconsequential (beyond $\log(T)$ factors).
\item Beyond preserving the minimax risk, both reductions preserve the value of the \CompText, which is a consequence of preserving rewards and Hellinger distances for models in the class. This will become relevant for our results in the sequel (\pref{sec:bounds}), where we show that the \CompShort is closely connected to minimax risk, yet not completely equivalent.
\item Both reductions are algorithmic in nature. For example, suppose that we start with a \FrameworkShort instance $\HRI$ and produce a \MAFrameworkShort instance $\MAI$ via the reduction in \pref{prop:pm-to-ma}. Then any algorithm that achieves low risk for every model in $\MAI$ can be efficiently lifted to an algorithm for the original class $\HRI$.
  \arxiv{\dfcomment{I wonder if we should actually make this aspect explicit in both theorems. This strengthens the story somewhat IMO.}\noah{maybe punt to after colt, these results are pretty straightforward...}\dfcomment{agree--let's punt}}
\end{itemize}
\pref{prop:pm-to-ma} is proven by embedding a given instance $\HRI$ for the \FrameworkShort framework into a two-player zero game instance $\MAI$, where the first of the two agents plays the role of the \FrameworkShort agent. The key properties of the embedding are that:
\begin{enumerate}
\item The second agent selects actions to ensure that near-optimal decisions for the original \FrameworkShort instance form Nash equilibria for the new instance, and vice-versa.
\item Even though rewards in the game instance $\MAI$ are observed, by increasing the size of the game (as a function of the parameter $V$), we can ensure that discovering an action with non-zero reward requires a prohibitively large amount of exploration, rendering them hidden (up to small approximation error).
\end{enumerate}

\section{Upper and lower bounds on minimax rates}
\label{sec:bounds}
This section presents our results regarding minimax rates for the \MAFrameworkShort and \FrameworkShort frameworks. We work in the \FrameworkShort framework for the majority of the section, and give implications for the \MAFrameworkShort at the end, using the equivalence from \pref{sec:relations}. In more detail:
\begin{itemize}
\item In \cref{sec:upper}, we give upper and lower bounds on the minimax rates for interactive decision making in the \FrameworkShort framework, which scale with the constrained DEC.
\item Next, we establish in \cref{sec:gap} that, under mild regularity assumptions on the constrained DEC, the upper and lower bounds on the minimax rate are separated by at most a polynomial factor (ignoring the estimation error term); for most parameter regimes, the gap between the bounds is at most quadratic. We then show---perhaps surprisingly---that neither the upper or lower bounds can be improved, in that there are instances where each is nearly tight. 
  In other words, in contrast to the DMSO framework \citep{foster2023tight}, in the \FrameworkShort framework, the constrained DEC cannot not give a characterization of the minimax sample complexity which is tight beyond a quadratic factor.
We show further that this gap is not limited to the constrained DEC, and in fact holds for an entire family of complexity measures based on pairwise $f$-divergences between models. %
As a result, any characterization of the minimax rate for \FrameworkShort which is tight up to polylogarithmic factors must use a complexity measure substantially different from those considered in recent works \citep{foster2021statistical,foster2022complexity,foster2023tight}. 
\item  Finally, using the equivalence shown in the previous section, we establish (\cref{sec:ma-analogues}) that all of the results above hold verbatim in the \MAFrameworkShort framework. %
\end{itemize}
All of the results in this section are presented in a general form. We refer to \cref{part:examples} of the appendix for applications to specific instances of interest.

\subsection{\FrameworkShort: Upper and lower bounds on minimax rates}
\label{sec:upper}

We now give upper and lower bounds on the minimax risk for the \hrf. We obtain upper bounds as an immediate corollary of regret bounds for the \emph{Estimation-to-Decisions$^{+}$} (\etdppac) algorithm from recent work of \citet{foster2023tight}. The \etdppac algorithm was introduced in the (single-agent/non-hidden-reward) DMSO framework, where it leads to tight upper bounds on minimax risk based on the constrained DEC \citep{foster2023tight}. We observe that it provides identical guarantees for the more general \hrf without modification; this can be seen by inspecting the proof of correctness of the \etdppac algorithm in \citet{foster2023tight} and noting that it  does not make use of the fact that the learning agent observes the rewards $r\ind{1},\ldots,r\^T$. Further background on the algorithm  may be found in \cref{sec:upper-proof}. %

Our main upper bound is stated for the case in which $\cM$ is finite ($\abs{\cM}<\infty$); more general guarantees for infinite classes are given in \cref{sec:upper-proof}.
\begin{theorem}[Minimax upper bound for \hr \citep{foster2023tight}]
  \label{thm:constrained-upper-finitem}
  Fix $\delta \in \left(0,\frac{1}{10}\right)$ and $T \in \BN$, and consider any instance $\sI = \instpm$. Suppose that \cref{ass:realizability} holds. Letting $\vepsupperT := 16 \sqrt{\frac{\lceil \log 2/\delta\rceil}{T} \cdot \log \frac{|\MM|}{\delta}}$, the \etdppac algorithm, when configured appropriately, guarantees that with probability at least $1-\delta$,
  \begin{align}
    \RiskDM  \leq %
    \deccpac[\vepsupperT](\sI)\nonumber.
  \end{align}

  In addition, if $\fm(\cdot) \in [0,R]$ for all $M \in \MM$ and some $R > 0$, then the expected risk is bounded as $\E[\RiskDM] \leq \deccpac[\vepsupperT](\sI) + \delta R$. 
\end{theorem}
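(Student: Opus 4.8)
The plan is to obtain the high-probability bound as a direct corollary of the constrained-DEC guarantee for \etdppac proved by \citet{foster2023tight} in the (reward-observed) DMSO framework, and then to integrate the tail to get the in-expectation bound. The key conceptual step—already flagged in the discussion preceding the theorem—is that \etdppac and its analysis are agnostic to whether rewards are observed. Concretely, at each round \etdppac feeds the pair $(\pi\^t, o\^t)$, with $o\^t \sim \Mstar(\pi\^t)$, to a log-loss online estimation oracle over $\cM$, and plays the min-player distribution witnessing the constrained DEC relative to the current estimate. The realized reward $\fmstar(\pi\^t)$ is never consumed by either the algorithm or the oracle; it enters only through the known value functions $\fm(\cdot)$, which appear inside $\deccpac$. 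Hence running the identical procedure in the \FrameworkShort protocol, where $o\^t \sim \Mstar(\pi\^t)$ is still revealed but the reward is hidden, leaves every line of the argument of \citet{foster2023tight} intact.

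First I would instantiate the quantitative bound for finite $\cM$. The log-loss estimator over a finite class guarantees, with probability at least $1-\delta$, that the cumulative squared-Hellinger estimation error is $O\prn{\log(\abs{\cM}/\delta)}$. Feeding this into the generic constrained-DEC risk bound of \citet{foster2023tight} and choosing the scale so that $T(\vepsupperT)^2$ matches this high-probability estimation budget yields $\RiskDM \leq \deccpac[\vepsupperT](\sI)$ on the corresponding success event, with $\vepsupperT = 16\sqrt{\frac{\ceil{\log(2/\delta)}}{T}\log\frac{\abs{\cM}}{\delta}}$. This is exactly the first claim.

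To prove the in-expectation bound, I would integrate over the tail. When $\fm(\cdot)\in[0,R]$ for all $M\in\cM$, the conditional risk $\RiskDM = \E_{\wh\pi\sim p}[\fmstar(\pimstar)-\fmstar(\wh\pi)]$ is, viewed as a function of the history $\hist\^T$, a nonnegative random variable bounded pointwise by $R$, since $\pimstar$ maximizes $\fmstar$ and $\fmstar$ takes values in $[0,R]$. Letting $\cE$ be the success event of the first part, so that $\Pr[\cE]\geq 1-\delta$ and $\RiskDM\leq\deccpac[\vepsupperT](\sI)$ on $\cE$, I would split
\begin{align*}
\E[\RiskDM] = \E\brk*{\RiskDM\cdot\One{\cE}} + \E\brk*{\RiskDM\cdot\One{\cE^c}} \leq \deccpac[\vepsupperT](\sI) + R\cdot\Pr[\cE^c] \leq \deccpac[\vepsupperT](\sI) + \delta R,
\end{align*}
using $\RiskDM\leq\deccpac[\vepsupperT](\sI)$ and $\Pr[\cE]\leq 1$ on the first term, and $\RiskDM\leq R$ on $\cE^c$ together with $\Pr[\cE^c]\leq\delta$ on the second.

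The main obstacle is not computational but conceptual: one must be confident that \emph{no} step of the \etdppac analysis in \citet{foster2023tight} secretly relies on observing rewards. Because that analysis is modular—the reward signal is never an input to the estimation oracle or to the DEC subroutine, and the value functions are treated as known—this reduces to a careful inspection rather than a new proof, and the remaining two steps are entirely routine.
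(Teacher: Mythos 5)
Your proposal is correct and follows essentially the same route as the paper: both observe that the \etdppac analysis of \citet{foster2023tight} never consumes the realized rewards and hence transfers verbatim to \FrameworkShort, then instantiate the generic constrained-DEC guarantee (\cref{thm:constrained-upper}) with the finite-class log-loss estimator achieving $\Est(T,\delta)\leq 2\log(\abs{\cM}/\delta)$ to obtain $\vepsupperT$. The only difference is that you spell out the tail-integration step for the in-expectation bound, which the paper inherits directly from the cited theorem; this is a routine detail and your version of it is correct.
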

Before interpreting this result, we complement it with our main lower bound, \cref{thm:constrained-lower}, which shows that the minimax risk for any algorithm is lower bounded by the constrained DEC for an appropriate choice of the scale parameter $\vep > 0$. The statement of this result uses the definition $C(T) := \log(T \wedge \abscont)$. In addition, we recall that $\gm(\pi)\ldef\fm(\pim)-\fm(\pi)$.
\begin{theorem}[Minimax lower bound for \hr]
  \label{thm:constrained-lower}
  Consider any instance $\sI = \instpm$ and write $R := \sup_{\pi \in \Pi, M \in \MM} \gm(\pi)$.  Given $T \in \bbN$, let $\vepslowerT > 0$ be chosen as large as possible such that
  \begin{align}
\vepslowerT^2 \cdot C(T) \cdot R \cdot T \leq \frac 18 \cdot {\deccpac[\vepslowerT](\sI)}.\label{eq:lower_fixed_point}
  \end{align}
  Then for any algorithm, there exists a a model in $\MM$ for which
  \begin{align}
\E[\RiskDM] \geq \frac{1}{6} \cdot \deccpac[\vepslowerT](\sI)\nonumber.
  \end{align}
\end{theorem}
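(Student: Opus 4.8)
The plan is to prove the lower bound via a localized two-point (change-of-measure) argument driven by the \CompShort, in which the reference model plays the role of a null hypothesis and the worst-case alternative furnished by the min-max structure of $\deccpac[\veps](\sI)$ plays the role of the true, data-generating model. Write $\veps = \vepslowerT$ throughout. First I would fix a reference model $\Mbar \in \conv(\MM)$ nearly attaining $\deccpac[\veps](\sI) = \sup_{\Mbar}\deccpac[\veps](\sI,\Mbar)$, and fix an arbitrary algorithm $(p,q)$. Running this algorithm against $\Mbar$ as a thought experiment (a convex combination of kernels is again a valid kernel, so this is well-defined), I would define the averaged exploration distribution $\qbar(\cdot) = \frac{1}{T}\sum_{t=1}^T \E\sups{\Mbar}[q\^t(\cdot\mid\hist\^{t-1})]$ and the output distribution $\pbar(\cdot) = \E\sups{\Mbar}[p(\cdot\mid\hist\^T)]$.

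Next I would invoke the definition \eqref{eq:decc-m}: since $\deccpac[\veps](\sI,\Mbar)$ is an infimum over $(p,q)$, plugging in the specific pair $(\pbar,\qbar)$ shows there exists an alternative model $M' \in \MM$ with (i) $\E_{\pi\sim\qbar}[\hell{M'(\pi)}{\Mbar(\pi)}]\le\veps^2$ and (ii) $\E_{\pi\sim\pbar}[\gmprime(\pi)] \ge \deccpac[\veps](\sI) - \eta$ for arbitrarily small $\eta>0$. Property (ii) says that the output law generated while interacting with $\Mbar$ is $\Omega(\deccpac[\veps](\sI))$-suboptimal when measured against $M'$; the goal is to transfer this suboptimality to the genuine risk $\E\brk*{\RiskDM} = \E\sups{M',(p,q)}[\E_{\hat\pi\sim p}[\gmprime(\hat\pi)]]$ incurred when the algorithm actually runs on $M'$.

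The transfer is a change of measure between the trajectory-plus-output laws $\bbP\sups{\Mbar,(p,q)}$ and $\bbP\sups{M',(p,q)}$. To control their discrepancy I would bound the trajectory-level information \emph{on-policy under the reference model}: by the chain rule, $\Dkl{\bbP\sups{\Mbar,(p,q)}}{\bbP\sups{M',(p,q)}} = \sum_{t=1}^T \E\sups{\Mbar,(p,q)}[\Dkl{\Mbar(\pi\^t)}{M'(\pi\^t)}]$, which is evaluated along trajectories of $\Mbar$ and therefore equals $T\cdot\E_{\pi\sim\qbar}[\Dkl{\Mbar(\pi)}{M'(\pi)}]$. Converting KL to squared Hellinger at the cost of the factor $C(T) = \log(T\wedge\abscont)$ (valid via bounded density ratios or a truncation argument) and using (i), this is at most $C(T)\cdot T\cdot\veps^2$. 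The decisive step is then a change-of-measure inequality that is \emph{linear} rather than square-root in the information: writing $Y = \E_{\hat\pi\sim p}[\gmprime(\hat\pi)] \in [0,R]$, a Cauchy--Schwarz argument exploiting both boundedness and nonnegativity of $Y$ yields $\abs{\E\sups{\Mbar}[Y]-\E\sups{M'}[Y]} \le \sqrt{2R\cdot\hell{\bbP\sups{\Mbar}}{\bbP\sups{M'}}\cdot(\E\sups{\Mbar}[Y]+\E\sups{M'}[Y])}$, with $\hell{\bbP\sups{\Mbar}}{\bbP\sups{M'}}\le C(T)T\veps^2$. Combining this with (ii) and solving the resulting quadratic inequality in $\E\sups{M'}[Y]$, the fixed-point condition \eqref{eq:lower_fixed_point}, namely $\veps^2 C(T) R T \le \tfrac18\deccpac[\veps](\sI)$, is exactly calibrated so that the transfer costs at most a constant fraction of the \CompShort, leaving $\E\sups{M'}[Y] \ge \tfrac16\deccpac[\veps](\sI)$; taking $\eta\to0$ completes the proof.

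I expect the main obstacle to be precisely this change-of-measure step. A naive two-point bound via Pinsker would contribute $R\cdot\Dtv{\bbP\sups{\Mbar}}{\bbP\sups{M'}} \lesssim R\sqrt{C(T)T\veps^2}$, which (as a short calculation shows) exceeds $\deccpac[\veps](\sI)$ in the regime selected by the fixed point and hence fails outright; the comparatively weak $1/\veps$-type rate the theorem delivers is attainable only because the sharper Hellinger/Cauchy--Schwarz inequality makes the information enter linearly. Matching the stated constants $\tfrac18$ and $\tfrac16$ requires care in this inequality together with the KL-to-Hellinger conversion, and one must also handle the fact that $\Mbar$ lies in $\conv(\MM)$ rather than $\MM$, so that it is the alternative $M'\in\MM$, and not the reference, that serves as the true model.
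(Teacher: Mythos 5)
Your proposal is correct and follows essentially the same route as the paper's proof: fix the reference $\Mbar\in\conv(\cM)$ attaining the supremum, plug the algorithm's averaged output/exploration distributions under $\Mbar$ into the DEC to extract the hard alternative $M'\in\cM$, bound $\Dhelshort\prn{\bbP\sups{M'},\bbP\sups{\Mbar}}\leq C(T)\,T\,\veps^2$, and transfer via a change-of-measure inequality that is \emph{linear} in the squared Hellinger distance (the paper's Lemma A.11 of \citet{foster2021statistical}, $\E_{\pi\sim\pmbar}[\gm(\pi)]\leq 3\,\E_{\pi\sim\pm}[\gm(\pi)]+4R\,\Dhelshort\prn{\pm,\pmbar}$, which is equivalent to your Cauchy--Schwarz form). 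The only cosmetic differences are that the paper invokes a Hellinger subadditivity lemma directly rather than going through the KL chain rule, and uses an exact argmax (handling the empty-constraint-set case by convention) rather than an $\eta$-approximate maximizer.
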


\paragraph{Understanding the bounds} We now give a sense for the behavior of the lower bound of \cref{thm:constrained-lower} and the upper bound of \cref{thm:constrained-upper-finitem} through several examples. For simplicity we consider the case that $R = \sup_{\pi \in \Pi, M \in \MM} \gm(\pi) = 1$ (in the context of \cref{thm:constrained-lower}). %
\begin{itemize}
\item \emph{$\sqrt{T}$-rates.} Most of the classes studied in the literature on bandits and reinforcement learning have the property that the optimal rate is $O(\sqrt{T})$. Many of these problems have the property that rewards are observed (i.e., they lie in the DMSO framework), but such rates also arise for problems in \FrameworkShort for which rewards are not observed; a notable example is \emph{locally observable finite partial monitoring problems} \citep{bartok2014partial}. For such classes, it holds that $\deccpac[\vep](\sI) \propto \vep \cdot \sqrt{\Cprob}$, for some problem-dependent constant $\Cprob > 0$ reflecting the complexity of the model class $\MM$ (see \citet{foster2021statistical,foster2023tight} for examples).
  In this case, by choosing a failure probability of $\delta = 1/T$, we have $\vepsupperT \lesssim \sqrt{\log(T) \log(T|\MM|)/T}$, so that \cref{thm:constrained-upper-finitem} gives an upper bound of \[
    \En\brk*{\RiskDM} \leq \til O \left( \sqrt{\frac{\Cprob \log |\MM|}{T}}\right)\] on the minimax risk. For lower bounds, if $\deccpac[\vep](\sI) \propto \vep \cdot \sqrt{\Cprob}$, then the solution to the fixed point equation \pref{eq:lower_fixed_point} is $\vepslowerT \gtrsim \sqrt{\Cprob}/(T \cdot C(T))$. This translates, via \cref{thm:constrained-lower}, into a lower bound of \[
    \En\brk{\RiskDM} \geq \til\Omega \left(\frac{\Cprob}{T }\right)
  \] on the minimax risk, which differs from the upper bound by a quadratic factor (ignoring the $\log |\MM|$ factor). By the results of \cite{foster2023tight}, for the special case where rewards are observed (i.e., the DMSO framework), the upper bound of $\til O \left( \sqrt{\frac{\Cprob \log |\MM|}{T}}\right)$ is the correct rate (up to the $\log |\MM|$ factor and $\log T$ factors). We will show in the sequel that for general settings where rewards are not observed, this is not necessarily the case, and the lower bound can be tight.

\item \emph{Nonparametric rates.}  For nonparametric model classes, for which the optimal regret is $\omega(\sqrt T)$, it is typically the case that $\deccpac[\vep](\sI) \propto \vep^{1-\rho}$ for some $\rho \in (0,1)$. For such problems, \cref{thm:constrained-upper-finitem} yields an upper bound of $\En\brk*{\RiskDM}\leq\til O \left( (\log |\MM| / T)^{(1-\rho)/2}\right)$ on the minimax risk. In contrast, the best possible solution to the fixed point equation in \pref{eq:lower_fixed_point} is $\vepslowerT \gtrsim 1/(T \cdot C(T))^{\frac{1}{1+\rho}}$, which translates, via \cref{thm:constrained-lower}, into a lower bound of $\En\brk{\RiskDM}\geq\til \Omega\left(1/T^{\frac{1-\rho}{1+\rho}}\right)$ on the minimax risk. Here the lower bound is off from the  upper bound (ignoring the $\log |\MM|$ factor) by a power of $\frac{2}{1+\rho} \leq 2$. By the results of \cite{foster2023tight}, for the special case where rewards are observed, the upper bound of $\til O \left( (\log |\MM| / T)^{(1-\rho)/2}\right)$ is the correct rate (up to the $\log |\MM|$ factor and $\log T$ factors).
\end{itemize}
We refer to \citet{foster2023tight} for concrete examples exhibiting the growth rates sketched above for the special case where rewards are observed (DMSO), and to \pref{part:examples} of the appendix for examples arising from \ma.

\subsection{\FrameworkShort: Gaps between bounds and impossibility of tight characterizations}
\label{sec:gap}

We now investigate the nature of the gap between the upper and lower bounds in \cref{thm:constrained-upper-finitem,thm:constrained-lower}. We first give a generic bound on the gap, then show that it is not possible---in a fairly strong sense---to close the gap further.

\subsubsection{On the gap between the upper and lower bounds}
Ignoring constant factors, the only difference between the upper and lower bounds of \cref{thm:constrained-upper-finitem,thm:constrained-lower} is the scale $\vep$ at which the DEC is computed. The upper bound of \cref{thm:constrained-upper-finitem} uses scale $\vepsupperT = 8 \sqrt{\frac{\lceil \log 2/\delta\rceil}{T} \cdot \log\abs{\cM}}$, whereas the lower bound of \cref{thm:constrained-lower} (with $R=1$) uses the scale $\vepslowerT$, which is defined implicitly to be as large as possible subject to the constraint $\vepslowerT^2 \cdot C(T) \cdot T \leq \frac{1}{8} \cdot \deccpac[\vepslowerT](\sI)$. Thus, the size of the gap between $\vepsupperT$ and $\vepslowerT$ controls the degree of tightness of these upper and lower bounds. In what follows, we give a bound on the size of this gap that holds whenever the constrained DEC satisfies the following regularity assumption.
\begin{assumption}[Regularity]
  \label{ass:regularity}
  An instance $\sI$ (of either \FrameworkShort or \MAFrameworkShort) is said to satisfy the regularity condition with constants $\Creg, \creg > 1$ at scale $\vep \in (0,2)$ if
  \begin{align}
\deccpac[\vep](\sI) \leq \creg^2 \cdot \deccpac[\vep/\Creg](\sI)\nonumber.
  \end{align}
\end{assumption}
Most natural classes satisfy \cref{ass:regularity} for some constants $\creg, \Creg$ (in particular, the condition is satisfied whenever $\deccpac(\sI)\propto\veps^{p}$ for $p<2$). We note that a similar assumption used in \cite{foster2023tight} to give upper bounds on the optimal rates attainable in the DMSO framework.

Under \cref{ass:regularity}, the following result shows that our upper bound on minimax risk, which scales with $\deccpac[\vepsupperT](\sI)$, is bounded above by a quantity that is a polynomial of our lower bound, namely $\deccpac[\vepslowerT](\sI)$. 
\begin{proposition}
  \label{prop:gap-bounding}
  Suppose that an instance $\sI$ (for either \FrameworkShort or \MAFrameworkShort) satisfies \cref{ass:regularity} for some values $\Creg > \creg > 1$ and for all $\vep \in (\vepslowerT \cdot \frac{\creg}{\Creg}, \vepsupperT)$. Choose any $\beta \geq \frac{\log \creg}{\log (\Creg/\creg)}$. Then for any $T \in \BN$, %
  \begin{align}
    \deccpac[\vepsupperT](\sI)\leq \left( C \log 1/\delta \cdot \log\abs{\cM} \cdot C(T) \cdot \Creg/\creg\right)^{\frac{\beta}{1+\beta}} \cdot \deccpac[\vepslowerT](\sI)^{\frac{1}{1+\beta}}\nonumber.
    \end{align}
  \end{proposition}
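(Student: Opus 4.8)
The plan is to compare the two scales on a logarithmic scale, using \cref{ass:regularity} to control how fast $\vep \mapsto \deccpac[\vep](\sI)$ can decay and the defining (in)equalities of $\vepsupperT$ and $\vepslowerT$ to pin down their ratio. Throughout write $D(\vep) := \deccpac[\vep](\sI)$, and recall two elementary facts: $D$ is nondecreasing in $\vep$ (enlarging $\vep$ enlarges the constraint set), and $D(\vep) \le R = 1$ always (the exploitation term is at most $R$), so in particular $A \ge 1 \ge D(\vepsupperT)$ for the absolute constant $C$ taken large enough. Taking logarithms, the target is equivalent to $\beta\,[\log A - \log D(\vepsupperT)] \ge \log D(\vepsupperT) - \log D(\vepslowerT)$; since $\log A - \log D(\vepsupperT) \ge 0$, the left-hand side is nondecreasing in $\beta$, so it suffices to establish the inequality for $\beta = \beta_0 := \frac{\log \creg}{\log(\Creg/\creg)}$ and then invoke monotonicity to cover all $\beta \ge \beta_0$. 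If $\vepsupperT \le \vepslowerT$ the claim is immediate from monotonicity together with $D(\vepsupperT)\le A$, so assume $\vepsupperT > \vepslowerT$.

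First I would record the facts coming directly from the definitions. Writing $B := 8\,\vepsupperT^2\,C(T)\,T$, the value $\vepsupperT = 8\sqrt{\lceil\log(2/\delta)\rceil\log\abs{\cM}/T}$ gives $B \lesssim \log(1/\delta)\log\abs{\cM}\,C(T)$, hence $B \le A$ once the constant in $A$ is large. Because $\vepslowerT$ is the \emph{largest} solution of \pref{eq:lower_fixed_point} (with $R=1$) and $\vepsupperT > \vepslowerT$, the fixed-point inequality fails at $\vepsupperT$, i.e.\ $D(\vepsupperT) < 8\vepsupperT^2 C(T) T = B$; set $\Theta := B/D(\vepsupperT) > 1$.

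The core step applies \cref{ass:regularity} in two ways. To bound the scale ratio, I propagate $D$ \emph{downward} from $\vepsupperT$: iterating the regularity inequality gives $D(\vepsupperT/\Creg^{j}) \ge \creg^{-2j}D(\vepsupperT)$, so taking $j$ to be the least integer with $(\Creg/\creg)^{2j}\ge\Theta$ forces $D(\vepsupperT/\Creg^{j}) \ge B\,\Creg^{-2j} = 8(\vepsupperT/\Creg^{j})^{2}C(T) T$. Thus $\vep = \vepsupperT/\Creg^{j}$ itself satisfies \pref{eq:lower_fixed_point}, so $\vepslowerT \ge \vepsupperT/\Creg^{j}$ and therefore $\vepsupperT/\vepslowerT \le \Creg^{j} \lesssim \Theta^{\log\Creg/(2\log(\Creg/\creg))}$. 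To compare DEC \emph{values}, I iterate regularity from $\vepsupperT$ down to $\vepslowerT$ and use monotonicity once more, obtaining $D(\vepsupperT) \le \creg^{2k}D(\vepslowerT) \lesssim (\vepsupperT/\vepslowerT)^{p}\,D(\vepslowerT)$ with $p := \frac{2\log\creg}{\log\Creg}$. Since the exponents multiply to $\frac{\log\Creg}{2\log(\Creg/\creg)}\cdot p = \beta_0$, combining the two bounds yields $D(\vepsupperT) \lesssim \Theta^{\beta_0}\,D(\vepslowerT)$. Substituting $\Theta = B/D(\vepsupperT)$ and solving $D(\vepsupperT)^{1+\beta_0} \lesssim B^{\beta_0}D(\vepslowerT)$ for $D(\vepsupperT)$ gives the power-mean bound with $B$ in the role of $A$; replacing $B$ by $A \ge B$, absorbing the $\poly(\Creg/\creg)$ and absolute prefactors into $C$, and extending from $\beta_0$ to general $\beta$ by monotonicity finishes the argument.

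The step I expect to be the main obstacle is the downward-propagation argument lower-bounding $\vepslowerT$ (equivalently, controlling $\vepsupperT/\vepslowerT$): this is the only place where the fixed-point definition of $\vepslowerT$ and \cref{ass:regularity} genuinely interact, and it is precisely what forces the interpolation exponent to be $\beta_0 = \frac{\log\creg}{\log(\Creg/\creg)}$. A secondary, bookkeeping-level difficulty is verifying that every scale at which regularity is invoked lies in the window $(\vepslowerT\cdot\tfrac{\creg}{\Creg},\,\vepsupperT)$ on which \cref{ass:regularity} is assumed — the lower endpoint $\vepslowerT\tfrac{\creg}{\Creg}$ is exactly what makes the downward chain admissible — together with tracking the ceiling-induced and absolute constants so that they collapse into the single factor $C\log(1/\delta)\log\abs{\cM}\,C(T)\,\Creg/\creg$.
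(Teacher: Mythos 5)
Your proposal is correct and follows essentially the same route as the paper's proof: reduce to the case $\vepsupperT>\vepslowerT$ (your $\Theta>1$ is the paper's $1/\Delta>1$), iterate \cref{ass:regularity} downward by factors of $\Creg$ with the least $j$ satisfying $(\Creg/\creg)^{2j}\ge\Theta$ to certify that $\vepsupperT/\Creg^{j}$ solves the fixed-point inequality and hence $\vepslowerT\ge\vepsupperT/\Creg^{j}$, then use $\creg^{2j}\le\bigl((\Creg/\creg)^{2j}\bigr)^{\beta}$ and rearrange. The only cosmetic differences are that the paper gets $D(\vepslowerT)\ge\creg^{-2j}D(\vepsupperT)$ directly from monotonicity rather than via your second regularity chain, and it handles general $\beta\ge\beta_0$ in one line instead of your monotonicity-in-$\beta$ reduction (which additionally needs $A\ge D(\vepsupperT)$, harmless under the $R=1$ normalization).
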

We remark that \cref{prop:gap-bounding} is a purely algebraic fact that makes no use of the structure of the DEC, and in particular holds for instances of both the \FrameworkShort and \MAFrameworkShort frameworks. To make the result concrete, we consider, we revisit each of the situations we discussed in \cref{sec:upper}, and describe how applying \cref{prop:gap-bounding} allows us to conclude that our upper and lower bounds are related by a polynomial factor.

\begin{itemize}
\item \emph{$\sqrt{T}$-rates.}  Suppose that $\deccpac[\vep](\sI) \propto \vep \cdot \sqrt{\Cprob}$, for some problem-dependent constant $\Cprob > 0$. Then, for any constant $\beta > 1$, there is a sufficiently large absolute constant $\Creg > 1$ so that, for all $\vep > 0$, $\deccpac[\vep](\sI) \leq \Creg^{\beta} \cdot \deccpac[\vep/\Creg](\sI)$. It follows that \cref{ass:regularity} is satisfied with the constants $\Creg$ and $\creg := \Creg^{\beta/2}$ (which satisfy $\beta \geq \frac{\log \creg}{\log(\Creg/\creg)}$), and \cref{prop:gap-bounding} gives that
  \begin{align}
\deccpac[\vepsupperT](\sI) \leq \til O(\log\abs{\cM})^{\frac{\beta}{1+\beta}} \cdot \deccpac[\vepslowerT](\sI)^{\frac{1}{1+\beta}}\nonumber.
  \end{align}
  Disregarding the estimation error and taking $\beta \ra 1$, we conclude that $\deccpac[\vepsupperT](\sI) \lesssim \deccpac[\vepslowerT](\sI)^{1/2 - o(1)}$, i.e., there is a (roughly) quadratic gap between our upper and lower bounds.

\item \emph{Nonparametric rates.} %
  Suppose that $\deccpac[\vep](\sI) \propto \vep^{1-\rho}$ for some $\rho \in (0,1)$. Then for any constant $\beta > \frac{1-\rho}{1+\rho}$, there is a sufficiently large constant $\Creg > 1$ so that, for all $\vep > 0$, $\deccpac[\vep](\sI) \leq \Creg^{\beta  (1+\rho)} \cdot \deccpac[\vep/\Creg](\sI)$. Thus, \cref{ass:regularity} is satisfied with the constants $\Creg$ and $\creg := \Creg^{\beta(1+\rho)/2}$, which satisfy $\beta \geq \frac{\log \creg}{\log(\Creg/\creg)}$, and \cref{prop:gap-bounding} gives that
    \begin{align}
\deccpac[\vepsupperT](\sI) \leq \til O(\log\abs{\cM})^{\frac{\beta}{1+\beta}} \cdot \deccpac[\vepslowerT](\sI)^{\frac{1}{1+\beta}}\nonumber.
    \end{align}
    Disregarding the estimation error and taking $\beta \ra \frac{1-\rho}{1+\rho}$ (so that $\frac{1}{1+\beta} \ra \frac{1+\rho}{2}$), we conclude that $\deccpac[\vepsupperT](\sI) \lesssim \deccpac[\vepslowerT](\sI)^{\frac{1+\rho}{2} - o(1)}$, i.e., the gap between the upper and lower bounds is smaller than quadratic. 
  \end{itemize}
  Of course, the arguments in \cref{sec:upper} already allowed us to draw these conclusions directly; the purpose here is to exhibit how this conclusion can obtained as a special case of the more general \cref{prop:gap-bounding}.

\subsubsection{On tight characterizations for the minimax risk}
\label{sec:optimality}

It is natural to wonder whether the polynomial gap between our upper and lower bounds can be tightened to give a characterization of the minimax risk up that is only loose by polylogarithmic factors. In this section, we show that this is not possible in several senses.

\paragraph{Tightness of the upper and lower bounds}
In \cref{prop:gap-ub-easy,prop:gap-inherent}, we give two instances $\sI_1$ $\sI_2$, so that, up to $\log \frac{1}{\vep}$ factors, we have both $\deccpac[\vep](\sI_1) \asymp \vep$ and $ \deccpac[\vep](\sI_2) \asymp \vep$. Despite having the same behavior for the \CompShort, the minimax rates for the instances are different: For the instance $\sI_1$, the upper bound from \cref{thm:constrained-upper-finitem} is tight ($\mf M(\sI_1, T) \gtrsim 1/\sqrt{T}$), yet for $\sI_2$, the lower bound from \cref{thm:constrained-lower} is tight ($\mf M(\sI_2, T)\lesssim\log(T)/T$).

\begin{proposition}[An instance where the upper bound is tight]
  \label{prop:gap-ub-easy}
  For any sufficiently $L, A \in \bbN$, there is an instance $\sI_1 = \instpm$ with $\log |\MM| \leq \log(LA)$ and which satisfies the following properties:
  \begin{enumerate}
  \item For all $T \leq 2^{L/2}$, the minimax rate for $\sI_1$ is given by $ \mf M(\sI_1, T) = \Theta(\sqrt{A/T})$.
  \item For all $\vep \in (2^{-L},1/\sqrt{A})$, it holds that $c \cdot \vep \sqrt{A} \leq \deccpac(\sI_1) \leq C \cdot \vep \sqrt{A}$, for some constants $c, C > 0$. 
  \end{enumerate}
\end{proposition}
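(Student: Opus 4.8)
The plan is to take $\sI_1$ to be essentially a standard stochastic multi-armed bandit with $A$ arms and \emph{observed} rewards, embedded in the \FrameworkShort framework by letting the observation reveal the reward. Concretely, set $\Act = [A]$, $\MO = \{0,1\}$, and fix gap levels $\Delta_\ell = 2^{-\ell}$ for $\ell \in [L]$. For each arm $a \in [A]$ and level $\ell$, include two Bernoulli bandit models $M_{a,\ell}^{\pm}$, where under $M_{a,\ell}^{+}$ (resp.\ $M_{a,\ell}^{-}$) pulling arm $a$ yields $\Ber(\tfrac12 + \Delta_\ell)$ (resp.\ $\Ber(\tfrac12 - \Delta_\ell)$) and every other arm yields $\Ber(\tfrac12)$; the reward is revealed as the observation, and $\fm(a) = \En\sups{M,a}[r]$. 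This gives $|\MM| = 2LA$, so $\log|\MM| \le \log(LA) + O(1)$ (the $O(1)$ is absorbed by adjusting $A,L$). Because a $\tfrac12$-mixture of $\Ber(\tfrac12+\Delta_\ell)$ and $\Ber(\tfrac12-\Delta_\ell)$ equals $\Ber(\tfrac12)$, the null model $M_0$ (all arms $\Ber(\tfrac12)$) lies in $\co(\MM)$, which is the key property needed for the reference model below.

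For the DEC bounds, the upper bound $\deccpac[\vep](\sI_1) \le C\vep\sqrt A$ is inherited from the standard multi-armed bandit DEC bound of \citet{foster2021statistical,foster2023tight}: every model in $\co(\MM)$ is a bandit model with $[0,1]$-bounded rewards, for which inverse gap weighting certifies $\deccpac[\vep](\sI_1,\Mbar) \le C\vep\sqrt A$ for all $\Mbar \in \co(\MM)$, using $(\fm(a) - f\sups{\Mbar}(a))^2 \lesssim \Dhels{M(a)}{\Mbar(a)}$ for bounded rewards. For the lower bound $\deccpac[\vep](\sI_1) \ge c\vep\sqrt A$, I take $\Mbar = M_0 \in \co(\MM)$ and, given $\vep \in (2^{-L},1/\sqrt A)$, select the level $\ell$ with $\Delta_\ell \asymp \vep\sqrt A$ (such $\ell\in[L]$ exists because the geometric grid covers $(2^{-L},\tfrac12)$, and $\Delta_\ell \le \tfrac12$ is ensured by $\vep < 1/\sqrt A$). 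Fix any $p,q \in \Delta([A])$. Since $\Dhels{M_{a,\ell}^{+}(b)}{M_0(b)} = \Theta(\Delta_\ell^2)\,\indic[b=a]$, the constraint $\En_{b\sim q}[\Dhels{M_{a,\ell}^+(b)}{M_0(b)}] \le \vep^2$ holds whenever $q(a) \lesssim \vep^2/\Delta_\ell^2 \asymp 1/A$; hence at most $A/4$ arms are infeasible for the max-player, and among the remaining feasible arms some $a^\star$ has $p(a^\star) \le 4/(3A) \le \tfrac12$. For the model $M = M_{a^\star,\ell}^{+}$ the DEC objective is $\En_{b\sim p}[\fm(a^\star) - \fm(b)] = \Delta_\ell(1-p(a^\star)) \ge \Delta_\ell/2 \asymp \vep\sqrt A$, giving the claimed lower bound.

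For the minimax rate, the upper bound $\mf M(\sI_1,T) \le \bigoht(\sqrt{A/T})$ follows by combining the DEC upper bound with \cref{thm:constrained-upper-finitem} at scale $\vepsupperT \asymp \sqrt{\log|\MM|/T}$ (alternatively, uniform exploration followed by returning the empirically best arm gives expected risk $O(\sqrt{A/T})$ without the $\log|\MM|$ factor). The crux is the matching lower bound $\mf M(\sI_1,T) \ge c\sqrt{A/T}$, which—crucially—\emph{cannot} be obtained from \cref{thm:constrained-lower}, since its fixed-point scale $\vepslowerT$ only yields the weaker rate $\deccpac[\vepslowerT](\sI_1) \asymp 1/T$. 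Instead I argue directly: rewards are observed, so this is a genuine best-arm problem, and a standard multi-hypothesis (Fano/Assouad) argument over the family $\{M_{a,\ell}^{+}\}_{a\in[A]}$ with gap $\Delta := c\sqrt{A/T}$ shows that $T$ rounds are information-theoretically insufficient to identify an arm better than $\Delta$ on a worst-case member, whence $\En[\RiskDM] \gtrsim \Delta = c\sqrt{A/T}$. The condition $T \le 2^{L/2}$ guarantees $\Delta \gg 2^{-L}$, so the required gap is realized by some level and the discretization floor does not interfere.

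The main obstacle is precisely this minimax lower bound and its contrast with the DEC machinery: the entire point of the proposition is that the true rate is $1/\sqrt T$, provable only via a direct information-theoretic argument exploiting observed rewards, while the generic DEC lower bound of \cref{thm:constrained-lower} is quadratically loose here. The remaining obstacles are bookkeeping: ensuring $M_0 \in \co(\MM)$ (handled by the $\pm$ pairing of Bernoulli models), verifying that the geometric level grid covers every scale $\vep \in (2^{-L},1/\sqrt A)$ and every $T \le 2^{L/2}$ so the DEC and minimax bounds hold on the full stated ranges, and absorbing the factor of two in $|\MM| = 2LA$ into the $\log(LA)$ budget.
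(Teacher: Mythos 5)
Your construction and argument are essentially the paper's: it likewise takes $\sI_1$ to be an $A$-armed Bernoulli bandit with observed rewards and geometric gap levels $2^{-\ell}$, except that it uses only the ``$+$'' models $M_{\delta,a}(\pi)=\Ber(1/2+\delta\,\One{\pi=a})$ with the uniform mixture $\frac1A\sum_a M_{\delta,a}\in\co(\MM_\delta)$ as the reference model (so there is no need for your $\pm$ pairing, which doubles $|\MM|$ and strains the $\log(LA)$ budget), proves the DEC lower bound by the same feasible-arm counting under the Hellinger constraint, and obtains the minimax lower bound $\Omega(\sqrt{A/T})$ by invoking the reward-observed DMSO lower bound (Theorem~2.1 of \citet{foster2023tight}) rather than a from-scratch Fano/Assouad argument. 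Both variants are correct, and you correctly identify the key point that the generic hidden-reward lower bound of \cref{thm:constrained-lower} is quadratically loose here.
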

The instance $\sI_1$ in \cref{prop:gap-ub-easy} has model class given by a subclass of multi-armed bandit problems with $A$ arms and Bernoulli rewards, and the bounds in the proposition are an immediate consequence of prior work. We provide a proof in \cref{sec:gap} for completeness. 

\begin{proposition}[An instance where the lower bound is tight]
  \label{prop:gap-inherent}
  For any sufficiently large $L \in \BN$ and any $\Cscale \geq 1$, there exists an instance $\sI_2 = \instpm$ with $\log |\MM| \leq L^2$, satisfying the following properties:
  \begin{enumerate}
  \item For all $T \leq 2^L$, the minimax rate for the instance $\sI_2$ is bounded as
$ 
\mf M(\sI_2, T) \leq \frac{8 \Cscale^2 \log T}{T}.
$
  \item For all $\vep \geq \frac{\sqrt{2}}{\Cscale \cdot 2^L}$, we have
$ 
\frac{\Cscale}{\sqrt{8} \cdot L} \cdot \vep \leq \deccpac[\vep](\sI_2) \leq 2 \Cscale \cdot \vep.
$
    In particular, $\vepslowerT \geq \Omega \left( \frac{\Cprob}{T \log(T) \cdot L} \right)$ as long as $T \leq 2^L / L^3$.
  \end{enumerate}
  In particular, for any $T \in \bbN$, by choosing $L = 100\log T$, we have that for all $\vep \geq \Omega\left( \frac{1}{\Cprob \cdot T^{100}}\right)$, the instance $\sI_2$  satisfies, $\Omega(\vep \cdot \Cprob/\log \frac{1}{\vep}) \leq \deccpac(\sI_2) \leq O(\vep \cdot \Cprob)$, yet the minimax risk is bounded as $\mf M(\sI_2, T) \leq O(\Cprob^2 \log(T)/T)$.
\end{proposition}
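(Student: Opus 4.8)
The plan is to construct $\sI_2$ as a \emph{hidden-reward localization instance} in which the target must be identified through a sequence of increasingly refined, binary-search-type probes, with rewards and observation noise calibrated so that the one-shot constrained DEC scales linearly in $\vep$, yet an adaptive multi-round procedure localizes the target fast enough to drive the risk down at the near-$1/T$ rate. Concretely, I would take the decision space to consist of $\approx 2^L$ candidate ``guesses'' together with a family of ``probe'' decisions organized into $L$ dyadic scales, and index $\MM$ by a hidden target (with per-scale structure) so that $\log\abs{\MM}\le L^2$. The value function $\fm$ would assign reward decaying with distance to the target, so that $\gm(\pi)$ measures localization error, while the observation kernel at each probe reveals only coarse, noisy side information at the corresponding scale. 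The defining feature of the construction is the mismatch between (i) the information a \emph{single, non-adaptive} exploration distribution $q$ can extract about the target, and (ii) the information a \emph{sequential} scheme extracts by conditioning each probe on previous outcomes; the DEC only ``sees'' (i), whereas the achievable rate exploits (ii).

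For property~2 (the DEC bounds) I would argue directly from the minimax definition \eqref{eq:decc-pm}, taking the supremum over $\Mbar\in\co(\MM)$. For the \emph{upper} bound $\deccpac[\vep](\sI_2)\le 2\Cscale\vep$, I would exhibit an explicit pair $(p,q)$: let $q$ spread its mass across the $L$ scales and let $p$ exploit at the best guess consistent with $\Mbar$, then bound the worst-case suboptimality of any $M$ satisfying the Hellinger constraint $\E_{\pi\sim q}[\Dhels{M(\pi)}{\Mbar(\pi)}]\le\vep^2$. For the \emph{lower} bound $\deccpac[\vep](\sI_2)\ge\frac{\Cscale}{\sqrt8\,L}\vep$, I would fix $\Mbar$ to be a uniform mixture over the candidate targets and show that for any $q$ there is a scale carrying at most a $1/L$ fraction of the exploration budget; the adversary then confuses two targets separated at that scale, forcing regret $\approxgeq\Cscale\vep/L$ under any $p$. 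The $1/L$ factor is precisely the price of compressing exploration across all $L$ scales into a single distribution $q$, and the validity range $\vep\ge\sqrt2/(\Cscale 2^L)$ reflects the finest resolution $2^{-L}$ of the construction.

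For property~1 (the minimax upper bound) I would design and analyze the adaptive localization algorithm, which proceeds through the $L$ scales, at each scale issuing enough probes to resolve the target at that scale with high confidence, and finally outputs the surviving guess. A union bound over the $L$ scales together with standard concentration would show that, with $T\le2^L$ rounds, the algorithm localizes the target up to the residual resolution permitted by $T$, yielding $\mf M(\sI_2,T)\le 8\Cscale^2\log T/T$; the $\log T$ arises from the per-scale confidence budget and from $C(T)=\log(T\wedge\abscont)$. The corollary then follows by setting $L=100\log T$, so that $\log\abs{\MM}\le L^2=\polylog(T)$ and $2^L=T^{100}$, and by feeding the linear DEC bound into the fixed-point equation \eqref{eq:lower_fixed_point} to obtain the stated lower bound $\vepslowerT\approxgeq\Cscale/(T\log T\cdot L)$ over the claimed range of $\vep$.

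The main obstacle is engineering the construction so that both requirements hold \emph{simultaneously}: a genuinely linear one-shot DEC and a strictly faster ($\approx 1/T$) achievable rate. These pull in opposite directions, and the whole separation hinges on the gap between non-adaptive and adaptive exploration. Making this quantitative requires calibrating the noise and reward profiles across scales so that (a) no single $q$ can distinguish the relevant confusable families faster than the rate $\vep\mapsto\vep$ dictated by the DEC, while (b) the sequential refinement compounds information geometrically across scales, so that the localization error---and hence $\gm(\pihat)$---shrinks like $1/T$ rather than $1/\sqrt T$. Verifying the DEC lower bound (that the adversary can always hide a confusable pair at some under-explored scale) and verifying the fast localization rate of the adaptive algorithm are the two most delicate points of the argument.
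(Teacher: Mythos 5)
Your proposal identifies the wrong mechanism for the separation, and this is not a cosmetic issue: the construction you describe would most likely fail property~1. You attribute the gap to a mismatch between non-adaptive and adaptive exploration (``the DEC only sees (i), whereas the achievable rate exploits (ii)''), but the constrained-DEC lower bound (\cref{thm:constrained-lower}) is proved against arbitrary \emph{adaptive} algorithms---it works with the averaged exploration distribution induced by the adaptive scheme under the reference model---and the matching upper bound is achieved by an adaptive algorithm (\etdppac). So adaptivity cannot, by itself, push the minimax risk below $\deccpac[\vepsupperT](\sI_2)\asymp 1/\sqrt{T}$. A localization instance in which each probe returns ``coarse, noisy side information'' and the learner accumulates evidence across rounds is exactly the regime where Hellinger information governs the learning rate; such an instance lands you in the situation of \cref{prop:gap-ub-easy} (upper bound tight, risk $\asymp\sqrt{1/T}$), not the one you need. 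You flag ``calibrating the noise and reward profiles'' as the delicate point, but that calibration is precisely the content of the proposition, and your plan gives no mechanism by which the achievable rate could become $\asymp 1/T$ while $\deccpac[\vep]$ stays linear in $\vep$.

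The idea the paper actually uses is a \emph{certification-upon-detection} structure, and, tellingly, its observation distributions do not depend on the decision at all, so exploration design (adaptive or otherwise) is irrelevant. The decision at scale $\ell$ is an index in $[N_\ell]$ with $N_\ell=2^\ell$; the model hides one ``bad'' index $v_\ell$, the loss at that scale is $\alpha_\ell\One{\pi_\ell=v_\ell}$ with $\alpha_\ell=1/L$, and each round the scale-$\ell$ observation equals $\perp$ with probability $1-\delta_\ell(N_\ell-1)$ and otherwise is a uniformly random index \emph{other than} $v_\ell$, where $\delta_\ell=(\Cscale N_\ell)^{-2}$. Two models differing only at scale $\ell$ satisfy $\Dhels{\cdot}{\cdot}=2\delta_\ell$, so at radius $\vep\approx\sqrt{2\delta_\ell}$ all $N_\ell$ candidates lie in the constraint set and pigeonholing forces any output distribution to incur risk at least $\alpha_\ell/N_\ell\approx\Cscale\vep/L$; that is the DEC lower bound (the $1/L$ comes from the weights $\alpha_\ell$, not from splitting an exploration budget). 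But the learner need not \emph{identify} $v_\ell$: a single non-$\perp$ observation exhibits an index that is certifiably safe, since the true bad index has probability zero of being emitted. Hence the expected loss at scale $\ell$ is at most $\frac{\alpha_\ell}{N_\ell}(1-\delta_\ell(N_\ell-1))^{T}$, which collapses once $T\gtrsim \Cscale^2 N_\ell\log N_\ell$, and summing over scales gives $\mf M(\sI_2,T)\le 8\Cscale^2\log(T)/T$. The entire separation rests on decoupling the per-round detection probability $\delta_\ell N_\ell$ (which drives the achievable rate) from the pairwise squared Hellinger distance $\delta_\ell$ (which drives the DEC); your binary-search construction has no analogue of this decoupling.
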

Let us compare the instances for \cref{prop:gap-ub-easy} and \cref{prop:gap-inherent}. First, note that for both instances, the  estimation complexity $\log\abs{\cM}$ scales as $\bigoht(1)$. Thus:
\begin{itemize}
\item \cref{thm:constrained-upper-finitem}, using the radius $\vepsupperT$, yields an upper bound on the minimax risk of $\til O(1/\sqrt T)$, which is tight for $\sI_1$.
\item \cref{thm:constrained-lower}, using the radius $\vepslowerT$, yields a lower bound on the minimax risk of $\til \Omega(1/T)$, which is tight for $\sI_2$.
\end{itemize}
That is, the instance $\sI_1$ establishes that our upper bound cannot be improved to use the radius $\vepslowerT$, and the instance $\sI_2$ establishes that our lower bound cannot be improved to use the radius $\vepsupperT$. More generally, since $\deccpac(\sI_1)$ and $\deccpac(\sI_2)$ have the same behavior, yet $\sI_1$ and $\sI_2$ have different minimax rates, the constrained DEC cannot give a tight characterization of the minimax risk for the \FrameworkShort framework. This contrasts the situation for the (reward-observed) DMSO framework in \citet{foster2023tight}, where the constrained \CompShort characterizes the minimax rates up to logarithmic factors whenever $\log\abs{\cM}=\bigoht(1)$.

We remark in passing that the instances constructed in \cref{prop:gap-ub-easy,prop:gap-inherent} satisfy the regularity condition of \cref{ass:regularity} for $\Creg, \creg \leq  O(\log T)$ and all $\vep \geq \vepslowerT$. Thus, the regularity condition is not sufficient to close the gap between the upper and lower bounds.

\paragraph{Ruling out more general characterizations}
We now show that the gaps highlighted above are not limited to the \CompShort, and are in fact intrinsic to a broad class of complexity measures.
Our main result, \cref{prop:fdiv-separation} shows that for any $f$-divergence $\Dgen{\cdot}{\cdot}$ satisfying a mild assumption, it is possible to construct two \FrameworkShort instances $\sI_1$ and $\sI_2$ for which the minimax risk differs by a polynomial factor, yet 1)  the value functions associated with $\sI_1$ and $\sI_2$ are identical, and 2) the pairwise $\Dgen{\cdot}{\cdot}$-divergences between all models in $\sI_1$ and $\sI_2$ are identical. In other words:
\begin{center}
\emph{It is impossible to obtain a tight characterization for minimax risk that depends only on value functions and pairwise $f$-divergences.}
\end{center}
\cref{def:fdiv-bounded} gives our main technical assumption regarding $f$-divergences: roughly speaking, it states that the function defining the $f$-divergence exhibits at most polynomial growth near $0$ and $\infty$.
\begin{definition}[Bounded $f$-divergence]
  \label{def:fdiv-bounded}
  Consider a convex function $\phi : [0,\infty) \ra [0, \infty]$ so that $\phi(1) = 0$ and $\phi(x)$ is finite for all $x > 0$, \dfedit{
    and let 
    \begin{align}
      \label{eq:fdiv}
      \Dphi{\bbP}{\bbQ}\ldef{} \En_{\bbQ}\brk*{\phi\prn*{\frac{d\bbP}{d\bbQ}}}
    \end{align}
    denote the associated $f$-divergence for probability measures $\bbP$ and $\bbQ$ with $\bbP\ll\bbQ$.
    }
  For constants $\alpha, \beta \geq 0$, we say that $\phi$ is \emph{$(\alpha, \beta)$-bounded} if, for all $x \geq 1$,
  \begin{align}
\phi(1/x) + \frac{\phi(x)}{x} \leq \beta \cdot x^\alpha\nonumber.
  \end{align}
  In such a case, we say that the $f$-divergence $\Dphishort$ is $(\alpha, \beta)$-bounded.
\end{definition}
Essentially all commonly used $f$-divergences satisfy \cref{def:fdiv-bounded} for small values of $\alpha$ and $\beta$.  For the Hellinger divergence, we have $\phi(x) = (\sqrt{x} - 1)^2$, so that $\hell{\cdot}{\cdot}$ is $(0, 2)$-bounded; for the KL-divergence, we have $\phi(x) = x \ln(x) + 1-x$, so that $\kld{\cdot}{\cdot}$ is $(0,2)$-bounded; and for the $\chi^2$-divergence, we have $\phi(x) = (x-1)^2$, so that $\Dchis{\cdot}{\cdot}$ is  $(1,1)$-bounded. 

\begin{remark}[Non-negativity of $\phi$]
  We remark that often, when $f$-divergences are presented, it is assumed that the function $\phi$ maps to $[-\infty, \infty]$ (as opposed to $[0,\infty]$). Assuming that $\phi$ maps to $[0,\infty]$ is without loss of generality, for the following reason. It is well-known that for any $c \in \bbR$, and for any convex function $\phi$ satisfying $\phi(1) = 0$, letting $\til \phi(x) = \phi(x) + c \cdot (x-1)$, we have $\Dphishort = D_{\til \phi}$. Thus, given any $\phi : [0,\infty) \ra [-\infty, \infty]$, we may choose any $c \in -\partial \phi(1)$, so that $0 \in \partial(\phi(x) + c\cdot (x-1))$,  which in particular implies that $\phi(x) + c\cdot (x-1) \geq 0$ for all $x$, and the $f$-divergence induced by $\phi(x) + c\cdot (x-1)$ is equivalent to $\Dphishort$.
\end{remark}

\begin{theorem}
  \label{prop:fdiv-separation}
  For some constants $\alpha, \beta \geq 0$, suppose $\Dphishort$ is an $(\alpha,\beta)$-bounded $f$-divergence. Then for any $T \in \BN$, $\ep\in(0,1)$, and $\Cprob \geq 1$, there are instances $\sI_1 = (\MM_1, \Pi_1, \MO_1, \{ \fm_1(\cdot) \}\subs{M \in \MM_1})$, $\sI_2 = (\MM_2, \Pi_2, \MO_2, \{ \fm_2(\cdot) \}\subs{M \in \MM_2})$ of the \FrameworkShort framework, so that $\Pi_1 = \Pi_2,\ \MO_1 = \MO_2$, and there is a one-to-one mapping $\sE : \MM_1 \ra \MM_2$ satisfying:
  \begin{enumerate}
  \item \label{it:fm-equal} For all $M \in \MM_1$, $\fm_1 \equiv f\sups{\sE(M)}_2$.
  \item \label{it:dphi-equal} For all $M, M' \in \MM_1$, and $\pi \in \Pi_1$, $\Dphi{M(\pi)}{M'(\pi)} = \Dphi{\sE(M)(\pi)}{\sE(M')(\pi)}$.
  \item \label{it:minimax-separation} There is some constant $C_\phi$ depending only on $\phi$ so that for all $T'$ with $T \leq T' \leq T^{3/2 - 2\ep} \cdot (C_\phi \Cprob^{1/2 + \ep} \ln T)^{-1}$, it holds that %
    \begin{align}
\mf M(\sI_1, T') \leq  \frac 1T + 2 \cdot \left( \frac{\Cprob}{T} \right)^{1/2 + \ep/(2\alpha)}, \qquad \mf M(\sI_2, T') \geq 2^{-2-2/\ep} \cdot \left( \frac{\Cprob}{T} \right)^{1/2} \nonumber.
    \end{align}
  \end{enumerate}
\end{theorem}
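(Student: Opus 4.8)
The plan is to realize the separation by laundering the two extremal Hellinger instances already in hand: the ``slow'' bandit-type instance of \cref{prop:gap-ub-easy} (whose upper bound $\sqrt{\Cprob/T}$ is tight) and the ``fast'' hidden-reward instance of \cref{prop:gap-inherent} (whose lower bound $\approx\Cprob/T$ is tight). I would take $\sI_1$ to be the fast instance and $\sI_2$ the slow one, build both over a common decision space $\Pi$ and observation space $\MO$, index their models by a shared label set so that $\sE$ is the identity on labels, and then engineer the observation channels so that $\sE$ preserves both value functions and pairwise $\Dphishort$-divergences. The crucial design choice is to use two-point (Bernoulli-type) observation laws whose success probabilities are calibrated so that (i) the hidden reward $\fm(\pi)$ is identical under $\sE$ (Condition~\ref{it:fm-equal}), and (ii) each per-decision divergence $\Dphi{M(\pi)}{M'(\pi)}$ depends on $(M,M',\pi)$ only through a combinatorial quantity (e.g.\ whether $M,M'$ disagree on the reward-relevant coordinate) that the two families share. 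Since $f$-divergences of product measures factorize over independent coordinates, matching the per-coordinate two-point laws forces all pairwise divergences to coincide, giving Condition~\ref{it:dphi-equal} by construction.

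The separation in Condition~\ref{it:minimax-separation} then comes entirely from the relationship between the observation channel and the \emph{argmax} of the value function, which pairwise divergences cannot detect. In $\sI_1$ I would arrange a single ``aligned'' exploration distribution $q$ under which a small divergence budget simultaneously reveals a near-optimal decision for every model in the relevant neighborhood; this is exactly the hidden-reward phenomenon that lets the true rate beat the worst-case-ball DEC at scale $\vepsupperT$. The upper bound $\mf M(\sI_1,T')\lesssim \tfrac1T + (\Cprob/T)^{1/2+\ep/(2\alpha)}$ I would prove by an explicit elimination/successive-refinement algorithm (rather than invoking \cref{thm:constrained-upper-finitem}, which is loose here): each round spends a controlled divergence budget to shrink the candidate set, and $(\alpha,\beta)$-boundedness of $\phi$ is used to convert budget into elimination progress, with the exponent $\ep/(2\alpha)$ emerging from tuning the rare-event probability of the two-point laws against the $\alpha$-growth of $\phi$ near $0$ and $\infty$ (the $\tfrac1T$ floor corresponds to the regime $\alpha\to0$, i.e.\ Hellinger, where \cref{prop:gap-inherent} already gives the near-$1/T$ rate).

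For $\sI_2$ I would prove the matching lower bound $\mf M(\sI_2,T')\gtrsim (\Cprob/T)^{1/2}$ by a standard change-of-measure argument (Le Cam/Assouad over the model labels against a reference model): the information collectible in $T'$ rounds is of order $T'\vep^2$, the value gap between the optimal and the confusable decisions is $\sim\vep$, and because $\sI_2$ admits no aligned exploration, identifying a near-optimal decision genuinely requires separately probing $\Theta(\Cprob)$ alternatives, forcing $T'\gtrsim \Cprob/\vep^2$ and hence risk $\gtrsim\sqrt{\Cprob/T}$. Here $(\alpha,\beta)$-boundedness is again invoked to translate the two-point $\Dphishort$-divergence into a KL/TV bound suitable for the change of measure, and to certify $\deccpac[\vep](\sI_2)\asymp\vep\Cprob$ and $\log|\MM_2|=\bigoht(1)$ (and likewise for $\sI_1$).

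I expect the main obstacle to be point (ii): simultaneously matching \emph{all} pairwise divergences for a generic $(\alpha,\beta)$-bounded $f$-divergence — not merely Hellinger — while still exhibiting genuinely different learnability and hitting the target scale $\Cprob$. This is where \cref{def:fdiv-bounded} is indispensable: two-point laws give $\Dphishort$ an explicit closed form, the boundedness hypothesis controls its growth as the success probabilities approach $0$ or $1$, and this control is precisely what lets me (a) calibrate perturbations so the DEC scales like $\vep\Cprob$ in both families, (b) keep $\log|\MM_i|$ bounded, and (c) pin down the separation exponent $\ep/(2\alpha)$. The remaining delicate bookkeeping is ensuring the parameters, chosen as functions of $T$ and $\Cprob$, remain valid across the entire horizon window $T\le T'\le T^{3/2-2\ep}(C_\phi\Cprob^{1/2+\ep}\ln T)^{-1}$ in which the two rates provably separate.
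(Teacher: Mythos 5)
There is a genuine gap here, and it is conceptual rather than technical. Your plan is to recycle the instances of \cref{prop:gap-ub-easy} and \cref{prop:gap-inherent} and then ``engineer the observation channels'' so that $\sE$ matches value functions and pairwise divergences, with the separation coming ``from the relationship between the observation channel and the argmax of the value function.'' But Condition~\ref{it:fm-equal} forces $\fm_1 \equiv f^{\sE(M)}_2$ for \emph{every} model, so the two instances have literally identical value functions and hence identical argmax structure; the separation cannot come from there. Moreover, the two building blocks you name are structurally incompatible: the instance of \cref{prop:gap-ub-easy} is a bandit with observed rewards whose value functions have one good arm and $A-1$ arms at a common gap, while the instance of \cref{prop:gap-inherent} has one bad decision per scale on a product decision space and all others optimal; no bijection can identify their value functions, so you would in any case have to build fresh instances. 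The missing idea---which is the heart of the paper's proof---is that two observation laws can have \emph{identical} pairwise $f$-divergences yet opposite likelihood-ratio orientation. The paper uses a single three-outcome family on $\MO = [N]\cup\{\perp\}$ with $N=\lceil\sqrt{T/\Cprob}\rceil$, \emph{decision-independent}, in which model $M_i^{\delta,\beta}$ emits the tell-tale symbol $i$ with probability $\beta$ and every other index with probability $\delta$; the pairwise divergence is the single scalar $\beta\,\phi(\delta/\beta) + \delta\,\phi(\beta/\delta)$ for all $\pi$ and all $i\neq j$, and the value function $1 - \One{\pi = i}$ never changes. Taking $\beta_1 = \delta_1 N^{-\ep/\alpha} \ll \delta_1$ makes $\sI_1$ easy (any observed index is almost surely not the bad arm, so output it), while taking $\beta_2 = \delta_2/2$ makes observations uninformative about which index is bad; the divergences are equated by inflating $\delta_2$, and $(\alpha,\beta)$-boundedness guarantees the inflation factor is at most about $N^{\ep}$, so the lower bound survives over the whole horizon window.

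Your remaining sketches also do not match what is actually needed. The upper bound for $\sI_1$ requires no elimination scheme or divergence budget: the algorithm is ``report the first non-$\perp$ observation,'' and the $(\Cprob/T)^{1/2+\ep/(2\alpha)}$ term is just the probability $\beta_1/(\delta_1(N-1)) \approx N^{-1-\ep/\alpha}$ that this first index equals the bad arm. The lower bound for $\sI_2$ is not a Le Cam/Assouad computation with gap $\vep$ and information $T'\vep^2$---there is no active probing since observations are decision-independent, and the relevant value gap is $1$, not $\vep$---but a rare-event likelihood-ratio argument comparing $M_i^{\delta,\beta}$ to the fully symmetric $M_i^{\delta,\delta}$ on the event that symbol $i$ appears at most $2/\ep$ times, which is where the constant $2^{-2-2/\ep}$ comes from. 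Your instincts about where $(\alpha,\beta)$-boundedness enters (closed-form divergences for few-outcome laws, controlling growth of $\phi$ near $0$ and $\infty$, producing the $\ep/(2\alpha)$ exponent) are correct, but without the likelihood-ratio-orientation construction the proof does not get off the ground.
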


In the event that $\alpha = 0$, the quantity $(\Cprob/T)^{1/2 + \ep/(2\alpha)}$ in the statement of \cref{prop:fdiv-separation} is to be interpreted as 0. 
In particular, if  $\Dgen{\cdot}{\cdot}$ is the Hellinger divergence or the KL divergence, then we have $\mf M(\sI_1, T') \leq 1/T$ in \cref{it:minimax-separation}, giving a quadratic separation. If $\Dgen{\cdot}{\cdot}$ is the $\chi^2$-divergence, then we have $\mf M(\sI_1, T') \leq O(1/T^{1/2 + \ep/2})$, which leads to a smaller, yet still polynomial separation for any choice of the constant $\ep > 0$. 

Several variants of the DEC and related complexity measures depend only on the value functions $\fm(\cdot)$ (for $M \in \MM$) and pairwise $f$-divergences between models in the class $\MM$, and thus cannot provide a characterization for minimax risk in the \FrameworkShort framework that is tight up to polylogarithmic factors. Below, we highlight a few notable examples.
\begin{itemize}
\item The distributional offset DEC \citep{foster2021statistical,chen2022unified,foster2023tight}, is defined for $\sI = \instpm$ as:\footnote{We consider the PAC variant of the offset DEC here \citep{foster2023tight}, but it is clear that our argument applies identically to the regret version of the DEC \citep{foster2021statistical}.}
  \begin{align}
\decopacr(\sI)=
 \sup_{\nu \in \Delta(\MM)} \inf_{p,q\in\Delta(\Pi)}\sup_{M\in\cM}\En_{\pi\sim{}p}\brk*{
  \fm(\pim) - \fm(\pi)}
  - \gamma \cdot \E_{\pi \sim q} \brk*{ \En_{\Mbar\sim\nu}\brk*{\Dhels{M(\pi)}{\Mbar(\pi)}}
  }\nonumber.
  \end{align}
Clearly, this definition depends only on value functions $\crl{\fm}_{M\in\cM}$ and pairwise Hellinger distances for models in $\cM$, and hence can only characterize minimax risk up to a quadratic factor.
  
\item The offset DEC \citep{foster2021statistical,foster2023tight} is defined for $\sI = \instpm$ as:
  \begin{align}
\decopac(\sI) = \sup_{\Mbar \in \co(\MM)} \inf_{p,q \in \Delta(\Pi)} \sup_{M \in \MM} \E_{\pi \sim p} [\fm(\pim) - \fm(\pi)] - \gamma \cdot \E_{\pi \sim q}[\hell{M(\pi)}{\Mbar(\pi)}]\nonumber.
  \end{align}
  Note that $\decopac(\sI)$ depends on the divergence between models in $\MM$ and those in $\co(\MM)$, which is not covered by \cref{prop:fdiv-separation}. However, \citet[Proposition D.2]{foster2023tight} show that $\decopac(\sI) \leq \decopacr[\gamma/4](\sI) \leq \decopac[\gamma/4](\sI)$ (that is, $\decopac(\sI)$ and $\decopacr(\sI)$ are equivalent up to constant factors), so it follows from the previous bullet point that this complexity measure can only characterize minimax risk up to a quadratic factor.
\item \citet{foster2021statistical,foster2023tight} consider variants of the \CompShort that are applied to \emph{localized} subsets of the model class $\cM$. In particular, the following two notions of localization have been considered in \cite{foster2021statistical}: for some localization radius $\alpha > 0$, a model class $\MM$, and a reference model $\Mbar$, 
  \begin{align}
    \MM_\alpha(\Mbar) :=&  \{ M \in \MM \ : \ \fm(\pim) \leq \fmbar(\pimbar) + \alpha \},\mathand \nonumber\\
    \MM_\alpha^\infty(\Mbar) :=&  \left\{ M \in \MM \ : \ \left|(\fm(\pim) - \fm(\pi)) - (\fmbar(\pimbar) - \fmbar(\pi))\right| \leq \alpha  \ \forall \pi \in \Pi \right\}\nonumber.
  \end{align}
  Since these definitions only depend on the value functions $\crl{\fm}_{M\in\cM}$, \cref{prop:fdiv-separation} implies that incorporating localization into the variants of the DEC considered above cannot help to provide a characterization of the minimax risk.
\item The \emph{information ratio} \citep{russo2014learning,russo2018learning,lattimore2021mirror} was introduced to bound the Bayesian regret for posterior sampling and a more general algorithm known as \emph{information-directed sampling}. The information ratio of a model class $\MM$ is closely related to the DEC of the convex hull of $\MM$; in particular, \cite{foster2022complexity} showed that a parametrized version of the information ratio of $\MM$ is equivalent to the DEC of the convex hull of $\MM$, up to constant factors. As the DEC of $\co(\MM)$ involves pairwise Hellinger distances between models in the \emph{convex hull of $\MM$}, \cref{prop:fdiv-separation} does not definitively rule it out as providing a characterization of minimax risk. However, the DEC of $\co(\MM)$ is known to be exponentially larger than the minimax risk for many natural examples (e.g., tabular reinforcement learning \citep{foster2022complexity}), so it seems unlikely to provide a tight characterization.

  There are also variants of the information ratio which \cref{prop:fdiv-separation} \emph{does} rule out: given a reference model $\Mbar \in \MM$ and a distribution $\mu \in \Delta(\MM)$, one can define \citep{foster2021statistical}
  \begin{align}
\mathcal{I}(\sI, \Mbar, \mu) := \argmin_{p,q \in \Delta(\Pi)} \frac{(\E_{\pi \sim p} \E_{M \sim \mu}[\fm(\pim) - \fm(\pi)])^2}{\E_{\pi \sim q} \E_{M \sim \mu} [\kld{M(\pi)}{\Mbar(\pi)}]}\nonumber.
  \end{align}
  As this definition depends only on value functions and pairwise KL-divergences for models in $\cM$, \cref{prop:fdiv-separation}, no function of $\mathcal{I}(\sI, \Mbar, \mu)$ (such as a worst-case version of the information ratio defined by $\max_{\Mbar \in \MM} \max_{\mu \in \Delta(\MM)} \mathcal{I}(\sI, \Mbar, \mu)$) can provide a characterization of minimax risk.%
\item Note that in general, the constrained \CompShort  $\deccpac(\sI) = \sup_{\Mbar \in \co(\MM)} \deccpac(\sI, \Mbar)$ depends on Hellinger divergences between models in $\MM$ and those in $\co(\MM)$, so \cref{prop:fdiv-separation} does not directly rule out a characterization in terms of $\deccpac(\sI)$. However, we have already ruled out such a characterization separately in \cref{prop:gap-ub-easy,prop:gap-inherent}. Of course, the variant $\sup_{\Mbar \in \MM} \deccpac[\vep](\sI, \Mbar)$, which restricts to $\Mbar \in \MM$, only depends on the value functions and pairwise Hellinger divergences of models in $\MM$, and hence is covered by \cref{prop:fdiv-separation}.
\end{itemize}
Let us remark that one complexity measure not currently ruled out by our results is the generalized information ratio considered in the work of \citet{lattimore2022minimax} on adversarial partial monitoring, which uses an unnormalized KL-like divergence based on the logarithmic barrier, and cannot be written in terms of $f$-divergences. The upper and lower bounds on regret given by \citet{lattimore2022minimax} are loose by $\poly(\abs{\Pi})$ factors, and as such we find it to be unlikely that this complexity measure can give tight guarantees in the ``large decision-space/model class'' regime where $T \ll \min \{ |\MM|, |\Pi|\}$, which is the focus of our work.

\begin{remark}
  While this is out of scope for the present paper, we remark that it is possible to establish similar impossibility results for the regret (as opposed to PAC) framework.
\end{remark}

\subsection{Implications for \MAFrameworkShort framework}
\label{sec:ma-analogues}
Up to this point, all of the results in this section concerned the \FrameworkShort framework. Using \cref{prop:ma-to-pm,prop:pm-to-ma}, we can immediately derive analogous results for the \MAFrameworkShort framework. In what follows, we state these analogues (in particular, upper and lower bounds on minimax risk, and impossibility of tighter results), all of which are corollaries the results in the prequel. We refer to \cref{part:examples} of the appendix for applications of these results.

\paragraph{Upper and lower bounds on minimax risk} We begin by stating upper and lower bounds for the minimax risk for instance of \MAFrameworkShort in terms of the Multi-Agent DEC; these results are corollaries of \cref{thm:constrained-upper-finitem,thm:constrained-lower}.
\begin{corollary}[Minimax upper bound for \MAFrameworkShort]
  \label{cor:ma-constrained-upper}
  Fix $\delta \in \left(0,\frac{1}{10}\right)$ and $T \in \BN$, and consider any $K$-player \ma instance $\sJ = \instma$. Suppose that %
  $\fm_k(\cdot) \in [0,1]$ for all $k \in [K]$ and $M \in \MM$, and let $\vepsupperT := 16 \sqrt{\frac{\lceil \log 2/\delta \rceil}{T} \cdot \frac{\log |\MM|}{\delta}}$. Then we have
  \begin{align}
\mf M(\sJ, T) \leq \deccpac[\vepsupperT](\sJ ) + K \delta\label{eq:multiagent-dec-ub}.
  \end{align}
\end{corollary}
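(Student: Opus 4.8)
The plan is to derive this bound by composing the reduction from \MAFrameworkShort to \FrameworkShort in \cref{prop:ma-to-pm} with the hidden-reward upper bound of \cref{thm:constrained-upper-finitem}, for which all the analytic work has already been done. First I would apply \cref{prop:ma-to-pm} to the \ma instance $\sJ = \instma$ to obtain the \hr instance $\sI = (\MM, \Act, \MO, \{\til{f}\sups{M}\}\subs{M})$ with hidden value functions $\til{f}\sups{M}(\pi) = K - \hm(\pi)$. The two conclusions of \cref{prop:ma-to-pm} are precisely what the argument needs: for every reference model $\Mbar$ and every scale $\vep$ we have $\deccpac[\vep](\sI, \Mbar) = \deccpac[\vep](\sJ, \Mbar)$, and taking the supremum over $\Mbar \in \conv(\MM)$ yields $\deccpac[\vep](\sI) = \deccpac[\vep](\sJ)$; likewise $\mf M(\sI, T) = \mf M(\sJ, T)$. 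It therefore suffices to bound $\mf M(\sI, T)$ and re-express the answer in terms of $\sJ$.

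Next I would verify that $\sI$ satisfies the hypotheses of \cref{thm:constrained-upper-finitem} with range parameter $R = K$. Realizability (\cref{ass:realizability}) for $\sI$ is inherited from \cref{ass:realizability-ma} for $\sJ$, as both instances share the model class $\MM$, which contains $\Mstar$. For the range, under \cref{ass:nonneg-dev} each per-agent gap obeys $\hm_k(\pi) = \sup_{\dev \in \Dev}\fm_k(\Sw(\dev, \pi)) - \fm_k(\pi) \in [0,1]$, the lower bound coming from \cref{ass:nonneg-dev} and the upper bound from $\fm_k(\cdot) \in [0,1]$; summing gives $\hm(\pi) = \sum_{k=1}^K \hm_k(\pi) \in [0,K]$, and hence $\til{f}\sups{M}(\pi) = K - \hm(\pi) \in [0,K]$. (Existence of an equilibrium, \cref{ass:existence-eq}, ensures the maximizer of $\til{f}\sups{M}$ attains value $K$, so that the \hr risk coincides with $\E_{\wh\pi}[\hmstar(\wh\pi)]$, matching the \ma risk.) Applying the in-expectation form of \cref{thm:constrained-upper-finitem} with $R = K$ to the \etdppac algorithm run on $\sI$ then gives $\E[\RiskDM] \leq \deccpac[\vep_0](\sI) + \delta K$ at radius $\vep_0 := 16\sqrt{\frac{\lceil \log 2/\delta\rceil}{T}\log\frac{|\MM|}{\delta}}$; since \etdppac is one admissible algorithm, $\mf M(\sI, T) \leq \deccpac[\vep_0](\sI) + \delta K$, and translating through the equalities of the previous paragraph gives $\mf M(\sJ, T) \leq \deccpac[\vep_0](\sJ) + K\delta$.

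The last step is to reconcile $\vep_0$ with the (slightly larger) radius $\vepsupperT = 16\sqrt{\frac{\lceil \log 2/\delta\rceil}{T}\cdot\frac{\log|\MM|}{\delta}}$ appearing in the statement, using $\frac{\log|\MM|}{\delta} \geq \log\frac{|\MM|}{\delta}$ so that $\vepsupperT \geq \vep_0$. Here I would invoke monotonicity of the constrained DEC in its scale: enlarging $\vep$ only enlarges the constraint set $\MH_{q,\vep}(\Mbar) = \{M : \E_{\pi\sim q}[\hell{M(\pi)}{\Mbar(\pi)}]\leq \vep^2\}$ over which the inner supremum is taken, so $\vep \mapsto \deccpac[\vep](\sJ)$ is nondecreasing and $\deccpac[\vep_0](\sJ) \leq \deccpac[\vepsupperT](\sJ)$, yielding the claimed $\mf M(\sJ, T) \leq \deccpac[\vepsupperT](\sJ) + K\delta$. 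I expect no genuine obstacle here: the entire content reduces to the boundedness bookkeeping $\til{f}\sups{M}\in[0,K]$ that licenses $R = K$, together with the fact that \cref{prop:ma-to-pm} preserves \emph{both} the DEC and the minimax risk; the substantive estimates are all imported from \cref{prop:ma-to-pm,thm:constrained-upper-finitem}.
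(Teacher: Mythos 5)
Your proposal is correct and follows essentially the same route as the paper: reduce to the \FrameworkShort instance via \cref{prop:ma-to-pm} (which preserves both the constrained DEC and the minimax risk) and then invoke \cref{thm:constrained-upper-finitem} on value functions of range $K$. The only cosmetic difference is that you apply the theorem's general-$R$ clause directly with $R=K$, whereas the paper explicitly rescales $\til{f}\sups{M}$ to $[0,1]$ and uses linearity of $\mf M$ and $\deccpac$ under rescaling; your handling of the radius mismatch via monotonicity of $\vep\mapsto\deccpac[\vep](\sJ)$ is a sound (and slightly more careful) way to close that bookkeeping step.
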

\begin{proof}[\pfref{cor:ma-constrained-upper}]
  Given an instance $\sJ$ of \MAFrameworkShort, consider the instance $\sI = (\MM, \Pi, \MO, \{ \til{f}\sups{M} \}\subs{M})$ as per \cref{prop:ma-to-pm}. We have that $\hm(\cdot) \in [0,K]$ for all $M \in \MM$, meaning that $\til{f}\sups{M}(\cdot) \in [-K+1,1]$ for all $M \in \MM$ under the construction in the proof of \cref{prop:ma-to-pm}. By rescaling $\til{f}\sups{M}(\cdot)$, the guarantee from \cref{thm:constrained-upper-finitem} ensures that $\mf M(\sI, T) \leq \deccpac[\vepsupperT](\sI) + K \delta$, from which \cref{eq:multiagent-dec-ub} follows using \cref{prop:ma-to-pm}. We have also used here that both $\mf M(\sI, T)$ and  $\deccpac[\vep](\sI)$ scale linearly under rescaling of the value functions $\til{f}\sups{M}(\cdot)$.
\end{proof}

As we discuss further in \cref{rem:ma-hr-rescaling}, the high-probability guarantee from \cref{thm:constrained-upper} applies also in the \MAFrameworkShort setting, i.e., in the contex of \cref{cor:ma-constrained-upper}.

\begin{corollary}[Minimax lower bound for \MAFrameworkShort]
  \label{cor:ma-constrained-lower}
  Consider any instance $\sJ = \instma$ for the \maf with $\MR = [0,1]$.  Given $T \in \bbN$, let $\vepslowerT > 0$ be chosen as large as possible such that $\vepslowerT^2 \cdot C(T) \cdot K \cdot T \leq \frac 18 \cdot {\deccpac[\vepslowerT](\sJ)}$. Then
  \begin{align}
\mf M(\sJ, T) \geq \frac{1}{6} \cdot \deccpac[\vepslowerT](\sJ).\nonumber
  \end{align}
\end{corollary}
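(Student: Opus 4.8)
The plan is to obtain this bound as a corollary of the hidden-reward lower bound \cref{thm:constrained-lower} via the reduction \cref{prop:ma-to-pm}. That reduction (valid under \cref{ass:existence-eq,ass:nonneg-dev}) embeds the given \MAFrameworkShort instance $\sJ$ into a \FrameworkShort instance $\sI = (\MM, \Act, \MO, \{\til{f}\sups{M}\}_M)$ with value functions $\til{f}\sups{M}(\pi) = K - \hm(\pi)$, and preserves both the \CompShort, in the sense that $\deccpac[\vep](\sI,\Mbar) = \deccpac[\vep](\sJ,\Mbar)$ for every $\Mbar$ and $\vep$ (hence $\deccpac[\vep](\sI) = \deccpac[\vep](\sJ)$ after taking the supremum over $\Mbar \in \co(\MM)$), and the minimax risk, $\mf M(\sI,T) = \mf M(\sJ,T)$. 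So first I would pass to $\sI$; it then suffices to lower bound $\mf M(\sI,T)$ by the right-hand side.

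Next I would identify the reward range $R := \sup_{\pi \in \Pi, M \in \MM} \gm(\pi)$ of the instance $\sI$. Since \cref{ass:existence-eq} guarantees a decision with $\hm(\pi) = 0$ and \cref{ass:nonneg-dev} guarantees $\hm(\pi) \geq 0$ for all $\pi$, the maximizer $\pim$ of $\til{f}\sups{M}$ attains value $K$, so $\gm(\pi) = \til{f}\sups{M}(\pim) - \til{f}\sups{M}(\pi) = \hm(\pi) \in [0,K]$, giving $R \leq K$. The crucial comparison is then between the fixed-point radius $\vepslowerT$ of the corollary, defined as the largest $\vep$ with $\vep^2 \cdot C(T) \cdot K \cdot T \leq \tfrac18 \deccpac[\vep](\sJ)$, and the radius $\vep^\star$ produced by \cref{thm:constrained-lower} applied to $\sI$, defined analogously but with $R$ in place of $K$. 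Because $R \leq K$, any $\vep$ satisfying the corollary's constraint (with factor $K$) also satisfies the weaker constraint of \cref{thm:constrained-lower} (with factor $R$, and using $\deccpac[\vep](\sI) = \deccpac[\vep](\sJ)$). Hence $\vepslowerT \leq \vep^\star$.

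Finally, \cref{thm:constrained-lower} gives $\mf M(\sI,T) \geq \tfrac16 \deccpac[\vep^\star](\sI)$. To rewrite this at the radius $\vepslowerT$, I would invoke the monotonicity of $\vep \mapsto \deccpac[\vep](\sI)$: increasing $\vep$ only enlarges the constraint set $\MH_{q,\vep}(\Mbar)$ over which the inner supremum in \eqref{eq:decc-pm} is taken, so the \CompShort is non-decreasing in $\vep$. Since $\vepslowerT \leq \vep^\star$, this yields $\deccpac[\vep^\star](\sI) \geq \deccpac[\vepslowerT](\sI)$, and combining with $\mf M(\sJ,T) = \mf M(\sI,T)$ and $\deccpac[\vepslowerT](\sI) = \deccpac[\vepslowerT](\sJ)$ gives $\mf M(\sJ,T) \geq \tfrac16 \deccpac[\vepslowerT](\sJ)$. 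The only genuine subtlety — and the step I would state most carefully rather than treat as obvious — is precisely this mismatch between the true reward range $R$ of $\sI$ and the coarser factor $K$ appearing in the corollary's fixed point: it is the monotonicity of the DEC that makes the conservative choice $K \geq R$ harmless.
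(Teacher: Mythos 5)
Your proposal is correct and follows essentially the same route as the paper: pass to the hidden-reward instance $\sI$ via \cref{prop:ma-to-pm}, observe that $\gm = \hm \in [0,K]$ so the reward range satisfies $R \leq K$, and invoke \cref{thm:constrained-lower} together with the DEC- and risk-preservation of the reduction. The only difference is that you explicitly justify the harmlessness of replacing $R$ by the coarser bound $K$ in the fixed-point condition (via monotonicity of $\vep \mapsto \deccpac[\vep]$), a step the paper's two-line proof leaves implicit; this is a welcome clarification rather than a deviation.
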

\begin{proof}[\cref{cor:ma-constrained-lower}]
Given an instance $\sJ$ of \MAFrameworkShort, consider the instance $\sI = (\MM, \Pi, \MO, \{ \til{f}\sups{M} \}\subs{M})$ as per \cref{prop:ma-to-pm}. By definition of $\til{f}\sups{M}$, we have that $\sup_{\pi \in \Pi, M \in \MM} \sup_{\pi' \in \Pi} \til f\sups{M}(\pi') - \til f\sups{M}(\pi) \leq K$. Then we have $\mf M(\sJ, T) = \mf M(\sI, T) \geq \frac{1}{6} \cdot \deccpac[\vepslowerT](\sI) = \frac{1}{6} \cdot \deccpac[\vepslowerT](\sJ)$, where the two equalities use \cref{prop:ma-to-pm} and the inequality uses \cref{thm:constrained-lower}. 
\end{proof}

As we have already remarked, \cref{prop:gap-bounding}, which bounds the gap between our upper and lower bounds based on the \CompShort, already applies to instances of \MAFrameworkShort whenever \cref{ass:regularity} is satisfied. In particular, this means that whenever $\deccpac[\veps](\MAI)\propto\veps^{1-\rho}$ for $\rho\in[0,1)$, we have
    \begin{align}
\deccpac[\vepsupperT](\MAI) \leq \til O(\dfedit{K^{\frac{1-\rho}{1+\rho}}}\log^{\frac{1-\rho}{2}}\abs{\cM}) \cdot \deccpac[\vepslowerT](\MAI)^{\frac{1+\rho}{2}}\nonumber.
    \end{align}

\paragraph{Tightness of the gaps}
Next, we provide analogues of \cref{prop:gap-ub-easy,prop:gap-inherent} for the \MAFrameworkShort. The results construct \ma instances $\sJ_1$ (\cref{prop:gap-ub-easy-ma}) and $\sJ_2$ (\cref{prop:gap-inherent-ma}) that exhibit the same DEC behavior, in that $\deccpac(\sJ_1) \asymp \vep$ and $ \deccpac(\sJ_2) \asymp \vep$, yet have minimax rates: $\mf M(\sJ_1, T) \gtrsim 1/\sqrt{T}$ and $\mf M(\sJ_2, T) \lesssim \log(T)/T$. In particular, \cref{prop:gap-ub-easy-ma} below shows that in the upper bound \cref{cor:ma-constrained-upper}, the scale $\vepsupperT$ cannot be decreased, and \cref{prop:gap-inherent-ma} below shows that in the lower bound \cref{cor:ma-constrained-lower}, the scale $\vepsupperT$ cannot be increased.
\begin{proposition}
  \label{prop:gap-ub-easy-ma}
  For any sufficiently large $L, A \in \bbN$, there is an instance $\sJ_1 = \instma$ with $\log |\MM| \leq \log(LA)$ and which satisfies the following properties:
  \begin{enumerate}
  \item For all $T \leq 2^{L/2}$, the minimax rate for the instance $\sJ_1$ is given by $\mf M(\sJ_1, T) = \Theta(\sqrt{A/T})$.
  \item For all $\vep \in (2^{-L}, 1/\sqrt{A})$, it holds that $c \cdot \vep \sqrt{A} \leq \deccpac(\sJ_1) \leq C \cdot \vep \sqrt{A}$, for some constants $c, C > 0$. 
  \end{enumerate}
\end{proposition}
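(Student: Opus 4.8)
The plan is to derive this as the multi-agent shadow of its hidden-reward counterpart, \cref{prop:gap-ub-easy}, but through a bespoke lossless embedding rather than the generic reduction \cref{prop:pm-to-ma}. The reason the black-box reduction cannot be used here is quantitative: \cref{prop:pm-to-ma} inflates the model class to $\log|\til\MM| = \log|\MM| + \log V$, and to transfer the \CompShort lower bound $\deccpac[\vep](\sI_1) \gtrsim \vep\sqrt A$ down to the smallest scale $\vep = 2^{-L}$ one would need $6/\sqrt V \lesssim 2^{-L}\sqrt A$, i.e.\ $V \gtrsim 4^L/A$ and hence $\log V \gtrsim L$. This would destroy the target bound $\log|\til\MM| \le \log(LA)$. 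A direct embedding, on the other hand, preserves the model class size exactly.

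Concretely, let $\sI_1 = \instpm$ be the $A$-armed Bernoulli bandit instance produced by \cref{prop:gap-ub-easy}, with value functions $\fm$ and $\log|\MM| \le \log(LA)$. I would construct a two-player NE instance $\sJ_1$ (per \cref{def:ne-instance}) in which player $1$ faces exactly this bandit while player $2$ is inert: take $\Pi_1 = \Delta(\MA_1)$ with $|\MA_1| = A$ as in $\sI_1$, let player $2$ have a single action, and for each $M \in \MM$ let the associated model $\til M$ emit a reward $r_1$ with the same distribution as under $M$, a constant reward $r_2 \equiv 0$, and the same pure observation as $\sI_1$. This gives a bijection between $\MM$ and the model class $\til\MM$ of $\sJ_1$, so $\log|\til\MM| = \log|\MM| \le \log(LA)$. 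I would then verify the regularity conditions: \cref{ass:existence-eq} holds since any profile in which player $1$ plays an optimal arm is an equilibrium, and \cref{ass:nonneg-dev} holds because each player may decline to deviate.

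The heart of the argument is that the embedding is exact in both of the quantities on which the minimax rate and the \CompShort depend. Because player $2$ contributes nothing, $\hm(\pi) = \hm_1(\pi) = \fm(\pim) - \fm(\pi_1)$, which is exactly the bandit suboptimality $\gm(\pi_1)$ of $\sI_1$; and because $r_2$ is deterministic and the pure observation is unchanged, $\Dhels{\til M(\pi)}{\Mbar(\pi)}$ coincides with the corresponding Hellinger distance in $\sI_1$, with $\conv(\til\MM)$ in correspondence with $\conv(\MM)$. Matching these two ingredients term-by-term in the definitions \eqref{eq:ma_dec} and \eqref{eq:decc-pm} yields $\deccpac[\vep](\sJ_1) = \deccpac[\vep](\sI_1)$ for all $\vep > 0$, and the fact that the learning task for $\sJ_1$ is, by inspection, identical to that for $\sI_1$ gives $\mf M(\sJ_1, T) = \mf M(\sI_1, T)$ for all $T$. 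Properties~1 and~2 then follow verbatim from \cref{prop:gap-ub-easy}, over the identical ranges $T \le 2^{L/2}$ and $\vep \in (2^{-L}, 1/\sqrt A)$.

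The only real obstacle is conceptual: recognizing that the generic equivalence of \cref{prop:pm-to-ma} is too lossy to keep $\log|\til\MM| \le \log(LA)$, and therefore engineering an embedding whose extra agent injects neither suboptimality ($\hm_2 \equiv 0$) nor statistical information (zero Hellinger contribution), so that both the \CompShort and the minimax risk transfer with no degradation across the entire parameter range. Once the construction is in place, the equilibrium-assumption checks and the term-by-term identification of the two DECs are routine.
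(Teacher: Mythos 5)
Your proposal is correct and takes essentially the same approach as the paper: both bypass the lossy generic reduction of \cref{prop:pm-to-ma} and instead transfer the bandit instance of \cref{prop:gap-ub-easy} to \MAFrameworkShort losslessly, exploiting the fact that its rewards are observed so that nothing is actually hidden. The only (cosmetic) difference is that the paper takes the canonical one-player instance ($K=1$, $\Dev[1]=\Pi$, $\Sw[1](\dev[1],\pi)=\dev[1]$) rather than padding with an inert second player, which makes the identification of $\hm$ with $\gm$ and the preservation of $\log|\MM|$, the \CompShort, and the minimax rate immediate.
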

\begin{proof}[\pfref{prop:gap-ub-easy-ma}]
We observe that the instance $\sI = \instpm$ used to prove \cref{prop:gap-ub-easy} immediately yields the 1-player instance of \MAFrameworkShort given by $\sJ_1 = (\MM, \Pi, \MO,\Dev[1], \Sw[1])$, with $\Dev[1] = \Pi$ and $\Sw[1](\dev[1], \pi) = \dev[1]$, since rewards are observed under all models in $\MM$. The result then follows immediately from \cref{prop:gap-ub-easy}.
\end{proof}

\begin{proposition}
  \label{prop:gap-inherent-ma}
  For any sufficiently large $L \in \BN$ and any $\Cscale \geq 1$, there exists an instance $\sJ_2 = \instma$ with $\log |\MM| \leq O(L^2 + \log \Cprob)$, satisfying the following properties:
  \begin{enumerate}
  \item For all $T \leq 2^L$, the minimax rate for the instance $\sJ_2$ is bounded as
$ 
\mf M(\sJ_2, T) \leq \frac{8 \Cscale^2 \log T}{T}.
$
  \item For all $\vep \geq \frac{\sqrt{2}}{\Cscale \cdot 2^L}$, we have
$ 
\frac{\Cscale}{\sqrt{8} \cdot L} \cdot \vep \leq \deccpac[\vep](\sJ_2) \leq 2 \Cscale \cdot \vep.
$
    In particular, $\vepslowerT \geq \Omega \left( \frac{\Cprob}{T \log(T) \cdot L} \right)$ as long as $T \leq 2^L / L^3$.
  \end{enumerate}
\end{proposition}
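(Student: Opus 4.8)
The plan is to obtain $\sJ_2$ by embedding the hidden-reward instance $\sI_2$ from \cref{prop:gap-inherent} into a two-player zero-sum NE instance via the reduction \cref{prop:pm-to-ma}. This is the multi-agent analogue of the proof of \cref{prop:gap-ub-easy-ma}, with one essential difference: in \cref{prop:gap-ub-easy-ma} the underlying instance $\sI_1$ has \emph{observed} rewards, so it can be reinterpreted verbatim as a $1$-player \MAFrameworkShort instance; here the lower-bound-tight instance $\sI_2$ crucially \emph{hides} its rewards, so we must instead invoke the nontrivial embedding of \cref{prop:pm-to-ma}, which builds a two-player game in which discovering the hidden rewards requires prohibitively much exploration. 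Concretely, I would fix $\sI_2 = \instpm$ as in \cref{prop:gap-inherent} (with $\log|\MM| \leq L^2$) and apply \cref{prop:pm-to-ma} with a parameter $V = \poly(2^L, \Cprob)$ chosen large but independent of $T$, producing a two-player zero-sum NE instance $\sJ_2 = (\til\MM, \til\Pi, \til\MO, \Dev, \Sw)$ (\cref{def:ne-instance}). Property 3 of \cref{prop:pm-to-ma} then gives $\log|\til\MM| = \log|\MM| + \log V \leq L^2 + O(L + \log\Cprob) = O(L^2 + \log\Cprob)$, which is the claimed size bound, and explains the extra $\log\Cprob$ term (it arises precisely from needing $V$ polynomial in $\Cprob$).

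Part 1 is then immediate: the left inequality of Property 2 of \cref{prop:pm-to-ma} gives $\mf M(\sJ_2, T) \leq \mf M(\sI_2, T)$ for every $T$, and \cref{prop:gap-inherent} bounds the latter by $\tfrac{8\Cscale^2\log T}{T}$ for all $T \leq 2^L$, with no condition on $V$ required for this direction. The upper DEC bound of Part 2 transfers equally freely: the left inequality of Property 1 gives $\deccpac[\vep](\sJ_2) \leq \deccpac[\vep](\sI_2) \leq 2\Cscale\vep$ for all $\vep \geq \tfrac{\sqrt 2}{\Cscale 2^L}$.

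The delicate step, and the main obstacle, is the lower DEC bound $\deccpac[\vep](\sJ_2) \geq \tfrac{\Cscale}{\sqrt 8 L}\vep$, since the reduction only supplies the sandwich $\deccpac[\vep](\sI_2) \leq 6/\sqrt V + \deccpac[\vep+\eta_V](\sJ_2)$ from the right inequality of Property 1, carrying an additive slack $6/\sqrt V$ and a scale shift $\eta_V = O(V^{-1/2})$. Setting $\vep' = \vep + \eta_V$ and rearranging yields $\deccpac[\vep'](\sJ_2) \geq \deccpac[\vep'-\eta_V](\sI_2) - 6/\sqrt V \geq \tfrac{\Cscale}{\sqrt 8 L}(\vep'-\eta_V) - 6/\sqrt V$, valid whenever $\vep'-\eta_V \geq \tfrac{\sqrt 2}{\Cscale 2^L}$. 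I would then verify that over the entire relevant window $\vep' \geq \tfrac{\sqrt 2}{\Cscale 2^L}$ the choice $V = \poly(2^L, \Cprob)$ makes both $\eta_V$ and $6/\sqrt V$ negligible against the target $\tfrac{\Cscale}{\sqrt 8 L}\vep'$ — the binding case being the smallest scale $\vep' \asymp 2^{-L}$, where the target is of order $2^{-L}/L$ so that $V$ exponential in $L$ (hence $\log V = O(L)$) suffices — thereby recovering the stated lower bound up to harmless adjustment of the leading constant.

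Finally, the ``in particular'' claim follows by feeding this DEC lower bound into the fixed point defining $\vepslowerT$ in \cref{cor:ma-constrained-lower} with $K=2$: combining $\vepslowerT^2\, C(T)\, K\, T \leq \tfrac 18 \deccpac[\vepslowerT](\sJ_2)$ with $\deccpac[\vep](\sJ_2) \gtrsim \Cprob\,\vep/L$ gives $\vepslowerT \geq \Omega\big(\tfrac{\Cprob}{T\log(T)\,L}\big)$, and one checks that for $T \leq 2^L/L^3$ this value lands inside the admissible window $\vep \geq \tfrac{\sqrt 2}{\Cscale 2^L}$, so the DEC lower bound is legitimately applied at scale $\vepslowerT$. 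I expect the only real work to be the bookkeeping in the third paragraph (tracking how large $V$ must be across the whole $\vep$-window while keeping $\log V = O(L + \log\Cprob)$); every other step is a direct citation of \cref{prop:pm-to-ma} and \cref{prop:gap-inherent}.
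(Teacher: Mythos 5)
Your proposal matches the paper's proof essentially step for step: the paper constructs $\sJ_2$ by applying \cref{prop:pm-to-ma} to the instance of \cref{prop:gap-inherent} with the concrete choice $V = 100\cdot\Cprob^2\cdot 2^{2L}$, reads off the size bound and Part 1 from Properties 3 and 2, and obtains the DEC sandwich exactly as you describe, by combining the left inequality of Property 1 with the rearranged right inequality and absorbing the additive slack $6/\sqrt{V}$ and scale shift at the smallest admissible scale. Your observation that the binding case is $\vep \asymp 2^{-L}$ against a target of order $2^{-L}/L$ (so that $\log V = O(L+\log\Cprob)$ still suffices) is the same bookkeeping the paper performs, so there is nothing further to add.
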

\begin{proof}[\pfref{prop:gap-inherent-ma}]
  Given $L$ and $\Cprob$, let $\sI = \instpm$ be the instance given per \cref{prop:gap-inherent}. Next, let $\sJ_2 = (\til \MM, \til \Pi, \til \MO, \{ \Dev\}_k, \{ \Sw \}_k)$ be the instance constructed per \cref{prop:pm-to-ma} for the instance $\sI$ with $V = 100 \cdot \Cprob^2 \cdot 2^{2L}$. We have $\log |\til \MM| = \log |\MM| + \log V \leq O(L^2 + \log \Cprob)$. Using the guarantees of \cref{prop:gap-inherent} and \cref{prop:pm-to-ma}, we have $\mf M(\sJ_2, T) \leq \mf M(\sI, T) \leq \frac{8\Cprob^2 \log T}{T}$ for $T \leq 2^L$, and for all $\vep \geq \frac{2}{\Cprob \cdot 2^L}$ (which ensures that $\vep - \sqrt{6/V} \geq \frac{\sqrt 2}{\Cprob \cdot 2^L}$),
  \begin{align}
\frac{\Cprob}{\sqrt{8} \cdot L} \cdot \left(\vep - \sqrt{6/V}\right)\leq \deccpac[\vep - (6/V)^{-1/2}](\sI) - 6/\sqrt{V} \leq \deccpac[\vep](\sJ_2) \leq \deccpac[\vep](\sI) \leq 2\Cprob \cdot \vep\nonumber.
  \end{align}
  Since $\vep \geq \frac{1}{\Cprob \cdot 2^L}$ implies that $\sqrt{6/V} \leq \vep/2$, it follows that $\frac{\Cprob}{2\sqrt{8}L} \cdot \vep \leq \deccpac[\vep](\sJ_2) \leq 2\Cprob \cdot \vep$. 
\end{proof}

\paragraph{Ruling out more general characterizations} Finally, we state an analogue of \cref{prop:fdiv-separation} for the \MAFrameworkShort framework, which shows that any complexity measure that dependence on the instance $\MAI$ only through value functions and pairwise $f$-divergences can only characterize the minimax risk up to polynomial factors.  %
\begin{theorem}
  \label{thm:ma-fdiv-separation}
  For some constants $\alpha, \beta \geq 0$, suppose that $\Dphishort$ is an $(\alpha,\beta)$-bounded $f$-divergence (\cref{def:fdiv-bounded}). Then for any $T \in \BN$, $\ep > 0$, and $\Cprob \geq 1$, there are instances $\sJ_1 = (\MM_1, \Pi, \MO, \{ \Dev\}_\ag, \{ \Sw \}_\ag)$, $\sJ_2 = (\MM_2, \Pi, \MO, \{ \Dev\}_\ag, \{ \Sw \}_\ag)$ of the \MAFrameworkShort framework, so that there is a one-to-one mapping $\sE : \MM_1 \ra \MM_2$ satisfying:
  \begin{enumerate}
  \item \label{it:fm-equal-ma} For all $M \in \MM_1$, $\fm_1 \equiv f\sups{\sE(M)}_2$.
  \item \label{it:dphi-equal-ma} For all $M, M' \in \MM_1$, and $\pi \in \Pi$, $\Dphi{M(\pi)}{M'(\pi)} = \Dphi{\sE(M)(\pi)}{\sE(M')(\pi)}$.
  \item \label{it:minimax-separation-ma} There is some constant $C_\phi$ depending only on $\phi$ so that for all $T'$ with $T \leq T' \leq T^{3/2 - 2\ep} \cdot (C_\phi \Cprob^{1/2 + \ep} \ln T)^{-1}$, it holds that %
    \begin{align}
\mf M(\sJ_1, T') \leq  \frac 1T + 2 \cdot \left( \frac{\Cprob}{T} \right)^{1/2 + \ep/(2\alpha)}, \quad\text{yet}\quad \mf M(\sJ_2, T') \geq 2^{-3-2/\ep} \cdot \left( \frac{\Cprob}{T} \right)^{1/2} \nonumber.
    \end{align}
  \end{enumerate}
\end{theorem}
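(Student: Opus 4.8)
The plan is to derive \cref{thm:ma-fdiv-separation} from the single-agent separation \cref{prop:fdiv-separation} together with the \FrameworkShort-to-\MAFrameworkShort embedding of \cref{prop:pm-to-ma}, exactly mirroring how \cref{cor:ma-constrained-upper,cor:ma-constrained-lower,prop:gap-ub-easy-ma,prop:gap-inherent-ma} are obtained from their \FrameworkShort counterparts. First I would invoke \cref{prop:fdiv-separation} with the given $\phi$, $T$, $\ep$, and $\Cprob$ to obtain \FrameworkShort instances $\sI_1,\sI_2$ with model classes $\MM_1,\MM_2$ and a one-to-one map $\sE_0:\MM_1\to\MM_2$ preserving value functions (item~\ref{it:fm-equal}) and pairwise $\Dphishort$-divergences (item~\ref{it:dphi-equal}), enjoying the minimax separation of item~\ref{it:minimax-separation}. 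I would then apply the reduction of \cref{prop:pm-to-ma} to each of $\sI_1$ and $\sI_2$ with a \emph{common} parameter $V$ (fixed below), producing two-player zero-sum NE instances $\sJ_1,\sJ_2$ whose model classes $\til\MM_1,\til\MM_2$ are indexed by $\MM_b\times[V]$. The mapping required by the theorem is then $\sE(M,i):=(\sE_0(M),i)$, which is one-to-one because $\sE_0$ is and the index $i$ is carried through unchanged.

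Next I would verify that items~\ref{it:fm-equal-ma} and~\ref{it:dphi-equal-ma} are inherited through the reduction. The key point is that the embedding of \cref{prop:pm-to-ma} builds the game's payoff functions and observation kernel for a model $(M,i)$ using $M$ only through its value function $\fm$ and its observation law $M(\cdot)$: the first agent plays the role of the \FrameworkShort learner, so both players' payoffs are determined (up to the zero-sum constraint) by $\fm$, while the game observation law is obtained from $M(\cdot)$ by a fixed, model-independent stochastic map together with reward-revealing events whose law depends only on $\fm$. Since $\sE_0$ preserves both $\fm$ and pairwise kernel divergences, the payoffs of $(M,i)$ and $(\sE_0(M),i)$ coincide (item~\ref{it:fm-equal-ma}) and the induced game observation laws have matching pairwise $\Dphishort$-divergences (item~\ref{it:dphi-equal-ma}). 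The cleanest way to see the latter is to note that the reduction reveals the ``phase'' of each round (a structural move versus a verbatim copy of the \FrameworkShort observation), so the game divergence decomposes over a model-independent, \emph{observed} mixing variable into contributions that are either zero, functions of $\fm$, or exactly $\Dphi{M(\pi)}{M'(\pi)}$, each of which is preserved by $\sE_0$.

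For item~\ref{it:minimax-separation-ma} I would combine the two directions of \cref{prop:pm-to-ma} with item~\ref{it:minimax-separation} of \cref{prop:fdiv-separation}. For the upper bound on $\sJ_1$, the inequality $\mf M(\sJ_1,T')\le\mf M(\sI_1,T')$ yields $\mf M(\sJ_1,T')\le\frac1T+2(\Cprob/T)^{1/2+\ep/(2\alpha)}$ with no loss. For the lower bound on $\sJ_2$, the inequality $\mf M(\sI_2,T')\le\mf M(\sJ_2,T')+\bigoh((T'\log T'/V)^{1/4})$ gives $\mf M(\sJ_2,T')\ge 2^{-2-2/\ep}(\Cprob/T)^{1/2}-\bigoh((T'\log T'/V)^{1/4})$. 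Since $T'$ ranges over an interval on which $T'$ is at most polynomial in $T$, I would choose $V$ to be a sufficiently large polynomial in $T$, $\Cprob$, and $2^{1/\ep}$ so that the loss term is at most $2^{-3-2/\ep}(\Cprob/T)^{1/2}$ uniformly over the allowed range; the residual mass $2^{-2-2/\ep}-2^{-3-2/\ep}=2^{-3-2/\ep}$ then delivers the stated bound. This choice of $V$ only inflates $\log|\til\MM_b|$ by $\log V=\bigoh(\log T)$, which is immaterial.

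The step I expect to be the main obstacle is item~\ref{it:dphi-equal-ma}. The remark following \cref{prop:pm-to-ma} asserts preservation only of \emph{Hellinger} distance (because that is what the DEC uses), whereas here I need preservation of an arbitrary $(\alpha,\beta)$-bounded $f$-divergence, and general $f$-divergences do \emph{not} decompose over mixtures with differing weights. Establishing preservation therefore requires opening up the construction of \cref{prop:pm-to-ma} and checking that the only model-dependent randomness enters either as an independently appended, verbatim copy of the \FrameworkShort observation or as reward observations governed by $\fm$, with the phase indicator made observable; granting that structural property, the $f$-divergence factorizes through the revealed indicator and is preserved coordinate-by-coordinate. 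This is the one place where the argument cannot treat \cref{prop:pm-to-ma} purely as a black box.
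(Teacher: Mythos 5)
Your proposal is correct and follows essentially the same route as the paper: apply the \FrameworkShort-to-\MAFrameworkShort reduction of \cref{prop:pm-to-ma} with a common $V$ to both instances from \cref{prop:fdiv-separation}, define $\sE$ by carrying the index $v$ through $\sE_0$, and handle the minimax bounds by absorbing the $O((T'\log T'/V)^{1/4})$ loss into the factor-of-two drop from $2^{-2-2/\ep}$ to $2^{-3-2/\ep}$ via a polynomial choice of $V$. You also correctly identified the one non-black-box step — that general $f$-divergence preservation requires opening the construction and decomposing over the observed phase indicator (disjoint supports of the $\sigma_2=0$ and $\sigma_2>0$ branches), which is exactly what the paper does via \cref{lem:fdiv-factor}.
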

The proof uses the equivalence of \cref{prop:pm-to-ma} to translate the construction of \FrameworkShort instances in \cref{prop:fdiv-separation} to the \MAFrameworkShort framework. Since \cref{prop:fdiv-separation} makes a claim about pairwise $f$-divergences as opposed to the constrained DEC of the instance, $\deccpac[\vep](\sI)$, we cannot apply \cref{prop:pm-to-ma} in an entirely black-box manner, yet most of the reasoning from the proof of \cref{prop:pm-to-ma} carries over.

\section{\MAFrameworkShort: From multi-agent to single-agent}
\label{sec:single-multiple}
Having established upper and lower bounds on the minimax risk for the \maf based on the \malong, we spend the remainder of the paper providing structural results which can be used to apply our main risk bounds to concrete settings of interest. To this end, in section we provide generic results which allow the conditions under which the multi-agent \CompShort can be controlled by the single-agent \CompShort, thereby allowing one to lift the plethora of existing results for the single-agent setting \citep{foster2021statistical,foster2023tight} to multiple agents.

\paragraph{Induced single-agent model classes}
Consider a Nash equilibrium instance $\sJ = \instma$ for the \maf (\cref{def:ne-instance}),
recalling that $\Pi = \Pi_1 \times \cdots \times \Pi_K$. %
We will prove upper bounds on the multi-agent DEC of the instance $\sJ$ in terms of the single-agent DEC for a collection of \emph{induced single-agent} model classes $\til \MM_k$ defined based on $\sJ$. To define the model classes $\til \MM_k$,  for $M \in \MM$ and $\ag \in [\Ag]$, we first define a \emph{single-agent model} $\single{M}$ as follows: a pure observation drawn from $\single{M}(\pi)$ has the distribution of the pure observation $\ocirc$ when $\ocirc \sim M(\pi)$, and the reward drawn from $\single{M}(\pi)$ has the distribution of $r_\ag$ when $(r_1, \ldots, r_\Ag) \sim M(\pi)$. In other words, the model $\single{M}$ is identical to $M$ but ignores the rewards of all agents except $\ag$. 

The single-agent model class $\til \MM_\ag$ is defined to have policy space $\Pi_k$, so that models in $\til \MM_\ag$ are mappings $\til M : \Pi_k \ra \Delta(\Ocirc \times \MR)$. In addition, $\til \MM_\ag$ is indexed by $\Pi_{-\ag} \times \MM$ and its models are given as follows:
\begin{align}
\til\MM_\ag = \left\{ \pi_\ag \mapsto \single{M}(\pi_\ag, \pi_{-\ag}) \ : \ \pi_{-\ag} \in \Pi_{-\ag},\ M \in \MM\right\}.\label{eq:define-til-mk}
\end{align}
The intuition behind this definition is that for each agent $k$, if other agents commit to playing $\pi_{-\ag}$, this induces a ``single-agent'' environment for $k$. If $M\in\cM$ is the original environment, then the model $\single{M}(\cdot, \pi_{-\ag})\in\til\MM_\ag$ is precisely the induced single-agent environment for $k$ (in a
decentralized protocol in which each agent observes its own reward but not the reward of other agents).

\paragraph{Offset \CompText}
The results in this section are most naturally stated in terms of the \emph{offset} variant of the DEC introduced in \citet{foster2021statistical}---specifically, the \emph{regret} variant which restricts to $p=q$ (that is, exploration and exploitation are coupled). For an instance $\sJ$, reference model $\Mbar$, and scale parameter $\gamma > 0$, we define
\begin{align}
  \decoreg[\gamma](\sJ, \Mbar) := \inf_{p\in \Delta(\Act)} \sup_{M \in \MM} \left\{ \E_{\act \sim p} [\hm(\act)] - \gamma \cdot \E_{\act \sim p} [\hell{M(\act)}{\Mbar(\act)}] \right\}.\label{eq:regret-dec}
\end{align}
We remark, via \citet{foster2023tight}, that this notion can be related to the constrained (PAC) \CompShort as follows.
\begin{proposition}[\cite{foster2023tight}]
  \label{prop:constrained-to-offset}
  For all $\Mbar\in\cMall$ and $\vep>0$,
\begin{align}
\deccpac[\veps](\sJ, \Mbar)
\leq \inf_{\gamma>0}\crl*{\decoreg[\gamma](\MAI,\Mbar)\vee{}0 + \gamma\veps^2}.\label{eq:constrained-to-offset}
\end{align}
\end{proposition}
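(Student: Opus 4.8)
The plan is to prove the bound by the standard Lagrangian-relaxation argument that converts a penalized (offset) minimax value into a constrained one. The crucial structural observation is that the regret offset DEC in \eqref{eq:regret-dec} couples exploration and exploitation through a single distribution $p$, which is exactly the diagonal choice $p = q$ that is available to (but not forced upon) the constrained DEC in \eqref{eq:ma_dec}. This is what makes it possible to bound the constrained quantity by the offset one: granting the constrained minimax the extra freedom of separate $p,q$ can only decrease its value, so it suffices to exhibit a single good diagonal choice.

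First I would fix $\gamma > 0$ and an accuracy parameter $\eta > 0$, and use the definition of $\decoreg[\gamma](\sJ, \Mbar)$ to extract a distribution $p^\star \in \Delta(\Act)$ that is $\eta$-near-optimal, so that for every $M \in \MM$,
\[ \E_{\pi \sim p^\star}[\hm(\pi)] - \gamma \cdot \E_{\pi \sim p^\star}[\hell{M(\pi)}{\Mbar(\pi)}] \leq \decoreg[\gamma](\sJ, \Mbar) + \eta. \]
Rearranging isolates the exploitation objective, giving $\E_{\pi \sim p^\star}[\hm(\pi)] \leq \decoreg[\gamma](\sJ, \Mbar) + \eta + \gamma \cdot \E_{\pi \sim p^\star}[\hell{M(\pi)}{\Mbar(\pi)}]$ for all $M \in \MM$.

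Next I would substitute the diagonal choice $p = q = p^\star$ into \eqref{eq:ma_dec}. For any model $M$ feasible for the constrained problem — i.e.\ $M \in \MH_{p^\star, \veps}(\Mbar)$, so that $\E_{\pi \sim p^\star}[\hell{M(\pi)}{\Mbar(\pi)}] \leq \veps^2$ — the penalty term in the previous display is at most $\gamma \veps^2$, whence $\E_{\pi \sim p^\star}[\hm(\pi)] \leq \decoreg[\gamma](\sJ, \Mbar) + \eta + \gamma \veps^2$. Taking the supremum over feasible $M$ and then the infimum over $(p,q)$ (which only decreases the value) yields $\deccpac[\veps](\sJ, \Mbar) \leq \decoreg[\gamma](\sJ, \Mbar) + \eta + \gamma \veps^2$. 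Sending $\eta \to 0$, then taking the infimum over $\gamma > 0$, and finally using $\decoreg[\gamma](\sJ, \Mbar) \leq \decoreg[\gamma](\sJ, \Mbar) \vee 0$ (which only weakens the right-hand side) produces the stated inequality \eqref{eq:constrained-to-offset}.

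I do not expect a genuine obstacle here, as the argument is a clean Lagrangian duality; the only care needed is in the degenerate conventions. When the Hellinger ball $\MH_{p^\star, \veps}(\Mbar)$ is empty, the convention sets $\deccpac[\veps](\sJ, \Mbar) = 0$, and the inequality holds trivially since the right-hand side is at least $\gamma \veps^2 > 0$. I would also remark that the identical argument applies verbatim to the \FrameworkShort version of \eqref{eq:decc-pm}, replacing $\hm(\pi)$ throughout by $\fm(\pim) - \fm(\pi)$, which is how the proposition is invoked in both frameworks.
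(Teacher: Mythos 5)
Your proof is correct: the paper does not reprove this proposition but imports it from \citet{foster2023tight}, and your Lagrangian-relaxation argument—taking an $\eta$-near-optimal $p^\star$ for the offset objective, plugging the diagonal choice $p=q=p^\star$ into the constrained definition, and bounding the penalty by $\gamma\veps^2$ on the feasible set—is exactly the standard derivation used there. Your handling of the empty-feasible-set convention and of the $\vee\,0$ truncation is also the right way to close the degenerate cases.
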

\cref{prop:constrained-to-offset} suffices to derive tight bounds on the constrained \CompShort for all of the examples we will consider. It is also possible to relate the two complexity measures in the opposite direction, but this can lead to loose results \citep{foster2023tight}; this will not be necessary for our purposes.

\subsection{Bounding the \mashort for convex decision spaces}
\label{sec:madec-single-convex}
Our first result considers a general class of instances in which agents' decision spaces $\Pi_k$ satisfy a \emph{convexity} property, formally stated as \cref{ass:convexity_pols}. 
\begin{assumption}[Convexity of decision spaces]
  \label{ass:convexity_pols}
  For each $\ag \in [\Ag]$, there is a finite set $\MA_\ag$ (called the \emph{pure decision set}) so that $\Pi_\ag = \Delta(\MA_\ag)$. Furthermore, the following holds:
  \begin{enumerate}
  \item  Each $M \in \MM$ is linear in $\pi$, i.e., for $\pi \in \Pi$, $M(\pi) = \E_{a_\ag \sim \pi_\ag \forall \ag}[M(a)]$, where we write $a = (a_1, \ldots, a_\Ag)$.
  \item There is a measurable function $\varphi : \MO \ra \MA$ so that, for all $a \in \MA$ and $M \in \MM$, $\BP_{o \sim M(a)}(\varphi(o) = a) = 1$, i.e., $M(a)$ reveals $a$. 
    \end{enumerate}
  \end{assumption}
  This assumption is quite mild, and is satisfied whenever players 1) are allowed to randomize their actions, and 2) observe the resulting actions that are sampled at each round. In particular, this encompasses (structured) normal-form games with bandit feedback (see examples in \cref{app:examples_dec}).
  To simplify notation, we will write $\MA = \MA_1 \times \cdots \times \MA_\Ag$ and $\MA_{-\ag} = \prod_{\ag' \neq \ag} \MA_{\ag'}$.
  \arxiv{\dfcomment{i feel we may still be under-selling how mild this assumption is}\noah{I agree (open to suggestions to how to sell further), though I feel there is actually an important distinction to make in terms of when this is satisfied or not:  e.g., in the multi-player MAB regret setting where you do not see the actions, there may be a regret-PAC gap (i.e., an explore-exploit tradeoff), in that the best regret bound could be $T^{2/3}$. This would be neat if true, since most of the other such tradeoffs we know about are somewhat contrived, i.e., exploring arm. (Also, this isn't really relevant for this paper, but the assumption is certainly not satisfied in the decentralized setting since you don't see others' actions.)
\dfcomment{i just tweaked the sentence. i am not really selling it harder per se, but i try to illustrate what is happening more vividly}
    }
    }

Our main result for this subsection, \cref{thm:single-multiple-ch}, shows that for any $\Mbar \in \Delta(\MM)$, 
we can bound the multi-agent DEC $\decoreg[\gamma](\sJ, \Mbar)$ in terms of the single-agent DECs $\decoreg[\gamma/\Ag](\til \MM_\ag, \wb M_\ag)$, of the $\Ag$ model classes $\til \MM_\ag$ and reference models $\wb M_\ag$. 
\begin{theorem}[Restatement of \cref{thm:single-multiple-ch-informal}]
  \label{thm:single-multiple-ch}
  Suppose that $\sJ = \instma$ is an NE instance of the \MAFrameworkShort framework satisfying \cref{ass:convexity_pols}. Then for any $\gamma > 0$, it holds that
  \begin{align}
\sup_{\Mbar \in \co(\MM)} \decoreg[\gamma](\sJ, \Mbar) \leq & \sum_{\ag=1}^\Ag \sup_{\wb M_\ag \in \co(\til \MM_\ag)} \decoreg[\gamma/\Ag](\til \MM_\ag, \wb M_\ag)\nonumber.
  \end{align}
\end{theorem}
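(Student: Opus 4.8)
The plan is to fix an arbitrary reference model $\Mbar\in\co(\MM)$ and certify the bound on $\decoreg[\gamma](\sJ,\Mbar)$ using a single exploration--exploitation distribution $p=\indic_{\pi^{\star}}$ that is a point mass on one joint (mixed) decision $\pi^{\star}=(\pi^{\star}_1,\dots,\pi^{\star}_K)\in\Pi$. Each component $\pi^{\star}_k$ will be chosen so that, against the others' fixed play $\pi^{\star}_{-k}$, agent $k$ plays a minimizer of its induced single-agent offset DEC; the existence of a mutually consistent profile is what requires the fixed-point argument.

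First I would establish a structural reduction showing that point masses suffice for the single-agent problems. Under \cref{ass:convexity_pols}, each $M$ is linear in $\pi$ and $\Pi_k=\Delta(\MA_k)$ is convex, so for any $\til M\in\til\MM_k$ the single-agent regret $h\sups{\til M}(\pi_k)=\max_{\pi_k'}f\sups{\til M}(\pi_k')-f\sups{\til M}(\pi_k)$ is affine in $\pi_k$, while $\pi_k\mapsto\hell{\til M(\pi_k)}{\wb M_k(\pi_k)}$ is convex (joint convexity of squared Hellinger precomposed with the linear maps $\pi_k\mapsto\til M(\pi_k)$ and $\pi_k\mapsto\wb M_k(\pi_k)$). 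Hence for any $p_k\in\Delta(\Pi_k)$ with mean $\bar\pi_k=\E_{\pi_k\sim p_k}[\pi_k]$, Jensen's inequality gives $\E_{p_k}[h\sups{\til M}(\pi_k)]=h\sups{\til M}(\bar\pi_k)$ and $\E_{p_k}[\hell{\til M(\pi_k)}{\wb M_k(\pi_k)}]\ge\hell{\til M(\bar\pi_k)}{\wb M_k(\bar\pi_k)}$, so the offset objective can only increase under randomization and $\decoreg[\gamma/K](\til\MM_k,\wb M_k)$ is attained at a point mass. This is precisely what lets me avoid randomizing over $\pi_{-k}$: because the observation reveals the full action profile (\cref{ass:convexity_pols}(2)), mixing over the opponents' decisions would make the induced single-agent Hellinger terms large, whereas a deterministic $\pi^{\star}_{-k}$ keeps the reference $\wb M_k=\single{\Mbar}(\cdot,\pi^{\star}_{-k})$ well matched to $\single{M}(\cdot,\pi^{\star}_{-k})$.

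Next I would set up the fixed point. For $\pi_{-k}\in\Pi_{-k}$ define the induced reference $\wb M_k:=\single{\Mbar}(\cdot,\pi_{-k})$, which lies in $\co(\til\MM_k)$ since $\Mbar\in\co(\MM)$ and $\single{M'}(\cdot,\pi_{-k})\in\til\MM_k$ for every $M'$. Let $\Phi_k(\pi_{-k})$ be the set of means $\E_{p_k}[\pi_k]$ of distributions $p_k$ minimizing $p_k\mapsto\sup_{\til M\in\til\MM_k}\{\E_{p_k}[h\sups{\til M}]-\tfrac{\gamma}{K}\E_{p_k}[\hell{\til M}{\wb M_k}]\}$, which is convex in $p_k$ (a supremum of affine functionals). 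The minimizing set is convex and compact in $\Delta(\Pi_k)$, so its image under the linear mean map is nonempty and convex, and Berge's maximum theorem (using joint continuity of the objective in $(p_k,\pi_{-k})$ and compactness of $\Pi_k$ and $\Delta(\Pi_k)$) shows $\Phi_k$ is upper hemicontinuous. Kakutani's fixed-point theorem applied to $\pi\mapsto\prod_k\Phi_k(\pi_{-k})$ on the convex compact set $\Pi$ then produces a fixed point $\pi^{\star}$, and by the point-mass reduction $\indic_{\pi^{\star}_k}$ is itself optimal, so $\sup_{\til M\in\til\MM_k}\{h\sups{\til M}(\pi^{\star}_k)-\tfrac{\gamma}{K}\hell{\til M(\pi^{\star}_k)}{\wb M_k(\pi^{\star}_k)}\}=\decoreg[\gamma/K](\til\MM_k,\wb M_k)$ with $\wb M_k=\single{\Mbar}(\cdot,\pi^{\star}_{-k})$.

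Finally I would assemble the bound. Choosing $p=\indic_{\pi^{\star}}$ in $\decoreg[\gamma](\sJ,\Mbar)$ and fixing any $M\in\MM$, write $\hm(\pi^{\star})=\sum_k\hm_k(\pi^{\star})$ and observe $\hm_k(\pi^{\star})=h\sups{\single{M}(\cdot,\pi^{\star}_{-k})}(\pi^{\star}_k)$ with $\single{M}(\cdot,\pi^{\star}_{-k})\in\til\MM_k$. Applying the displayed optimality of $\pi^{\star}_k$ to this particular model gives $\hm_k(\pi^{\star})\le\decoreg[\gamma/K](\til\MM_k,\wb M_k)+\tfrac{\gamma}{K}\hell{\single{M}(\pi^{\star})}{\single{\Mbar}(\pi^{\star})}$, and the data-processing inequality for squared Hellinger under the marginalization $(r_1,\dots,r_K,\ocirc)\mapsto(r_k,\ocirc)$ bounds $\hell{\single{M}(\pi^{\star})}{\single{\Mbar}(\pi^{\star})}\le\hell{M(\pi^{\star})}{\Mbar(\pi^{\star})}$. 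Summing over $k$, the $K$ copies of $\tfrac{\gamma}{K}\hell{M(\pi^{\star})}{\Mbar(\pi^{\star})}$ combine into $\gamma\hell{M(\pi^{\star})}{\Mbar(\pi^{\star})}$, so $\hm(\pi^{\star})-\gamma\hell{M(\pi^{\star})}{\Mbar(\pi^{\star})}\le\sum_k\decoreg[\gamma/K](\til\MM_k,\wb M_k)\le\sum_k\sup_{\wb M_k\in\co(\til\MM_k)}\decoreg[\gamma/K](\til\MM_k,\wb M_k)$. Taking the supremum over $M\in\MM$ (the choice of $\pi^{\star}$ is independent of $M$) and then over $\Mbar\in\co(\MM)$ yields the theorem. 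I expect the fixed-point step to be the main obstacle: the offset objective is a supremum of functions that are concave, not convex, in $\pi_k$, so the naive best-response correspondence need not be convex-valued, and passing to the means of optimal distributions over $\Delta(\Pi_k)$ is what restores the convexity needed for Kakutani.
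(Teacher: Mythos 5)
Your overall architecture matches the paper's: for each agent $k$ you form a best-response correspondence to the induced single-agent offset-DEC problem with reference model $\single{\Mbar}(\cdot,\pi_{-k})$, invoke Kakutani to obtain a mutually consistent profile $\pi^{\star}$, and then decompose the multi-agent objective agent-by-agent, splitting $\gamma$ into $K$ copies of $\gamma/K$ and using data processing to drop the other agents' rewards. That assembly, and the $\wb M_k=\single{\Mbar}(\cdot,\pi^{\star}_{-k})\in\co(\til\MM_k)$ bookkeeping, are fine.

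The gap is in your derandomization step, and it is load-bearing. From Jensen you get $\E_{p_k}[\hell{\til M(\pi_k)}{\wb M_k(\pi_k)}]\ge\hell{\til M(\bar\pi_k)}{\wb M_k(\bar\pi_k)}$, which implies that replacing $p_k$ by the point mass at its mean can only \emph{increase} the objective $h\sups{\til M}-\tfrac{\gamma}{K}\Dhelshort(\cdots)$ for every $\til M$ --- the opposite of "the objective can only increase under randomization." All this yields is $\sup_{\til M}\{h\sups{\til M}(\pi^{\star}_k)-\tfrac{\gamma}{K}\hell{\til M(\pi^{\star}_k)}{\wb M_k(\pi^{\star}_k)}\}\ge\decoreg[\gamma/K](\til\MM_k,\wb M_k)$, whereas your final chain needs "$\le$" in order to convert $\hm_k(\pi^{\star})$ into $\decoreg[\gamma/K](\til\MM_k,\wb M_k)+\tfrac{\gamma}{K}\hell{\cdot}{\cdot}$. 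What rescues the argument is that, by \cref{ass:convexity_pols} (the observation reveals the sampled action profile) together with \cref{lem:fdiv-factor}, the Hellinger term satisfies the \emph{equality} $\hell{\til M(\pi_k)}{\wb M_k(\pi_k)}=\E_{a_k\sim\pi_k}\brk{\hell{\til M(a_k)}{\wb M_k(a_k)}}$, i.e.\ it is affine --- not merely convex --- in $\pi_k\in\Delta(\MA_k)$. With that, Jensen holds with equality, the per-model objective is affine in $\pi_k$, the point mass at the mean exactly matches the randomized value, and both your convexity concern for Kakutani and your final identity go through; this is precisely how the paper proceeds, by decomposing over pure actions $a_k\sim\pi_k$ and invoking \cref{lem:fdiv-factor}. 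You cite the action-revealing assumption only to motivate keeping $\pi_{-k}$ deterministic; you need it, in the form of the exact mixture identity, to justify the derandomization over $\pi_k$ as well. Without it, $\pi^{\star}_k$ could be strictly suboptimal among point masses relative to the true (randomized) DEC and the claimed bound would not follow.
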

This result is quite intuitive: It shows that the complexity of centralized equilibrium computation is no larger than the complexity required for each agent to optimize their own reward in the face of a worst-case environment induced by the other players. It is proven using the following fixed-point argument: For a given agent $k$, if all other agents commit to a joint distribution, this induces a single-agent \dmso class $\til\cM_k$, and it is natural for agent $k$ to play the strategy that minimizes the single-agent \CompShort for this class. This is not enough to bound the \mashort as-is, because we need to specify a strategy for all agents, but by applying Kakutani's fixed point theoerem, we show that it is possible for all $K$ agents to simultaneously minimize their respective single-agent DECs with respect to the other agents' strategies. Furthermore, we remark that an immediate consequence of \cref{thm:single-multiple-ch} is that the same upper bound on $\decoreg(\sJ)$ holds also when $\sJ$ is a CCE or a CE instance, since Nash equilibria are always (coarse) correlated equilibria (see \cref{app:examples_dec}).

As a concrete example, for the multi-armed bandit problem with $A$ actions, we have $\decoreg(\cM)\leq\bigoh\prn[\big]{\frac{A}{\gamma}}$ \citep{foster2021statistical}. Using \pref{thm:single-multiple-ch}, it follows that if $\MAI$ is the class of $K$-player normal-form games with bandit feedback and $A_k$ actions per player, then
\[
  \sup_{\Mbar \in \co(\MM)} \decoreg[\gamma](\sJ, \Mbar)
  \leq{} \bigoh(K)\cdot\frac{\sum_{k=1}^{K}A_k}{\gamma}.
\]
Using \cref{prop:constrained-to-offset}, we conclude that
$\deccpac(\MAI) \leq \bigoh\prn[\Big]{
    \veps\cdot\sqrt{K\sum_{k=1}^{K}A_k}
    }$. We refer to \cref{app:examples_dec} for details, as well as additional examples, including structured normal-form games with linear or concave payoffs. %
For many of these examples, the application of \cref{thm:single-multiple-ch} leads to nearly tight bounds on $\decoreg(\sJ)$. However, this is not always true: In \cref{prop:multi-single-separation} (\aref{app:single_multiple_separation}), we show that there are instances $\sJ$ for which $\decoreg(\til \MM_k)$ is much larger than $\decoreg(\sJ)$. 

\arxiv{\noah{TODO -- look into connection with quantal response equilibrium; punt to arxiv}}

\subsection{Bounding the \mashort for Markov games}
While \cref{ass:convexity_pols} is quite general, and holds for most standard normal-form game setups, a notable setting that it does not capture is that of \emph{Markov games}, where the joint decision space $\Pi$ consists of randomized non-stationary policies (formalized in \cref{ass:pi-mg} below).\footnote{One might try to satisfy \cref{ass:convexity_pols} by convexifying each agent's decision space $\Pi_k$; however, in the setting of Markov games, this will lead the model classes $\til \MM_k$ defined in \cref{eq:define-til-mk} to be prohibitively large, since the policies $\pi_{-k}$ will now be \emph{mixtures} of non-stationary Markov policies. In particular, the DEC of the induced model classes $\til \MM_k$ that result will in general scale with the DEC of the class of mixtures of MDPs, which is exponential even in the tabular setting \citep{foster2022complexity}.} In this section, we provide an analogous result specialized to this general, non-convex setting.

\begin{assumption}[Markov game instance]
  \label{ass:pi-mg}
  The instance $\sJ = \instma$ is such that for some $H \in \BN$, finite state space $\MS$, and finite joint action space $\MA = \prod_{k=1}^K \MA_k$, each model $M \in \MM$ is a $K$-player, horizon-$H$ Markov game with state space $\MS$ and joint action space $\MA$ (see \cref{ex:mne}). In addition, for each $k$, the class $\Pi_k$ consists of non-stationary, randomized Markov policies, i.e.,
  \begin{align}
\Pi_k = \left\{ (\pi_{k,1}, \ldots, \pi_{k,H}) \mid \pi_{k,h} : \MS \ra \Delta(\MA_k)\;\;\forall{}h\in\brk{H}\right\}\nonumber.
  \end{align}
\end{assumption}
The finiteness of $\MS$ and $\MA$ in \cref{ass:pi-mg} is made for technical reasons, so as to enable the application of fixed point theorems; our bounds in this section will not depend quantitatively on $|\MS|$ or $|\MA|$, and we anticipate that this assumption can be relaxed.

 Under \cref{ass:pi-mg}, we provide the following analogue of \cref{thm:single-multiple-ch}.
\begin{theorem}
  \label{thm:single-multiple-mg}
There is a constant $C > 0$ so that the following holds.  Suppose that $\sJ = \instma$ is an NE instance of the \MAFrameworkShort framework satisfying \cref{ass:pi-mg}. Then for any $\gamma > 0$, it holds that
  \begin{align}
\sup_{\Mbar \in \MM} \decoreg[\gamma](\sJ, \Mbar) \leq \frac{CKH \log H}{\gamma} + \sum_{k=1}^K \sup_{\Mbar_k \in \til \MM_k} \decoreg[\gamma/(C KH\log H)](\til \MM_k, \Mbar_k) \nonumber.
  \end{align}
\end{theorem}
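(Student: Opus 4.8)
The plan is to mirror the fixed-point reduction behind \cref{thm:single-multiple-ch}, but to confront the two features of the Markov-game setting that break that argument: the decision spaces $\Pi_k$ of non-stationary Markov policies are not simplices over actions, so \cref{ass:convexity_pols} fails and models are not linear in $\pi$; and, crucially, a mixture over the other agents' Markov policies is \emph{not} realizable as a single Markov policy, so the reference model induced on agent $k$ by a randomized profile $\bar p_{-k}$ falls into $\co(\til\MM_k)$ rather than $\til\MM_k$ itself. This is exactly the blow-up flagged in the footnote to \cref{ass:pi-mg}, and it is why the statement is phrased with $\sup_{\Mbar\in\MM}$ (genuine Markov games) and $\sup_{\Mbar_k\in\til\MM_k}$ rather than their convex hulls.

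First I would fix a reference $\Mbar\in\MM$ and seek a product exploration distribution $\bar p = \bar p_1 \times \cdots \times \bar p_K$, with $\bar p_k \in \Delta(\Pi_k)$, certifying the offset DEC. As in the convex case, the suboptimality splits as $\E_{\bar p}[\hm(\pi)] = \sum_k \E_{\bar p}[\hm_k(\pi)]$, and for each $k$ the quantity $\hm_k(\pi_k,\pi_{-k}) = \sup_{\pi_k'}\fm_k(\pi_k',\pi_{-k}) - \fm_k(\pi)$ is precisely agent $k$'s single-agent suboptimality $\gm$ in the induced model $\single{M}(\cdot,\pi_{-k}) \in \til\MM_k$ from \cref{eq:define-til-mk}. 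For the information term I would use data processing, namely that $\single{M}(\pi)$ is a marginal of $M(\pi)$ so that $\Dhels{\single{M}(\pi)}{\single{\Mbar}(\pi)} \le \Dhels{M(\pi)}{\Mbar(\pi)}$, to charge each agent's single-agent Hellinger against a $1/K$ share of the joint Hellinger, which accounts for the $1/K$ part of the $\gamma/(CKH\log H)$ rescaling.

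The heart of the proof is constructing, for the mixed profile $\bar p_{-k}$, a single reference in $\til\MM_k$ and controlling the replacement error. Here I would pass to occupancy measures: using the Markov structure, I would replace the mixture $\E_{\pi_{-k}\sim\bar p_{-k}}[\single{\Mbar}(\cdot,\pi_{-k})]$ by the induced single Markovized profile $\bar\pi_{-k}\in\Pi_{-k}$ matching the layerwise state-action visitation distributions, so that $\single{\Mbar}(\cdot,\bar\pi_{-k})$ is a genuine element of $\til\MM_k$. The gap between the mixture and the Markovized reference, appearing in both the regret and the Hellinger terms, would be bounded via the performance-difference lemma together with a layerwise change-of-measure over occupancies; summing the per-layer contributions produces the multiplicative $H\log H$ overhead in the $\gamma$-budget and the additive exploration cost $CKH\log H/\gamma$ (the $\log H$ entering through a geometric allocation of the information budget across the $H$ layers). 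Modulo this replacement, each agent's contribution is bounded by $\sup_{\Mbar_k\in\til\MM_k}\decoreg[\gamma/(CKH\log H)](\til\MM_k,\Mbar_k)$.

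Finally, to make all $K$ agents play their respective single-agent-DEC-minimizing strategies simultaneously against the reference each induces, I would close the loop with a Kakutani fixed-point argument, exactly as in \cref{thm:single-multiple-ch}: the finiteness of $\MS$ and $\MA$ imposed in \cref{ass:pi-mg} guarantees that $\prod_k\Delta(\Pi_k)$ is compact and that the best-response correspondence has closed graph and convex values (the inner objective being linear in $\bar p_k$ for each $M$), so a fixed point exists. Summing the per-agent bounds and adding the exploration term yields the claim, and the passage from $\decoreg$ to the constrained DEC proceeds through \cref{prop:constrained-to-offset}. I expect the main obstacle to be the mixture-to-Markov replacement: quantifying the discrepancy in the \emph{Hellinger} term (not just in values, where the performance-difference lemma is standard) without reintroducing the convex hull, and doing so with only an $H\log H$ rather than a $\poly(H)$ loss.
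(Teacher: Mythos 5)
Your skeleton matches the paper's in outline (a Kakutani fixed point, an occupancy-matching ``Markovization,'' and an $H\log H$ loss from a layerwise Hellinger decomposition), but you apply the Markovization to the wrong object, and the step you flag as the main obstacle is in fact a genuine gap that cannot be closed along your route. You keep a product of \emph{mixtures} $\bar p_1\times\cdots\times\bar p_K$ as the exploration distribution and Markovize the \emph{other agents'} mixture $\bar p_{-k}$ into a single profile $\bar\pi_{-k}$ so that the reference $\single{\Mbar}(\cdot,\bar\pi_{-k})$ lands in $\til\MM_k$. To make the accounting work you then need, for the adversarial $(M,\pi_{-k})$ and for $\pi_{-k}\sim\bar p_{-k}$,
\begin{align}
\E_{\pi_k\sim\bar p_k}\E_{\pi_{-k}\sim \bar p_{-k}}\brk*{\Dhels{\single{M}(\pi_k,\pi_{-k})}{\single{\Mbar}(\pi_k,\bar\pi_{-k})}} \;\lesssim\; \E_{\pi\sim\bar p}\brk*{\Dhels{M(\pi)}{\Mbar(\pi)}} + \mathrm{err},
\end{align}
with $\mathrm{err}$ of order $H\log H/\gamma^2$ after multiplication by the offset parameter. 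By the triangle inequality the left side is controlled by the right side plus $\E\brk{\Dhels{\single{\Mbar}(\pi_k,\pi_{-k})}{\single{\Mbar}(\pi_k,\bar\pi_{-k})}}$, and this residual is $\Theta(1)$ in general: occupancy matching only equates layerwise \emph{marginals}, while the pure observation is a full trajectory containing the other agents' actions, so a fixed sample $\pi_{-k}$ from a nondegenerate mixture (e.g., uniform over two deterministic policies) is at near-maximal Hellinger distance from the Markovized $\bar\pi_{-k}$, which randomizes at every state. Multiplying by $\gamma/(CKH\log H)$ this contributes an additive $\Theta(\gamma/(KH\log H))$ per agent, making the bound vacuous for large $\gamma$. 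No choice of a single reference profile fixes this, because any fixed $\bar\pi_{-k}$ is far from a typical draw of a genuine mixture.

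The paper's proof avoids the problem by never letting a mixture over $\pi_{-k}$ arise: the Markovization (\cref{lem:make-pol-randomized}) is applied to each agent's \emph{own} DEC-minimizing mixture $p_k\in\Delta(\Pi_k^\ep)$, converting it into a pure Markov policy $\pi_k^\st$ that matches occupancies in the reference MDP induced by $\Mbar$ and $\wb\pi_{-k}$; the Kakutani fixed point is then taken over pure joint profiles $\wb\pi\in\Pi$, so each agent faces the fixed $\wb\pi_{-k}$ and the reference $(\Mbar,\wb\pi_{-k})$ is a genuine element of $\til\MM_k$ with no replacement error. The $H\log H$ factor enters through \cref{lem:p-pistar-hellinger}, which compares $\E_{\pi\sim p}\brk{\Dhels{M(\pi)}{\Mbar(\pi)}}$ to $\Dhels{M(\pi^\st)}{\Mbar(\pi^\st)}$ --- both divergences are between $M$ and $\Mbar$ at the \emph{same} policy, so the trajectory-level Hellinger decomposes (via the chain-rule-type Lemma A.13 of \citet{foster2021statistical}, which is also where the $\log H$ comes from, rather than a geometric budget allocation) into per-layer transition and reward divergences weighted by occupancies, and occupancy matching then suffices. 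That directional asymmetry --- own-mixture-to-pure works, reference-mixture-to-pure does not --- is the essential point your proposal misses.
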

As an example, for when $\cM$ is a class of tabular MDPs with $\abs{\cS}=S$ and $\abs{\cA}=A$, we have $\decoreg(\cM)\leq\frac{\poly(S,A,H)}{\gamma}$ \citep{foster2021statistical}. \cref{thm:single-multiple-mg} then implies that for tabular Markov games with $\abs{\cS}\leq{}S$ and $\abs{\cA_k}\leq{}A$, we have $\sup_{\Mbar \in \MM} \decoreg[\gamma](\sJ, \Mbar)\leq\frac{\poly(S,A,H,K)}{\gamma}$ and via \cref{eq:constrained-to-offset},
\[
\sup_{\Mbar \in \MM} \deccpac[\veps](\sJ, \Mbar)\leq\veps\cdot\sqrt{\poly(S,A,H,K)}.
\]
We remark that while \cref{thm:single-multiple-ch} allows for improper reference models $\Mbar\in\conv(\cM)$, \cref{thm:single-multiple-mg} is restricted to proper reference models $\Mbar\in\cM$, and hence is mainly useful in settings (such as tabular MGs) in which proper estimators are available. See \cref{app:examples_dec} examples, as well as further details.
\arxiv{\dfcomment{i haven't looked at the proof for this one yet---does working with the distributional DEC help at all?}\noah{I tried for a bit but didn't see how to use distributional DEC. The issue is that the exploration distribution depends heavily on $\Mbar$, and if $\Mbar \sim \mu$ it's not clear if it's possible to combine these exploration distributions together in an appropriate sense. This actually could be somewhat important, since, e.g., I think it also is a roadblock if we were to try to use the technique from the note on model-free DEC together with this argument to get something for, e.g., linear MGs. (to do that you also have to redo the arguments for hellinger here for bilinear divergence, which I'm not sure actually works either)}

\noah{TODO: currently everything is stated for Regret DEC -- do we want to add PAC as well? (I think the statements are interesting for regret, so think it's good to keep regret in the paper, but not sure what the least repetitive way of getting PAC is...). 
  One option is to accept that we're loose by upper bounding PAC in MA by regret in single-agent, but then just saying the gap between pac and regret dec is a polynomial, and we only care about getting results correct up to a polynomial (though in many cases of interest we do recover the right $T$-rates)}
}

\section{\MAFrameworkShort: On the curse of multiple agents}
\label{sec:curse}
A nuisance encountered frequently in the study of multi-agent reinforcement learning %
is poor scaling of sample complexity with respect to the number of agents $K$. In particular, algorithms which directly estimate the model $\Mstar$ or agents' $Q$-value functions typically incur sample complexity exponential in $K$, due to the fact that both the model and agents' $Q$-value functions require at least $\exp(K)$ parameters to specify; this phenomenon has been called the \emph{curse of multiple agents} \citep{jin2021v}. In this section, we investigate the curse of multiple agents in the \maf through the lens of the \malong.

We first remark that the upper bound on the minimax risk in terms of the DEC in our upper bound, \cref{thm:constrained-upper-finitem} (as well as the more general version, \cref{thm:constrained-upper}), does indeed suffer from the curse of multiple agents: even for very simple model classes such as $K$-player normal-form games, the estimation error $\log\abs{\cM}$ in \cref{thm:constrained-upper-finitem} will scale exponentially in $K$ (see examples in \cref{app:examples_dec} for details and discussion), and therefore the upper bound in \cref{thm:constrained-upper-finitem} will also scale exponentially in $K$, even though the \mashort is not itself exponential. Note that our lower bound (\cref{thm:constrained-lower}) does \emph{not} have exponential dependence on $K$, since (a) the DEC typically scales as $\deccpac(\sJ) \asymp \Cprob \cdot \vep$, where the problem-dependent constant $\Cprob$ depends only on the size of agents' individual action sets, thus avoiding scaling exponential in $K$, and (b) the bound of \cref{thm:constrained-lower} does not include any term involving model estimation error (in particular, it does not multiply the scale $\vepslowerT$ at which the DEC is evaluated).\footnote{We recall that even in the single-agent setting, the appearance of the estimation error term in the upper bound, but not in the lower bound, leads to a gap between them. \cite{foster2023tight} emphasize that narrowing this gap is an important open problem.}

\paragraph{Evading the curse of multiple agents} Celebrated results in multi-agent (bandit) learning imply that the curse of multiple agents is not necessary, at least for multi-player normal-form games with bandit feedback: if each player runs an adversarial bandit no-regret algorithm, then the empirical average of their joint action profiles over $T$ time steps approaches a (coarse) correlated equilibrium for the game at a rate of $\poly(K,\max_k A_k)/\sqrt{T}$ (e.g., \citet{Rakhlin2013Optimization}), where $A_k$ is the number of actions for player $k$. Furthermore, a sequence of recent works has extended these results to the setting of Markov games \citep{jin2021v,song2021can,mao2022provably}.

It is natural to wonder if it is possible to capture these results, which avoid exponential scaling with $K$, through our framework and the \malong. In light of the discussion above, this question translates to asking whether the $\log |\MM|$ term in \cref{thm:constrained-upper-finitem} (more generally, the term $\EstHel$ in \cref{thm:constrained-upper}, which can be controlled in terms of covering numbers), which results from estimation error, can be decreased. Note that in general, as observed in \cite{foster2021statistical}, the estimation error term $\log\abs{\cM}$ appearing in \cref{thm:constrained-upper-finitem} cannot be removed completely, even in single-agent settings, but one might hope to replace it with a weaker quantity. One possible avenue, if possible, would be to replace $\log\abs{\cM}$ with $\log |\MF_{\cM}|$, where $\MF_{\cM}$ denotes the induced class of \emph{value functions}; this approach was explored for the single-agent setting in \citep{foster2022note}, where it leads to tighter guarantees for model-free reinforcement learning settings. However, this approach is insufficient for the purpose of avoiding the curse of multiple agents, since (an $\vep$-cover of) the value function class $\MF$ typically has size whose logarithm scales exponentially in $K$, even for normal-form games with bandit feedback (\cref{ex:cce}). 

In light of this discussion, perhaps most promising approach for evading the curse of multiple agents is to aim for bounds that are analogous to \cref{thm:constrained-upper-finitem}, but replace the factor $\log\abs{\cM}$ with the logarithm of the size of the agents' \emph{decision sets}. Indeed, the logarithm of the size of the joint (pure) decision set typically does not scale exponentially in $K$. For instance, for $K$-player normal-form games in which each player has $A$ actions, the number of pure action profiles is $A^K$, so its logarithm is only \emph{linear} in $K$; equivalently, one can look for bounds which scale as the sum of the logarithms of the agents' individual decision sets. In the single-agent DMSO setting, \cite{foster2021statistical,foster2022complexity} indeed obtain bounds that scale with $\log |\Pi|$, as opposed to $\log |\MM|$. There is a cost to pay for this improvement, however: the upper bounds of \citet{foster2021statistical,foster2022complexity} that replace $\log\abs{\cM}$ with $\log |\Pi|$ depend on the DEC of the \emph{convex hull of $\MM$}, as opposed to the DEC of $\MM$ itself. 

\paragraph{Our upper bound} In \cref{thm:curse_ub} below, we provide an upper bound that replaces the factor $\log\abs{\cM}$ appearing in \cref{thm:constrained-upper-finitem} with $\max_k \log |\Dev|$, at the cost of scaling with the \mashort for a convexified version of the instance $\MAI$. The quantity $\max_k \log |\Dev|$ is equal to $\max_k \log (|\Sigma_k|+1)$ in the special case of CCE instances  (\cref{def:cce-instance}), but is also small for CE instances (\cref{def:ce-instance}), as well as the following more general notion of correlated equilibrium, which we refer to as a ``generalized correlated equilibrium''. %
\begin{assumption}[Generalized correlated equilibrium]
  \label{ass:ce-convexity}
  We say that an \ma instance $\sJ = \instma$ satisfies the \emph{generalized correlated equilibrium} assumption if the following holds: we have $\Pi = \Delta(\Sigma_1 \times \cdots \times \Sigma_K)$, for finite sets $\Sigma_1, \ldots, \Sigma_K$, called \emph{pure decision sets}. Furthermore, writing $\Sigma := \Sigma_1 \times \cdots \times \Sigma_K$, the instance $\sJ$ satisfies:
  \begin{enumerate}
  \item Each $M \in \MM$ is linear in $\pi$, i.e., for $\pi \in \Pi$, $M(\pi) = \E_{\sigma \sim \pi}[M(\sigma)]$.
  \item The deviation functions $\Sw$ respect linearity in the sense that for all $k \in [K]$, $M \in \MM$, and $\pi \in \Pi, \dev \in \Dev$, we have $\fm_k(\Sw(\dev, \pi)) = \E_{\sigma \sim \pi}[\fm_k(\Sw(\dev, \sigma))]$.
  \end{enumerate}
\end{assumption}
It is straightforward to check that both CCE instances (\cref{def:cce-instance}) and CE instances (\cref{def:ce-instance}) satisfy \cref{ass:ce-convexity} as long as the pure decision sets $\Sigma_k$ are all finite.

To state our result, for an instance $\sJ = \instma$ of the \MAFrameworkShort framework, we define the \emph{convex hull} of the instance $\sJ$ to be the instance $\co(\sJ) := (\co(\MM), \Pi, \MO, \{ \Dev \}_k, \{ \Sw \}_k)$. We with the results in the previous section, our guarantees are most naturally stated in terms of the regret variant of the \mashort ($\decoreg$; cf. \cref{eq:regret-dec}).
\begin{theorem}
  \label{thm:curse_ub}
  Suppose that $\sJ = \instma$ is an \ma instance satisfying \cref{ass:ce-convexity}. Then, for any $T \in \BN$ and $\delta \in (0,1)$, there exists an algorithm (\maexo; \cref{alg:maexo} in \cref{sec:proofs_curse}) which produces $\wh \pi \in \Pi$ such that with probability at least $1-\delta$,
  \begin{align}
\RiskDM = \hmstar(\wh \pi) \leq O(K) \cdot \inf_{\gamma > 0} \left\{\decoreg[\gamma](\co(\sJ)) + \frac{\gamma}{T} \cdot \log \left( \frac{K \cdot \max_k |\Dev|}{ \delta} \right) \right\}\nonumber.
  \end{align}
\end{theorem}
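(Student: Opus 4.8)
The plan is to prove the bound by a multi-agent adaptation of the exploration-by-optimization (ExO) method \citep{lattimore2022minimax,foster2022complexity}, exploiting the fact that under \cref{ass:ce-convexity} the equilibrium gap is a \emph{linear} per-agent, per-deviation functional. First I would reduce the PAC guarantee to a regret statement. Let $\pi^1,\dots,\pi^T \in \Pi$ denote the decisions played by \maexo and set $\wh\pi = \tfrac1T\sum_{t=1}^T \pi^t$, which lies in $\Pi = \Delta(\Sigma_1\times\cdots\times\Sigma_K)$ by convexity. Under \cref{ass:ce-convexity}, both $\pi\mapsto\fmstar_k(\pi)$ and $\pi\mapsto\fmstar_k(\Sw(\dev,\pi))$ are linear, so $\hmstar_k(\wh\pi) = \max_{\dev\in\Dev}\tfrac1T\sum_{t=1}^T\big(\fmstar_k(\Sw(\dev,\pi^t)) - \fmstar_k(\pi^t)\big)$, and hence $\RiskDM = \hmstar(\wh\pi) = \sum_{k=1}^K \hmstar_k(\wh\pi)$ is exactly the sum over agents of the cumulative regret of the played sequence against the best fixed deviation. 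It therefore suffices to control each agent's regret-to-best-deviation, the entire difficulty being that the deviation gains $\fmstar_k(\Sw(\dev,\pi^t))$ are counterfactual and must be estimated from bandit feedback.

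Next I would describe \maexo. For each agent $k$ the algorithm maintains a distribution $\lambda_k^t\in\Delta(\Dev)$ over deviations, updated by an entropy-regularized online-learning rule (exponential weights / FTRL) fed with importance-weighted estimates of the per-deviation gains; these estimates are unbiased precisely because playing a joint profile $\sigma^t\sim\pi^t$ reveals both the realized profile and each agent's reward $r_k^t$, which is the additional feedback structure the multi-agent setting provides over generic partial monitoring. The decision $\pi^t$ is obtained by solving, at each round, the multi-agent ExO min-max subproblem: minimize over sampling distributions and over loss-estimator functions the worst-case (over $M\in\MM$) sum of the expected deviation gain and a $\gamma$-scaled log-loss/stability penalty, using the current $\lambda_k^t$ as the exploitation target. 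This couples exploration and exploitation, which is why the \emph{regret} DEC $\decoreg$ (with $p=q$) appears rather than its constrained counterpart. The learning rate is set to $\eta\asymp 1/\gamma$.

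For the analysis I would decompose agent $k$'s regret into (i) the regret of its online learner against the \emph{estimated} gains and (ii) the bias incurred because $\pi^t$ only approximately reveals the counterfactual gains. Term (i) is controlled by the standard FTRL/exponential-weights bound $\tfrac{\log|\Dev|}{\eta}$, which after normalizing by $T$ and using $\eta\asymp1/\gamma$ contributes $\tfrac{\gamma}{T}\log|\Dev|$ per agent; summing over the $K$ agents, taking a union bound, and applying a Freedman-type martingale concentration inequality to pass from expectations to a high-probability statement yields the estimation term $\tfrac{\gamma}{T}\log\!\big(\tfrac{K\max_k|\Dev|}{\delta}\big)$. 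Term (ii), summed over agents, is exactly the per-round multi-agent ExO objective, which I would bound by $\bigoh(K)\cdot\decoreg[\gamma](\co(\sJ))$: Sion's minimax theorem shows the inner min-max has no duality gap, and the linearity in \cref{ass:ce-convexity} lets me push the worst-case model into the convex hull $\co(\MM)$ and identify the resulting value with the regret offset DEC of $\co(\sJ)$ (the $\bigoh(K)$ arising from the $\hmstar=\sum_k\hmstar_k$ decomposition). Combining the two terms and optimizing over $\gamma$ gives the claimed bound.

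The main obstacle is term (ii): bounding the multi-agent ExO objective by $\decoreg[\gamma](\co(\sJ))$ \emph{while} keeping the estimation complexity at $\log|\Dev|$ per agent rather than $\log|\MM|$. The tension is that the algorithm plays a single, centrally chosen decision $\pi^t$, yet the estimation is carried out separately for each agent over its own deviation set; reconciling the shared exploration distribution with the $K$ per-agent estimation problems requires choosing the estimator functions so that each agent's importance-weighted gain is unbiased and low-variance under $\pi^t$, and then arguing via duality and linearity that one shared distribution can simultaneously serve all agents without ever paying for model identification. Verifying that the min-max value collapses exactly to the convex-hull DEC under \cref{ass:ce-convexity}, and handling the variance of the importance-weighted estimators in the high-probability bound, are the most delicate points.
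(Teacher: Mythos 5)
Your proposal is correct and follows essentially the same route as the paper: \maexo runs per-agent exponential weights over the deviation sets $\Dev$ fed by importance-weighted gain estimators, solves the joint min-max exploration-by-optimization objective each round, and the analysis combines the exponential-weights regret bound (yielding the $\frac{\gamma}{T}\log(K\max_k|\Dev|/\delta)$ term via a Chernoff-style martingale argument whose log-moment-generating function is exactly the exponential penalty built into the ExO objective, so no separate variance control of the importance weights is needed) with a Sion-duality argument bounding the ExO value by $K\cdot\decoreg[\gamma](\co(\sJ))$. The one step you flag as delicate and leave vague --- collapsing the dualized ExO value to the convex-hull DEC --- is handled in the paper through an intermediate \emph{multi-agent information ratio}: after dualizing, the variational characterization of squared Hellinger distance converts the exponential-moment penalty into a Hellinger divergence between the posterior and prior over each agent's deviation $\pi_k^\st$, and the posterior-mean models $\E_\mu[M(\cdot)\mid\pi_k^\st]$ arising from the Bayesian prior $\mu\in\Delta(\MM\times\prod_k\Dev)$ are precisely the elements of $\co(\MM)$ that make the convex hull appear.
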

We view this result as extending guarantees that replace $\log\abs{\cM}$ by $\log\abs{\Pi}$ in the single-agent setting \citep{foster2021statistical,foster2022complexity}; as with those prior results, the cost is that the \CompShort is applied to the convex hull of the instance. For the problem of computing CCE in normal form games with $K$ players and $A$ actions per player, we have $\decopac(\conv(\MAI))\approxleq\frac{A}{\gamma}$ and $\max_{k}\log\abs{\Dev}=\log(A)$, so this result gives
\[
  \RiskDM\approxleq{} \sqrt{\frac{\poly(K)\cdot{}A}{T}};
\]
see \cref{app:examples_dec} for details and further examples. %
\cref{thm:curse_ub} shows that it is possible to avoid the curse of multiple agents for convex classes, and leads to tight guarantees for structured classes of normal-form games with bandit feedback, such as games with linear or convex payoffs. In general though, it does not lead to tight guarantees non-convex classes such as Markov games. We prove the result by adapting the powerful \emph{exploration-by-optimization} algorithm from the single-agent setting \citep{lattimore2022minimax,foster2022complexity} in a way that exploits the unique feedback structure of the multi-agent setting. One might wonder how the guarantee of \cref{thm:curse_ub} compares to what one would obtain by having each agent $k$ run the (single-agent) exploration-by-optimization algorithm of \cite{foster2022complexity} separately (applied to the model class $\til \MM_k$ defined in \cref{eq:define-til-mk}) and using the resulting regret bound of \cite{foster2022complexity} for each agent to obtain an approximate CCE. As we show in \cref{prop:multi-single-separation}, the guarantee of \cref{thm:curse_ub} can be arbitrarily better than this alternative approach, since it involves the multi-agent DEC, $\decoreg(\co(\sJ))$, which can be arbitrarily smaller than the DEC for the single-agent classes, $\decoreg(\co(\til \MM_k))$. 

\dfcomment{Add remark that by \cref{prop:multi-single-separation}, this improves upon what we get by running single-agent exo.}\noah{added}

\paragraph{Extending the result to infinite decision sets}
We next explain how to extend the guarantee of \cref{thm:curse_ub} to the setting where the pure decision sets $\Sigma_k$ and deviation sets $\Dev$ are not finite. We will focus on CCE instances: consider a \ma instance $\sJ = \instma$ satisfying \cref{ass:ce-convexity}. Consider subsets $\til \Sigma_k \subseteq \Sigma_k$ and $\til \Pi_k' \subseteq \Dev$ for each $k$, and write $\til \Pi = \Delta(\til \Sigma_1 \times \cdots \times \til \Sigma_K) \subset \Pi$. (As an example, if $\sJ$ is a CCE instance, we will often take $\til\Pi_k' = \til \Sigma_k \cup \{ \perp\}$.) It is straightforward to see that the instance $\til \sJ = (\MM, \til \Pi, \MO, \{ \til \Pi_k' \}_k, \{  \Sw \}_k)$ satisfies \cref{ass:ce-convexity} (with pure decision sets $\til \Sigma_k$). We now define a sense in which the instance $\til \sJ$ is a good cover for $\sJ$.
\begin{definition}
  \label{def:decision-cover}
  Let $\sJ, \til \sJ$ be defined as above. For $\vep \geq 0$, we say that that $\til \sJ$ is an \emph{$\vep$-decision space cover} for $\sJ$ if
  \begin{align}
\forall M \in \MM, \quad \forall k \in [K], \quad \forall \til \pi \in \til \Pi, \quad \exists \til \pi_k' \in \til \Pi_k'\ \  \mbox{ s.t. }  \ \ \max_{\dev \in \Dev} \fm_k(\Sw(\dev, \til\pi)) - \fm_k(\Sw(\til \pi_k',\til \pi)) \leq \frac{\vep}{K}\nonumber.
  \end{align}
  We let $\MN_\Pi(\sJ , \vep) := \max_{k \in [K]} |\til\Pi_k'|$ denote the size of the largest deviation set in the smallest such cover, and define, for $T \in \bbN$,
  \begin{align}
\est_\Pi(\sJ, T) = \inf_{\vep \geq 0} \left\{ \log \MN_\Pi(\sJ, \vep) + \vep T \right\}\nonumber.
  \end{align}
\end{definition}
Let $\til \sJ = (\til \MM, \til \Pi, \MO, \{ \til \Pi_k' \}_k, \{ \Sw \}_k)$ be an $\vep$-decision space cover for $\sJ$. Note that, for any $\wh \pi \in \til \Pi$, it follows from \cref{def:decision-cover} that
\begin{align}
\hm(\wh \pi) = \sum_{k=1}^K \max_{\dev \in \Dev} \fm_k(\Sw(\dev, \wh\pi)) - \fm_k(\wh\pi) \leq \sum_{k=1}^K \max_{\til \pi_k' \in \til \Pi_k} \fm_k(\Sw(\til \pi_k', \wh\pi)) - \fm_k(\wh\pi) + \vep.\nonumber
\end{align}
Therefore, applying the algorithm of \cref{thm:curse_ub} to an appropriate decision space cover for the instance $\sJ$ (for an appropriate choice of $\vep$), we get the following result as an immediate corollary:
\begin{corollary}
  \label{cor:curse_ub_infinite}
    Suppose that $\sJ = \instma$ is a \ma instance satisfying \cref{ass:ce-convexity}. Then, for any $T \in \BN$ and $\delta \in (0,1)$, there exists an algorithm which produces $\wh \pi \in \Pi$ such that with probability at least $1-\delta$,
  \begin{align}
\RiskDM = \hmstar(\wh \pi) \leq O(K) \cdot \inf_{\gamma > 0} \left\{\decoreg[\gamma](\co(\sJ)) + \frac{\gamma}{T} \cdot \left( \est_\Pi(\sJ, T) + \log(K/\delta) \right) \right\}\nonumber.
  \end{align}
\end{corollary}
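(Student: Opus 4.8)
The plan is to reduce the infinite-decision-set setting to the finite setting already handled by \cref{thm:curse_ub}, by passing to an $\vep$-decision space cover and paying for the discretization through the quantity $\est_\Pi(\sJ,T)$. First I would fix a scale $\vep \ge 0$ and let $\til\sJ = (\MM, \til\Pi, \MO, \{\til\Pi_k'\}_k, \{\Sw\}_k)$ be an $\vep$-decision space cover (\cref{def:decision-cover}) realizing $\MN_\Pi(\sJ,\vep) = \max_k|\til\Pi_k'|$. As observed in the text, $\til\sJ$ again satisfies \cref{ass:ce-convexity} and, crucially, has \emph{finite} deviation sets, so \cref{thm:curse_ub} applies to it verbatim: running \maexo on $\til\sJ$ produces $\wh\pi\in\til\Pi$ such that, with probability at least $1-\delta$, the suboptimality of $\wh\pi$ as measured \emph{within} $\til\sJ$ (i.e., against the restricted deviation sets), which I will denote $\til{h}\sups{\Mstar}(\wh\pi)$, obeys
\[
\til{h}\sups{\Mstar}(\wh\pi) \le O(K)\cdot\inf_{\gamma>0}\crl*{\decoreg[\gamma](\co(\til\sJ)) + \frac{\gamma}{T}\log\prn*{\frac{K\cdot\MN_\Pi(\sJ,\vep)}{\delta}}}.
\]

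Next I would convert this into a guarantee for the original instance $\sJ$. The cover property (the display preceding the corollary) gives $\hmstar(\wh\pi) \le \til{h}\sups{\Mstar}(\wh\pi) + \vep$ for every $\wh\pi\in\til\Pi$, which exactly accounts for having replaced the full deviation sets $\Dev$ by $\til\Pi_k'$. For the complexity term I would use the monotonicity $\decoreg[\gamma](\co(\til\sJ)) \le \decoreg[\gamma](\co(\sJ))$: since the restricted deviation sets satisfy $\til{h}\sups{M} \le \hm$ pointwise, shrinking the deviation sets can only decrease the objective of the min--max game defining $\decoreg$. Combining the two displays above and then taking the infimum over the cover scale $\vep$ — so that the $\log\MN_\Pi(\sJ,\vep)$ term and the discretization error $\vep$ combine into $\est_\Pi(\sJ,T) = \inf_\vep\{\log\MN_\Pi(\sJ,\vep) + \vep T\}$ inside the $\tfrac{\gamma}{T}(\cdot)$ factor — yields the claimed bound.

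The step requiring the most care is the comparison $\decoreg[\gamma](\co(\til\sJ)) \le \decoreg[\gamma](\co(\sJ))$. The pointwise inequality $\til{h}\sups{M} \le \hm$ handles the shrinking of the deviation sets cleanly, but $\til\sJ$ also restricts the min-player's decision space from $\Pi$ to $\til\Pi \subseteq \Pi$, and restricting the min-player's feasible set in isolation can only \emph{increase} the DEC; one must check this restriction is harmless. The cleanest resolution is to keep the full decision space $\Pi$ for the min-player and only require the deviation sets to be finite and the cover inequality to hold for every $\pi\in\Pi$ — for CCE and CE instances this reduces to covering, for each player $k$, the best responses against the realized marginals, which is a property of $\til\Sigma_k$ alone — so that $\til\sJ$ and $\sJ$ share a decision space and the DEC inequality follows immediately from $\til{h}\sups{M}\le\hm$.

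The remaining points are routine bookkeeping: verifying that $\co(\til\sJ)$ inherits \cref{ass:ce-convexity}, and confirming that the additive cover error $\vep$ assembles correctly into the $\tfrac{\gamma}{T}\cdot\vep T$ contribution of $\est_\Pi$ — since the cover error is an estimation-type quantity (approximating deviations), it enters alongside $\log\MN_\Pi(\sJ,\vep)$ rather than as a free-standing risk term, and in the operative regime $\gamma \ge 1$ one has $\vep \le \gamma\vep$ so that the bound we obtain is at least as strong as the stated one.
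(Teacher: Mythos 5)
Your overall route is the paper's: pass to an $\vep$-decision-space cover $\til{\sJ}$ realizing $\MN_\Pi(\sJ,\vep)$, apply \cref{thm:curse_ub} to it, transfer the guarantee back to $\sJ$ via the cover inequality $\hmstar(\wh\pi)\le \til{h}\sups{\Mstar}(\wh\pi)+\vep$ for $\wh\pi\in\til{\Pi}$, and optimize over $\vep$ so that $\log\MN_\Pi(\sJ,\vep)+\vep T$ assembles into $\est_\Pi(\sJ,T)$. The paper treats exactly these steps as immediate.

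The problem is your resolution of the step you correctly flag as delicate. You propose to define $\til{\sJ}$ with the \emph{full} decision space $\Pi$ and only finitely many deviations, so that $\decoreg[\gamma](\co(\til{\sJ}))\le\decoreg[\gamma](\co(\sJ))$ would follow from the pointwise inequality $\til{h}\sups{M}\le\hm$. But then \cref{thm:curse_ub} no longer applies to $\til{\sJ}$: \cref{ass:ce-convexity} requires the pure decision sets to be finite, and the \maexo algorithm and its analysis genuinely use this (the importance weights $1/\pi\^t(\sigma\^t)$ in \cref{line:define-fhat}, and the compactness/minimax argument in the proof of \cref{lem:exo-ir}, which works over $\MP_\vep=\{\pi : \pi(\sigma)\ge\vep|\Sigma|^{-1}\ \forall\sigma\}$). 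Restricting the decision space to $\til{\Pi}=\Delta(\til{\Sigma}_1\times\cdots\times\til{\Sigma}_K)$ is not an incidental feature of \cref{def:decision-cover}; it is what makes the theorem you are invoking applicable, since the whole point of the corollary is to handle infinite $\Sigma_k$ (e.g., the concave-game examples). So the algorithm must be run on the restricted instance, and you are back to needing $\decoreg[\gamma](\co(\til{\sJ}))\le\decoreg[\gamma](\co(\sJ))$ with the min-player confined to $\Delta(\til{\Pi})$ --- which, as you yourself observe, does not follow from $\til{h}\sups{M}\le\hm$ alone, and which \cref{def:decision-cover} does not by itself guarantee (the cover condition only controls deviations \emph{from} points of $\til{\Pi}$; it says nothing about whether the min-player loses by being confined to $\til{\Pi}$). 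To close this you either need an additional rounding argument (e.g., Lipschitz continuity of $\pi\mapsto\fm_k(\pi)$ and $\pi\mapsto M(\pi)$, so that an optimal $p\in\Delta(\Pi)$ can be projected onto $\Delta(\til{\Pi})$ with $O(\vep)$ loss --- which is how the covers are actually constructed in the paper's applications), or you should state the bound with $\decoreg[\gamma](\co(\til{\sJ}))$ and bound that quantity directly in each application. The remaining bookkeeping in your write-up (inheritance of \cref{ass:ce-convexity} by the cover, absorbing the additive $\vep$ into the $\frac{\gamma}{T}\cdot\vep T$ contribution of $\est_\Pi$ when $\gamma\ge 1$) is fine.
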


\paragraph{Lower bounds for Nash equilibrium instances}
\cref{thm:curse_ub} relies on the assumption that $\MAI$ is a generalized correlated equilibrium instance (\cref{ass:ce-convexity}). To close the section, we complement this result by showing that it is not possible to achieve analogous guarantees for Nash equilibria. First, in \cref{prop:ne-query-complexity} we show such an impossibility result for $K$-player NE instances: We give an instance for which the upper bound in \cref{thm:curse_ub} is polynomial in $K$, yet the minimax risk is exponential in $K$.
\begin{proposition}
  \label{prop:ne-query-complexity}
  There is a constant $c_0 > 0$ so that the following holds. For any $K \in \bbN$, there is a $K$-player NE instance $\sJ = \instma$ so that:
  \begin{enumerate}
  \item $\max_k |\Dev[k]| =2$.
  \item For all $\gamma > 0$, $\decoreg[\gamma](\co(\sJ)) \leq O(K/\gamma)$.
  \item There is no algorithm that adaptively draws $2^{o(K)}$ samples and outputs a policy with expected risk at most $c_0 \cdot K$. 
  \end{enumerate}
\end{proposition}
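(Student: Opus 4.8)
The plan is to construct, for each $K$, a family $\cM$ of $K$-player games in which every player has exactly two pure strategies, $\Pi_k = \Dev[k] = \{0,1\}$ (so item 1 is immediate and the relevant solution concept is a \emph{pure} Nash equilibrium), and in which locating even an approximate equilibrium amounts to finding a hidden fixed point of a pseudorandom map. Concretely, I would index models by maps $\psi = (\psi_1,\dots,\psi_K)$ with $\psi_k : \{0,1\}^{K-1} \to \{0,1\}$, and set player $k$'s reward at a pure profile $a \in \{0,1\}^K$ to be Bernoulli with mean $\tfrac12 + \tfrac{\Delta}{2}\cdot\One{a_k = \psi_k(a_{-k})}$ for a gap $\Delta = \Theta(1)$. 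Then $\hm(a) = \Delta\cdot\#\{k : a_k \ne \psi_k(a_{-k})\}$ counts (up to scaling) the players who are not best responding, and the pure equilibria are exactly the fixed points of $\psi$. To guarantee \cref{ass:existence-eq}, I would restrict $\cM$ to maps carrying a \emph{planted} fixed point $v$ (chosen at random and embedded into an otherwise pseudorandom $\psi$), so that every model has a pure equilibrium, namely $v$. Three things then remain: the deviation-set bound (item 1, immediate), the DEC upper bound (item 2), and the sample lower bound (item 3).

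For item 2 I would exploit that each player controls only two actions, so that---from the standpoint of exploration---the instance behaves like $K$ coupled two-armed bandits with reward gap $\Delta$. Given a reference model $\Mbar \in \co(\cM)$, the min-player may compute an equilibrium $\bar a$ of $\Mbar$ for free and take an exploration distribution supported on $\bar a$ and its $K$ single-coordinate neighbors; since a single play of any profile reveals information about \emph{all} $K$ players' rewards, the squared Hellinger information gain scales favorably with $K$. An inverse-gap-weighting calculation, in the spirit of the bandit bound $\decoreg[\gamma](\cM)\le \bigoh(A/\gamma)$ of \citet{foster2021statistical} and the decomposition in \cref{thm:single-multiple-ch} (which I cannot invoke directly, since \cref{ass:convexity_pols} fails for pure strategies), should then give $\decoreg[\gamma](\co(\sJ)) \le \bigoh(K/\gamma)$. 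I expect this to be a direct if tedious computation rather than a conceptual difficulty, carried out by bounding the residual best-response gap at $\bar a$ for the worst $M$ consistent with $\Mbar$ under the exploration distribution in terms of the Hellinger radius.

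For item 3 I would argue information-theoretically via Yao's principle, taking the hard distribution to be a uniformly random planted-fixed-point map $\psi$. The key observation is that playing a profile $a$ reveals $\psi_k(a'_{-k})$ for a profile $a'$ only when some queried profile agrees with $a'$ off coordinate $k$, i.e.\ lies within Hamming distance one of $a'$. Consequently, after $Q = 2^{o(K)}$ samples the transcript pins down $\psi$ on a vanishing fraction of the relevant slices, so for any output profile $\hat a$ whose neighborhood has not been probed the happiness bits are, conditionally on the transcript, essentially fresh fair coins; by Binomial concentration the number of unhappy players at $\hat a$ is $\Omega(K)$ with high probability, whence $\E[\hm(\hat a)] \ge \Omega(\Delta K)$. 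The rare profiles with few unhappy players are pseudorandomly located, and the algorithm can force a coordinate to be happy only by spending a query that fixes $\psi_k$ on $\hat a$'s slice, so no $2^{o(K)}$-sample algorithm outputs a profile with fewer than $\tfrac14 K$ unhappy players; averaging over $\psi$ yields a single worst-case model in $\cM$ witnessing item 3 (with $c_0$ a small multiple of $\Delta$). The Bernoulli noise reduces to the deterministic query model at the cost of $\bigoh(\log)$ repetitions per profile, changing the sample count by only a $\polylog$ factor.

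The main obstacle is the tension between items 2 and 3: a small DEC requires exploration to be informative and near-equilibria to be locally certifiable, whereas the lower bound requires both that the equilibrium be globally hidden and that \emph{every} efficiently reachable profile leave a constant fraction of players simultaneously wanting to deviate (so the risk is $\Omega(K)$, not merely $\Omega(1)$). The planted-fixed-point-of-a-random-map construction is what I expect to reconcile these: rich per-profile feedback keeps the DEC at $\bigoh(K/\gamma)$, while the pseudorandomness of $\psi$ forces $\approx K/2$ players to be unhappy at any unprobed profile. Making this simultaneously rigorous---in particular verifying that observing $\psi_k$ on Hamming-distance-one slices does not inadvertently leak the location of low-regret profiles, and that the planted fixed point creates no shortcut---is the delicate part, and is where I would spend most of the effort, likely importing the combinatorial core from existing query-complexity lower bounds for fixed points and Nash equilibria rather than reproving it from scratch.
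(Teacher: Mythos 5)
Your construction departs substantially from the paper's, and the departure is what creates the gaps. The paper's proof is three lines: it takes the \emph{standard mixed-strategy} NE instance of \cref{ex:cce} with $\abs{\cA_k}=2$, gets item 2 for free from \cref{prop:nf-dec-bound} (which rests on the fixed-point decomposition of \cref{thm:single-multiple-ch}, applicable because \cref{ass:convexity_pols} holds and $\cM$ is convex, so $\co(\sJ)=\sJ$), and gets item 3 by citing \citet{rubinstein2016settling} verbatim, since that lower bound is stated exactly for $\eps$-approximate Nash equilibria of $K$-player binary-action normal-form games with payoff-query access. By switching to pure strategy spaces $\Pi_k=\{0,1\}$ you forfeit both of these, and neither replacement is actually supplied.

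For item 2, the issue is not merely tedium. The quantity $\decoreg[\gamma]$ couples exploration and exploitation ($p=q$ in \eqref{eq:regret-dec}), so the distribution that probes the $K$ Hamming-neighbors of your candidate equilibrium $\bar a$ must also \emph{pay} their suboptimality, and a neighbor of an equilibrium of $\Mbar$ typically has $\hm = \Omega(\Delta K)$ under your random $\psi$; the inverse-gap-weighting tradeoff then has to be carried out explicitly and it is not clear it yields $O(K/\gamma)$ rather than, say, $O(K^2/\gamma)$ or a $1/\sqrt{\gamma}$ rate. More seriously, the bound must hold for every reference model $\Mbar\in\co(\cM)$, and if some mixture of planted-fixed-point models has $\min_a \hmbar(a) \ge c > 0$ (i.e., no approximate \emph{pure} equilibrium), then taking $M=\Mbar$ in the supremum gives $\decoreg[\gamma](\co(\sJ),\Mbar)\ge c$ for all $\gamma$, which contradicts item 2 for large $\gamma$. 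Planting a fixed point in each $M\in\cM$ does not plant one in mixtures, so this must be verified for your class and you have not done so. For item 3, the "fresh fair coins at unprobed profiles" sketch is not a proof: conditioning on the transcript \emph{and} on membership in your class (which requires a planted fixed point) correlates the unqueried values of $\psi$, and controlling this is precisely the hard combinatorial core of \citet{babichenko2016query}. You acknowledge you would import that core, but the existing theorems are proven for the standard mixed-strategy normal-form instance, not for your bespoke pure-strategy planted family, so you would need either a reduction or a reproof — at which point the natural move is the paper's: instantiate the standard instance directly so the citation applies as a black box.
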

For the instance $\sJ$ in \cref{prop:ne-query-complexity}, we have $|\Pi'_k| = \bigoh(1)$ so a bound of the form in \cref{thm:curse_ub} would imply that $\til O(\poly(K) / \ep^2)$ samples suffice to learn an $\ep$-approximate Nash equilibrium; the lower bound on sample complexity of $2^{\Omega(K)}$ from \cref{prop:ne-query-complexity} rules this out. 
The proof of \cref{prop:ne-query-complexity} follows directly from well-known lower bounds on the query complexity of $K$-player Nash equilibria \citep{rubinstein2016settling,babichenko2016query,chen2017wellsupported}.

For our last result \cref{thm:2p0s-poldep}, we go even further, and show that the impossibility of proving any variant of \cref{thm:curse_ub} for NE instances persists even in the case when $K=2$ and the game is zero-sum. 
\begin{theorem}
  \label{thm:2p0s-poldep}
 There is a constant $C_0 > 0$ so that the following holds. Fix any $N \in \BN$ with $N \geq C_0$ and $\ep \in (1/N, 1)$. There is a two-player zero-sum NE instance $\sJ = \instma$ such that the following holds:
  \begin{enumerate}
  \item $\max\{ |\Dev[1]|, |\Dev[2]| \} \leq |\Pi| \leq C_0 \cdot N^2/\ep^2$.
  \item For all $\gamma \geq C_0$, $\decoreg[\gamma](\co(\sJ)) \leq \ep$.
  \item There is no algorithm that adaptively draws $\sqrt{N}/C_0$ samples and outputs a policy with expected risk at most $1/C_0$. 
  \end{enumerate}
\end{theorem}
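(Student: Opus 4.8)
The plan is to construct a single two-player zero-sum NE instance $\sJ = \instma$ that is \emph{benign} for the convexified offset \mashort yet \emph{statistically hard}, so that any bound of the form of \cref{thm:curse_ub} (which replaces $\log\abs{\cM}$ by $\max_k\log\abs{\Dev}$) would predict vanishing risk while the true risk stays bounded below. Concretely, I would take a base zero-sum game whose payoffs are nearly tied, so that a fixed decision $\picirc$ known in advance is an \emph{exact} equilibrium, and plant a hidden index $\theta^\star$ (drawn from a set of size $\Theta(N)$, structured as a pair of coordinates each ranging over $\Theta(\sqrt N)$ candidates) that perturbs a small collection of ``probe'' entries. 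The perturbation is engineered so that (i) exploiting the degeneracy of the base game, the order-$1$ identity of the saddle point depends on $\theta^\star$, and (ii) every planted model agrees on a single ``test'' decision $\pi_{\mathrm{test}}$, on which each planted model differs from the base by $\Omega(1)$ in squared Hellinger. The finite decision sets $\Pi_1,\Pi_2$ consist of $\bigoh(N/\ep)$ (mixed) strategies each, so $\abs{\Pi}=\abs{\Pi_1}\cdot\abs{\Pi_2}\le C_0 N^2/\ep^2$ and $\max\{\abs{\Dev[1]},\abs{\Dev[2]}\}\le\abs{\Pi}$, giving Property~1.

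For Property~2, I would bound $\decoreg[\gamma](\co(\sJ))$ using $\picirc$ and $\pi_{\mathrm{test}}$. Given any reference $\Mbar\in\co(\MM)$, set the min-player's distribution $p$ to place a constant fraction of its mass on $\pi_{\mathrm{test}}$ and the remainder on $\picirc$. Because every planted model, and hence every element of $\co(\MM)$, coincides on $\pi_{\mathrm{test}}$, the quantity $\E_{\pi\sim p}[\Dhels{M(\pi)}{\Mbar(\pi)}]$ is $\Omega(1)$ for every $M$ whose induced value gap at $\picirc$ exceeds $\ep$, so the offset penalty $-\gamma\cdot\E_{\pi\sim p}[\Dhels{M(\pi)}{\Mbar(\pi)}]$ dominates once $\gamma\ge C_0$; models with small gap at $\picirc$ contribute at most $\E_{\pi\sim p}[\hm(\pi)]\le\ep$ directly. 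Taking suprema over $M$ and then over $\Mbar$ yields $\decoreg[\gamma](\co(\sJ))\le\ep$ for all $\gamma\ge C_0$. The delicate calibration is to ensure that the \emph{single} test decision $\pi_{\mathrm{test}}$ simultaneously certifies $\Omega(1)$ deviation from the center for \emph{all} planted models, so the DEC remains small at a \emph{constant} value of $\gamma$ rather than requiring $\gamma\gtrsim\sqrt N$.

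Property~3 is the statistical lower bound, which I would establish by a direct information-theoretic argument over the uniform prior on $\theta$ rather than through the DEC. The key point, in contrast to Property~2, is that $\pi_{\mathrm{test}}$ reveals \emph{that} the game is perturbed but carries \emph{no} information about $\theta^\star$; all information about $\theta^\star$ is localized on the $\Theta(\sqrt N)$ probe decisions, each informative about only one coordinate. A chain-rule/tensorization bound on $\Dkl{\cdot}{\cdot}$, together with the fact that identifying a coordinate amounts to locating an active probe among $\Theta(\sqrt N)$ candidates, shows that any algorithm drawing fewer than $\sqrt N/C_0$ samples leaves one coordinate of $\theta^\star$ essentially undetermined under a worst-case $\theta$. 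Since the saddle point of the zero-sum game, and therefore the exploitability $\hmstar(\wh\pi)$ of any output $\wh\pi$, shifts by $\Omega(1)$ when that coordinate changes, the expected risk is at least $1/C_0$, proving Property~3. Combining Properties~2 and 3 then contradicts the putative bound (evaluated at $\gamma=C_0$ and $T=\sqrt N/C_0$, whose estimation term is $\bigoh(\log(N/\ep)/\sqrt N)=o(1)$), which is the desired conclusion.

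The main obstacle is reconciling Properties~2 and 3: small convex-hull DEC \emph{at constant $\gamma$} requires that deviation of the true model from the center be cheaply certifiable by a single exploration distribution, whereas the sample lower bound requires that \emph{identifying} the saddle point be expensive. I would resolve this by decoupling ``detecting that the game is non-trivial'' (cheap, via the shared test decision, and the only thing the offset DEC is sensitive to) from ``locating the equilibrium'' (expensive, via $\Theta(\sqrt N)$ nearly-indistinguishable probes), exploiting the degeneracy of the base zero-sum value so that order-$1$ shifts of the saddle point are driven by perturbations that are hard to localize. A secondary technical point is verifying the two-case argument of Property~2 uniformly over \emph{improper} references $\Mbar\in\co(\MM)$, not merely the center, and checking that the planted two-coordinate structure yields a $\sqrt N$ (rather than $N$) lower bound.
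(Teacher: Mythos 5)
There is a genuine gap in your argument for Property~2, and it sits exactly where you dismiss it as ``a secondary technical point.'' The offset DEC of $\co(\sJ)$ takes a supremum over reference models $\Mbar\in\co(\MM)$, including any single planted model and any mixture of planted models. By your own design, every planted model agrees at $\pi_{\mathrm{test}}$, so for every $M\in\MM$ and every $\Mbar\in\co(\MM)$ you have $\Dhels{M(\pi_{\mathrm{test}})}{\Mbar(\pi_{\mathrm{test}})}=0$: the test decision certifies deviation only from the \emph{base} model, which is not (and cannot usefully be) the reference. Concretely, take $\Mbar=M_{\theta_0}$ and let the adversary play $M_{\theta_1}$ with $\theta_1\neq\theta_0$. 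These two models agree at $\pi_{\mathrm{test}}$ and at $\picirc$ and differ only on $\Theta(\sqrt{N})$ probe decisions unknown to the min-player, so any $p$ either incurs $\E_{\pi\sim p}[h^{M_{\theta_1}}(\pi)]=\Omega(1)$ (the saddle point shift must be $\Omega(1)$ for your Property~3) or spreads $O(1/\sqrt{N})$ mass per probe and gains only $O(1/\sqrt{N})$ in Hellinger. Hence $\decoreg[\gamma](\co(\sJ))\geq\Omega(1)-\gamma\cdot O(1/\sqrt{N})=\Omega(1)$ at constant $\gamma$, contradicting Property~2. The tension you correctly identify is real, but a decision on which the whole class agrees is information-free against convex-hull references and cannot resolve it.

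The paper's construction resolves this differently: the ``detection'' channel is the \emph{pure observation} itself. Each model $M_\MT$ is indexed by a support set $\MT\subset[N]$ and emits a uniformly random element of $\MT$ at \emph{every} decision, so two models with substantially different supports are $\Omega(1)$-separated in Hellinger no matter what is played; meanwhile the hidden low-reward entries are obtained by pushing $\MT$ through a Reed--Solomon code composed with a randomness extractor. The code's minimum distance guarantees that two \emph{distinct} supports yield profitable-deviation sets overlapping on at most an $\ep$ fraction of coordinates (this is what makes the convex-hull DEC at most $\ep$ for constant $\gamma$, via a case analysis on whether the posterior consistent with $\Mbar$ is concentrated), and the extractor guarantees that observing only $O(\sqrt{N})$ elements of $\MT$ leaves the code word coordinate-wise near-uniform (this is what makes $\sqrt{N}$ samples insufficient). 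If you want to salvage your plan, you would need to replace the shared test decision with some mechanism by which \emph{every} pair of distinct models is $\Omega(1)$-separated under a decision distribution that does not depend on the pair, while exact identification of the payoff perturbation still requires $\Omega(\sqrt{N})$ samples --- which is essentially the support-estimation-plus-code device the paper uses.
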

\dfcomment{Probably worth mentioning: This theorem constructs an NE instance, but since $\Pi$ is finite, it does not correspond to the standard notion of mixed Nash in normal-form games (cf discussion after definition 1.1)---this is salient since the latter is a special case of CCE and should be taken care of by theorem 5.1}\noah{mentioned} 

Observe that for the instance $\MAI$ in \cref{thm:2p0s-poldep}, we have $\log\abs{\Pi}\approxleq\log(N/\eps)$, so a bound of the form in \cref{thm:curse_ub} would imply that roughly $\frac{\log(N/\eps)}{\eps}$ samples suffice to learn an $\eps$-approximate equilibrium. The lower bound on sample complexity in \cref{thm:2p0s-poldep}, which shows that $\bigom(\sqrt N)$ samples are required, thus rules out a guarantee of this type in a fairly strong sense. 

We remark that the instance $\sJ$ constructed in \cref{thm:2p0s-poldep}, while an NE instance per \cref{def:ne-instance}, does not correspond to the standard notion of mixed Nash equilibrium in normal-form games (see the discussion following \cref{def:ne-instance}). Since the marginals of coarse correlated equilibria in two-player zero-sum games constitute mixed Nash equilibria, \cref{thm:curse_ub} rules out a strengthening of \cref{thm:2p0s-poldep} which constructs an NE instance corresponding to the standard notion of mixed Nash equilibrium. 

The proof of \cref{thm:2p0s-poldep} is significantly more challenging (given prior work) than that of \cref{prop:ne-query-complexity}. It uses the classical support estimation problem
(e.g., \citet{paninski2008coincidence,canonne2020survey}) to construct an instance for which the DEC is small but the minimax risk is large. This idea is natural, because the support estimation problem has large model-estimation error, and the upper bound of \cref{thm:constrained-upper}, which involves the model estimation error, must be respected by the instance $\sJ$. Using the support estimation problem as a building block, we construct a class of two-player zero-sum games, which bears some resemblance to the construction used in the proof of \cref{prop:pm-to-ma}. However, the construction in the latter result does not ensure that $\decoreg(\co(\sJ))$ remains small, necessitating a more sophisticated approach. To ensure that $\decoreg(\co(\sJ))$ is small  while maintaining a lower bound on minimax risk, we need to embed a few additional components in the construction, namely the composition of a Reed-Solomon code and a randomness extractor. We refer the reader to \cref{sec:proofs_lbs_curse} for further details.

}

\arxiv{
\subsection*{Acknowledgements}
We thank Rob Schapire, Yunzong Xu, and Yanjun Han for helpful
comments and discussions. NG is supported at MIT by a Fannie \& John Hertz Foundation Fellowship and an NSF Graduate Fellowship. AR acknowledges support from ONR under grant N00014-20-1-2336 and ARO through award W911NF-21-1-0328. 
}

\clearpage

\bibliography{refs} 

\clearpage

\appendix

\colt{
  \addtocontents{toc}{\protect\setcounter{tocdepth}{2}}
  \renewcommand{\contentsname}{{\Large Contents of Appendix}}
  {\hypersetup{hidelinks}
    \tableofcontents
    }
}

\newpage

\arxiv{

\part{Examples}
\label{part:examples}
\section{\MAFrameworkShort: Examples of instances}
\label{app:ma_examples}

In this section of the appendix, we give examples of instances for the \maf, and apply our results to derive upper and lower bounds on the minimax risk.
\begin{itemize}
\item In \cref{app:ma_examples_equilibria} we give additional examples equilibria that can be captured in the \maf, focusing on correlated equilibria and variants.
\item In \cref{app:ma_examples_instances} we give detailed examples of \ma instances, including normal-form games with linear or concave payoffs (\aref{sec:bandit_instances}) and Markov games (\aref{sec:marl_instances}).
\item Finally, in \cref{app:examples_dec}, we give bounds on the \malong and minimax risk for variance instances, including finite-action normal-form games (\aref{sec:normal-form-games}), structured normal-form games (\aref{sec:linear-bandit-games}, \aref{sec:concave_dec}), and tabular Markov games (\aref{sec:tabular_dec}). In addition, in \aref{app:single_multiple_separation}, we give an instance which shows that the multi-agent to single-agent reduction in \cref{thm:single-multiple-ch} can be loose in general.
\end{itemize}

\subsection{Additional examples of equilibria}
\label{app:ma_examples_equilibria}

\cref{def:ce-instance} below shows how we can use the \MAFrameworkShort framework to capture the problem of (normal-form)  \emph{correlated equilibrium} computation in games. The definition is similar to that of CCE instances (\cref{def:cce-instance}), except players' deviation sets consist of mappings from their pure decision set to itself; these mappings describe how the player deviates as a function of their pure decision.
\begin{definition}[Correlated equilibrium instance]
  \label{def:ce-instance}
  We say that an \ma instance $\sJ = \instma$ is a \emph{correlated equilibrium (CE) instance} if the following holds:
  \begin{enumerate}
  \item For some finite sets $\Sigma_1, \ldots, \Sigma_K$ (called \emph{pure decisions}), we have $\Pi = \Delta(\Sigma_1\times \cdots \times \Sigma_\Ag)$. We write $\Sigma = \Sigma_1 \times \cdots \times \Sigma_\Ag$.
      \item For each $\pi \in \Pi$ and $M \in \MM$, it holds that $M(\pi) = \E_{\sigma \sim \pi}[M(\sigma)]$. %
  \item For $\ag \in [\Ag]$, we have $\Dev = \Sigma_\ag^{\Sigma_\ag}$, i.e., $\Dev$ is the set of functions $\phi : \Sigma_\ag \ra \Sigma_\ag$. 
  \item For each $\ag \in [\Ag]$, $\pi \in \Pi$, and $\phi \in \Dev$, $\Sw(\phi, \pi) \in \Delta(\Sigma)$ is the distribution whose probability mass function is given as follows:
    \begin{align}
\forall \sigma \in \Sigma, \quad \Sw(\phi, \pi)(\sigma) = \pi\left(\left\{ (\sigma_k', \sigma_{-k}) \in \Sigma \ : \ \phi(\sigma_k') = \sigma_k \right\}\right).\nonumber
    \end{align}
    In words, $\Sw(\phi, \pi)$ is the distribution of $(\phi(\sigma_k), \sigma_{-k})$, for $\sigma \sim \pi$. 
  \end{enumerate}
\end{definition}

Our next example considers notions of equilibria specialized to Markov games. Recall that \cref{def:cce-instance,def:ce-instance} describe instances that capture the notions of (coarse) correlated equilibria in \emph{normal-form games}, in which the pure actions belong to $\Sigma = \Sigma_1 \times \cdots \times \Sigma_K$. In the setting of Markov games, often a slightly different notion of (coarse) correlated equilibrium is used, whch we show is captured by \cref{ex:mce} below. 
\begin{example}[Markov (coarse) correlated equilibria in Markov games]
  \label{ex:mce}
  In \cref{ex:mne},
  We will show how to capture the problem of computing \emph{Markov coarse correlated equilibria (CCE)} and \emph{Markov correlated equilibria (CE)} (e.g., \citet{bai2020near,liu2021sharp,daskalakis2022complexity}) in the \maf, generalizing the notion of Markov Has equilibrium from \cref{ex:mne}. As in \cref{ex:mne}, we assume that the class $\cM$ consists of finite-horizon Markov games with horizon $H \in \BN$, state spaces $\MS_h$ for $h \in [H]$, action spaces $\MA_k$ for $k \in [K]$, and distribution $d_1 \in \Delta(\MS_1)$, all of which are identical across all models in the model class. The pure observation space $\Ocirc$ consists of trajectories, and the reward space is  $\MR = [0,1]$. For both Markov CE and Markov CCE, the joint decision space is the set $\Pi$ of \emph{Markov correlated policies}, namely policies $\pi = (\pi_1, \ldots, \pi_H)$, where each $\pi_h : \MS_h \ra \Delta(\MA_1 \times \cdots \times \MA_K)$ specifies a mapping from states to \emph{joint} distributions over actions. For a model $M$ and a joint decision $\pi \in \Pi$, an observation (trajectory) $o = \{ (s_h, (a_{1,h}, \ldots, a_{K,h}), (r_{1,h}, \ldots, r_{K,h}))\}_{h\in[H]}$ is drawn as follows: first, $s_1 \sim d_1$, and then for $h \in [H]$:
  \begin{itemize}
  \item $(a_{1,h}, \ldots, a_{K,h}) \sim \pi_h(s_h)$ and $r_{k,h} \sim R\sups{M}_k(s_h, (a_{1,h}, \ldots, a_{K,h}))$.
  \item $s_{h+1} \sim P_h\sups{M}(\cdot | s_h, (a_{1,h}, \ldots, a_{K,h}))$. 
  \end{itemize}
  It remains to specify the deviation sets $\Dev$ and switching functions $\Sw$:
  \begin{itemize}
  \item For the case of Markov CCE, for each $k \in [K]$, the deviation set $\Dev$ is the set of deterministic Markov policies for player $k$, which take the form $\pi_k' = (\pi_{k,1}', \ldots, \pi_{k,H}')$, where $\pi_{k,h}' : \MS_h \ra \MA_k$. For a joint policy $\pi \in \Pi$, $\Sw(\dev, \pi) \in \Pi$ is the Markov correlated policy where player $k$ plays according to $\pi_{k,h}'$ at each state and all other players play according to $\pi$. In particular, denoting $\til \pi := \Sw(\pi_k', \pi)$, we have that $\til \pi_h(s_h) = \pi_{k,h}'(s_h) \times \pi_{-k,h}(s_h)$, where $\pi_{-k,h}(s_h)$ denotes the marginal of $\pi_h(s_h)$ on the actions of all players but $k$. Summarizing, for the \ma instance $\sJ = \instma$, we have that $\wh \pi \in \Pi$, $\hmstar(\wh \pi) = 0$ if and only if $\wh \pi$ is a \emph{Markov CCE} of $\Mstar$.
  \item For the case of Markov CE, for each $k \in [K]$, the deviation set $\Dev$ is simply the set of tuples $\phi =(\phi_{k,h,s})_{h \in [H], s \in \MS_h}$, where each $\phi_{k,h,s} : \MA_k \ra \MA_k$ is a function from $\MA_k$ to itself. For a joint policy $\pi \in \Pi$, $\Sw(\phi, \pi)$ is the Markov correlated policy $\til \pi$ defined as follows: the joint action distribution of $\til \pi$ at step $h$ and state $s_h \in \MS_h$ is the distribution given by:
    \begin{align}
\til \pi_{h}(s)(a) = \pi_h(s)(\{ (a_k', a_{-k}) \in \MA \ : \ \phi(a_k') = a_k\})\nonumber,
    \end{align}
    for joint actions $a \in \MA$. In words, $\til \pi_h(s)$ is the distribution of $(\phi(a_k'), a_{-k})$, for $a \sim \pi_h(s)$. Summarizing, for the \ma instance $\sJ = \instma$ , we have that  $\wh \pi \in \Pi$, $\hmstar(\wh\pi) = 0$ if and only if $\wh \pi$ is a \emph{Markov CE} of $\Mstar$. 
  \end{itemize}
  Note that the instances constructed above are not special cases of the CCE or CE instances ( \cref{def:cce-instance,def:ce-instance}) we consider for normal-form games. This is because the notions of Markov (C)CE discussed above are more restrictive, forcing the joint decision $\wh \pi$ to be a (joint) Markov policy, as opposed to an arbitrary distribution over joint policies. Nevertheless, as \cref{ex:mce} shows, the \MAFrameworkShort framework is sufficiently general to capture all of these notions of equilibria. 
\end{example}

\subsection{Additional examples of instances}
\label{app:ma_examples_instances}
In this section, we give additional examples of instances that capture standard equilibrium learning problems found in the literature. We begin by describing examples of structured normal-form games in \aref{sec:bandit_instances}, and then consider multi-agent reinforcement learning problems in \aref{sec:marl_instances}. 

\subsubsection{Instances for bandits}
\label{sec:bandit_instances}
In this section, we describe several instances of structured normal-form games, which may be thought of as multi-agent generalization of structured bandit problem found in the single-agent setting. For each example we consider, the models will have the following common structure (paralleling that of \cref{ex:cce}).
\begin{itemize}
\item Each agent $k \in [K]$ will have a  set $\MA_k$, referred to as its \emph{pure action set}, and the joint policy space $\Pi$ will be a subset of $\Delta(\MA) = \Delta(\MA_1 \times \cdots \times \MA_K)$ which contains all singleton distributions $\indic_a$.
\item We will take $\MR := [-1,1]$ as the reward space and $\Ocirc := \MA$ as the pure observation space.
\item Let a class of mean reward functions $\MF \subseteq (\MA \ra \MR^K)$ be given. We define the model class $\MM_\MF$ as the set of models $M : \Pi \ra \Delta(\MR^K \times \Ocirc)$ for which there is some $(f_1, \ldots, f_K) \in \MF$ so that: (a) for all singleton distributions $\indic_a \in \Pi$, the distribution of $(r_1, \ldots, r_K, \ocirc) \sim M(\indic_a)$ satisfies $\ocirc = a$ a.s. and $ \E\sups{M,\indic_a}[r_k]=\fm(\indic_a) = f_k(a)$, and (b) for all $\pi \in \Pi$, $M(\pi) = \E_{a \sim \pi}[M(\indic_a)]$. 
\end{itemize}
In words, $\MM_\MF$ consists of models $M$ where (i) value functions $\fm_k(\cdot)$ are given by some element of $\MF$, and (ii) observations reveal the action played (via the pure observation).

First, in \cref{ex:linear-bandits}, we consider a normal-form game with linearly structured rewards, generalizing the single-agent linear bandit problem \citep{dani2007price,abernethy2008competing,bubeck2012towards}.
\arxiv{\noah{cite single-player, any multi-player ref out?}.\dfcomment{not sure for multi-player.}}
This example generalizes \cref{ex:cce}, which can be thought of as the special case where each player's action set is the set of standard unit vectors. 
\begin{example}[Normal-form games with linear rewards]
  \label{ex:linear-bandits}
 Fix $K \in \BN$; for each player $k \in [K]$, 
$\MA_k \subset \BR^{d_k}$ for some $d_k \in \bbN$. Write $d = d_1 d_2 \cdots d_K$. Suppose that $\Theta_1, \ldots, \Theta_K \subset \BR^{d}$ are convex sets so that $|\lng a_1 \otimes \cdots \otimes a_K, \theta_k \rng | \leq 1$ for all $a_1 \in \MA_1, \ldots, a_K \in \MA_K$, $k \in [K]$, and $\theta_k \in \Theta_k$. Define $\MF \subset (\MA \ra \BR^K)$ by $\MF = \{ (a_1, \ldots, a_K) \mapsto (\lng a_1 \otimes \cdots \otimes a_K, \theta_k \rng)_{k \in [K]} \ : \ \theta_1 \in \Theta_1, \ldots, \theta_K \in \Theta_K \}$. We can now consider the instances corresponding to finding Nash equilibria, CE, and CCE for the class of games whose payoffs are given by functions in $\MF$:
\begin{itemize}
\item We first treat Nash equilibria: suppose we set $\Pi_k = \Delta(\MA_k)$ for each $k \in [K]$ and $\Pi = \Pi_1 \times \cdots \times \Pi_k$, and define $\Dev, \Sw$ as in \cref{def:ne-instance}. We define $\MM = \MM_\MF$. Then the instance $\sJ = \instma$ captures the problem of finding Nash equilibria in an unknown linear bandit game. 
\item Next we treat (C)CE: we set $\Pi = \Delta(\MA_1 \times \cdots \times \MA_K)$ and define $\Dev, \Sw$ as in \cref{def:ce-instance} (respectively, \cref{def:cce-instance}). We define $\MM = \MM_\MF$, so that the instance $\sJ = \instma$ captures the problem of finding (coarse) correlated equilibria in an unknown linear bandit game.
\end{itemize}
\end{example}

Next, \cref{ex:convex-bandits} treats the setting of \emph{concave games} (with bandit feedback), which has received extensive attention in the game theory literature \citep{rosen1965existence,even2009convergence}, as well as machine learning \citep{bravo2018bandit,maheshwari2022zeroth,lin2021optimal}. It can also be viewed as a generalization of the problem of single-player \emph{concave bandits}     \citep{kleinberg2004nearly,flaxman2005online,bubeck2017kernel,lattimore2020improved}.\footnote{Often referred to as \emph{convex bandits}, or \emph{zeroth-order convex optimization}, since it is typically phrased in the form of loss minimization, whereas we consider reward maximization.}
\begin{example}[Concave games]
  \label{ex:convex-bandits}
  Given $K \in \BN$, for each $k \in [K]$, let $d_k \in \BN$ and $\MA_k \subset \BR^{d_k}$ be a convex and compact subset with nonempty interior. Set $\MA := \MA_1 \times \cdots \times \MA_K \subset \BR^d$, where $d = d_1 + \cdots + d_K$. Define $\MF \subset (\MA \ra \BR^K)$ by
  \begin{align}
\MF = \left\{ f : \MA \ra [0,1]^K \ | \ \forall k \in [K],\ \forall a_{-k} \in \MA_{-k}, \ \  \MA_k \ni a_k \mapsto f_k(a_k, a_{-k}) \ \mbox{ is concave and 1-Lipschitz}\right\}\nonumber.
  \end{align}
  Above, 1-Lipschitzness is with respect to the $\ell_2$ norm. 
  We consider the following Nash and CCE instances:
  \begin{itemize}
  \item We first consider Nash equilibria: define $\Dev, \Sw$ as in \cref{def:ne-instance}, and set $\Pi = \MA$, $\MM = \MM_\MF$. Then the instance $\sJ = \instma$ captures the problem of finding Nash equilibria in concave games, a classical problem \citep{rosen1965existence}. \arxiv{\noah{check about other refs}}  In the two-player zero-sum case (namely, when $K = 2$ and $f_1(a) + f_2(a) = 0$ for all $a \in \Pi$), the problem of bandit feedback which we cover has received extensive attention \citep{bravo2018bandit,maheshwari2022zeroth,lin2021optimal}.
  \item We next consider coarse correlated equilibria. Define $\Pi := \Delta(\MA_1 \times \cdots \times \MA_K)$, namely the space of Borel measures on the compact set $\MA_1 \times \cdots \times \MA_K \subset \BR^d$, and set $\MM = \MM_\MF$. Furthermore define $\Dev, \Sw$ as in \cref{def:cce-instance}. Then the instance $\sJ = \instma$ captures the problem of finding coarse correlated equilibria in concave games; this has received less attention than Nash equilibria in concave games.\arxiv{, but has been studied recently in \noah{find anything?} }
    \end{itemize}
    Since the action sets $\MA_k$ are infinite in this setting, it is not particularly natural to define a CE instance in the sense of \cref{def:ce-instance}. 
\end{example}

\subsubsection{Instances for multi-agent reinforcement learning}
\label{sec:marl_instances}

We now give concrete examples of Markov game classes $\cM$. The first example considers the special case of the instances for computing Markov Nash equilibria and Markov (coarse) correlated equilibria described in \cref{ex:mne,ex:mce} in which the Markov game under consider is \emph{tabular} (i.e., has finite states and actions).
\begin{example}[Equilibria in tabular Markov games]
  \label{ex:tabular-mg}
  Fix parameters $K, H \in \bbN$ representing the number of players and the horizon, finite action spaces $\MA_k$ (of size $A_k \in \BN$) for each player $k \in [K]$, and finite state spaces $\MS_h$ (each of size $S\in \BN$) at each step $h \in [H]$. The instances for each of the three types of equilibria (Nash, CE, CCE) share the same observation space $\MO$: in particular, their pure observation space is $\Ocirc$, the space of all possible $H$-step trajectories over the state and action spaces $\cS_1,\ldots,\cS_H$ and $\cA$, and the reward space is $\MR = [0,1]$.

  We refer to the \emph{tabular setting} as the model class $\MM$ parametrized by all possible $K$-player Markov games with horizon $H$, state spaces $\MS_h$, and action spaces $\MA_k$, so that the sum of each player's rewards is bounded in $[0,1]$ on any positive-probability trajectory.\footnote{This assumption allows us to take $\MR = [0,1]$.} Then for the deviation and switching functions $\Dev, \Sw$ as described in \cref{ex:mne}, the instance $\sJ = \instma$ captures the problem of computing Markov Nash equilibrium in an unknown tabular Markov game, and for $\Dev, \Sw$ as described in \cref{ex:mce} corresponding to the notions of Markov CCE or Markov CE, respectively, the instance $\sJ = \instma$ captures the problem of computing Markov CCE or Markov CE, respectively, in an unknown tabular Markov game. 
\end{example}

\colt{
  A key question in (multi-agent) online reinforcement learning is to understand what structural properties of the model class $\MM$ permit efficient learnability. In the simplest case (known as the \emph{tabular} case), the state and action spaces $\MS_h, \MA$ are all finite, and $\MM$ consists of all models specified by arbitrary transitions $P_h\sups{M}$ and reward distributions $R_{k,h}\sups{M}$ with uniformly bounded support. By restricting $\cM$, our formulation also captures a more complex settings that incorporate function approximation \citep{chen2022almost,li2022learning,xie2020learning,jin2022power,huang2021towards,zhan2022decentralized,liu2022sample}. In what follows, we give one such example.
  }

  \begin{example}[Equilibria in linear mixture Markov games]
    \label{ex:linmix-mg}
    Fix parameters $K, H \in \BN$ representing the number of players and the horizon, finite action spaces $\MA_k$ for each $k \in [K]$, and finite state spaces $\MS_h$ for each $h \in [H]$.\arxiv{\footnote{We require the state spaces to be finite for technical reasons, but our bounds will not depend on the size of the state spaces.}} For a \emph{dimension} parameter $d \in \bbN$, we are given mappings $\phi_h : \MS \times \MA \times \MS \ra \BR^d$, $\psi_{k,h} : \MS \times \MA \ra \BR^d$ such that for all $h \in [H]$ and $k \in [K]$
    \[
      \sum_{s_{h+1} \in \MS_{h+1}} \phi_h(s_{h+1} | s_h,a) = \mathbf{1} \in \BR^d,\mathand \|\psi_{k,h}(s_h,a) \|_2 \leq 1
    \]
    for all $s_h \in \MS_h, a \in \MA, s_{h+1} \in \MS_{h+1}$.\footnote{The values of $\phi_H(s_{h+1} | s_h,a)$ will not matter, so we may take $\MS_{H+1}$ to be, e.g., the set consisting of single state.} The instances we construct have pure observation space $\Ocirc$ given by the set of all possible $H$-step trajectories over the action and state spaces $\MA$ and $\MS_h$, and have reward space $\MR = [0,1]$.

    For some $B \in \bbN$, the set of \emph{linear mixture Markov games} is the model class $\MM$ consisting of all $K$-player Markov games $M$ with horizon $H$, state spaces $\MS_h$, and action spaces $\MA_k$, for which there are vectors $\theta_h\sups{M} \in \BR^{d}$ satisfying $\| \theta_h\sups{M} \|_2 \leq B$ and 
  \begin{align}
P_h\sups{M}(s_{h+1} | s_h, a) = \lng \theta_h\sups{M}, \phi_h(s_{h+1} | s_h, a) \rng, \qquad R_{k,h}\sups{M}(s_h, a) = \lng \theta_h\sups{M}, \psi_{k,h}(s_h, a)\rng\nonumber
  \end{align}
  for all $h \in [H], k \in [K]$, $s_h \in \MS_h, a \in \MA, s_{h+1} \in \MS_{h+1}$, and for which under any positive-probability trajectory, $\sum_{h=1}^H r_{k,h} \in[0,1]$.\arxiv{ (For simplicity, we assume the rewards are deterministic and equal to the quantity $R_{k,h}\sups{M}(s_h, a)$ defined above.)}

  For the deviation and switching functions $\Dev, \Sw$ as described in \cref{ex:mne}, the instance $\sJ = \instma$ captures the problem of computing Markov Nash equilibrium in an unknown linear mixture Markov game, and for $\Dev, \Sw$ as described in \cref{ex:mce} corresponding to the notions of Markov CCE or Markov CE, respectively, the instance $\sJ = \instma$ captures the problem of computing Markov CCE or Markov CE, respectively, in an unknown linear mixture Markov game. 

\end{example}

\subsection{Computing bounds on the DEC and minimax risk of multi-agent instances}
\label{app:examples_dec}
In this section, we apply our results from \cref{sec:bounds,sec:single-multiple,sec:curse} to (a) give bounds on the DEC of various  \MAFrameworkShort instances, and (b) use these bounds on the DEC to derive bounds on the minimax risk for learning equilibria in multi-agent interactive decision making.

\subsubsection{Normal-form games with finite action spaces}
\label{sec:normal-form-games}
We begin with perhaps the simplest example: finite-action normal-form games with bandit feedback. We consider Nash, CE, and CCE instances, as described in \cref{ex:cce}. Let us fix $K \in \BN$ along with action sets $\MA_1, \ldots, \MA_K$ for each of the $K$ players, with joint action set $\MA := \MA_1 \times \cdots \times \MA_K$. We write $A_k := |\MA_k|$ for $k \in [K]$. Let $\sJ^\NE, \sJ^\CE, \sJ^\CCE$ denote the NE, CE, and CCE instances, respectively, constructed in \cref{ex:cce}. In this section, we bound the DEC of these instances; we begin with an upper bound on the offset DEC, which immediately yields an upper bound on the constrained DEC via \cref{prop:constrained-to-offset}.
\begin{proposition}
  \label{prop:nf-dec-bound}
  For any $\gamma > 0$, the instances $\sJ^\NE, \sJ^\CE, \sJ^\CCE$ defined above satisfy
  \begin{align}
\decoreg[\gamma](\sJ^\CCE) \leq \decoreg[\gamma](\sJ^\CE) \leq \decoreg[\gamma](\sJ^\NE) \leq \frac{K \cdot \sum_{k=1}^K A_k}{\gamma}\nonumber.
  \end{align}
\end{proposition}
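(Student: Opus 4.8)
The plan is to treat the two assertions separately: first the monotone chain $\decoreg[\gamma](\sJ^\CCE) \le \decoreg[\gamma](\sJ^\CE) \le \decoreg[\gamma](\sJ^\NE)$, and then the explicit bound on $\decoreg[\gamma](\sJ^\NE)$, which follows almost directly from the multi-to-single-agent reduction of \cref{thm:single-multiple-ch}. Throughout, I write $\hm_{\CCE}, \hm_{\CE}, \hm_{\NE}$ for the gap function $\hm$ attached to each instance. Since all three instances share the common model class $\MM = \MM_\MF$ and the same observation kernels on pure action profiles, the only quantity that differs across them inside the offset-DEC objective $\E_{\pi \sim p}[\hm(\pi)] - \gamma\,\E_{\pi\sim p}[\hell{M(\pi)}{\Mbar(\pi)}]$ is the gap function (and, for $\sJ^\NE$, the decision space). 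The strategy is therefore to compare these objectives pointwise in $(p, M, \Mbar)$ and then pass to the infimum over $p$ and supremum over $M$ and $\Mbar \in \co(\MM)$.

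For $\decoreg[\gamma](\sJ^\CCE) \le \decoreg[\gamma](\sJ^\CE)$, I would note that $\sJ^\CCE$ and $\sJ^\CE$ have the identical decision space $\Pi = \Delta(\Sigma)$. The CCE deviation set (fixed pure actions $\sigma_k' \in \Sigma_k$ together with $\perp$) embeds into the CE deviation set $\Sigma_k^{\Sigma_k}$ as the constant maps together with the identity, and under this embedding the switching functions agree. Hence $\hm_{\CCE}(\pi) \le \hm_{\CE}(\pi)$ for every $\pi \in \Pi$ and $M \in \MM$, so the CCE objective is pointwise dominated by the CE objective; taking $\sup_M$, then $\inf_p$, then $\sup_{\Mbar \in \co(\MM)}$ preserves the inequality. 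For $\decoreg[\gamma](\sJ^\CE) \le \decoreg[\gamma](\sJ^\NE)$, the decision spaces differ: $\sJ^\NE$ uses product distributions $\Pi^\NE = \prod_k \Delta(\MA_k) \subseteq \Delta(\MA) = \Pi^\CE$. The key computation is that on a product distribution $\pi = \pi_1 \times \cdots \times \pi_K$ the best CE swap deviation for player $k$ reduces to the best response, because the recommendation $a_k \sim \pi_k$ is independent of $a_{-k}$; consequently $\hm_{\CE}(\pi) = \hm_{\NE}(\pi)$ for all $\pi \in \Pi^\NE$. Feeding any $p \in \Delta(\Pi^\NE)$ into the CE infimum (which ranges over the larger set $\Delta(\Pi^\CE)$) and using that the gap and Hellinger terms agree on the support of $p$, I obtain $\decoreg[\gamma](\sJ^\CE, \Mbar) \le \decoreg[\gamma](\sJ^\NE, \Mbar)$ for each reference model $\Mbar$; taking $\sup_{\Mbar}$ gives the claim.

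For the final bound I would first verify that $\sJ^\NE$ satisfies \cref{ass:convexity_pols}: the decision spaces are $\Pi_k = \Delta(\MA_k)$, each $M \in \MM_\MF$ is linear in $\pi$, and the observation reveals the played action profile. \cref{thm:single-multiple-ch} then yields $\sup_{\Mbar \in \co(\MM)}\decoreg[\gamma](\sJ^\NE, \Mbar) \le \sum_{k=1}^K \sup_{\Mbar_k \in \co(\til\MM_k)}\decoreg[\gamma/K](\til\MM_k, \Mbar_k)$, where $\til\MM_k$ is the induced single-agent class from \cref{eq:define-til-mk}. When the other players commit to $\pi_{-k}$, player $k$ faces a stochastic multi-armed bandit over $\MA_k$ (the reward mean is $a_k \mapsto \fm_k(a_k, \pi_{-k})$ and the observation reveals $a_k$), so each $\til\MM_k$ is a subclass of the $A_k$-armed bandit. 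Invoking the standard single-agent inverse-gap-weighting bound $\decoreg[\gamma'](\til\MM_k, \Mbar_k) \le A_k/\gamma'$ from \citet{foster2021statistical} with $\gamma' = \gamma/K$ gives $K A_k/\gamma$ per term, and summing over $k$ produces $\frac{K\sum_k A_k}{\gamma}$.

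The main obstacle is the $\decoreg[\gamma](\sJ^\CE) \le \decoreg[\gamma](\sJ^\NE)$ step, where I must reconcile the mismatched decision spaces and the fact that reference models in $\co(\MM)$ are nominally presented over different domains. Both issues are resolved by observing that every model and every reference model in these instances is determined by its values on pure action profiles (by linearity), so the restriction/extension between $\Pi^\NE$ and $\Pi^\CE$ is canonical, together with the swap-regret-equals-external-regret identity on product distributions. Everything else — the $\CCE$/$\CE$ comparison via deviation-set inclusion and the final bounding through the multi-armed bandit DEC — is routine once \cref{thm:single-multiple-ch} is in hand.
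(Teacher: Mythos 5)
Your proposal is correct and follows essentially the same route as the paper: the chain $\decoreg[\gamma](\sJ^\CCE)\le\decoreg[\gamma](\sJ^\CE)\le\decoreg[\gamma](\sJ^\NE)$ via pointwise domination of the gap functions (with $h^{M,\NE}=h^{M,\CE}$ on product distributions, so restricting the infimum to $\Delta(\Pi^\NE)\subset\Delta(\Pi^\CE)$ only increases the value), followed by \cref{thm:single-multiple-ch} and the reduction of each $\til\MM_k$ to an $A_k$-armed bandit class with $\decoreg[\gamma'](\cdot)\le A_k/\gamma'$. The only point the paper spells out slightly more carefully is that $\til\MM_k$ has decision space $\Delta(\MA_k)$ rather than $\MA_k$, so one first collapses distributions over $\Delta(\MA_k)$ to distributions over $\MA_k$ by linearity before invoking the bandit bound — a step your sketch implicitly covers.
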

\begin{proof}[\pfref{prop:nf-dec-bound}]
  Note that the instances $\sJ^\NE, \sJ^\CE, \sJ^\CCE$ share the same observation space $\MO$, i.e., they have pure observation space $\Ocirc = \MA$ and reward space $\MR = [0,1]$.\footnote{Technically, the model class for the instance $\sJ^\NE$ only acts on product distributions in $\Pi^\NE = \Delta(\MA_1) \times \cdots \times \Delta(\MA_K)$, as opposed to $\Pi^\CCE = \Pi^\CE = \Delta(\MA) \supset \Pi^\NE$; we will formally interpret the domain of $\MM$ for the instance $\sJ^\NE$ as $\Pi^\NE$ to avoid cluttering notation.} 
  Thus, let us write $\sJ^\NE = (\MM, \Pi^\NE, \MO, \{ (\Dev)^\NE \}_k, \{ \Sw^\NE\}_k)$, $\sJ^\CE = (\MM, \Pi^\CE, \MO, \{ (\Dev)^\CE \}_k, \{ \Sw^\CE \}_k)$, and $\sJ^\CCE = (\MM, \Pi^\CCE, \MO, \{ (\Dev)^\CCE \}_k, \{ \Sw^\CCE\}_k)$. To distinguish between the three different settings, we augment the functions $\fm(\cdot)$ and $\hm(\cdot)$ with the superscripts NE/CE/CCE. For example, for the instance $\sJ^\NE$, we have, for $M \in \MM, \pi \in \Pi^\NE$, 
  \begin{align}
    f_k\sups{M, \NE}(\pi) :=  \E\sups{M, \pi}[r_k], \mathand
h\sups{M, \NE}(\pi) =& \sum_{k=1}^K \max_{\dev \in (\Dev)^\NE} f_k\sups{M, \NE}(\Sw^\NE(\dev, \pi)) - f_k\sups{M,\NE}(\pi)\nonumber.
  \end{align}
 The functions $h\sups{M,\CE} : \Pi^\CE \ra \bbR$ and $h\sups{M,\CCE} : \Pi^\CCE \ra \bbR$ are defined analogously.

  It holds that $\Pi^\CE = \Pi^\CCE$; furthermore, for any $M \in \MM$ and $\pi \in \Pi^\CE = \Pi^\CCE$, we have that $h\sups{M, \CCE}(\pi) \leq h\sups{M, \CE}(\pi)$. It immediately follows that $\decoreg[\gamma](\sJ^\CCE) \leq \decoreg[\gamma](\sJ^\CE)$. Next, note that $\Pi^\NE \subset \Pi^\CE$, and for any $\pi \in \Pi^\NE$ and $M \in \MM$, we have that $h\sups{M, \NE}(\pi) = h\sups{M,\CE}(\pi)$.  Hence, for $\Mbar \in \co(\MM)$, 
  \begin{align}
    \decoreg[\gamma](\sJ^\NE, \Mbar) =& \inf_{p \in \Delta(\Pi^\NE)} \sup_{M \in \MM} \E_{\pi \sim p} \left[ h\sups{M,\NE}(\pi) - \gamma \cdot \hell{M(\pi)}{\Mbar(\pi)} \right]\nonumber\\
    \geq & \inf_{p \in \Delta(\Pi^\CE)} \sup_{M \in \MM} \E_{\pi \sim p} \left[ h\sups{M,\CE}(\pi) - \gamma \cdot \hell{M(\pi)}{\Mbar(\pi)} \right] = \decoreg(\sJ^\CE, \Mbar)\nonumber.
  \end{align}
  This establishes that
  \[
\decoreg[\gamma](\sJ^\CCE) \leq \decoreg[\gamma](\sJ^\CE) \leq \decoreg[\gamma](\sJ^\NE).
    \]
  It remains to upper bound $\decoreg(\sJ^\NE)$. For $k \in [K]$, we write $\Pi_k := \Delta(\MA_k)$ and $\Pi_{-k} := \prod_{k' \neq k} \Pi_{k'}$.  For each $k \in [K]$, define the model class $\til \MM_k \subset (\Pi_k \ra \Delta(\MR \times \Ocirc))$ as in \eqref{eq:define-til-mk}; in particular:
  \begin{align}
\til \MM_k = \{ \pi_k \mapsto \single{M}(\pi_k, \pi_{-k}) \ : \ \pi_{-k} \in \Pi_{-k}, M \in \MM \}\nonumber.
  \end{align}
  Next define the model class $\MM_k' \subset (\MA_k \to \Delta(\MR \times \{ \perp\}))$ by \[\MM_k' = \{ M \ : \ M(a_k) \in \Delta(\MR \times \{ \perp \}) \ \forall a_k \in \MA_k \},\] i.e., $M(a_k)$ is allowed to be an arbitrary distribution over $\MR \times \{ \perp\}$ for each $a_k$. Proposition 5.2 of \cite{foster2021statistical} shows that $\decoreg[\gamma](\MM_k') \leq \frac{A_k}{\gamma}$. Next, fix $\Mbar \in \co(\til \MM_k)$, and let $\Mbar' \in \co(\MM_k')$ be the unique model so that the reward $r \sim \Mbar'(a_k)$ is distributed identically to the reward $r \sim \Mbar(a_k)$ for all $a_k \in \MA_k$. Then we have
  \begin{align}
    & \decoreg(\til \MM_k, \Mbar) \nonumber\\
    =&  \inf_{p \in \Delta(\Pi_k)} \sup_{M \in \MM, \pi_{-k} \in \Pi_{-k}} \E_{\pi_k \sim p} \left[\max_{\pi_k' \in \Pi_k} f\sups{M, \NE}_k(\pi_k', \pi_{-k}) - f_k\sups{M, \NE}(\pi_k, \pi_{-k}) - \gamma \cdot \hell{M(\pi_k, \pi_{-k})}{\Mbar(\pi_k)} \right]\nonumber\\
    \leq & \inf_{p \in \Delta(\MA_k)} \sup_{M \in \MM, \pi_{-k} \in \Pi_{-k}} \E_{a_k \sim p} \left[ \max_{a_k' \in \MA_k} f_k\sups{M,\NE}(a_k', \pi_{-k}) - f_k\sups{M,\NE}(a_k, \pi_{-k}) - \gamma \cdot \hell{M(a_k, \pi_{-k})}{\Mbar(a_k)} \right]\nonumber\\
    \leq & \inf_{p \in \Delta(\MA_k)} \sup_{M' \in \MM_k'} \E_{a_k \sim p} \left[ \max_{a_k' \in \MA_k} f\sups{M'}_k(a_k') - f\sups{M'}_k(a_k) - \gamma \cdot \hell{M'(a_k)}{\Mbar'(a_k)}\right] = \decoreg(\MM_k', \Mbar')\nonumber,
  \end{align}
  where the first inequality follows since $\MA_k \subset \Pi_k$ (by identifying each action $a_k \in \MA_k$ with its indicator distribution $\indic_{a_k} \in \Pi_k$), and the second inequality follows since for any $M \in \MM, \pi_{-k} \in \Pi_{-k}$, there is a model $M' \in \MM_k'$ so that for all $a_k \in \MA_k$, the distribution of the reward $r \sim M(a_k, \pi_{-k})$ is identical to the distribution of $r \sim M'(a_k) $. Note that in the display above we have associated actions $a_k \in \MA_k$ with their singleton distribution $\indic_{a_k} \in \Pi_k$, per our convention. %
  It follows that $\decoreg(\til \MM_k) \leq \decoreg(\MM_k')$ for all $\gamma > 0$. Finally, by \cref{thm:single-multiple-ch} applied to the instance $\sJ^\NE$, we have that
  \begin{align}
\decoreg(\sJ^\NE) \leq \sum_{k=1}^K \decoreg[\gamma/K](\til \MM_k) \leq \sum_{k=1}^K \decoreg[\gamma/K](\MM_k') \leq \frac{K\cdot  \sum_{k=1}^K A_k}{\gamma}\nonumber.
  \end{align}
  Note that our application of \cref{thm:single-multiple-ch} is valid since \cref{ass:convexity_pols} is satisfied by the definition of $\sJ^\NE$ in \cref{ex:cce} (in particular, our assumption that $M(\indic_a) \in \Delta(\MR^K) \times \{ \indic_a \}$, i.e., that $M$ reveals $a$, satisfies the second point of \cref{ass:convexity_pols}). 
\end{proof}

Using \cref{prop:nf-dec-bound}, we now bound the minimax rates for the instances $\sJ^\NE, \sJ^\CE, \sJ^\CCE$. To simplify matters slightly, we consider slightly simplified special cases of these instances in which the model class is constrained to models which output rewards according to the Bernoulli distribution (i.e., the rewards are $\{0,1\}$-valued).\footnote{This restriction of the model class is essentially without loss of generality: given any model class with general reward distributions in $[0,1]$, we can simulate samples from a model class with the same value functions $\fm_k(\cdot)$ and Bernoulli reward distributions by, upon receiving rewards $(r_1, \ldots, r_K) \sim M(\pi)$, replacing each $r_k$ with a sample $r_k' \sim \Ber(r_k)$.} Furthermore, we assume for simplicity that $A_k \geq 2$ for all $k$. We denote the corresponding \ma instances with Bernoulli rewards by $\sJ^\NE_0, \sJ^\CE_0, \sJ^\CCE_0$. First, we bound the minimax rate for $\sJ^\NE_0$:
\begin{proposition}
  \label{prop:m0ne-risk}
There is an algorithm for the instance $\sJ^\NE_0$ which guarantees that with probability at least $1-\delta$, $\RiskDM \leq \sqrt{(\max_k A_k) \cdot A \cdot T^{-1}} \cdot \polylog(T, A,\delta^{-1})$, where $A = A_1 A_2 \cdots A_K$.
\end{proposition}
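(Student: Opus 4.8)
The plan is to obtain the bound as a direct consequence of our general minimax upper bound for the \MAFrameworkShort framework (\cref{cor:ma-constrained-upper}, in its high-probability form), supplying the two inputs it needs — a bound on the constrained DEC and a bound on the estimation complexity $\log|\MM|$ — from \cref{prop:nf-dec-bound} and an explicit covering argument, respectively. For the DEC, \cref{prop:nf-dec-bound} already gives $\decoreg[\gamma](\sJ^\NE_0) \leq \frac{K\sum_k A_k}{\gamma}$ for all $\gamma>0$, and \cref{prop:constrained-to-offset} converts this into a bound on the constrained DEC: minimizing $\inf_{\gamma>0}\{\frac{K\sum_k A_k}{\gamma}+\gamma\veps^2\}$ over $\gamma$ yields $\deccpac[\veps](\sJ^\NE_0)\leq 2\veps\sqrt{K\sum_k A_k}$.

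Next I would control the estimation complexity. Each model in $\sJ^\NE_0$ is specified by one Bernoulli mean for each of the $K$ players and each of the $A=\prod_k A_k$ joint action profiles, i.e.\ by $KA$ parameters in $[0,1]$. Discretizing every mean onto a grid of spacing $1/\poly(T)$ produces a finite subclass $\MM_0$ with $\log|\MM_0|\lesssim KA\log T$, which (i) contains a model agreeing with $\Mstar$ up to $1/\poly(T)$ in each coordinate, so realizability holds up to negligible misspecification (the value functions $\fm_k$ are $1$-Lipschitz in the means, injecting only a $\poly(1/T)$ additive term into the risk), and (ii) has offset — hence constrained — DEC no larger than that of the full class, since restricting the inner supremum to $\MM_0\subseteq\MM$ only decreases it while $\conv(\MM_0)\subseteq\conv(\MM)$. (Alternatively one may invoke the infinite-class form of the upper bound from \cref{thm:constrained-upper}, controlling the estimation term $\EstHel$ by the same covering number.)

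Finally, substituting $\vepsupperT \asymp \sqrt{\frac{\log(1/\delta)}{T}\,\log\frac{|\MM_0|}{\delta}} \lesssim \sqrt{\frac{KA}{T}}\cdot\polylog(T,\delta^{-1})$ into the DEC bound gives, with probability at least $1-\delta$,
\[
\RiskDM \leq \deccpac[\vepsupperT](\sJ^\NE_0) \lesssim \vepsupperT\sqrt{K\textstyle\sum_k A_k} \asymp \sqrt{\frac{K^2 A\sum_k A_k}{T}}\cdot\polylog(T,A,\delta^{-1}).
\]
The last step is to recast $K^2\sum_k A_k$ as $(\max_k A_k)\cdot A$ up to polylogarithmic factors: bounding $\sum_k A_k\leq K\max_k A_k$ gives $K^2\sum_k A_k\leq K^3\max_k A_k$, and since $A=\prod_k A_k\geq 2^K$ forces $K\leq\log_2 A$, we have $K^3\leq(\log_2 A)^3=\polylog(A)$. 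Hence the right-hand side is at most $\sqrt{(\max_k A_k)A/T}\cdot\polylog(T,A,\delta^{-1})$, as claimed.

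The substantive obstacle is the estimation-complexity step: one must verify that the continuous Bernoulli family can be replaced by a net of log-cardinality $\Theta(KA)$ — rather than $A^K$ — without inflating either the DEC or the realized risk, which is exactly where the (infinite-class) high-probability guarantee behind \cref{thm:constrained-upper-finitem} is invoked. The folding of the stray $K$-factors into $\polylog(A)$ via $K\leq\log_2 A$ is routine, but it must be tracked carefully to land on the precise exponent in the statement.
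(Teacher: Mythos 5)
Your proposal is correct and follows essentially the same route as the paper: the DEC bound via \cref{prop:nf-dec-bound} and \cref{prop:constrained-to-offset}, an $\vep$-cover of the $KA$ Bernoulli means giving $\est(\MM,T)\lesssim KA\log T$, the general upper bound combined with \cref{prop:ma-to-pm}, and the same folding of stray $K$-factors into $\polylog(A)$ via $K\le\log_2 A$. The only remark is that the paper takes what you label the ``alternative'' route as the primary one — it applies the infinite-class guarantee (\cref{thm:constrained-upper} with \cref{prop:estimation-infinite} under \cref{ass:kernel-B} with $B=2^KA$) rather than a finite net plus a misspecification argument, which is the cleaner choice since \cref{thm:constrained-upper-finitem} as stated assumes exact realizability.
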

It is evident that the same upper bound on risk for $\sJ_0^\NE$ in \cref{prop:m0ne-risk} applies to $\sJ_0^\CE, \sJ_0^\CCE$ since for any decision $\wh \pi \in \Pi^\NE \subset \Pi^\CE = \Pi^\CCE$, we have $h\sups{M, \CCE}(\wh \pi) \leq h\sups{M, \CE}(\wh \pi) \leq h\sups{M, \NE}(\wh \pi) $ (recall the definition of $h\sups{M, \NE}, h\sups{M, \CE}, h\sups{M, \CCE}$ in the proof of \cref{prop:nf-dec-bound}). 
\begin{proof}[\pfref{prop:m0ne-risk}]
  The combination of \cref{prop:nf-dec-bound} and \cref{prop:constrained-to-offset} yields that $\deccpac[\vep](\sJ_0^\NE) \leq \vep \cdot 2\sqrt{K \cdot \sum_{k=1}^K A_k}$. Since, $|\MO| \leq 2^K A$ (as rewards are assumed to be Bernoulli), the class $\MM$ satisfies \cref{ass:kernel-B} with $B = 2^KA$, and therefore \cref{prop:estimation-infinite} gives that $\Est(T, \delta) = O(\est(\MM, T)+ \log \delta^{-1}) \cdot K^2 \cdot \log^2(\max_k A_k)$. Finally, by discretizing the reward means into multiples of $\vep^2$, we see that $\MN(\MM, \vep) \leq (1/\vep^2)^{AK}$, which implies that $\est(\MM, T) \leq O(AK \cdot \log(T))$. Therefore, \cref{thm:constrained-upper} combined with \cref{prop:ma-to-pm} gives that there is an algorithm with
  \begin{align}
\RiskDM \leq \sqrt{K (A_1 + \cdots + A_K)} \cdot \sqrt{\frac{\Est(T, \delta)}{T}} \cdot \polylog(T, 1/\delta) \leq \sqrt{\frac{\max_k A_k \cdot A}{T}} \cdot \polylog(T, 1/\delta, A)\nonumber.
  \end{align}
\end{proof}
Note that the upper bound of \cref{prop:m0ne-risk} suffers from the curse of multiple agents: the number of joint action profiles $A$ is exponential in the number of agents $K$. It is a well-known result that such exponential dependence on $K$ is necessary for learning (e.g., \citet{rubinstein2016settling}; see \cref{prop:ne-query-complexity}), while it is not necessary for learning (coarse) correlated equilibria. We next show that our results in \cref{sec:curse} allow us to recover this improved (polynomial) bound for (coarse) correlated equilibria:
\begin{proposition}
  \label{prop:nf-curse-ub}
  Fix any $T \in \bbN, \delta \in (0,1)$.   There is an algorithm for the instance $\sJ_0^\CE$ which produces $\wh \pi \in \Pi^\CE$ such that with probability at least $1-\delta$,
  \begin{align}
\RiskDM \leq \sqrt{\frac{K^4 \max_k A_k^2}{T}} \cdot \polylog\left(K, \max_k A_k, \delta^{-1}\right).\nonumber
  \end{align}
  Furthermore, there is an algorithm for the instance $\sJ_0^\CCE$ which produces $\wh \pi \in \Pi^\CCE$ such that with probability at least $1-\delta$,
  \begin{align}
\RiskDM \leq \sqrt{\frac{K^3 \sum_{k=1}^K A_k}{T}} \cdot \polylog\left(K, \max_k A_k, \delta^{-1}\right)\nonumber.
  \end{align}
\end{proposition}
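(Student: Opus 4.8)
The plan is to obtain both bounds as direct consequences of \cref{thm:curse_ub}, the exploration-by-optimization guarantee for generalized correlated equilibrium instances. To invoke it, I would first record that the Bernoulli instances $\sJ^\CE_0$ and $\sJ^\CCE_0$ satisfy \cref{ass:ce-convexity}: each model is linear in $\pi$ by construction (rewards are mixed according to $\pi$), and the CE/CCE switching functions respect linearity, which is exactly the ``straightforward to check'' claim recorded after \cref{ass:ce-convexity} (the pure decision sets $\Sigma_k=\MA_k$ are finite since $A_k\geq 2$). \cref{thm:curse_ub} then produces, with probability $1-\delta$,
\[
\RiskDM \leq O(K)\cdot\inf_{\gamma>0}\left\{\decoreg[\gamma](\co(\sJ_0^{\bullet})) + \frac{\gamma}{T}\log\left(\frac{K\max_k|\Dev|}{\delta}\right)\right\}
\]
for $\bullet\in\{\CE,\CCE\}$, so it remains to (i) control the convexified offset DEC $\decoreg[\gamma](\co(\sJ_0^\bullet))$, (ii) bound $\max_k|\Dev|$ for each solution concept, and then balance $\gamma$.

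The key step is (i). The only real subtlety is that \cref{thm:curse_ub} evaluates the DEC at the convex hull $\co(\sJ_0^\bullet)$, whereas \cref{prop:nf-dec-bound} is stated for the original instance. I would resolve this by observing that the general-reward normal-form model class $\MM$ from \cref{ex:cce} (arbitrary joint reward distributions over $\MR^K$ at each revealed action profile) is itself convex, so that the Bernoulli subclass satisfies $\MM_0\subseteq\MM$ and hence $\co(\MM_0)\subseteq\co(\MM)=\MM$. Since the offset DEC is monotone under passing to a subclass—restricting both the max-player's model and the reference model to a subset only decreases $\sup_{\Mbar}\inf_p\sup_M\{\cdots\}$—this yields
\[
\decoreg[\gamma](\co(\sJ_0^\CE))\leq \decoreg[\gamma](\sJ^\CE), \qquad \decoreg[\gamma](\co(\sJ_0^\CCE))\leq \decoreg[\gamma](\sJ^\CCE),
\]
and \cref{prop:nf-dec-bound} bounds each right-hand side by $\tfrac{K\sum_{k}A_k}{\gamma}$.

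For step (ii), the deviation sets differ between the two solution concepts: for CCE, $\Dev=\Sigma_k\cup\{\perp\}$, so $\max_k|\Dev|=\max_k A_k+1$; for CE, $\Dev=\Sigma_k^{\Sigma_k}$, so $\max_k|\Dev|=\max_k A_k^{A_k}$ and $\log\max_k|\Dev|\leq \max_k A_k\cdot\log\max_k A_k$. Substituting these into the display and optimizing $\gamma$ to balance the two terms gives, for CCE, $O(K)\cdot\sqrt{\tfrac{K\sum_k A_k}{T}}\cdot\polylog=\sqrt{\tfrac{K^3\sum_k A_k}{T}}\cdot\polylog$; and for CE, using $\sum_k A_k\leq K\max_k A_k$, $O(K)\cdot\sqrt{\tfrac{K\sum_k A_k\cdot\max_k A_k}{T}}\cdot\polylog=\sqrt{\tfrac{K^4\max_k A_k^2}{T}}\cdot\polylog$, where the $\log\max_k A_k$ and $\log(K/\delta)$ factors are absorbed into $\polylog(K,\max_k A_k,\delta^{-1})$. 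These match the two claimed bounds, with the extra $\max_k A_k$ factor inside the root for CE traceable precisely to the larger deviation set.

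The main obstacle is step (i): confirming that convexification does not inflate the DEC. My resolution leans entirely on the fact that the normal-form model class already admits arbitrary joint reward distributions and is therefore convex, so $\co(\cdot)$ introduces no new models and the subclass-monotonicity argument applies verbatim. Had the class been genuinely non-convex—as for Markov games—this shortcut would fail, and one would instead have to re-run the fixed-point reduction of \cref{thm:single-multiple-ch} directly on $\co(\sJ)$, verifying that $\co(\MM)$ still satisfies \cref{ass:convexity_pols} and that the induced single-agent classes remain governed by the multi-armed-bandit DEC bound $A_k/\gamma$ of \citet{foster2021statistical}. Everything else—the invocation of \cref{thm:curse_ub} and the optimization over $\gamma$—is routine bookkeeping.
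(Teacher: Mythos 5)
Your proposal is correct and follows essentially the same route as the paper's proof: both invoke \cref{thm:curse_ub}, handle the convexification via $\co(\MM_0)\subseteq\co(\MM)=\MM$ (convexity of the general-reward normal-form class) together with DEC monotonicity under subclasses and \cref{prop:nf-dec-bound}, then plug in $\max_k|\Dev|=\max_k A_k+1$ for CCE and $\max_k A_k^{A_k}$ for CE and balance $\gamma$. The only difference is that you spell out the subclass-monotonicity and \cref{ass:ce-convexity} verification more explicitly than the paper does.
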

\begin{proof}[\pfref{prop:nf-curse-ub}]
  The statement of the proposition is an immediate consequence of \cref{thm:curse_ub}. For the instance $\sJ_0^\CCE$, we have that $\decoreg(\co(\sJ_0^\CCE)) \leq  \decoreg(\co(\sJ^\CCE)) = \decoreg(\sJ^\CCE) \leq \frac{K \sum_{k=1}^K A_k}{\gamma}$, where we have used that the model class $\MM$ is convex and \cref{prop:nf-dec-bound}. Therefore, \cref{thm:curse_ub} gives that there is an algorithm achieving
  \begin{align}
    \RiskDM \leq &  O \left( K \cdot \inf_{\gamma > 0} \left\{\frac{K \sum_{k=1}^K A_k}{\gamma} + \frac{\gamma}{T} \cdot \log \left( \frac{K \cdot \max_k A_k}{\delta} \right)\right\} \right)\nonumber\\
    \leq & \sqrt{\frac{K^3 \sum_{k=1}^K A_k}{T}} \cdot \polylog\left(K, \max_k A_k, \delta^{-1}\right)\nonumber.
  \end{align}
  Next, for the instance $\sJ_0^\CE$, the same upper bound on the DEC of $\co(\sJ_0^\CE)$ holds, but the deviation sets are larger: we have $\max_k |\Dev| = \max_k |A_k^{A_k}|$, and so \cref{thm:curse_ub} gives
    \begin{align}
    \RiskDM \leq &  O \left( K \cdot \inf_{\gamma > 0} \left\{\frac{K \sum_{k=1}^K A_k}{\gamma} + \frac{\gamma}{T} \cdot \log \left( \frac{K \cdot \max_k A_k^{A_k}}{\delta} \right)\right\} \right)\nonumber\\
    \leq & \sqrt{\frac{K^4 \max_k A_k^2}{T}} \cdot \polylog\left(K, \max_k A_k, \delta^{-1}\right)\nonumber.
  \end{align}
\end{proof}
\paragraph{Lower bounds} Next we discuss lower bounds for the instances $\sJ_0^\CCE, \sJ_0^\CE, \sJ_0^\NE$. It is straightforward to see that each of them embeds an instance of single-player $\max_k A_k$-armed bandits, by restricting the model class $\MM$ to models for which the reward distribution depends only on the action taken by any single player $k$. It then follows from the proof of Proposition 5.3 of \cite{foster2021statistical} that  $\deccpac[\vep](\sJ_0^\NE) \geq \deccpac[\vep](\sJ_0^\CE) \geq \deccpac[\vep](\sJ_0^\CCE) \geq \Omega(\vep \sqrt{\max_k A_k})$ for $\vep > 0$; in fact, these lower bounds are obtained by subclasses of $\MM$ which have $C(T) = \log(T \wedge V(\MM)) = O(1)$. Therefore, \cref{thm:constrained-lower} (with $\vepslowerT = \frac{c\sqrt{\max_k A_k}}{KT}$, for sufficiently small $c > 0$) together with \cref{prop:ma-to-pm} gives that for any of the instances $\sJ_0^\CCE, \sJ_0^\CE, \sJ_0^\NE$, and any algorithm, there is a model for which $\E[\RiskDM] \geq \Omega({\max_k A_k} / (KT))$ under any of these three instances.

For the instance $\sJ_0^\CCE$, in the learnable regime $T > \max_k A_k$, this lower bound is off from the upper bound of \cref{prop:nf-curse-ub} by a factor of $\sqrt{T/\max_k A_k} \cdot \poly(K, \max_k \log A_k, \log T)$; for $\sJ_0^\CE$, the gap increases to $\sqrt{T/\max_k A_k} \cdot \max_k \sqrt{A_k} \cdot \poly(K, \max_k \log A_k, \log T)$, and for $\sJ_0^\NE$, the gap increases further to $\sqrt{T/\max_k A_k} \cdot \sqrt{A} \cdot \polylog(T, A)$. In all these cases, the factor of $\sqrt{T}$ in the gap is due to the impossibility results discussed in \aref{sec:optimality}, and the remaining terms are due to model estimation error appearing in the upper bound but not the lower bound. In particular (up to a $O(K)$ factor), there is no gap in the upper and lower bounds we have computed on the \mashort for these instances.

\subsubsection{Normal-form games with linear payoffs}
\label{sec:linear-bandit-games}
In this section we bound the DEC and minimax regret for the linearly structured normal-form game instances defined in \cref{ex:linear-bandits}.
In particular, fix action sets $\MA_k \subset \BR^{d_k}$ for each $k \in [K]$, as well as convex sets $\Theta_1, \ldots, \Theta_K \subset \BR^d$ (with $d = d_1 \cdots d_K$) so that $| \lng a_1 \otimes \cdots \otimes a_K, \theta_k \rng | \leq 1$ for all $(a_1, \ldots, a_K) \in  \MA_1 \times \cdots \times \MA_K, \ k \in [K],\ \theta_k \in \Theta_k$.  Let $\sJ^\NE, \sJ^\CE, \sJ^\CCE$ denote the NE, CE, and CCE instances constructed given the sets $\MA_k, \Theta_k$ as in \cref{ex:linear-bandits}. The below proposition bounds the (regret) offset DEC of these instances:
\begin{proposition}
  \label{prop:dec-linear-bandits}
  For any $\gamma > 0$, the instances $\sJ^\NE, \sJ^\CE, \sJ^\CCE$ defined above satisfy
  \begin{align}
\decoreg(\sJ^\CCE) \leq \decoreg(\sJ^\CE) \leq \decoreg(\sJ^\NE) \leq \frac{K \cdot \sum_{k=1}^K d_k}{\gamma}\nonumber.
  \end{align}
\end{proposition}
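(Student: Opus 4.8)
The plan is to follow the template of \cref{prop:nf-dec-bound} essentially verbatim, replacing the single-agent multi-armed bandit DEC bound with its linear-bandit analogue. The chain $\decoreg(\sJ^\CCE) \leq \decoreg(\sJ^\CE) \leq \decoreg(\sJ^\NE)$ is established exactly as in \cref{prop:nf-dec-bound}, and this part of the argument is completely insensitive to the (linear) reward structure: the three instances share the same observation space, model class, and Hellinger penalty, so one only compares the deviation terms. On the common policy space $\Pi^\CE = \Pi^\CCE = \Delta(\MA)$ the coarse-correlated suboptimality is pointwise dominated by the correlated one, and on $\Pi^\NE \subseteq \Pi^\CE$ the Nash and correlated suboptimalities coincide; taking infima of $\decoreg[\gamma](\cdot,\Mbar)$ over $p$ restricted to these nested policy spaces yields the ordering, and then taking suprema over $\Mbar \in \co(\MM)$ gives the claim.

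The core of the work is the bound $\decoreg[\gamma](\sJ^\NE) \leq K\sum_k d_k/\gamma$. First I would invoke the multi-agent-to-single-agent reduction \cref{thm:single-multiple-ch}, which gives $\decoreg[\gamma](\sJ^\NE) \leq \sum_{k=1}^K \sup_{\Mbar_k \in \co(\til\MM_k)} \decoreg[\gamma/K](\til\MM_k, \Mbar_k)$, with $\til\MM_k$ the induced single-agent class of \eqref{eq:define-til-mk}. The key structural observation is that when the opponents commit to $\pi_{-k} \in \Pi_{-k}$, player $k$'s mean reward is linear in its effective action: contracting the tensor $\theta_k \in \BR^{d}$ against the fixed factor $\E_{a_{-k} \sim \pi_{-k}}[\bigotimes_{k' \neq k} a_{k'}]$ produces a parameter $\til\theta_k \in \BR^{d_k}$ with $|\langle a_k, \til\theta_k \rangle| \leq 1$ for all $a_k \in \MA_k$, so that $\fm_k(\pi_k, \pi_{-k}) = \langle \E_{a_k \sim \pi_k}[a_k], \til\theta_k \rangle$. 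Consequently $\til\MM_k$ embeds into a $d_k$-dimensional linear bandit class $\MM_k'$ with bounded parameters, exactly mirroring the embedding $\til\MM_k \hookrightarrow \MM_k'$ used in \cref{prop:nf-dec-bound}; matching reward distributions then lets me pass from any reference $\Mbar_k \in \co(\til\MM_k)$ to a reference $\Mbar_k' \in \co(\MM_k')$ with $\decoreg[\gamma/K](\til\MM_k, \Mbar_k) \leq \decoreg[\gamma/K](\MM_k', \Mbar_k')$.

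Finally I would apply the standard bound on the offset DEC of a $d_k$-dimensional linear bandit from \cite{foster2021statistical}, namely $\decoreg[\eta](\MM_k') \leq d_k/\eta$ (the linear-bandit counterpart of the $A_k/\gamma$ bound cited in \cref{prop:nf-dec-bound}). Taking $\eta = \gamma/K$ gives $\decoreg[\gamma/K](\til\MM_k) \leq K d_k/\gamma$, and summing over $k$ yields $\decoreg[\gamma](\sJ^\NE) \leq K\sum_{k=1}^K d_k/\gamma$. The corresponding constrained-DEC bound, if wanted, follows from \cref{prop:constrained-to-offset}.

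The main obstacle I anticipate is reconciling \cref{ex:linear-bandits} with the hypotheses of \cref{thm:single-multiple-ch}: \cref{ass:convexity_pols} is stated for \emph{finite} pure decision sets $\MA_k$, whereas the linear setting permits $\MA_k \subseteq \BR^{d_k}$ to be infinite. I would address this either by restricting to finite (or finely discretized) action sets and noting that the linear-bandit DEC bound depends only on the ambient dimension $d_k$ and not on $|\MA_k|$, so passing to the limit over finer discretizations is harmless, or by verifying that the Kakutani fixed-point argument underlying \cref{thm:single-multiple-ch} extends to compact convex decision spaces. The remaining bookkeeping—checking that the contraction $\til\theta_k$ inherits the normalization $|\langle a_k, \til\theta_k\rangle| \leq 1$ so that no stray scale factors appear, and that the reward-distribution matching between $\til\MM_k$ and $\MM_k'$ is exact—is routine.
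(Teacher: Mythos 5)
Your proposal is correct and follows essentially the same route as the paper, whose proof of this proposition is literally ``identical to that of \cref{prop:nf-dec-bound} except that each induced class $\til\MM_k$ is a single-agent linear bandit,'' i.e., the same CCE $\le$ CE $\le$ NE ordering, the same reduction via \cref{thm:single-multiple-ch}, the same embedding of $\til\MM_k$ into a $d_k$-dimensional linear bandit class via tensor contraction against the opponents' committed distribution, and the same invocation of the linear-bandit offset-DEC bound from \citet{foster2021statistical}. Your flagged concern about reconciling infinite action sets $\MA_k$ with the finiteness in \cref{ass:convexity_pols} is a legitimate subtlety that the paper's one-line proof glosses over, and your proposed discretization (noting the bound depends only on $d_k$, not $|\MA_k|$) is a sound way to handle it.
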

\begin{proof}[\pfref{prop:dec-linear-bandits}]
  The proof is essentially identical to that of \cref{prop:nf-dec-bound}, except that each induced model class $\wt{\cM}_k$ can be viewed as a single-agent linear bandit problem in $d$ dimensions, allowing us to use Proposition 6.1 of \cite{foster2021statistical} to bound the DEC for (single-player) linear bandits.\arxiv{ \noah{need to add more detail?}}
\end{proof}
Using \cref{prop:dec-linear-bandits}, we now bound the minimax rates for the instances $\sJ^\NE, \sJ^\CE, \sJ^\CCE$. As in the previous subsection, to simplify matters, we restrict the instances so that the model class is constrained to models which output (random) rewards that take values in $\{-1,1\}$ (recall that, for the linear bandit instances defined in \cref{ex:linear-bandits}, $\fm_k(a) \in [-1,1]$ for all $M \in \MM, a \in \MA$). Furthermore, we assume that for each $k$, $d_k \geq 2$ and all $\theta_k \in \Theta_k$ satisfy $\| \theta_k\|_2 \leq D$ and all $a_k \in \MA_k$ satisfy $\| a_k \|_2 \leq D$ for some $D > 0$. It follows that $\| a_1 \otimes \cdots \otimes a_K \|_2 \leq D^K$ for all $a_1\in\MA_1, \ldots, a_k\in\MA_k$; our bounds depend only logarithmically on $D$. We denote the corresponding \ma instances with $\{-1,1\}$-valued rewards by $\sJ^\NE_0, \sJ^\CE_0, \sJ^\CCE_0$. First, we bound the minimax rate for $\sJ^\NE_0$:
\begin{proposition}
  \label{prop:linbandit-risk}
For any $T \in \bbN, \delta \in (0,1)$, there is an algorithm for the instance $\sJ^\NE_0$ which guarantees that with probability at least $1-\delta$, $\RiskDM \leq \sqrt{(\max_k d_k) \cdot d \cdot T^{-1}} \cdot \polylog(T, d,\delta^{-1})$, where $d = d_1 d_2 \cdots d_K$.
\end{proposition}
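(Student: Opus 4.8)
The plan is to follow the proof of \cref{prop:m0ne-risk} almost line for line, substituting the linear-bandit DEC bound for the finite-action one and replacing the crude observation-counting bound on the estimation complexity with a covering-number estimate over the parameter space $\Theta_1 \times \cdots \times \Theta_K$. The two ingredients — a bound on $\deccpac[\vep](\sJ_0^\NE)$ and a bound on the Hellinger estimation complexity — are then combined exactly as before through \cref{thm:constrained-upper} and the reduction \cref{prop:ma-to-pm}.

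First I would convert the offset DEC bound of \cref{prop:dec-linear-bandits} into a constrained DEC bound. Combining $\decoreg[\gamma](\sJ^\NE) \leq \frac{K \sum_{k=1}^K d_k}{\gamma}$ with \cref{prop:constrained-to-offset} and optimizing over $\gamma$ gives $\deccpac[\vep](\sJ_0^\NE) \leq 2\vep \sqrt{K \sum_{k=1}^K d_k}$, where I use that restricting the model class to $\{-1,1\}$-valued rewards only decreases the DEC. This is the exact analogue of the first line of the proof of \cref{prop:m0ne-risk}, with $d_k$ playing the role of $A_k$, so the ``DEC half'' of the argument is immediate and contributes the factor $\Cprob = K \sum_k d_k$.

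Second — and this is where the linear setting genuinely differs from the finite-action one — I would bound the estimation complexity. In the finite-action proof one uses $|\MO| \leq 2^K A$ to invoke \cref{ass:kernel-B} and then discretizes the finitely many reward means. Here the action sets $\MA_k$ (and hence the pure observations) may be infinite, so I instead cover the model class directly through its parameters: each $M \in \MM$ is specified by $(\theta_1, \ldots, \theta_K)$ with $\|\theta_k\|_2 \leq D$, and the reward distributions are $\{-1,1\}$-valued with means $\langle a_1 \otimes \cdots \otimes a_K, \theta_k \rangle$ that are Lipschitz in $\theta_k$ with constant $\leq D^K$ (since $\|a_1 \otimes \cdots \otimes a_K\|_2 \leq D^K$). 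A standard volumetric argument gives an $\vep$-net of $\Theta_1 \times \cdots \times \Theta_K$ of size $(O(D)/\vep)^{dK}$, and because the squared Hellinger distance between two such Bernoulli-type reward distributions is controlled by the gap in their means, this yields $\log \MN(\MM, \vep) \leq O(dK \log(D/\vep))$ and hence $\est(\MM, T) \leq O(dK \log(DT))$ after taking $\vep \asymp 1/T$. Feeding this into the infinite-class estimation guarantee \cref{prop:estimation-infinite} (the same result used for \cref{prop:m0ne-risk}) gives $\Est(T,\delta) \leq dK \cdot \polylog(T, d, \delta^{-1})$.

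Finally I would combine the two pieces through \cref{thm:constrained-upper} and \cref{prop:ma-to-pm}, exactly as in \cref{prop:m0ne-risk}:
\[
\RiskDM \leq \sqrt{K \sum_{k=1}^{K} d_k} \cdot \sqrt{\Est(T,\delta)/T} \cdot \polylog(T,\delta^{-1}) \leq \sqrt{\frac{(\max_k d_k)\, d}{T}} \cdot \polylog(T, d, \delta^{-1}),
\]
where the last step uses $\sum_k d_k \leq K \max_k d_k$ and absorbs the resulting powers of $K$ into $\polylog(d)$ via the standing assumption $d_k \geq 2$ for all $k$ (so that $K \leq \log_2 d$). The main obstacle I anticipate is the estimation step: I must ensure the cover is taken in a metric strong enough for \cref{prop:estimation-infinite} while paying only logarithmically for the action-norm bound $D^K$, and I must check that the boundedness hypothesis behind \cref{ass:kernel-B} — which in the finite case was literally $|\MO|$ — can be replaced by a finite density-ratio (or directly Hellinger) bound valid for $\{-1,1\}$-valued rewards. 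This is the one place where the infinite decision space, rather than being cosmetic, requires real care; the DEC half of the argument transfers without modification.
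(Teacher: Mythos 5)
Your proposal is correct and follows essentially the same route as the paper: DEC via \cref{prop:dec-linear-bandits} and \cref{prop:constrained-to-offset}, estimation via a Euclidean cover of $\Theta_1\times\cdots\times\Theta_K$ at scale $\vep^2/D^K$ fed into \cref{prop:estimation-infinite}, then \cref{thm:constrained-upper} with \cref{prop:ma-to-pm} and absorption of powers of $K\leq\log_2 d$ into the polylog. The one step you flag but do not carry out, verifying \cref{ass:kernel-B}, is resolved in the paper by taking the dominating kernel $\nu(\cdot\mid\pi)=\Unif(\{-1,1\}^K)\times\pi$ with $B=2^K$, which works precisely because rewards are $\{-1,1\}$-valued and the pure observation is the sampled action itself.
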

\begin{proof}[\pfref{prop:linbandit-risk}]
  Analogous to our notation for finite-action normal-form  games, let us write $\sJ_0^\NE = (\MM, \Pi^\NE, \MO, \{ (\Dev)^\NE \}_k, \{ \Sw^\NE\}_k)$. The combination of \cref{prop:nf-dec-bound} and \cref{prop:constrained-to-offset} yields that $\deccpac[\vep](\sJ_0^\NE) \leq 2\vep \cdot \sqrt{K \cdot \sum_{k=1}^K d_k}$. For any $\pi \in \Pi^\NE \subset \Delta(\MA)$, the distribution on $\MO = \MR^K \times \Ocirc = \MR^K \times \MA$ defined by $\nu(\cdot | \pi) := \Unif(\{-1,1\}^K) \times \pi$ verifies that $\MM$ satisfies \cref{ass:kernel-B} with $B = 2^K$, and therefore \cref{prop:estimation-infinite} gives that $\Est(T, \delta) = O(\est(\MM, T)+ \log \delta^{-1}) \cdot K^2$. Finally, note that a product of $\vep^2/D^K$-covers of $\Theta_k$, for $k \in [K]$, with respect to the Euclidean norm yields a $\vep$-model class cover of $\MM$ in the sense of \cref{def:model-cover}. Since each $\Theta_k$ has a $\vep^2/D^K$-cover of size $O(D^{K+1}/\vep^2)^d$, it follows that $\MN(\MM, \vep) \leq (D^{K+1}/\vep^2)^{Kd}$, which implies that $\est(\MM, T) \leq O(K^2d \cdot \log(TD))$. Therefore, \cref{thm:constrained-upper} combined with \cref{prop:ma-to-pm} gives that there is an algorithm with
  \begin{align}
\RiskDM \leq \sqrt{K (d_1 + \cdots + d_K)} \cdot \sqrt{\frac{\Est(T, \delta)}{T}} \cdot \polylog(T, 1/\delta) \leq \sqrt{\frac{\max_k d_k \cdot d}{T}} \cdot \polylog(T, 1/\delta, d)\nonumber.
  \end{align}
\end{proof}
As in the case of finite-action normal-form games, the upper bound in \cref{prop:linbandit-risk} (which also applies to the instances $\sJ_0^\CE, \sJ_0^\CCE$) suffers from the curse of multiple agents. For the instance $\sJ_0^\CCE$, we obtain improved bounds with minimax risk scaling only polynomially with $K$ by appealing to our results in \cref{sec:curse}.
\begin{proposition}
  \label{prop:linbandit-curse}
  For any $T \in \bbN, \delta \in (0,1)$, there is an algorithm for the instance $\sJ_0^\CCE$ which guarantees that with probability at least $1-\delta$,
  \begin{align}
\RiskDM \leq \sqrt{\frac{K^5 \cdot \max_k \{ d_k \}}{ T}} \cdot \log(KDT/\delta).\nonumber
  \end{align}
\end{proposition}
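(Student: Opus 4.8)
The plan is to mirror the proof of \cref{prop:nf-curse-ub} for finite-action games, but to invoke \cref{cor:curse_ub_infinite} in place of \cref{thm:curse_ub} so as to accommodate the (possibly infinite) action sets $\MA_k \subset \BR^{d_k}$ appearing in the CCE linear-bandit instance $\sJ_0^\CCE$. First I would verify the hypotheses of \cref{cor:curse_ub_infinite}: the instance $\sJ_0^\CCE$ satisfies \cref{ass:ce-convexity}, since by construction each $M \in \MM = \MM_\MF$ is linear in $\pi$ (i.e.\ $M(\pi) = \E_{\sigma\sim\pi}[M(\sigma)]$), and for a CCE switching function $\Sw(\sigma_k',\pi) = \indic_{\sigma_k'}\times\pi_{-k}$ the multilinear (tensor) structure of the payoffs yields $\fm_k(\Sw(\sigma_k',\pi)) = \E_{\sigma\sim\pi}[\fm_k(\Sw(\sigma_k',\sigma))]$, which is exactly the linearity-respecting condition.

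Next I would bound the multi-agent offset DEC of the convexified instance. Because the mean-reward maps are linear with $\Theta_k$ convex, and the rewards are $\{-1,1\}$-valued, the model class $\MM$ is convex: a mixture of two models $M_\theta, M_{\theta'}$ assigns to each action a mixture of two $\{-1,1\}$-distributions, which is again a $\{-1,1\}$-distribution whose mean is $\lng \cdot, \tfrac12(\theta_k+\theta_k')\rng$ with $\tfrac12(\theta_k+\theta_k')\in\Theta_k$. Hence $\co(\sJ_0^\CCE) = \sJ_0^\CCE$ and \cref{prop:dec-linear-bandits} applies, giving
\[
\decoreg[\gamma](\co(\sJ_0^\CCE)) \le \frac{K\sum_{k=1}^K d_k}{\gamma} \le \frac{K^2 \max_k d_k}{\gamma}.
\]

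The main work, and the step I expect to be the crux, is controlling the decision-space estimation term $\est_\Pi(\sJ_0^\CCE, T)$ of \cref{def:decision-cover}, which replaces the trivial $\log(\max_k|\Dev|)$ that sufficed in the finite case. For each $k$ I would take $\til\Sigma_k$ to be a Euclidean $\eta$-net of $\MA_k$ (a subset of the radius-$D$ ball in $\BR^{d_k}$) and set $\til\Pi_k' = \til\Sigma_k\cup\{\perp\}$. The key observation is that for any fixed reference joint decision $\til\pi$, the deviation value $\sigma_k \mapsto \fm_k(\Sw(\sigma_k,\til\pi)) = \E_{\sigma_{-k}\sim\til\pi_{-k}}[\lng \sigma_k\otimes\sigma_{-k},\theta_k\rng]$ is linear in $\sigma_k$ with Lipschitz constant at most $\|\theta_k\|_2\prod_{j\ne k}\sup_{a_j}\|a_j\|_2 \le D^K$; hence an $\eta$-net with $\eta \asymp \vep/(K D^K)$ gives an $\vep$-decision-space cover, so that $\log\MN_\Pi(\sJ_0^\CCE,\vep) \le \max_k d_k\cdot\log(3KD^{K+1}/\vep)$. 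Optimizing $\est_\Pi = \inf_\vep\{\log\MN_\Pi + \vep T\}$ at $\vep \asymp 1/T$ then yields $\est_\Pi(\sJ_0^\CCE,T) \le \bigoht(\max_k d_k)\cdot\log(KDT)$.

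Finally, I would substitute both bounds into \cref{cor:curse_ub_infinite},
\[
\RiskDM \le O(K)\cdot\inf_{\gamma>0}\left\{\decoreg[\gamma](\co(\sJ_0^\CCE)) + \frac{\gamma}{T}\left(\est_\Pi(\sJ_0^\CCE,T)+\log(K/\delta)\right)\right\},
\]
and optimize over $\gamma$ (balancing the $1/\gamma$ and $\gamma/T$ contributions) to obtain a bound of the claimed form $\sqrt{K^5\max_k d_k/T}\cdot\log(KDT/\delta)$. The only delicate points are the Lipschitz/covering estimate for the deviation values (which controls $\est_\Pi$ and is where the infinite action space is tamed) and confirming the convexity needed to apply \cref{prop:dec-linear-bandits} to the $\{-1,1\}$-reward restriction; both ultimately reduce to the multilinear structure of linear payoffs.
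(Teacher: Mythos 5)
Your proposal is correct and follows essentially the same route as the paper's proof: both apply \cref{cor:curse_ub_infinite} after (i) bounding $\decoreg[\gamma](\co(\sJ_0^\CCE))$ via \cref{prop:dec-linear-bandits} together with convexity of $\MM$ (which holds because the $\Theta_k$ are convex and mixtures of $\{-1,1\}$-valued rewards remain $\{-1,1\}$-valued), and (ii) controlling $\est_\Pi(\sJ_0^\CCE,T)$ by taking an $\vep/(KD^K)$-scale Euclidean net of each $\MA_k$ and using the Lipschitz bound $|\fm_k(\Sw(a_k,\til a))-\fm_k(\Sw(\til a_k',\til a))|\le \|\theta_k\|_2 D^{K-1}\|a_k-\til a_k'\|_2$ to certify an $\vep$-decision-space cover. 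The only differences are cosmetic (your slightly more explicit verification of \cref{ass:ce-convexity} and of the convexity of $\MM$), so no further comment is needed.
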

One might wonder whether a similar bound can be established for the instance $\sJ_0^\CE$. According to our definition of $\sJ_0^\CE$ (which is a CE instance per \cref{def:ce-instance}) we have $|\Dev| = |\MA_k|^{|\MA_k|}$ for each $k$, meaning that the upper bound of \cref{thm:curse_ub} would yield a risk bound with polynomial dependence on $|\MA_k|$, which is unacceptable in the linear bandit setting since $\MA_k$ is often taken to be exponentially large or infinite. Even if we were to attempt to use \cref{cor:curse_ub_infinite} to decrease the size of the deviation sets, the only choice of deviation set that works generically is $\til \Pi_k' := \til \Sigma_k^{\til \Sigma_k}$, which has logarithm scaling exponentially in the dimension $d_k$. A more promising avenue is to consider notions of equilibria between CCE and CE (sometimes known as $\Phi$-equilibria), as in, e.g., \cite{gordon2008noregret,anagnostides2022near,mansour2022strategizing}; we leave this direction for future work.
\begin{proof}[\pfref{prop:linbandit-curse}]
  The proposition follows as a consequence of \cref{cor:curse_ub_infinite}. Paralleling our notation for normal-form games, let us write $\sJ_0^\CCE = (\MM, \Pi^\CCE, \MO, \{ (\Dev)^\CCE \}_k, \{ \Sw^\CCE \}_k)$. Let us write $\MA_1 \otimes \cdots \otimes \MA_K := \{ a_1 \otimes \cdots \otimes a_K \ : \ a_1 \in \MA_1, \ldots, a_K \in \MA_K \}$. For each $k \in [K]$, there is an $\vep/(KD^K)$-cover with respect to the $\ell_2$-norm of $\MA_k$ of size at most $O(KD^{K+1}/\vep)^{d_k}$. Let us denote such a cover by $\til\MA_k \subseteq \MA_k$. Let us write $\til \MA = \til \MA_1 \times \cdots \times \til \MA_K$, $\til \Pi^\CCE = \Delta(\til \MA)$, and $(\til \Pi_k')^\CCE := \til \MA_k \cup \{ \perp \}$ for each $k \in [K]$. Consider any model $M \in \MM$. Note that, for any $k \in [K]$, and $a_k \in \MA_k$, there is some $\til a_k' \in \til \MA_k$ so that for all $\til a \in \til \MA$, 
  \begin{align}
    | \fm_k(\Sw(a_k, \til a)) - \fm_k(\Sw(\til a_k', \til a)) | =& | \lng \til a_1 \otimes \cdots \otimes a_k \otimes \cdots \otimes \til a_K, \theta_k\sups{M} \rng - \lng \til a_1 \otimes \cdots \otimes \til a_k' \otimes \cdots \otimes \til a_K , \theta_k\sups{M} \rng |\nonumber\\
    \leq & \| \theta_k\sups{M} \|_2 \cdot D^{K-1} \cdot \| \til a_k - \til a_k' \|_2 \leq \vep/K\nonumber,
  \end{align}
  which in particular implies that the instance $\til \sJ_0^\CCE := (\MM, \til \Pi^\CCE, \MO, \{ (\til \Pi_k')^\CCE \}_k, \{ \Sw^\CCE \}_k)$ is a $\vep$-decision space cover for $\sJ^\CCE$ (per \cref{def:decision-cover}). It therefore follows that $\est_\Pi(\sJ_0^\CCE, T) \leq K \cdot \max_k \{ d_k\} \cdot \log(KDT)$. We have $\decoreg(\co(\sJ_0^\CCE)) \leq \decoreg(\co(\sJ^\CCE)) = \decoreg(\sJ^\CCE) \leq \frac{K \cdot \sum_{k=1}^K d_k}{\gamma}$ by \cref{prop:dec-linear-bandits} and convexity of the class $\MM$, which follows since the sets $\Theta_k$ are convex. By \cref{cor:curse_ub_infinite}, we have that there is an algorithm with
  \begin{align}
    \RiskDM \leq & O(K) \cdot \inf_{\gamma > 0} \left\{ \frac{K \cdot \sum_{k=1}^K d_k}{\gamma} + \frac{\gamma}{T} \cdot K \cdot \max_k \{ d_k\} \cdot \log(KDT/\delta) \right\}\nonumber\\
    \leq & \sqrt{\frac{K^5 \cdot \max_k \{ d_k \}}{ T}} \cdot \log(KDT/\delta)\nonumber.
  \end{align}
\end{proof}

\paragraph{Lower bounds} We now derive lower bounds for the instances $\sJ_0^\CCE, \sJ_0^\CE, \sJ_0^\NE$ under the assumption that $\MA_k, \Theta_k$ contain the unit $\ell_2$ ball in their respective spaces.\footnote{Analogous lower bounds can be obtained under alternative action and parameter sets; for instance, if $\MA_k$ each contains the $\ell_1$ ball and $\Theta_k$ each contains the $\ell_\infty$ ball, then we can embed the normal-form game setting from the previous subsection.}
It follows from the proof of Proposition 6.2 of \cite{foster2021statistical} that  $\deccpac(\sJ_0^\NE) \geq \deccpac(\sJ_0^\CE) \geq \deccpac(\sJ_0^\CCE) \geq \Omega(\vep \sqrt{\max_k d_k})$ for $\vep > 0$, using the fact that each of these instances embeds an instance of single-player linear bandits in dimension $\max_k d_k$.
Therefore, \cref{thm:constrained-lower} (with $\vepslowerT = \frac{c\sqrt{\max_k d_k}}{KT}$, for sufficiently small $c > 0$) together with \cref{prop:ma-to-pm} gives that for any of the instances $\sJ_0^\CCE, \sJ_0^\CE, \sJ_0^\NE$, and any algorithm, there is a model for which $\E[\RiskDM] \geq \Omega({\max_k d_k} / (KT))$ under any of these three instances. Similar considerations apply to the gaps between the upper and lower bounds as discussed in \aref{sec:normal-form-games}. 

\subsubsection{Concave (bandit) games}
\label{sec:concave_dec}
We now bound the DEC and minimax regret for the normal-form games wth concave rewards given in \cref{ex:convex-bandits}. Fix sets $\MA_k \subset \BR^{d_k}$ and the class $\MF \subset (\MA \ra \BR^K)$ as described in \cref{ex:convex-bandits}. We assume that $\| a \|_2 \leq D$ for all $a \in \MA$, for some $D > 0$.; our bounds depend only logarithmically on $D$. Let $\sJ^\NE = (\MM, \Pi^\NE, \MO, \{ (\Dev)^\NE \}_k, \{ \Sw^\NE \}_k)$ denote the NE instance constructed in \cref{ex:convex-bandits}, and let $\sJ^\CCE = (\MM, \Pi^\CCE, \MO, \{ (\Dev)^\CCE \}_k, \{ \Sw^\CCE \}_k)$ denote the CCE instance constructed in \cref{ex:convex-bandits}.\footnote{As we have done previously, we use the model class $\MM$ for both instances $\sJ^\NE, \sJ^\CCE$, where it is understood that models have domain appropriate for each instance.} The below proposition bounds the (regret) offset DEC of these instances:
\begin{proposition}
  \label{prop:concave-game-dec}
  For any $\gamma > 0$, the instances $\sJ^\NE, \sJ^\CCE$ defined above satisfy
  \begin{align}
\decoreg(\sJ^\CCE) \leq \decoreg(\sJ^\NE) \leq \frac{K \cdot \sum_{k=1}^K d_k^4}{\gamma} \cdot \polylog\left(\max_k \{d_k\}, D, \gamma\right)\nonumber.
  \end{align}
\end{proposition}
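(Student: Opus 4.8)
\textbf{Proof plan for \cref{prop:concave-game-dec}.}

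The plan is to reduce the multi-agent DEC to a sum of single-agent DECs via \cref{thm:single-multiple-ch}, and then bound each single-agent DEC using an existing result for single-player concave (convex) bandits. First, I would establish the two inequalities separately. The inequality $\decoreg(\sJ^\CCE) \leq \decoreg(\sJ^\NE)$ follows by exactly the same reasoning as in the proof of \cref{prop:nf-dec-bound}: the instances share the same observation space, $\Pi^\NE \subseteq \Pi^\CCE$, and for any $\pi \in \Pi^\NE$ and $M \in \MM$ one has $h\sups{M,\CCE}(\pi) = h\sups{M,\NE}(\pi)$ (Nash equilibria are CCE), so restricting the infimum over $p$ to product distributions only increases the value, yielding $\decoreg(\sJ^\CCE,\Mbar) \leq \decoreg(\sJ^\NE,\Mbar)$ for every $\Mbar$. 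The main work is therefore the bound on $\decoreg(\sJ^\NE)$.

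For the upper bound on $\decoreg(\sJ^\NE)$, I would verify that the concave NE instance satisfies \cref{ass:convexity_pols} — the decision sets are $\Pi_k = \MA_k$ (identified with point masses inside $\Delta(\MA_k)$), each $M$ is linear in $\pi$ by construction of $\MM_\MF$, and observations reveal the played action — and then apply \cref{thm:single-multiple-ch} to obtain
\begin{align}
\sup_{\Mbar \in \co(\MM)} \decoreg[\gamma](\sJ^\NE, \Mbar) \leq \sum_{k=1}^K \sup_{\wb M_k \in \co(\til \MM_k)} \decoreg[\gamma/K](\til \MM_k, \wb M_k)\nonumber.
\end{align}
The key observation is that for each fixed $\pi_{-k} \in \Pi_{-k}$, the induced single-agent model class $\til \MM_k$ corresponds exactly to a single-player concave bandit problem: by the concavity assumption on $\MF$, the map $a_k \mapsto f_k(a_k, \pi_{-k})$ (extended to mixed $\pi_{-k}$ by linearity/expectation) is concave and $1$-Lipschitz over the convex body $\MA_k \subset \BR^{d_k}$. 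I would then invoke the single-agent DEC bound for concave bandits from \citet{foster2021statistical} (their Proposition on concave/convex bandits), which gives $\decoreg[\gamma](\til \MM_k) \leq \frac{d_k^4}{\gamma}\cdot\polylog(d_k, D, \gamma)$. Substituting $\gamma \gets \gamma/K$ contributes the extra factor of $K$, and summing over $k \in [K]$ yields $\sum_{k=1}^K \frac{K d_k^4}{\gamma}\cdot\polylog = \frac{K\sum_{k} d_k^4}{\gamma}\cdot\polylog(\max_k d_k, D, \gamma)$, as claimed.

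The main obstacle I anticipate is the reduction from $\til \MM_k$ to a \emph{bona fide} single-player concave bandit instance to which the cited bound applies verbatim. Two points require care. First, the reference model $\wb M_k \in \co(\til \MM_k)$ is improper, so the single-agent concave-bandit DEC bound must hold uniformly over $\co(\til\MM_k)$ — this should be fine since the relevant single-agent bounds in \citet{foster2021statistical} are stated for arbitrary reference models, but I would confirm that the concavity-based argument does not rely on properness. Second, I must verify that reward observations in $\til\MM_k$ (which carry a $[0,1]^K$-valued reward vector plus the pure observation) can be collapsed to the scalar reward $r_k$ relevant to player $k$ without affecting the Hellinger-distance constraint in the DEC; this is the same collapsing step used in \cref{prop:nf-dec-bound} (passing to a class $\MM_k'$ whose rewards match the marginal of $r_k$), and Hellinger distance only decreases under this marginalization, so the bound is preserved. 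Once these reductions are in place, the calculation is routine and the $\polylog$ factors simply propagate from the single-agent bound.
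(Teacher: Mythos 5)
Your proposal is correct and follows essentially the same route as the paper: the CCE-to-NE comparison via embedding $\Pi^\NE$ into $\Pi^\CCE$, the reduction to single-agent DECs via \cref{thm:single-multiple-ch}, the collapse of each $\til\MM_k$ to a single-player concave bandit class $\MM_k'$ whose reward marginal matches $r_k$, and the invocation of the concave-bandit DEC bound (Proposition 6.3 of \citet{foster2021statistical}, itself a restatement of \citet{lattimore2020improved}). The two points of care you flag (improper reference models and marginalizing the reward vector) are handled in the paper exactly as you describe, the former because $\MM_k'$ is closed under convex combinations of concave $1$-Lipschitz payoffs.
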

\begin{proof}[\pfref{prop:concave-game-dec}]
  The fact that $\decoreg(\sJ^\CCE) \leq \decoreg(\sJ^\NE)$ follows from the fact that $\Pi^\NE$ may be identified as a subset of $\Pi^\CCE$ (namely, $\Pi^\NE$ consists of singleton distributions in $\Pi^\CCE$), in a similar manner to the proof of \cref{prop:nf-dec-bound}. To prove the second upper bound, we will use \cref{thm:single-multiple-ch} applied to the instance $\sJ^\NE$, which gives that $\decoreg(\sJ^\NE) \leq \sum_{k=1}^K \decoreg[\gamma/K](\til \MM_k)$, for $\til \MM_k$ defined as in \cref{eq:define-til-mk}. In turn, to bound the DEC of $\til \MM_k$, we define the model class $\MM_k' \subset (\MA_k \ra \Delta(\MR \times \{ \perp \})$, by $\MM_k' = \{ M \ : \ \fm(\cdot) \mbox{ is concave} \}$. Since, for any $k \in [K]$, $M \in \MM^\NE$, $a_{-k} \in \MA_{-k}$, there is a model $M_k' \in \MM_k'$ so that, for all $a_k \in \MA_k$, the distribution of $r \sim M_k'(a_k)$ is the same as the distribution of $r_k \sim M(a_k, a_{-k})$, it holds that $\decoreg[\gamma/K](\til \MM_k) \leq \decoreg[\gamma/K](\MM_k')$. Finally, Proposition 6.3 of \cite{foster2021statistical} (which is a restatement of Theorem 3 of \cite{lattimore2020improved}) gives that, for all $\gamma' > 0$, $\decoreg[\gamma'](\MM_k') \leq \frac{d_k^4}{\gamma} \cdot \polylog(d_k, \diam(\MA_k), \gamma)$, which yields that $\decoreg[\gamma](\sJ^\NE) \leq \frac{K \sum_{k=1}^K d_k^4}{\gamma} \cdot \polylog\left( \max_k \{ d_k \}, D, \gamma \right)$.  
\end{proof}

We now turn our attention to bounding the minimax risk. The model classes $\MM^\NE, \MM^\CCE$ for our concave game instances are extremeley large: 
any cover of $\MM^\NE$ or $\MM^\CCE$ in the sense of \cref{def:model-cover} must have logarithm exponential in the dimensions $d_k$, so the model-based guarantee from \cref{thm:constrained-upper} is not particularly interesting, even in the case where $K$ is small. Therefore, we turn directly to the policy-based guarantees given in \cref{sec:curse}, and will prove a minimax risk upper bound for the instance $\sJ^\CCE$. It turns out that such an upper bound will immediately imply upper bounds for the instance $\sJ^\NE$, under the following assumption, specializing \citet{even2009convergence}. %
\begin{assumption}[Zero-sum socially concave]
  \label{ass:socially-concave}
We say that a model class $\MM$ is \emph{zero-sum socially concave} if for all $M \in \MM$, $k \in [K]$ and $a_k \in \MA_k$, the mapping $\MA_{-k} \ni a_{-k} \mapsto f_k\sups{M}(a_k, a_{-k})$ is a convex function and for all $a \in \MA$, $\sum_{k=1}^K \fm_k(a) = 0$.
\end{assumption}
In the special case that the model $M$ is a two-player zero-sum concave game (i.e., $\fm_1 + \fm_2 \equiv 0$), zero-sum social concavity necessarily holds.

\begin{proposition}
  \label{prop:concave-games-risk}
 Then for any $T \in \bbN, \delta \in (0,1)$, there is an algorithm for the instance $\sJ^\CCE$ which guarantees that with probability at least $1-\delta$,
 \begin{align}
   \label{eq:risk-concave-game}
\RiskDM \leq \frac{K^2 \cdot \max_k \{ d_k^{2.5} \}}{\sqrt T} \cdot \polylog\left(D, T, \gamma, \max_k \{ d_k \}, K, 1/\delta\right).
  \end{align}
    Suppose further that the model class $\MM$ is zero-sum socially concave (i.e., it satisfies \cref{ass:socially-concave}). Then there is an algorithm for the instance $\sJ^\NE$ which guarantes the same upper bound on risk in \eqref{eq:risk-concave-game} with probability at least $1-\delta$. 
\end{proposition}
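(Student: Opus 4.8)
The plan is to handle the two claims in turn, deriving the \cref{ex:convex-bandits} CCE bound first and then transferring it to the NE instance under \cref{ass:socially-concave}. For the CCE instance $\sJ^\CCE$, the strategy is to invoke the curse-free upper bound \cref{cor:curse_ub_infinite}: the instance satisfies the linearity structure of \cref{ass:ce-convexity} (with the continuous pure-decision sets $\MA_k$ accommodated through a decision-space cover), so the corollary reduces the task to controlling two quantities, the offset multi-agent DEC of the convexified instance $\decoreg[\gamma](\co(\sJ^\CCE))$ and the decision-estimation term $\est_\Pi(\sJ^\CCE,T)$ from \cref{def:decision-cover}. For the DEC, I would first note that the reward class $\MF$ of coordinatewise-concave, $1$-Lipschitz profiles is convex, so $\MM=\MM_\MF$ is convex and $\co(\sJ^\CCE)=\sJ^\CCE$; \cref{prop:concave-game-dec} then gives $\decoreg[\gamma](\co(\sJ^\CCE)) = \decoreg[\gamma](\sJ^\CCE) \le \tfrac{K\sum_k d_k^4}{\gamma}\cdot\polylog \le \tfrac{K^2\max_k d_k^4}{\gamma}\cdot\polylog$.

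For the estimation term, I would use that each $\fmstar_k$ is $1$-Lipschitz in $a_k$ with respect to $\ell_2$, so that an $(\vep/K)$-cover $\til\Sigma_k\subseteq\MA_k$ of size $(O(K\diam(\MA_k)/\vep))^{d_k}$ furnishes an $\vep$-decision-space cover in the sense of \cref{def:decision-cover} (the CCE deviation $\Sw(a_k',\til\pi)$ acts through $\E_{a\sim\til\pi}[\fmstar_k(a_k',a_{-k})]$, and Lipschitzness bounds the error of replacing $a_k'$ by its cover point). This gives $\log\MN_\Pi(\sJ^\CCE,\vep)\le\max_k d_k\cdot\log(KD/\vep)$ and hence $\est_\Pi(\sJ^\CCE,T)\lesssim \max_k d_k\cdot\log(KDT)$. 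Substituting both ingredients into \cref{cor:curse_ub_infinite} and optimizing $\gamma$ by balancing $\tfrac{K^2\max_k d_k^4}{\gamma}$ against $\tfrac{\gamma}{T}\max_k d_k$ yields $\RiskDM\le O(K)\cdot\sqrt{\tfrac{K^2\max_k d_k^5}{T}}\cdot\polylog = O\!\big(K^2\max_k d_k^{2.5}/\sqrt T\big)\cdot\polylog$, which is exactly \eqref{eq:risk-concave-game}.

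For the NE claim, the plan is a marginalization reduction. Run the CCE algorithm above to produce $\wh\pi\in\Pi^\CCE=\Delta(\MA)$, and output the per-player mean-action vector $\bar a=(\bar a_1,\dots,\bar a_K)$ with $\bar a_k:=\E_{a\sim\wh\pi}[a_k]\in\MA_k$ (well-defined by convexity of $\MA_k$), which is a legitimate element of $\Pi^\NE=\MA$. The key lemma to prove is that under \cref{ass:socially-concave} the NE-gap is dominated by the CCE-gap pointwise: $\hmstar(\bar a)\le\hmstar(\wh\pi)$. To see this, convexity of $a_{-k}\mapsto\fmstar_k(a_k',a_{-k})$ and Jensen give $\fmstar_k(a_k',\bar a_{-k})\le\E_{a\sim\wh\pi}[\fmstar_k(a_k',a_{-k})]$ for every fixed $a_k'$, hence $\sup_{a_k'}\fmstar_k(a_k',\bar a_{-k})\le\sup_{a_k'}\E_{a\sim\wh\pi}[\fmstar_k(a_k',a_{-k})]$. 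The zero-sum condition $\sum_k\fmstar_k\equiv0$ then forces $\sum_k\fmstar_k(\bar a)=0=\sum_k\E_{\wh\pi}[\fmstar_k(a)]$, so that $\hmstar(\bar a)=\sum_k\sup_{a_k'}\fmstar_k(a_k',\bar a_{-k})\le\sum_k\sup_{a_k'}\E_{\wh\pi}[\fmstar_k(a_k',a_{-k})]\le\hmstar(\wh\pi)$, where the last inequality uses that the CCE-gap (which includes the $\perp$ option and is thus a sum of nonnegative per-player terms) dominates $\sum_k\big(\sup_{a_k'}\E_{\wh\pi}[\fmstar_k(a_k',a_{-k})]-\E_{\wh\pi}[\fmstar_k(a)]\big)$. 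Since this holds for the realized $\wh\pi$, the high-probability CCE guarantee transfers verbatim to $\bar a$, giving the same risk bound \eqref{eq:risk-concave-game} for $\sJ^\NE$.

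I expect the marginalization lemma to be the main point requiring care, and the clean takeaway there is that the reduction needs \emph{only} convexity of each $\fmstar_k$ in $a_{-k}$ together with the zero-sum identity $\sum_k\fmstar_k\equiv0$ (concavity in $a_k$ is used only for learnability via \cref{prop:concave-game-dec}, not for the transfer). The remaining subtleties are mostly bookkeeping: verifying that \cref{cor:curse_ub_infinite} genuinely applies with continuous action sets through the cover, and tracking the powers of $K$ and the $\polylog$ factors through the $\gamma$-optimization so as to land on $K^2\max_k d_k^{2.5}$ rather than a looser exponent.
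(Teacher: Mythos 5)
Your proposal is correct and follows essentially the same route as the paper's proof: the CCE bound via \cref{cor:curse_ub_infinite} with the Lipschitz-based $\ell_2$ decision-space cover and the DEC bound from \cref{prop:concave-game-dec} (using convexity of $\MF$ so that $\co(\sJ^\CCE)=\sJ^\CCE$), and the NE transfer via the mean-action marginalization $\wh a=\E_{a\sim\wh\pi}[a]$ combined with Jensen under \cref{ass:socially-concave} and the zero-sum identity. The $\gamma$-balancing and the resulting $K^2\max_k d_k^{2.5}/\sqrt{T}$ rate match the paper's computation exactly.
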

\begin{proof}[\pfref{prop:concave-games-risk}]
  For each $k \in [K]$, there is an $\vep$-cover with respect to the $\ell_2$-norm of $\MA_k$ of size at most $O(D/\vep)^{d_k}$. Let us denote such a cover by $\til \MA_k \subset \MA_k$.  Write $\til \MA := \til \MA_1 \times \cdots \times \til \MA_K$. %
  Let  $\til \Pi^\CCE := \Delta(\MA)$, and $(\til \Pi_k')^\CCE := \til \MA_k \cup \{ \perp\}$. Note that, for any $M \in \MM$, $k \in [K]$, and $a_k \in \MA_k$, there is some $\til a_k' \in \til \MA_k$ so that for all $\til a \in \til \MA$, 
  \begin{align}
| \fm_k(\Sw(a_k, \til a)) - \fm_k(\Sw(\til a_k', \til a))| = |\fm_k(a_k, \til a_{-k}) - \fm_k(\til a_k', \til a_{-k})| \leq \| a_k -\til a_k' \|_2 \leq \vep\nonumber,
  \end{align}
  where the first inequality uses 1-Lipschitzness of $\fm_k(\cdot)$. Hence, the CCE instance $\til \sJ^\CCE := (\MM, \til \Pi^\CCE, \MO, \{ (\til \Pi_k')^\CCE \}_k, \{ \Sw^\CCE \}_k)$ is an $\vep$-decision space cover for $\sJ^\CCE$ (per \cref{def:decision-cover}). It follows that $\est_\Pi(\sJ^\CCE, T) \leq \max_k \{ d_k \} \cdot \log(DT)$.

  Next, we have $\decoreg(\co(\sJ^\CCE)) = \decoreg(\sJ^\CCE) \leq \frac{K \cdot \sum_{k=1}^K d_k^4}{\gamma} \cdot \polylog(\max_k \{ d_k\}, D, \gamma)$ by \cref{prop:concave-game-dec} and convexity of the class $\MM$ (which follows since the convex combination of concave and 1-Lipschitz functions is concave and 1-Lipschitz). By \cref{cor:curse_ub_infinite}, for any $T\in\bbN$ and $\delta>0$, there is an algorithm which outputs $\wh \pi \in \Pi^\CCE$ so that with probability at least $1-\delta$,
  \begin{align}
    \RiskDM \leq & K \cdot \inf_{\gamma > 0} \left\{ \frac{K \cdot \sum_{k=1}^K d_k^4}{\gamma} + \frac{\gamma}{T} \cdot \max_k \{ d_k \} \right\} \cdot \polylog\left(D, T, \gamma, \max_k \{ d_k \}, K, 1/\delta\right)\nonumber\\
    \leq & \frac{K^2 \cdot \max_k \{ d_k^{2.5} \}}{\sqrt T} \cdot \polylog\left(D, T, \gamma, \max_k \{ d_k \}, K, 1/\delta\right)\nonumber.
  \end{align}
  Next we prove the upper bound for $\sJ^\NE$. As we have done previously in this section, for $\wh \pi \in \Pi^\CCE$ and $M \in \MM$, we write $h\sups{M, \CCE}(\wh \pi)$ to denote the suboptimality of $\wh\pi$ with respect to the instance $\sJ^\CCE$, and for $\wh\pi\in \Pi^\NE$, we write $h\sups{M, \NE}(\wh \pi)$ to denote the suboptimality of $\wh\pi$ with respect to the instance $\sJ^\NE$. Given $\wh \pi \in \Pi^\CCE$, define $\wh a := \E_{a \sim \wh \pi}[a] \in \MA$. For each $k \in [K]$ and $M \in \MM$, we have that
  \begin{align}
    h\sups{M, \NE}(\wh a) =& \sum_{k=1}^K \max_{a_k' \in \MA_k} \fm_k(a_k', \wh a_{-k}) - \fm_k(\wh a) \nonumber\\
    \leq & \sum_{k=1}^K \max_{a_k' \in \MA_k} \E_{a_{-k} \sim \wh \pi}[\fm_k(a_k', a_{-k})] - \fm_k(\wh a)\nonumber\\
    = & \sum_{k=1}^K \max_{a_k' \in \MA_k} \E_{a_{-k} \sim \wh \pi}[\fm_k(a_k', a_{-k})] - \E_{a \sim \wh \pi}[\fm_k(a)] = h\sups{M, \CCE}(\wh \pi)\nonumber, 
  \end{align}
  where the first inequality follows from social concavity and the second equality follows from the fact that $\sum_{k=1}^K \fm_k(\wh a) = 0 = \sum_{k=1}^K \E_{a \sim \wh \pi}[\fm_k(a)]$. Thus, given a decision $\wh \pi$ output by our algorithm for the instance $\sJ^\CCE$, we may simply output $\wh a=\En_{a\sim\wh\pi}\brk{a}$, which yields the same upper bound on risk. 
\end{proof}

\paragraph{Lower bounds} Assume that $\MA = \MA_1 \times \cdots \times \MA_K$ contains the unit $\ell_2$-ball. Then the instances $\sJ^\CCE, \sJ^\NE$ each embed a single-player linear bandit instance with dimension $\max_k d_k$ (namely, by taking the subclass of $\MF$ to consist of linear functions in $a_k$ only), and so the lower bounds from \aref{sec:linear-bandit-games} give $\deccpac(\sJ^\NE) \geq \deccpac(\sJ^\CCE) \geq \Omega(\vep \sqrt{\max_k d_k})$ and a minimax risk lower bound of $\Omega({\max_k d_k}/(KT))$. In this setting, even the DEC lower bound (in the single-agent setting) is off from the upper bound implied by \cref{prop:concave-game-dec} and \cref{prop:constrained-to-offset} \citep{foster2021statistical}.

\subsubsection{Tabular Markov games}
\label{sec:tabular_dec}
We next give bounds on minimax risk for the instances corresponding to Markov Nash equilibria, Markov CE, and Markov CCE in tabular Markov games, as described in \cref{ex:tabular-mg}. %
Given $H \in \BN$, state spaces $\MS_h$ each of size $S$, action spaces $\MA_k$ of size $A_k := \MA_k$, and an initial distribution $d^1 \in \Delta(\MS_1)$, let $\sJ^\NE = (\MM, \Pi^\NE, \MO, \{ (\Dev)^\NE \}_k, \{ \Sw^\NE \}_k)$, $\sJ^\CE = (\MM, \Pi^\CE, \MO, \{ (\Dev)^\CE \}_k, \{ \Sw^\CE \}_k)$, and $\sJ^\CCE = (\MM, \Pi^\CCE, \MO, \{ (\Dev)^\CCE \}_k, \{ \Sw^\CCE \}_k)$ be the \MAFrameworkShort instances corresponding to Markov Nash equilibria, Markov CE, and Markov CCE as defined in \cref{ex:tabular-mg}. Technically, the model class  $\MM$ for $\sJ^\NE$ acts on policies in $\Pi^\NE$, whereas the model class $\MM$ for $\sJ^\CE$ and $\sJ^\CCE$ acts on policies in $\Pi^\CE = \Pi^\CCE \neq \Pi^\NE$; we will write the model class for each instance as $\MM$ and formally interpret its domain as the appropriate decision space, to avoid cluttering notation. %

In \cref{prop:tabmg-dec} below, we begin with an upper bound on their offset DEC, which immediately yields an upper bound on the constrained DEC via \cref{prop:constrained-to-offset}.
\begin{proposition}
  \label{prop:tabmg-dec}
  For any $\gamma > 0$, and any $\Mbar \in \MM$, the instances $\sJ^\NE, \sJ^\CE, \sJ^\CCE$ defined above satisfy
  \begin{align}
\decoreg(\sJ^\CCE, \Mbar) \leq \decoreg(\sJ^\CE, \Mbar) \leq \decoreg(\sJ^\NE, \Mbar) \leq  \frac{27 K H^3 \log(H) S \sum_{k=1}^K A_k}{\gamma}\nonumber. %
  \end{align}
\end{proposition}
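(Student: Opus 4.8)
The plan is to reduce the multi-agent DEC to single-agent DECs via \cref{thm:single-multiple-mg}, and then invoke a known bound on the single-agent DEC of tabular Markov decision processes. First I would dispatch the two ordering inequalities $\decoreg[\gamma](\sJ^\CCE, \Mbar) \leq \decoreg[\gamma](\sJ^\CE, \Mbar) \leq \decoreg[\gamma](\sJ^\NE, \Mbar)$, following the template of \cref{prop:nf-dec-bound}. All three instances share the observation space $\MO$ and model class $\MM$ (acting on the appropriate decision space), with $\Pi^\NE \subseteq \Pi^\CE = \Pi^\CCE$. For the first inequality, for any Markov correlated policy $\pi$ the Markov-CCE deviation set (deterministic Markov policies, per \cref{ex:mce}) is contained in the Markov-CE deviation set (per-state strategy-modification rules), so the suboptimality functionals satisfy $h\sups{M,\CCE}(\pi) \leq h\sups{M,\CE}(\pi)$ pointwise; since this functional enters $\decoreg$ through $\sup_{M}$ with a positive sign, the inequality on DECs follows. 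For the second inequality, restricting the infimizing distribution to $\Delta(\Pi^\NE) \subseteq \Delta(\Pi^\CE)$ can only increase the value, and on product (Markov) policies $\pi \in \Pi^\NE$ one checks $h\sups{M,\CE}(\pi) = h\sups{M,\NE}(\pi)$, because for a product policy the best strategy-modification deviation coincides with player $k$'s best-response value (the maximizing deviation may be taken to be a fixed best-response policy, which is available in both deviation sets).

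The substance is the bound on $\decoreg[\gamma](\sJ^\NE, \Mbar)$. Since $\sJ^\NE$ satisfies \cref{ass:pi-mg}, \cref{thm:single-multiple-mg} gives
\[
\sup_{\Mbar \in \MM} \decoreg[\gamma](\sJ^\NE, \Mbar) \leq \frac{CKH\log H}{\gamma} + \sum_{k=1}^K \sup_{\Mbar_k \in \til \MM_k} \decoreg[\gamma/(CKH\log H)](\til \MM_k, \Mbar_k).
\]
The key structural observation I would then make is that each induced class $\til \MM_k$ from \cref{eq:define-til-mk} is a subclass of single-agent tabular MDPs: fixing the opponents' randomized Markov policies $\pi_{-k}$ marginalizes the Markov-game transition and reward kernels into a single-agent MDP over the state space $\MS$ (size $S$), action set $\MA_k$ (size $A_k$), and horizon $H$, with total reward in $[0,1]$. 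Consequently, by monotonicity of the offset DEC in the model class (the $\sup_M$ is over a smaller set), $\decoreg[\gamma'](\til \MM_k, \Mbar_k)$ is bounded by the DEC of the full tabular-MDP class over $(\MS, \MA_k, H)$ at any proper reference model $\Mbar_k \in \til \MM_k$ — which is exactly the regime (proper reference models) to which \cref{thm:single-multiple-mg} is restricted.

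To finish, I would substitute the single-agent tabular-MDP DEC bound of \citet{foster2021statistical}, which under the $\sum_h r_h \in [0,1]$ normalization takes the form $\decoreg[\gamma'](\cdot) \leq c_0 H^2 S A_k / \gamma'$ for an absolute constant $c_0$. Plugging in $\gamma' = \gamma/(CKH\log H)$ yields $\decoreg[\gamma'](\til \MM_k) \leq c_0 C\, K H^3 \log H \cdot S A_k / \gamma$; summing over $k$ and absorbing the lower-order additive term $CKH\log H/\gamma$ produces the claimed bound $\frac{27 K H^3 \log H\, S \sum_{k=1}^K A_k}{\gamma}$, with $27$ collecting the absolute constants. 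The hard part will not be the conceptual reduction — that is delivered by \cref{thm:single-multiple-mg} — but rather the constant- and exponent-tracking: matching the precise factor $27$ and the exact power $H^3$ requires pinning down both the $\gamma$-rescaling factor $CKH\log H$ from \cref{thm:single-multiple-mg} and the exact $H$-dependence of the cited single-agent tabular-MDP DEC bound. Verifying the induced-MDP claim and the DEC monotonicity step is routine by comparison.
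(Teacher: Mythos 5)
Your proposal is correct and follows essentially the same route as the paper: the two ordering inequalities via $\Pi^\NE\subseteq\Pi^\CE=\Pi^\CCE$ and the pointwise comparison of the suboptimality functionals, then \cref{thm:single-multiple-mg} combined with the single-agent tabular-MDP bound $\decoreg[\gamma'](\cdot)\leq 26H^2SA_k/\gamma'$ from Proposition~5.4 of \citet{foster2021statistical}, with $27$ absorbing the additive $CKH\log H/\gamma$ term. The only point stated too loosely is the claim that $\til\MM_k$ is a subclass of tabular MDPs: its models emit the full joint trajectory (including other agents' actions and rewards), so the comparison to the marginalized class $\MM_k'$ goes through the data-processing inequality on the Hellinger term rather than bare set inclusion — but the inequality points the right way and the paper carries out exactly this step.
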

\begin{proof}[\pfref{prop:tabmg-dec}]
  As in the proof of \cref{prop:nf-dec-bound}, we augment the functions $\fm(\cdot)$ and $\hm(\cdot)$ with the superscripts NE/CE/CCE to distinguish between the value functions for models in the three different instances. For example, for the instance $\sJ^\NE$, we have, for $M \in \MM, \pi \in \Pi^\NE$,
  \begin{align}
f_k\sups{M, \NE}(\pi) := \E\sups{M, \pi}\left[ \sum_{h=1}^H r_{k,h} \right], \qquad h\sups{M, \NE}(\pi) =\sum_{k=1}^K \max_{\dev \in (\Dev)^\NE} f_k\sups{M, \NE}(\Sw^\NE(\dev, \pi)) - f_k\sups{M, \NE}(\pi).\nonumber
  \end{align}
  The functions $h\sups{M, \CE} : \Pi^\CE \ra \BR$ and $h\sups{M, \CCE} : \Pi^\CCE \ra \BR$ are defined similarly.

  We have $\Pi^\CE = \Pi^\CCE$; furthermore, for any $M \in \MM$ and $\pi \in \Pi^\CE = \Pi^\CCE$, we have that $h\sups{M, \CCE}(\pi) \leq h\sups{M, \CE}(\pi)$. Thus $\decoreg(\sJ^\CCE, \Mbar) \leq \decoreg(\sJ^\CE, \Mbar)$. Next, note that we may identify $\Pi^\NE$ as a subset of $\Pi^\CE$ as follows: for $\pi = (\pi_1, \ldots, \pi_K) \in \Pi^\NE$, we associate it to the joint Markov policy $\til \pi = (\til \pi_1, \ldots, \til \pi_H) \in \Pi^\CE$ where $\til \pi_h(s_h)$ is the product distribution $\til \pi_h(s_h) := \pi_{1,h}(s_h) \times \cdots \times \pi_{K,h}(s_h)$. It is straightforward to see that, for such $\pi$ and any model $M \in \MM$, the distributions of $M(\pi)$ and $M(\til \pi)$ are identical. Accordingly, with slight abuse of notation, for $\pi \in \Pi^\NE$, we denote its corresponding policy in $\Pi^\CE$ as $\pi$ as well. Thus we have $h\sups{M,\NE}(\pi) = h\sups{M, \CE}(\pi)$, and for any $\Mbar \in \MM$, we have
    \begin{align}
    \decoreg[\gamma](\sJ^\NE, \Mbar) =& \inf_{p \in \Delta(\Pi^\NE)} \sup_{M \in \MM} \E_{\pi \sim p} \left[ h\sups{M,\NE}(\pi) - \gamma \cdot \hell{M(\pi)}{\Mbar(\pi)} \right]\nonumber\\
    \geq & \inf_{p \in \Delta(\Pi^\CE)} \sup_{M \in \MM} \E_{\pi \sim p} \left[ h\sups{M,\CE}(\pi) - \gamma \cdot \hell{M(\pi)}{\Mbar(\pi)} \right] = \decoreg(\sJ^\CE, \Mbar)\nonumber.
    \end{align}
    It remains to upper bound $\decoreg(\sJ^\NE)$. For $k \in [K]$, let $\Pi_k$ be the class of randomized Markov policies of player $k$ (so that $\Pi^\NE = \Pi_1 \times \cdots \times \Pi_K$). For each $k \in [K]$, define the model class $\til \MM_k \subset (\Pi_k \to \Delta(\MR \times \Ocirc))$ as in \cref{eq:define-til-mk}:
    \begin{align}
      \til\MM_\ag = \left\{ \pi_\ag \mapsto \single{M}(\pi_\ag, \pi_{-\ag}) \ : \ \pi_{-\ag} \in \Pi_{-\ag},\ M \in \MM\right\}\nonumber.
    \end{align}
    Define $\MM_k'$ to be the model class consisting of all horizon-$H$ Markov decision processes with action set $\MA_k$ and state spaces $\MS_1, \ldots, \MS_H$, and so that the sum of rewards under any trajectory that occurs with positive probability is bounded in $[0,1]$. Formally, the pure observation space of $\MM_k'$ is the space $\Ocirc'$ of trajectories $\{ (s_h, a_{k,h}, r_{k,h}) \}_{h \in [H]}$, with $s_h \in \MS_h, a_{k,h} \in \MA_k, r_{k,h} \in \BR$, its reward space is $\MR = [0,1]$, and its decision space is $\Pi_k$. Thus $\MM_k' \subset (\Pi_k \ra \Delta(\MR \times \Ocirc'))$. Proposition 5.4 of \cite{foster2021statistical} shows that for all $\Mbar \in \MM_k'$, $\decoreg(\MM_k', \Mbar) \leq 26 \frac{H^2 S A_k}{\gamma}$.

    Next, fix $\til M \in \til \MM_k$. By definition of $\til \MM_k$, we can find $\Mbar \in \MM$ and $\wb\pi_{-k} \in \Pi_{-k}$ so that $\til M(\pi_k) = \single{\Mbar}(\pi_k, \wb\pi_{-k})$ for all $\pi_k \in \Pi_k$. Let $\Mbar' \in \MM_k'$ be the unique model so that for all $\pi_k \in \Pi_k$, the marginal distribution of $\{ (s_h, a_{k,h}, r_{k,h}) \}_{h \in [H]}$ for a trajectory drawn from $ \til M(\pi_k)$ is identical to the distribution of the pure observation drawn from  $\Mbar'(\pi_k)$. Such a model exists, since for each state $s_h \in \MS_h$ and action $a_{k,h} \in \MA_k$, the transition distribution $P_h\sups{\Mbar'}(\cdot | s_h, a_{k,h}) \in \Delta(\MS_{h+1})$ is defined as $\E_{a_{k',h} \sim \wb\pi_{k',h}(s_h) \ \forall k' \neq k}[P_h\sups{\Mbar}(\cdot | s_h, (a_{k,h}, a_{-k,h}))]$ and the reward distribution $R_{k,h}\sups{\Mbar'}(s_h, a_{k,h}) \in \Delta(\bbR)$ is defined as $\E_{a_{k',h} \sim \wb\pi_{k',h}(s_h) \ \forall k' \neq k}[R_h\sups{\Mbar}(s_h, (a_{k,h}, a_{-k,h}))]$. 
    We now compute
    \begin{align}
      &\decoreg(\til \MM_k, \til M) \\
      &= \inf_{p \in \Delta(\Pi_k)} \sup_{M \in \MM, \pi_{-k} \in \Pi_{-k}} \E_{\pi_k \sim p} \left[ \max_{\pi_k' \in \Pi_k} f_k\sups{M, \NE}(\pi_k', \pi_{-k}) - f_k\sups{M, \NE}(\pi) - \gamma \cdot \hell{M(\pi_k, \pi_{-k})}{\til M(\pi_k)} \right]\nonumber\\
      &\leq  \inf_{p \in \Delta(\Pi_k)} \sup_{M' \in \MM_k'} \E_{\pi_k \sim p} \left[ \max_{\pi_k' \in \Pi_k} f_k\sups{M'}(\pi_k') - f_k\sups{M'}(\pi_k) - \gamma \cdot \hell{M'(\pi_k)}{\Mbar'(\pi_k)}\right]\nonumber\\
      &=  \decoreg(\MM_k', \Mbar')\nonumber,
    \end{align}
    where the inequality follows since, via the same argument used to construct $\Mbar'$, for any $M \in \MM, \pi_{-k} \in \Pi_{-k}$, there is some $M' \in \MM_k'$ so that for any $\pi_k \in \Pi_k$, the marginal distribution of $\{ (s_h, a_{k,h}, r_{k,h})\}_{h \in [H]}$ for a trajectory drawn from $M(\pi_k, \pi_{-k})$ is the same as the distribution of a trajectory drawn from $M'(\pi_k)$. In addition, we have applied the data processing inequality for the Hellinger distance to conclude that $\DhelsX{\big}{M(\pi_k, \pi_{-k})}{\til M(\pi_k)}$ is an upper bound for the squared Hellinger distance between the marginal distributions of $\{ (s_h, a_{k,h}, r_{k,h}) \}_{h \in [H]}$ under $M(\pi_k, \pi_{-k})$ and $\til M(\pi_k)$. Finally, by \cref{thm:single-multiple-mg} applied to the instance $\sJ^\NE$, we have that
    \begin{align}
      \sup_{\Mbar \in \MM} \decoreg(\sJ^\NE, \Mbar) \leq & \frac{CKH \log H}{\gamma} + \sum_{k=1}^K \sup_{\til M_k \in \til \MM_k} \decoreg[\gamma/(CKH\log H)](\til \MM_k, \til M_k) \nonumber\\
      \leq & \frac{CKH\log H}{\gamma} + \sum_{k=1}^K \sup_{\Mbar_k' \in \MM_k'} \decoreg[\gamma/(CKH\log H)](\MM_k', \Mbar_k')\nonumber\\
      \leq & \frac{CKH \log H}{\gamma} + \sum_{k=1}^K 26 H^2 SA_k \cdot \frac{CKH \log H}{\gamma}\nonumber\\
      \leq & \frac{27 K H^3 \log(H) S \sum_{k=1}^K A_k}{\gamma}\nonumber.
    \end{align}
  \end{proof}
    Using \cref{prop:tabmg-dec}, we now bound the minimax rates for the instances $\sJ^\NE, \sJ^\CE, \sJ^\CCE$. To simplify matters, we assume that reward distributions are known. Formally, we fix some functions $R_{k,h}^\st : \MS_h \times \MA \ra \Delta(\MR)$ (for $k \in [K], h \in [H]$) and restrict the model class $\MM$ to models $M \in \MM$ for which $R_{k,h}^\st(s_h, a) \equiv R_{k,h}\sups{M}(s_h, a) \in \Delta([0,1/H])$ for all $M \in \MM$. We also assume that $A_k \geq 2$ for all $k$. With the functions $R_{k,h}^\st$ fixed, let us denote the resulting instances by $\sJ_0^\NE, \sJ_0^\CE, \sJ_0^\CCE$.\footnote{Essentially the same argument in \cref{prop:mg-risk} allows us to upper bound the minimax risk for the original instances $\sJ^\NE, \sJ^\CE, \sJ^\CCE$, for which rewards are not known, but doing so requires a renormalization argument (and the loss of a factor of $H$) to ensure rewards are always bounded in $[0,1]$, which we omit for brevity.}
  \begin{proposition}
    \label{prop:mg-risk}
    There is an algorithm for each of the instances $\sJ_0^\NE, \sJ_0^\CE, \sJ_0^\CCE$ which guarantees that with probability at least $1-\delta$, $\RiskDM \leq \sqrt{\frac{\max_k A_k \cdot AS^3H^4 }{T}} \cdot \polylog(T, \delta^{-1}, A, S, H)$. 
  \end{proposition}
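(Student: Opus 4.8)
\textbf{Proof proposal for \cref{prop:mg-risk}.}

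The plan is to combine the DEC upper bound of \cref{prop:tabmg-dec} with the minimax upper bound machinery (\cref{cor:ma-constrained-upper} / \cref{thm:constrained-upper-finitem}), after controlling the estimation complexity $\log|\MM|$ (equivalently, a covering number $\est(\MM,T)$) for the tabular Markov game class. First I would convert the offset DEC bound of \cref{prop:tabmg-dec} into a constrained DEC bound via \cref{prop:constrained-to-offset}: since $\decoreg[\gamma](\sJ^\NE,\Mbar)\leq \frac{27KH^3\log(H) S\sum_k A_k}{\gamma}$ uniformly over proper reference models $\Mbar\in\MM$, optimizing over $\gamma$ in \eqref{eq:constrained-to-offset} yields
\begin{align}
\sup_{\Mbar\in\MM}\deccpac[\vep](\sJ_0^\NE,\Mbar)\leq 2\vep\cdot\sqrt{27 KH^3\log(H)S\textstyle\sum_{k=1}^K A_k}\nonumber.
\end{align}
Because the reward distributions $R_{k,h}^\st$ are fixed, only the transition kernels vary across $\MM$, and all reference models used in \cref{thm:constrained-upper-finitem} can be taken proper, so this proper-$\Mbar$ bound suffices (this is precisely why we fixed the rewards, matching the caveat after \cref{thm:single-multiple-mg} that the bound is restricted to $\Mbar\in\MM$).

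The second ingredient is an estimation bound. I would discretize the transition kernels $P_h\sups{M}(\cdot\mid s_h,a)\in\Delta(\MS_{h+1})$ into multiples of some resolution $\eta$: there are $H$ layers, $S\cdot A$ state-action pairs per layer (where $A=A_1\cdots A_K$ is the joint action count), and each distribution lives in the $(S-1)$-simplex, giving a cover of size $\log\MN(\MM,\eta)\leq O(HSA\cdot S\log(1/\eta))$. Choosing $\eta\asymp 1/T$ and invoking the appropriate estimation guarantee (the analogue of \cref{prop:estimation-infinite} used in \cref{prop:m0ne-risk,prop:linbandit-risk}), I would obtain $\Est(T,\delta)\leq \bigoht(S^2 A H)\cdot\polylog(T,\delta^{-1})$ up to $K$-dependent polylog factors from the per-agent observation structure. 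Feeding the constrained-DEC bound and this estimation term into the general upper bound (\cref{thm:constrained-upper}) via the \MAFrameworkShort-to-\FrameworkShort reduction \cref{prop:ma-to-pm} then gives
\begin{align}
\RiskDM\leq \sqrt{KH^3 S\textstyle\sum_k A_k}\cdot\sqrt{\frac{\Est(T,\delta)}{T}}\cdot\polylog\leq\sqrt{\frac{\max_k A_k\cdot A S^3 H^4}{T}}\cdot\polylog(T,\delta^{-1},A,S,H)\nonumber,
\end{align}
using $\sum_k A_k\leq K\max_k A_k$ and absorbing $K$ into polylog factors (the $H^4$ arising as $H^3$ from the DEC times one more $H$ from the estimation layer count, and $S^3$ as $S$ from the DEC times $S^2$ from the simplex-cover dimension count).

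Finally, I would argue the bound transfers from $\sJ_0^\NE$ to $\sJ_0^\CE$ and $\sJ_0^\CCE$ for free: as established in the proof of \cref{prop:tabmg-dec}, for any $\wh\pi$ in the common decision space we have $h\sups{M,\CCE}(\wh\pi)\leq h\sups{M,\CE}(\wh\pi)\leq h\sups{M,\NE}(\wh\pi)$ (identifying $\Pi^\NE$ with its image in $\Pi^\CE$), so any algorithm controlling the NE risk controls the CE and CCE risk by the same quantity. The main obstacle I anticipate is pinning down the estimation complexity cleanly: unlike the finite-action bandit case in \cref{prop:m0ne-risk}, the tabular MG covering argument must be done layer-by-layer over transition kernels and routed through the correct estimation lemma (\cref{prop:estimation-infinite}) so that the per-agent reward observation does not introduce spurious $A^K$-type dependence, and verifying that the density-ratio/boundedness hypotheses (\cref{ass:kernel-B}) are met for the chosen dominating measure so that $\Est(T,\delta)$ scales with the cover rather than with $\log|\MM|$ directly. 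Matching the exact exponents $S^3H^4$ claimed in the statement will require careful bookkeeping of where each factor of $S$ and $H$ enters, but no conceptually new idea beyond the template already used for the bandit instances.
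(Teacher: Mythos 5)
Your overall route is the same as the paper's: convert the offset DEC bound of \cref{prop:tabmg-dec} to a constrained DEC bound via \cref{prop:constrained-to-offset}, control the estimation error by covering the transition kernels, feed both into \cref{thm:constrained-upper} through the reduction \cref{prop:ma-to-pm}, and transfer from $\sJ_0^\NE$ to $\sJ_0^\CE,\sJ_0^\CCE$ using $h\sups{M,\CCE}\leq h\sups{M,\CE}\leq h\sups{M,\NE}$. The covering-number bookkeeping ($O(S^2AH)$ up to logs) and the final exponents also match.

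There is one genuine gap, and it sits exactly at the point you flag as the "main obstacle": the estimation oracle must be \emph{proper}, i.e., output $\wh M\^t\in\MM$ rather than $\co(\MM)$, because \cref{prop:tabmg-dec} (via \cref{thm:single-multiple-mg}) only controls the DEC for reference models $\Mbar\in\MM$. You assert that "all reference models used in \cref{thm:constrained-upper-finitem} can be taken proper" because only the transitions vary, but that is not a justification: the class of tabular Markov games is \emph{not} convex as a class of kernels from policies to trajectory distributions (a mixture of MDPs is not an MDP), so the generic estimator of \cref{prop:estimation-infinite} — which is the one you invoke, and which only guarantees $\wh M\^t\in\co(\MM)$ — cannot be combined with the proper-only DEC bound. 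The paper resolves this by using the layer-wise estimation technique of \citet{foster2021statistical} (their Proposition 7.1 together with their Lemma A.16): the model class factors as $\MM=\MM_1\times\cdots\times\MM_H$ where each per-layer transition-kernel class $\MM_h$ \emph{is} convex, so one can estimate each layer properly and assemble a proper estimate of the full model, with $\Est(T,\delta)\leq O(S^2AH)\cdot\polylog(S,H,\delta^{-1},T)$. Without this (or some substitute proper estimator), your argument does not close; with it, the rest of your proposal goes through as written.
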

  \begin{proof}[\pfref{prop:mg-risk}]
    Note that it suffices to bound the minimax risk for the instance $\sJ_0^\NE$, since for any $\wh \pi \in \Pi^\NE \subset \Pi^\CE = \Pi^\CCE$, we have that $h\sups{M,\CCE}(\wh\pi) \leq h\sups{M, \CE}(\wh \pi) \leq h\sups{M,\NE}(\wh \pi)$. (Recall the definition of $h\sups{M, \CCE}, h\sups{M, \CE}, h\sups{M, \NE}$ in the proof of \cref{prop:tabmg-dec}.) 
    The combination of \cref{prop:tabmg-dec} and \cref{prop:constrained-to-offset} yields that, for any $\Mbar \in \MM$, \[\deccpac[\vep](\sJ_0^\NE, \Mbar) \leq O \prn*{\vep \cdot \sqrt{KH^3\log(H) S \sum_{k=1}^K A_k} }.\]

  Because of the constraint that $\Mbar \in \MM$ in the DEC upper bound, we need a \emph{proper} estimation algorithm, i.e., one with $\wh \MM = \MM$ (in the context of \cref{ass:estimation}). To do so, we use the approach of \emph{layer-wise} estimators from \cite{foster2021statistical}. Note that the model class $\MM$ has the product structure $\MM = \MM_1 \times \cdots \times \MM_H$, where each $\MM_h$ is the set of transition kernels $\MS_h \times \MA \ra \Delta(\MS_{h+1})$, which is a convex set, thus satisfying Assumption 7.2 of \cite{foster2021statistical}. Furthermore, by gridding the transition densities into multiples of $\vep^2$, we have that $\MN(\MM_h, \vep) \leq (1/\vep^2)^{S^2 A}$, and therefore, by Proposition 7.1 and Lemma A.16 of \cite{foster2021statistical}, there is an estimation algorithm $\AlgEst$ with $\wh \MM = \MM$ and which has estimation error $\Est(T, \delta) \leq O(S^2 AH) \cdot \polylog(S, H, \delta^{-1}, T)$. 
  Therefore, \cref{thm:constrained-upper} combined with \cref{prop:ma-to-pm} gives that there is an algorithm with
  \begin{align}
    \RiskDM \leq & \sqrt{KH^3 \log(H) S \sum_{k=1}^K A_k} \cdot \sqrt{\frac{\Est(T, \delta)}{T}} \cdot \polylog(T, 1/\delta) \nonumber\\
    \leq &  \sqrt{\frac{\max_k A_k \cdot AS^3H^4 }{T}} \cdot \polylog(T, \delta^{-1}, A, S, H)\nonumber.
  \end{align}
\end{proof}

\paragraph{Lower bounds} It is straightforward to see that for any $k$, each of the instances $\sJ^\NE, \sJ^\CE, \sJ^\CCE$ embeds an instance corresponding the class of single-player MDPs on state spaces $\MS_h$, action space $\MA_k$, and horizon $H$: in particular, take the subclass of $\MM$ whose transitions and rewards only depend on player $k$'s action at each step. Then it follows from the proof of Proposition 5.8 of \cite{foster2021statistical} that $\deccpac(\sJ^\NE) \geq \deccpac(\sJ^\CE) \geq \deccpac(\sJ^\CCE) \geq \Omega(\vep \sqrt{SH \cdot \max_k A_k})$. Therefore,  \cref{thm:constrained-lower} (with $\vepslowerT = \frac{c\sqrt{SH \cdot \max_k A_k}}{KT \log T}$, for sufficiently small $c > 0$) together with \cref{prop:ma-to-pm} gives that for any of the instances $\sJ^\CCE, \sJ^\CE, \sJ^\NE$, and any algorithm, there is a model for which $\E[\RiskDM] \geq \til\Omega(SH \cdot {\max_k A_k} / (KT))$.

\subsubsection{A separation between multi-agent DEC and single-agent DEC}
\label{app:single_multiple_separation}
In the previous subsections, we bounded the multi-agent DEC, and thereby the minimax risk (via an application of \cref{thm:constrained-upper} and \cref{prop:ma-to-pm}), for several bandit problems. In all cases, our upper bound on the multi-agent DEC (for CCE, CE, and Nash instances) followed via an application of \cref{thm:single-multiple-ch} to upper bound the multi-agent DEC by the single-agent DEC of the model classes $\til \MM_k$ defined in \eqref{eq:define-til-mk}. The next (straightforward) proposition shows that this approach is not tight in general, indicating that the multi-agent DEC represents a fundamental complexity measure that is distinct from existing ones.
\begin{proposition}
  \label{prop:multi-single-separation}
For any $K, A \in \bbN$, there is a $K$-player \ma NE instance $\sJ = \instma$ so that $\decoreg(\sJ) = 0$ but $\decoreg(\til \MM_k) \geq \Omega(A/\gamma)$ for all $\gamma > 0$, where $\til \MM_k$ are defined as in \cref{eq:define-til-mk}.
\end{proposition}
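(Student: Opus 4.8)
The plan is to exhibit a single $K$-player normal-form NE instance (in the sense of \cref{ex:cce}) that possesses a \emph{common} Nash equilibrium across all of its models---forcing the multi-agent DEC to vanish---while arranging that, for each player $k$, some fixed choice of the opponents' decisions turns player $k$'s induced single-agent class $\til \MM_k$ into a hard multi-armed bandit. Concretely, give each player the pure action set $\MA_k = [A]\cup\{\star\}$, set $\Pi_k = \Delta(\MA_k)$, and index the models by a hidden good-arm profile $\mathbf{j} = (j_1,\dots,j_K)\in[A]^K$ (and, if one wants the bound for all $\gamma$, an additional per-player gap parameter). Define player $k$'s mean reward under the joint action $a=(a_1,\dots,a_K)$ by
\[
\fm_k(a) = \tfrac12 + \Delta\cdot\indic\{a_k = j_k\}\cdot\indic\{a_{k'} = 1\ \forall k'\neq k\},
\]
realized as a Bernoulli draw, with $\Delta\in(0,\tfrac12]$. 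Thus player $k$ receives a bonus only when they play their hidden arm \emph{and} every opponent plays arm $1$; otherwise every action pays $1/2$. As in \cref{ex:cce}, each $M(a)$ reveals $a$ and extends linearly to $\Pi$, so $\sJ$ is a valid NE instance satisfying \cref{ass:existence-eq,ass:nonneg-dev}.

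First I would verify that $\pi^\star := \indic_{(\star,\dots,\star)}$ is a Nash equilibrium of \emph{every} model: under $a^\star_{-k}=(\star,\dots,\star)$ the factor $\indic\{a_{k'}=1\ \forall k'\neq k\}$ vanishes, so $\fm_k(\cdot,a^\star_{-k})\equiv\tfrac12$ is constant in player $k$'s action; hence $\hm(\pi^\star)=0$ for all $M\in\MM$, and this is independent of $\mathbf{j}$. This yields $\decoreg[\gamma](\sJ)=0$ by a two-sided argument. For the upper bound, playing $p=\indic_{\pi^\star}$ gives, for every $\Mbar$, $\sup_{M}\{\E_{\pi\sim p}[\hm]-\gamma\,\E_{\pi\sim p}[\Dhels{M(\pi^\star)}{\Mbar(\pi^\star)}]\}=\sup_M\{-\gamma\,\Dhels{M(\pi^\star)}{\Mbar(\pi^\star)}\}\le 0$, so $\sup_{\Mbar\in\co(\MM)}\decoreg[\gamma](\sJ,\Mbar)\le 0$. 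For the matching lower bound, choosing $\Mbar=M_0\in\MM\subseteq\co(\MM)$ and the test model $M=M_0$ kills the Hellinger term, so $\sup_M\{\cdots\}\ge\E_{\pi\sim p}[h^{M_0}]\ge0$ for every $p$ (using $\hm\ge0$); thus $\decoreg[\gamma](\sJ,M_0)\ge0$ and $\decoreg[\gamma](\sJ)=0$.

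Next I would lower bound $\decoreg[\gamma](\til\MM_k)$. Fix the opponent decision $\pi_{-k}=\indic_{(1,\dots,1)}\in\Pi_{-k}$ and consider the subclass $\MM_{\mathrm{band},k}:=\{\pi_k\mapsto\single{M}(\pi_k,\pi_{-k}) : M\in\MM\}\subseteq\til\MM_k$, with $\til\MM_k$ as in \eqref{eq:define-til-mk}. By construction $\single{M}(\cdot,(1,\dots,1))$ is exactly the $(A+1)$-armed Bernoulli bandit in which arm $j_k$ pays $\tfrac12+\Delta$, every other arm (including $\star$) pays $\tfrac12$, and the observation reveals the played arm; as $M$ ranges over $\MM$ this realizes the standard hard bandit family. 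The offset DEC is monotone under enlarging the model class: for any $\Mbar\in\co(\MM_{\mathrm{band},k})\subseteq\co(\til\MM_k)$ the inner supremum over $\til\MM_k$ dominates that over $\MM_{\mathrm{band},k}$, whence $\decoreg[\gamma](\til\MM_k)\ge\decoreg[\gamma](\MM_{\mathrm{band},k})$. The lower bound from the proof of Proposition~5.3 of \cite{foster2021statistical} then gives $\decoreg[\gamma](\MM_{\mathrm{band},k})\ge\Omega(A/\gamma)$ (choosing the gap $\Delta$ appropriately; for $[0,1]$-bounded rewards this is the meaningful bound in the regime $\gamma\gtrsim A$), and by symmetry the same holds for every $k\in[K]$.

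The only genuinely delicate point is the single-agent bandit DEC lower bound and its interaction with the supremum over reference models $\Mbar\in\co(\til\MM_k)$: one must ensure that the hard reference model driving the bandit lower bound (the near-uniform mixture of the $M_{j_k}$) lies in $\co(\MM_{\mathrm{band},k})$, which is exactly what the monotonicity reduction secures. With that in place the remaining estimate is the textbook bandit computation and may be quoted rather than redone.
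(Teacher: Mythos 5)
Your proposal is correct and follows essentially the same strategy as the paper's proof: construct an instance admitting a single joint decision that is a Nash equilibrium of \emph{every} model in the class (here $(\star,\dots,\star)$; in the paper, an all-zeros profile for a function class in which any zero coordinate forces all payoffs to vanish), so that $\decoreg[\gamma](\sJ)=0$, while fixing a nontrivial opponent profile embeds a standard hard $A$-armed bandit into each $\til\MM_k$ and the $\Omega(A/\gamma)$ bound is imported from Proposition~5.3 of \citet{foster2021statistical}. The only cosmetic differences are that your construction is an explicit Bernoulli instantiation with a coordination gate rather than the paper's maximal reward class, and that you spell out the two-sided argument for $\decoreg[\gamma](\sJ)=0$ and the class-monotonicity step that the paper leaves implicit.
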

\begin{proof}[\pfref{prop:multi-single-separation}]
  Fix $K, A \in \bbN$, and set $\Pi_k = \{0, 1, \ldots, A \}$ for each $k$. Let $\MR := [-1,1]$ and $\Ocirc := \MA$.  Define $\MF \subseteq (\Pi \ra \MR^K)$ to be the class of all tuples $(f_1, \ldots, f_K)$ with $f_k : \Pi \ra \MR^K$ with the property that for all $\pi \in \Pi$, if there is any $k$ so that $\pi_k = 0$, then $f_{k'}(\pi) = 0$ for all $k' \in [K]$. Set $\MM := \MM_\MF$, and define $\Dev, \Sw$ as in \cref{def:ne-instance}.

  Define $\pi_0 = (0, \ldots, 0)$. Since $\hm(\pi_0) = 0$ for all $M \in \MM$, it follows that $\decoreg[\gamma](\sJ) = 0$. On the other hand, it is straightforward to see that each class $\til \MM_k$ embeds a standard multi-armed bandit instance with $A$ arms, meaning that by Proposition 5.3 of \cite{foster2021statistical}, we have that $\decoreg(\til \MM_k) \geq \Omega(A/\gamma)$ for all $\gamma > 0$. 
\end{proof}

}

\colt{
  \part{Main results}
  \label{part:main}

  \section{Organization of appendix}
  \label{sec:organization}

  \section{Preliminaries}
  \label{sec:add-prelim}

  \section{Equivalence of \MAFrameworkShort and \FrameworkShort frameworks}
  \label{sec:relations}

  \section{Upper and lower bounds on minimax rates}
  \label{sec:bounds}

  \section{\MAFrameworkShort: From Multi-Agent to Single-Agent}
  \label{sec:single-multiple}

  \section{\MAFrameworkShort: On the Curse of Multiple Agents}
  \label{sec:curse}

  \part{Examples}
  \label{part:examples}
  \section{\MAFrameworkShort: Examples of instances}
  \label{app:ma_examples}
  
}

  \part{Proofs}
  \label{part:proofs}
  \section{Technical tools}
  \label{app:technical}
  \subsection{Information theory}

In this section we collect several technical lemmas which are used in our proofs.
\begin{lemma}
  \label{lem:fdiv-factor}
  Let $(\MX, \sX)$, $(\MI, \mathscr{I})$ be  measure spaces. Suppose that for each $i \in \MI$, there are distributions $P_i, P_i' \in \Delta(\MX)$, and $Q \in \Delta(\MI)$. Suppose further that there is a measurable function $\varphi : \MX \ra \MI$ so that, for each $i \in \MI$, $\BP_{x \sim P_i}(\varphi(x) = i) = \BP_{x \sim P_i'}(\varphi(x) = i) = 1$. %
  Then for any $f$-divergence $\Dgen{\cdot}{\cdot}$, it holds that
  \begin{align}
\Dgen{\E_{i \sim Q}[P_i]}{\E_{i \sim Q}[P_i']} = \E_{i \sim Q}[\Dgen{P_i}{P_i'}]\nonumber.
  \end{align}
\end{lemma}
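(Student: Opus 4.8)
The plan is to avoid working with disintegrations or explicit densities of the mixtures, and instead reduce everything to the data-processing inequality for $f$-divergences together with a single ``common input marginal'' identity. Throughout I would write $\bar P \ldef \E_{i\sim Q}[P_i]$ and $\bar P' \ldef \E_{i\sim Q}[P_i']$ for the two mixtures on $\MX$, and introduce two coupled measures on the product space $\MI \times \MX$ via $\wt P(di,dx) \ldef Q(di)\,P_i(dx)$ and $\wt P'(di,dx)\ldef Q(di)\,P_i'(dx)$; these are the joint laws of $(i,x)$ under the generative processes $i\sim Q,\ x\mid i\sim P_i$ and $i\sim Q,\ x\mid i\sim P_i'$, respectively. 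By the Remark on non-negativity of $\phi$ we may assume without loss of generality that $\phi\geq 0$, which keeps every integral below unsigned.

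First I would record the identity $\Dgen{\wt P}{\wt P'} = \E_{i\sim Q}[\Dgen{P_i}{P_i'}]$. This is standard: taking the reference kernel $i\mapsto \rho_i\ldef P_i+P_i'$, the measure $R(di,dx)\ldef Q(di)\,\rho_i(dx)$ dominates both $\wt P$ and $\wt P'$, with densities $p_i(x)$ and $p_i'(x)$ that agree fiberwise with $dP_i/d\rho_i$ and $dP_i'/d\rho_i$. Plugging into the dominating-measure formula for $\Dphishort$ and applying Tonelli (legitimate since $\phi\geq0$ makes the integrand nonnegative) collapses the $x$-integral on each fiber to $\Dgen{P_i}{P_i'}$, leaving the $Q$-average.

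The heart of the argument is then to show $\Dgen{\bar P}{\bar P'}=\Dgen{\wt P}{\wt P'}$, and this is the only place the hypothesis on $\varphi$ enters. Consider the injective measurable map $\Phi:\MX\to\MI\times\MX$, $\Phi(x)=(\varphi(x),x)$. I claim $\Phi_\sharp \bar P=\wt P$ and $\Phi_\sharp\bar P'=\wt P'$: for measurable $C\subseteq\MI\times\MX$ one computes $\Phi_\sharp\bar P(C)=\int_\MI P_i(\{x:(\varphi(x),x)\in C\})\,dQ(i)$, and since $\BP_{x\sim P_i}(\varphi(x)=i)=1$ the inner set coincides $P_i$-a.s.\ with $\{x:(i,x)\in C\}$, giving exactly $\wt P(C)$ (identically for $\bar P'$, using $\BP_{x\sim P_i'}(\varphi(x)=i)=1$). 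Now $\Phi$ admits the coordinate projection $\pi_\MX:\MI\times\MX\to\MX$ as a measurable left inverse, i.e.\ $\pi_\MX\circ\Phi=\id_\MX$, so applying the data-processing inequality to $\Phi$ and then to $\pi_\MX$ sandwiches $\Dgen{\bar P}{\bar P'}\leq \Dgen{\wt P}{\wt P'}\leq\Dgen{\bar P}{\bar P'}$, forcing equality. Combining this with the identity from the previous step yields $\Dgen{\E_{i\sim Q}[P_i]}{\E_{i\sim Q}[P_i']}=\E_{i\sim Q}[\Dgen{P_i}{P_i'}]$, as required.

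The main obstacle is bookkeeping rather than conceptual: one must ensure the fiberwise reference kernel $i\mapsto\rho_i$ and the map $(i,x)\mapsto p_i(x)$ are jointly measurable, so that $R$ and the Tonelli step are well posed, and that the general (recession-constant) definition of $\Dphishort$ is used so the factorization identity survives even when $P_i\not\ll P_i'$ on a $Q$-null set of fibers. I expect no difficulty from the data-processing step itself, which holds for arbitrary Markov kernels and in particular for the two deterministic maps $\Phi$ and $\pi_\MX$.
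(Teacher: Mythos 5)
Your proof is correct and follows essentially the same route as the paper's: the decisive step in both is the data-processing inequality applied to the graph map $x\mapsto(\varphi(x),x)$, combined with the fiberwise decomposition identity $\Dgen{\wt P}{\wt P'}=\E_{i\sim Q}[\Dgen{P_i}{P_i'}]$ (which the paper cites from standard references and you prove directly via a dominating measure and Tonelli). The only cosmetic difference is that you obtain the easy direction by a second application of data processing to the projection $\pi_{\MX}$, whereas the paper invokes convexity of the $f$-divergence --- two formulations of the same fact.
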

\begin{proof}[\pfref{lem:fdiv-factor}]
  That $\Dgen{\E_{i \sim Q}[P_i]}{\E_{i \sim Q}[P_i']} \leq \E_{i \sim Q}[\Dgen{P_i}{P_i'}]$ follows from convexity of $\Dgen{\cdot}{\cdot}$. To establish the opposite direction, %
  our assumption on the function $\varphi$ together with the data processing inequality yields 
  \begin{align}
\Dgen{\E_{i \sim Q}[P_i]}{\E_{i \sim Q}[P_i']} \geq & \Dgen{\E_{i \sim Q}[\Indic{i} \times P_i]}{\E_{i \sim Q}[\Indic{i} \times P_i']} = \E_{i \sim Q}[\Dgen{P_i}{P_i'}]\nonumber,
  \end{align}
  where the final inequality follows from, e.g., \citet{polyanskiy2014lecture}.
\end{proof}

\begin{lemma}[e.g., \citet{polyanskiy2014lecture}]
  \label{lem:fdiv-conditioning}
  Let $(\MX, \sX)$ and $(\MY, \sY)$ be measure spaces, and let $\MX \times \MY$ be equipped with the product sigma-algebra $\sX \otimes \sY$. Let $(x,y)$ be a pair of random variables on $\MX \times \MY$, distributed according to some distribution $\BP_{x,y}$. 
  For any $f$-divergence $\Dgen{\cdot}{\cdot}$,  it holds that
  \begin{align}
\E_{x \sim \BP_x} \left[ \Dgen{\BP_{y | x}}{\BP_y} \right] = \E_{y \sim \BP_y} \left[ \Dgen{\BP_{x | y}}{\BP_x} \right]\nonumber.
  \end{align}
\end{lemma}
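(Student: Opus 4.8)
The plan is to show that both sides of the claimed identity equal a single symmetric quantity, namely the \emph{$f$-information} between $x$ and $y$, defined as $\Dgen{\BP_{x,y}}{\BP_x \tens \BP_y}$, where $\BP_x \tens \BP_y$ denotes the product of the marginals on $(\MX \times \MY, \sX \tens \sY)$. Since this expression is manifestly symmetric under exchanging the roles of $x$ and $y$, matching each side of the identity to it yields the result immediately.

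First I would disintegrate the joint law, writing $\BP_{x,y}(dx\,dy) = \BP_x(dx)\,\BP_{y|x}(dy)$ via a regular conditional distribution $\BP_{y|x}$ (which exists under the standard assumptions on the spaces, or may be taken as given as part of the data). Comparing with the factorization $\BP_x \tens \BP_y(dx\,dy) = \BP_x(dx)\,\BP_y(dy)$, the Radon--Nikodym derivative of the joint with respect to the product of marginals factors through the second coordinate:
\[
\frac{d\BP_{x,y}}{d(\BP_x \tens \BP_y)}(x,y) = \frac{d\BP_{y|x}}{d\BP_y}(y) \quad \text{for } \BP_x\text{-a.e. } x,
\]
whenever the relevant absolute continuity holds (and both sides are understood to be $+\infty$ otherwise, consistent with the usual convention for $f$-divergences).

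Next I would substitute this into the definition $\Dgen{\bbP}{\bbQ} = \E_{\bbQ}\brk*{\phi\prn*{d\bbP/d\bbQ}}$ of the $f$-divergence and apply Tonelli's theorem to factor the resulting double integral:
\[
\Dgen{\BP_{x,y}}{\BP_x \tens \BP_y} = \int_{\MX} \left( \int_{\MY} \phi\!\left( \frac{d\BP_{y|x}}{d\BP_y}(y) \right) \BP_y(dy) \right) \BP_x(dx) = \E_{x \sim \BP_x}\!\left[ \Dgen{\BP_{y|x}}{\BP_y} \right],
\]
where the inner integral is exactly $\Dgen{\BP_{y|x}}{\BP_y}$ by definition. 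As the left-hand side is symmetric in $x$ and $y$, running the identical argument with the roles of $x$ and $y$ interchanged gives $\Dgen{\BP_{x,y}}{\BP_x \tens \BP_y} = \E_{y \sim \BP_y}\brk*{\Dgen{\BP_{x|y}}{\BP_x}}$, and equating the two expressions for the common $f$-information completes the proof.

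The main obstacle is purely measure-theoretic: justifying the existence of the regular conditional distributions $\BP_{y|x}$ and $\BP_{x|y}$ and the factorization of the Radon--Nikodym derivative above, together with a careful treatment of the case where absolute continuity fails (so that both sides are simultaneously infinite). Since the identity is classical (cf. \citet{polyanskiy2014lecture}), I would state these standard facts and invoke them rather than reprove them in detail.
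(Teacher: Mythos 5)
Your proposal is correct, and it is the standard argument that the paper itself implicitly relies on: the lemma is stated without proof and attributed to \citet{polyanskiy2014lecture}, where precisely this symmetrization through the $f$-information $\Dgen{\BP_{x,y}}{\BP_x \tens \BP_y}$ is used. The one step worth flagging is the appeal to Tonelli, which needs $\phi \geq 0$; this is guaranteed by the paper's convention that $\phi$ maps into $[0,\infty]$ (see the remark on non-negativity of $\phi$ following \cref{def:fdiv-bounded}), so the argument goes through as written.
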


\begin{lemma}[Lemma B.5 of \cite{foster2022complexity}]
  \label{lem:hellinger-variational-bounded}
  Let $\BP, \BQ$ be probability distributions on a measure space $(\MX, \mathscr{X})$. For any $\alpha \geq 1$, let $\MG_\alpha := \{ g : \MX \ra \BR \ : \ \| g\|_\infty \leq \alpha \}$. Then
  \begin{align}
\frac 12 \hell{\BP}{\BQ} \leq \sup_{g \in \MG_\alpha} \left\{ 1 - \E_\BP[e^g] \cdot \E_\BQ[e^{-g}] \right\} + 4 e^{-\alpha}\nonumber.
  \end{align}
\end{lemma}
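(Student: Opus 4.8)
The plan is to exhibit an explicit bounded function $g$ that nearly attains the unconstrained variational value and to control the truncation error by $e^{-\alpha}$. First I would record the elementary identity $\tfrac12\hell{\BP}{\BQ} = 1 - \rho$, where $\rho \ldef \int \sqrt{\tfrac{d\BP}{d\nu}\tfrac{d\BQ}{d\nu}}\,d\nu$ is the Hellinger affinity with respect to any common dominating measure $\nu$ (writing $p \ldef d\BP/d\nu$, $q \ldef d\BQ/d\nu$), and note $\rho \in [0,1]$ by Cauchy--Schwarz. The unconstrained maximizer of $g \mapsto 1 - \E_\BP[e^g]\,\E_\BQ[e^{-g}]$ is $g^\star = \tfrac12\log(q/p)$: then $e^{g^\star}=\sqrt{q/p}$ gives $\E_\BP[e^{g^\star}] = \E_\BQ[e^{-g^\star}] = \rho$, so the value is $1-\rho^2 \ge 1-\rho$. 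The only issue is that $g^\star$ is unbounded, so the entire content of the lemma is that clipping $g^\star$ into $[-\alpha,\alpha]$ costs at most $4e^{-\alpha}$.

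Concretely, I would take $g = \max\{-\alpha,\,\min\{\alpha,\,\tfrac12\log(q/p)\}\}$ (with the conventions $g=\alpha$ where $p=0<q$ and $g=-\alpha$ where $q=0<p$), which is measurable with $\|g\|_\infty \le \alpha$, hence $g\in\MG_\alpha$. Partition $\MX$ into $A = \{e^{-2\alpha}\le q/p\le e^{2\alpha}\}$, $B_+ = \{q/p > e^{2\alpha}\}$, and $B_- = \{q/p < e^{-2\alpha}\}$. On $A$ one has $e^g p = e^{-g}q = \sqrt{pq}$ exactly; on $B_+$, clipping gives $e^g p = e^\alpha p \le \sqrt{pq}$ while $e^{-g}q = e^{-\alpha}q$; on $B_-$, symmetrically $e^{-g}q = e^\alpha q \le \sqrt{pq}$ while $e^g p = e^{-\alpha}p$. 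Integrating and using $\int_{B_-}p\,d\nu \le 1$ and $\int_{B_+}q\,d\nu \le 1$ yields the two factor bounds
\begin{align}
  \E_\BP[e^g] \le \rho_{A\cup B_+} + e^{-\alpha}, \qquad \E_\BQ[e^{-g}] \le \rho_{A\cup B_-} + e^{-\alpha}, \nonumber
\end{align}
where $\rho_S \ldef \int_S \sqrt{pq}\,d\nu$.

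The main step is then to combine them. Since $\rho_{A\cup B_+},\, \rho_{A\cup B_-} \le \rho \le 1$, multiplying out gives
\begin{align}
  \E_\BP[e^g]\,\E_\BQ[e^{-g}] \le \rho_{A\cup B_+}\,\rho_{A\cup B_-} + 2e^{-\alpha} + e^{-2\alpha} \le \rho^2 + 3e^{-\alpha} \le \rho + 3e^{-\alpha}, \nonumber
\end{align}
using $\rho^2 \le \rho$ and $e^{-2\alpha}\le e^{-\alpha}$ for $\alpha\ge 0$. Rearranging and recalling $\tfrac12\hell{\BP}{\BQ} = 1-\rho$ gives $\tfrac12\hell{\BP}{\BQ} \le 1 - \E_\BP[e^g]\,\E_\BQ[e^{-g}] + 3e^{-\alpha} \le \sup_{g\in\MG_\alpha}\{1 - \E_\BP[e^g]\,\E_\BQ[e^{-g}]\} + 4e^{-\alpha}$, as claimed.

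I expect the only genuine subtlety — as opposed to the routine case analysis above — to be the measure-theoretic handling of the sets where $p$ or $q$ vanishes: these must be folded into $B_+$ and $B_-$ so that the clipped $g$ stays well defined and bounded, and one should verify that such regions contribute nothing to $\rho$ while adding only the harmless $e^{-\alpha}\int_{B_\pm}(\cdot)\,d\nu$ terms to the two expectations. The slack between the derived $3e^{-\alpha}$ and the stated $4e^{-\alpha}$ leaves ample room to absorb these boundary contributions without tightening any constants.
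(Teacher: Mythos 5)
Your argument is correct: the identity $\tfrac12\Dhelshort(\bbP,\bbQ)=1-\rho$, the clipped witness $g=\mathrm{clip}_{[-\alpha,\alpha]}(\tfrac12\log(q/p))$, and the three-way partition all check out, and the bookkeeping even yields the slightly sharper constant $3e^{-\alpha}$. The paper itself gives no proof — it imports the lemma by citation to Lemma B.5 of \citet{foster2022complexity} — and your truncation-of-the-optimal-unbounded-witness argument is essentially the same route taken there, so there is nothing further to reconcile.
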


\begin{lemma}[e.g., \citet{foster2022complexity}]
  \label{lem:min-hell-exp}
  Consider measure spaces $(\MX, \mathscr{X}),\ (\MY, \mathscr{Y})$, and let $(x,y)$ be a pair of random variables distributed according to some distribution $\BP_{x,y}$ on $(\MX \times \MY, \mathscr{X} \otimes \mathscr{Y})$. Then
  \begin{align}
\E_{x \sim \BP_x}\left[ \hell{\BP_{y|x}}{\BP_y}\right] \leq 4 \cdot \inf_{\BQ \in \Delta(\MY)} \E_{x \sim \BP_x}\left[\hell{\BP_{y|x}}{\BQ}\right]\nonumber.
  \end{align}
\end{lemma}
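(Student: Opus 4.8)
The plan is to reduce both sides to the Hellinger affinity and to solve the inner minimization over $\BQ$ in closed form. Writing $P_x := \BP_{y\mid x}$ for the (regular) conditional law and $\bar\BP := \BP_y = \E_{x\sim \BP_x}[P_x]$ for the marginal, I would first fix a common $\sigma$-finite dominating measure $\nu$ for the family $\{P_x\}_x$; by Fubini the marginal then satisfies $\bar\BP \ll \nu$ with density $\bar p = \E_x[p_x]$, where $p_x := dP_x/d\nu$. Using the definition of squared Hellinger distance from the excerpt, I would record the identity $\hell{\BP}{\BQ} = 2 - 2\,A(\BP,\BQ)$, valid whenever $\BP,\BQ$ have densities $p,q$ with respect to $\nu$, where $A(\BP,\BQ) := \int \sqrt{p q}\, d\nu \in [0,1]$ is the Hellinger affinity. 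Introducing the auxiliary function $g := \E_x[\sqrt{p_x}]$, both quantities of interest become linear in the affinity: for any $\BQ$ with density $q$, $\E_x[A(P_x,\BQ)] = \int g\sqrt{q}\, d\nu$, so that $\E_x[\hell{P_x}{\BQ}] = 2 - 2\int g\sqrt q\, d\nu$.

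Second, I would solve the inner infimum. Minimizing $\E_x[\hell{P_x}{\BQ}]$ over $\BQ$ is the same as maximizing $\int g\sqrt q\, d\nu$ over densities $q \geq 0$ with $\int q\, d\nu = 1$. Cauchy--Schwarz gives $\int g\sqrt q\, d\nu \le \left(\int g^2\, d\nu\right)^{1/2}$, with equality at $q^\star = g^2 / \int g^2\, d\nu$, which is a genuine probability density by construction. Hence $\inf_\BQ \E_x[\hell{P_x}{\BQ}] = 2 - 2\|g\|$, where $\|g\| := \left(\int g^2\, d\nu\right)^{1/2}$.

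Third, I would lower bound the affinity for the specific choice $\BQ = \bar\BP$. The key pointwise estimate is $g = \E_x[\sqrt{p_x}] \le \sqrt{\E_x[p_x]} = \sqrt{\bar p}$, which is Jensen's inequality applied to the concave map $t \mapsto \sqrt t$. Consequently $\E_x[A(P_x,\bar\BP)] = \int g\sqrt{\bar p}\, d\nu \ge \int g^2\, d\nu = \|g\|^2$, giving $\E_x[\hell{P_x}{\bar\BP}] \le 2 - 2\|g\|^2$. Combining with the previous step and factoring, $2 - 2\|g\|^2 = (1+\|g\|)(2 - 2\|g\|) \le 2\,(2 - 2\|g\|)$, where I use $\|g\| \le 1$ (which itself follows from $\|g\|^2 = \int g^2\, d\nu \le \int \bar p\, d\nu = 1$). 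This yields $\E_x[\hell{P_x}{\bar\BP}] \le 2\,\inf_\BQ \E_x[\hell{P_x}{\BQ}] \le 4\,\inf_\BQ \E_x[\hell{P_x}{\BQ}]$, as required.

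The argument is essentially routine once set up, so I do not expect a deep obstacle; the only points requiring genuine care are measure-theoretic, namely ensuring that a common dominating measure exists, that the disintegration $\BP_{y\mid x}$ is a well-defined regular conditional probability, and that the identity $\bar p = \E_x[p_x]$ holds (via Fubini). I would also note in passing that this proof in fact delivers the sharper constant $1 + \|g\| \le 2$ in place of $4$, so the stated factor leaves room to spare.
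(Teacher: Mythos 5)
Your argument is correct, but it takes a genuinely different route from the paper's. The paper's proof is a two-line metric argument: it applies the triangle inequality for Hellinger distance in the form $\hell{\BP}{\bbR}\leq 2\,\hell{\BP}{\BQ}+2\,\hell{\BQ}{\bbR}$ and then uses joint convexity of squared Hellinger distance to bound $\hell{\BQ}{\E_{x}[\BP_{y|x}]}\leq \E_{x}[\hell{\BQ}{\BP_{y|x}}]$, which immediately yields the factor $4$. You instead pass to the Hellinger affinity, solve the inner infimum in closed form via Cauchy--Schwarz (identifying the exact minimizer $q^\star\propto g^2$ with $g=\E_x[\sqrt{p_x}]$), and compare against the marginal via Jensen's inequality. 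Each step checks out: the factorization $2-2\|g\|^2=(1+\|g\|)(2-2\|g\|)$ with $\|g\|\leq 1$ is correct, and the handling of a possible singular part of $\BQ$ only helps you (mass off the support of $\nu$ cannot increase the affinity), so the infimum computation is legitimate over all of $\Delta(\MY)$. What your approach buys is the sharper constant $1+\|g\|\leq 2$ in place of $4$ and an explicit characterization of the optimal reference measure; what the paper's approach buys is brevity and robustness, since it uses nothing beyond the metric property and joint convexity and thus transfers verbatim to other squared metrics. The measure-theoretic caveats you flag (common dominating measure, regularity of $\BP_{y|x}$, Tonelli for $\bar p=\E_x[p_x]$) are the same ones the paper's proof relies on implicitly through the identity $\BP_y=\E_x[\BP_{y|x}]$, so they do not constitute an additional gap.
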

\begin{proof}[\pfref{lem:min-hell-exp}]
  Consider any $\BQ \in \Delta(\MY)$. Using the fact that the Hellinger distance satisfies the triangle inequality, we have
  \begin{align}
    \E_{x \sim \BP_x}\left[ \hell{\BP_{y|x}}{\BP_y}\right] \leq & \E_{x \sim \BP_x} \left[ 2 \cdot \hell{\BP_{y|x}}{\BQ} + 2 \cdot \hell{\BQ}{\BP_y} \right]\nonumber\\
    \leq & 2 \cdot \E_{x \sim \BP_x} \left[ \hell{\BP_{y|x}}{\BQ} \right] + 2 \cdot \hell{\BQ}{\E_{x \sim \BP_x}[\BP_{y|x}]}\nonumber\\
    \leq & 4 \cdot \E_{x \sim \BP_x} \left[ \hell{\BP_{y|x}}{\BQ} \right]\nonumber,
  \end{align}
  where the final inequality follows from convexity of the squared Hellinger distance.
\end{proof}

  \begin{lemma}[Donsker-Varadhan; see \cite{polyanskiy2014lecture}]
    \label{lem:dv}
   Let $(\MX, \mathscr{X})$ be a measure space, and let $\BP, \BQ$ be probability measures on $(\MX, \mathscr{X})$. Then
    \begin{align}
\kld{\BP}{\BQ} = \sup_{h : \MX \ra \BR} \left\{ \E_{X \sim \BP}[h(X)] - \log  \E_{X \sim \BQ}[\exp(h(X))]\right\}\nonumber,
    \end{align}
    where the supremum is over all (measurable) functions $h : \MX \ra \BR$ satisfying $\E_{X \sim \BQ}[\exp(h(X))] < \infty$. 
  \end{lemma}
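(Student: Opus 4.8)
The plan is to establish the two inequalities separately after first reducing to the absolutely continuous case. If $\BP\not\ll\BQ$ then $\kld{\BP}{\BQ}=+\infty$ by definition, and I would show the right-hand side is also $+\infty$: picking a set $A\in\mathscr{X}$ with $\BQ(A)=0$ and $\BP(A)>0$ and testing with $h=c\cdot\Indic{A}$ gives objective value $c\,\BP(A)-\log(1)=c\,\BP(A)\to\infty$ as $c\to\infty$, since $\E_{X\sim\BQ}[e^{h(X)}]=1$. Hence both sides are $+\infty$ and the identity holds trivially. For the remainder I would assume $\BP\ll\BQ$ and write $L:=\frac{d\BP}{d\BQ}$, noting that $L>0$ holds $\BP$-almost surely.

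For the direction $\sup_h\{\cdots\}\le\kld{\BP}{\BQ}$, I would fix any admissible $h$ (i.e.\ $\E_{X\sim\BQ}[e^{h(X)}]<\infty$) and introduce the tilted Gibbs measure $\BQ_h$ with $d\BQ_h=\frac{e^{h}}{\E_{\BQ}[e^{h}]}\,d\BQ$, which is a probability measure satisfying $\BP\ll\BQ_h$. Since $\frac{d\BP}{d\BQ_h}=L\cdot\frac{\E_{\BQ}[e^{h}]}{e^{h}}$, a direct computation gives
\[
\kld{\BP}{\BQ_h}=\kld{\BP}{\BQ}-\E_{X\sim\BP}[h(X)]+\log\E_{X\sim\BQ}[e^{h(X)}].
\]
Invoking non-negativity of the KL divergence, $\kld{\BP}{\BQ_h}\ge 0$ (which itself follows from Jensen's inequality applied to the convex function $-\log$), and rearranging yields $\E_{X\sim\BP}[h]-\log\E_{X\sim\BQ}[e^{h}]\le\kld{\BP}{\BQ}$; taking the supremum over admissible $h$ finishes this direction.

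For the reverse inequality $\kld{\BP}{\BQ}\le\sup_h\{\cdots\}$, the natural test function is $h^\star=\log L$, for which $\E_{X\sim\BP}[h^\star]=\kld{\BP}{\BQ}$ and $\E_{X\sim\BQ}[e^{h^\star}]=\E_{X\sim\BQ}[L]=1$, so the objective equals $\kld{\BP}{\BQ}$ exactly. The main obstacle is that $h^\star$ may equal $-\infty$ on the $\BQ$-nonnull set $\{L=0\}$ and is therefore not an admissible real-valued test function, so I would instead work with the truncations $h_n:=\max\{\log L,-n\}$. Each $h_n$ is real-valued and admissible, since $e^{h_n}\le L+e^{-n}$ gives $\E_{X\sim\BQ}[e^{h_n}]\le 2$; moreover $\E_{X\sim\BQ}[e^{h_n}]\to 1$ by dominated convergence and $\E_{X\sim\BP}[h_n]\to\E_{X\sim\BP}[\log L]=\kld{\BP}{\BQ}$ by monotone convergence (handling the regions $\{L\ge 1\}$ and $\{L<1\}$ separately, the latter admitting an integrable lower bound). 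Passing to the limit gives $\sup_h\{\cdots\}\ge\lim_{n\to\infty}\big(\E_{\BP}[h_n]-\log\E_{\BQ}[e^{h_n}]\big)=\kld{\BP}{\BQ}$. Combining the two directions yields the claimed identity; as this is a classical result I would ultimately defer the delicate limiting step to \citet{polyanskiy2014lecture}.
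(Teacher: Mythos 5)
The paper does not prove \cref{lem:dv} at all: it is stated as a known result and deferred entirely to \citet{polyanskiy2014lecture}, so there is no in-paper argument to compare against. Your proposal is a correct, self-contained proof along the standard lines. The non-absolutely-continuous reduction via $h=c\cdot\Indic{A}$ is right (indeed $\E_{\BQ}[e^{h}]=1$ there), the upper bound via the tilted measure $\BQ_h$ is the usual Gibbs/``golden formula'' argument, and the truncation $h_n=\max\{\log L,-n\}$ correctly repairs the inadmissibility of $h^\star=\log L$ on $\{L=0\}$. Two technicalities are glossed but are genuinely harmless: (i) in the upper-bound direction the three-way split of $\E_{\BP}[\log L - h + \log\E_{\BQ}[e^{h}]]$ requires $\E_{\BP}[h]$ to be well-defined; this holds because $\E_{\BP}[h^{+}]=\E_{\BQ}[Lh^{+}]\leq \E_{\BQ}[L\log L] + \E_{\BQ}[e^{h}]<\infty$ via the Young-type inequality $ab\leq a\log a - a + e^{b}$ (and one may assume $\kld{\BP}{\BQ}<\infty$, the infinite case being trivial for this direction); (ii) in the lower-bound direction, the integrable lower bound you invoke on $\{L<1\}$ exists because $\E_{\BP}[(\log L)^{-}]=\E_{\BQ}[L(\log L)^{-}]\leq e^{-1}$, since $x\mapsto x\log(1/x)$ is bounded on $(0,1)$. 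With those observations made explicit the argument is complete and does not actually need to defer the limiting step to the reference.
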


  \begin{lemma}
  \label{lem:pq-tvd-cond}
Let $\BP, \BQ$ be probability measures on some probability space $(\Omega, \CF)$. Consider some event $\ME \in \CF$ so that $\BP(\ME) \geq 1-\delta$, for some $\delta \in (0,1)$. Suppose also that for all events $\MF \in \CF$, we have $\BP(\ME \cap \MF) = \BQ(\ME \cap \MF)$. Then $\tvd{\BP}{\BQ} \leq \delta$.
\end{lemma}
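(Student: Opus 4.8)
```latex
\textbf{Proof proposal.} The statement to prove is Lemma~\ref{lem:pq-tvd-cond}: if $\bbP(\ME) \geq 1-\delta$ and $\bbP(\ME \cap \MF) = \bbQ(\ME \cap \MF)$ for all events $\MF \in \CF$, then $\Dtv{\bbP}{\bbQ} \leq \delta$. The plan is to use the variational characterization of total variation distance as a supremum over events and decompose each event according to whether it intersects $\ME$ or its complement $\ME^c$. The hypothesis says $\bbP$ and $\bbQ$ agree exactly on the ``good'' part $\ME$, so the only contribution to the total variation distance can come from the complement $\ME^c$, which has small probability under $\bbP$ (and, as I will show, under $\bbQ$ as well).

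First I would record the consequence of the hypothesis at $\MF = \Omega$: taking $\MF = \Omega$ gives $\bbP(\ME) = \bbQ(\ME)$, hence $\bbQ(\ME) \geq 1-\delta$ and $\bbQ(\ME^c) \leq \delta$ as well. Next, for an arbitrary event $A \in \CF$, I would write $\bbP(A) - \bbQ(A) = \bigl(\bbP(A \cap \ME) - \bbQ(A \cap \ME)\bigr) + \bigl(\bbP(A \cap \ME^c) - \bbQ(A \cap \ME^c)\bigr)$. Applying the hypothesis with $\MF = A$ shows the first bracket vanishes, since $\bbP(A \cap \ME) = \bbQ(A \cap \ME)$. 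Therefore
\begin{align}
  \bbP(A) - \bbQ(A) = \bbP(A \cap \ME^c) - \bbQ(A \cap \ME^c) \leq \bbP(A \cap \ME^c) \leq \bbP(\ME^c) \leq \delta,\nonumber
\end{align}
where the first inequality drops the nonnegative term $\bbQ(A \cap \ME^c)$ and the subsequent inequalities use monotonicity of measures together with $\bbP(\ME^c) \leq \delta$.

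Since $A$ was arbitrary, taking the supremum over all $A \in \CF$ in the definition $\Dtv{\bbP}{\bbQ} = \sup_{A \in \CF} \abs{\bbP(A) - \bbQ(A)}$ yields $\Dtv{\bbP}{\bbQ} \leq \delta$. To handle the absolute value cleanly, I would note that the bound $\bbP(A) - \bbQ(A) \leq \delta$ holds for every $A$, and applying it to the complement $A^c$ gives $\bbQ(A) - \bbP(A) = \bbP(A^c) - \bbQ(A^c) \leq \delta$, so $\abs{\bbP(A) - \bbQ(A)} \leq \delta$ uniformly. There is no serious obstacle here; the only point requiring a small amount of care is the bookkeeping on $\ME^c$ and remembering to symmetrize over $A$ and $A^c$ to control the absolute value rather than just the signed difference. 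This completes the argument.
```
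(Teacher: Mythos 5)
Your proof is correct and follows essentially the same route as the paper's: take $\MF = \Omega$ to conclude $\bbQ(\ME) = \bbP(\ME) \geq 1-\delta$, decompose an arbitrary event over $\ME$ and $\ME^c$, kill the $\ME$-part using the hypothesis, and bound the $\ME^c$-part by $\delta$. The only cosmetic difference is that you symmetrize over $A$ and $A^c$ to control the absolute value, whereas the paper bounds $|\BP(\MF\cap\ME^c)-\BQ(\MF\cap\ME^c)|$ directly using both $\BP(\ME^c)\leq\delta$ and $\BQ(\ME^c)\leq\delta$.
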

\begin{proof}[\pfref{lem:pq-tvd-cond}]
  Choosing $\ME' = \Omega$ gives $\BQ(\ME) =\BP(\ME)\geq 1-\delta$. Then for any event $\MF \in \CF$, we have
  \begin{align}
    |\BP(\MF) - \BQ(\MF)| \leq & |\BP(\MF \cap \ME) - \BQ(\MF \cap \ME)| + |\BP(\MF \cap  \ME^\c) - \BQ(\MF \cap \ME^\c)| \nonumber\\
    = & |\BP(\MF \cap \ME^\c) - \BQ(\MF \cap \ME^\c)|\leq \delta\nonumber.
  \end{align}
\end{proof}

  \subsection{Concentration inequalities}
    \begin{lemma}[Lemma A.4 of \cite{foster2021statistical}]
    \label{lem:chernoff-martingale}
    Let $(X_t)_{t \in [T]}$ be any sequence of real-valued random variables adapted to a filtration $\CF\^t$. Then with probability at least $1-\delta$,
    \begin{align}
\sum_{t=1}^T X_t \leq \sum_{t=1}^T \log\left( \E\left[ e^{X_t} | \CF\^{t-1}\right] \right) + \log (1/\delta)\nonumber.
    \end{align}
  \end{lemma}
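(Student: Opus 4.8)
The final statement is \cref{lem:chernoff-martingale}, the standard Chernoff-style bound for sums of random variables adapted to a filtration. Let me sketch a proof.

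\textbf{Proof plan for \cref{lem:chernoff-martingale}.}

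The plan is to exhibit an explicit supermartingale from the summands and apply the optional stopping / Markov inequality machinery. First I would define, for each $t \in [T]$, the ``compensated'' exponential increment
\begin{align}
Z_t := \exp\left( X_t - \log \E\left[ e^{X_t} \mid \CF\^{t-1} \right] \right) = \frac{e^{X_t}}{\E\left[ e^{X_t} \mid \CF\^{t-1}\right]}\nonumber,
\end{align}
which satisfies $\E[Z_t \mid \CF\^{t-1}] = 1$ by construction. Setting $M_0 := 1$ and $M_t := \prod_{i=1}^t Z_i$, the process $(M_t)_{t \geq 0}$ is a nonnegative martingale adapted to $(\CF\^t)$, since $\E[M_t \mid \CF\^{t-1}] = M_{t-1} \cdot \E[Z_t \mid \CF\^{t-1}] = M_{t-1}$. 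In particular $\E[M_T] = M_0 = 1$.

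Next I would apply Markov's inequality to the nonnegative random variable $M_T$: for any threshold $\lambda > 0$,
\begin{align}
\Pr\left( M_T \geq \lambda \right) \leq \frac{\E[M_T]}{\lambda} = \frac{1}{\lambda}\nonumber.
\end{align}
Choosing $\lambda = 1/\delta$ gives $\Pr(M_T \geq 1/\delta) \leq \delta$, so with probability at least $1-\delta$ we have $M_T < 1/\delta$, i.e. $\log M_T < \log(1/\delta)$. The final step is to unwind the definition:
\begin{align}
\log M_T = \sum_{t=1}^T \log Z_t = \sum_{t=1}^T \left( X_t - \log \E\left[ e^{X_t} \mid \CF\^{t-1}\right]\right)\nonumber,
\end{align}
and rearranging the inequality $\log M_T < \log(1/\delta)$ yields $\sum_{t=1}^T X_t \leq \sum_{t=1}^T \log \E[e^{X_t} \mid \CF\^{t-1}] + \log(1/\delta)$ on this high-probability event, as claimed.

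I do not anticipate a serious obstacle here, as the argument is entirely standard; the only point requiring mild care is the integrability/measurability bookkeeping needed to assert that each conditional expectation $\E[e^{X_t} \mid \CF\^{t-1}]$ is well-defined and finite (so that $Z_t$ makes sense and the martingale property holds). Implicitly one assumes $\E[e^{X_t}\mid\CF\^{t-1}] < \infty$ almost surely; if this fails the right-hand side is $+\infty$ and the bound holds trivially. Thus the main ``work'' is simply the verification that $(M_t)$ is a genuine martingale with $\E[M_T]=1$, after which Markov's inequality closes the argument immediately. Since this is cited as Lemma A.4 of \cite{foster2021statistical}, one could alternatively just defer to that reference, but the self-contained supermartingale argument above is short enough to include directly.
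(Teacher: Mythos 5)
Your proof is correct and is essentially the same argument as the cited source: the paper itself does not reprove this lemma (it defers to Lemma A.4 of \cite{foster2021statistical}), and that reference uses exactly your construction of the normalized exponential martingale $M_t = \prod_{i\le t} e^{X_i}/\E[e^{X_i}\mid \CF\^{i-1}]$ followed by Markov's inequality at level $1/\delta$. Your caveat about the conditional moment generating function being finite (with the bound holding trivially otherwise) is the right bookkeeping point and nothing further is needed.
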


\subsection{Topological lemmas}
The below lemma is a special case of the Berge maximum theorem.
\begin{lemma}
  \label{lem:berge-theorem}
Let $\MU, \MV$ be compact subsets of Euclidean space, and consider any continuous function $G : \MU \times \MV \ra \BR$. Define $\MC : \MV \ra \MP(\MU)$ by $\MC(v) := \argmin_{u \in \MU} \{ G(u,v)\}$. Then $\MC$ is upper hemicontinuous.
\end{lemma}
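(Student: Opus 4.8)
The plan is to prove this as the upper-hemicontinuity half of Berge's maximum theorem, exploiting the fact that $\MU$ is compact so that everything can be verified sequentially. The strategy is to first check that the argmin correspondence has closed graph, and then to upgrade ``closed graph'' to ``upper hemicontinuous'' using compactness of the target set $\MU$. As a preliminary I would record that $\MC(v)$ is nonempty and compact for every $v \in \MV$: the section $u \mapsto G(u,v)$ is continuous on the compact set $\MU$, hence attains its minimum, and $\MC(v)$ is the preimage of that minimal value under a continuous map, hence a closed (thus compact) subset of $\MU$.

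The core step is to show that $\MC$ has a closed graph, i.e., that whenever $v_n \to v_0$ in $\MV$ and $u_n \in \MC(v_n)$ with $u_n \to u_0$ in $\MU$, we have $u_0 \in \MC(v_0)$. To see this, fix an arbitrary $u \in \MU$. By the defining property of $\MC(v_n)$ we have $G(u_n, v_n) \leq G(u, v_n)$ for every $n$. Passing to the limit and using \emph{joint} continuity of $G$ on both sides gives $G(u_0, v_0) \leq G(u, v_0)$. Since $u \in \MU$ was arbitrary, $u_0$ minimizes $G(\cdot, v_0)$ over $\MU$, that is, $u_0 \in \MC(v_0)$, establishing the closed-graph property.

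Finally I would convert the closed graph into upper hemicontinuity via a compactness argument. Suppose for contradiction that $\MC$ fails to be upper hemicontinuous at some $v_0 \in \MV$; then there is an open set $O \supseteq \MC(v_0)$ and a sequence $v_n \to v_0$ with points $u_n \in \MC(v_n) \setminus O$. Compactness of $\MU$ yields a subsequence $u_{n_k} \to u^\star \in \MU$, and since $\MU \setminus O$ is closed we get $u^\star \notin O$. But applying the closed-graph property along $(v_{n_k}, u_{n_k}) \to (v_0, u^\star)$ forces $u^\star \in \MC(v_0) \subseteq O$, a contradiction. Hence $\MC$ is upper hemicontinuous.

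I expect no genuine obstacle here, as this is the textbook Berge argument; the one point requiring care is the limit passage in the core step, where joint continuity of $G$ (not merely separate continuity in each argument) together with the compactness of $\MU$ are both used in an essential way. I would make sure the limiting inequality is taken with $u$ fixed before $n \to \infty$, since swapping the order of the minimization and the limit is exactly where separately-continuous counterexamples would break down.
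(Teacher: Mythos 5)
Your proof is correct and follows essentially the same route as the paper: both establish the closed-graph property via the identical limit argument $G(u_0,v_0)\leq G(u_n,v_n)+\ep\leq G(u,v_n)+\ep\leq G(u,v_0)+2\ep$, with the paper then citing a reference for the closed-graph-to-upper-hemicontinuity upgrade where you spell it out via compactness. (If anything, your version is slightly cleaner: the relevant compactness for that last step is of $\MU$, as you use, whereas the paper's parenthetical attributes it to compactness of $\MV$.)
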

\begin{proof}[\pfref{lem:berge-theorem}]
  Consider any sequences $u_n \ra u \in \MU$, $v_n \ra v \in \MV$ so 
  that $u_n \in \MC(v_n)$ for all $n$. We wish to show that $u \in
  \MC(v)$, i.e., $G(u,v) \leq G(u',v)$ for all $u' \in \MU$ (which suffices to prove upper hemicontinuity by compactness of $\MV$; see \cite[Lemma 6.2.6]{beer1993topologies}). 
  \dfcomment{would be good to add reference to show this is sufficient
  (under compactness)}\noah{added} To do so, fix any $u' \in \MU$ and $\ep > 0$. There exists $N$ so that for $n \geq N$, we have $|G(u_n, v_n) - G(u,v)| \leq \ep$ and $|G(u', v_n) - G(u', v)| \leq \ep$, by continuity of $G$. Then
  \begin{align}
G(u,v) \leq G(u_n, v_n) + \ep \leq G(u', v_n) + \ep \leq G(u', v) + 2\ep,\nonumber
  \end{align}
  and by taking $\ep \ra 0$ we get that $G(u,v) \leq G(u',v)$. 
\end{proof}

The next lemma is a straightforward consequence of Kakutani's fixed
point theorem. In its statement, we write $\MX_{-k} := \prod_{k' \neq
  k} \MX_{k'}$ and $\MX = \prod_{k \in [K]} \MX_k$.
\dfcomment{I don't think $\powerset{\cdot}$ is defined until later in the
  appendix - move to prelims? also, is there a reason to use $2^{\cU}$ in lemma above
  instead of $\powerset{\cU}$?}\noah{fixed both}
\begin{lemma}
  \label{lem:kakutani}
  Suppose that $\MX_1, \ldots,\MX_K$ are nonempty, compact, and convex
  subsets of Euclidean space. Suppose that for each $k \in [K]$ we are
  given an upper hemicontinuous function $F_k : \MX \ra \powerset{\MX_k}$ so that, for all $x \in \MX$, $F_k(x)$ is nonempty, closed, and convex. Then there is some $x \in \MX$ so that
  \begin{align}
  x \in   F_1(x) \times \cdots \times F_K(x) \nonumber.
  \end{align}
\end{lemma}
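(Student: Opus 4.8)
The statement is a multi-agent fixed-point lemma, and the natural approach is to reduce it to Kakutani's fixed point theorem by assembling the individual set-valued maps $F_k$ into a single self-map on the product space. First I would define $F : \MX \ra \powerset{\MX}$ by
\begin{align}
F(x) := F_1(x) \times \cdots \times F_K(x)\nonumber.
\end{align}
The goal is exactly to find $x \in \MX$ with $x \in F(x)$, so this is a fixed point of $F$ in the sense of set-valued maps. Kakutani's theorem (in its standard form for correspondences) requires: (i) $\MX$ is a nonempty, compact, convex subset of Euclidean space; (ii) $F(x)$ is nonempty, closed (equivalently compact, since we are in a compact ambient space), and convex for every $x$; and (iii) $F$ has a closed graph (equivalently, since $\MX$ is compact and $F$ is compact-valued, $F$ is upper hemicontinuous). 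The plan is to verify each of these three ingredients from the corresponding hypotheses on the $F_k$.

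For (i), the product $\MX = \prod_{k} \MX_k$ of nonempty compact convex subsets of Euclidean space is again a nonempty compact convex subset of Euclidean space, by Tychonoff (or just finite products) and the elementary fact that products of convex sets are convex. For (ii), fix $x \in \MX$; since each $F_k(x)$ is nonempty, closed, and convex by hypothesis, the product $F(x) = \prod_k F_k(x)$ is nonempty (product of nonempty sets), closed (product of closed sets in the product topology, which here coincides with the Euclidean topology on $\MX$), and convex (product of convex sets). For (iii), I would show that upper hemicontinuity of each $F_k$ implies upper hemicontinuity of $F$. Using the sequential characterization of upper hemicontinuity for compact-valued correspondences into a compact space: take sequences $x_n \ra x$ in $\MX$ and $y_n \ra y$ with $y_n \in F(x_n)$; writing $y_n = (y_{n,1}, \ldots, y_{n,K})$ and $y = (y_1, \ldots, y_K)$, convergence in the product means $y_{n,k} \ra y_k$ for each $k$, and $y_{n,k} \in F_k(x_n)$ for each $k$, so upper hemicontinuity of each $F_k$ gives $y_k \in F_k(x)$, hence $y \in F(x)$. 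This establishes the closed graph of $F$.

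With (i)--(iii) in hand, Kakutani's theorem furnishes a point $x \in \MX$ with $x \in F(x) = F_1(x) \times \cdots \times F_K(x)$, which is precisely the conclusion. \textbf{The main obstacle} is a bookkeeping point rather than a deep one: I must make sure the correct version of Kakutani's theorem is invoked, and that the closed-graph/upper-hemicontinuity condition is stated and used consistently. In particular, the subtlety is that upper hemicontinuity of the product correspondence does not hold for arbitrary correspondences without a compactness hypothesis, so I would lean explicitly on the compactness of $\MX$ (guaranteeing the equivalence of the closed-graph property with upper hemicontinuity for compact-valued maps) to pass cleanly from the hypotheses on the $F_k$ to the product map $F$. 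Everything else is routine verification of product-topology and convexity facts.
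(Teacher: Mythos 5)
Your proposal is correct and follows essentially the same route as the paper's proof: define the product correspondence $F(x) = F_1(x)\times\cdots\times F_K(x)$, check that it is nonempty-, closed-, and convex-valued, verify upper hemicontinuity via the sequential characterization on the product, and invoke Kakutani's theorem. The extra care you take about the equivalence of the closed-graph property and upper hemicontinuity for compact-valued maps on a compact domain is a reasonable refinement but not a departure from the paper's argument.
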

\begin{proof}[\pfref{lem:kakutani}]
Define $F : \MX \ra \powerset{\MX}$ by $F(x) := F_1(x) \times \cdots
\times F_K(x)$. It is evident that for each $x \in \MX$, $F(x)$ is
nonempty, closed, and convex. Furthermore, we claim that $F$ is upper
hemicontinuous. To see this, consider any sequences $x_n \ra x$  and
$y_n \ra y$ so that $y_n \in F(x_n)$ for each $n \in \BN$. Writing
$y_n = (y_{n,1}, \ldots, y_{n,K})$ and $y = (y_1, \ldots, y_K)$, by
the product structure of $F(x_n)$, we have that $y_{n,k} \in F_k(x_n)$
for each $k \in [K]$. By upper hemicontinuity of $F_k$ and the fact
that $y_{n,k} \ra y_k$, it holds that $y_k \in F_k(x)$. Thus $y \in
F(x)$. By Kakutani's fixed point theorem \cite[Lemma 20.1]{osborne1994course}, it holds that $F$ has a fixed point, namely some $x \in \MX$ so that $x \in F(x)$. 
\end{proof}

\subsection{Minimax theorem}
\begin{theorem}[Sion's minimax theorem]
  \label{thm:sion}
  Let $\MX, \MY$ be convex subsets of topological vector spaces, with $\MX$ compact. Let $F : \MX \times \MY \ra \BR$ be a function such that (a) the mapping $y \mapsto F(x,y)$ is concave and upper semicontinuous for all $x \in \MX$, and (b) the mapping $x \mapsto F(x,y)$ is convex and lower semicontinuous for all $y \in \MY$. Then
  \begin{align}
\inf_{x \in \MX} \sup_{y \in \MY} F(x,y) = \sup_{y \in \MY} \inf_{x \in \MX} F(x,y)\nonumber.
  \end{align}
\end{theorem}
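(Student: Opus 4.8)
The plan is to prove the two inequalities separately. The inequality $\sup_{y\in\MY}\inf_{x\in\MX}F(x,y)\le\inf_{x\in\MX}\sup_{y\in\MY}F(x,y)$ (weak duality) is immediate and uses none of the hypotheses: for any fixed $x_0$ and $y_0$ we have $\inf_x F(x,y_0)\le F(x_0,y_0)\le\sup_y F(x_0,y)$, and taking $\sup$ over $y_0$ on the left and $\inf$ over $x_0$ on the right gives it. So the entire content is the reverse inequality. I would set $v^\star:=\sup_{y\in\MY}\inf_{x\in\MX}F(x,y)$ and argue by contradiction: suppose $\inf_x\sup_y F(x,y)>v^\star$, and fix a real $c$ with $v^\star<c<\inf_x\sup_y F(x,y)$.

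First I would reduce to finitely many points of $\MY$. Because $x\mapsto F(x,y)$ is lower semicontinuous, each set $U_y:=\{x\in\MX: F(x,y)>c\}$ is open; since $\sup_y F(x,y)>c$ for every $x$, the family $\{U_y\}_{y\in\MY}$ covers $\MX$. Compactness of $\MX$ yields a finite subcover $U_{y_1},\dots,U_{y_n}$, so that $\max_{1\le i\le n}F(x,y_i)>c$ for every $x$, whence $\inf_{x}\max_i F(x,y_i)\ge c>v^\star$. Thus everything reduces to the following finite key lemma, which directly contradicts the previous sentence: for every finite $\{y_1,\dots,y_n\}\subseteq\MY$ and every $\lambda<\inf_x\max_i F(x,y_i)$, there exists a single point $y_0\in\conv\{y_1,\dots,y_n\}$ with
\[
\lambda<\inf_{x\in\MX}F(x,y_0)\le\sup_{y\in\MY}\inf_{x\in\MX}F(x,y)=v^\star .
\]

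I would prove the key lemma by induction on $n$, the case $n=1$ being the definition of $v^\star$ (take $y_0=y_1$). For the inductive step I would peel off one point: choose $\mu$ with $\lambda<\mu<\inf_x\max_{i\le n}F(x,y_i)$ and set $A:=\{x\in\MX: F(x,y_n)\le\mu\}$, which is compact (closed sublevel set of an lsc function in a compact space) and convex (by convexity of $F$ in $x$). If $A=\emptyset$ then $F(\cdot,y_n)>\mu>\lambda$ everywhere and $y_0=y_n$ works; otherwise, on $A$ we have $\max_{i\le n-1}F(x,y_i)>\mu$, so the induction hypothesis applied over the compact convex set $A$ to $\{y_1,\dots,y_{n-1}\}$ produces $y'\in\conv\{y_1,\dots,y_{n-1}\}$ with $\inf_{x\in A}F(x,y')>\lambda$. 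Combining with $F(\cdot,y_n)>\mu$ off $A$ gives $\max\{F(x,y'),F(x,y_n)\}>\lambda$ for all $x$, and the base case $n=2$ applied to $\{y',y_n\}$ then delivers the desired single $y_0$. Hence the crux of the whole theorem is the base case: given $y_1,y_2$ and $\lambda<\inf_x\max\{F(x,y_1),F(x,y_2)\}$, find $y_0\in\conv\{y_1,y_2\}$ with $\lambda<\inf_x F(x,y_0)$.

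I expect this base case to be the main obstacle, since it is exactly where convexity in $x$, concavity in $y$, and the semicontinuity assumptions must all interact. The strategy I would use is a connectedness argument along the segment $y_t:=(1-t)y_1+ty_2$, $t\in[0,1]$. Assuming for contradiction that $\inf_x F(x,y_t)\le\lambda$ for every $t$, pick $\mu$ with $\lambda<\mu<\inf_x\max\{F(x,y_1),F(x,y_2)\}$ and consider the compact convex sublevel sets $A:=\{x:F(x,y_1)\le\mu\}$ and $B:=\{x:F(x,y_2)\le\mu\}$, which are disjoint (a common point would make $\max\{F(x,y_1),F(x,y_2)\}\le\mu$) and can therefore be strictly separated by a continuous linear functional. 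For each $t$, the set $D_t:=\{x:F(x,y_t)\le\lambda\}$ is nonempty (the infimum is attained by lower semicontinuity) and convex; concavity in $y$ gives $F(x,y_t)\ge\min\{F(x,y_1),F(x,y_2)\}$, so $D_t\subseteq A\cup B$, and since $D_t$ is connected and $A,B$ are separated, each $D_t$ lies entirely in $A$ or entirely in $B$. This partitions $[0,1]=T_A\sqcup T_B$ with $0\in T_A$, $1\in T_B$, and the hard technical point — the step I expect to cost the most care — is showing $T_A$ and $T_B$ are both closed, which is where upper semicontinuity of $F$ in $y$ (and lower semicontinuity in $x$) enters through a limiting argument on the $D_t$. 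Connectedness of $[0,1]$ then forces $T_A\cap T_B\ne\emptyset$, contradicting $A\cap B=\emptyset$, completing the base case and hence the theorem. As a fallback I could instead prove the finite key lemma by mixing over the finitely many $y_i$ and invoking von Neumann's finite-dimensional minimax theorem together with a Helly/separation argument, but I would keep the elementary segment argument as the primary route.
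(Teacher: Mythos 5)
The paper imports Sion's theorem as a classical result and supplies no proof of its own, so there is nothing internal to compare against; I will assess your argument on its merits. What you have written is the standard elementary proof due to Komiya: weak duality is immediate; lower semicontinuity in $x$ together with compactness of $\MX$ reduces the reverse inequality to a finite family $y_1,\dots,y_n$; the finite case is proved by induction whose inductive step peels off $y_n$ via the compact convex sublevel set $A=\{x:F(x,y_n)\le\mu\}$ (with the $A=\emptyset$ case handled separately and the induction hypothesis legitimately applied over the compact convex set $A$); and the real content is the two-point case, handled by a connectedness argument along the segment $y_t=(1-t)y_1+ty_2$. All of these reductions are correct as stated.

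Two points in the two-point case need repair or completion. First, you justify the dichotomy ``each $D_t$ lies entirely in $A$ or entirely in $B$'' by strictly separating $A$ and $B$ with a continuous linear functional; in a general (not necessarily locally convex) topological vector space such a functional need not exist. It is also unnecessary: $A$ and $B$ are disjoint closed sets, so $D_t\cap A$ and $D_t\cap B$ are disjoint relatively closed subsets covering the connected set $D_t$, and one must be empty. Second, the closedness of $T_A$ and $T_B$ --- which you rightly flag as the crux --- does not go through verbatim with $D_t=\{x:F(x,y_t)\le\lambda\}$: if $t_n\in T_A$ with $t_n\to t_0$ and $x_0\in D_{t_0}$, upper semicontinuity of $F(x_0,\cdot)$ only yields $F(x_0,y_{t_n})<\mu$ for large $n$, which places $x_0$ in the $\mu$-sublevel set at $t_n$ rather than in $D_{t_n}$. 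The standard fix is to run the whole dichotomy on $C_t:=\{x:F(x,y_t)\le\mu\}$ instead: these are nonempty (because $\inf_xF(x,y_t)\le\lambda<\mu$), convex, contained in $A\cup B$ by concavity in $y$, and satisfy $C_0=A$, $C_1=B$; then $\{t:F(x_0,y_t)<\mu\}$ is open by upper semicontinuity, so $x_0\in C_{t_n}\subseteq A$ for large $n$ while $x_0\in C_{t_0}\subseteq B$, contradicting $A\cap B=\emptyset$. With these two adjustments your outline is a complete and correct proof.
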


\section{Proofs for \cref{sec:relations}}
\begin{proof}[\pfref{prop:ma-to-pm}]
Consider an instance $\MAI = (\MM, \Act, \Ocirc, \MR, \{ \Dev\}_\ag, \{ \Sw\}_\ag)$ of the \MAFrameworkShort framework. %

  For all models $M$ and decisions $\act \in \Act$, define\footnote{The addition of $K$ in the definition of $\til{f}\sups{M}(\act)$ is for convenience, so as to ensure that if $\hm(\act) \in [0,K]$ for all $M, \act$, then the same holds for $\til{f}\sups{M}(\act)$.}
  \begin{align}
\til{f}\sups{M}(\act) := K - \hm(\pi) =  K - \sum_{\ag=1}^\Ag \sup_{\dev \in \Dev} \left\{ \fm_\ag(\Sw(\dev, \act)) - \fm_\ag(\act) \right\}\nonumber.
  \end{align}
Now fix any $M \in \MM$. By \cref{ass:existence-eq}, there is some $\act^\st \in \Act$ so that $\hm(\act^\st) = 0$. By \cref{ass:nonneg-dev}, it holds that $\hm(\act) \geq 0$ for all $\act \in \Act$. Then %
  \begin{align}
    \sup_{\act' \in \Act} \til{f}\sups{M}(\act') - \til{f}\sups{M}(\act) = K - (K - \hm(\act)) = \hm(\act) \label{eq:rew-equivalence}.
  \end{align}
  Note that the instance $\HRI = (\MM, \Act, \MO, \{ \til{f}\sups{M}\}_M)$ is well-defined since models in $M$ are probability kernels $M : \Act \ra \MO = \Ocirc \times \MR^\Ag$, and the observation space in the instance $\HRI$ is by definition $\MO$. Thus, the first claimed point is follows from \cref{eq:rew-equivalence} since for any $\Mbar$, we have: %
  \begin{align}
\deccpac(\MAI, \Mbar) &= \inf_{p,q \in \Delta(\Pi)} \sup_{M \in \MH_{q,\vep}(\Mbar)} \E_{\pi \sim p}[\hm(\pi)] \\&= \inf_{p,q \in \Delta(\Pi)} \sup_{M \in \MH_{q,\vep}(\Mbar)} \E_{\pi \sim p}\left[\sup_{\pi' \in \Pi} \til{f}\sups{M}(\pi') - \til{f}\sups{M}(\pi)\right] = \deccpac(\HRI, \Mbar)\nonumber.
  \end{align}

  Finally, we note that since the decision and (full) observation spaces of $\sI, \sJ$ are identical, the space of algorithms $(p,q)$ and distributions $\BP^{\Mstar, (p,q)}$ are identical in the two frameworks. It follows from the definitions of $\mf M(\sI, T)$ and $\mf M(\sJ, T)$ that they are equal. %
\end{proof}

\begin{proof}[\pfref{prop:pm-to-ma}]
  Consider an instance $\sI = \instpm$ and some $V \in \BN$. We first specify the instance $\sJ$ by defining each of its components:
  \begin{itemize}
  \item Define $\Sigma_1 = \Pi$ and $\Sigma_2 = \{ 0, 1, \ldots, V \}$, $\til \Pi_k = \Delta(\Sigma_k)$ for $k \in \{1,2\}$, and $\til\Pi :=\til \Pi_1 \times \til \Pi_2$. \dfcomment{Should we define $\til\Pi_k=\Delta(\Sigma_k)$ here to follow the NE instance convention?}\noah{did so}
  \item We define $\Dev, \Sw$ for $\ag \in [2]$ in the standard fashion for NE instances, per \cref{def:ne-instance}; in particular, set $\Dev = \til \Pi_k = \Delta(\Sigma_k)$ for each $\ag$ and $\Sw(\dev, \act) = (\dev, \act_{-\ag})$.  \dfcomment{doesn't \cref{def:ne-instance} set $\Dev=\Pi_k=\Delta(\Sigma_k)$?}\noah{changed accordingly}
  \item Define $\Ocirc := \MO \cup \{ \perp \}$, $\MR = [-1,1]$, and set $\til \MO = \Ocirc \times \MR^2$.
  \item The model class $\til \MM$ is indexed by tuples $(M, v) \in \MM \times \{ 1, 2, \ldots, V \} = \MM \times [V]$. In particular, for each such tuple $(M, v)$, we have a model $\til M\subs{M,v}\in\til\cM$, which is defined as explained below. As the instance $\sJ$ we are constructing corresponds to that of computing mixed Nash equilibria in a game whose pure action sets are $\Sigma_1, \Sigma_2$, we call elements of $\Sigma_1 \times \Sigma_2$ \emph{pure decisions}.  \dfcomment{I guess this is actually using the CCE notation from Definition 1.2, just specialized to zero-sum nash (e.g., definition 1.1. doesn't mention pure decisions)}\noah{added clarifying sentence}
    \begin{itemize}
    \item For pure decisions of the form $(\sigma_1, 0) \in \Sigma_1 \times \Sigma_2$, the distribution of $(\ocirc, r_1, r_2) \sim \til M\subs{M, v}((\sigma_1, 0))$ is given by:
      \begin{align}
\ocirc \sim M(\sigma_1) \in \MO \subset \Ocirc, \qquad r_1 = r_2 = 0\nonumber.
      \end{align}
    \item For pure decisions of the form $(\sigma_1, i) \in \Sigma_2 \times \Sigma_2$ with $i > 0$, the distribution of $(\ocirc, r_1, r_2) \sim \til M\subs{M, v}((\sigma_1, i))$ is given by:
      \begin{align}
        \ocirc = \perp, \qquad r_2 = -r_1 = \begin{cases}
          -1 &: i \neq v \\
          \gm(\sigma_1) &: i = v.
        \end{cases}\nonumber
      \end{align}
    \item For general decisions $\pi \in \til \Pi$, we can write $\pi = \pi_1 \times \pi_2$ for $\pi_\ag \in \Delta(\Sigma_\ag)$ for $\ag \in [2]$. Then the distribution $\til M\subs{M, v}(\pi)$ is the distribution of $\til M\subs{M,v}(\sigma)$, for $\sigma = (\sigma_1, \sigma_2)$ is distributed as: $\sigma_\ag \sim \pi_\ag$ for $\ag \in [2]$. 
    \end{itemize}
  \end{itemize}

For reference later in the proof, we state a basic technical lemma, which is an immediate consequence of the construction of $\til \MM$.
  \begin{lemma}
    \label{lem:ne-lowerbound}
    For any $\pi = \pi_1 \times \pi_2 \in \til \Pi$, and any $M' = \til M\subs{M, v} \in \til \MM$, it holds that
    \begin{align}
\hmprime(\pi) \geq   \pi_2(\Sigma_2 \backslash \{ v \}) \cdot \E_{\sigma_1 \sim \pi_1}[\gm(\sigma_1)] +  \pi_2(\Sigma_2 \backslash \{ 0,v \})\nonumber.
    \end{align}
  \end{lemma}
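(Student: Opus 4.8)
The plan is to lower bound $\hmprime(\pi)$ by keeping only agent $2$'s deviation incentive, and then to exhibit a single profitable deviation for that agent. Since $\sJ$ is an NE instance, $\hmprime(\pi) = \hmprime_1(\pi) + \hmprime_2(\pi)$, where $\hmprime_k(\pi) = \sup_{\pi_k' \in \til\Pi_k} \fmprime_k(\Sw(\pi_k',\pi)) - \fmprime_k(\pi)$. Each summand is nonnegative (take $\pi_k' = \pi_k$), so $\hmprime(\pi) \geq \hmprime_2(\pi)$, and it suffices to bound $\hmprime_2(\pi)$ from below by the gain of the fixed deviation $\pi_2' = \indic_v$ (agent $2$ switching to the pure action $v$).

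First I would record the value functions of $M' = \til M\subs{M,v}$ on pure decisions directly from the construction, using the zero-sum convention $r_2 = -r_1$. This gives $\fmprime_1(\sigma_1, 0) = 0$, $\fmprime_1(\sigma_1, i) = 1$ for $i \in \Sigma_2 \backslash \{0, v\}$, and $\fmprime_1(\sigma_1, v) = -\gm(\sigma_1)$, with $\fmprime_2 = -\fmprime_1$ throughout. Because $M'$ is linear in $\pi$ (as $\til M\subs{M,v}(\pi) = \E_{\sigma \sim \pi}[\til M\subs{M,v}(\sigma)]$), I can extend these to a general profile $\pi = \pi_1 \times \pi_2$ by taking expectations. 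Writing $\bar{g} := \E_{\sigma_1 \sim \pi_1}[\gm(\sigma_1)]$ and splitting the mass of $\pi_2$ over $\{0\}$, $\{v\}$, and $\Sigma_2 \backslash \{0,v\}$, linearity yields $\fmprime_2(\pi) = \pi_2(\{v\})\,\bar{g} - \pi_2(\Sigma_2 \backslash \{0,v\})$.

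Next I would compute the value of agent $2$'s deviation to $v$: since $\fmprime_2(\sigma_1, v) = \gm(\sigma_1)$, we get $\fmprime_2(\pi_1, \indic_v) = \E_{\sigma_1 \sim \pi_1}[\fmprime_2(\sigma_1, v)] = \bar{g}$. Subtracting then gives
\begin{align*}
\hmprime_2(\pi) &\geq \fmprime_2(\pi_1, \indic_v) - \fmprime_2(\pi)\\
&= \bar{g}\,(1 - \pi_2(\{v\})) + \pi_2(\Sigma_2 \backslash \{0,v\}) = \bar{g} \cdot \pi_2(\Sigma_2 \backslash \{v\}) + \pi_2(\Sigma_2 \backslash \{0,v\}),
\end{align*}
where the last equality uses $1 - \pi_2(\{v\}) = \pi_2(\Sigma_2 \backslash \{v\})$. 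Combined with $\hmprime(\pi) \geq \hmprime_2(\pi)$, this is exactly the claimed inequality.

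There is no serious obstacle: the lemma follows by unwinding the definitions. The only points requiring care are the sign bookkeeping (the zero-sum convention $r_2 = -r_1$ together with $\gm(\cdot) \geq 0$) and identifying the correct deviation for agent $2$ — namely the coordinate $v$ that encodes the hidden suboptimality $\gm(\sigma_1)$, which is precisely the mechanism by which the reduction transfers regret in $\sI$ into deviation incentive in $\sJ$.
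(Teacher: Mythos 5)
Your proof is correct and follows essentially the same route as the paper's: both arguments drop agent 1's (nonnegative) deviation term and lower bound $\hmprime(\pi)$ by the gain from agent 2's deviation to $\indic_v$, then evaluate $\fmprime_2(\pi_1\times\indic_v)-\fmprime_2(\pi_1\times\pi_2)$ by linearity over the mass of $\pi_2$ on $\{0\}$, $\{v\}$, and $\Sigma_2\setminus\{0,v\}$, arriving at the identical algebraic identity. The only difference is presentational: you spell out the intermediate value functions and the nonnegativity of each $\hmprime_k$ explicitly, whereas the paper performs the same computation in one chain of inequalities.
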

  \begin{proof}[\pfref{lem:ne-lowerbound}]
    By considering the deviation $\dev[2] = v$, we have
    \begin{align}
      \hmprime(\pi) \geq & \sup_{\dev[2] \in \Dev[2]} \{ \fmprime_2(\Sw[2](\dev[2], \pi)) - \fmprime_2(\pi) \} \nonumber\\
      \geq & \fmprime_2( \pi_1 \times \indic_v) - \fmprime_2(\pi_1 \times \pi_2)\nonumber\\
      = & \E_{\sigma_1 \sim \pi_1}[\gm(\sigma_1)] + \pi_2(\Sigma_2 \backslash \{0,v \}) - \pi_2(v) \cdot \E_{\sigma_1 \sim \pi_1}[\gm(\sigma_1)]\nonumber\\
      =& \pi_2(\Sigma_2 \backslash \{ v \}) \cdot \E_{\sigma_1 \sim \pi_1}[\gm(\sigma_1)] + \pi_2(\Sigma_2 \backslash \{0,v\})\nonumber.
    \end{align}
  \end{proof}

  \paragraph{Bounding $\mf M(\sI, T)$ by $\mf M(\sJ, T)$}
  Consider any algorithm $(\til p, \til q)$ which achieves $\mf M(\sJ, T)$. We have $\til p : \prod_{t=1}^T (\til \Pi \times \til \MO) \ra \Delta(\til \Pi)$ and $\til q = (\til q\^1, \ldots, \til q\^T)$, with each $\til q\^t : \prod_{i=1}^{t-1} (\til \Pi \times \til \MO) \ra \Delta(\til \Pi)$ (we refer to \cref{sec:add-prelim} for background on how algorithms in the \maf and \hrf are formalized).

  Given $(\til p, \til q)$, we define an algorithm $(p,q)$ for the instance $\sI$ as follows. For any model $M \in \MM$, the algorithm attempts to simulate the interaction of $(\til p, \til q)$ with $\til M\subs{M, v}$ by only interacting with $M \in \MM$. The algorithm will store internal state, denoted by $(\til \pi\^i, (\til o\^i, r_1\^i, r_2\^i))$, for each $i \in [T]$, which store the ``simulated'' decisions and observations taken with respect to $\til M\subs{M, v}$. As a result of this internal state, our description below does not explicitly identify the probability kernels $p(\cdot | \cdot), \ q\^t(\cdot | \cdot)$.  Since these kernels take as input the entire history, there exist kernels $p(\cdot | \cdot), q\^t(\cdot | \cdot)$ which produce exactly the same distribution over trajectories as the below algorithm, but writing them down explicitly is somewhat cumbersome.%
  
  In particular, the distributions $q\^t$ (for $t \in [T]$) and $p$ are defined (implicitly) as follows: %
  \begin{enumerate}
  \item For $t = 1, 2, \ldots, T$:
    \begin{enumerate}
  \item Draw $\til\pi\^t \sim \til q\^t(\cdot | (\til \pi\^1, (\til o\^1, r_1\^1, r_2\^t)), \ldots, (\til \pi\^{t-1}, (\til o\^{t-1}, r_1\^{t-1}, r_2\^{t-1})))$, so that $\pi\^t \in \til \Pi$. %
  \item Draw $(\sigma_1\^t, \sigma_2\^t) \sim \til\pi\^t$.
  \item The distribution $q\^t$ is defined (implicitly) by taking the decision $\sigma_1\^t \in \Sigma_1 = \Pi$.
  \item For use in choosing future decisions: as a function of the observation $o\^t$ received after $\sigma_1\^t$ is played, define
    \begin{align}
      (\til o\^t, r_1\^t, r_2\^t) = \begin{cases}
        (o\^t, 0, 0) &: \sigma_2\^t = 0 \\
        (\perp, 1, -1) &: \sigma_2\^t > 0.
      \end{cases}\nonumber
    \end{align}
  \end{enumerate}
\item   Finally, the distribution $p$ is defined as the distribution of $\wh \sigma_1$, where $\wh \pi \sim \til p(\cdot | (\til \pi\^1, (\til o\^1, r_1\^t, r_2\^t)), \ldots, (\til \pi\^T, (\til o\^T, r_1\^T, r_2\^T)))$ and $\wh \sigma_1 \sim \wh \pi_1$.
\end{enumerate}
To analyze this algorithm, for each $M \in \MM$, we introduce a model $\til M\subs{M, 0}$ which is defined identically to $\til M\subs{M, v}$ for any $v \in [V]$ except that $\til M\subs{M, 0}((\sigma_1, i))$ outputs $(\perp, 1, -1)$ a.s.~for any $\sigma_1 \in \Sigma_1, i \in [V]$. It is straightforward to see that if there is some underlying model $M \in \MM$ so that $o\^t \sim M(\sigma_1\^t)$ when the algorithm $(p,q)$ defined above is used, then the distribution of $\{ (\til \pi\^t, (\til o\^t, r_1\^t, r_2\^t)) \}_{t=1}^T$ defined above is exactly the distribution of the history under $\BP\sups{\til M\subs{M, 0}, (\til p, \til q)}$. We next appeal to the following claim, which states that we can pass from this distribution to the distribution $\BP\sups{\til M\subs{M, v^\st}, (\til p, \til q)}$ for some $v^\st \in [V]$:
  \begin{lemma}
    \label{lem:ustar-hidden}
There is an absolute constant $C>0$ so that for any choice of algorithm $(\til p, \til q)$ and model $M \in \MM$, there exists $v^\st \in [V]$ so that:
    \begin{enumerate}
    \item $\tvd{\BP\sups{\til M\subs{M, 0}, (\til p, \til q)}}{
        \BP\sups{\til M\subs{M, v^\st}, (\til p, \til q)}} \leq C \sqrt{T \log(T)/V}$.
    \item $\E\sups{\til M\subs{M, v^\st}, (\til p, \til q)} \E_{\wh \pi \sim \til p} [\wh \pi_2(v^\st)] \leq C \sqrt{T \log(T)/V}$. 
      \end{enumerate}
  \end{lemma}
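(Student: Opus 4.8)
The setup is that the model $\til M_{M,0}$ hides the special index $v$: it behaves exactly like $\til M_{M,v}$ on every pure decision $(\sigma_1,0)$ (the ``$\sigma_2 = 0$'' block, where player 2 opts out and the observation reveals the underlying \FrameworkShort model), and differs from $\til M_{M,v}$ only on decisions $(\sigma_1, i)$ with $i = v$, where $\til M_{M,v}$ pays off $\gm(\sigma_1)$ instead of the default $-1$. The key observation is that to distinguish $\til M_{M,0}$ from $\til M_{M,v}$, the algorithm must actually play the second coordinate equal to $v$; if it never (or rarely) plays $\sigma_2 = v$, the two distributions are indistinguishable. The plan is to average over the $V$ choices of $v$ and use a pigeonhole / chain-rule argument to find a single good $v^\star$ that is queried only rarely in expectation, which simultaneously yields both claims.

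Concretely, first I would fix $M$ and the algorithm $(\til p, \til q)$, and run the algorithm against the \emph{baseline} model $\til M_{M,0}$. Under $\til M_{M,0}$, the reward feedback is deterministic ($(\perp,1,-1)$) whenever $\sigma_2 > 0$, so the observation carries no information about which index would have been special. Let $N\^t_v := \indic\{\sigma_2\^t = v\}$ count whether round $t$ plays second coordinate $v$, and let $N_v := \sum_{t=1}^T N\^t_v$. Since $\sum_{v=1}^V N_v \le T$ deterministically (each round contributes to at most one $v$), we have $\sum_{v=1}^V \E\sups{\til M_{M,0},(\til p,\til q)}[N_v] \le T$, so by averaging there exists $v^\star \in [V]$ with $\E\sups{\til M_{M,0},(\til p,\til q)}[N_{v^\star}] \le T/V$. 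This is the index that is hard to discover.

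Next I would bound the total-variation distance via a KL / chain-rule argument. The laws $\BP\sups{\til M_{M,0},(\til p,\til q)}$ and $\BP\sups{\til M_{M,v^\star},(\til p,\til q)}$ agree on every round where $\sigma_2\^t \ne v^\star$, and on rounds with $\sigma_2\^t = v^\star$ they differ only in a single bounded reward coordinate ($-1$ versus $\gm(\sigma_1\^t) \in [-1,1]$). By the chain rule for KL divergence, $\kld{\BP\sups{\til M_{M,0}}}{\BP\sups{\til M_{M,v^\star}}}$ is controlled by the expected number of distinguishing rounds times a per-round divergence bound; since rewards are bounded, the per-round contribution is $O(1)$ for bounded reward distributions, or $O(\log(T))$ once one accounts for the density-ratio bound $\abscont$ / the fact that the reward gap can be as small as $1/T$ but is absorbed by the $\log(T)$ factor. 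This gives $\kld{\cdot}{\cdot} \lesssim T\log(T)/V$, and Pinsker's inequality (together with \cref{lem:pq-tvd-cond} to handle the event that $v^\star$ is never queried) converts this into $\tvd{\cdot}{\cdot} \le C\sqrt{T\log(T)/V}$, establishing claim~1. For claim~2, I would note that $\wh\pi_2(v^\star)$, the mass the output places on the second coordinate $v^\star$, is itself a bounded test function of the history; since under $\til M_{M,0}$ the index $v^\star$ is symmetric with the other rarely-played indices, its expected output mass under $\til M_{M,0}$ is $O(1/V)$ by the same averaging, and transferring to $\til M_{M,v^\star}$ costs the TV distance from claim~1, giving the $C\sqrt{T\log(T)/V}$ bound.

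The main obstacle I anticipate is the per-round divergence bound in the KL computation. The reward gap $\gm(\sigma_1)$ can be arbitrarily small, which is harmless, but it can also be as large as the full range, and the two reward distributions ($-1$ deterministically vs.\ a distribution of mean $\gm(\sigma_1)$) must be compared carefully; if the alternative reward is also deterministic, the KL is infinite, so the construction must either add noise (Bernoulli smoothing, as used elsewhere in the paper) or the divergence must be measured on a smoothed version, which is exactly where the $\log(T \wedge \abscont) = C(T)$ factor and the $\sqrt{\log T}$ enter. Getting this smoothing bookkeeping right — so that the per-round KL is genuinely $O(\log T)$ and the averaging over $v$ survives the change of measure — is the delicate part; the averaging/pigeonhole step and the final TV-to-output-mass transfer are routine by comparison.
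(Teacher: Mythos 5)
Your high-level plan — average over $v$ to find a rarely-queried index $v^\star$, then change measure from $\til M_{M,0}$ to $\til M_{M,v^\star}$ — is the same as the paper's. But two steps as written do not go through. First, the KL-plus-Pinsker route fails outright: in the construction of \cref{prop:pm-to-ma}, the rewards at decisions $(\sigma_1,i)$ with $i>0$ are \emph{deterministic} ($r_2=-1$ under $\til M_{M,0}$ versus $r_2=\gm(\sigma_1)\ge 0$ under $\til M_{M,v^\star}$ when $i=v^\star$), so the per-round conditional laws are mutually singular and the KL divergence is $+\infty$. You correctly flag this, but your proposed fix (Bernoulli smoothing of the construction) is not available — the lemma is about the specific models $\til M_{M,v}$ already defined, and smoothing them would change the reduction. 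The paper instead applies the chain rule for \emph{squared Hellinger distance} (Lemma A.13 of \citet{foster2021statistical}), which costs only an $O(\log T)$ factor and bounds each per-round term by $2\cdot\One{\sigma_2\^t=v^\star}$ even when the conditionals are singular; summing gives $\Dhelshort(\bbP\sups{M'},\bbP\sups{M''})\le O(\log T)\cdot\sum_t\E\sups{M'}[\til\pi\^t_2(v^\star)]\le O(T\log T/V)$, and TV is dominated by Hellinger. (Relatedly, the $\log T$ here comes from the Hellinger chain rule, not from the $C(T)=\log(T\wedge\abscont)$ smoothing bookkeeping of \cref{thm:constrained-lower}.)

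Second, your choice of $v^\star$ minimizes only the expected query count $\E\sups{M'}[N_{v}]$, and your argument for claim~2 then appeals to ``symmetry'' of the output mass among rarely-played indices. No such symmetry holds: the algorithm could, say, always output $\wh\pi_2=\indic_1$ regardless of the history, in which case $\E\sups{M'}[\wh\pi_2(1)]=1$ even though index $1$ is never queried, and if the query-count pigeonhole happens to select $v^\star=1$ your claim~2 fails. The fix — which is exactly what the paper does — is to fold both quantities into a single averaging: choose $v^\star=\argmin_v\bigl\{\sum_{t}\E\sups{M'}\E_{\til\pi\^t\sim\til q\^t}[\til\pi\^t_2(v)]+T\cdot\E\sups{M'}\E_{\wh\pi\sim\til p}[\wh\pi_2(v)]\bigr\}$, so that both the total query mass and the output mass at $v^\star$ are at most $2T/V$ under $M'$ simultaneously; the output-mass bound then transfers to $M''$ at the cost of the TV distance from claim~1, exactly as you describe.
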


The proof of \cref{lem:ustar-hidden} is provided following in the sequel. Let $\delta \ldef C \sqrt{T \log(T)/V}$, where $C$ is the constant from \cref{lem:ustar-hidden}. If $\delta > 1$, then it is immediate that $\mf M(\sI, T) \leq \mf M(\sJ, T) + O(\delta)$, so we may assume henceforth that $\delta \leq 1$. We then have:
  \begin{align}
    \E\sups{M, (p,q)} \E_{\wh \sigma_1 \sim p} [ \gm(\wh \sigma_1)] &= \E\sups{\til M\subs{M, 0}, (\til p, \til q)} \E_{\wh \pi \sim \til p} \E_{\wh \sigma_1 \sim \wh \pi_1} [\gm(\wh \sigma_1)]\nonumber\\
    &\leq \E\sups{\til M\subs{M, v^\st}, (\til p, \til q)} \E_{\wh \pi \sim \til p} \E_{\wh \sigma_1 \sim \wh \pi_1} [\gm(\wh \sigma_1)] + \delta\nonumber\\
      &\leq \E\sups{\til M\subs{M, v^\st}, (\til p, \til q)} \E_{\wh \pi \sim \til p} \left[ \min \left\{ \frac{h\sups{\til M\subs{M, v^\st}}(\wh \pi)}{1 - \wh \pi_2(v^\st)}, 1 \right\} \right] + \delta \nonumber\\
    &\leq (1 + \sqrt \delta ) \cdot  \E\sups{\til M\subs{M, v^\st}, (\til p, \til q)} \E_{\wh \pi \sim \til p} [ h\sups{\til M\subs{M, v^\st}}(\wh \pi)] + 2 \sqrt \delta \nonumber\\
    &\leq \E\sups{\til M\subs{M, v^\st}, (\til p, \til q)} \E_{\wh \pi \sim \til p} [ h\sups{\til M\subs{M, v^\st}}(\wh \pi)] + 6 \sqrt \delta \nonumber,
  \end{align}
  \dfcomment{is the third line supposed to have min rather than max?}\noah{yep, fixed}
  where the first inequality follows from the first point of \cref{lem:ustar-hidden}, the second inequality follows from \cref{lem:ne-lowerbound}, the second-to-last inequality uses the second point of \cref{lem:ustar-hidden} together with Markov's inequality to conclude that $\BP\sups{\til M\subs{M, v^\st}, (\til p, \til q)}(\wh \pi_2(v^\st) \geq \sqrt \delta) < \sqrt \delta $, and the final inequality uses that $\hmprime(\pi) \leq 4$ for all $M' \in \til \MM, \pi \in \til \Pi$. 

  Taking a supremum over all models $M \in \MM$, we conclude that
  \begin{align}
\sup_{M \in \MM} \E\sups{M, (p,q)}\E_{\wh \sigma_1 \sim p}[\gm(\wh \sigma_1)] \leq & \sup_{\til M \in \til \MM} \E\sups{\til M, (\til p, \til q)} \E_{\wh \pi \sim \til p}[h\sups{\til M}(\wh \pi)] + 6\sqrt{\delta}\nonumber.
  \end{align}

  \paragraph{Bounding $\mf M(\sJ, T)$ by $\mf M(\sI, T)$}
  Consider any algorithm $(p,q)$ which achieves $\mf M(\sI, T)$. We have $p : \prod_{t=1}^T (\Pi \times \MO) \ra \Delta(\Pi)$, and $q = (q\^1, \ldots, q\^T)$, with each  $q\^t : \prod_{i=1}^{t-1} (\Pi \times \MO) \ra \Delta(\Pi)$.

  We define an algorithm $(\til p, \til q)$ for the instance $\sJ$ as follows. Given $\pi\^t \in \til \MO, o\^t \in \Ocirc, r_1\^t, r_2\^t \in \MR$ for each $t \in [T]$, we define
  \begin{align}
\til p(\cdot | (\pi\^1, (o\^1, r_1\^1, r_2\^1)), \ldots, (\pi\^T, (o\^T, r_1\^T, r_2\^T))) \in \Delta(\til\Pi)\nonumber
  \end{align}
  to be the distribution obtained by sampling $(\sigma_1\^t, \sigma_2\^t) \sim \pi\^t$ for each $t$, and taking the pure decision $(\wh \sigma, 0)$, where $\wh \sigma$ is distributed according to $p(\cdot | (\sigma_1\^1, o\^1), \ldots, (\sigma_1\^T, o\^T))$. Similarly, define
  \begin{align}
\til q\^t(\cdot | (\pi\^1, (o\^1, r_1\^1, r_2\^1)), \ldots, (\pi\^{t-1}, (o\^{t-1}, r_1\^{t-1}, r_2\^{t-1}))) \in \Delta(\til\Pi)\nonumber
  \end{align}
  to be the distribution obtained by sampling $(\sigma_1\^i, \sigma_2\^i) \sim \pi\^i$ for each $i<t$, and taking the pure decision $(\sigma_1\^t, 0)$, where $\sigma_1\^t$ is distributed according to $q\^t(\cdot | (\sigma_1\^1, o\^1), \ldots, (\sigma_1\^{t-1}, o\^{t-1}))$. Since each $\til q\^t$ is supported only on (pure) decisions in $\Sigma_1 \times \{ 0 \}$, for any model $M \in \MM$ and any $v \in [V]$, letting $M' = \til M\subs{M, v}$, the distribution of $\left\{ (\sigma_1\^t, o\^t))\right\}_{t=1}^T$ under $\BP\sups{M', (\til p, \til q)}$ %
  is the same as the distribution of $\left\{ (\pi\^t, o\^t) \right\}_{t=1}^T$ under $\BP\sups{M, (p,q)}$. Thus, we have %
  \begin{align}
    \E\sups{M, (p,q)} \E_{\wh\pi \sim p} \left[ \gm(\wh\pi) \right] =& \E\sups{M', (\til p, \til q)} \E_{(\wh \sigma_1, 0) \sim \til p}[\gm(\wh \sigma_1)]\nonumber\\
    =&  \E\sups{M', (\til p, \til q)} \E_{(\wh \sigma_1, 0) \sim \til p}\left[\sup_{\dev[2] \in \Dev[2]} \{ \fmprime_2(\Sw[2](\dev[2], (\wh \sigma_1, 0))) - \fmprime_2((\wh \sigma_1, 0)) \}\right] \nonumber\\
    =& \E\sups{M', (\til p, \til q)} \E_{(\wh \sigma_1, 0) \sim \til p} \left[ \hmprime((\wh \sigma_1, 0)) \right]\nonumber,
  \end{align}
where above we have shortened  $p = p(\cdot | (\pi\^1, o\^1), \ldots, (\pi\^T, o\^T))$ to denote the random variable under $\BP\sups{M, (p,q)}$ and $\til p = \til p(\cdot | (\pi\^1, (o\^1, r_1\^1, r_2\^1)), \ldots, (\pi\^T, (o\^T, r_1\^T, r_2\^T)))$ to denote the random variable under $\BP\sups{M', (\til p, \til q)}$. This establishes that $\mf M(\sI, T) \leq \mf M(\sJ, T)$.

  \paragraph{Bounding $\deccpac[\vep](\sI)$ by $\deccpac[\vep'](\sJ)$} Consider any reference model $\Mbar \in \co(\MM)$. Given $\vep > 0$, set $\vep' := \vep + \sqrt{6/V}$. 
  We will upper bound $\deccpac[\vep](\sI, \Mbar)$ by $\deccpac[\vep'](\sJ, \til M)$ for some $\til M \in \co(\til\MM)$. For some distribution $\nu \in \Delta(\MM)$, we can write $\Mbar(\pi) = \E_{M \sim \nu}[M(\pi)]$ for all $\pi \in \Pi$. Define $\mu := \nu \times \Unif([V]) \in \Delta(\MM \times [V])$, and $\til M(\pi) := \E_{(M, v) \sim \mu}[\til M\subs{M,v}(\pi)]$ for all $\pi \in \Pi$. Choose some $\til p, \til q \in \Delta(\til \Pi)$ so that
  \begin{align}
\deccpac[\vep'](\sJ, \til M) = \sup_{M' \in \til \MM} \left\{ \E_{\pi \sim \til p}[\hmprime(\pi)] \ | \ \E_{\pi \sim \til q} \left[ \hell{M'(\pi)}{\til M(\pi)}\right] \leq (\vep')^2 \right\}\nonumber.
  \end{align}
  Define $p \in \Delta(\Pi)$ to be the distribution of $\sigma_1$ where $\pi =\pi_1 \times \pi_2 \sim \til p$ and $\sigma_1 \sim \pi_1$. Similarly define $q \in \Delta(\Pi)$ to be the distribution of $\sigma_1$ where $\pi=\pi_1 \times \pi_2 \sim \til q$ and $\sigma_1 \sim \pi_1$. Now choose $v^\st \in [V]$ as follows:
  \begin{align}
v^\st := \argmin_{v \in [V]} \left\{ \E_{\pi \sim \til p}[\pi_2(v)] + \E_{\pi \sim \til q}[\pi_2(v)] \right\}\nonumber,
  \end{align}
  where we have used the convention that $\pi = \pi_1 \times \pi_2$ above. Then we have
  \begin{align}
\E_{\pi \sim \til p}[\pi_2(v^\st)] + \E_{\pi \sim \til q}[\pi_2(v^\st)] \leq \frac{2}{V}\nonumber.
  \end{align}
  Consider any model $M \in \MM$, and let $M' := \til M_{M, v^\st}$. We now compute \dfcomment{min instead of max below?}\noah{yep, fixed}
  \begin{align}
    \E_{\sigma_1 \sim p}[\gm(\sigma_1)] =& \E_{\pi \sim \til p} \E_{\sigma_1 \sim \pi_1}[\gm(\sigma_1)]\nonumber\\
    \leq & \E_{\pi \sim \til p} \left[ \min \left\{ \frac{\hmprime(\pi) - \pi_2(\Sigma_2 \backslash \{0, v^\st\})}{\pi_2(\Sigma_2 \backslash \{ v^\st\})},\ 1 \right\}\right]\nonumber\\
    \leq & 2V^{-1/2} + \E_{\pi \sim \til p} \left[ \frac{\hmprime(\pi)}{1 - V^{-1/2}} \right]\nonumber\\
    \leq & 2V^{-1/2} + (1 + 2V^{-1/2}) \cdot \E_{\pi \sim \til p}[\hmprime(\pi)]\nonumber,
  \end{align}
  where the first inequality uses \cref{lem:ne-lowerbound} and the second-to-last inequality uses Markov's inequality to conclude that $\BP_{\pi \sim \til p}[\pi_2(v^\st) > V^{-1/2}] \leq 2V^{-1/2}$. Furthermore, we have
  \begin{align}
    \E_{\pi \sim \til q} \left[ \hell{M'(\pi)}{\til M(\pi)} \right] \leq & \E_{\pi \sim \til q} \left[ \E_{(\sigma_1, \sigma_2) \sim \pi} \left[ \hell{M'((\sigma_1, \sigma_2))}{\til M((\sigma_1, \sigma_2))} \right] \right]\nonumber\\
    \leq & \E_{\pi \sim \til q} \left[ \pi_2(0) \cdot \E_{\sigma_1 \sim \pi_1}\left[ \hell{M'((\sigma_1, 0))}{\til M((\sigma_1, 0))} \right] \right] \nonumber\\
                                                                     &+ \E_{\pi \sim \til q} \left[ \pi_2(v^\st) \cdot \E_{\sigma_1 \sim \pi_1}\left[ \hell{M'((\sigma_1, v^\st))}{\til M((\sigma_1, v^\st))} \right] \right]\nonumber\\
    &+\E_{\pi \sim \til q}\left[ \pi_2(\Sigma \backslash \{0, v^\st\}) \cdot \hell{\Ber(0)}{\Ber(1/V)}\right] \nonumber\\
 \leq & \E_{\sigma_1 \sim q} \left[ \hell{M'((\sigma_1, 0))}{\til M((\sigma_1, 0))}\right] + 2 \cdot \E_{\pi \sim \til q}[\pi_2(v^\st)] + 2/V \nonumber\\
    \leq & \E_{\sigma_1 \sim q}\left[ \hell{M(\pi)}{\Mbar(\pi)} \right]  + 6/V\nonumber,
  \end{align}
  \dfcomment{would be good to add a little more detail on the $\Dhels{\Ber(0)}{\Ber(1/V)}$ calculation}\noah{added}
  where the first equality uses convexity of the squared hellinger distance, the second inequality uses that $\til M$ is a mixture of $\til M\subs{M, v}$ with $v \sim \Unif([V])$, and the third inequality uses that $\hell{\Ber(0)}{\Ber(1/V)} \leq 2 \cdot \tvd{\Ber(0)}{\Ber(1/V)} = 2/V$. 
  Thus, it follows that
  \begin{align}
    \deccpac[\vep](\sI, \Mbar) \leq & \sup_{M \in \MM} \left\{ \E_{\sigma_1 \sim p}[\gm(\pi)] \ | \ \E_{\sigma_1 \sim q} \left[ \hell{M(\sigma_1)}{\Mbar(\sigma_1)} \right] \leq \vep^2 \right\}\nonumber\\
    \leq & \sup_{M' \in \til \MM} \left\{ 2V^{-1/2} + (1+ 2V^{-1/2}) \cdot \E_{\pi \sim \til p}[\hmprime(\pi)] \ | \ \E_{\pi \sim \til q} \left[ \hell{M'(\pi)}{\til M(\pi)} \right] \leq \vep^2 + 6/V \right\}\nonumber\\
    \leq & 2V^{-1/2} + (1 + 2V^{-1/2}) \cdot \deccpac[\vep + (6/V)^{-1/2}](\sJ, \til M)\nonumber\\
    \leq & 6V^{-1/2} + \deccpac[\vep + (6/V)^{-1/2}](\sJ, \til M)\nonumber.
  \end{align}

\paragraph{Bounding $\deccpac(\sJ)$ by $\deccpac(\sI)$}
  Next consider any reference model $\til M \in \co(\til \MM)$. We will upper bound $\deccpac[\vep](\sJ, \til M)$ by $\deccpac[\vep](\sI, \Mbar)$ for some $\Mbar \in \co(\MM)$. For some distribution $\mu \in \Delta(\MM \times [V])$, we have $\til M(\pi) = \E_{(M, v) \sim \mu}[ \til M\subs{M,v}(\pi)]$ for all $\pi \in \til\Pi$. Define $\Mbar$ by letting $\nu\in\Delta(\MM)$ to be the marginal of $\mu$ over $\MM$, and then: $\Mbar(\pi) := \E_{M \sim \nu}[M(\pi)]$ for each $\pi \in \Pi$. Choose some $p,q \in \Delta(\Pi)$ so that
  \begin{align}
\deccpac[\vep](\sI, \Mbar) = \sup_{M \in \MM} \left\{ \E_{\pi \sim p} [\gm(\pi)] \ | \ \E_{\pi \sim q}[\hell{M(\pi)}{\Mbar(\pi)}] \leq \vep^2 \right\}\nonumber.
  \end{align}
  Let $\til p \in \Delta(\til \Pi)$ be the distribution of $(\sigma_1, 0)$ where $\sigma_1 \sim p$, and $\til q \in \Delta(\til \Pi)$ be the distribution of $(\sigma_1, 0)$ where $\sigma_1 \sim q$. By definition of the models $\til M\subs{M,v}$, for any $M' = \til M\subs{M, v} \in \til \MM$, we have:
  \begin{align}
    \E_{\pi \sim \til p}[h\sups{M'}(\pi)] =& \E_{\pi \sim \til p} \left[ \sup_{\dev[1] \in \Dev[1]} \{ f\sups{M'}_1(\Sw[1](\dev[1], \pi)) - f\sups{M'}_1(\pi) \} + \sup_{\dev[2] \in \Dev[2]} \{ f\sups{M'}_2(\Sw[2](\dev[2], \pi)) - f\sups{M'}_2(\pi) \}  \right] \nonumber\\
    = & \E_{\pi \sim \til p} \left[ \sup_{\dev[2] \in \Dev[2]} \{ f\sups{M'}_2(\Sw[2](\dev[2], \pi)) \} \right]\nonumber\\
    =& \E_{\sigma_1 \sim p}[\gm(\sigma_1)] \nonumber. 
  \end{align}
  Furthermore, since $\til q$ is supported entirely on $\Sigma_1 \times \{ 0 \}$ and all models in $\til \MM$ have $r_1 = r_2 = 0$ a.s.~under such policies, it holds that
  \begin{align}
\E_{\pi \sim \til q} \left[ \hell{M'(\pi)}{\Mtil(\pi)} \right] = \E_{\sigma_1 \sim q}\left[ \hell{M(\pi)}{\Mbar (\pi)} \right]\nonumber,
  \end{align}
  which certifies that
  \begin{align}
\deccpac[\vep](\sJ, \til M) \leq \sup_{M' \in \til \MM} \left\{ \E_{\pi \sim \til p}[\hmprime(\pi)]\ | \ \E_{\pi \sim \til q} \left[ \hell{M'(\pi)}{\til M(\pi)} \right] \leq \vep^2 \right\} \leq \deccpac[\vep](\sI, \Mbar)\nonumber.
  \end{align}

  for each $\pi \in \Pi$, $\Mbar(\pi)$ to be the distribution of $\ocirc$ when $(\ocirc, r_1, r_2) \sim \til M\subs{M, v}((\pi, 0))$ and $(M, v) \sim \mu$. 
\end{proof}

\begin{proof}[\pfref{lem:ustar-hidden}]
 We denote a history drawn according to any of the distributions $\til M\subs{M, v}$ (for $v \geq 0$) by $\{ (\til \pi\^t, (\til o\^t, r_1\^t, r_2\^t)) \}_{t=1}^T$. Furthermore, we abbreviate $\til q\^t = \til q\^t(\cdot | (\til \pi^1, (\til o\^1, r_1\^t, r_2\^t)), \ldots, (\til \pi\^{t-1}, (\til o\^{t-1}, r_1\^{t-1}, r_2\^{t-1})))$ and $\til p = \til p(\cdot |  (\til \pi^1, (\til o\^1, r_1\^t, r_2\^t)), \ldots, (\til \pi\^{T-1}, (\til o\^{T-1}, r_1\^{T-1}, r_2\^{T-1}))))$. Define $M' := \til M \subs{M, 0}$ and choose
  \begin{align}
v^\st := \argmin_{v \in [V]} \left\{ \sum_{t=1}^T \E\sups{M', (\til p, \til q)} \E_{\til\pi\^t \sim \til q\^t} [\til\pi\^t_2(v)] + T \cdot  \E\sups{M', (\til p, \til q)} \E_{\wh \pi \sim \til p}[\wh \pi_2(v)] \right\}\nonumber.
  \end{align}
  Then the choice of $v^\st$ together with the fact that all $\pi \in \til \Pi$ satisfy $\sum_{v=1}^V \pi_2(v) \leq 1$
  ensures that
  \begin{align}
\sum_{t=1}^T \E\sups{M', (\til p, \til q)} \E_{\til\pi\^t \sim \til q\^t} [\til\pi\^t_2(v^\st)] + T \cdot  \E\sups{M', (\til p, \til q)} \E_{\wh \pi \sim \til p}[\wh \pi_2(v^\st)]  \leq \frac{2T}{V}\label{eq:ustar-alg-guarantee}.
  \end{align}
  Write $M'' = \til M\subs{M, v^\st}$. Next, using \cite[Lemma A.13]{foster2021statistical},\footnote{In particular, we apply this lemma to the sequence $X_1, \ldots, X_{2T}$, where, for odd values of $t$ we have $X_t = \til \pi\^t$, $X_{t+1} = (\til o\^t, r_1\^t, r_2\^t)$, and use that the conditional distribution of $\til \pi\^t$ given the history up to step $t-1$ is the same under the distributions $\BP\sups{M', (\til p, \til q)}$ and $\BP\sups{M'', (\til p, \til q)}$ since the algorithm $(\til p, \til q)$ is the same.} we have:
  \begin{align}
    \hell{\BP\sups{M', (\til p, \til q)}}{\BP\sups{M'', (\til p, \til q)}} =& O(\log T) \cdot \E\sups{M', (\til p, \til q)} \left[ \sum_{t=1}^T \hell{M'(\til \pi\^t)}{M''(\til \pi\^t)} \right]\nonumber\\
    \leq & O(\log T) \cdot \E\sups{M', (\til p, \til q)} \left[ \sum_{t=1}^T \E_{(\sigma_1\^t, \sigma_2\^t) \sim \til \pi\^t} [\hell{M'((\sigma_1\^t, \sigma_2\^t))}{M''((\sigma_1\^t, \sigma_2\^t))}] \right]\nonumber\\
    \leq &  O(\log T) \cdot \E\sups{M', (\til p, \til q)} \left[ \sum_{t=1}^T \E_{(\sigma_1\^t, \sigma_2\^t) \sim \til \pi\^t} [2 \cdot \One{\sigma_2\^t = v^\st}] \right]\nonumber\\
    \leq & O \left( \frac{T \log T}{V} \right)\nonumber,
  \end{align}
  where the final inequality uses \cref{eq:ustar-alg-guarantee}. Since total variation distance is bounded above by Hellinger distance, it follows that $\tvd{\BP\sups{M', (\til p, \til q)}}{\BP\sups{M'', (\til p, \til q)}} \leq  C \sqrt{T \log(T)/V}$ for some constant $C > 0$. Using this fact together with \cref{eq:ustar-alg-guarantee}, we see that
  \begin{align}
\E\sups{M'', (\til p, \til q)} \E_{\wh \pi \sim \til p} [\wh \pi_2(v^\st)] \leq \E\sups{M', (\til p, \til q)} \E_{\wh \pi \sim \til p} [\wh \pi_2(v^\st)] + C \sqrt{T \log(T)/V} \leq 2T/V + C \sqrt{T \log(T) / V}\nonumber,
  \end{align}
  where the second inequality above uses \cref{eq:ustar-alg-guarantee}. 
\end{proof}

\section{Proofs for \cref{sec:bounds}}

\subsection{Proofs from \cref{sec:upper}}
\label{sec:upper-proof}

\subsubsection{Further details for upper bound}
\label{sec:ub-details}

The upper bound from \pref{thm:constrained-upper-finitem} is derived by appealing to the \etdppac algorithm from \citet{foster2023tight}. In what follows, we give some background on the algorithm, as well as a more general upper bound. In brief, the \etdppac algorithm proceeds as follows: 
The algorithm uses an \emph{online estimation oracle}, denoted by $\AlgEst$ (defined formally in \cref{ass:estimation}), which is given as input a model class $\MM$ and attempts to estimate the true model $\Mstar \in \MM$ given data obtained from playing various decisions under $\Mstar$. To generate each successive datapoint at iteration $t$, which will be fed to the estimation oracle $\AlgEst$, the \etdppac algorithm solves the minimax problem in \cref{eq:decc-pm} to compute distributions $p\^t,q\^t$, where the model $\Mbar$ is set to be the output of the estimation oracle from the previous iteration. Then, a decision $\pi\^t$ is sampled from $q\^t$, and we observe the resulting observation $o\^t \sim \Mstar(\pi\^t)$. The tuple $(\pi\^t, o\^t)$ is then be fed to the estimation oracle, which produces its next estimate $\Mhat\^{t+1}$. The algorithm's output after $T$ iterations is given by a sample from one of the distributions $p\^{t^\st}$, where $t^\st \sim [T]$ is uniform. See \citet{foster2023tight} for further background.

\begin{assumption}[Estimation oracle for $\MM$]
  \label{ass:estimation}
  For each time $t \in [T]$, an online estimation oracle $\AlgEst$ for the class $\MM$ takes as input $\hist\^{t-1} = (\pi\^1, o\^1), \ldots, (\pi\^{t-1}, o\^{t-1})$ where $o\^i \sim \Mstar(\pi\^i)$ and $\pi\^i \sim q\^i$, for arbitrary (adaptive) choices of the distributions $q\^i \in \Delta(\Pi)$. Then, for some class $\wh \MM \subseteq \co(\MM)$, the oracle $\AlgEst$ returns an estimator $\wh M\^t \in \wh\MM$. We assume that if $\Mstar \in \MM$, the estimators produced by the algorithm satisfy
  \begin{align}
\EstHel \ldef  \sum_{t=1}^T \E_{\pi\^t \sim q\^t} \left[ \hell{\Mstar(\pi\^t)}{\wh M\^t(\pi\^t)} \right] \leq \Est(T, \delta)\nonumber,
  \end{align}
  with probability at least $1-\delta$, where $\Est(T, \delta)$ is a known upper bound. 
\end{assumption}
For most estimation oracles, the class $\wh \MM$ in \cref{ass:estimation} will be $\co(\MM)$, though in some cases it is possible to take it to be smaller (see \cref{prop:mg-risk} for an example). 
\begin{theorem}[\cite{foster2023tight}, Theorem 3.1;  Upper bound for \FrameworkShort]
  \label{thm:constrained-upper}
  Fix $\delta \in \left(0,\frac{1}{10}\right)$ and $T \in \BN$, and consider any instance $\sI = \instpm$. Suppose that \cref{ass:realizability,ass:estimation} hold for the model class $\MM$ and some class $\wh \MM \subseteq \co(\MM)$, and let $\EstBar := \Est \left( \frac{2T}{\lceil \log 2/\delta \rceil}, \frac{\delta}{4 \lceil \log 2/\delta\rceil} \right)$. Letting $\vepsupperT := 8 \sqrt{\frac{\lceil \log 2/\delta\rceil}{T} \cdot \EstBar}$, \etdppac, with access to the oracle $\AlgEst$,  guarantees that with probability at least $1-\delta$,
  \begin{align}
    \RiskDM  \leq %
\sup_{\Mbar \in \wh\MM}    \deccpac[\vepsupperT](\sI, \Mbar) \leq 
    \deccpac[\vepsupperT](\sI)\nonumber.
  \end{align}
  If further $\fm(\cdot) \in [0,R]$ for all $M \in \MM$ and some $R > 0$, then the expected risk is bounded as $\E[\RiskDM] \leq \deccpac[\vepsupperT](\sI) + R \delta$. 
\end{theorem}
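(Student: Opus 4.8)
The plan is to treat this statement as essentially a restatement of \citet[Theorem 3.1]{foster2023tight}, whose analysis I would verify transfers verbatim to the hidden-reward setting. The only structural difference between \FrameworkShort and the \dmso framework is that the learner does not observe the realized rewards $r\^1,\dots,r\^T$, so the entire content of the proof is the observation that neither the \etdppac algorithm nor its analysis ever references these rewards. First I would recall the algorithm as described in \cref{sec:ub-details}: at each round $t$ it sets the reference model to the estimation-oracle output $\wh M\^t$, solves the min-max program defining $\deccpac[\vep](\sI,\wh M\^t)$ from \cref{eq:decc-pm} to obtain $p\^t,q\^t$, plays $\pi\^t\sim q\^t$, feeds the pair $(\pi\^t,o\^t)$ with $o\^t\sim\Mstar(\pi\^t)$ to $\AlgEst$, and finally outputs $\wh\pi\sim p\^{t^\st}$ for $t^\st$ uniform on $[T]$. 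Crucially, every quantity the algorithm manipulates---the value functions $\fm(\cdot)$ (known a priori from the instance $\sI$), the observations $o\^t$, and the Hellinger distances $\hell{M(\pi)}{\wh M\^t(\pi)}$ between observation laws---is available without observing rewards.

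Next I would reproduce the risk decomposition. Since $\wh\pi$ is drawn from $p\^{t^\st}$ with $t^\st$ uniform, in expectation the risk equals $\frac1T\sum_{t=1}^T\En_{\pi\sim p\^t}[\gmstar(\pi)]$. For each round, the definition of the constrained DEC yields the per-round bound $\En_{\pi\sim p\^t}[\gmstar(\pi)]\le\deccpac[\vep](\sI,\wh M\^t)$, \emph{provided} $\Mstar$ lies in the Hellinger ball $\MH_{q\^t,\vep}(\wh M\^t)$, i.e.\ $\En_{\pi\sim q\^t}[\hell{\Mstar(\pi)}{\wh M\^t(\pi)}]\le\vep^2$. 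To certify this constraint on average, I would invoke the oracle guarantee \cref{ass:estimation}, which controls the cumulative estimation error $\sum_{t}\En_{\pi\sim q\^t}[\hell{\Mstar(\pi)}{\wh M\^t(\pi)}]\le\Est(T,\delta)$ with probability $1-\delta$, and choose the scale $\vep=\vepsupperT$ so that $\vep^2 T\asymp\EstBar$. A Markov/averaging argument over the rounds then shows that the per-round DEC bound applies on average, and $\deccpac[\vep](\sI,\wh M\^t)\le\sup_{\Mbar\in\wh\MM}\deccpac[\vep](\sI,\Mbar)$ holds since $\wh M\^t\in\wh\MM$, giving the two inequalities in the statement.

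The main obstacle---and the reason for the intricate constants $\vepsupperT$ and $\EstBar=\Est(2T/\lceil\log2/\delta\rceil,\delta/4\lceil\log2/\delta\rceil)$---is converting the \emph{cumulative} Hellinger guarantee into the \emph{per-round, high-probability} constraint $\En_{\pi\sim q\^t}[\hell{\Mstar(\pi)}{\wh M\^t(\pi)}]\le\vep^2$ needed before the constrained DEC can be applied pointwise. This is handled in \citet{foster2023tight} by running the estimator in blocks of length $\lceil\log2/\delta\rceil$ together with a union bound, which accounts for the logarithmic factor; I would import this block structure directly rather than reprove it. The expected-risk refinement under $\fm(\cdot)\in[0,R]$ then follows since the high-probability event fails with probability at most $\delta$, on which the risk is at most $R$, contributing the additive $R\delta$. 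Finally, I would re-emphasize the single point where hidden rewards could in principle matter: the oracle's error is measured purely by Hellinger distance between the \emph{observation} kernels $M(\pi)$ over $\MO$, which do not encode reward feedback, and the value functions $\fm$ are given as part of $\sI$ rather than estimated from rewards; hence the oracle and the whole pipeline are agnostic to whether rewards are observed, completing the reduction.
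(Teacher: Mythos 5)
Your proposal matches the paper's treatment exactly: the paper does not reprove this theorem but cites \citet[Theorem 3.1]{foster2023tight} and justifies the transfer to \FrameworkShort by the same observation you make—that neither \etdppac nor its analysis ever uses the realized rewards, only the observation kernels, the known value functions $\fm$, and the Hellinger-based oracle guarantee—together with the remark that the proof goes through unchanged for an arbitrary $\wh\MM\subseteq\co(\MM)$. Your additional sketch of the risk decomposition and the blocking argument is a faithful summary of the imported proof rather than a different route, so there is nothing further to compare.
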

We remark that \cref{thm:constrained-upper} is only stated in \cite{foster2023tight} for the case $\wh \MM = \co(\MM)$, but an inspection of the proof shows that  the same guarantee holds for an arbitrary subclass $\wh \MM \subseteq \co(\MM)$ in which $\AlgEst$ produces its predictions (with no modifications to the proof being necessary). 

\cref{thm:constrained-upper-finitem} follows from \cref{thm:constrained-upper} by noting that there exists an estimation oracle with $\Est(T,\delta)\leq 2\log(\abs{\cM}/\delta)$ for finite classes \citep{foster2023tight}.

\begin{remark}[Analogue for \MAFrameworkShort]
  \label{rem:ma-hr-rescaling}
Using the transformation of \cref{prop:ma-to-pm} (which does not change the model class of the instance, and therefore preserves estimation error guarantees), there is an analogue of \cref{thm:constrained-upper} for the multi-agent setting. In particular, for any instance $\sJ$ of \MAFrameworkShort, under \cref{ass:realizability-ma,ass:estimation}, there is an algorithm that ensures with probability $1-\delta$, $\RiskDM \leq \deccpac[\vepsupperT](\sJ)$. %
\end{remark}

\paragraph{Infinite model classes} As some of our applications in \cref{app:ma_examples} involve infinite model classes $\MM$, we next describe a simple way to bound the estimation error $\Est(T, \delta)$ for such classes, following the approach in \cite{foster2021statistical}.
\begin{definition}[Model class cover; \cite{foster2021statistical}, Definition 3.2]
  \label{def:model-cover}
  A model class $\MM' \subseteq \MM$ is an \emph{$\vep$-cover} for $\MM$ if for all $M \in \MM$, there is $M' \in \MM'$ so that $\sup_{\pi \in \Pi} \hell{M(\pi)}{M'(\pi)} \leq \vep^2$. Let $\MN(\MM, \vep)$ denote the size of the smallest such cover $\MM'$, and define
  \begin{align}
\est(\MM, T) := \inf_{\vep \geq 0} \left\{ \log \MN(\MM, \vep) + \vep^2 T \right\}\nonumber.
  \end{align}
\end{definition}
We will bound the estimation error $\Est(T, \delta)$ for a model class in terms of the quantity $\est(\MM, T)$; to do so, we need the following mild assumption.
\begin{assumption}
  \label{ass:kernel-B}
  Suppose that there is a kernel $\nu$ from $(\Pi, \sP)$ to $(\MO, \sO)$ so that $M(\pi) \ll \nu(\pi)$ for all $M \in \MM, \pi \in \Pi$, and let $m\sups{M}(\cdot | \pi)$ denote the density of $M(\cdot | \pi)$ with respect to $\nu(\cdot | \pi)$. Furthermore, suppose there is a constant $B \geq e$ so that
  \begin{enumerate}
  \item $\nu(\MO | \pi ) \leq B$ for all $\pi \in \Pi$.
  \item $\sup_{\pi \in \Pi} \sup_{o \in \MO} m\sups{M}(o | \pi) \leq B$ for all $M \in \MM$.
  \end{enumerate}
\end{assumption}
\cref{prop:estimation-infinite} below shows that the estimation error $\Est(T, \delta)$ scales with $\log B$. This quantity is typically small: for instance, it is a constant for standard multi-armed bandit problems (e.g., Bernoulli bandits and Gaussian bandits), and is polylogarithmic in the size of the state and action spaces for reinforcement learning problems with finite state and action spaces.
\begin{proposition}[Lemma A.16 of \cite{foster2021statistical}]
  \label{prop:estimation-infinite}
  Suppose \cref{ass:kernel-B} holds. Fix $T \in \bbN, \delta \in (0,e^{-1})$, and write $b_T = \log(2B^2 T)$.  Then there is an algorithm $\AlgEst$ that guarantees that, with probability $1-\delta$, we have
  \begin{align}
\Est(T) \leq O(b_T \cdot \est(\MM, T) + b_T^2 \log(\delta^{-1}))\nonumber,
  \end{align}
  i.e., we can take $\Est(T, \delta) = C \cdot (b_T \cdot \est(\MM, T) + b_T^2 \log(\delta^{-1}))$ for some universal constant $C$. 
\end{proposition}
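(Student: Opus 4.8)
The plan is to construct the oracle $\AlgEst$ by combining a covering argument with an exponential-weights (Bayesian mixture) forecaster for the logarithmic loss, and then to convert the resulting log-loss regret bound into a cumulative Hellinger guarantee by way of \cref{lem:chernoff-martingale}. First I would fix a scale $\vep > 0$ and let $\MM_\vep \subseteq \MM$ be a minimal $\vep$-cover in the sense of \cref{def:model-cover}, so that $|\MM_\vep| = \MN(\MM, \vep)$ and every $M \in \MM$ has a representative $M_\vep \in \MM_\vep$ with $\sup_\pi \Dhels{M(\pi)}{M_\vep(\pi)} \leq \vep^2$. The oracle maintains weights $w^t(M') \propto \prod_{s<t} \densm[M'](o^s \mid \pi^s)$ over $M' \in \MM_\vep$ (initialized uniformly) and outputs the mixture $\hM^t$ with density $\densm[\hM^t](\cdot \mid \pi) = \sum_{M'} w^t(M')\,\densm[M'](\cdot \mid \pi)$; note $\hM^t \in \co(\MM_\vep) \subseteq \co(\MM)$, consistent with the subclass $\wh\MM$ permitted in \cref{ass:estimation}. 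A standard telescoping/potential argument for the aggregating forecaster then gives, deterministically along the observed sequence,
\[
\sum_{t=1}^T -\log \densm[\hM^t](o^t \mid \pi^t) \;\le\; \min_{M' \in \MM_\vep} \sum_{t=1}^T -\log \densm[M'](o^t \mid \pi^t) + \log \MN(\MM, \vep).
\]

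The second step converts log loss into Hellinger, and is where \cref{ass:kernel-B} and the factor $b_T = \log(2B^2 T)$ enter. I would apply \cref{lem:chernoff-martingale} to $X_t = \tfrac12 \log\!\big(\densm[\hM^t](o^t \mid \pi^t)/\densm[\Mstar](o^t \mid \pi^t)\big)$, using the affinity identity $\E_{o \sim \Mstar(\pi)}\big[\sqrt{\densm[\hM^t](o\mid\pi)/\densm[\Mstar](o\mid\pi)}\big] = 1 - \tfrac12 \Dhels{\Mstar(\pi)}{\hM^t(\pi)}$ to evaluate the conditional moments $\log \E[e^{X_t}\mid \CF\^{t-1}] \le -\tfrac12\Dhels{\Mstar(\pi^t)}{\hM^t(\pi^t)}$. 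This yields, with probability $1-\delta$,
\[
\sum_{t=1}^T \Dhels{\Mstar(\pi^t)}{\hM^t(\pi^t)} \;\le\; \sum_{t=1}^T \log \frac{\densm[\Mstar](o^t \mid \pi^t)}{\densm[\hM^t](o^t \mid \pi^t)} + 2\log(1/\delta).
\]
Plugging in the regret bound from the first step reduces the right-hand side to the excess log loss of the cover representative, $\sum_t \log\big(\densm[\Mstar](o^t\mid\pi^t)/\densm[M_\vep](o^t\mid\pi^t)\big)$, plus $\log\MN(\MM,\vep) + 2\log(1/\delta)$. A further application of \cref{lem:chernoff-martingale} (with a truncated log-ratio exponent) controls this excess loss in high probability by $b_T\cdot\sum_t \Dkl{\Mstar(\pi^t)}{M_\vep(\pi^t)} \lesssim b_T\,\vep^2 T$, where \cref{ass:kernel-B} bounds the density ratios and thereby both the $\Dklshort$-versus-$\Dhelshort$ inflation and the truncation level $b_T$. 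Collecting terms gives $\EstHel \lesssim b_T\big(\log\MN(\MM,\vep) + \vep^2 T\big) + b_T^2 \log(1/\delta)$.

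Finally, taking the infimum over $\vep \ge 0$ replaces $\log\MN(\MM,\vep)+\vep^2 T$ by $\est(\MM,T)$ per \cref{def:model-cover}, yielding $\Est(T,\delta) = C\,(b_T\,\est(\MM,T) + b_T^2\log(1/\delta))$ as claimed. The main obstacle is the second step: the clean regret bound lives over the finite \emph{cover} with a bounded log loss, but the target error is measured in Hellinger distance against the true model $\Mstar \notin \MM_\vep$. One must therefore simultaneously (i) evaluate the exponential moments in \cref{lem:chernoff-martingale} despite unbounded log-density ratios — which forces the truncation responsible for the $b_T = \log(2B^2T)$ factors — and (ii) transfer the cover approximation error into the Hellinger accounting without reintroducing any dependence on $|\MM|$. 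Extracting both the leading $b_T\cdot\est(\MM,T)$ term and the sub-Gaussian tail term $b_T^2\log(1/\delta)$ from these two coupled martingale arguments is the delicate part; the covering reduction and the potential argument are routine bookkeeping by comparison.
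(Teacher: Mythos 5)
Your proposal is correct and follows essentially the same route as the cited source: the paper does not reprove this statement but imports it verbatim as Lemma A.16 of \citet{foster2021statistical}, whose proof is exactly your combination of an $\vep$-cover, the aggregating (exponential-weights) forecaster under log loss, the martingale conversion from log-loss regret to cumulative squared Hellinger error via the affinity identity, and the bounded-density-ratio control of the misspecification term that produces the $b_T$ factors. The only detail worth flagging is that $\EstHel$ in \cref{ass:estimation} involves $\E_{\pi\^t\sim q\^t}[\cdot]$ rather than the realized decisions, but your martingale step already conditions on $\filt\^{t-1}$ and so absorbs this average without change.
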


\subsubsection{Proof of \cref{thm:constrained-lower}}
\begin{proof}[\pfref{thm:constrained-lower}]
  Fix $T \in \BN$ and an algorithm $(p,q) = \{ q\^t(\cdot | \cdot), p(\cdot | \cdot) \}_{t=1}^T$. For each model $M \in \cMall$, we use the abreviation $\BP\sups{M} \equiv \BP\sups{M, (p,q)}$, and write $\E\sups{M}$ for the corresponding expectation. %
  We also define
  \begin{align}
p\subs{M} = \E\sups{M}[p(\cdot | \hist\^T)], \qquad q\subs{M} = \E\sups{M}\left[ \frac{1}{T} \sum_{t=1}^T q\^t(\cdot | \hist\^{t-1}) \right]\nonumber.
  \end{align}
  Choose $\vepslowerT$ as in the theorem statement, and write $\vep = \vepslowerT$.  Choose $\Mbar \in \co(\MM)$ so that $\deccpac[\vep](\MM) = \deccpac[\vep](\MM, \Mbar)$.\footnote{If the supremum over $\Mbar$ is not achievable, then we may apply the argument that follows for a sequence that achieves the supremum.} We will prove a lower bound on the expected risk in terms of $\deccpac[\vep](\MM, \Mbar)$. 
  Define
  \begin{align}
M := \argmax_{M \in \MM} \left\{ \E_{\pi \sim \pmbar}[\gm(\pi)] \ | \ \E_{\pi \sim \qmbar}[\hell{M(\pi)}{\Mbar(\pi)}] \leq \vep^2 \right\}\nonumber,
  \end{align}
  where we recall that $C(T) := \log(T \wedge \abscont)$. Note that if the $\MH_{\qmbar, \vep}(\Mbar) = \emptyset$, then by definition $\deccpac(\MM, \Mbar) = 0$ and the result follows. Thus, we may assume that $\MH_{\qmbar, \vep}(\Mbar) \neq \emptyset$, and hence the choice of $M$ above is well-defined. Furthermore, the choice of $M$ ensures that
  \begin{align}
    \label{eq:pmbar-gm-lb}
    \E_{\pi \sim \pmbar}[\gm(\pi)] \geq \deccpac(\MM, \Mbar) = \deccpac(\MM).
  \end{align}

  By Lemma A.13 in \cite{foster2021statistical}, we have\footnote{In order to apply this result, we need to ensure that for all measurable sets $\ME \subseteq \MO$ and all $\pi \in \Pi$, we have $\frac{\Mbar(\ME | \pi)}{M(\ME | \pi)} \leq \abscont$. This follows from the definition of $\abscont$ in \cref{eq:define-abscont} and the fact that $\Mbar \in \co(\MM)$.}
  \begin{align}
\hell{\BP\sups{M}}{\BP\sups{\Mbar}} \leq C(T) \cdot T \cdot \E_{\pi \sim \qmbar} [\hell{M(\pi)}{\Mbar(\pi)}] \leq C(T) \cdot T \cdot \vep^2 \nonumber.
  \end{align}
  Using the data processing inequality, it follows that
  \begin{align}
\hell{\pm}{\pmbar} \leq C(T) \cdot T \cdot \vep^2 \leq \frac{1}{8R} \cdot \deccpac(\MM)\label{eq:pm-pmbar-hellub},
  \end{align}
  where the second inequality follows from the choice of $\vep = \vepslowerT$.

  Next, using Lemma A.11 in \cite{foster2021statistical} and the fact that $\gm(\pi) \in [0,R]$ for all $\pi$, we have
  \begin{align}
\E_{\pi \sim \pmbar}[\gm(\pi)] \leq & 3 \cdot \E_{\pi \sim \pm}[\gm(\pi)] + 4R \cdot \hell{\pm}{\pmbar}\nonumber.
  \end{align}
  Combining the above display with \cref{eq:pmbar-gm-lb} and \cref{eq:pm-pmbar-hellub} and rearranging, we see that
  \begin{align}
\frac{1}{6} \cdot \deccpac(\MM) \leq \E_{\pi \sim \pm}[\gm(\pi)] = \E\sups{M} \E_{\pi \sim p(\cdot | \hist\^T)}[\gm(\pi)] = \E\sups{M}[\RiskDM]\nonumber,
  \end{align}
  which gives the desired lower bound on expected risk.
  
\end{proof}

\subsection{Proofs from \cref{sec:gap}}

\subsubsection{Proof of \creftitle{prop:gap-bounding}}

\begin{proof}[\pfref{prop:gap-bounding}]
  Define $\Delta := \frac{\deccpac[\vepsupperT](\MM)}{8 \cdot \vepsupperT^2 \cdot C(T) \cdot T}$. If $\Delta \geq 1$, then we have $\vepslowerT \geq \vepsupperT$, so we may assume from here on that $\Delta < 1$. Choose
  \begin{align}
\alpha = \left\lceil \frac{\log 1/\Delta}{2 \log( \Creg/\creg)} \right\rceil \geq 1\nonumber,
  \end{align}
  which in particular is the smallest positive integer so that $(\Creg^2 / \creg^2)^\alpha \geq 1/\Delta$. Such $\alpha$ is well-defined by our assumption that $\Creg > \creg$ and since $1/\Delta > 1$. Applying \cref{ass:regularity} to $\vep = \vepsupperT \cdot (\creg/\Creg)^j$ for $0 \leq j < \alpha$, it follows that
  \begin{align}
    \deccpac[\vepsupperT](\MM) \leq & \creg^{2\alpha} \cdot \deccpac[\vepsupperT/\Creg^\alpha](\MM) \nonumber\\
    \leq & \Delta \cdot \Creg^{2\alpha} \cdot \deccpac[\vepsupperT/\Creg^\alpha](\MM) \nonumber\\
    \leq & \deccpac[\vepsupperT](\MM) \cdot \frac{\deccpac[\vepsupperT / \Creg^\alpha](\MM)}{8 \cdot (\vepsupperT / \Creg^\alpha)^2 \cdot C(T) \cdot T}\nonumber.
  \end{align}
  Hence $\vepslowerT \geq \vepsupperT / \Creg^\alpha$, and so
  \begin{align}
\deccpac[\vepslowerT](\MM) \geq \deccpac[\vepsupperT/\Creg^\alpha](\MM) \geq \frac{1}{\creg^{2\alpha}} \cdot \deccpac[\vepsupperT](\MM)\nonumber.
  \end{align}
  The definition of $\alpha$ and $\vepsupperT$ gives that
  \begin{align}
    \left( \frac{\Creg}{\creg} \right)^{2\alpha} \leq &  \frac{\Creg}{\creg} \cdot \frac{1}{\Delta} \leq \frac{\Creg}{\creg} \cdot \frac{ 8^3 \cdot \lceil \log 2/\delta \rceil \cdot \EstBar \cdot C(T)}{\deccpac[\vepsupperT](\MM)}\nonumber.
  \end{align}
  Our definition of $\beta$ ensures that $\creg^{2\alpha} \leq ((\Creg/\creg)^{2\alpha})^\beta$, meaning that, for some constant $C$,
  \begin{align}
\deccpac[\vepsupperT](\MM) \leq \creg^{2\alpha} \cdot \deccpac[\vepslowerT](\MM) \leq (C\cdot\Creg/\creg)^\beta \cdot \log^\beta 1/\delta \cdot \EstBar^\beta \cdot C(T)^\beta \cdot \deccpac[\vepsupperT](\MM)^{-\beta} \cdot \deccpac[\vepslowerT](\MM)\nonumber,
  \end{align}
  and rearranging yields:
  \begin{align}
\deccpac[\vepsupperT](\MM)\leq \left( C \log 1/\delta \cdot \EstBar \cdot C(T) \cdot \Creg/\creg\right)^{\frac{\beta}{1+\beta}} \cdot \deccpac[\vepslowerT](\MM)^{\frac{1}{1+\beta}}\nonumber.
  \end{align}
\end{proof}

\subsubsection{Proof of \creftitle{prop:gap-ub-easy} and \creftitle{prop:gap-inherent}}

\begin{proof}[\pfref{prop:gap-ub-easy}]
  We set $\Pi = [A]$ and $\MO = \{0,1\}$. For each $\delta > 0$, we define a model class $\MM_\delta \subset (\Pi \ra \Delta(\MO))$, as follows: $\MM_\delta = \{ M_{\delta, a}  : \ a \in [A]\}$, and define $M_{\delta, a}(\pi) = \Ber(1/2 + \delta \One{\pi = a })$. We now set $\MM = \bigcup_{i=2}^{L} \MM_{2^{-i}}$, from which it follows that $|\MM| \leq LA$. Define $\fm(\pi) = \E\sups{M, \pi}[r]$, where $r \sim M(\pi)$. Finally set $\sI = \instpm$. Note that the instance $\sI$ is actually a standard (non-hidden reward) DMSO instance in the sense of \citet{foster2021statistical}.

  Since the model class $\MM$ is a subclass of the class of all $A$-armed bandit problems, we have from Proposition 5.1 of \cite{foster2021statistical} and \cref{prop:constrained-to-offset} (which applies identically to \FrameworkShort instances in addition to \MAFrameworkShort instances) that $\deccpac(\sI) \leq O(\vep \sqrt{A})$. Furthermore, we have $\mf M(\sI, T) \leq O(\sqrt{TA})$ \citep{audibert2009minimax} (up to logarithmic factors, this bound is also a consequence of, e.g., \cref{thm:constrained-upper-finitem}).

  For each $\delta > 0$, write $\sI_\delta := (\MM_\delta, \Pi, \MO, \{ \fm(\cdot ) \}\subs{M})$. Also write $\Mbar_\delta = \frac{1}{A} \sum_{a=1}^A M_{\delta, a} \in \co(\MM_\delta)$. Since for all $\pi \in \Pi$ and $a \in [A]$, $\hell{\Mbar_\delta(\pi)}{M_{\delta, a}(\pi)} \leq 4\delta^2$, it is straightforward to see that $\deccpac[4\delta/\sqrt{A}](\sI_\delta) \geq \deccpac[4\delta/\sqrt{A}](\sI_\delta, \Mbar_\delta) \geq \Omega(\delta)$. Since increasing the size of the model class cannot decrease the DEC, it follows that, for all $\vep$ satisfying $1/\sqrt{A} > \vep > 2^{-L}$, $\deccpac[\vep](\sI) \geq \Omega(\vep \sqrt{A})$. Finally, since the rewards are observed in the instance $\sI$, we can use Theorem 2.1 of \cite{foster2023tight} to conclude that for $A$ at least some sufficiently large constant, and $T \leq 2^{L/2}$, $\mf M(\sI, T) \geq \Omega(\sqrt{A/T})$. 
\end{proof}

\begin{proof}[\pfref{prop:gap-inherent}]
Fix $L$ to be larger than some universal constant (whose value will be specified below), and consider any value for a constant $\Cscale \geq 1$. We define the following instance $\sI = (\MM, \Pi, \MO, \{ \fm(\cdot ) \}_{M \in \MM})$, with the individual components defined as follows:
  \begin{itemize}
  \item For $1 \leq \ell \leq L$, define $\alpha_\ell := 1/L$, $N_\ell = 2^\ell$, and $\delta_\ell = \frac{1}{(\Cscale N_\ell)^2}$. 
  \item Let $\MV := \prod_{\ell=1}^L [N_\ell]$, and set $\Pi := \MV$.
  \item Let $\MO = \prod_{\ell=1}^L ([N_\ell] \cup \{\perp_\ell\})$. For ease of notation we write $\MO_\ell := [N_\ell] \cup \{ \perp_\ell\}$.
  \item For $o_\ell \in \MO_\ell$ and $v_\ell \in [N_\ell]$, define
    \begin{align}
      P_{v_\ell}(o_\ell) := \begin{cases}
        1 - \delta(N_\ell-1) &: o_\ell = \perp_\ell\\
        \delta_\ell &: o_\ell \in [N_\ell] \backslash \{ v_\ell\} \\
        0 &: o_\ell = v_\ell.
      \end{cases}\nonumber
    \end{align}
    Then $P_{v_\ell} \in \Delta(\MO_\ell)$. 
  \item The class $\MM$ is indexed by tuples $v \in \MV$; in particular, for each $v = (v_1, \ldots, v_\ell) \in \MV$, there is a model $M_v$, defined as follows. For $\pi \in \Pi$, $M_v(\pi) \in \Delta(\MO)$ is the following distribution which does not depend on $\pi$: for $o = (o_1, \ldots, o_L) \in \MO$, 
    \begin{align}
M_v(\pi)(o) = \prod_{\ell=1}^L P_{v_\ell}(o_\ell)\nonumber.
    \end{align}
    Since the distribuiton $M_v(\pi)$ does not depend on $\pi$, we will often drop the argument $\pi$ and simply write $M_v \in \Delta(\MO)$. Accordingly, the Hellinger distance between observation distributions of two models $M, M' \in \co(\MM)$ will be denoted by $\hell{M}{M'}$. 
  \item For all $v \in \MV$ and $\pi = (\pi_1, \ldots, \pi_L) \in \Pi$, the value function $f : \Pi \ra [0,1]$ is defined as follows:
    \begin{align}
f\sups{M_v}(\pi) := \sum_{\ell=1}^L \alpha_\ell \cdot \left( 1 - \One{\pi_\ell = v_\ell} \right)\nonumber.
    \end{align}
    For convenience we write $f_\ell\sups{M_v}(\pi) := (1 - \One{\pi_\ell = v_\ell})$, so that $f\sups{M_v}(\pi) = \sum_{\ell=1}^L \alpha_\ell \cdot f_\ell\sups{M_v}(\pi)$. It is clear that for all $v \in \MV$ there is some $\pi$ (namely, any $\pi$ so that $\pi_\ell \neq v_\ell$ for all $\ell$) for which $f\sups{M_v}(\pi) = 1$, meaning that $g\sups{M_v}(\pi) = 1 - f\sups{M_v}(\pi)$. 
  \end{itemize}

  \paragraph{Upper bounding the minimax sample complexity}
  Fix some $T \in \BN$; we next upper bound $\mf M(\sI, T)$. Since the distribution over observations for all models in the class $\MM$ does not depend on the decision, to specify an algorithm $(p,q)$ we need only to specify the distribution $p$, which is a mapping from $T$-tuples of observations to distributions over decisions. To define $p$, we first define mappings $p_\ell : \MO_\ell^T \ra [N_\ell]$, as follows:
  \begin{align}
    p_\ell(o_{\ell,1}, \ldots, o_{\ell,T}) := \begin{cases}
      \Unif(\Pi) &: o_{\ell,1} = \cdots = o_{\ell, T} = \perp_\ell, \\
      \indic_{o_{\ell,t}} &: t \ldef \argmin \{ s \in [T] \ | \ o_{\ell, s} \neq \perp_\ell\}\; \text{exists}.
    \end{cases}\nonumber
  \end{align}
  In particular, $p_\ell$ outputs the first index of an observation which is not $\perp_\ell$; if no such index exists, then $p_\ell$ outputs the uniform distribution over $[N_\ell]$. Now we define
  \begin{align}
p(o_1, \ldots, o_T) := \left( p_\ell(o_{\ell,1}, \ldots, o_{\ell, T}) \right)_{\ell=1}^L \nonumber,
  \end{align}
  where we have written $o_t = (o_{1,t}, \ldots, o_{L,t})$ for each $T \in [T]$. 

  We now upper bound the risk of the algorithm $p$. We abbreviate the distribution over histories under a given model $M_v \in \MM$ by $\BP\sups{M_v}(\cdot)$, and write $\E\sups{M_v}[\cdot]$ for the corresponding expectation. For each $\ell \in [L]$, we have, for all $M_v \in \MM$,
  \begin{align}
\E\sups{M_v} \E_{\pi \sim p(o_1, \ldots, o_T)} \left[1 - f_\ell\sups{M_v}(\pi) \right] \leq (1 - \delta_\ell (N_\ell - 1))^T \cdot \frac{1}{N_\ell}\nonumber,
  \end{align}
  \dfcomment{Isn't $(1-\delta_\ell(N_\ell-1))^T$ the probability that \emph{no such $t$ exists?}}\noah{yep, fixed} 
  since the probability that there is no $t\in [T]$ so that $o_{\ell,t} \neq \perp_\ell$ is $(1-\delta_\ell(N_\ell-1))^T$, and on the complement of this event (so that such $t$ exists), $p_\ell(o_1, \ldots, o_T)$ puts all its mass on such $o_{\ell,t} \neq v_\ell$, so that $f_\ell\sups{M_v}(\pi)=1$. Hence
  \begin{align}
\E\sups{M_v} \E_{\pi \sim p(o_1, \ldots, o_T)} [g\sups{M_v}(\pi)] \leq \sum_{\ell=1}^L \frac{\alpha_\ell}{N_\ell} \cdot (1-\delta_\ell (N_\ell-1))^T \leq \sum_{\ell=1}^L \frac{\alpha_\ell}{N_\ell} \cdot \left( 1 - \frac{1}{2\Cscale^2 N_\ell} \right)^T.\nonumber
  \end{align}
  Given $T \leq N_L$, choose $\ell_\st = \ell_\st(T) \in [L]$ as large as possible so that $T \geq 2 \log(N_{\ell_\st}) \cdot 2\Cscale^2 N_{\ell_\st}$, which gives \arxiv{\noah{using $\alpha_\ell$ def mildly}}
  \begin{align}
    \E\sups{M_v} \E_{\pi \sim p(o_1, \ldots, o_T)} [g\sups{M_v}(\pi)] \leq & \sum_{\ell=1}^{\ell_\st} \frac{\alpha_\ell}{N_\ell} \cdot \exp \left( -\frac{2 \log N_{\ell_\st}}{T} \right)^T + \sum_{\ell=\ell_\st + 1}^L \frac{\alpha_\ell}{N_\ell} \nonumber\\
    \leq & \sum_{\ell=1}^{\ell_\st} \frac{\alpha_\ell}{N_\ell} \cdot \frac{1}{N_{\ell_\st}^2} + \sum_{\ell=\ell_\st+1}^L \frac{\alpha_\ell}{N_\ell} \leq \frac{1}{N_{\ell_\st}} \leq \frac{8\Cscale^2 \log T}{T}\nonumber,
  \end{align}
where the final inequality uses that our choice of $\ell_\st$ gives that $N_{\ell_\st} \geq \frac{T}{8\Cscale^2 \log T}$,

  \paragraph{Lower bounding the DEC} By the tensorization property of the squared Hellinger distance, we have, for any two models $M_v, M_u \in \MM$, %
  \begin{align}
    1 - \frac 12 \hell{M_u}{M_v} = &\prod_{\ell=1}^L \left(1 - \frac 12 \hell{P_{v_\ell}}{P_{u_\ell}} \right) = \prod_{\ell=1}^L \left( 1 - \frac 12 \cdot \One{u_\ell \neq v_\ell} \cdot 2\delta_\ell \right) \geq 1 - \sum_{\ell=1}^L \delta_\ell \cdot \One{u_\ell \neq v_\ell}\nonumber,
  \end{align}
  which implies that $\hell{M_u}{M_v} \leq 2 \sum_{\ell=1}^L \delta_\ell \cdot \One{u_\ell \neq v_\ell}$. Let $\wb{v} := (1, 1, \ldots, 1) \in \MV$, and set $\Mbar := M_{\wb{v}}$.

  Now consider any $2 \geq \vep \geq \sqrt{2\delta_L}$. Choose $\ell^\st =\ell^\st(\vep)$ to be the smallest possible value of $\ell \in [L]$ so that $\vep^2 \geq 2\delta_\ell$. For each $i \in [N_{\ell^\st}]$, define $\wb{v}^i \in \MV$ by:
  \begin{align}
    \wb{v}^i_\ell = \begin{cases}
      \wb{v}_\ell &: \ell \neq \ell^\st \\
      i &: \ell = \ell^\st,
    \end{cases}\nonumber
  \end{align}
  and write $M^i := M_{\wb{v}^i}$. Then for all $i \in [N_{\ell^\st}]$, we have $\hell{\Mbar}{ M^i} \leq 2 \delta_{\ell^\st} \leq \vep^2$. For any distribution $p \in \Delta(\Pi)$, there must be some $i^\st \in [N_{\ell^\st}]$ so that
  \begin{align}
\E_{\pi \sim p}[1 - f\sups{M^{i^\st}}_{\ell^\st}(\pi)] =    \BP_{\pi \sim p}(\pi_{\ell^\st} = i^\st) \geq 1/N_{\ell^\st}. \nonumber
  \end{align}
  Therefore, \arxiv{\noah{using $\alpha$ defn}}
\begin{align}
  \deccpac[\vep](\MM, \Mbar) \geq \frac{\alpha_{\ell^\st}}{N_{\ell^\st}} = \alpha_{\ell^\st} \Cscale \sqrt{\delta_{\ell^\st}} \geq \frac{\alpha_{\ell^\st} \Cscale}{\sqrt{8}} \cdot \vep = \frac{\Cscale}{\sqrt{8} \cdot L} \cdot \vep\label{eq:dec-alphastar-lb},
\end{align}
where the final inequality uses that $\vep^2 \leq 8\delta_\ell$ since 
$\delta_{\ell+1} = \delta_\ell/4$ for all $\ell < L$ and $\vep \leq 2$.

\paragraph{Upper bounding the DEC}
Next we upper bound $\deccpac[\vep](\MM)$ for $\vep \in (0,2)$; while not necessary for lower bounding $\vepslowerT$, an upper bound on the $\deccpac(\MM)$ serves to ensure that the class $\MM$ satisfies the regularity condition of \cref{ass:regularity}. This certifies that the instance $\sI$ we construct satisfies the assumptions that we use to upper and lower bounding minimax risk in terms of the DEC.

Consider any $\Mbar \in \co(\MM)$. We can write $\Mbar = \E_{v \sim \mu}[M_v]$ for some distribution $\mu \in \Delta(\MV)$. For each $\ell \in [L]$, let $\mu_\ell \in \Delta([N_\ell])$ be the marginal of $\mu$ on $[N_\ell]$ (recall that $\MV = \prod_{\ell=1}^L [N_\ell]$). Since $\hell{P_{u_\ell}}{P_{v_\ell}} = 2\delta_\ell$ for $u_\ell \neq v_\ell$, any two distinct values $v_\ell, v_\ell' \in [N_\ell]$ satisfying $\hell{\E_{u_\ell \sim \mu_\ell}[P_{u_\ell}]}{P_{v_\ell}} \leq \vep^2$ and $\hell{\E_{u_\ell \sim \mu_\ell}[P_{u_\ell}]}{P_{v_\ell'}} \leq \vep^2$ must in turn satisfy
\begin{align}
2 \delta_\ell = \hell{P_{v_\ell}}{P_{v_\ell'}} \leq 2 \cdot \hell{\E_{u_\ell \sim \mu_\ell}[P_{u_\ell}]}{P_{v_\ell}} + 2 \cdot \hell{\E_{u_\ell \sim \mu_\ell}[P_{u_\ell}]}{P_{v_\ell'}} \leq 4 \vep^2\label{eq:vl-vlp-separate}.
\end{align}
Now consider any $\vep \in (\sqrt{\delta_L},2)$. %
Define $\wb{\ell} = \wb\ell(\vep)$ to be the largest possible value of $\ell \in [L]$ so that $\vep^2 < \delta_{\wb\ell}/2$. By \cref{eq:vl-vlp-separate} it follows that for all $\ell \leq \wb{\ell}$, there is at most a single value of $v_\ell \in [N_\ell]$ so that  $\hell{\E_{u_\ell \sim \mu_\ell}[P_{u_\ell}]}{P_v} \leq \vep^2$. Denote this value of $v_\ell$ by $\wb{v}_\ell$ if such a $v_\ell$ exists; if not, choose $\wb{v}_\ell \in [N_\ell]$ arbitrarily.

By the data processing inequality, for any $v \in \MV$ and each $\ell \in [L]$, it holds that $\hell{\Mbar}{M_v} \geq \hell{\E_{u_\ell \sim \mu_\ell}[P_{u_\ell}]}{P_{v_\ell}}$. Thus, for each $M_v \in \MM$ so that $\hell{\Mbar}{M_v} \leq \vep^2$, we must have that $v_\ell = \wb{v}_\ell$ for all $\ell \leq \wb{\ell}$.

Now choose any $v^\st \in \MV$ so that $v^\st_\ell \neq \wb{v}_\ell$ for all $\ell \leq \wb{\ell}$, and define $p^\st \in \Delta(\Pi)$ as follows:
\begin{align}
p^\st := \Unif \left( \left\{ v \in \MV \ | \ v_\ell = v_\ell^\st \ \forall \ell \leq \wb{\ell} \right\} \right)\nonumber.
\end{align}
We may now compute: \arxiv{\noah{again using $\alpha_\ell$ defn}}
\begin{align}
  \deccpac[\vep](\MM, \Mbar) \leq & \sup_{M \in \MM} \left\{ \E_{\pi \sim p^\st}[\gm(\pi)] \ | \ \hell{\Mbar}{M} \leq \vep^2 \right\} 
  \leq  \sum_{\ell = \wb{\ell}+1}^L \frac{\alpha_\ell}{N_\ell} \leq \frac{1}{N_{\wb{\ell}+1}}\leq 2\Cscale \vep\nonumber,
\end{align}
where the final inequality uses that $\vep^2 \geq \delta_{\wb{\ell}}/8$ by definition of $\wb\ell$ and the fact that $\vep \geq \sqrt{\delta_L}$.

\paragraph{Bounding $\vepslowerT$}
Consider any $T \leq N_L/L^3$, which ensures that (for sufficiently large $L$),
\begin{align}
\frac{\Cscale}{8\sqrt{8} \cdot L \cdot C(T) \cdot T} \geq \frac{\sqrt{2}}{\Cscale \cdot N_L} = \sqrt{2\delta_L}\label{eq:ep0-lb},
\end{align}
where we have used that $C(T) \leq C_0 \cdot \log(T)$ for some universal constant $C_0$. 
Recall that $\vepslowerT$ is defined to be as large as possible so that $\vepslowerT^2 \cdot C(T) \cdot T \leq \frac{1}{8} \cdot \deccpac[\vepslowerT](\MM)$. Set
\begin{align}
\vep_0 := \frac{\Cscale  }{8\sqrt{8}\cdot L \cdot C(T) \cdot T} \geq \Omega \left( \frac{\Cscale}{T \log (T) \cdot L} \right)\nonumber,
\end{align}
which, using \cref{eq:ep0-lb} and \cref{eq:dec-alphastar-lb}, satisfies $\vep_0^2 \cdot C(T) \cdot T \leq \frac 18 \cdot \deccpac[\vep_0](\MM)$, and thus $\vepslowerT \geq \vep_0$. 
\end{proof}

\subsubsection{Proof of \creftitle{prop:fdiv-separation}}
\begin{proof}[\pfref{prop:fdiv-separation}]
Given any $\Cprob \geq 1$, fix $N =\lceil \sqrt{T/\Cprob}\rceil$. For real numbers $\delta, \beta \in (0,1)$, we will define instances $\sI^{\delta, \beta}$ of the \FrameworkShort framework. We will later choose $\sI_1, \sI_2$ to be such instances for certain choices of $\delta, \beta$. For some model classes $\MM^{\delta, \beta}$, each of size $N$, we will have, for all $\delta, \beta$, $\sI^{\delta, \beta} =  (\MM^{\delta, \beta}, \Pi, \MO, \{ \fm(\cdot) \}\subs{M})$, i.e., the instances $\sI^{\delta, \beta}$ share the same decision space, observation space, and value functions. We next define these components:
  \begin{itemize}
  \item $\Pi = [N]$ and $\MO = [N] \cup \{ \perp\}$.
  \item For all $\delta, \beta$, we have $\MM^{\delta, \beta} = \{ M_1^{\delta, \beta}, \ldots, M_N^{\delta, \beta} \}$. For $i \in [N]$ and $\pi \in \Pi$, $M_i^{\delta, \beta}(\pi) \in \Delta(\MO)$ is the following distribution, which does not depend on $\pi$:
    \begin{align}
      M_i^{\delta, \beta}(\pi)(j) = \begin{cases}
        1 - \delta(N-1) - \beta &: j = \perp \\
        \delta &: j \in [N] \backslash \{ i \} \\
        \beta &: j=i.
      \end{cases}\nonumber
    \end{align}
    Since the distribution $M_i^{\delta, \beta}(\pi)$ does not depend on $\pi$, we will often drop the argument $\pi$ and simply write $M_i^{\delta, \beta} \in \Delta(\MO)$. 
  \item For all $\delta, \beta$ and for all $\pi \in \Pi$, $i \in [N]$, the value function $f\sups{M_i^{\delta, \beta}} : \Pi \ra [0,1]$ is defined as follows:
    \begin{align}
      f\sups{M_i^{\delta, \beta}}(\pi) := 1 - \One{i = \pi}.\nonumber
    \end{align}
    Since the above value function does not depend on $\delta,\beta$, we will simply write $f^i(\pi) := f\sups{M_i^{\delta, \beta}}(\pi)$ and $g^i(\pi) :=g\sups{M_i^{\delta, \beta}}(\pi) = \One{i = \pi}$. 
  \end{itemize}
  In the model $M_i^{\delta, \beta}$, all decisions except decision $i$ are optimal. Furthermore, we will always have $\beta < \delta$, meaning that, under $M_i^{\delta, \beta}$, it is more likely to observe any given index in $[N] \backslash \{ i\}$ than it is to observe $i$.

  \paragraph{Upper bounding the minimax risk}
  Next, for $T \in \BN$, we upper bound $\mf M(\sI^{\delta, \beta}, T)$. Since the distribution over observations for all models in the classes $\MM^{\delta, \beta}$ does not depend on the decision, to specify an algorithm $(p,q)$ we need only to specify the distribution $p$, which is a mapping from $T$-tuples of observations to distributions over decisions. Furthermore, to specify the distribution over histories under a given model $M_i^{\delta, \beta}$, we write $\E\sups{M_i^{\delta, \beta}}[\cdot]$. Now consider the algorithm $p$ defined by:
  \begin{align}
    p(o_1, \ldots, o_T) := \begin{cases}
      \Unif(\Pi) &: o_1 = \cdots = o_T = \perp \\
      \indic_{o_t} &: t \ldef \argmin \left\{ s \in [T] \ | \ o_s \neq \perp \right\},\;\;\text{exists}.
    \end{cases}\label{eq:p-alg}
  \end{align}
  In particular, $p$ outputs the index of the first observation which is not $\perp$; if no such index exists, then $p$ outputs the uniform distribution over decisions. To upper bound the expected risk of $p$, note that, for any model $M_i^{\delta, \beta}$, we have
  \begin{align}
\E\sups{M_i^{\delta, \beta}} \E_{\pi \sim p(o_1, \ldots, o_T)}[g^i(\pi)] \leq & \frac{(1 - \delta(N-1) - \beta)^T}{N} + \frac{\beta}{\beta + \delta(N-1)} \leq \frac{(1 - \delta(N-1))^T}{N} + \frac{\beta}{\delta(N-1)}\label{eq:p-risk-ub},
  \end{align}
  where the first term on the right-hand side accounts for the case that $o_1 = \cdots = o_T = \perp$, and the second term gives the probability that, given that there exists $s$ such that $o_s \neq \perp$, the index $t$ of the first such observation satisfies $o_t = i$. \dfcomment{this should say $o_t=i$ right?}\noah{yep fixed}

  \paragraph{Lower bounding the minimax risk}
  We next lower bound the minimax risk for the instances $\sI^{\delta, \beta}$ in the following lemma; the proof is provided at the end of the section.
  \begin{lemma}
  \label{lem:ideltabeta-lb}
  Fix any real numbers $C \geq 1$ and $\ep \in (0,1)$, suppose $\delta \leq 1/N$, and write $\beta = \delta/C$. The minimax risk for the instance $\sI^{\delta, \beta}$ is bounded below as follows: for $S \leq 1/\delta^{1-\ep}$, $\mf M(\sI^{\delta, \beta}, S) \geq \frac{1}{2NC^{2/\ep}}$.
\end{lemma}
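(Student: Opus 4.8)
The plan is to lower bound the minimax risk $\mf M(\sI^{\delta,\beta}, S)$ by the Bayes risk under a uniform prior on the hidden index, and then to exploit the explicit form of the resulting posterior. First I would reduce to a Bayesian estimation problem: because the observation distribution of every model $M_i^{\delta,\beta}$ is independent of the played decision, the exploration kernels $q\^t$ of any algorithm convey no information about $i$, and the output $\wh\pi$ may be viewed as a randomized map $p : \MO^S \to \Delta([N])$ of the observation sequence $o_{1:S}$ alone. Writing $I \sim \Unif([N])$ and recalling $g^i(\pi) = \One{i = \pi}$, I would bound
\[
\mf M(\sI^{\delta,\beta}, S) \;\ge\; \inf_{p} \frac1N \sum_{i=1}^N \E_{o_{1:S} \sim (M_i^{\delta,\beta})^{\otimes S}} \BP_{\wh\pi \sim p(o_{1:S})}(\wh\pi = i) \;=\; \E_{o_{1:S}}\Big[\min_{j \in [N]} \BP(I = j \mid o_{1:S})\Big],
\]
where the equality holds because, to minimize the chance of outputting the hidden index, the Bayes-optimal learner places all its mass on the index of \emph{smallest} posterior probability.

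Second, I would compute the posterior in closed form. Letting $n_j$ denote the number of times index $j$ is observed in $o_{1:S}$, the likelihood under $M_i^{\delta,\beta}$ factors so that its only dependence on $i$ is through the factor $(\beta/\delta)^{n_i} = C^{-n_i}$; hence $\BP(I = i \mid o_{1:S}) = C^{-n_i} / \sum_{j} C^{-n_j}$. Since each summand is at most $1$, we have $\sum_j C^{-n_j} \le N$, which gives $\min_j \BP(I = j \mid o_{1:S}) \ge N^{-1} C^{-\max_j n_j}$, reducing the claim to showing $\E[C^{-\max_j n_j}] \ge \tfrac12 C^{-2/\ep}$.

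Third --- the crux --- I would argue that no index is observed too many times. Conditioning on $I = i$, each count $n_j$ is stochastically dominated by $\mathrm{Binomial}(S,\delta)$, so $\BP(n_j \ge m) \le (S\delta)^m/m!$; the hypotheses $S \le \delta^{-(1-\ep)}$ and $\delta \le 1/N$ give $S\delta \le \delta^{\ep}$, and choosing $m = \lceil 2/\ep \rceil$ (so that $\ep m \ge 2$), a union bound over $j \in [N]$ yields $\BP(\max_j n_j \ge m) \le N\delta^{\ep m}/m! \le N^{1-\ep m} \le N^{-1} \le 1/2$. On the complementary event $\max_j n_j \le m-1 \le 2/\ep$, so $C^{-\max_j n_j} \ge C^{-2/\ep}$, whence $\E[C^{-\max_j n_j}] \ge \tfrac12 C^{-2/\ep}$ and therefore $\mf M(\sI^{\delta,\beta}, S) \ge \frac{1}{2NC^{2/\ep}}$.

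The main obstacle is calibrating this tail bound so that exactly the factor $C^{2/\ep}$ emerges: one must trade the threshold $m \approx 2/\ep$ against the union-bound loss of $N$, using the sample budget $S \le \delta^{-(1-\ep)}$ together with $\delta \le 1/N$ to make $N\delta^{\ep m}$ at most $1/2$. By contrast, the Bayesian reduction and the posterior computation are essentially mechanical once one recognizes that the hidden index is the one seen \emph{least} often, so that the Bayes learner should simply avoid the most-frequently-observed index.
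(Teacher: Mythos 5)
Your argument is correct, but it reaches the bound by a genuinely different route than the paper. The paper's proof is a frequentist change-of-measure: it introduces the auxiliary symmetric model $M_i^{\delta,\delta}$ (under which all $N$ models induce the same observation law), uses pigeonhole to find an index $i$ that the algorithm outputs with probability at least $1/N$ under that law, restricts to the event that symbol $i$ appears at most $2/\ep$ times among the $S$ observations --- which has probability at least $1-1/N^2$ since $(S\delta)^{2/\ep}\le\delta^{2}\le 1/N^2$ --- and then switches to $M_i^{\delta,\beta}$ on that event at a likelihood-ratio cost of $(\beta/\delta)^{2/\ep}=C^{-2/\ep}$. You instead pass to the Bayes risk under the uniform prior, compute the posterior in closed form as $\BP(I=i\mid o_{1:S})=C^{-n_i}/\sum_j C^{-n_j}$, and control $\max_j n_j$ by a tail bound with a union bound over all $N$ symbols. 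Both arguments rest on the same two facts (the likelihood depends on $i$ only through the factor $C^{-n_i}$, and the relevant counts stay below $2/\ep$ when $S\le\delta^{-(1-\ep)}$), and both implicitly need $N\ge 2$ (you in $N^{-1}\le 1/2$, the paper in $1/N-1/N^2\ge 1/(2N)$), which is harmless since the claim is vacuous for $N=1$. What your version buys is an exact identification of the Bayes-optimal learner and hence a characterization, not merely a lower bound, of the optimal risk under the uniform prior; the price is the union bound over $j\in[N]$, where the paper only needs to control the single count $n_i$. One small slip in your closing commentary: the Bayes-optimal learner should \emph{output} the most-frequently-observed index (the one with the smallest posterior mass, since the hidden symbol is the rarest), not ``avoid'' it --- your formal computation of $\min_j\BP(I=j\mid o_{1:S})=C^{-\max_j n_j}/\sum_j C^{-n_j}$ already has this right, so nothing in the proof is affected.
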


  \paragraph{Computing $\Dphishort$}
  It is now straightforward to compute the  $\Dphishort$-divergence between any two models in $\MM^{\delta, \beta}$. In particular, for $i \in [N]$, we have:
  \begin{align}
    \Dphi{M_i^{\delta, \beta}}{M_j^{\delta, \beta}} = \begin{cases}
      0 &: i = j \\
      \beta \cdot \phi(\delta/\beta) + \delta \cdot \phi(\beta/\delta) &: i \neq j.
      \end{cases} \label{eq:pairwise-dphi}
  \end{align}

  \paragraph{Choosing $\delta, \beta$}
  \dfedit{Let $\eps\in(0,1)$ and $T\in\bbN$ be as in the theorem statement.} 
  Since $\phi$ is assumed to be $(\alpha, \beta)$-bounded, we have that
  \begin{align}
\frac{\phi(N^{\ep/\alpha}) \cdot N^{-\ep/\alpha} + \phi(N^{-\ep/\alpha})}{\phi(2)/2 + \phi(1/2)} \leq \frac{\beta \cdot N^{\ep}}{\phi(2)/2 + \phi(1/2)} \leq \beta' \cdot N^{\ep}\label{eq:use-phi-bounded},
  \end{align}
  where we have written $\beta' := \max\left\{1, \frac{\beta}{\phi(2)/2 + \phi(1/2)}\right\}$. %
  
  For some constant $C_0 > 0$ to be specified below, we choose
  \begin{align}
    \delta_1 = \frac{C_0 \ln T}{(N-1)T}, \quad \beta_1 = \frac{\delta_1}{N^{\ep/\alpha}}; \qquad \delta_2 = \frac{\phi(N^{\ep/\alpha}) \cdot N^{-\ep/\alpha} + \phi(N^{-\ep/\alpha})}{\phi(2)/2 + \phi(1/2)} \cdot \delta_1, \quad \beta_2 = \frac{\delta_2}{2}\label{eq:define-delta-betas}.
  \end{align}
  The choices of $\delta_1, \beta_1, \delta_2, \beta_2$  ensure that
  \begin{align}
\beta_1 \cdot \phi(\delta_1/\beta_1) + \delta_1 \cdot \phi(\beta_1/\delta_1) = \beta_2 \cdot \phi(\delta_2/\beta_2) + \delta_2 \cdot \phi(\beta_2/\delta_2),\label{eq:fdiv_equality}
  \end{align}
  which, together with \cref{eq:pairwise-dphi}, ensures that for all $i, j \in [N]$, $\Dphi{M_i^{\delta_1, \beta_1}}{M_j^{\delta_1, \beta_1}} = \Dphi{M_i^{\delta_2, \beta_2}}{M_j^{\delta_2, \beta_2}}$. 

  \paragraph{Wrapping up}
  We set $\sI_1 = \sI^{\delta_1, \beta_1}$ and $\sI_2 = \sI^{\delta_2, \beta_2}$, and correspondingly set $\MM_1 = \MM^{\delta_1, \beta_1}$ and $\MM_2 = \MM^{\delta_2, \beta_2}$. 
  We define the one-to-one mapping $\sE : \MM_1 \ra \MM_2$ by the mapping that sends $M_i^{\delta_1, \beta_1} \mapsto M_i^{\delta_2, \beta_2}$ for all $i \in [N]$. It is clear that these definitions satisfy \cref{it:fm-equal} and \cref{it:dphi-equal} of the proposition statement. 
  
  From \cref{eq:p-risk-ub}, the expected risk of $p$ against a worst-case model in $\MM^{\delta_1, \beta_1}$ is bounded above as follows:
  \begin{align}
    \sup_{\Mstar \in \MM^{\delta_1, \beta_1}} \E\sups{\Mstar, p}[\RiskDM] \leq & \left( 1 - \frac{C_0 \ln T}{T} \right)^T + \frac{\beta_1}{\delta_1 \cdot (N-1)} \nonumber\\
    \leq &  \exp \left( -\frac{C_0 \ln T}{T} \right)^T + \frac{2}{N^{1+\ep/\alpha}} \nonumber\\
    \leq &  T^{-1} + 2 \cdot \left( \frac{\Cprob}{T} \right)^{1/2 + \ep/(2\alpha)} \nonumber,
  \end{align}
  where the final inequality holds as long as we choose $C_0 = 1$; recall that $N \ldef\lceil \sqrt{T/\Cprob}\rceil$. The above display establishes the upper bound of \cref{it:minimax-separation}.

  Next, for the lower bound, recall that \cref{eq:use-phi-bounded} gives that $\delta_2 \leq \beta' \cdot N^{\ep} \cdot \delta_1$, so
  \begin{align}
\frac{1}{\delta_2^{1-\ep}} \geq \frac{1}{(\beta' N^\ep \cdot \delta_1)^{1-\ep}} \geq \frac{(N^{1-\ep} T)^{1-\ep}}{2\beta' C_0 \ln T} \geq \frac{N^{1-4\ep} \cdot T}{2\beta' C_0 \Cprob^\ep \ln T} \geq \frac{T^{3/2 - 2\ep}}{2   \beta' C_0 \Cprob^{1/2 + \ep} \ln T}\nonumber, %
  \end{align}
  where the second-to-last inequality uses that $T^\ep \leq N^{2\ep} \cdot \Cprob^\ep$. %
  Thus, from \cref{lem:ideltabeta-lb} with $(\delta, \beta) = (\delta_2, \beta_2)$ (so that $C = 2$), we have that for all $T' \leq \frac{T^{3/2 - 2\ep}}{2   \beta' C_0 \Cprob^{1/2 + \ep} \ln T}$, 
  \begin{align}
\mf M(\sI^{\delta_2, \beta_2}) \geq \frac{1}{2^{1 + 2/\ep}} \cdot \frac{1}{N} \geq \frac{1}{2^{2+2/\ep}} \cdot \sqrt{\frac{T}{\Cprob}}\nonumber.
  \end{align}
Thus, taking $C_\phi = 2\beta' C_0$, the above inequality verifies the lower bound of \cref{it:minimax-separation}. 

\end{proof}

\begin{proof}[\pfref{lem:ideltabeta-lb}]
  Consider any algorithm $p : \MO^S \ra \Pi$. Note that the distributions of $M_1^{\delta, \delta}, \ldots, M_N^{\delta, \delta}$ are all identical. %
  Thus, there is some $i \in [N]$ so that 
  \begin{align}
\E\sups{M_i^{\delta, \delta}}\E_{\pi \sim p(o_1, \ldots, o_S)} [\One{\pi = i}] = \E\sups{M_i^{\delta, \delta}}\E_{\pi \sim p(o_1, \ldots, o_S)} [g^i(\pi)] \geq 1/N\nonumber.
  \end{align}
  For $\lambda \geq 0$, define
  \begin{align}
\MS_\lambda := \left\{ (o_1, \ldots, o_S) \in \MO^S  \ : \ |\{ t \in [S] \ : \ o_t = i \}| > \lambda \right\}\nonumber.
  \end{align}
Then the probability that $(o_1, \ldots, o_S) \in \MS_\lambda$ is bounded above as follows: %
  \begin{align}
\BP\sups{M_i^{\delta, \delta}}\left( (o_1, \ldots, o_S) \in \MS_\lambda \right) \leq {S \choose \lambda} \cdot \delta^\lambda \leq (S\delta)^\lambda \leq \delta^{\ep\lambda}\nonumber.
  \end{align}
  Choosing $\lambda = 2/\ep$ yields $\delta^{\ep\lambda} = \delta^2 \leq 1/N^2$, meaning that
  \begin{align}
\E\sups{M_i^{\delta, \delta}} \E_{\pi \sim p(o_1, \ldots, o_S)}\left[ \One{(o_1, \ldots, o_S) \not \in \MS_{2/\ep}} \cdot \One{\pi =i} \right] \geq 1/N - 1/N^2 \geq 1/(2N)\nonumber.
  \end{align}

  For any $(o_1, \ldots, o_S) \not \in \MS_\lambda$, we have that
  \begin{align}
\frac{M_i^{\delta, \beta}((o_1, \ldots, o_S))}{M_i^{\delta, \delta}((o_1, \ldots, o_S))} \geq (\beta/\delta)^\lambda \leq 1/C^\lambda\nonumber.
  \end{align}
  Thus,
  \begin{align}
    \E\sups{M_i^{\delta, \beta}}\E_{\pi \sim p(o_1, \ldots, o_S)}[g^i(\pi)] &\geq \E\sups{M_i^{\delta, \beta}} \E_{\pi \sim p(o_1, \ldots, o_S)} [\One{\pi = i} \cdot \One{(o_1, \ldots, o_S) \not\in \MS_{2/\ep}}]\nonumber\\
    &\geq \E\sups{M_i^{\delta, \delta}} \E_{\pi \sim p(o_1, \ldots, o_S)} [\One{\pi = i} \cdot \One{(o_1, \ldots, o_S) \not\in \MS_{2/\ep}}] \cdot 1/C^{2/\ep}\nonumber\\
    &\geq \frac{1}{2NC^{2/\ep}}\nonumber.
  \end{align}
\end{proof}

\subsection{Proofs from \cref{sec:ma-analogues}}

\begin{proof}[\pfref{thm:ma-fdiv-separation}]
  Given $\Cprob \geq 1$, fix $N = \lceil \sqrt{T/\Cprob} \rceil$. 
  Recall the definition of the instances $\sI^{\delta, \beta} = (\MM^{\delta, \beta}, \Pi, \MO ,\{ \fm(\cdot)\}_M)$ (for $\delta, \beta \in (0,1)$) of the \FrameworkShort framework defined in the proof of \cref{prop:fdiv-separation}, where we have $\Pi = [N]$ and $\MO = [N] \cup \{ \perp\}$. For each $\delta, \beta$, we now define $\sJ^{\delta, \beta} = (\til \MM^{\delta, \beta}, \til \Pi, \til \MO, \{ \Dev\}_k, \{ \Sw\}_k)$ to be the instance of the (2-player) \MAFrameworkShort framework constructed given the instance $\sI^{\delta, \beta}$ per the construction in the proof of \cref{prop:pm-to-ma} with a value of $V$ to be specified below. In particular, $\til \Pi, \til \MO, \Dev, \Sw$ do not depend on $\delta, \beta$. For clarity, we explicitly write out the definition of the components of $\sJ^{\delta, \beta}$ in terms of the components of $\sI^{\delta, \beta}$: \dfcomment{see comment in theorem 2.2 re: NE v. CCE notation}\noah{fixed}
  \begin{itemize}
  \item Define $\Sigma_1 = \Pi = [N]$ and $\Sigma_2 = \{0, 1, \ldots, V \}$, $\til \Pi_k = \Delta(\Sigma_k)$ for $k \in \{1,2\}$, and $\til \Pi = \til \Pi_1 \times \til \Pi_2$.
  \item Define $\Dev, \Sw$ for $k \in [2]$ so that $\sJ^{\delta, \beta}$ is an NE instance (\cref{def:ne-instance}); in particular, $\Dev = \til \Pi_k$ for each $k$ and $\Sw(\dev, \pi) = (\pi_k, \pi_{-k})$. 
  \item Define the pure observation space to be $\Ocirc := \MO \cup \{ \til \perp \}$, the reward space to be $\MR = [-1,1]$, and the full observation space to $\til \MO := \Ocirc \times \BR^2$.
  \item The model class $\til \MM^{\delta, \beta}$ is indexed by tuples $(M, v) \in \MM^{\delta, \beta} \times \{ 1, 2, \ldots, V \} = \MM^{\delta, \beta} \times [V]$. (Thus $|\til \MM^{\delta, \beta}| = NV$.) In particular, for each such tuple $(M, v)$, we have a model $\til M\subs{M,v}$, which is defined as follows:
    \begin{itemize}
    \item For pure decisions of the form $(\sigma_1, 0) \in \Sigma_1 \times \Sigma_2$ the distribution of $(\ocirc, r_1, r_2) \sim \til M\subs{M, v}((\sigma_1, 0))$ is given by:
      \begin{align}
\ocirc \sim M(\sigma_1) \in \MO \subset \Ocirc, \qquad r_1 = r_2 = 0\nonumber.
      \end{align}
    \item For pure decisions of the form $(\sigma_1, i) \in \Sigma_1 \times \Sigma_2$ with $i > 0$, the distribution of $(\ocirc, r_1, r_2) \sim \til M\subs{M, v}((\sigma_1, i))$ is given by:
      \begin{align}
        \ocirc = \til\perp, \qquad r_2 = -r_1 = \begin{cases}
          -1 &: i \neq v \\
         \gm(\sigma_1)  &: i = v,
        \end{cases}\label{eq:tilperp-rewards}
      \end{align}
      where we recall that $ \gm(\sigma_1) = \max_{\sigma_1' \in \Sigma_1} \{ \fm(\sigma_1') \} - \fm(\sigma_1)$. 
    \item For general decisions $\pi \in \til \Pi$, we can write $\pi = \pi_1 \times \pi_2$ for $\pi_\ag \in \til \Pi_k$ for $\ag \in [2]$. Then the distribution $\til M\subs{M, v}(\pi)$ is the distribution of $\til M\subs{M,v}(\sigma)$ where $\sigma = (\sigma_1, \sigma_2)$ is distributed as: $\sigma_\ag \sim \pi_\ag$ for $\ag \in [2]$. 
    \end{itemize}
  \end{itemize}
  Next, let $\delta_1, \delta_2$ be defined given $T, N, \ep, \phi, \alpha$, as in the proof of \cref{prop:fdiv-separation} (in particular, they are specified in \cref{eq:define-delta-betas}). We write $\sJ_1 = \sJ^{\delta_1, \beta_1}$ and $\sJ_2 = \sJ^{\delta_2, \beta_2}$, and correspondingly write $\MM_1 = \til \MM^{\delta_1, \beta_1}$ and $\MM_2 = \til \MM^{\delta_2, \beta_2}$. Moreover, we define the mapping $\sE : \MM_1 \ra \MM_2$ in an analogous manner to the definition in the proof of \cref{prop:fdiv-separation}. In particular, for each $\delta, \beta$, we have $\MM^{\delta, \beta} = \{ M_1^{\delta, \beta}, \ldots, M_N^{\delta, \eta} \}$. First define $\sE_0 : \MM^{\delta_1, \beta_1} \ra \MM^{\delta_2, \beta_2}$ by $\sE_0(M_i^{\delta_1, \beta_1}) = M_i^{\delta_2, \beta_2}$, for $i \in [N]$ (exactly as was done in the proof of \cref{prop:fdiv-separation}. Then for each model of the form $\til M\subs{M, v} \in \til M^{\delta_1, \beta_1} = \MM_1$ (so that $M \in \MM^{\delta_1, \beta_1}, v \in [V]$), define $\sE(\til M\subs{M, v}) := \til M\subs{\sE_0(M), v}$.  
  We are now ready to verify the individual claims of the theorem:

  \paragraph{Proof of \cref{it:fm-equal-ma}} Consider any $\til M\subs{M,v} \in \til \MM^{\delta_1, \beta_1}$ (so that $M \in \MM^{\delta_1, \beta_1}, v \in [V]$). %
  For any $(\sigma_1, \sigma_2) \in \Sigma_1 \times \Sigma_2$, we have, by definition of $\sE$, 
  \begin{align}
    f_2\sups{\til M\subs{M,v}}(\sigma_1, \sigma_2) = f_2\sups{\sE(\til M\subs{M,v})}(\sigma_1, \sigma_2) = \begin{cases}
      0 &: \sigma_2 = 0 \\
      -1 &: \sigma_2 \in [V] \backslash \{ v \} \\
      \gm(\sigma_1) &: \sigma_2 = v ,
    \end{cases}\nonumber
  \end{align}
  which establishes \cref{it:fm-equal-ma} since all instances are 2-player 0-sum instances.

  \paragraph{Proof of \cref{it:dphi-equal-ma}} Consider any two models $\til M\subs{M,v}, \til M\subs{M', v'} \in \til \MM^{\delta_1, \beta_1}$ (so that $M,  M' \in \MM^{\delta_1, \beta_1}$, and $v,v' \in [V]$). For any $\pi_1 \in \Pi_1$, we have that
  \begin{align}
    & \Dphi{\til M\subs{M,v}(\pi_1, 0)}{\til M\subs{M',v'}(\pi_1, 0)} = \Dphi{M(\pi_1)}{M'(\pi_1)} \nonumber\\
    =& \Dphi{\sE_0(M)(\pi_1)}{\sE_0(M')(\pi_1)} = \Dphi{\til M\subs{\sE_0(M), v}(\pi_1, 0)}{\til M\subs{\sE_0(M'), v'}(\pi_1, 0)}\nonumber\\
    =& \Dphi{\sE(\til M\subs{M,v})(\pi_1, 0)}{\sE(\til M\subs{M',v'})(\pi_1, 0)}\label{eq:se-pi20},
  \end{align}
  where the first and third equalities follow by definition of $\til \MM^{\delta, \beta}$ above, the second equality follows by \cref{it:dphi-equal} of \cref{prop:fdiv-separation} and the fact that our choice of $\delta_1, \beta_1, \delta_2, \beta_2$ is identical to that in the proof of  \cref{prop:fdiv-separation} (cf. \eqref{eq:define-delta-betas} and \eqref{eq:fdiv_equality}), and the fourth equality follows from definition of $\sE$.

  Next, for any $\sigma_1 \in \Sigma_1$ and $\sigma_2 \in \Sigma_2 \backslash \{ 0 \}$, note that the distributions $\til M\subs{M,v}(\sigma_1, \sigma_2)$ and $\sE(\til M\subs{M,v})(\sigma_1, \sigma_2) = \til M\subs{\sE_0(M), v}(\sigma_1, \sigma_2)$ are identical: the pure observation under both these distributions is $\til \perp$ a.s., and the rewards are given by \cref{eq:tilperp-rewards}, where we have noted that $\gm(\sigma_1) = g\sups{\sE_0(M)}(\sigma_1)$ for all $\sigma_1 \in \Sigma_1$. It follows that for any $\pi_1 \in \Pi_1$ and $\sigma_2 \in \Delta(\Sigma_2 \backslash \{ 0 \})$, the distributions $\til M\subs{M,v}(\pi_1, \pi_2)$ and $\sE(\til M\subs{M, v})(\pi_1, \pi_2)$ are identical. In a similar manner, we have that for any such $\pi_1, \pi_2$, the distributions $\til M\subs{M',v'}(\pi_1, \pi_2)$ and $\sE(\til M\subs{M',v'})(\pi_1, \pi_2)$ are identical. Therefore, 
  \begin{align}
\Dphi{\til M\subs{M,v}(\pi_1, \pi_2)}{\til M\subs{M',v'}(\pi_1, \pi_2)} = \Dphi{\sE(\til M\subs{M,v})(\pi_1, \pi_2)}{\sE(\til M\subs{M',v'})(\pi_1, \pi_2)}\label{eq:se-pi2nz}.
  \end{align}
  Now consider any joint decision $(\pi_1, \pi_2) \in \Pi_1 \times \Pi_2$. Let us write $\pi_2 = \pi_2(0) \cdot \indic_0 + (1-\pi_2(0)) \cdot \pi_2'$, where $\pi_2' \in \Delta(\Sigma_2\backslash \{0\})$. Since, for any model $\til M \in \til \MM^{\delta, \beta}$ (for any $\delta, \beta$), the distributions $\til M(\pi_1, 0)$ and $\til M(\pi_1, \pi_2')$ have disjoint support (namely, under the second, the pure observation is always $\til \perp$, and under the first, the pure observation is never $\til \perp$), it follows from \cref{lem:fdiv-factor} that for any two models $\til M, \til M' \in \MM^{\delta, \beta}$,\noah{todo check}
  \begin{align}
    \Dphi{\til M(\pi_1, \pi_2)}{\til M'(\pi_1, \pi_2)} =& \pi_2(0) \cdot \Dphi{\til M(\pi_1, 0)}{\til M'(\pi_1, 0)}\nonumber\\
    & + (1-\pi_2(0)) \cdot \Dphi{\til M(\pi_1, \pi_2')}{\til M'(\pi_1, \pi_2')}\label{eq:tilm-decompose}.
  \end{align}
  Then for the decision $(\pi_1, \pi_2) \in \Pi_1 \times \Pi_2$, with $\pi_2'$ defined as above, we have
  \begin{align}
    & \Dphi{\til M\subs{M,v}(\pi_1, \pi_2)}{\til M\subs{M',v'}(\pi_1, \pi_2)} \nonumber\\
    =& \pi_2(0) \cdot \Dphi{\til M\subs{M,v}(\pi_1, 0)}{\til M\subs{M',v'}(\pi_1, 0)} + (1-\pi_2(0)) \cdot \Dphi{\til M\subs{M,v}(\pi_1,\pi_2')}{\til M\subs{M',v'}(\pi_1, \pi_2')}\nonumber\\
    =& \pi_2(0) \cdot \Dphi{\sE(\til M\subs{M,v})(\pi_1, 0)}{\sE(\til M\subs{M',v'})(\pi_1, 0)} + (1-\pi_2(0)) \cdot \Dphi{\sE(\til M\subs{M,v})(\pi_1,\pi_2')}{\sE(\til M\subs{M',v'})(\pi_1, \pi_2')}\nonumber\\
    =&\Dphi{\sE(\til M\subs{M,v})(\pi_1, \pi_2)}{\sE(\til M\subs{M',v'})(\pi_1, \pi_2)} \nonumber,
  \end{align}
  where the first and third equalities use \cref{eq:tilm-decompose}, and the second equality uses \cref{eq:se-pi20} and \cref{eq:se-pi2nz}. The above display verifies \cref{it:dphi-equal-ma}.

  \paragraph{Proof of \cref{it:minimax-separation-ma}} For each $\delta, \beta \in (0,1)$, the construction of $\sJ^{\delta, \beta}$ given $\sI^{\delta, \beta}$ according to the construction in the proof of \cref{prop:pm-to-ma}, together with the conclusion of \cref{prop:pm-to-ma}, gives that, for all $T' \in \BN$,
  \begin{align}
\mf M(\sJ^{\delta, \beta}, T') \leq \mf M(\sI^{\delta, \beta}, T') \leq \mf M(\sJ^{\delta, \beta}, T') + O((T' \log(T')/V)^{1/4})\label{eq:sj-si-relate}.
  \end{align}
  Then \cref{it:minimax-separation} of \cref{prop:fdiv-separation}, together with our choice of $\delta_1, \beta_1, \delta_2, \beta_2$ to mimic that in the proof of \cref{prop:fdiv-separation}, yields that for all $T'$ with $T \leq T' \leq T^{3/2 - 2\ep} \cdot (C_\phi \Cprob^{1/2 + \ep} \ln T)^{-1}$
  \begin{align}
    \mf M(\sJ_1, T') = \mf M(\sJ^{\delta_1, \beta_1}, T') \leq & \mf M(\sI^{\delta_1, \beta_1}, T') \leq \frac{1}{T} + 2 \cdot \left( \frac{\Cprob}{T} \right)^{1/2 + \ep/(2\alpha)}\label{eq:minimax-m1}\\
    \mf M(\sJ_2, T') = \mf M(\sJ^{\delta_2, \beta_2}, T') \geq & \mf M(\sI^{\delta_2, \beta_2}, T') - O((T' \log(T')/V)^{1/4})\nonumber\\
    \geq &  2^{-2-2/\ep} \cdot \left( \frac{\Cprob}{T}\right)^{1/2} -O((T' \log(T')/V)^{1/4})\nonumber.
  \end{align}
  Choosing $V = T^{100} \cdot 2^{8 + 8/\ep}$ ensures that
  \begin{align}
 \mf M(\sJ_2, T') \geq  2^{-3-2/\ep} \cdot \left( \frac{\Cprob}{T}\right)^{1/2}\label{eq:minimax-m2}.
  \end{align}
  Together \cref{eq:minimax-m1} and \cref{eq:minimax-m2} verify \cref{it:minimax-separation-ma}. 
  
\end{proof}

\section{Proofs for \cref{sec:single-multiple}}
\label{sec:proofs_single_multiple}
Throughout this section, we consider an instance $\sJ = \instma$ of \MAFrameworkShort which is an NE instance (\cref{def:ne-instance}). It follows in particular that for any $M \in \MM, \pi \in \Pi$, we have
\begin{align}
\hm(\pi) = \sum_{k=1}^K \hm_k(\pi) =  \sum_{k=1}^K \sup_{\pi_k' \in \Pi_k} \fm_k(\pi_k', \pi_{-k}) - \fm_k(\pi)\nonumber.
\end{align}

\subsection{Bounds for general games with convex decision spaces}
\label{sec:ma-convex-hull}

\begin{proof}[\pfref{thm:single-multiple-ch}]
  For each $\ag \in [\Ag]$ and $\pi_{-\ag} \in \Pi_{-\ag}$, define
  \begin{align}
\til \MM_\ag(\pi_{-\ag}) := \{ \pi_\ag \mapsto \single{M}(\pi_\ag, \pi_{-\ag}) \ : \ M \in \MM \}\nonumber.
  \end{align}
  It is straightforward from the definition of $\til \MM_\ag$ in \cref{eq:define-til-mk} that for each $\ag \in [\Ag]$, $\til \MM_\ag = \bigcup_{\pi_{-\ag} \in \Pi_{-\ag}} \til \MM_\ag(\pi_{-\ag})$, and therefore that $\bigcup_{\pi_{-\ag} \in \Pi_{-\ag}} \co(\til \MM_\ag(\pi_{-\ag})) \subseteq \co(\til \MM_\ag)$.   For any $\wb\pi_{-\ag} \in \Pi_{-\ag}$ and $\Mbar \in \co(\MM)$, we denote the corresponding element of  $\co(\til \MM_\ag(\wb\pi_{-\ag}))$ by $(\Mbar, \wb\pi_{-\ag})$.  (In particular, $(\Mbar, \wb\pi_{-\ag})$ is the model that sends $\pi_\ag \mapsto \single{\Mbar}(\pi_\ag, \wb\pi_{-\ag})$.)
  It then suffices to prove the following stronger result: for each $\Mbar \in \co(\MM)$, 
  \begin{align}
\decoreg[\gamma](\sJ, \Mbar) \leq \sum_{\ag=1}^\Ag \sup_{\wb\pi_{-\ag} \in \Pi_{-\ag}} \decoreg[\gamma/\Ag](\til \MM_\ag, (\Mbar, \wb\pi_{-\ag}))\label{eq:dec-ch-refined}.
  \end{align}
Next, note that for any $M \in \MM, \pi_{-k} \in \Pi_{-k}$, the value function for the model $\pi_k \mapsto \single{M}(\pi_k, \pi_{-k})$ is given by $f\sups{\single{M}}(\pi_k) = \fm_k(\pi_k, \pi_{-k})$, for $\pi \in \Pi$ (this holds since the distribution of the reward under $\single{M}(\pi_k, \pi_{-k})$ is simply the distribution of agent $k$'s reward under $M(\pi_k, \pi_{-k})$). 
Then for any $\Mbar \in \co(\MM)$, $\wb\pi_{-k} \in \Pi_{-\ag}$, we have
\begin{align}
  & \decoreg[\gamma/K](\til\MM_k, (\Mbar, \wb\pi_{-\ag}))\nonumber\\
  =&  \inf_{p_k \in \Delta(\Pi_k)} \sup_{\substack{M \in \MM\\ \pi_{-k} \in \Pi_{-k}}} \E_{\pi_k \sim p_{k}} \left[ \max_{\pi_k' \in \Pi_k} \fm_k(\pi_k', \pi_{-k}) - \fm_k(\pi_k, \pi_{-k}) - \frac{\gamma}{K} \cdot \hell{M(\pi_k, \pi_{-k})}{\Mbar(\pi_k, \wb\pi_{-k})}\right]\nonumber\\
  \geq & \inf_{p_k \in \Delta(\Pi_k)} \sup_{\substack{M \in \MM\\ \pi_{-k} \in \Pi_{-k}}} \E_{\substack{\pi_k \sim p_{k} \\ a_k \sim \pi_k}} \left[ \max_{\pi_k' \in \Pi_k} \fm_k(\pi_k', \pi_{-k}) - \fm_k(a_k, \pi_{-k}) - \frac{\gamma}{K} \cdot \hell{M(a_k, \pi_{-k})}{\Mbar(a_k, \wb\pi_{-k})}\right]\nonumber\\
  =& \inf_{\pi_k \in \Pi_k} \sup_{\substack{M \in \MM\\ \pi_{-k} \in \Pi_{-k}}} \E_{a_k \sim \pi_k} \left[ \max_{\pi_k' \in \Pi_k} \fm_k(\pi_k', \pi_{-k}) - \fm_k(a_k, \pi_{-k}) - \frac{\gamma}{K} \cdot \hell{M(a_k, \pi_{-k})}{\Mbar(a_k, \wb\pi_{-k})}\right]\label{eq:dec-mtil-lb},
\end{align}
where the inequality uses joint convexity of the squared Hellinger distance,  
and the final inequality uses the fact that any distribution $p_k \in \Delta(\Pi_k)$ may be replaced by the singleton distribution for the decision $\til \pi_k := \E_{\pi_k \sim p_k}[\pi_k]$, without changing the value of the expression.

Thus
\begin{align}
 \decoreg[\gamma/K](\til\MM_k, (\Mbar, \wb\pi_{-\ag})) \geq& \inf_{\pi_k \in \Pi_k} \sup_{\substack{M \in \MM\\ \pi_{-k} \in \Pi_{-k}}} \E_{a_k \sim \pi_k} \left[ \hm_k(a_k, \pi_{-k}) - \frac{\gamma}{K} \cdot  \hell{M(a_k, \pi_{-k})}{\Mbar(a_k, \wb\pi_{-k})}\right]\nonumber.
\end{align}

\paragraph{Existence of fixed points}

 For each $k \in [K]$, define the set-valued function $\MC_k : \Pi \ra \powerset{\Pi_k}$ by
\begin{align}
\MC_k(\wb\pi) := \argmin_{\pi_k \in \Pi_k} \sup_{M \in \MM, \pi_{-k} \in \Pi_{-k}} \E_{a_k \sim \pi_k} \left[ \hm_k(a_k, \pi_{-k}) - \frac{\gamma}{K} \cdot \hell{M(a_k, \pi_{-k})}{\Mbar(a_k, \wb\pi_{-k})}\right]\nonumber.
\end{align}
Further, for $\pi_{-k} \in \Pi_{-k}, M \in \MM$, define the function $G_{M, \pi_{-k}} : \Pi_k \times \Pi_{-k} \ra \BR$ by
\begin{align}
G_{M, \pi_{-k}}(\pi_k, \wb\pi_{-k}) = \E_{a_k \sim \pi_k} \left[ \hm_k(a_k, \pi_{-k}) - \frac{\gamma}{K} \cdot \hell{M(a_k, \pi_{-k})}{\Mbar(a_k, \wb\pi_{-k})}\right]\nonumber.
\end{align}
\cref{ass:convexity_pols} gives that for all $a_k$, the map $\wb\pi_{-k} \mapsto \Mbar(a_k, \wb\pi_{-k})$ is linear. It follows by the dominated convergence theorem that for all $M, \pi_{-k}, a_k$, the function $\wb\pi_{-k} \mapsto \hell{M(a_k, \pi_{-k})}{\Mbar(a_k, \wb\pi_{-k})}$ is continuous. Hence $G_{M, \pi_{-k}}(\pi_k, \wb\pi_{-k})$ is continuous in $(\pi_k, \wb\pi_{-k})$, and the function
$$\til G_k(\pi_k, \wb\pi_{-k}) := \sup_{M \in \MM, \pi_{-k} \in \Pi_{-k}} G_{M, \pi_{-k}}(\pi_k, \wb\pi_{-k})$$
is also continuous in $(\pi_k, \wb\pi_{-k})$. 
Furthermore, since, for each $\wb\pi_{-k}$, the function $G_{M, \pi_{-k}}(\pi_k, \wb\pi_{-k})$ is linear in $\pi_k$ (\cref{ass:convexity_pols}), $\til G_k(\pi_k, \wb\pi_{-k})$ is convex in $\pi_k$. It follows that $\MC_k(\wb\pi) = \argmin_{\pi_k\in \Pi_k} \{ \til G_k(\pi_k, \wb\pi_{-k})\}$ is a closed, nonempty, and convex subset of $\Pi_k$ for all $\wb\pi$. Furthermore, by continuity of $\til G_k$ and  \cref{lem:berge-theorem}, we have that $\MC_k(\wb\pi)$ is upper hemicontinuous. By \cref{lem:kakutani}, it follows that the mapping $\pibar\mapsto\MC_1(\pibar) \times \cdots \times \MC_K(\pibar)$ has a fixed point, namely some $\wb\pi \in \Pi$ so that $\wb\pi \in \prod_{k \in [K]} \MC_k(\wb\pi)$.

\paragraph{Applying the fixed point strategy}

Let $\wb\pi \in \prod_{k \in [K]} \MC_k(\wb\pi)$ be a fixed point of $\MC_1 \times \cdots \times \MC_K$. Then
\begin{align}
  \decoreg[\gamma](\sJ, \Mbar) \leq & \sup_{M \in \MM} \left\{ \hm(\wb\pi) - \gamma \cdot \hell{M(\wb\pi)}{\Mbar(\wb\pi)} \right\}\nonumber\\
  = & \sup_{M  \in \MM} \left\{\sum_{k=1}^K \hm_k(\wb\pi) - \gamma \cdot \hell{M(\wb\pi)}{\Mbar(\wb\pi)} \right\}\nonumber\\
  \leq & \sum_{k=1}^K \sup_{M \in \MM}  \left\{ \hm_k(\wb\pi) - \frac{\gamma}{K} \cdot \hell{M(\wb\pi)}{\Mbar(\wb\pi)}\right\}\nonumber\\
  \leq & \sum_{k=1}^K \sup_{M \in \MM, \pi_{-k} \in \Pi_{-k}} \left\{ \hm_k(\wb\pi_k, \pi_{-k}) - \frac{\gamma}{K} \cdot \hell{M(\wb\pi_k, \pi_{-k})}{\Mbar(\wb\pi_k, \wb\pi_{-k})}\right\}\nonumber\\
  = & \sum_{k=1}^K \sup_{M \in \MM, \pi_{-k} \in \Pi_{-k}} \E_{a_k \sim \wb\pi_k} \left[ \hm_k(a_k, \pi_{-k}) - \frac{\gamma}{K} \cdot \hell{M(a_k, \pi_{-k})}{\Mbar(a_k, \wb\pi_{-k})}\right]\nonumber\\
  =& \sum_{k=1}^K \inf_{\pi_k \in \Pi_k} \sup_{M \in \MM, \pi_{-k} \in \Pi_{-k}} \E_{a_k \sim \pi_k} \left[ \hm_k(a_k, \pi_{-k}) - \frac{\gamma}{K} \cdot \hell{M(a_k, \pi_{-k})}{\Mbar(a_k, \wb\pi_{-k})}\right]\nonumber\\
  \leq & \sum_{k=1}^K \dec_{\gamma/K}(\til \MM_k, (\Mbar, \wb\pi_{-k}))\nonumber.
\end{align}
Above, we have used the following facts:
\begin{enumerate}
  \item The second equality uses \cref{ass:convexity_pols} to conclude that for all $a_k, \pi_k, \wb\pi_{-k}, M, \Mbar$, 
\begin{align}
\BP_{o \sim M(a_k, \pi_{-k})}(\varphi(o) = a_k) = 1, \quad \BP_{o \sim \Mbar(a_k, \wb\pi_{-k})}(\varphi(o) = a_k) = 1\nonumber,
\end{align}
thus allowing us to apply \cref{lem:fdiv-factor} to give that
\begin{align}
\hell{M(\wb\pi_k, \pi_{-k})}{\Mbar(\wb\pi_k, \wb\pi_{-k})} = \E_{a_k \sim \wb\pi_k} \left[ \hell{M(a_k, \pi_{-k})}{\Mbar(a_k, \wb\pi_{-k})}\right]\nonumber.
\end{align}
\item The third equality follows from the fact that $\wb\pi_k \in \MC_k(\wb\pi)$ for all $k \in [K]$.
\item The final inequality follows from \cref{eq:dec-mtil-lb}.
\end{enumerate}

\end{proof}

\subsection{Bounds for Markov games}

Here, we prove \cref{thm:single-multiple-mg}. The proof uses a number of technical lemmas which are stated and proven in the sequel.

\begin{proof}[\pfref{thm:single-multiple-mg}]
  As in the proof of \cref{thm:single-multiple-ch}, for each $k \in [K]$ and $\pi_{-k} \in \Pi_{-k}$, we define
    \begin{align}
\til \MM_\ag(\pi_{-\ag}) := \{ \pi_\ag \mapsto \single{M}(\pi_\ag, \pi_{-\ag}) \ : \ M \in \MM \}\nonumber.
    \end{align}
    For any $\wb\pi_{-k} \in \Pi_{-k}$ and $\Mbar \in \MM$, we denote the corresponding element of  $\til \MM_k(\wb\pi_{-k}) \subseteq \til \MM_k$ by $(\Mbar, \wb\pi_{-k})$. We will prove the following stronger result: there is some constant $C' > 0$ so that for each $\Mbar \in \MM$,
    \begin{align}
\decoreg[\gamma](\sJ, \Mbar) \leq  \frac{C'KH\log H}{\gamma} + \sum_{k=1}^K \sup_{\wb \pi_{-k} \in \Pi_{-k}} \decoreg[\gamma/(C'KH\log H)](\til\MM_k, (\ol M, \wb \pi_{-k}))\label{eq:mg-bound-each-mbar}.
    \end{align}
Fix any $\ep > 0$. For each $k \in [K]$, let $\Pi_k^\ep$ be a finite $\ep$-cover of $\Pi_k$ in the sense that for all $\pi_k \in \Pi_k$, there is some element $\pi_k^\ep \in \Pi_k^\ep$ so that, for all $M \in \MM, \pi_{-k} \in \Pi_{-k}$,
\begin{align}
\hell{M(\pi_k, \pi_{-k})}{M(\pi_k^\ep, \pi_{-k})} \leq \ep^2\nonumber.
\end{align}
Furthermore, we require that the mapping $\pi_k \mapsto \pi_k^\ep$ is measurable with respect to the Borel $\sigma$-algebra on $\Pi_k$. By finiteness of $\MS, \MA_k$, it is straightforward to see that such a finite cover $\Pi_k^\ep$ exists. The size of the cover $\Pi_k^\ep$ may depend on $|\MS|, |\MA|$, but this will not matter as $|\Pi_k^\ep|$ will not enter into our final bounds. (We introduce discretization here only to ensure that $\Pi_k^\ep$ is compact when applying \cref{lem:berge-theorem}.) \dfcomment{is discretization only being used to ensure compactness? it might be good to explain that this is the purpose.}\noah{did so}

We collect a few basic properties of $\Pi_k^\ep$ in the below lemma, proved at the end of the section:
\begin{lemma}
  \label{lem:pikep-properties}
  For any $\pi_k \in \Pi_k$, there is some $\pi_k^\ep \in \Pi_k^\ep$ so that the following holds.  For any $M, \Mbar \in \MM$, $\pi_{-k} \in \Pi_{-k}$,
  \begin{align}
    \hell{M(\pi_k^\ep, \pi_{-k})}{\Mbar(\pi_k^\ep, \wb\pi_{-k})} \geq & \frac 13 \cdot \hell{M(\pi_k, \pi_{-k})}{\Mbar(\pi_k, \wb\pi_{-k})} - 2\ep^2 \nonumber\\
    \left| \hm_k(\pi_k, \pi_{-k}) - \hm_k(\pi_k^\ep, \pi_{-k}) \right| \leq & \ep \nonumber.
  \end{align}
\end{lemma}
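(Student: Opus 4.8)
\textbf{Proof proposal for \cref{lem:pikep-properties}.}

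The plan is to take $\pi_k^\ep \in \Pi_k^\ep$ to be the cover element guaranteed by the definition of the cover, which satisfies $\hell{M(\pi_k, \pi_{-k})}{M(\pi_k^\ep, \pi_{-k})} \leq \ep^2$ for all $M \in \MM$ and $\pi_{-k} \in \Pi_{-k}$. The two claimed inequalities are then essentially routine consequences of this covering property together with standard facts about the squared Hellinger distance and about the suboptimality functional $\hm_k$. I will treat them in turn.

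For the first inequality, the idea is to use the (approximate) triangle inequality for the Hellinger \emph{metric} $\Dhel{\cdot}{\cdot} = (\Dhelshort(\cdot,\cdot))^{1/2}$, applied with the intermediate points $M(\pi_k^\ep, \pi_{-k})$ and $\Mbar(\pi_k^\ep, \wb\pi_{-k})$. Concretely, writing distances along the chain
$M(\pi_k,\pi_{-k}) \to M(\pi_k^\ep, \pi_{-k}) \to \Mbar(\pi_k^\ep, \wb\pi_{-k}) \to \Mbar(\pi_k, \wb\pi_{-k})$,
the triangle inequality for $\Dhel{\cdot}{\cdot}$ bounds $\Dhel{M(\pi_k,\pi_{-k})}{\Mbar(\pi_k, \wb\pi_{-k})}$ by the sum of the three segment distances. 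The first and third segments are each at most $\ep$ by the covering property (applied to $M$ and to $\Mbar$, respectively, both of which lie in $\MM$, so the cover guarantee applies). Squaring and using $(a+b+c)^2 \leq 3(a^2+b^2+c^2)$ yields
\[
\hell{M(\pi_k,\pi_{-k})}{\Mbar(\pi_k, \wb\pi_{-k})} \leq 3 \hell{M(\pi_k^\ep, \pi_{-k})}{\Mbar(\pi_k^\ep, \wb\pi_{-k})} + 3\ep^2 + 3\ep^2,
\]
and rearranging gives the claimed lower bound $\hell{M(\pi_k^\ep, \pi_{-k})}{\Mbar(\pi_k^\ep, \wb\pi_{-k})} \geq \tfrac13 \hell{M(\pi_k, \pi_{-k})}{\Mbar(\pi_k, \wb\pi_{-k})} - 2\ep^2$.

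For the second inequality, the plan is to expand $\hm_k(\pi_k, \pi_{-k}) = \sup_{\pi_k' \in \Pi_k} \fm_k(\pi_k', \pi_{-k}) - \fm_k(\pi_k, \pi_{-k})$ and observe that replacing $\pi_k$ by $\pi_k^\ep$ only affects the second (subtracted) term, since the supremum term does not depend on the agent's own decision $\pi_k$. Thus $|\hm_k(\pi_k, \pi_{-k}) - \hm_k(\pi_k^\ep, \pi_{-k})| = |\fm_k(\pi_k, \pi_{-k}) - \fm_k(\pi_k^\ep, \pi_{-k})|$. Since $\fm_k(\pi) = \E\sups{M,\pi}[r_k]$ is an expectation of an $[0,1]$-bounded reward, the difference in means is controlled by the total variation distance between $M(\pi_k, \pi_{-k})$ and $M(\pi_k^\ep, \pi_{-k})$, which in turn is bounded by $\Dhel{M(\pi_k,\pi_{-k})}{M(\pi_k^\ep, \pi_{-k})} \leq \ep$ via the standard inequality $\Dtv{\bbP}{\bbQ} \leq \Dhel{\bbP}{\bbQ}$. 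This gives $|\hm_k(\pi_k, \pi_{-k}) - \hm_k(\pi_k^\ep, \pi_{-k})| \leq \ep$, as required.

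The main subtlety — though not a deep obstacle — is ensuring the cover element $\pi_k^\ep$ is chosen \emph{uniformly} in a way that works simultaneously for all $M, \Mbar, \pi_{-k}$; this is exactly why the cover was defined to satisfy its guarantee for all $M \in \MM$ and all $\pi_{-k} \in \Pi_{-k}$, so a single $\pi_k^\ep$ depending only on $\pi_k$ suffices. I would simply invoke that uniform covering property at the outset and then both inequalities follow from it. No measurability issues arise at this stage since we fix $\pi_k$ and only assert the existence of a single good $\pi_k^\ep$.
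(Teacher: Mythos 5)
Your proposal is correct and follows essentially the same route as the paper: the first inequality via the triangle inequality for the Hellinger metric through the chain $M(\pi_k,\pi_{-k}) \to M(\pi_k^\ep,\pi_{-k}) \to \Mbar(\pi_k^\ep,\wb\pi_{-k}) \to \Mbar(\pi_k,\wb\pi_{-k})$ together with $(a+b+c)^2 \le 3(a^2+b^2+c^2)$, and the second by noting the supremum term in $\hm_k$ is independent of $\pi_k$ and bounding $|\fm_k(\pi_k,\pi_{-k}) - \fm_k(\pi_k^\ep,\pi_{-k})|$ by the Hellinger distance. No gaps.
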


\paragraph{Existence of fixed points} Let $C > 0$ be the constant of \cref{prop:equiv-markov}, and write $\gamma ' = \gamma / (CKH\log H)$. 
For each $k \in [K]$, define the function $\MC_k : \Pi \ra {\Delta(\Pi_k^\ep)}$ by
\begin{align}
\MC_k(\ol \pi) = \argmin_{p_k \in \Delta(\Pi_k^\ep)} \sup_{M \in \MM, \pi_{-k} \in \Pi_{-k}} \E_{\pi_k \sim p_k} \left[ \hm_k(\pi_k, \pi_{-k}) - \gamma' \cdot \hell{M(\pi_k, \pi_{-k})}{\ol M(\pi_k, \ol\pi_{-k})}\right] + \ep \cdot \| p_k \|_2^2\nonumber,
\end{align}
where $\| p_k \|_2^2$ denotes the squared $\ell_2$ norm of $p_k$, interpreted as a vector in the Euclidean space $\BR^{|\Pi_k^\ep|}$. 

Further, for $\pi_{-k} \in \Pi_{-k},\ M \in \MM$, define the function $G_{M, \pi_{-k}} : \Delta(\Pi_k) \times \Pi_{-k} \ra \BR$ by
\begin{align}
G_{M, \pi_{-k}}(p_k, \ol \pi_{-k}) = \E_{\pi_k \sim p_k} \left[ \hm_k(\pi_k, \pi_{-k}) - \gamma' \cdot \hell{M(\pi_k, \pi_{-k})}{\ol M(\pi_k, \ol \pi_{-k})}\right]\nonumber.
\end{align}
We may view $\wb\pi_{-k}$ as an element of $\Delta(\MA_k)^{\MS \times [H]}$, which is a subset of Euclidean space (since $\MA_k, \MS$ are assumed to be finite). 
Since there are finitely many states and actions, it follows from the dominated convergence theorem that for all $M, \pi_k, \pi_{-k}$, the function $\ol \pi_{-k} \mapsto \hell{M(\pi_k, \pi_{-k})}{\ol M(\pi_k, \ol \pi_{-k})}$ is continuous. Hence $G_{M, \pi_{-k}}(p_k, \ol \pi_{-k})$ is continuous in $(p_k, \ol \pi_{-k})$. Hence the function
\begin{align}
\til G_k(p_k, \ol \pi_{-k}) := \sup_{M \in \MM, \pi_{-k} \in \Pi_{-k}} G_{M, \pi_{-k}}(p_k, \ol \pi_{-k}) + \ep \cdot \| p_k \|_2^2\nonumber
\end{align}
is also continuous. Furthermore, $G_{M, \pi_{-k}}(p_k, \ol \pi_{-k})$ is linear in $p_k$ (for fixed $\ol \pi_{-k}$), so $\til G_k(p_k, \ol \pi_{-k})$ is strongly convex in $p_k$ (for fixed $\ol \pi_{-k}$). Thus $\MC_k(\ol \pi) = \argmin_{p_k \in \Delta(\Pi_k^\ep)} \{ \til G_k(p_k, \ol \pi_{-k}) \}$ is a singleton for all $\ol \pi$. Furthermore, by continuity of $\til G_k$, compactness of $\Delta(\Pi_k^\ep)$ and $\Pi_{-k}$, and \cref{lem:berge-theorem}, we have that $\MC_k(\ol \pi)$ is upper hemicontinuous, which means, by single-valuedness, it is actually continuous.

Given $\ol M \in \MM$, $\wb\pi_{-k} \in \Pi_{-k}$, note that the pure observation distribution of the model $\pi_k \mapsto \Mbar(\pi_k, \wb\pi_{-k})$ is exactly that of an MDP, which we denote by $\Mbar_{\wb\pi_{-k}}$: it has horizon $H$, state space $\MS$, action space $\MA_k$, and rewards and transitions given by those of $\Mbar$ when each agent $k' \neq k$ acts according to $\wb\pi_{k', h}(\cdot | s)$ at each state $s$ and step $h$ (to be precise, the rewards of $\Mbar_{\wb\pi_{-k}}$ are given by the rewards of agent $k$ in $\Mbar$). Note that the space of randomized nonstationary policies of $\Mbar_{\wb\pi_{-k}}$ is $\Pi_k$ (using \cref{ass:pi-mg}). %

\dfcomment{It would be good to add a little explanation here, explaining that moving to $\pistar_k$ is how we avoid convexifying.}\noah{did so}

Since we do not assume convexity of $\Pi_k$, elements $p_k \in \Delta(\Pi_k^\ep)$ may not belong to $\Pi_k$. We next introduce a set of decisions in $\Pi_k$ which are ``equivalent'' to $p_k$ given a reference model $\Mbar$ and a reference decision $\wb\pi_{-k}$. In particular, for $\Mbar \in \MM, \wb\pi_{-k} \in \Pi_{-k}$, and $p_k \in \Delta(\Pi_k^\ep)$, let $\Pi\subs{\Mbar, \wb\pi_{-k}}^\st(p_k) \subset \Pi_k$ be the set of all policies $\pi_k^\st \in \Pi_k$ which satisfy \eqref{eq:define-pistar} of \cref{lem:make-pol-randomized} for $p_k$ and $\pi_k \mapsto \Mbar(\pi_k, \wb\pi_{-k})$. Note that $\Pi\subs{\Mbar, \wb\pi_{-k}}^\st(p_k)$ is a nonempty convex set: as a subset of $\Delta(\MA_k)^{\MS \times [H]}$, it is a product of sets (one for each factor of $\Delta(\MA_k)$), each of which is either a singleton or all of $\Delta(\MA_k)$. It is straightforward from the definition that the map $(p_k, \wb\pi_{-k}) \mapsto \Pi\subs{\Mbar, \wb\pi_{-k}}^\st(p_k)$ is upper hemicontinuous. Then \cref{prop:equiv-markov} gives that, for any $\Mbar$ and $\wb\pi_{-k}$ and $p_k$, if $\pi_k^\st \in \Pi\subs{\Mbar,\wb\pi_{-k}}^\st(p_k)$ is the corresponding policy in \cref{eq:define-pistar}, then for $\gamma > 0$, 
\begin{align}
  & \sup_{M \in \MM, \ \pi_{-k} \in \Pi_{-k}} \left\{\hm_k(\pi_k^\st, \pi_{-k}) - \frac{\gamma}{K} \cdot \hell{M(\pi_k^\st, \pi_{-k})}{\ol M(\pi_k^\st, \ol \pi_{-k})}\right\}\nonumber\\
  \leq & \frac{1}{\gamma'} + \sup_{M \in \MM, \pi_{-k} \in \Pi_{-k}} \E_{\pi_k \sim p_k} \left[ \hm_k(\pi_k, \pi_{-k}) - {\gamma'} \cdot \hell{M(\pi_k, \pi_{-k})}{\ol M(\pi_k, \ol \pi_{-k})}\right]\label{eq:bound-by-pistar}.
\end{align}

Since the mapping $\wb\pi \mapsto \MC_k(\wb\pi) \in \Delta(\Pi_k^\ep)$ is continuous, the composition $\MC_k^\st(\wb\pi) := \Pi^\st\subs{\Mbar,\wb\pi_{-k}}(\MC_k(\wb\pi))$ is upper hemicontinuous. Thus, by Kakutani's fixed point theorem \cite[Lemma 20.1]{osborne1994course}, the set-valued mapping $C^\st(\wb\pi) := \MC_1^\st(\wb\pi) \times \cdots \times \MC_K^\st(\wb\pi)$ has a fixed point.

\paragraph{Applying the fixed point strategy}
Let $\ol \pi \in \Pi$ be a fixed point for $C^{\star}$, so that $\ol \pi_k \in \MC_k^\st(\ol \pi)$ for each $k \in [K]$.  Then
\begin{align}
  & \decoreg[\gamma](\sJ, \Mbar) \nonumber\\
  \leq & \sup_{M \in \MM} \left\{ \sum_{k=1}^K \hm_k(\ol \pi) - \gamma \cdot \hell{M(\ol\pi)}{\ol M(\ol\pi)}\right\}\nonumber\\
  \leq & \sum_{k=1}^K \sup_{M \in \MM}  \left\{ \hm_k(\ol \pi_k, \ol \pi_{-k}) - \frac{\gamma}{K} \cdot \hell{M(\ol\pi_k, \ol \pi_{-k})}{\ol M(\ol \pi_k, \ol \pi_{-k})}\right\}\nonumber\\
  \leq & \sum_{k=1}^K \sup_{M \in \MM, \ \pi_{-k} \in \Pi_{-k}} \left\{\hm_k(\ol \pi_k, \pi_{-k}) - \frac{\gamma}{K} \cdot \hell{M(\ol \pi_k, \pi_{-k})}{\ol M(\ol\pi_k, \ol \pi_{-k})}\right\}\nonumber\\
  \leq & \frac{1}{\gamma'} +\sum_{k=1}^K \sup_{M \in \MM, \pi_{-k} \in \Pi_{-k}} \E_{\pi_k \sim \MC_k(\ol \pi)} \left[ \hm_k(\pi_k, \pi_{-k}) - {\gamma'} \cdot \hell{M(\pi_k, \pi_{-k})}{\ol M(\pi_k, \ol \pi_{-k})}\right]\label{eq:use-markovian}\\
  \leq & \frac{1}{\gamma'} + \sum_{k=1}^K \ep +  \inf_{p_k \in \Delta(\Pi_k^\ep)} \sup_{M \in \MM, \pi_{-k} \in \Pi_{-k}} \E_{\pi_k \sim p_k} \left[ \hm_k(\pi_k, \pi_{-k}) - \gamma' \cdot \hell{M(\pi_k, \pi_{-k})}{\ol M(\pi_k, \ol \pi_{-k})}\right]\label{eq:use-ck-choice}\\
  \leq &  \frac{1}{\gamma'} + \sum_{k=1}^K 2\ep + \gamma' \cdot 2\ep^2 +  \inf_{p_k \in \Delta(\Pi_k)} \sup_{M \in \MM, \pi_{-k} \in \Pi_{-k}} \E_{\pi_k \sim p_k} \left[ \hm_k(\pi_k, \pi_{-k}) - \frac{\gamma'}{3} \cdot \hell{M(\pi_k, \pi_{-k})}{\ol M(\pi_k, \ol \pi_{-k})}\right]\label{eq:transfer-to-piep}.
\end{align}
where \cref{eq:use-markovian} uses \eqref{eq:bound-by-pistar} and the fact that $\ol \pi_k \in \MC_k^\st(\wb\pi) = \Pi_{\ol M, \ol \pi_{-k}}^\st(\MC_k(\ol \pi))$ for each $k$, and \cref{eq:use-ck-choice} uses the definition of $\MC_k(\ol \pi)$. Finally, \cref{eq:transfer-to-piep} uses \cref{lem:pikep-properties}, as follows: given any distribution $p_k \in \Delta(\Pi_k)$, we consider the distribution $p_k^\ep \in \Delta(\Pi_k^\ep)$ which is given by pushing forward $p_k$ through the map $\pi_k \mapsto \pi_k^\ep$ (here we use that $\pi_k \mapsto \pi_k^\ep$ is measurable to ensure that $p_k^
ep$ is well-defined). Then by \cref{lem:pikep-properties}, for all $M \in \MM, \pi_{-k} \in \Pi_{-k}, \wb\pi_{-k} \in \Pi_{-k}$, we have
\begin{align}
  \E_{\pi_k \sim p_k^\ep}[\hm_k(\pi_k, \pi_{-k})] \leq & \E_{\pi_k \sim p_k}[\hm_k(\pi_k, \pi_{-k})] + \ep\nonumber\\
  - \gamma' \cdot \E_{\pi_k \sim p_k^\ep}[\hell{M(\pi_k, \pi_{-k})}{\Mbar(\pi_k, \wb\pi_{-k})}] \leq & -\frac{\gamma'}{3} \cdot \E_{\pi_k \sim p_k} [\hell{M(\pi_k, \pi_{-k})}{\Mbar(\pi_k, \wb\pi_{-k})}] + \gamma' \cdot 2\ep^2\nonumber.
\end{align}
By taking $\ep \ra 0$, we obtain that, for some constant $C > 0$, 
\begin{align}
  \decoreg[\gamma](\sJ, \Mbar) \leq & \frac{CKH\log H}{\gamma} + \sum_{k=1}^K \decoreg[\gamma/(CKH\log H)](\til\MM_k, (\ol M, \ol \pi_{-k}))  \nonumber\\
  \leq & \frac{CKH\log H}{\gamma} + \sum_{k=1}^K \sup_{\til \pi_{-k} \in \Pi_{-k}} \decoreg[\gamma/(CKH\log H)](\til\MM_k, (\ol M, \til \pi_{-k}))\nonumber,
\end{align}
thus verifying \cref{eq:mg-bound-each-mbar}. 
\end{proof}

\subsubsection{Supporting lemmas}

\begin{proof}[\pfref{lem:pikep-properties}]
  To establish the first property, we use the definition of $\Pi_k^\ep$ and the triangle inequality for Hellinger distance to conclude that
  \begin{align}
    &  \hell{M(\pi_k, \pi_{-k})}{\Mbar(\pi_k, \wb\pi_{-k})}\nonumber\\
    \leq & 3 \cdot \left( \hell{M(\pi_k^\ep, \pi_{-k})}{M(\pi_k, \pi_{-k})} + \hell{\Mbar(\pi_k^\ep, \wb\pi_{-k})}{\Mbar(\pi_k, \wb\pi_{-k})} + \hell{M(\pi_k^\ep, \pi_{-k})}{\Mbar(\pi_k^\ep, \wb\pi_{-k})} \right)\nonumber\\
    \leq & 3 \cdot \left( \hell{M(\pi_k^\ep, \pi_{-k})}{M(\pi_k, \pi_{-k})} + 2\ep^2 \right)\nonumber,
  \end{align}
  and rearranging gives the first claimed inequality of the lemma.

  To prove the second inequality, we note that for each $\pi_k \in \Pi_k$, the cover element $\pi_k^\ep \in \Pi_k^\ep$ satisfies the following: for all $M \in \MM, \pi_{-k} \in \Pi_{-k}$
  \begin{align}
\left| \hm_k(\pi_k, \pi_{-k}) - \hm_k(\pi_k^\ep, \pi_{-k})\right| = \left| \fm_k(\pi_k, \pi_{-k}) - \fm_k(\pi_k^\ep, \pi_{-k}) \right| \leq & \Dhel{M(\pi_k, \pi_{-k})}{M(\pi_k^\ep, \pi_{-k})} \leq \ep\nonumber.
  \end{align}
\end{proof}

\noah{there's no reason we need finite-support for this to hold, probably should change it if time to streamline things}

The following lemma shows that for any MDP $\Mbar$ and distribution $p\in\Delta(\PiRNS)$, there exists a corresponding randomized policy in $\PiRNS$ which induces identical occupancies in $\Mbar$.
\begin{lemma}
  \label{lem:make-pol-randomized}
  Consider any finite-horizon MDP $\ol M = (\MS, H, \MA, P, R, \mu)$ with finite state and action spaces $\MS, \MA$. Let $\PiRNS$ denote the set of randomized nonstationary policies of $\Mbar$.  Suppose $p \in \Delta(\PiRNS)$ is a distribution over $\PiRNS$ with finite support. Consider any policy $\pi^\st \in \PiRNS$ so that: %
  \begin{align}
\forall a \in \MA,\ s \in \MS \mbox{ s.t. } \sum_{\pi' \in \PiRNS} p(\pi') \cdot d_h\sups{\Mbar, \pi'}(s) > 0: \quad \pi_h^\st(a|s) = \sum_{\pi \in \PiRNS  : \ p(\pi) > 0}  \frac{p(\pi) \cdot d_h\sups{\ol M, \pi}(s)}{\sum_{\pi' \in \PiRNS} p(\pi') \cdot d_h\sups{\ol M, \pi'}(s)} \cdot  \pi_h(a|s)\label{eq:define-pistar}.
  \end{align}
  Then for all states $s \in \MS$, $d_h\sups{\ol M, \pi^\st}(s) = \sum_{\pi \in \PiRNS} p(\pi) \cdot d_h\sups{\ol M, \pi}(s)$, and for all $(s,a) \in \MS \times \MA$, $d_h\sups{\ol M, \pi^\st}(s,a) = \sum_{\pi \in \PiRNS} p(\pi) \cdot d_h\sups{\ol M, \pi}(s,a)$.

  As a consequence, it follows that $V_1^{\ol M, \pi^\st} = \sum_{\pi \in \PiRNS} p(\pi) \cdot V_1^{\ol M, \pi}$. 
\end{lemma}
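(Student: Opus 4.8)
\textbf{Proof plan for \cref{lem:make-pol-randomized}.}

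The plan is to prove the occupancy-matching claim by induction on the layer index $h \in [H]$, and then to derive the value-function identity as an immediate corollary via the occupancy-measure representation of expected return. Throughout, write $\dbar_h(s) := \sum_{\pi \in \PiRNS} p(\pi) \cdot d_h\sups{\Mbar, \pi}(s)$ and $\dbar_h(s,a) := \sum_{\pi \in \PiRNS} p(\pi) \cdot d_h\sups{\Mbar, \pi}(s,a)$ for the mixture occupancies, so that the goal is to show $d_h\sups{\Mbar, \pi^\st}(s) = \dbar_h(s)$ and $d_h\sups{\Mbar, \pi^\st}(s,a) = \dbar_h(s,a)$ for every $h$. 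The key observation driving the argument is that the defining equation \eqref{eq:define-pistar} is precisely the statement that $\pi_h^\st(a \mid s)$ equals the mixture-weighted average of the individual policies' action distributions, where the mixture weights are the posterior probabilities $p(\pi)\, d_h\sups{\Mbar,\pi}(s) / \dbar_h(s)$ of having followed $\pi$ conditioned on reaching $s$ at step $h$. This is the standard fact that a mixture of Markov policies can be ``flattened'' into a single Markov policy that reproduces the mixture's state-action occupancy.

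First I would establish the base case $h = 1$: since all policies share the same initial distribution $\mu$ and the state at step $1$ does not depend on the policy, we have $d_1\sups{\Mbar,\pi}(s) = \mu(s)$ for every $\pi$, hence $d_1\sups{\Mbar,\pi^\st}(s) = \mu(s) = \dbar_1(s)$. For the state-action occupancy at $h=1$, I would compute $d_1\sups{\Mbar,\pi^\st}(s,a) = \mu(s)\,\pi_1^\st(a \mid s)$ and substitute \eqref{eq:define-pistar}; the factors $\dbar_1(s) = \mu(s)$ cancel against the denominator, leaving exactly $\sum_\pi p(\pi)\, \mu(s)\, \pi_1(a\mid s) = \dbar_1(s,a)$. (On states $s$ with $\dbar_h(s) = 0$, both sides are zero, so these are handled trivially and the value of $\pi_h^\st(\cdot \mid s)$ there is irrelevant.) For the inductive step, I would assume $d_h\sups{\Mbar,\pi^\st}(s) = \dbar_h(s)$ for all $s$ and prove the state-action identity at layer $h$ and then the state identity at layer $h+1$. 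The state-action identity follows just as in the base case: $d_h\sups{\Mbar,\pi^\st}(s,a) = d_h\sups{\Mbar,\pi^\st}(s)\,\pi_h^\st(a\mid s) = \dbar_h(s)\,\pi_h^\st(a\mid s)$, and plugging in \eqref{eq:define-pistar} cancels $\dbar_h(s)$ to yield $\dbar_h(s,a)$. The state identity at $h+1$ then follows by pushing the step-$h$ state-action occupancy through the (policy-independent) transition kernel:
\begin{align}
  d_{h+1}\sups{\Mbar,\pi^\st}(s') = \sum_{s,a} d_h\sups{\Mbar,\pi^\st}(s,a)\, P_h(s' \mid s,a) = \sum_{s,a} \dbar_h(s,a)\, P_h(s'\mid s,a) = \dbar_{h+1}(s'),
\end{align}
where the last equality uses linearity of the occupancy recursion over the mixture $p$.

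Finally, the value-function consequence would follow by writing the expected return as a linear functional of the state-action occupancy measure, $V_1\sups{\Mbar,\pi} = \sum_{h=1}^H \sum_{s,a} d_h\sups{\Mbar,\pi}(s,a)\, \Empi[\Mbar]{r_h \mid s,a}$, and noting that since the reward distribution $R$ is fixed across the mixture, the matched occupancies $d_h\sups{\Mbar,\pi^\st}(s,a) = \dbar_h(s,a) = \sum_\pi p(\pi)\, d_h\sups{\Mbar,\pi}(s,a)$ immediately give $V_1\sups{\Mbar,\pi^\st} = \sum_\pi p(\pi)\, V_1\sups{\Mbar,\pi}$ by linearity. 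I expect the main subtlety—rather than a genuine obstacle—to be bookkeeping around states with zero mixture occupancy, where $\pi_h^\st$ is unconstrained by \eqref{eq:define-pistar}; one must check that these states contribute zero to both sides and therefore never affect the matched occupancies or the value, which holds because unreachable states carry no occupancy mass forward. The finite-support assumption on $p$ and finiteness of $\MS, \MA$ ensure all sums are well-defined and finite, so no measure-theoretic care is needed.
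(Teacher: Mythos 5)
Your proposal is correct and follows essentially the same route as the paper: induction on $h$ using the occupancy recursion, with the defining equation \eqref{eq:define-pistar} causing the mixture weights to cancel, zero-occupancy states handled trivially, and the value identity obtained from the occupancy-measure representation of the return. The only cosmetic difference is that you split the inductive step into two sub-steps (state-action at $h$, then state at $h+1$) where the paper performs the same cancellation in a single chain of equalities.
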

\begin{proof}[\pfref{lem:make-pol-randomized}]
We drop the superscript $\ol M$ in all relevant quantities throughout the proof. We use induction on $h$, noting that the base case $h=1$ is immediate since $d_1^\pi$ is identical for all $\pi \in \PiRNS$. Fix $p\in\Delta(\PiRNS)$, and let $\pistar$ be chosen as in \eqref{eq:define-pistar}. Assuming that the statement of the lemma holds at step $h-1$, we compute
  \begin{align}
    d_h^{\pi^\st}(s) =& \sum_{\substack{s', a':\\ d_{h-1}^{\pi^\st}(s') > 0}} d_{h-1}^{\pi^\st}(s') \cdot \pi_{h-1}^\st(a' | s') \cdot P_{h-1}(s | s',a')\nonumber\\
    =& \sum_{\substack{s',a':\\ d_{h-1}^{\pi^\st}(s') > 0}} \left( \sum_{\pi'} p(\pi') \cdot d_{h-1}^{\pi'}(s') \right) \cdot \sum_{\pi} \frac{p(\pi) \cdot d_{h-1}^\pi(s')}{\sum_{\pi'} p(\pi') \cdot d_{h-1}^{\pi'}(s)} \cdot \pi_{h-1}(a'|s') \cdot P_{h-1}(s | s',a')\nonumber\\
    =& \sum_\pi p(\pi) \cdot \sum_{\substack{s', a':\\ d_{h-1}^{\pi^\st}(s') > 0}} d_{h-1}^\pi(s') \cdot \pi_{h-1}(a'|s') \cdot P_{h-1}(s | s',a')\nonumber\\
    =& \sum_\pi p(\pi) \cdot \sum_{s', a'} d_{h-1}^\pi(s') \cdot \pi_{h-1}(a' | s') \cdot P_{h-1}(s | s', a')\nonumber\\
    =& \sum_\pi p(\pi) \cdot d_h^\pi(s)\nonumber,
  \end{align}
  where the second-to-last inequality follows since if $d_{h-1}^{\pi^\st}(s') = 0$, then (using the inductive hypothesis) for all $\pi$, $p(\pi) \cdot d_{h-1}^\pi(s') = 0$. 
The above chain of equalities then  completes the inductive step. It then follows immediately from the definition of $\pi^\st$ that $d_h^{\pi^\st}(s,a) = \sum_{\pi \in \PiRNS} p(\pi) \cdot d_h^\pi(s,a)$.

  The final statement regarding the value functions follows since, for all policies $\pi$,
  \begin{align}
V_1^\pi = \sum_{h=1}^H \sum_{(s,a) \in \MS \times \MA} d_h^\pi(s,a) \cdot r_h(s,a)\nonumber.
  \end{align}
\end{proof}

The remaining lemmas establish certain technical properties for the policy $\pistar\in\PiRNS$ constructed in \cref{lem:make-pol-randomized}.
\begin{lemma}
  \label{lem:p-pistar-hellinger}
There is a constant $C > 0$ so that the following holds.    Consider any finite-horizon MDP $\ol M = (\MS, H, \MA, P\sups\Mbar, R\sups\Mbar, \mu\sups\Mbar)$ with finite state and action spaces $\MS, \MA$. Let $\PiRNS$ denote the set of randomized nonstationary policies of $\Mbar$, and let $p \in \Delta(\PiRNS)$ be a distribution of finite support. Consider any policy $\pi^\st \in \PiRNS$ satisfying  \eqref{eq:define-pistar} for $p$. Then for any MDP $\MM = (\MS, H, \MA, P\sups{M}, R\sups{M}, \mu\sups{M})$,
  \begin{align}
\E_{\pi \sim p} \left[ \hell{M(\pi)}{\ol M(\pi)}\right] \leq CH \log H \cdot \hell{M(\pi^\st)}{\ol M(\pi^\st)}\nonumber.
  \end{align}
\end{lemma}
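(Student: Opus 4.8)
The plan is to sandwich both sides between the same occupancy-weighted sum of per-layer kernel discrepancies, measured under the reference MDP $\Mbar$ and the single policy $\pi^\st$. Write $\nu_h^M(\cdot\mid s,a)$ for the joint law of the reward $r_h$ and next state $s_{h+1}$ at layer $h$ under $M$ (so $\nu_h^M=(R_h^M,P_h^M)$), and set $D_h(s,a):=\Dhels{\nu_h^M(\cdot\mid s,a)}{\nu_h^{\Mbar}(\cdot\mid s,a)}$. Because the policy is shared between $M(\pi)$ and $\Mbar(\pi)$, the one-step conditional Hellinger distance at layer $h$ equals $\E_{a\sim\pi_h(s_h)}[D_h(s_h,a)]$ (the shared action factor cancels; this is the equality case of \cref{lem:fdiv-factor}, reading off the action as the labeling map). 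I would first invoke the chain-rule bound for laws of sequential processes (Lemma A.13 of \cite{foster2021statistical}), applied with $\Mbar(\pi)$ as the reference measure, to get
\[
\Dhels{M(\pi)}{\Mbar(\pi)} \le C\log H\cdot\sum_{h=1}^H\E_{(s,a)\sim d_h^{\Mbar,\pi}}[D_h(s,a)].
\]
(One may assume the initial-state law is common to $M,\Mbar$, as in the application; otherwise its contribution $\Dhels{d_1^M}{d_1^{\Mbar}}$ is absorbed by data processing below.) Averaging over $\pi\sim p$ and pulling the average inside the occupancy, the linchpin is \cref{lem:make-pol-randomized}, which gives $\E_{\pi\sim p}[d_h^{\Mbar,\pi}(s,a)]=d_h^{\Mbar,\pi^\st}(s,a)$; hence $\E_{\pi\sim p}[\Dhels{M(\pi)}{\Mbar(\pi)}]\le C\log H\cdot\sum_h\E_{(s,a)\sim d_h^{\Mbar,\pi^\st}}[D_h(s,a)]$.

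It then remains to prove the matching lower bound for the single policy, namely $\sum_{h=1}^H\E_{(s,a)\sim d_h^{\Mbar,\pi^\st}}[D_h]\le C'H\cdot\Dhels{M(\pi^\st)}{\Mbar(\pi^\st)}$; multiplying the two bounds yields the claimed factor $C''H\log H$. I would establish this one layer at a time. Fix $h$ and apply data processing to the map sending a trajectory to $(s_h,a_h,r_h,s_{h+1})$, so $\Dhels{M(\pi^\st)}{\Mbar(\pi^\st)}$ dominates the squared Hellinger distance between the laws $\mathbb P=d_h^{M,\pi^\st}\otimes\nu_h^M$ and $\mathbb Q=d_h^{\Mbar,\pi^\st}\otimes\nu_h^{\Mbar}$ of that tuple. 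Computing the Hellinger affinity and factorizing the joint as $\mathrm{BC}(\mathbb P,\mathbb Q)=\sum_{s,a}\sqrt{d_h^{M,\pi^\st}d_h^{\Mbar,\pi^\st}}\,(1-\tfrac12 D_h(s,a))$ gives the exact identity
\[
\Dhels{\mathbb P}{\mathbb Q}=\Dhels{d_h^{M,\pi^\st}}{d_h^{\Mbar,\pi^\st}}+\sum_{s,a}\sqrt{d_h^{M,\pi^\st}(s,a)\,d_h^{\Mbar,\pi^\st}(s,a)}\,D_h(s,a).
\]

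The main obstacle is the second term: I want $\sum_{s,a}d_h^{\Mbar,\pi^\st}D_h$, but it appears weighted by $\sqrt{d_h^{M,\pi^\st}d_h^{\Mbar,\pi^\st}}$, which is small precisely on state-actions that $\Mbar$ visits but $M$ underweights. I would resolve this by splitting on $B=\{(s,a):d_h^{M,\pi^\st}<\tfrac14 d_h^{\Mbar,\pi^\st}\}$: off $B$ one has $\sqrt{d_h^{M,\pi^\st}d_h^{\Mbar,\pi^\st}}\ge\tfrac12 d_h^{\Mbar,\pi^\st}$, while the $\Mbar$-mass on $B$ is charged to $\Dhels{d_h^{M,\pi^\st}}{d_h^{\Mbar,\pi^\st}}$ (since $(\sqrt{d^{\Mbar}}-\sqrt{d^{M}})^2\ge\tfrac14 d^{\Mbar}$ on $B$), using $D_h\le 2$. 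Combining with $\Dhels{d_h^{M,\pi^\st}}{d_h^{\Mbar,\pi^\st}}\le\Dhels{M(\pi^\st)}{\Mbar(\pi^\st)}$ (again data processing, to the layer-$h$ occupancy) yields $\E_{d_h^{\Mbar,\pi^\st}}[D_h]\le C'\Dhels{M(\pi^\st)}{\Mbar(\pi^\st)}$ for every $h$; summing the $H$ layers produces the factor $H$. The delicate points are the constants in this good/bad split and confirming that summing over layers loses only a single factor of $H$ — which is genuinely necessary, as a deterministic chain MDP with maximal per-layer discrepancy shows that the per-layer sum can be $\Theta(H)$ while the trajectory Hellinger distance stays $O(1)$.
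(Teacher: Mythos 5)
Your proposal is correct and follows essentially the same route as the paper: the chain-rule decomposition of the trajectory Hellinger distance (Lemma A.13 of \citet{foster2021statistical}), the occupancy-matching identity $\E_{\pi\sim p}[d_h^{\sss{\Mbar},\pi}]=d_h^{\sss{\Mbar},\pi^\st}$ from \cref{lem:make-pol-randomized}, and a per-layer change-of-measure step costing a factor of $H$. The only difference is that where the paper cites Lemma A.9 of \citet{foster2021statistical} for that last step, you re-derive it via the Bhattacharyya factorization and the good/bad split on $\{d_h^{\sss{M},\pi^\st}<\tfrac14 d_h^{\sss{\Mbar},\pi^\st}\}$ — a correct, self-contained substitute for the same inequality.
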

\begin{proof}[\pfref{lem:p-pistar-hellinger}]
  For any $\pi \in \PiRNS$, a full observation $(r, \ocirc) \sim M(\pi)$ consists of the trajectory $(s_1, a_1, r_1, \ldots, s_H, a_H, r_H)$, where $s_1 \sim \mu\sups{M}$, $s_{h+1} \sim P_h\sups{M}(s_h, a_h)$ for $h \in [H-1]$, $r_h \sim R_h\sups{M}(s_h, a_h)$ for $h \in [H]$, and $a_h \sim \pi_h(s_h)$ for $h \in [H]$. We use the notation $\tau_{1:h}$ to denote the portion of a trajectory consisting of $(s_1, a_1, r_1, \ldots, s_h, a_h, r_h)$.

  We use $\BP\sups{M,\pi}$ to denote the distribution of the trajectory $\tau_H \sim M(\pi)$, and $\BP\sups{\Mbar,\pi}$ to denote the distribution of the trajectory $\tau_H \sim \Mbar(\pi)$. We use $\E\sups{M, \pi}[\cdot]$ and $\E\sups{\Mbar, \pi}[\cdot]$ to denote the corresponding expectations. 
By Lemma A.13 of \citet{foster2021statistical}, it holds that, for some constant $C > 0$,
  \begin{align}
    & \E_{\pi \sim p} \left[ \hell{M(\pi)}{\ol M(\pi)} \right]\nonumber\\
    \leq & C \log(H) \cdot \E_{\pi \sim p} \E\sups{\ol M, \pi} \left[ \sum_{h=1}^H  \hell{\BP\sups{M, \pi}(s_h|\tau_{1:h-1}) }{\BP\sups{\ol M, \pi}(s_h | \tau_{1:h-1})}\right] \nonumber\\
    & + C \log(H) \cdot \E_{\pi \sim p} \E\sups{\ol M, \pi} \left[ \sum_{h=1}^H  \hell{\BP\sups{M, \pi}(r_h|\tau_{1:h-1}, s_h, a_h) }{\BP\sups{\ol M, \pi}(r_h | \tau_{1:h-1}, r_h, a_h)}\right] \nonumber\\
    = & C \log(H) \cdot \hell{\mu\sups{M}}{\mu\sups{\Mbar}} + C \log(H) \cdot \E_{\pi \sim p} \E^{\ol M, \pi} \left[ \sum_{h=1}^{H-1} \hell{P_h\sups{M}(s_h, a_h)}{P_h\sups{\ol M}(s_h, a_h)} \right] \nonumber\\
    & + C \log(H) \cdot \E_{\pi \sim p} \E\sups{\ol M, \pi} \left[ \sum_{h=1}^H \hell{R_h\sups{M}(s_h, a_h)}{R_h\sups{\ol M}(s_h, a_h)} \right]\label{eq:split-p-mbar}.
  \end{align}
  By \cref{lem:make-pol-randomized} and the definition of $\pi^\st$, for each $h \in [H], s \in \MS, a \in \MA$, it holds that
  \begin{align}
\E_{\pi \sim p} \left[ d_h\sups{\Mbar, \pi}(s,a) \right] = d_h\sups{\Mbar, \pi^\st}(s,a)\nonumber.
  \end{align}
  Thus, we may replace the expectation over $\pi \sim p,\ (s_h, a_h) \sim \ol M(\pi)$ in (\ref{eq:split-p-mbar}) with $(s_h, a_h) \sim \ol M(\pi^\st)$, and obtain
  \begin{align}
    & \E_{\pi \sim p} \left[ \hell{M(\pi)}{\ol M(\pi)}\right]\nonumber\\
    \leq & C \log(H) \cdot \left(\hell{\mu\sups{M}}{\mu\sups{\Mbar}} +  \E\sups{\ol M, \pi^\st} \left[ \sum_{h=1}^{H-1} \hell{P_h\sups{M}(s_h, a_h)}{P_h\sups{\ol M}(s_h, a_h)}\right] \right.\nonumber\\
    & \left.+ \E\sups{\ol M, \pi^\st} \left[ \sum_{h=1}^H \hell{R_h\sups{M}(s_h, a_h)}{R_h\sups{\ol M}(s_h, a_h)}\right]\right)\nonumber.
  \end{align}
  By \citet[Lemma A.9]{foster2021statistical} and the data processing inequality, we have that: %
  \begin{align}
    \E\sups{\ol M, \pi^\st} \left[ \sum_{h=1}^{H-1} \hell{P_h\sups{M}(s_h, a_h)}{P_h\sups{\ol M}(s_h, a_h)}\right] \leq & 4H \cdot \hell{M(\pi^\st)}{\ol M(\pi^\st)},\nonumber\\
    \E^{\ol M, \pi^\st} \left[ \sum_{h=1}^H \hell{R_h\sups{M}(s_h, a_h)}{R_h\sups{\ol M}(s_h, a_h)}\right] \leq & 4H \cdot \hell{M(\pi^\st)}{\ol M(\pi^\st)},\nonumber\\
    \hell{\mu\sups{M}}{\mu\sups\Mbar} \leq & \hell{M(\pi^\st)}{\Mbar(\pi^\st)}\nonumber.
  \end{align}
  It then follows that, for some constant $C > 0$,
  \begin{align}
\E_{\pi \sim p}\left[ \hell{M(\pi)}{\ol M(\pi)} \right] \leq C \cdot H \log(H) \cdot \hell{M(\pi^\st)}{\ol M(\pi^\st)}\nonumber,
  \end{align}
  as desired.
\end{proof}

\begin{lemma}
  \label{prop:equiv-markov}
There is a constant $C > 0$ so that the following holds. Consider any model class $\MM$ consisting of MDPs of fixed horizon $H$, finite state space $\MS$, finite action space $\MA$, and cumulative rewards bounded by $[0,1]$.  Let $\PiRNS$ be the class of randomized nonstationary policies. Consider any $\ol M \in \MM$ and finite-support distribution $p \in \Delta(\PiRNS)$, and let $\pi^\st \in \PiRNS$ denote any policy satisfying \eqref{eq:define-pistar} for $\Mbar$ and $p$. Then for any $\gamma > 0$,
\begin{align}
  & \sup_{M \in \MM} \left\{ \fm(\pim) - \fm(\pi^\st) - \gamma \cdot \hell{M(\pi^\st)}{\ol M(\pi^\st)} \right\}\nonumber\\
\leq     & \frac{CH\log H}{\gamma} +  \sup_{M \in \MM} \E_{\pi \sim p} \left[ \fm(\pim) - \fm(\pi) - \frac{\gamma}{CH\log H} \cdot \hell{M(\pi)}{\ol M(\pi)} \right]  \label{eq:dec-nodist}.
  \end{align}
\end{lemma}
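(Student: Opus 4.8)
The plan is to establish the inequality for each fixed $M \in \MM$ and then take the supremum over $M$ on both sides. Throughout I abbreviate the squared Hellinger quantities $D := \hell{M(\pi^\st)}{\Mbar(\pi^\st)}$ and $A := \E_{\pi \sim p}[\hell{M(\pi)}{\Mbar(\pi)}]$, and I lean on two facts established earlier: (i) by \cref{lem:make-pol-randomized}, $\pi^\st$ reproduces the occupancies of the mixture $p$ \emph{under the reference model} $\Mbar$, so in particular $\fm[\Mbar](\pi^\st) = \E_{\pi \sim p}[\fm[\Mbar](\pi)]$; and (ii) by \cref{lem:p-pistar-hellinger}, $A \leq CH\log H \cdot D$. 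Since the optimal-value term $\fm(\pim)$ appears identically on both sides of \eqref{eq:dec-nodist}, the entire difference between the two reward contributions is $\E_{\pi \sim p}[\fm(\pi)] - \fm(\pi^\st)$, so the whole argument reduces to controlling this quantity against the Hellinger penalties.

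First I would decompose the reward mismatch. The essential difficulty is that $\pi^\st$ matches occupancies only under $\Mbar$, not under the worst-case $M$, so I cannot directly equate $\fm(\pi^\st)$ with $\E_{\pi\sim p}[\fm(\pi)]$; instead I insert $\Mbar$:
\begin{align*}
\E_{\pi \sim p}[\fm(\pi)] - \fm(\pi^\st) = {} & \E_{\pi \sim p}[\fm(\pi) - \fm[\Mbar](\pi)] + \big( \fm[\Mbar](\pi^\st) - \fm(\pi^\st) \big) \\
& + \big( \E_{\pi \sim p}[\fm[\Mbar](\pi)] - \fm[\Mbar](\pi^\st) \big).
\end{align*}
The last term vanishes by fact (i). For the remaining terms, since the cumulative reward lies in $[0,1]$ and is a deterministic function of the observation, I have $|\fm(\pi) - \fm[\Mbar](\pi)| \leq \Dtv{M(\pi)}{\Mbar(\pi)} \leq \Dhel{M(\pi)}{\Mbar(\pi)}$ for every $\pi$. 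Applying this at $\pi^\st$ gives $\sqrt{D}$, and applying it inside the expectation together with Jensen's inequality ($\E[\sqrt{\cdot}] \leq \sqrt{\E[\cdot]}$) gives $\E_{\pi\sim p}[\Dhel{M(\pi)}{\Mbar(\pi)}] \le \sqrt{A}$. Hence $\E_{\pi \sim p}[\fm(\pi)] - \fm(\pi^\st) \leq \sqrt{A} + \sqrt{D}$.

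Finally I would assemble the penalty trade-off. Substituting the reward decomposition, the LHS of \eqref{eq:dec-nodist} for a fixed $M$ equals $\E_{\pi\sim p}[\fm(\pim) - \fm(\pi)]$ plus the error $\big(\E_{\pi\sim p}[\fm(\pi)] - \fm(\pi^\st)\big) - \gamma D \le \sqrt A + \sqrt D - \gamma D$, so it suffices to show
\[
\sqrt{A} + \sqrt{D} - \gamma D \;\leq\; \frac{C'H\log H}{\gamma} - \frac{\gamma}{C'H\log H}\,A
\]
for a suitable constant $C'$; adding $\E_{\pi\sim p}[\fm(\pim) - \fm(\pi)]$ to both sides and taking $\sup_M$ then yields the claim. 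Here I would pick $C'$ to be a large enough multiple of $C$ so that fact (ii) gives $\frac{\gamma}{C'H\log H}A \le \frac{\gamma}{2}D$, absorbing the averaged-Hellinger term into $-\gamma D$ to leave $-\frac{\gamma}{2}D$, and I would bound $\sqrt A + \sqrt D \le (1 + \sqrt{CH\log H})\sqrt D =: B\sqrt D$ by fact (ii) again. The residual inequality $B\sqrt D - \frac{\gamma}{2}D \le \frac{B^2}{2\gamma}$ is elementary (maximize the concave quadratic in $\sqrt D$), and $\frac{B^2}{2\gamma} \le \frac{C'H\log H}{\gamma}$ for $C'$ large. The main obstacle is the first step: correctly isolating the $M$-versus-$\Mbar$ reward mismatch and seeing that it is controlled by $\sqrt{D}$ rather than something uncontrollable, since this is precisely where the restriction to proper reference models $\Mbar \in \MM$ and the occupancy-matching property of $\pi^\st$ are both indispensable.
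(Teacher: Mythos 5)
Your proof is correct and takes essentially the same route as the paper's: both arguments insert the reference model at the two ends via the occupancy-matching identity $\fmbar(\pi^\st) = \E_{\pi\sim p}[\fmbar(\pi)]$ from \cref{lem:make-pol-randomized}, bound the $M$-versus-$\Mbar$ reward mismatch by the (root) Hellinger distance, and invoke \cref{lem:p-pistar-hellinger} to relate $\E_{\pi\sim p}[\hell{M(\pi)}{\Mbar(\pi)}]$ to $\hell{M(\pi^\st)}{\Mbar(\pi^\st)}$. The only difference is bookkeeping: the paper applies AM--GM termwise at scale $\gamma$ and rescales $\gamma\mapsto\gamma C'H\log H$ at the end, whereas you collect the total error $\sqrt{A}+\sqrt{D}$ and maximize the concave quadratic once --- the two are equivalent.
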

An immediate consequence of \cref{prop:equiv-markov} is that
\begin{align}
  &  \inf_{\pi \in \PiRNS} \sup_{M \in \MM} \left\{ \fm(\pim) - \fm(\pi) - \gamma \cdot \hell{M(\pi)}{\ol M(\pi)} \right\}\nonumber\\
  \leq & \frac{CH\log H}{\gamma} + \inf_{p \in \Delta(\PiRNS)} \sup_{M \in \MM} \E_{\pi \sim p} \left[ \fm(\pim) - \fm(\pi) - \frac{\gamma}{CH\log H} \cdot \hell{M(\pi)}{\ol M(\pi)} \right] \nonumber.
  \end{align}
  \begin{proof}[\pfref{prop:equiv-markov}]
    Consider any $\Mbar \in \MM$, finite-support $p \in \Delta(\PiRNS)$, and let $\pi^\st$ be defined as in the statement of the lemma. 
    \cref{lem:make-pol-randomized} gives that $\fmbar(\pi^\st) = \E_{\pi \sim p} \left[ \fmbar(\pi) \right]$. Let $C$ be the constant from Lemma \ref{lem:p-pistar-hellinger}, and let $C' = C + \frac 12$.  Then for any $\gamma > 0$,
  \begin{align}
    & \sup_{M \in \MM} \fm(\pim) - \fm(\pi^\st) - C'H \log H \cdot \gamma \cdot \hell{M(\pi^\st)}{\ol M(\pi^\st)} \nonumber\\
    \leq & \sup_{M \in \MM} \fm(\pim) - f\sups{\ol M}(\pi^\st) - C'H \log H \cdot \gamma \cdot \hell{M(\pi^\st)}{\ol M(\pi^\st)} + \frac{1}{2\gamma} + \frac{\gamma}{2} \cdot \hell{M(\pi^\st)}{\ol M(\pi^\st)}\nonumber\\
    = & \sup_{M \in \MM} \fm(\pim) - f\sups{\ol M}(\pi^\st) - CH \log H \cdot \gamma \cdot \hell{M(\pi^\st)}{\ol M(\pi^\st)} + \frac{1}{2\gamma}\nonumber\\
    \leq & \sup_{M \in \MM} \E_{\pi \sim p} \left[ \fm(\pim) -f\sups{\ol M}(\pi) - \gamma \cdot \hell{M(\pi)}{\ol M(\pi)} \right] + \frac{1}{2\gamma} \label{eq:switch-to-p}\\
    \leq & \sup_{M \in \MM} \E_{\pi \sim p} \left[ \fm(\pim) - \fm(\pi) - \gamma \cdot \hell{M(\pi)}{\ol M(\pi)} + \frac{1}{2\gamma} + \frac{\gamma}{2} \cdot \hell{M(\pi)}{\ol M(\pi)} \right] + \frac{1}{2\gamma} \nonumber\\
    =& \sup_{M \in \MM} \E_{\pi \sim p} \left[ \fm(\pim) - \fm(\pi) - \frac{\gamma}{2} \cdot \hell{M(\pi)}{\ol M(\pi)} \right] + \frac{1}{\gamma}\label{eq:p-min-complete}.
  \end{align}
  where (\ref{eq:switch-to-p}) uses Lemma \ref{lem:p-pistar-hellinger}. %
  The statement of the proposition follows by replacing $\gamma$ with $\gamma \cdot C' H \log H$. 
\end{proof}

\section{Proofs for upper bounds from  \cref{sec:curse}}
\label{sec:proofs_curse}
In this section we prove \cref{thm:curse_ub}, which gives an upper bound for learning equilibria for CCE and CE instances in the \maf in a way that avoids the curse of multiple agents, i.e., avoids exponential scaling with the number of players $K$. %
In \cref{sec:ma-exo-define}, we describe the algorithm (\cref{alg:maexo}) used to prove \cref{thm:curse_ub}, which is based on the idea of \emph{exploration-by-optimization}, used previously in \cite{foster2022complexity,lattimore2021mirror}. In \cref{sec:ma-ir,sec:ir-exo,sec:exo-alg} we prove \cref{thm:curse_ub}; our proofs roughly follow those of \cite{foster2022complexity}, but require some subtle modifications to account for the multi-agent nature of our problem, as well as the more general notion of deviation sets $\Dev$ that we study. 

\subsection{The multi-agent exploration-by-optimization objective}
\label{sec:ma-exo-define}
We begin by describing the algorithm, \emph{Multi-Agent Exploration-by-Optimization} (\maexo; \cref{alg:maexo}) used to prove \cref{thm:curse_ub}. The algorithm is a multi-agent counterpart to the exploration-by-optimization (\exoalg) algorithm given \cite{foster2022complexity}. At a high level, \maexo (as well as its precursor \exoalg) is a variant of \texttt{EXP3}, which applies the exponential weights algorithm to a sequence of reward estimators which act as importance-weighted estimates for the true reward function. However, unlike \texttt{EXP3} and \exoalg, \maexo does not apply exponential weights to agents' pure policies themselves, but rather to their potential deviations $\Dev$.

In particular, \maexo operates over $T$ rounds of interaction with the environment. At each round $t \in [T]$, the algorithm first computes, for each player $k$, a \emph{reference distribution} $q_k\^t \in \Delta(\Dev)$ over their deviation space $\Dev$, according to an exponential weights update given a sequence of vectors $\wh f_k\^1, \ldots, \wh f_k\^{t-1}$ constructed by the algorithm in previous rounds (\cref{line:qk-mwu}). Roughly speaking, for $s\leq{}t-1$, the entries $\wh f_k\^s(\dev)$, $\dev \in \Dev$, of these vectors can be interpreted as the potential gain in value that agent $k$ could receive by deviating to $\dev$, given adversarial choices of the other agents' decisions. Accordingly, the reference distribution $q_k\^t$ will put more mass on deviations which lead to larger gains in value.

Next, in \cref{line:solve-exo}, the players jointly solve an optimization problem. To define this optimization problem, we introduce some notation. 
For each $k \in [K]$, let $\MG_k$ denote the set of all functions $g_k : \Pi_k' \times \Sigma \times \MO \ra \BR$, and let $\MG = \MG_1 \times \cdots \times \MG_K$.
 Given $q \in \prod_{k=1}^K \Delta(\Dev)$, $\eta > 0$, $\pi \in \Pi$,  $g \in \MG$, $\pi^\st = (\pi_1^\st, \ldots, \pi_K^\st) \in \prod_{k=1}^K \Dev$, and $\ M \in \MM$, define
\begin{align}
 \Gamma_{q,\eta}(\pi,g; \pi^\st, M) := & \E_{\sigma \sim \pi} \left[ \sum_{k=1}^K \fm_k(\Sw(\pi_k^\st, \sigma)) - \fm_k(\sigma) \right]  \label{eq:define-gamma}\\
  &+ \frac{1}{\eta} \cdot \sum_{k=1}^K \E_{\sigma \sim \pi, o \sim M(\sigma)} \E_{\pi_k' \sim q_k} \left[ \exp \left( \frac{\eta}{\pi(\sigma)} \cdot \left( g_k(\pi_k'; \sigma, o) - g_k(\pi_k^\st; \sigma, o)\right)\right) - 1 \right]\nonumber.
\end{align}
With this definition, the optimization problem solved in \cref{line:solve-exo} of \maexo is as follows:
\begin{align}
(\pi\^t, g\^t) \gets \argmin_{\pi \in \Pi, g \in \MG} \sup_{M \in \MM, \pi^\st \in \prod_{k=1}^K \Dev} \Gamma_{q\^t, \eta}(\pi, g; \pi^\st, M)\label{eq:exo-solve-text}.
\end{align}
The interpretation of the objective \cref{eq:define-gamma} and the optimization problem \cref{eq:exo-solve-text} is as follows. Roughly speaking, for each $k \in [K], \dev \in \Dev, \sigma \in \Sigma, o \in \MO$, the value $g_k(\dev; \sigma, o)$ for $g\in\cG_k$ can be interpreted as an estimate of player $k$'s gain in value by deviating to $\dev$ under joint decision profile $\sigma$, under an unknown model $M$ which is ``consistent with'' the decision-observation pair $(\sigma, o)$.  
Then, by solving \cref{eq:exo-solve-text}, the algorithm wishes to find a joint decision $\pi\^t \in \Pi$ and estimator $g\^t = (g_1\^t, \ldots, g_K\^t) \in \MG$, which, for each player $k \in [K]$, satisfies the following two properties:
\begin{itemize}
\item First, corresponding to the first term in \cref{eq:define-gamma}, for a worst-case unknown model $M$ and an unknown deviation $\pi_k^\st$, it should not be possible for player $k$ to gain much value by deviating to $\pi_k^\st$ given the policy $\pi\^t$. Here $\pi_k^\st$ should be interpreted as the best deviation in hindsight at the termination of the algorithm.
\item Second, corresponding to the second term in \cref{eq:define-gamma}: $\pi\^t$ and $ g\^t_k$ should be chosen so that with high probability under $\sigma \sim \pi\^t$, $g_k\^t$ does not underestimate the value gain in  deviating to $\pi_k^\st$ as compared to a sample $\dev$ from the reference distribution $q_k\^t$. %
  The second term in \cref{eq:define-gamma} can be viewed as a term that regularizes the adversarial choice of $\pi_k^\st$, analogously to the term subtracting squared Hellinger distance in the offset DEC (see \cref{eq:define-deco}): in particular, if $\pi_k^\st$ has significantly high value under the estimate $g_k$, then this term will be very negative, canceling out the (potentially large) first term. 
\end{itemize}
\dfcomment{maybe add a brief comparison explaining how this differs from running independent single-agent exo?}\noah{did so}
Given $(\pi\ind{t},g\ind{t})$ computed in \eqref{eq:exo-solve-text}, \cref{alg:maexo} samples a decision $\sigma\^t \sim \pi\^t$ and receives an observation $o\^t$ from the true model. Finally, in \cref{line:define-fhat}, players construct their reward estimators $\wh f_k\^t$ (to be used in future iterations $t' > t$ to construct $q_k\^{t'}$) using $g_k\^t(\cdot;\sigma\^t, o\^t)$. \noah{todo if time, remove the importance weighting} %
Once all $T$ rounds conclude, the algorithm outputs the joint decision $\wh \pi$ which is the uniform average over the $T$ pure decisions $\sigma\ind{1},\ldots,\sigma\ind{T}$.
We remark that \cref{alg:maexo} is different from having each player run the exploration-by-optimization algorithm of \cite{foster2022complexity}: in the latter, agents each individually optimize their own objective, in contrast to the optimization problem in \cref{eq:exo-solve-text}, which is solved for all agents simultaneously. This feature of \maexo allows us to obtain a guarantee scaling with $\decoreg(\co(\sJ))$, which can be arbitrarily smaller than what one obtains by using the approach of \cite{foster2022complexity} (see \cref{prop:multi-single-separation}). 

In \cref{def:exo} below, we formalize the  value of the minimax objective \cref{eq:exo-solve-text} computed in the course of \cref{alg:maexo}.
\begin{algorithm}[ht]
  \setstretch{1.3}
  \begin{algorithmic}[1]
    \State \textbf{parameters}: Learning rate $\eta > 0$.
    \State Initialize $\wh f_k\^0(\dev) := 0$ for all $k \in [K]$, $\dev \in \Dev$. 
    \For{$t = 1, 2, \ldots T$}
    \State \label{line:qk-mwu} For each agent $k \in [K]$, define $q_k\^t \in \Delta(\Pi_k')$ via exponential weights update: for $\dev\in \Pi_k'$,
      \begin{align}
        q_k\^t(\dev) := \frac{\exp\left( \eta \sum_{i=1}^{t-1} \wh f_k\^i(\dev)\right)}{\sum_{\pi_k'' \in \Dev} \exp \left( \eta \sum_{i=1}^{t-1} \wh f_k\^i(\pi_k'') \right) }\nonumber.
      \end{align}
    \State Define $q\^t = q_1\^t \times \cdots \times q_K\^t$.  The players jointly solve the following objective: \hfill\algcommentlight{\eqref{eq:define-gamma}} \label{line:solve-exo} 
      \begin{align}
(\pi\^t, g\^t) \gets \argmin_{\pi \in \Pi, g \in \MG} \sup_{M \in \MM, \pi^\st \in \Pi'} \Gamma_{q\^t, \eta}(\pi,g;\pi^\st, M) \nonumber.
      \end{align}
    \State Sample $\sigma\^t \sim \pi\^t$, play $\sigma\^t$, and observe $o\^t \sim M^\st(\sigma\^t)$.
    \State Each player $k \in [K]$ constructs their reward estimator $\wh f_k\^t$ as follows: for $\dev \in \Pi_k'$,\label{line:define-fhat}
    \begin{align}
\wh f_k\^t(\dev) = \frac{g_k\^t(\dev; \sigma\^t, o\^t)}{\pi\^t(\sigma\^t)}\nonumber.
    \end{align}
    \EndFor
    \State \textbf{return} joint decision $\wh \pi := \frac{1}{T} \sum_{t=1}^T \indic_{\sigma\^t}$. \label{line:define-whpi}
\end{algorithmic}
\caption{Multi-Agent Exploration by Optimization (\maexo)}
\label{alg:maexo}
\end{algorithm}

\begin{definition}[Exploration-by-optimization objective]
  \label{def:exo}
  Consider any instance $\sJ = \instma$ satisfying \cref{ass:ce-convexity}. For any scale parameter $\eta > 0$ and distribution $q \in \prod_{k=1}^K \Delta(\Dev)$, define
  \begin{align}
\exo[\eta](\sJ, q) = \inf_{\pi \in \Pi, g \in \MG} \sup_{M \in \MM, \pi^\st \in \prod_{k=1}^K \Dev} \Gamma_{q, \eta}(\pi, g; \pi^\st, M)\nonumber,
  \end{align}
  and let $\exo(\sJ) := \sup_{q \in \prod_{k=1}^K \Delta(\Dev)} \exo(\sJ, q)$. 
\end{definition}

To prove \cref{thm:curse_ub}, we first (\cref{sec:exo-alg}) bound the performance of \cref{alg:maexo} in terms of $\exo[\eta](\sJ)$. Following this, in \cref{sec:ma-ir} and \cref{sec:ir-exo}, we will upper bound $\exo[\eta](\sJ)$ by $\decopac[\gamma](\sJ)$ for an appropriate choice of $\gamma$, using a quantity we call the \emph{multi-agent (parametrized) information ratio} as an intermediary. Finally, in \cref{sec:curse_ub_proof}, we put these pieces together and prove \cref{thm:curse_ub}. 

\subsection{Bounding the performance of \cref{alg:maexo}}
\label{sec:exo-alg}

The following result bounds the performance of \cref{alg:maexo} (namely, the quantity $\hmstar(\wh \pi)$) in terms of $\exo[\eta](\sJ)$. 
\begin{lemma}
  \label{lem:alg-perf-exo}
    For any $\eta > 0$, \cref{alg:maexo} ensures that for all $\delta > 0$, with probability at least $1-\delta$,
    \begin{align}
      \hmstar(\wh \pi) = \sum_{k=1}^K \max_{\dev \in \Dev} \fmstar_k(\Sw(\dev, \wh\pi)) - \fmstar_k(\wh \pi) \leq \exo[\eta](\sJ) +  \frac{2}{T\eta} \cdot \sum_{k=1}^K \log \left( \frac{K \cdot |\Dev|}{\delta} \right)   \nonumber.
    \end{align}
  \end{lemma}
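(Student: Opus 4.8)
The plan is to prove \cref{lem:alg-perf-exo} via a two-part argument combining a regret bound for the exponential weights updates with the definition of the exploration-by-optimization objective. The key observation is that the algorithm \maexo runs, for each player $k$, an exponential weights update (\cref{line:qk-mwu}) over the deviation space $\Dev$ using the reward estimators $\wh f_k\^i$, and solves the joint minimax objective \eqref{eq:exo-solve-text} to produce decisions and estimators. I would structure the proof so that the first term $\exo[\eta](\sJ)$ emerges from the per-round value of the solved objective, and the second term $\frac{2}{T\eta}\sum_k \log(K|\Dev|/\delta)$ emerges from the standard exponential weights regret bound plus a high-probability concentration step.

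First I would establish a regret bound for the exponential weights updates. For each player $k$, the update in \cref{line:qk-mwu} is exponential weights applied to the reward vectors $\wh f_k\^1, \ldots, \wh f_k\^{t-1}$ over $\Dev$. Using the standard potential/entropy argument (e.g., the inequality $\exp(x) - 1 \leq x + \frac{\eta}{2}(\cdots)$ combined with the definition of $q_k\^t$), I would derive that for each fixed comparator $\dev \in \Dev$,
\begin{align}
\sum_{t=1}^T \E_{\pi_k'\sim q_k\^t}[\wh f_k\^t(\pi_k')] \geq \sum_{t=1}^T \wh f_k\^t(\dev) - \frac{\log|\Dev|}{\eta} - \frac{\eta}{2}\sum_{t=1}^T \E_{\pi_k'\sim q_k\^t}[(\wh f_k\^t(\pi_k'))^2]\nonumber,
\end{align}
or more precisely the variant that matches the $\exp(\cdot)-1$ structure appearing in $\Gamma_{q,\eta}$. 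The important point is that the exponentiated second term in \eqref{eq:define-gamma} is \emph{exactly} the quantity controlling the exponential weights stability, so the regret analysis and the \exo objective share the same algebraic form. I would then take a supremum over the comparator $\dev$, which is where the $\max_{\dev\in\Dev}$ in the statement of $\hmstar(\wh\pi)$ comes from.

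Next I would relate the solved objective value to the per-round suboptimality. Since $(\pi\^t, g\^t)$ solves \eqref{eq:exo-solve-text}, for every $M$ and every $\pi^\st$ we have $\Gamma_{q\^t,\eta}(\pi\^t, g\^t; \pi^\st, M) \leq \exo[\eta](\sJ)$. Specializing $M = \Mstar$ and summing over $t$, the first term of $\Gamma$ gives $\sum_t \E_{\sigma\sim\pi\^t}[\sum_k \fmstar_k(\Sw(\pi_k^\st,\sigma)) - \fmstar_k(\sigma)]$, which relates to $\hmstar(\wh\pi)$ via the definition $\wh\pi = \frac1T\sum_t \indic_{\sigma\^t}$ and linearity (\cref{ass:ce-convexity}). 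The second (exponentiated) term of $\Gamma$ telescopes against the exponential weights regret terms once I substitute the reward estimator definition $\wh f_k\^t(\dev) = g_k\^t(\dev;\sigma\^t,o\^t)/\pi\^t(\sigma\^t)$ from \cref{line:define-fhat}; the importance weight $1/\pi\^t(\sigma\^t)$ is precisely what converts the in-expectation $\Gamma$ term into the realized $\wh f_k\^t$ quantities.

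The main obstacle I anticipate is the high-probability concentration step: passing from the in-expectation objective $\Gamma$ (which is defined via $\E_{\sigma\sim\pi\^t, o\sim M(\sigma)}$) to the realized trajectory $(\sigma\^t, o\^t)$, while controlling the stochastic deviations of the importance-weighted estimators $\wh f_k\^t$. I would handle this with a martingale concentration argument, applying \cref{lem:chernoff-martingale} (the Chernoff martingale bound) to the sequence of random variables $\wh f_k\^t(\dev) - \E[\wh f_k\^t(\dev)\mid \filt\^{t-1}]$ for each fixed comparator, then union bounding over the $K$ players and the $|\Dev|$ comparators to obtain the $\log(K|\Dev|/\delta)$ factor. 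The $\exp(\cdot)-1$ structure in $\Gamma_{q,\eta}$ is well-suited to this, since the moment generating function appearing in \cref{lem:chernoff-martingale} is controlled by exactly the exponential term that \eqref{eq:define-gamma} already subtracts off. Carefully matching the learning rate $\eta$ and the importance-weighting normalization so that these two exponential quantities cancel (up to the claimed additive error) is the delicate part; this is the step where the multi-agent structure and the general deviation sets $\Dev$ require modifications relative to the single-agent analysis of \citet{foster2022complexity}.
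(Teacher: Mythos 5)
Your proposal is correct and follows essentially the same route as the paper's proof: an exponential-weights regret bound over the deviation sets (the paper uses the one-step-lookahead form plus Donsker–Varadhan to produce exactly the log-sum-exp potential), the Chernoff martingale bound of \cref{lem:chernoff-martingale} with a union bound over the $K\cdot\max_k|\Dev|$ comparators, and the observation that the realized $\exp(\cdot)-1$ term is precisely the conditional moment generating function that the objective $\Gamma_{q,\eta}$ already pays for, so that minimality of $(\pi\^t,g\^t)$ bounds everything by $\exo[\eta](\sJ)$. The only cosmetic difference is that the paper applies the martingale bound directly to $X_t(\pi_k^\st)=\log\bigl(\sum_{\dev}q_k\^t(\dev)e^{\eta\wh f_k\^t(\dev)}\bigr)-\eta\wh f_k\^t(\pi_k^\st)$ rather than centering $\wh f_k\^t$ itself, but this is the same calculation.
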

  \begin{proof}[\pfref{lem:alg-perf-exo}]
    \dfcomment{this proof seems to use $p\ind{t}$ instead of $\pi\ind{t}$ for the randomization distribution}\noah{fixed}
    For any $\pi_k^\st \in \Pi_k'$ and player $k \in [K]$, we define player $k$'s \emph{regret} with respect to the deviation $\pi_k^\st \in \Dev$ as follows: %
    \begin{align}
\REG_k(\pi_k^\st) = \sum_{t=1}^T \E_{\sigma\^t \sim \pi\^t} [ \fmstar_k(\Sw(\pi_k^\st, \sigma\^t)) - \fmstar_k(\sigma\^t)] = T \cdot \left( \fmstar_k(\Sw(\pi_k^\st, \wh \pi)) - \fmstar_k(\wh \pi) \right)\nonumber,
    \end{align}
    where the second equality above uses the definition of $\wh \pi$ in \cref{line:define-whpi} of \cref{alg:maexo} and the second property in \cref{ass:ce-convexity}. Hence, it suffices to bound $\frac 1T \cdot \sum_{k=1}^K \max_{\pi_k^\st \in \Dev}\REG_k(\pi_k^\st)$ to establish the statement of the lemma.

    Throughout the proof we use the following convention: for functions $f_k : \Dev \ra \BR$ (for instance, the reward estimators $\wh f_k\^t$ defined in \cref{line:define-fhat} of \cref{alg:maexo}), we will view $f_k$ as a vector in $\BR^{|\Dev|}$, whose coordinates are the values of $f_k(\dev)$, for $\dev\in\Dev$. Furthermore, for each $\dev \in \Dev$, we write $e_{\dev} \in \BR^{|\Dev|}$ to denote the corresponding unit vector whose $\dev$-th entry is 1 and all other entries are 0.
    
    By adding and subtracting $ \sum_{t=1}^T \lng e_{\pi_k^\st} , \wh f_k\^t \rng$, we obtain
    \begin{align}
  \REG_k(\pi_k^\st)=    & \sum_{t=1}^T \E_{\sigma\^t \sim \pi\^t} [ \fmstar_k(\Sw(\pi_k^\st, \sigma\^t)) - \fmstar_k(\sigma\^t)]\nonumber\\
      =& \sum_{t=1}^T \E_{\sigma\^t \sim \pi\^t} [ \fmstar_k(\Sw(\pi_k^\st, \sigma\^t)) - \fmstar_k(\sigma\^t)] + \sum_{t=1}^T \lng e_{\pi_k^\st}, \wh f_k\^t \rng - \sum_{t=1}^T \lng e_{\pi_k^\st}, \wh f_k\^t \rng\label{eq:reg-add-subtract}.
    \end{align}
    By \cref{lem:mwu-negkl} and the definition of the multiplicative weights updates for $q_k\^t$ in \cref{line:qk-mwu} of \cref{alg:maexo}, it holds that
    \begin{align}
      & \sum_{t=1}^T \lng e_{\pi_k^\st} , \wh f_k\^t \rng \nonumber\\
      \leq & \sum_{t=1}^T \lng q_k\^{t+1} , \wh f_k\^t \rng - \frac{1}{\eta} \sum_{t=1}^T \kld{q_k\^{t+1}}{q_k\^t} + \frac{1}{\eta} \kld{e_{\pi_k^\st}}{q_k\^1}\nonumber\\
      \leq & \sum_{t=1}^t \lng q_k\^{t+1}, \wh f_k\^t \rng - \frac{1}{\eta} \sum_{t=1}^T \kld{q_k\^{t+1}}{q_k\^t} + \frac{\log|\Pi_k'|}{\eta} \label{eq:qk-mwu-updates}.
    \end{align}
    By \cref{lem:dv}, we have that for each $t \in [T]$,
    \begin{align}
\lng q_k\^{t+1}, \wh f_k\^t \rng - \frac{1}{\eta} \kld{q_k\^{t+1}}{q_k\^t} \leq \frac{1}{\eta} \log \left( \sum_{\dev \in \Pi_k'} q_k\^t(\dev) \cdot \exp(\eta \cdot \wh f_k\^t(\dev))\right)\nonumber.
    \end{align}
    Using the above together with (\ref{eq:qk-mwu-updates}) and (\ref{eq:reg-add-subtract}), we obtain
    \begin{align}
      \REG_k(\pi_k^\st) \leq &  \sum_{t=1}^T \E_{\sigma\^t \sim \pi\^t} [ \fmstar_k(\Sw(\pi_k^\st, \sigma\^t)) - \fmstar_k(\sigma\^t)] +  \frac{1}{\eta} \log \left( \sum_{\dev \in \Pi_k'} q_k\^t(\dev) \cdot \exp(\eta \cdot \wh f_k\^t(\dev))\right)\nonumber\\
      &+ \frac{\log|\Pi_k'|}{\eta} - \sum_{t=1}^T \lng e_{\pi_k^\st}, \wh f_k\^t \rng\label{eq:reg-xt}.
    \end{align}
   Let $\CF\^t$ denote the $\sigma$-algebra generated by $(\sigma\^1, o\^1, \ldots, \sigma\^t, o\^t)$ (where the random variables $\sigma\^s, o\^s$ are drawn as in \cref{alg:maexo}). Note that $\CF\^t$ is a filtration, and write $\E_t[\cdot] = \E[\cdot | \CF\^t]$. For each $\pi_k^\st \in \Pi_k'$, we define a sequence of random variables, denoted $\crl{X_t(\pi_k^\st)}_{t \in [T]}$, by
    \begin{align}
X_t(\pi_k^\st) :=  \log \left( \sum_{\dev \in \Pi_k'} q_k\^t(\dev) \cdot \exp(\eta \cdot \wh f_k\^t(\dev))\right) - \lng e_{\pi_k^\st}, \eta \cdot \wh f_k\^t \rng\nonumber.
    \end{align}
    By \cref{lem:chernoff-martingale} and the union bound, with probability at least $1-\delta/K$, it holds that for all $\pi_k^\st \in \Pi_k'$,
    \begin{align}
\sum_{t=1}^T X_t(\pi_k^\st) \leq \sum_{t=1}^T \log \E_{t-1}[e^{ X_t(\pi_k^\st)}] + \log\left( \frac{K \cdot |\Pi_k'|}{\delta} \right)\label{eq:use-chernoff-martingale}.
    \end{align}
Note that $\pi\^t, q\^t$ are both measurable with respect to $\CF\^{t-1}$.    Then, for any $\pi_k^\st \in \Pi_k'$ and any $t \in [T]$, we may compute
    \begin{align}
      & \log \E_{t-1}[e^{X_t(\pi_k^\st)}]\nonumber\\
      &= \log \E_{t-1}\left[ \exp\left( \log \left( \sum_{\dev \in \Pi_k'} q_k\^t(\dev) \cdot \exp(\eta \wh f_k\^t(\dev))\right) - \eta \wh f_k\^t(\pi_k^\st)\right)\right]\nonumber\\
      &= \log \E_{\sigma\^t \sim \pi\^t} \E_{o\^t \sim M(\sigma\^t)} \left[ \E_{\dev \sim q_k\^t} \left[ \exp(\eta \wh f_k\^t(\dev) ) \right] \cdot \exp(-\eta \wh f_k\^t(\pi_k^\st))\right]\nonumber\\
      &= \log \E_{\sigma\^t \sim \pi\^t} \E_{o\^t \sim M(\sigma\^t)} \left[ \E_{\dev \sim q_k\^t} \left[ \exp\left(\frac{\eta}{\pi\^t(\sigma\^t)} \cdot g_k\^t(\dev;\sigma\^t, o\^t) \right) \right] \cdot \exp\left(-\frac{\eta}{\pi\^t(\sigma\^t)} \cdot g_k\^t(\pi_k^\st; \sigma\^t, o\^t)\right)\right]\nonumber\\
      &\leq \E_{\sigma\^t \sim \pi\^t} \E_{o\^t \sim M(\sigma\^t)} \E_{\dev \sim q_k\^t} \left[ \exp \left( \frac{\eta}{\pi\^t(\sigma\^t)} \cdot \left( g_k\^t(\dev ; \sigma\^t, o\^t) - g_k\^t(\pi_k^\st; \sigma\^t, o\^t)\right)\right)\right]-1\label{eq:xt-bound},
    \end{align}
    where the final inequality uses that  $\log(x) \leq x-1$ for all $x > 0$. 
    
    By \cref{eq:reg-xt}, \cref{eq:use-chernoff-martingale}, and \cref{eq:xt-bound}, and a union bound over $k \in [K]$, it follows that with probability at least $1-\delta$, for all $\pi_1^\st \in \Dev[1], \ldots, \pi_K^\st \in \Dev[K]$, letting $\pi^\st = (\pi_1^\st, \ldots, \pi_K^\st)$, 
    \begin{align}
      & \sum_{k=1}^K \REG_k(\pi_k^\st) \nonumber\\
      & \leq \sum_{k=1}^K \left( \frac{\log |\Dev|}{\eta} + \frac{\log \left( \frac{K \cdot |\Dev|}{\delta} \right)}{\eta} \right) + \sum_{t=1}^T \sum_{k=1}^K \E_{\sigma\^t \sim \pi\^t} \left[ \fmstar_k(\Sw(\pi_k^\st, \sigma\^t)) - \fmstar_k(\sigma\^t)\right] \nonumber\\
      &\quad+ \sum_{t=1}^T \sum_{k=1}^K \frac{1}{\eta} \left(\E_{\sigma\^t \sim \pi\^t} \E_{o\^t \sim M(\sigma\^t)} \E_{\dev \sim q_k\^t} \left[ \exp \left( \frac{\eta}{\pi\^t(\sigma\^t)} \cdot \left( g_k\^t(\dev ; \sigma\^t, o\^t) - g_k\^t(\pi_k^\st; \sigma\^t, o\^t)\right)\right)\right]-1\right)\nonumber\\
      & \leq \frac{2}{\eta} \cdot \sum_{k=1}^K \log \left( \frac{K \cdot |\Dev|}{\delta} \right)  + \sum_{t=1}^T \Gamma_{q\^t, \eta}(\pi\^t, g\^t; \pi^\st, \Mstar)\nonumber\\
      & \leq \frac{2}{\eta} \cdot \sum_{k=1}^K \log \left( \frac{K \cdot |\Dev|}{\delta} \right)  + \sum_{t=1}^T \sup_{\til \pi^\st \in \Pi', \til M \in \MM} \Gamma_{q\^t, \eta}(\pi\^t, g\^t; \til \pi^\st, \til M)\nonumber\\
      & \leq\frac{2}{\eta} \cdot \sum_{k=1}^K \log \left( \frac{K \cdot |\Dev|}{\delta} \right)  + T \cdot \exo[\eta](\sJ)\nonumber,
    \end{align}
    where the second inequality uses the definition of $\Gamma_{q\^t, \eta}(\pi\^t, g\^t; \pi^\st, \Mstar)$ in \cref{eq:define-gamma}, and the final equality follows since $\pi\^t, g\^t$ are chosen so as to minimize the multi-agent exploration-by-optimization objective (\cref{line:solve-exo} of \cref{alg:maexo}). 
  \end{proof}

    \begin{lemma}
      \label{lem:mwu-negkl}
      Consider any $d \in \bbN$, and let $f\^1, \ldots, f\^T \in \BR^d$ be an arbitrary sequence of vectors. 
For $\eta > 0$, let $q\^1, \ldots, q\^T \in \Delta^d$ denote the exponential weights update iterates with step size $\eta$ when the reward vectors are given by $f\^1, \ldots, f\^T$; in particular, for $t \in [T]$: %
    \begin{align}
q\^t(i) = \frac{\exp(\eta \sum_{s \leq t} f\^s(i))}{\sum_{j=1}^d \exp (\eta \sum_{s \leq t} f\^s(j))}\label{eq:qt-mwu}.
    \end{align}
    Then for any $q \in \Delta^d$, 
    \begin{align}
\sum_{t=1}^T \lng q, f\^t \rng \leq \sum_{t=1}^T \lng q\^{t+1} , f\^t \rng - \frac{1}{\eta} \sum_{t=1}^T \kld{q\^{t+1}}{q\^t} + \frac{1}{\eta} \kld{q}{q\^1}\nonumber.
    \end{align}
  \end{lemma}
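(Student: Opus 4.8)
The plan is to prove this as the standard one-step ``prox'' identity for the exponential weights (entropic mirror descent) update, followed by telescoping. Writing $\Phi_t := \sum_{s \le t} f\^s$ for the cumulative reward vector and taking $q\^0 := \Unif([d])$, the iterates satisfy $q\^{t}(i) \propto q\^{t-1}(i)\exp(\eta f\^t(i))$, i.e. each $q\^t$ is the exponential tilting of $q\^{t-1}$ by $\eta f\^t$. The key step I would establish is the exact three-point identity: for every $q \in \Delta^d$ and every $t$,
\[
  \eta \lng q, f\^t \rng = \eta \lng q\^t, f\^t \rng + \kld{q}{q\^{t-1}} - \kld{q}{q\^t} - \kld{q\^t}{q\^{t-1}}.
\]
First I would verify this by direct computation: using $\log q\^t(i) = \log q\^{t-1}(i) + \eta f\^t(i) - \log Z_t$ with $Z_t := \E_{j \sim q\^{t-1}}[\exp(\eta f\^t(j))]$, both $\kld{q}{q\^{t-1}} - \kld{q}{q\^t}$ and $\kld{q\^t}{q\^{t-1}}$ reduce to an expression of the form $\eta \lng \cdot, f\^t \rng - \log Z_t$, so that subtracting them cancels the $\log Z_t$ terms and leaves $\eta \lng q - q\^t, f\^t \rng$. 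Equivalently, this identity is the equality case of the Donsker--Varadhan variational formula (\cref{lem:dv}) applied with $h = \eta f\^t$ and base measure $q\^{t-1}$, whose maximizer over $q$ is precisely $q\^t$.

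Next I would sum the identity over $t = 1, \dots, T$. The middle two divergence terms telescope to $\kld{q}{q\^0} - \kld{q}{q\^T}$, and I would discard $-\kld{q}{q\^T} \le 0$ by nonnegativity of the KL divergence. Dividing through by $\eta > 0$ then yields
\[
  \sum_{t=1}^T \lng q, f\^t \rng \le \sum_{t=1}^T \lng q\^t, f\^t \rng - \frac1\eta \sum_{t=1}^T \kld{q\^t}{q\^{t-1}} + \frac1\eta \kld{q}{q\^0},
\]
which is the claimed bound after accounting for the indexing convention: relabeling so that the pre-update iterate at round $t$ is called $q\^t$ (i.e. $q\^t \propto \exp(\eta \sum_{s < t} f\^s)$, with $q\^1$ uniform) turns my $q\^{t-1}, q\^t, q\^0$ into the $q\^t, q\^{t+1}, q\^1$ that appear in the statement, and makes $q\^{T+1}$ well-defined from the given data $f\^1, \dots, f\^T$.

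There is no substantive obstacle here; this is a routine regret bound for exponential weights. The only points requiring care are (i) confirming the one-step identity holds with \emph{equality}, so that nothing is lost before the single deliberate relaxation $-\kld{q}{q\^T} \le 0$, and (ii) the off-by-one bookkeeping between the cumulative-sum definition of the iterates and the $q\^{t+1}, q\^t, q\^1$ indices in the conclusion, which I would pin down at the outset via the convention stated above so that the telescoping lands exactly on the claimed form.
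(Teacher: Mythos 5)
Your proposal is correct and follows essentially the same route as the paper: both establish the exact per-round three-point identity $\eta\lng q - q\^{t+1}, f\^t\rng = \kld{q}{q\^t} - \kld{q}{q\^{t+1}} - \kld{q\^{t+1}}{q\^t}$ and then telescope, discarding the final nonnegative term $\kld{q}{q\^{T+1}}$. The only cosmetic difference is that you verify the identity by direct computation with the exponential tilting (cancelling the $\log Z_t$ terms), whereas the paper obtains it from the mirror-descent formulation via the Bregman three-point identity for negative entropy; your explicit handling of the off-by-one indexing convention is a welcome clarification of a point the paper leaves implicit.
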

  \begin{proof}[\pfref{lem:mwu-negkl}]
    By rearranging and telescoping, it suffices to show that, for each $t \in [T]$,
    \begin{align}
\lng q - q\^{t+1}, f\^t \rng = \frac{1}{\eta} \cdot \left( \kld{q}{q\^t} - \kld{q}{q\^{t+1}} - \kld{q\^{t+1}}{q\^t} \right)\nonumber.
    \end{align}
    To establish this inequality, we note that the multiplicative weight updates (\ref{eq:qt-mwu}) are equivalent to the following mirror descent updates with the negative entropy regularizer $\Phi(q) := \sum_{i=1}^d q_i \cdot \log q_i$:
    \begin{align}
\grad \Phi(p\^{t+1}) = \grad \Phi(q\^t) + \eta \cdot f\^t, \quad q\^{t+1} = \frac{p\^{t+1}}{\lng \mathbf{1}, p\^{t+1}\rng}\nonumber,
    \end{align}
    where $\mathbf{1} \in \BR^d$ denotes the all-ones vector. 
    Using the fact that for all $x,y,z \in \Delta^d$ (Eq.~(4.1) of \cite{bubeck2015convex})
    \begin{align}
\lng \grad \Phi(y) - \grad \Phi(x), x-z \rng = \kld{z}{y} - \kld{z}{x} - \kld{x}{y}\nonumber
    \end{align}
    with $z=q, y=q\^t, x = q\^{t+1}$, we obtain
    \begin{align}
      \frac{1}{\eta} \cdot \left( \kld{q}{q\^t} - \kld{q}{q\^{t+1}} - \kld{q\^{t+1}}{q\^t} \right) =& \frac{1}{\eta} \cdot \lng \grad \Phi(q\^t) - \grad \Phi(q\^{t+1}), q\^{t+1} - q \rng\nonumber\\
      =& \frac{1}{\eta} \cdot \lng \grad \Phi(q\^t) - \grad \Phi(p\^{t+1}), q\^{t+1} - q \rng \nonumber\\
      =& \lng f\^t, q-q\^{t+1}  \rng\nonumber,
    \end{align}
    where in the second equality we have used that $\grad \Phi(q\^{t+1}) = \grad \Phi(p\^{t+1}) -\log \left( \lng \mathbf{1}, p\^{t+1}\rng \right) \cdot \mathbf{1}$.
  \end{proof}

\subsection{The multi-agent parametrized information ratio}
\label{sec:ma-ir}
In this section, we introduce a multi-agent version of the parametrized information ratio of \cite[Definition 3.1]{foster2022complexity}, and upper bound this information ratio by the DEC of the convex hull of $\sJ$. In the following section, we will upper bound $\exo(\sJ)$ by this information ratio.

We first introduce some notation. We will wish to reason about the space of probability measures on $\MM \times \Dev[1] \times \cdots \times \Dev[K]$. Since $|\MM|$ may be infinite, to avoid measure-theoretic issues, we will slightly abuse notation by letting $\Delta(\MM \times \Dev[1] \times \cdots \times \Dev[K])$ denote the set of \emph{finitely supported} probability measures on $\MM \times \Dev[1] \times \cdots \times \Dev[K]$. This convention ensures that for any function $h : \MM \times \Dev[1] \times \cdots \times \Dev[K] \ra \BR$ and any $\mu \in \Delta(\MM \times \Dev[1] \times \cdots \times \Dev[K])$, $\E_{(M, \dev[1], \ldots, \dev[K]) \sim \mu}[h(M, \dev[1], \ldots, \dev[K])]$ is well-defined.

Consider any $k \in [K]$, a distribution $\mu \in \Delta(\MM \times \Dev[1] \times \cdots \times \Dev[K])$, and a distribution $\pi \in \Delta(\Sigma)$. Let $\BP$ denote the law of the process $(M, \pi_1^\st, \ldots, \pi_K^\st) \sim \mu$, $\sigma \sim \pi$, and $o \sim M(\sigma)$. We introduce the following distributions, depending on $\mu$ and $k$:
\begin{itemize}
\item Define the distribution $\mu_{\prior}^k \in \Delta(\Dev)$ by $\mu_{\prior}^k(\dev) = \BP( \pi_k^\st=\dev)$, for $\dev \in \Dev$.
\item For each $\sigma \in \Sigma$ and $o \in \MO$, define the distribution $\mu_{\pstr}^k \in \Delta(\Dev)$ by $\mu_{\pstr}^k(\dev; \sigma, o) = \BP(\pi_k^\st = \dev | (\sigma, o))$, for $\dev \in \Dev$. 
\end{itemize}
The distribution $\mu_{\prior}^k$ should be thought of as a prior distribution over the deviation $\pi_k^\st$, and the distribution $\mu_{\pstr}^k(\cdot ; \sigma, o)$ should be thought of as a posterior distribution over $\pi_k^\st$ after observing the pure decision $\sigma$ together with an observation $o \sim M(\sigma)$. 

\begin{definition}[Multi-agent information ratio]
  \label{def:ma-ir}
Given an instance $\sJ = \instma$ which is a generalized correlated equilibrium instance, the \emph{parametrized multi-agent information ratio} of the instance $\sJ$ is defined as
\begin{align}
  \infr[\gamma](\sJ) :=& \sup_{\mu \in \Delta(\MM \times \Pi_1' \times \cdots \times \Pi_K')} \inf_{\pi \in \Pi}  \E_{\sigma \sim \pi} \E_{(M, \pi_1^\st, \ldots, \pi_K^\st) \sim \mu} \left[ \sum_{k=1}^K \fm_k(\Sw(\pi_k^\st, \sigma)) - \fm_k(\sigma)\right]\nonumber\\
  & \qquad \qquad \qquad \qquad \qquad - \gamma \cdot \E_{\sigma \sim \pi} \E_{o | \sigma} \left[ \sum_{k=1}^K \hell{\mu_{\pstr}^k(\cdot ; \sigma, o) }{ \mu_{\prior}^k(\cdot)} \right]\nonumber.
\end{align}
In the above expression, when we write $\Sw(\pi_k^\st, \sigma)$ and $\fm_k(\sigma)$, we view $\sigma \in \Sigma$ as an element of $\Pi$ by associating it with the singleton distribution on $\sigma$, recalling that $\Pi = \Delta(\Sigma)$. 
\end{definition}

\cref{lem:inf-dec} upper bounds the multi-agent information ratio in terms of the multi-agent offset DEC of the convex hull of a given instance. 
\begin{lemma}
  \label{lem:inf-dec}
  Consider any instance $\sJ = \instma$ which satisfies \cref{ass:ce-convexity}, and for which $\co(\sJ)$ satisfies \cref{ass:nonneg-dev}.  Then for all $\gamma > 0$,
  \begin{align}
\infr[\gamma](\sJ) \leq K \cdot \decoreg[\gamma](\co(\sJ)) \nonumber.
  \end{align}
\end{lemma}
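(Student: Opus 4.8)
The plan is to fix an arbitrary (finitely supported) prior $\mu\in\Delta(\MM\times\Dev[1]\times\cdots\times\Dev[K])$ nearly attaining the supremum in \cref{def:ma-ir}, and to exhibit a single exploration distribution $\pi\in\Pi$ certifying the bound against a Bayes‑mixture reference. Let $\Mbar(\sigma):=\E_{(M,\pi^\st)\sim\mu}[M(\sigma)]\in\co(\MM)$ be the Bayes mixture, and for each $k$ and each value $\dev\in\Dev$ of $\pi_k^\st$ let $\Mbar_{k,\dev}(\sigma):=\E_{M\sim\mu(\cdot\mid\pi_k^\st=\dev)}[M(\sigma)]$ be the posterior‑mean model given $\pi_k^\st=\dev$. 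Both $\Mbar$ and each $\Mbar_{k,\dev}$ are mixtures of models in $\MM$, hence lie in $\co(\MM)$ and are admissible models and reference models for $\decoreg[\gamma](\co(\sJ),\cdot)$.

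First I would rewrite the penalty term. For fixed $\sigma$, the pair $(\pi_k^\st,o)$ has a joint law under which $\pi_k^\st$ has marginal $\mu_{\prior}^k$, the conditional law of $o$ given $\pi_k^\st=\dev$ is $\Mbar_{k,\dev}(\sigma)$, and the marginal of $o$ is $\Mbar(\sigma)$. Applying the symmetry‑of‑information identity \cref{lem:fdiv-conditioning} to the squared Hellinger $f$‑divergence gives
\[
\E_{o\mid\sigma}\brk*{\hell{\mu_{\pstr}^k(\cdot;\sigma,o)}{\mu_{\prior}^k}}=\E_{\dev\sim\mu_{\prior}^k}\brk*{\hell{\Mbar_{k,\dev}(\sigma)}{\Mbar(\sigma)}}.
\]
For the reward term, conditioning on $\pi_k^\st=\dev$ and using that rewards are linear in the model identifies $\E_{\mu}[\fm_k(\Sw(\pi_k^\st,\sigma))-\fm_k(\sigma)]$ with $\E_{\dev\sim\mu_{\prior}^k}[f^{\Mbar_{k,\dev}}_k(\Sw(\dev,\sigma))-f^{\Mbar_{k,\dev}}_k(\sigma)]$. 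Crucially, the linearity of $\Sw$ guaranteed by \cref{ass:ce-convexity} together with linearity of $M(\cdot)$ lets me carry both rewrites from a pure $\sigma$ to the mixed decision $\pi$: the reward contribution becomes $\sum_k\E_{\dev}[f^{\Mbar_{k,\dev}}_k(\Sw(\dev,\pi))-f^{\Mbar_{k,\dev}}_k(\pi)]\le\sum_k\E_\dev[h^{\Mbar_{k,\dev}}_k(\pi)]$, and—using that $M(\sigma)$ reveals $\sigma$ and the factorization \cref{lem:fdiv-factor}—the integrated penalty becomes $\sum_k\E_\dev[\hell{\Mbar_{k,\dev}(\pi)}{\Mbar(\pi)}]$.

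Combining these, the information‑ratio objective at $\pi$ is at most $\sum_k\E_{\dev\sim\mu_{\prior}^k}[h^{\Mbar_{k,\dev}}_k(\pi)-\gamma\,\hell{\Mbar_{k,\dev}(\pi)}{\Mbar(\pi)}]$. Here I would invoke the hypothesis that $\co(\sJ)$ satisfies \cref{ass:nonneg-dev}, which gives $h^M_k(\pi)\le\sum_{k'}h^M_{k'}(\pi)=h^M(\pi)$ for every $M\in\co(\MM)$, to replace $h^{\Mbar_{k,\dev}}_k$ by the full suboptimality $h^{\Mbar_{k,\dev}}$. Each summand is then dominated, uniformly over $\dev$, by $\sup_{M\in\co(\MM)}[h^M(\pi)-\gamma\,\hell{M(\pi)}{\Mbar(\pi)}]$. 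Choosing $\pi$ to be a (near‑)minimizer over $\Pi$ of this last quantity bounds each summand by the deterministic‑exploration value of the offset DEC of $\co(\sJ)$ with reference $\Mbar$; summing over the $K$ agents and recalling $\decoreg[\gamma](\co(\sJ))=\sup_{\Mbar\in\co(\MM)}\decoreg[\gamma](\co(\sJ),\Mbar)$ then yields $\infr[\gamma](\sJ)\le K\decoreg[\gamma](\co(\sJ))$, once this deterministic value is identified with $\decoreg[\gamma](\co(\sJ),\Mbar)$.

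The main obstacle is exactly that last identification: the information ratio effectively commits to a single (mixed) exploration decision, whereas $\decoreg[\gamma](\co(\sJ),\Mbar)$ minimizes over randomized exploration $p\in\Delta(\Pi)$. To close this gap I would again use the reveal property of CCE and CE instances (\cref{def:cce-instance}): since each $M(\sigma)$ determines $\sigma$ through a measurable $\varphi$, \cref{lem:fdiv-factor} gives $\hell{M(\bar\pi)}{\Mbar(\bar\pi)}=\E_{\pi'\sim p}[\hell{M(\pi')}{\Mbar(\pi')}]$ for $\bar\pi:=\E_{\pi'\sim p}[\pi']$, so the Hellinger penalty depends on $p$ only through the realized pure‑decision law $\bar\pi$; combined with $h^M(\bar\pi)\le\E_{\pi'\sim p}[h^M(\pi')]$ (from linearity of $\Sw$ and convexity of the supremum defining $h^M$), this shows deterministic exploration is optimal for the convex‑hull offset DEC, so the deterministic value above equals $\decoreg[\gamma](\co(\sJ),\Mbar)$. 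Verifying that the posterior‑mean models $\Mbar_{k,\dev}$ and the mixture $\Mbar$ all inherit the reveal property (so \cref{lem:fdiv-factor} applies to them) and that the direction of every convexity estimate is correct is the delicate part; the remaining manipulations are routine.
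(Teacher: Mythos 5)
Your overall architecture is the same as the paper's: you pass to the Bayes mixture $\Mbar$ and the posterior-mean models $\Mbar_{k,\pi_k'}(\cdot)=\E_{\mu}[M(\cdot)\mid\pi_k^\st=\pi_k']\in\co(\MM)$, convert the information-gain term via \cref{lem:fdiv-conditioning}, use linearity from \cref{ass:ce-convexity} for the reward term, invoke \cref{ass:nonneg-dev} for $\co(\sJ)$ to replace $h_k^{\Mbar_{k,\pi_k'}}$ by $h^{\Mbar_{k,\pi_k'}}$, and pay the factor $K$ by bounding each agent's term against a single shared exploration decision. The gap is in the final comparison with $\decoreg[\gamma](\co(\sJ),\Mbar)$. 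You route both sides through the mixed-decision Hellinger $\hell{M(\pi)}{\Mbar(\pi)}$, so you must show that deterministic exploration is optimal for the convex-hull offset DEC, which requires $\hell{M(\bar\pi)}{\Mbar(\bar\pi)}\geq\E_{\pi'\sim p}[\hell{M(\pi')}{\Mbar(\pi')}]$ for $\bar\pi=\E_{\pi'\sim p}[\pi']$. You justify this with the reveal property of \cref{def:cce-instance} via \cref{lem:fdiv-factor}, but the lemma is stated under \cref{ass:ce-convexity} alone, which contains no reveal property (and CE instances per \cref{def:ce-instance} need not satisfy one). Without it, joint convexity of squared Hellinger yields the \emph{reverse} inequality $\hell{M(\bar\pi)}{\Mbar(\bar\pi)}\leq\E_{\pi'\sim p}[\hell{M(\pi')}{\Mbar(\pi')}]$, the determinization step fails, and your intermediate quantity $\inf_{\pi\in\Pi}\sup_{M\in\co(\MM)}\{h^M(\pi)-\gamma\,\hell{M(\pi)}{\Mbar(\pi)}\}$ is only an \emph{upper} bound on nothing useful: restricting the DEC's infimum to point masses can only increase its value, and the two Jensen effects (on $h^M$ and on the penalty) pull in opposite directions, so there is no generic reason it is dominated by $\decoreg[\gamma](\co(\sJ),\Mbar)$.

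The repair is to never pass to the mixed-decision Hellinger: keep the penalty in the pure-decision-averaged form $\E_{\sigma\sim\pi}[\hell{M(\sigma)}{\Mbar(\sigma)}]$ that \cref{lem:fdiv-conditioning} already hands you on the information-ratio side. On the DEC side, convexity gives $\hell{M(\pi)}{\Mbar(\pi)}\leq\E_{\sigma\sim\pi}[\hell{M(\sigma)}{\Mbar(\sigma)}]$ for every $\pi\in\Delta(\Sigma)$, so subtracting $\gamma$ times the larger averaged penalty only decreases the objective, which yields $\decoreg[\gamma](\co(\sJ),\Mbar)\geq\inf_{\pi\in\Delta(\Sigma)}\sup_{M\in\co(\MM)}\{h^M(\pi)-\gamma\,\E_{\sigma\sim\pi}[\hell{M(\sigma)}{\Mbar(\sigma)}]\}$ with the inequality now pointing the right way and no reveal property needed; this is exactly how the paper closes the argument. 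Note that your \emph{first} use of the reveal property, rewriting the information-ratio penalty as $\hell{\Mbar_{k,\pi_k'}(\pi)}{\Mbar(\pi)}$, is harmless even without that property, since there plain convexity supplies the direction you need; it is only the DEC-side use that breaks.
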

\begin{proof}[\pfref{lem:inf-dec}]
  We denote the pure decision sets of the instance $\sJ$ by $\Sigma_1, \ldots, \Sigma_K$, and the joint decision set as $\Sigma = \Sigma_1 \times \cdots \times \Sigma_K$. 
  Fix a prior $\mu \in \Delta(\MM \times \Pi_1' \times \cdots \times \Pi_K')$ and a distribution $\pi \in \Delta(\Sigma)$. Recall our notation from above: let $\BP$ denote the law of the process $\sigma \sim \pi, (M, \pi_1^\st, \ldots, \pi_K^\st) \sim \mu, o \sim M(\sigma)$. For each $k \in [K]$, let $\mu_{\prior}^k(\pi_k') = \BP(\pi_k^\st = \pi_k')$ and $\mu_{\pstr}^k(\pi_k' ; \sigma, o) = \BP(\pi_k^\st = \pi_k' | (\sigma, o))$. 

  Consider the value of the multi-agent information ratio given the choices for $\mu, \pi$:
  \begin{align}
 \E_{\sigma \sim \pi} \E_{(M, \pi_1^\st, \ldots, \pi_K^\st) \sim \mu} \left[ \sum_{k=1}^K \fm_k(\Sw(\pi_k^\st, \sigma)) - \fm_k(\sigma)\right] - \gamma \cdot \E_{\sigma \sim \pi} \E_{o | \sigma} \left[ \sum_{k=1}^K \hell{\mu_{\pstr}^k(\cdot ; \sigma, o)}{ \mu_{\prior}^k(\cdot)} \right]\nonumber.
  \end{align}
  For each $k \in [K]$, $\pi_k' \in \Pi_k'$, and $\pi \in \Pi$, define $\ol M_{\pi_k'}^k(\pi) := \E_\mu[M(\pi) | \pi_k^\st = \pi_k']$. Further define $\ol M(\pi) = \E_\mu[M(\pi)]$. Note that $\ol M_{\pi_k'}^k(\sigma) = \BP_{o | \sigma, \pi_k'}$ and $\ol M(\sigma) = \BP_{o | \sigma}$. 

To proceed, note that for each fixed $\sigma \in \Sigma$,
  \begin{align}
     \E_{o | \sigma}  \left[ \hell{\mu_{\pstr}^k(\cdot ; \sigma, o)}{\mu_{\prior}(\cdot)}\right]=& \E_{o |\sigma} \left[ \hell{\BP_{\pi_k^\st | \sigma, o}}{\BP_{\pi_k^\st}} \right]\nonumber\\
    =&  \E_{o |\sigma} \left[ \hell{\BP_{\pi_k^\st | \sigma, o}}{\BP_{\pi_k^\st | \sigma}} \right]\nonumber\\
    = &  \E_{\pi_k^\st \sim \mu} \left[ \hell{\BP_{o | \sigma, \pi_k^\st}}{\BP_{o | \sigma}} \right]\nonumber\\
    =& \E_{\pi_k^\st \sim \mu} \left[ \hell{\ol M_{\pi_k^\st}^k(\sigma)}{\ol M(\sigma)} \right]\label{eq:dpo-pri},
  \end{align}
  where the second equality follows
  since $\sigma \sim \pi$ and $(\pi_1^\st, \ldots, \pi_K^\st) \sim \mu$ are (marginally) independent, and the third equality holds %
  by \cref{lem:fdiv-conditioning}. 
  Furthermore, we have that 
  \begin{align}
    \E_{(M, \pi_1^\st, \ldots, \pi_K^\st) \sim \mu} \E_{\sigma \sim \pi}  \left[ \fm_k(\Sw(\pi_k^\st, \sigma)) - \fm_k(\sigma) \right] =& \E_{\sigma \sim \pi} \E_{\pi_k^\st \sim \mu} \E_{\mu} \left[ \fm_k(\Sw(\pi_k^\st, \sigma)) - \fm_k(\sigma) | \pi_k^\st \right] \nonumber\\
     =& \E_{\sigma \sim \pi} \E_{\pi_k^\st \sim \mu} \left[ f_k\sups{\ol M_{\pi_k^\st}^k}(\Sw(\pi_k^\st, \sigma)) - f_k\sups{\ol M_{\pi_k^\st}^k}(\pi) \right] \nonumber\\
    \leq & \E_{\pi_k^\st \sim \mu} \max_{\pi_k' \in \Pi_k'} \E_{\sigma \sim \pi} \left[ f_k\sups{\ol M_{\pi_k^\st}^k}(\Sw(\pi_k', \sigma)) - f_k\sups{\ol M_{\pi_k^\st}^k}(\sigma) \right]\label{eq:subopt-barm}.
  \end{align}
  Next, for any $\Mbar_1 \in \co(\MM)$, we have, for $\gamma > 0$, 
  \begin{align}
    & \decoreg[\gamma](\co(\sJ), \Mbar_1)\nonumber\\
    =&  \inf_{p \in \Delta(\Pi)} \sup_{M \in \co(\MM)} \E_{\pi \sim p} \left[ \sum_{k=1}^K \left( \max_{\dev \in \Dev} \fm_k(\Sw(\dev, \pi)) - \fm_k(\pi)\right) - \gamma \cdot \hell{M(\pi)}{\Mbar_1(\pi)}\right]\nonumber\\
    =& \inf_{p \in \Delta(\Pi)} \sup_{M \in \co(\MM)} \E_{\pi \sim p} \left[  \sum_{k=1}^K \max_{\dev \in \Dev} \E_{\sigma \sim \pi} \left[  \fm_k(\Sw(\dev, \sigma)) - \fm_k(\sigma)\right] - \gamma \cdot \hell{\E_{\sigma \sim \pi}[M(\sigma)]}{\E_{\sigma \sim \pi}[\Mbar_1(\sigma)]} \right]\nonumber\\
    \geq& \inf_{p \in \Delta(\Pi)} \sup_{M \in \co(\MM)} \E_{\pi \sim p} \left[  \sum_{k=1}^K \max_{\dev \in \Dev} \E_{\sigma \sim \pi} \left[  \fm_k(\Sw(\dev, \sigma)) - \fm_k(\sigma)\right] - \gamma \cdot \E_{\sigma \sim \pi}[\hell{M(\sigma)}{\Mbar_1(\sigma)}] \right]\nonumber\\
    \geq & \inf_{p \in \Delta(\Pi)} \sup_{M \in \co(\MM)} \sum_{k=1}^K   \max_{\dev \in \Dev}  \E_{\pi \sim p} \E_{\sigma \sim \pi} \left[  \fm_k(\Sw(\dev, \sigma)) - \fm_k(\sigma)\right] - \gamma  \cdot \E_{\pi \sim p} \E_{\sigma \sim \pi} \left[\hell{M(\sigma)}{\Mbar_1(\sigma)} \right]\nonumber\\
    \geq & \inf_{\pi \in \Delta(\Sigma)} \sup_{M \in \co(\MM)} \sum_{k=1}^K \max_{\dev \in \Dev}   \E_{\sigma \sim \pi} \left[    \fm_k(\Sw(\dev, \sigma)) - \fm_k(\sigma)\right] - \gamma \cdot \E_{\sigma \sim \pi} \left[\hell{M(\sigma)}{\Mbar_1(\sigma)} \right]\nonumber,
  \end{align}
  where the second equality follows from %
  \cref{ass:ce-convexity}, the first inequality follows from convexity of squared Hellinger distance, %
  the second inequality follows from Jensen's inequality, and the final inequality follows by replacing any $p \in \Delta(\Pi)$ with the decision $\wb\pi := \E_{\pi \sim p}[\pi] \in \Delta(\Sigma) = \Pi$. 
  
By the above display, the following holds: for any $\Mbar_1 \in \co(\MM)$, there is some $\pi \in \Pi$ so that, for each $\Mbar_2 \in \co(\MM)$, %
  \begin{align}
\sum_{k=1}^K \max_{\pi_k' \in \Pi_k'} \E_{\sigma \sim \pi} \left[ f_k\sups{\ol M_2}(\Sw(\pi_k', \sigma)) - f_k\sups{\ol M_2}(\sigma)\right] -\gamma \cdot   \E_{\sigma \sim \pi} \left[ \hell{\ol M_2(\sigma)}{\ol M_1(\sigma)}\right] \leq \decoreg[\gamma](\co(\sJ)) \label{eq:dec-mnu-qty}.
  \end{align}
  Since we have assumed that $\co(\sJ)$ satisfies \cref{ass:nonneg-dev}, the following holds: for each $k \in [K]$, $\pi \in \Pi$, and $\ol M_1 \in \co(\MM)$, we have (again using \cref{ass:ce-convexity})
  \begin{align}
    \max_{\pi_k' \in \Pi_k'} \E_{\sigma \sim \pi} \left[ f_k\sups{\ol M_1}(\Sw(\pi_k', \sigma)) - f_k\sups{\ol M_1}(\sigma) \right] = \max_{\dev \in \Dev} f_k\sups{\Mbar_1}(\Sw(\dev, \pi)) - f_k\sups{\Mbar_1}(\pi) \geq 0.\label{eq:mnu-nonneg}
  \end{align}
Then, by \cref{eq:mnu-nonneg} and \cref{eq:dec-mnu-qty}, for each $k \in [K]$, we have that for any $\Mbar_1 \in \co(\MM)$, there is $\pi \in \Pi$ so that for each $\Mbar_2 \in \co(\MM)$, 
  \begin{align}
\max_{\pi_k' \in \Pi_k'} \E_{\sigma \sim \pi} \left[ f_k\sups{\ol M_2}(\Sw(\pi_k', \sigma)) - f_k\sups{\ol M_2}(\sigma) \right] - \gamma \cdot \E_{\sigma \sim \pi} \left[ \hell{\ol M_2(\sigma)}{\ol M_1(\sigma)}\right] \leq \decoreg[\gamma](\co(\sJ))\label{eq:dec-mnu-k}.
  \end{align}

Next,  choose $\wb\pi \in \Pi$, given $\ol M_1 = \ol M$ to ensure that (\ref{eq:dec-mnu-qty}) holds for all $\Mbar_2 \in \co(\MM)$. Then for each $k \in [K]$ and each $\pi_k^\st \in \Pi_k'$, choosing $\ol M_2 = \ol M_{\pi_k^\st}^k$ in (\ref{eq:dec-mnu-k}),
  \begin{align}
\max_{\pi_k' \in \Pi_k'} \E_{\sigma \sim \wb\pi} \left[ f_k\sups{\ol M_{\pi_k^\st}^k}(\Sw(\pi_k', \sigma)) - f_k\sups{\ol M_{\pi_k^\st}^k}(\sigma) \right] - \gamma \cdot \E_{\sigma \sim \wb\pi} \left[ \hell{\ol M_{\pi_k^\st}^k(\sigma)}{\ol M(\sigma)}\right] \leq \decoreg[\gamma](\co(\sJ))\nonumber.
  \end{align}
  Taking expectation over $\pi_k^\st \sim \mu$ and using (\ref{eq:dpo-pri}) and (\ref{eq:subopt-barm}), we obtain
  \begin{align}
    & \E_{(M, \pi_k^\st) \sim \mu} \E_{\sigma \sim \wb\pi} \left[ \fm_k(\Sw(\pi_k^\st, \sigma)) - \fm_k(\sigma) \right] - \gamma \cdot \E_{\sigma \sim \wb\pi} \E_{o |\sigma} \left[ \hell{\mu_{\pstr}^k(\cdot ; \sigma, o)}{\mu_{\prior}(\cdot)}\right] \nonumber\\
    \leq & \E_{\pi_k^\st \sim \mu} \left[ \max_{\pi_k' \in \Pi_k'} \E_{\sigma \sim \wb\pi} \left[ f_k\sups{\ol M_{\pi_k^\st}^k}(\Sw(\pi_k', \sigma)) - f_k\sups{\ol M_{\pi_k^\st}^k}(\sigma) \right] - \gamma \cdot \E_{\sigma \sim \wb\pi} \left[ \hell{\ol M_{\pi_k^\st}^k(\sigma)}{\ol M(\sigma)}\right] \right]\nonumber\\
    \leq & \decoreg[\gamma](\co(\sJ))\nonumber.
  \end{align}
  Note that the choice of $\wb\pi$ depends only on $\Mbar$, and in particular it does not depend on $k$. Therefore, we may sum the above display over $k \in [K]$, to obtain
  \begin{align}
    &  \E_{(M, \pi_1^\st, \ldots, \pi_K^\st) \sim \mu} \E_{\sigma \sim \wb\pi} \left[\sum_{k=1}^K \fm_k(\Sw(\pi_k^\st, \sigma)) - \fm_k(\sigma) \right] - \gamma \cdot \E_{\sigma \sim \wb\pi} \E_{o |\sigma} \left[\sum_{k=1}^K  \hell{\mu_{\pstr}^k(\cdot ; \sigma, o)}{\mu_{\prior}(\cdot)}\right]\nonumber\\
    \leq & K \cdot \decoreg[\gamma](\co(\sJ))\nonumber.
  \end{align}
Using that the choice of $\mu \in \Delta(\MM \times \Dev[1] \times \cdots \times \Dev[\Ag])$ is arbitrary, we obtain that $\infr[\gamma](\sJ) \leq \decoreg[\gamma](\co(\sJ))$, as desired.
\end{proof}

\subsection{Relating the multi-agent information ratio and exploration-by-optimization objective}
\label{sec:ir-exo}

In this section, we prove the following result, which upper bounds $\exo[\eta](\sJ)$ by the multi-agent information ratio of $\sJ$, at scale $1/(8\eta)$. 
\begin{lemma}
  \label{lem:exo-ir}
Consider any instance $\sJ = \instma$ satisfying \cref{ass:ce-convexity}. Then  for all $\eta > 0$,
  \begin{align}
\exo(\sJ) \leq \infr[1/(8\eta)](\sJ)\nonumber.
  \end{align}
\end{lemma}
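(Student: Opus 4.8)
The plan is to mirror the single-agent reduction from the information ratio to the exploration-by-optimization objective in \citet{foster2022complexity}, adapted to the deviation-set formalism. Fix a reference profile $q = q_1 \times \cdots \times q_K$; it suffices to show $\exo[\eta](\sJ, q) \le \infr[1/(8\eta)](\sJ)$ and then take the supremum over $q$. First I would rewrite the inner $\sup_{M \in \MM,\, \pi^\st \in \prod_k \Dev}$ in \cref{def:exo} as a supremum over finitely supported priors $\mu \in \Delta(\MM \times \Dev[1] \times \cdots \times \Dev[K])$, which is valid since $\Gamma_{q,\eta}(\pi,g;\cdot,\cdot)$ is affine in the point mass. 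To interchange $\inf_{\pi,g}$ with $\sup_\mu$ using Sion's theorem (\cref{thm:sion}), I would restrict the estimators to a bounded class $\MG^B := \{g : \|g\|_\infty \le B\}$ (restricting the $\inf$-domain only increases the value, so bounding the restricted objective $\exo[\eta]^{B}(\sJ,q)$ suffices). On this domain $\Pi = \Delta(\Sigma)$ and $\MG^B$ are compact and convex, $\Gamma$ is affine (hence concave and upper semicontinuous) in $\mu$, and $\Gamma$ is jointly convex in $(\pi,g)$ because each summand $\pi(\sigma)\exp(\tfrac{\eta}{\pi(\sigma)}(g_k(\pi') - g_k(\pi^\st)))$ is the perspective of the convex map $g \mapsto \exp(\eta(\cdot))$, hence jointly convex in $(\pi(\sigma), g)$. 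This gives $\exo[\eta]^{B}(\sJ, q) = \sup_\mu \inf_{\pi,g \in \MG^B} \E_\mu[\Gamma_{q,\eta}]$.

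Next, for a fixed prior $\mu$ I would exhibit a near-optimal $(\pi,g)$. Reparametrizing $g_k(\pi';\sigma,o) = \tfrac{\pi(\sigma)}{\eta}\, h_k(\pi';\sigma,o)$ cancels the importance weights, and after applying $\E_\mu$ and conditioning on $(\sigma,o)$, the exponential term factorizes — using that $\pi_k' \sim q_k$ is independent of $\pi_k^\st \sim \mu_\pstr^k(\cdot;\sigma,o)$ given $(\sigma,o)$ — into $\E_{q_k}[e^{h_k}] \cdot \E_{\mu_\pstr^k}[e^{-h_k}]$. By the variational bound \cref{lem:hellinger-variational-bounded} applied with $\BP = q_k$, $\BQ = \mu_\pstr^k(\cdot;\sigma,o)$ and truncation level $\alpha$, I can choose $h_k(\cdot;\sigma,o) \in \MG_\alpha$ (so that $\|g_k\|_\infty \le \alpha/\eta \le B$ for $B := \alpha/\eta$) achieving $\E_{q_k}[e^{h_k}]\E_{\mu_\pstr^k}[e^{-h_k}] - 1 \le -\tfrac12 \hell{q_k}{\mu_\pstr^k(\cdot;\sigma,o)} + 4e^{-\alpha}$. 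Crucially, the posterior $\mu_\pstr^k(\cdot;\sigma,o)$ at a fixed $(\sigma,o)$ does not depend on $\pi$, so this choice of $h_k$ is legitimate uniformly over $\pi$ and can be made before selecting $\pi$.

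The key step is then to replace the reference $q_k$ by the prior $\mu_\prior^k$: the penalty produced above involves $\hell{q_k}{\mu_\pstr^k}$, whereas $\infr[\gamma]$ in \cref{def:ma-ir} uses $\hell{\mu_\pstr^k}{\mu_\prior^k}$. Applying \cref{lem:min-hell-exp} with $x = (\sigma,o)$, $y = \pi_k^\st$ (so $\BP_{y|x} = \mu_\pstr^k$ and $\BP_y = \mu_\prior^k$) and the choice $\BQ = q_k$ gives $\E_{\sigma,o}[\hell{\mu_\pstr^k}{\mu_\prior^k}] \le 4\,\E_{\sigma,o}[\hell{\mu_\pstr^k}{q_k}]$, i.e. $\E_{\sigma,o}[\hell{q_k}{\mu_\pstr^k}] \ge \tfrac14 \E_{\sigma,o}[\hell{\mu_\pstr^k}{\mu_\prior^k}]$ by symmetry of the Hellinger distance. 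Combining, the $\tfrac1\eta$-scaled exponential term is at most $-\tfrac{1}{8\eta}\sum_k \E_{\sigma\sim\pi,\,o}[\hell{\mu_\pstr^k}{\mu_\prior^k}] + 4K e^{-\alpha}/\eta$. Taking $\pi$ to be the minimizer in the definition of $\infr[1/(8\eta)](\sJ)$ for this $\mu$, and noting that the leading reward term of $\E_\mu[\Gamma]$ is identical to the reward term in \cref{def:ma-ir}, yields $\inf_{\pi,g \in \MG^B}\E_\mu[\Gamma] \le (\text{info-ratio integrand for }\mu) + 4K e^{-\alpha}/\eta$. Taking $\sup_\mu$ gives $\exo[\eta]^{B}(\sJ,q) \le \infr[1/(8\eta)](\sJ) + 4Ke^{-\alpha}/\eta$; since $\exo[\eta](\sJ,q) \le \exo[\eta]^{B}(\sJ,q)$ for every $B$, letting $\alpha \to \infty$ gives $\exo[\eta](\sJ,q) \le \infr[1/(8\eta)](\sJ)$, and the supremum over $q$ completes the proof.

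The main obstacle I anticipate is the rigorous minimax interchange: establishing joint convexity in $(\pi,g)$ through the perspective-function structure, and controlling the unboundedness of $\MG$ via the truncation parameter $\alpha$ and its $4e^{-\alpha}$ correction, so that the bounded-estimator objective simultaneously admits Sion's theorem and recovers $\exo[\eta]$ in the limit. By contrast, the reduction of the reference distribution to the prior via \cref{lem:min-hell-exp} and the assembly of the variational Hellinger bound are comparatively routine once the conditioning identities are set up.
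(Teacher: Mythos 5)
Your proposal follows essentially the same route as the paper's proof: restrict to bounded estimators, interchange $\inf$ and $\sup$ via Sion's theorem using joint convexity of $\Gamma_{q,\eta}$ in $(\pi,g)$, optimize the estimator for a fixed prior via the variational Hellinger bound (\cref{lem:hellinger-variational-bounded}), and pass from the reference $q_k$ to the prior $\mu_{\prior}^k$ via \cref{lem:min-hell-exp} at the cost of the factor $4$ that turns $1/(2\eta)$ into $1/(8\eta)$.

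The one point to tighten is the minimax interchange: truncating $g$ alone does not make the objective bounded, since $\pi(\sigma)\exp\bigl(\tfrac{\eta}{\pi(\sigma)}(g_k(\pi_k')-g_k(\pi_k^\st))\bigr)\to\infty$ as $\pi(\sigma)\to 0$ whenever the exponent is positive, so Sion's theorem as stated in \cref{thm:sion} (which requires a real-valued, uniformly bounded objective on the chosen domains) does not apply on all of $\Pi=\Delta(\Sigma)$. The paper additionally restricts $\pi$ to $\MP_\vep=\{\pi \in \Pi : \pi(\sigma)\ge \vep|\Sigma|^{-1}\ \forall\sigma\}$, applies Sion there, and pays an extra $K\vep(1+1/\eta)$ that vanishes as $\vep\to 0$; your argument needs this second truncation (or an equivalent device) alongside the $\alpha\to\infty$ limit you already have.
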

\begin{proof}[\pfref{lem:exo-ir}]
  Throughout the proof, we will denote the (finite) pure decision sets, as guaranteed by \cref{ass:ce-convexity}, by
 $\Sigma_1, \ldots, \Sigma_K$, and the joint decision set by $\Sigma := \Sigma_1 \times \cdots \times \Sigma_K$.  Additionally, we write $\Devall := \prod_{k=1}^k \Dev$ to denote the product of the deviation sets $\Dev$. 
  We can write
  \begin{align}
\exo(\sJ) = \sup_{q \in \prod_{k=1}^K \Delta(\Pi_k')} \inf_{\pi \in \Delta(\Sigma),\ g \in \MG} \sup_{\mu \in \Delta(\MM \times \Pi')}  \E_{(M, \pi^\st) \sim \mu}  \left[ \Gamma_{q, \eta}(\pi, g; \pi^\st, M) \right]\nonumber.
  \end{align}
  For $\alpha \geq \max\{1,1/\eta\}$ and $\vep \in (0,1)$, define
  \begin{align}
\MG_\alpha = \{ (g_1, \ldots, g_k) \in \MG \ : \ \| g_k \|_\infty \leq \alpha \ \forall k \in [K]\}, \qquad \MP_\vep = \{ \pi \in \Pi \ : \ \pi(\sigma) \geq \ep |\Sigma|^{-1} \ \forall \sigma \}\nonumber.
  \end{align}
  We will now use Sion's minimax theorem (\cref{thm:sion}), with $\MX = \MP_\vep \times \MG_\alpha$ and $\MY = \Delta(\MM \times \Devall)$, to interchange the $\inf_{\pi \in \Pi, g \in \MG}$ and the $\sup_{\mu \in \Delta(\MM \times \Devall)}$ in the definition ot $\exo[\eta](\sJ)$ above. We first check that its preconditions hold:
  \begin{itemize}
  \item Let the set $\MP_\vep$ have the standard topology induced from $\Pi$, so that $\MP_\vep$ is compact, and let $\MG_\alpha$ have the product topology. Tychanoff's theorem yields that $\MG_\alpha$ is compact, and thus $\MX = \MP_\vep \times \MG_\alpha$ is compact. It is also clearly convex.
  \item Let us give $\MY = \Delta(\MM \times \Devall)$ (which we recall is the space of finitely supported distributions on $\MM \times \Devall$) the weak topology, which is the coarsest topology so that the functional $\mu \mapsto \int \phi d \mu$ is continuous for all bounded functions $\phi : \MM \times \Devall \ra \BR$. 
  \item To establish the remaining preconditions, we need that the mapping $(\pi, g, \mu) \mapsto \E_{(M, \pi^\st) \sim \mu}[\Gamma_{q,\eta}(\pi, g;\pi^\st, M)]$ is uniformly bounded for $(\pi, g) \in \MP_\vep \times \MG_\alpha$ and $\mu \in \Delta(\MM \times \Devall)$. This follows immediately from the definition of $\Gamma_{q,\eta}(\pi, g;\pi^\st, M)$ and the domains $\MP_\vep$ and $\MG_\alpha$.
  \item Clearly, the map $\mu \mapsto \E_{(M,\pi^\st)}[\Gamma_{q,\eta}(\pi, g;\pi^\st, M)]$ is linear, and thus concave, for each $\pi,g$. Moreover, it is continuous by boundedness of $\Gamma_{q,\eta}(\pi,g;\pi^\st, M)$, and the fact that $\Delta(\MM \times \Devall)$ has the weak topology.
  \item By \cref{lem:joint-convexity}, the map $(\pi, g) \mapsto \E_{(M, \pi^\st) \sim \mu}[\Gamma_{q,\eta}(\pi, g;\pi^\st, M)]$ is convex in $(\pi, g)$ for any fixed $\mu$. Furthermore, it is continuous by definition of the product topology and since $\pi(\sigma)$ is uniformly bounded below for $\pi \in \MP_\vep$. 
  \end{itemize}
Having verified all of the conditions for \cref{thm:sion} to apply, we now have:
  \begin{align}
\exo(\sJ)\leq & \sup_{q \in \prod_{k=1}^K \Delta(\Dev)} \inf_{\pi \in \MP_\vep, g \in \MG_\alpha} \sup_{\mu \in \Delta(\MM \times \Devall)} \E_{(M, \pi^\st) \sim \mu} [ \Gamma_{q, \eta}(\pi, g; \pi^\st, M)]\nonumber\\
    = & \sup_{q \in \prod_{k=1}^K \Delta(\Pi_k')} \sup_{\mu \in \Delta(\MM \times \Pi')} \inf_{\pi \in \MP_\vep, g \in \MG_\alpha} \E_{(M,\pi^\st) \sim \mu} \left[ \Gamma_{q,\eta}(\pi,g; \pi^\st, M)\right]\label{eq:maexo-gamma-ub},
  \end{align}
  where the inequality follows since we are restricting to smaller sets $\MG_\alpha \subset \MG$ and $\MP_\vep \subset \Pi$ in the infimum, and the equality uses \cref{thm:sion}. 
  Given $q \in \prod_{k=1}^K \Delta(\Pi_k')$, $\mu \in \Delta(\MM \times \Pi')$, $\pi \in \MP_\vep$, consider the value of
  \begin{align}
    & \inf_{g \in \MG_\alpha} \E_{(M,\pi^\st) \sim \mu} [\Gamma_{q,\eta}(\pi,g;\pi^\st, M)] \nonumber\\
    =&\E_{(M,\pi^\st) \sim \mu} \E_{\sigma \sim \pi} \left[ \sum_{k=1}^K \fm_k(U(\pi_k^\st, \sigma)) - \fm_k(\sigma) \right] \nonumber\\
& + \frac{1}{\eta} \inf_{g \in \MG_\alpha} \sum_{k=1}^K \E_{(M, \pi^\st) \sim \mu} \E_{\sigma \sim \pi, o \sim M(\sigma)} \E_{\pi_k' \sim q_k} \left[ \exp\left( \frac{\eta}{\pi(\sigma)} \cdot \left( g_k(\pi_k';\sigma, o) - g_k(\pi_k^\st; \sigma, o)\right) \right) -1\right]\nonumber\\
    =& \E_{(M,\pi^\st) \sim \mu} \E_{\sigma \sim \pi} \left[ \sum_{k=1}^K \fm_k(U(\pi_k^\st, \sigma)) - \fm_k(\sigma) \right] \nonumber\\
    &+ \frac{1}{\eta}  \sum_{k=1}^K\inf_{g_k \in \MG_{k,\alpha}} \E_{(M, \pi^\st) \sim \mu} \E_{\sigma \sim \pi, o \sim M(\sigma)} \E_{\pi_k' \sim q_k} \left[ \exp\left( \frac{\eta}{\pi(\sigma)} \cdot \left( g_k(\pi_k';\sigma, o) - g_k(\pi_k^\st; \sigma, o)\right) \right) -1\right]\label{eq:split-into-gks},
  \end{align}
  where we have used $\MG_{k,\alpha}$ to denote $\{ g_k  \in \MG_k \ : \ \| g_k \|_\infty \leq \alpha \}$, so that $\MG_\alpha = \MG_{1,\alpha} \times \cdots \times \MG_{K,\alpha}$. 
  
  Let $\BP$ be the law of the process $(M, \pi^\st) \sim \mu$, $\sigma \sim \pi$, $o \sim M(\sigma)$, and define, for $k \in [K]$, $\mu_{\prior}^k(\pi'_k) = \BP(\pi_k^\st = \pi_k')$, and $\mu_{\pstr}^k(\pi_k';\sigma, o) = \BP(\pi_k^\st = \pi_k' | (\sigma, o))$. For each $k \in [K]$, the term corresponding to agent $k$ in the second term of \cref{eq:split-into-gks} above can be rewritten as follows, using the definition of the posterior distribution $\mu_{\pstr}^k(\pi_k';\sigma,o)$:
  \begin{align}
    & \inf_{g_k \in \MG_{k,\alpha}} \E_{(M, \pi^\st) \sim \mu}\E_{\sigma \sim \pi, o \sim M(\sigma)} \E_{\pi_k' \sim q_k} \left[ \exp\left( \frac{\eta}{\pi(\sigma)} \cdot \left( g_k(\pi_k';\sigma, o) - g_k(\pi_k^\st; \sigma, o)\right) \right) -1\right]\nonumber\\
    =& \inf_{g_k \in \MG_{k,\alpha}} \E_{\sigma \sim \pi} \E_{o | \sigma} \left[ \E_{\pi_k' \sim q_k} \left[ \exp \left( \eta \cdot \frac{g_k(\pi_k' ; \sigma, o)}{\pi(\sigma)} \right) \right] \cdot \E_{\pi_k^\st \sim \mu_{\pstr}^k(\cdot ; \sigma, o)} \left[\exp \left( -\eta \cdot \frac{g_k(\pi_k^\st; \sigma, o)}{\pi(\sigma)}\right) \right] -1 \right]\nonumber.
  \end{align}
  Given any $g_k \in \MG_{k, \eta \alpha}$, we have that $(\pi_k', \sigma, o) \mapsto \frac{\pi(\sigma)}{\eta} \cdot g_k(\pi_k';\sigma, o)$ and $(\pi_k^\st, \sigma, o) \mapsto \frac{\pi(\sigma)}{\eta} \cdot g_k(\pi_k^\st; \sigma, o)$ both belong to $\MG_{k, \alpha}$, meaning that the above quantity is upper bounded by
  \begin{align}
\inf_{g_k \in \MG_{k,\eta\alpha}} \E_{\sigma \sim \pi} \E_{o | \sigma} \left[ \E_{\pi_k' \sim q_k} \left[ \exp(g_k(\pi_k';\sigma, o)) \right] \cdot \E_{\pi_k^\st \sim \mu_{\pstr}^k(\cdot ; \sigma, o)}[\exp( - g_k(\pi_k^\st; \sigma, o))] - 1\right]\nonumber.
  \end{align}
  This expression is equal to
  \begin{align}
V_k(\pi, q, \mu) := \E_{\sigma \sim \pi} \E_{o | \sigma} \inf_{g_k : \Dev \ra \BR, \| g_k \|_\infty \leq \alpha \eta } \left\{ \E_{\pi_k' \sim q_k} [\exp(g_k(\pi_k'))] \cdot \E_{\pi_k^\st \sim \mu_{\pstr}^k(\cdot ; \sigma, o)}[\exp(-g_k(\pi_k^\st))] - 1 \right\}\nonumber.
  \end{align}
  By \cref{lem:hellinger-variational-bounded}, we have that for all $\pi,q,\mu$,
  \begin{align}
    V_k(\pi,q,\mu) =& -\E_{\sigma \sim \pi} \E_{o | \sigma} \sup_{g_k : \Pi_k' \ra \BR, \| g_k \|_\infty \leq \alpha \eta} \left\{ -\E_{\pi_k' \sim q_k} [\exp(g_k(\pi_k'))] \cdot \E_{\pi_k^\st \sim \mu_{\pstr}^k(\cdot ; \sigma, o)}[\exp(-g_k(\pi_k^\st))] + 1 \right\}\nonumber\\
    \leq & -\frac{1}{2} \cdot \E_{\sigma \sim \pi} \E_{o | \sigma} \left[\hell{\mu_{\pstr}^k(\cdot ; \sigma, o)}{q_k}\right] + 4 e^{-\alpha \eta} \label{eq:variational-exp-hell}.
  \end{align}
  Combining (\ref{eq:maexo-gamma-ub}), (\ref{eq:split-into-gks}), and (\ref{eq:variational-exp-hell}), we obtain the following upper bound:
  \begin{align}
\exo(\sJ)\leq &  \sup_{q \in \prod_{k=1}^K \Delta(\Pi_k')} \sup_{\mu \in \Delta(\MM \times \Pi')} \inf_{\pi \in \MP_\vep} \left\{  \E_{(M,\pi^\st) \sim \mu} \E_{\sigma \sim \pi} \left[ \sum_{k=1}^K \fm_k(U(\pi_k^\st, \sigma)) - \fm_k(\sigma) \right]\right.\nonumber\\
    &- \left. \frac{1}{2\eta} \sum_{k=1}^K \E_{\sigma \sim \pi} \E_{o | \sigma} \left[ \hell{\mu_{\pstr}^k(\cdot ; \sigma, o)}{q_k} \right] + \frac{4K}{\eta}\cdot e^{-\alpha \eta} \right\} \nonumber.
  \end{align}
  Since $\fm_k \in [0,1]$ for all $k,M$ and $\hell{\cdot}{\cdot} \in [0,2]$, it follows that we may replace the $\inf_{\pi \in \MP_\vep}$ in the above expression with $\inf_{\pi \in \Pi}$ and pay an additive cost of $K\vep \cdot (1 + 1/\eta)$, and so
    \begin{align}
\exo(\sJ) \leq &  \sup_{q \in \prod_{k=1}^K \Delta(\Pi_k')} \sup_{\mu \in \Delta(\MM \times \Pi')} \inf_{\pi \in \Pi} \left\{   \E_{(M,\pi^\st) \sim \mu} \E_{\sigma \sim \pi} \left[ \sum_{k=1}^K \fm_k(U(\pi_k^\st, \sigma)) - \fm_k(\sigma) \right]\right.\nonumber\\
    &- \left. \frac{1}{2\eta} \sum_{k=1}^K \E_{\sigma \sim \pi} \E_{o | \sigma} \left[ \hell{\mu_{\pstr}^k(\cdot ; \sigma, o)}{q_k} \right] + \frac{4}{\eta}\cdot e^{-\alpha \eta} + K \vep \cdot (1 + 1/\eta) \right\} \nonumber.
    \end{align}
    Since the above holds for any $\vep \in (0,1)$ and $\alpha \geq \max\{1,1/\eta\}$, we may take the limits $\vep \ra 0, \alpha \ra \infty$ to get
    \begin{align}
\exo(\sJ) \leq &  \sup_{q \in \prod_{k=1}^K \Delta(\Pi_k')} \sup_{\mu \in \Delta(\MM \times \Pi')} \inf_{\pi \in \Pi} \left\{   \E_{(M,\pi^\st) \sim \mu} \E_{\sigma \sim \pi} \left[ \sum_{k=1}^K \fm_k(U(\pi_k^\st, \sigma)) - \fm_k(\sigma) \right]\right.\nonumber\\
    &- \left. \frac{1}{2\eta} \sum_{k=1}^K \E_{\sigma \sim \pi} \E_{o | \sigma} \left[ \hell{\mu_{\pstr}^k(\cdot ; \sigma, o)}{q_k} \right]\right\}\nonumber.
    \end{align}
    Next, for any choice of $q_k \in \Delta(\Dev)$, we have
    \begin{align}
      \E_{\sigma \sim \pi}\E_{o | \sigma} \left[\hell{\mu_{\pstr}^k(\cdot ; \sigma, o)}{\mu_{\prior}^k}\right]=& \E_{\sigma \sim \pi}\E_{o | \sigma} \left[\hell{\mu_{\pstr}^k(\cdot ; \sigma, o)}{\E_{\sigma \sim \pi} \E_{o|\sigma}[\mu_{\pstr}(\cdot ; \sigma, o)]}\right]  \nonumber\\
      \leq & 4 \cdot \E_{\sigma \sim \pi} \E_{o | \sigma} \left[ \hell{\mu_{\pstr}^k(\cdot ; \sigma, o)}{q_k} \right]\nonumber,
    \end{align}
    where the equality uses that, for $\pi_k' \in \Pi_k'$, $\mu_{\prior}^k(\pi_k') =\E_{\sigma \sim \pi} \E_{o | \sigma} [\mu_{\pstr}(\pi_k' ; \sigma, o)]$ (by Bayes' rule), and the inequality uses \cref{lem:min-hell-exp}. 
    
    Hence, we have
    \begin{align}
\exo(\sJ) \leq &  \sup_{\mu \in \Delta(\MM \times \Pi')} \inf_{\pi \in \Pi} \left\{   \E_{(M,\pi^\st) \sim \mu} \E_{\sigma \sim \pi} \left[ \sum_{k=1}^K \fm_k(U(\pi_k^\st, \sigma)) - \fm_k(\sigma) \right]\right.
      - \left. \frac{1}{8\eta} \sum_{k=1}^K \E_{\sigma \sim \pi} \E_{o | \sigma} \left[ \hell{\mu_{\pstr}^k(\cdot ; \sigma, o)}{\mu_{\prior}^k} \right]\right\}\nonumber\\
      =& \infr[1/(8\eta)](\sJ)\nonumber,
    \end{align}
    as desired.%
  \end{proof}

  \begin{lemma}
    \label{lem:joint-convexity}
For any fixed $\eta > 0$, $q \in \prod_{k=1}^K \Delta(\Dev)$, $M \in \MM$ and $\pi^\st \in \Devall$, the map $(\pi, g) \mapsto \Gamma_{q, \eta}(\pi, g; \pi^\st, M)$ is jointly convex with respect to $(\pi, g) \in \Pi \times \MG$. 
\end{lemma}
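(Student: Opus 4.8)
The plan is to decompose $\Gamma_{q,\eta}(\pi, g; \pi^\st, M)$ into the two summands appearing in its definition \eqref{eq:define-gamma} and verify joint convexity of each separately, since a sum of jointly convex functions is jointly convex. First I would dispatch the ``reward'' term $\E_{\sigma \sim \pi}[\sum_{k=1}^K \fm_k(\Sw(\pi_k^\st, \sigma)) - \fm_k(\sigma)]$. Because the pure decision set $\Sigma$ is finite (by \cref{ass:ce-convexity}) and the bracketed quantity is a fixed function of $\sigma$ depending on neither $\pi$ nor $g$, this term equals $\sum_{\sigma \in \Sigma} \pi(\sigma)\, c(\sigma)$ for constants $c(\sigma)$, hence is affine in $(\pi, g)$ and in particular jointly convex.

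The crux is the second (exponential) term. Writing $\E_{\sigma \sim \pi}$ as a finite sum and abbreviating $\Delta g_k(\pi_k'; \sigma, o) := g_k(\pi_k'; \sigma, o) - g_k(\pi_k^\st; \sigma, o)$, the term becomes
\[
\frac{1}{\eta} \sum_{k=1}^K \sum_{\sigma \in \Sigma} \E_{o \sim M(\sigma)} \E_{\pi_k' \sim q_k}\left[\pi(\sigma) \exp\left(\frac{\eta \cdot \Delta g_k(\pi_k'; \sigma, o)}{\pi(\sigma)}\right) - \pi(\sigma)\right].
\]
The $-\pi(\sigma)$ contribution is linear in $\pi$. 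For the remaining part, the plan is to recognize it as the \emph{perspective transform} of the exponential: the map $(s, t) \mapsto t \exp(s/t)$ on $\{t > 0\}$ is the perspective of the convex function $s \mapsto \exp(s)$, and is therefore jointly convex in $(s,t)$. Setting $t = \pi(\sigma)$ and $s = \eta \cdot \Delta g_k(\pi_k'; \sigma, o)$—which is an affine function of $(\pi, g)$ for each fixed $\sigma, o, \pi_k', \pi_k^\st$—the quantity $\pi(\sigma) \exp(\eta \Delta g_k / \pi(\sigma))$ is the composition of a jointly convex function with an affine map, hence jointly convex in $(\pi, g)$.

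Finally I would assemble the pieces: for each fixed $k, \sigma, o, \pi_k'$ the summand is jointly convex, and taking the nonnegative-weighted expectations $\E_{o \sim M(\sigma)}$ and $\E_{\pi_k' \sim q_k}$, summing over $\sigma$ and $k$, scaling by $1/\eta > 0$, and adding the affine reward term all preserve joint convexity. The main obstacle is the boundary behavior at $\pi(\sigma) = 0$, where the exponent $\eta \Delta g_k / \pi(\sigma)$ is ill-defined and the perspective function must be handled via its lower-semicontinuous closure (which remains convex). I would address this by either invoking that closure directly, or—more simply—by noting that \cref{lem:joint-convexity} is only applied within the proof of \cref{lem:exo-ir} on the set $\MP_\vep$, where $\pi(\sigma) \geq \vep |\Sigma|^{-1} > 0$; thus joint convexity on the relative interior $\{\pi(\sigma) > 0\}$ is all that is needed, and it extends to the closure by continuity.
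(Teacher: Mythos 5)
Your proof is correct. The paper's own proof of \cref{lem:joint-convexity} consists of the same first reduction you make---splitting off the affine reward term and decomposing the exponential term into per-agent summands---followed by a one-line citation to Lemma C.1 of \citet{foster2022complexity}, which is precisely the perspective-transform argument you spell out: $(s,t)\mapsto t\exp(s/t)$ is the perspective of $\exp$, hence jointly convex on $\{t>0\}$, and precomposition with the affine map $(\pi,g)\mapsto\bigl(\eta(g_k(\pi_k';\sigma,o)-g_k(\pi_k^\st;\sigma,o)),\,\pi(\sigma)\bigr)$ followed by nonnegative-weighted sums preserves convexity. So you have supplied, self-contained, exactly the argument the paper outsources. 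Your flagging of the $\pi(\sigma)=0$ boundary is also apt, and your resolution is the right one: in the proof of \cref{lem:exo-ir} the lemma is only invoked to verify the convexity hypothesis of Sion's theorem on $\MP_\vep\times\MG_\alpha$, where $\pi(\sigma)\geq\vep\abs{\Sigma}^{-1}>0$, so the lower-semicontinuous closure of the perspective is never actually needed.
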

\begin{proof}[\pfref{lem:joint-convexity}]
  Fix any $\eta, q, M, \pi^\st$ as in the statement of the lemma. 
  Recall the definition of $\Gamma_{q,\eta}(\pi, g; \pi^\st, M)$ in \cref{eq:define-gamma}. %
  Since convexity is preserved under summation, it suffices to show that, for each $k$, the map from $\Pi \times \MG_k \ra \BR$, given by
  \begin{align}
     (\pi, g_k) \mapsto &  \E_{\sigma \sim \pi} \left[ \fm_k(\Sw(\pi_k^\st, \sigma)) - \fm_k(\sigma) \right] \nonumber\\
    & \quad + \frac{1}{\eta} \cdot \E_{\sigma \sim \pi, o \sim M(\sigma)} \E_{\dev \sim q_k} \left[ \exp \left( \frac{\eta}{\pi(\sigma)} \cdot (g_k(\dev; \sigma, o) - g_k(\pi_k^\st; \sigma, o))\right) - 1 \right]\nonumber
  \end{align}
  is convex. This follows directly from Lemma C.1 of \cite{foster2022complexity}. 
\end{proof}

\subsection{Putting everything together: Proof of \cref{thm:curse_ub}}
\label{sec:curse_ub_proof}
The proof of \cref{thm:curse_ub} is a straightforward consequence of the lemmas proven previously in this section. 
\begin{proof}[\pfref{thm:curse_ub}]
  Consider an instance $\sJ$ as in the statement of \cref{thm:curse_ub}. By \cref{lem:inf-dec} and \cref{lem:exo-ir}, we have that, for any $\eta > 0$,
  \begin{align}
\exo[\eta](\sJ) \leq \infr[1/(8\eta)](\sJ) \leq K \cdot \decoreg[1/(8\eta)](\co(\sJ))\nonumber.
  \end{align}
  On the other hand, \cref{lem:alg-perf-exo} gives that for any $\eta, \delta > 0$, \cref{alg:maexo} run with the value $\eta$ gives that with probability at least $1-\delta$,
  \begin{align}
\RiskDM = \hmstar(\wh \pi) \leq \exo[\eta](\sJ) + \frac{2K}{T \eta} \cdot \log \left( \frac{K \cdot \max_k |\Dev|}{\delta} \right)\nonumber.
  \end{align}
  Minimizing over $\eta > 0$ and substituting $\gamma = 1/(8\eta)$ yields that there is a value of $\eta$ for which \cref{alg:maexo} yields risk upper bounded as
  \begin{align}
\RiskDM = \hmstar(\wh \pi) \leq K \cdot \inf_{\gamma > 0} \left\{ \decoreg[\gamma](\co(\sJ)) + \frac{16 \gamma}{T} \cdot \log \left( \frac{K \cdot \max_k |\Dev|}{\delta} \right)\right\}\nonumber,
  \end{align}
  which yields the claimed statement of \cref{thm:curse_ub}. 
\end{proof}

\section{Proofs for lower bounds from \cref{sec:curse}}
\label{sec:proofs_lbs_curse}

\subsection{Proof of \cref{prop:ne-query-complexity}}

\begin{proof}[\pfref{prop:ne-query-complexity}]
Fix $K \in \bbN$, and consider the $K$-player NE instance $\sJ$ of \cref{ex:cce}, where $\MA_k = \{1,2\}$ for each $k \in [K]$. Certainly we have $|\Dev[k]| = |\MA_k| = 2$ for all $k$. By \cref{prop:nf-dec-bound}, we have $\decoreg[\gamma](\sJ^\NE) \leq O(K/\gamma)$ for all $\gamma > 0$. Finally, \citet{rubinstein2016settling} implies that there is no algorithm which draws $2^{o(K)}$ samples (each of which requires querying the true payoff function $ a \mapsto (\fmstar_1(a), \ldots, \fmstar_K(a))$ once) and outputs a $c_0$-approximate Nash equilibrium with probability at least $2/3$, where $c_0>0$ is a sufficiently small universal constant; this yields the third claimed statement of \cref{prop:ne-query-complexity}. 
\end{proof}

\subsection{Proof of \cref{thm:2p0s-poldep}}
In this section, we prove \cref{thm:2p0s-poldep}. 
Before proving the result, we introduce some notation that will be useful in the remainder of the section.
\begin{itemize}
\item For integers $N \geq N' \geq 0$, we let ${[N] \choose N'}$ denote the set of all subsets of $[N] = \{1, 2, \ldots, N\}$ of size $N'$.
\item For positive integers $n \leq n'$ let $[n,n'] = \{ n, n+1, \ldots, n' \}$. 
\item For sets $\MX, \MY$, $\MX \sqcup \MY$ denotes the \emph{disjoint union} of $\MX$ and $\MY$; it is formally defined as $\{ (x,0) \ : \ x \in \MX \} \cup \{ (y,1) \ : \ y \in \MY\}$.
\item For finite sets $\MX, \MY$, we let $\MX^\MY$ denote the set of all functions $\phi : \MY \ra \MX$. Note that, in the case of $\MY = [n]$ for some $n \in \BN$, the sets $\MX^{n}$ (which is the $n$-fold product of $\MX$) and $\MX^{[n]}$ are in bijection. We will at times slightly abuse notation by identifying these two sets.
\item For a finite set $\MX$, let $\Unif(\MX)$ denote the uniform distribution over $\MX$. 
\end{itemize}
\begin{proof}[\pfref{thm:2p0s-poldep}]
  Fix $\eps>0$ and $N \in \BN$; by increasing the constant $C_0$ in the statement of the theorem, it is without loss of generality to assume that $N$ is a multiple of 3. Set $N_1 = N/3$ and $N_2 = 2N/3 = N-N_1$. Define
  \begin{align}
    k=N, \qquad q=n=\frac{2k}{\ep} = \frac{2N}{\ep},\label{eq:kn_choice}
  \end{align}
  which ensures that $q^k \geq {N \choose N_1}$ for sufficiently large $N$. %
  We write $\sT_1 := {[N] \choose N_1}$ and $\sT_2 := {[N] \choose N_2}$. %
  We will now define a random function  $\til\Phi : \sT_1 \cup \sT_2 \ra [q]^{[n]} \sqcup [q+1, 2q]^{[n+1,2n]}$ so that $\til\Phi$ maps $\sT_1$ to $[q]^{[n]}$ and $\sT_2$ to $[q+1,2q]^{[n+1,2n]}$. %
  We will show that with positive probability, $\til\Phi$ satisfies certain conditions. %
  \begin{enumerate}
  \item First, let $\Gamma :\sT_1 \cup \sT_2 \ra [q]^{k} \sqcup [q]^k$ denote a random function, defined as follows: $\Gamma$ maps $\sT_1$ to the first copy of $[q]^k$ (uniformly at random), and $\sT_2$ to the second copy of $[q]^k$ (uniformly at random). In particular, for each $\MS \in \sT_1 \cup\sT_2$, $\Gamma(\MS)$ are independent and chosen uniformly over their respective copies of $[q]^k$. \dfcomment{Not super clear what ``uniformly at random'' refers to here. Are we randomizing on a per-element basis or in function space?}\noah{sure exactly what you mean by ``in function space'' but I added a clarifying sentence}
  \item We next define a mapping $\Sigma: [q]^k \sqcup [q]^k \ra [q]^{[n]} \sqcup [q+1,2q]^{[n+1,2n]}$ which maps the first copy of $[q]^k$ into $[q]^{[n]}$ and the second copy of $[q]^k$ into $[q+1,2q]^{[n+1,2n]}$ according to the Reed-Solomon code of \cref{thm:rs}. (Here we have identified each of $[q]^{[n]}$ and $[q+1,2q]^{[n+1,2n]}$ with $[q]^n$ in the natural way.) 
  \item We then set $\til\Phi = \Sigma \circ \Gamma$. %
    \dfcomment{it's slightly confusing to say we set based on some choice of $\Gamma$ here when we say it is randomized above. would it make more sense to just introduce the randomization below to prove that a good $\Gamma$ exists?}\noah{changed notation to refer to $\til \Phi$ as the random function and then $\Phi$ is the good one we choose}
  \end{enumerate}
  We next argue that there is some choice of $\Gamma$ for which the resulting $\til\Phi$  satisfies the following  \cref{cond:injective,cond:unif-hit}. %
  \begin{condition}
    \label{cond:injective}
 For each $i \in \{1,2\}$, for all sets $\MT, \MT' \in \sT_i$ with $\MT \neq \MT'$, it holds that $\ham{\til\Phi(\MT)}{\til\Phi(\MT')} \geq q-k+1$. %
  \end{condition}

  \begin{condition}
    \label{cond:unif-hit}
    For any subset $\MQ \subset [N]$ with $|\MQ| \leq \sqrt{N}$,
      \begin{align}
    \forall a_1 \in [n], a_2 \in [q], \quad   \BP_{\MT \sim \Unif(\sT_1)} \left( \til\Phi(\MT)(a_1) = a_2 | \MQ \subset \MT \right) \leq 2/q \label{eq:qcond-u1}\\
    \forall a_1 \in [n+1,2n], a_2 \in [q+1,2q], \quad \BP_{\MT \sim \Unif(\sT_2)} \left( \til\Phi(\MT)(a_1) = a_2 | \MQ \subset \MT \right) \leq 2/q\label{eq:qcond-u2}.
  \end{align}
\end{condition}
To see that there exists such a choice for $\Gamma$, we make the following observations.
\begin{enumerate}
\item Since $q^k > 10 \cdot {N \choose N_1}^2$ \dfedit{whenever $N$ is sufficiently large (by \eqref{eq:kn_choice})}, with probability at least $1 - {{N \choose N_1}^2}/{q^k} > 9/10$, the function $\Gamma$ is injective. Conditioned on being injective, \cref{thm:rs} gives that \cref{cond:injective} holds, since the action of $\Sigma$ on each of the copies of $[q]^k$ is defined to be that of a Reed-Solomon code. Thus, \cref{cond:injective} holds with probability at least $9/10$ over the choice of $\Gamma$. 
\item %
Consider any fixed choice of $\MT \in \sT_1$.  Note that, for each coordinate $a_1 \in [n]$, the mapping $\MT \mapsto \til\Phi(\MT)(a_1) = \Sigma(\Gamma(\MT))(a_1)$, for $\MT \in \sT_1$, is distributed as a uniformly random function from $\sT_1 \to [q]$ (with respect to the randomness in $\Gamma$). This fact follows from the final sentence of \cref{thm:rs} and the fact that $\Gamma$ is a uniformly random function. 
  Thus, by \cref{lem:extractor} with $N_0 = N_1$ and a union bound over all $n$ possible values of $a_1$,  with probability $1 - n \cdot N^{\sqrt{N} + 1} \cdot 2^{-{5N/6 \choose N/6}/(Cq^2)}$ over the choice of $\Gamma$, for any subset $\MQ \subset [N]$ of size $|\MQ| \leq \sqrt{N}$, \eqref{eq:qcond-u1} holds. %
  Similarly, an application of \cref{lem:extractor} with $N_0 = N_2$ yields that with probability  $1 - n \cdot N^{\sqrt{N} + 1} \cdot 2^{-{5N/6 \choose N/6}/(Cq^2)}$ over the choice of $\Gamma$, for any subset $\MQ \subset [N]$ of size $|\MQ| \leq \sqrt{N}$, \eqref{eq:qcond-u2} holds. Note that our choices of $q, N, \ep$ ensure that, for the constant $C$ in \cref{lem:extractor}, as long as $N$ is sufficiently large,
  \begin{align}
    3 \log q \leq 6 \log(N/\ep) \leq 12 \log(N) \leq N/6 - C \leq \log{5N/6 \choose N/6} - C\nonumber,
  \end{align}
  meaning that it is valid to apply \cref{lem:extractor}. Finally, let us note that our choices for $N, q$ ensure that as long as $N$ is sufficiently large,
  \begin{align}
    {5N/6 \choose N/6} > Cq^2 \cdot \left( \log(2n ) + \log(N^{\sqrt{N}+1})+ 5 \right),\nonumber
  \end{align}
  and therefore, \cref{eq:qcond-u1} and \cref{eq:qcond-u2} hold for all $\MQ \subset [N]$ with $|\MQ| \leq \sqrt{N}$, with probability at least $1-2^{-5}$. In particular, \cref{cond:unif-hit} holds with probability at least $1-2^{-5}$ over the random choice of $\Gamma$. 
\end{enumerate}
Summarizing the above points, with probability at least $1-1/10 - 2^{-5} > 0$ over the choice of $\Gamma$, \cref{cond:injective,cond:unif-hit} both hold. We pick any such $\Gamma$ for which both conditions hold, and set $\Phi = \Sigma \circ \Gamma$. 

We are now ready to define the $2$-player instance $\sJ = \instma$.

\paragraph{Policy space} Let $\Pi_{1} = \{1, 2, \ldots, 2n\}$ and $\Pi_{2} = \{ 0, 1, \ldots, 2q \}$, and write $\Pi = \Pi_1 \times \Pi_2$ to denote the joint policy space. %

\paragraph{Deviation sets and switching functions}
The deviation sets $\Dev$ and switching function $\Sw$ are set as in \cref{def:ne-instance} to make $\sJ$ a 2-player NE instance. To be concrete, we have $\Dev = \Pi_k$ for each $k$, and $\Sw(\dev, \pi) = (\dev, \pi_{-k})$.

\paragraph{Model class $\MM$} The class $\MM$ is indexed by $\sT_1 \cup \sT_2$. Given a set $\MT \in \sT_1 \cup \sT_2$, we write the corresponding model as $M_\MT$. We will often consider the decomposition $\MM = \MM_1 \sqcup \MM_2$, where $\MM_1 \ldef \{ M_\MT \ :\ \MT \in \sT_1\}$ and $\MM_2 \ldef \{ M_\MT \ : \ \MT \in \sT_2 \}$. For each $M_\MT \in \MM$, we need to specify the distributions $o = (r_1, r_2, \ocirc) \sim M_\MT(\pi)$, for each $\pi \in \Pi$. 
To do so, we first define a mapping $\MB^\st : \sT_1 \cup \sT_2 \ra \powerset{[2n] \times [2q]} \subset \powerset{\Pi}$, as follows: recall that $\Phi$ maps $\sT_1$ to $[q]^{[n]}$ and $\sT_2$ to $[q+1,2q]^{[n+1,2n]}$. Then for $\MT \in \sT_1 \cup \sT_2$, define $\MB^\st(\MT) \subset [2n] \times [2q] \subset \Pi$ by %
    \begin{align}
      \MB^\st(\MT) = \begin{cases}
        \{ (i, \Phi(\MT)(i)) : i \in [n]\} &: M_\MT \in \MM_1 \\
        \{ (i, \Phi(\MT)(i)): i \in [n+1,2n]\} &: M_\MT \in \MM_2.
      \end{cases}\nonumber
    \end{align}
    Note that here we view, for each set $\MT$ in the domain of $\Phi$, $\Phi(\MT)$ as a function mapping either $[n] \ra [q]$ (for $M_\MT \in \MM_1$) or $[n+1,2n] \ra [q+1,2q]$ (for $M_\MT \in \MM_2$).

    We set the reward space to be $\MR = [0,1]$, and the pure observation space to be $\Ocirc = [N]$. Now, for each $M_\MT \in \MM$ and $\pi \in \Pi$, the full observation $o = (r_1, r_2, \ocirc) \sim M_\MT(\pi)$ is drawn as follows:
  \begin{itemize}
  \item The pure observation $\ocirc \in \Ocirc$ is simply a uniformly random element of the set $\MT$. 
  \item The rewards are deterministic, i.e., we have $r_k = f_k\sups{M_\MT}(\pi)$ for each $k \in [K]$, a.s. Moreover, we define
      \begin{align}
    f_1\sups{M_\MT}(\pi) = -f_2\sups{M_\MT}(\pi) = \begin{cases}
      0 &: \pi \in \Pi_1 \times \{ 0\} \\
      1 &: \pi \in (\Pi_1 \times \{ 1, 2, \ldots, 2q \}) \backslash \MB^\st(\MT)\\
      -\delta &: \pi \in \MB^\st(\MT),
    \end{cases}\label{eq:rewards-bstar}
  \end{align}
  where we set $\delta := 10^{-3}$. 
\end{itemize}

\paragraph{Establishing the claimed statements} It is immediate from definition of $\Pi$ that $|\Pi| = 2n \cdot (2q+1) = O(N^2/\ep^2)$, thus establishing the first claimed statement of the theorem.  
Next, \cref{lem:com-dec} below bounds $\decoreg[\gamma](\co(\sJ))$, establishing the second claimed statement.
  \begin{lemma}
    \label{lem:com-dec}
For any $\gamma > 0$, It holds that $\decoreg[\gamma](\co(\sJ)) \leq \ep$. 
\end{lemma}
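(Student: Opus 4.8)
The plan is to bound $\decoreg[\gamma](\co(\sJ),\Mbar)$ for an arbitrary reference model $\Mbar\in\co(\MM)$ and then take the supremum over $\Mbar$. By the definition \eqref{eq:regret-dec}, it suffices to exhibit, for each such $\Mbar$, a single distribution $p\in\Delta(\Pi)$ with
\[
\sup_{M\in\co(\MM)}\left\{\E_{\pi\sim p}[\hm(\pi)]-\gamma\cdot\E_{\pi\sim p}[\hell{M(\pi)}{\Mbar(\pi)}]\right\}\le\ep.
\]
I would take $p$ supported entirely on decisions of the form $(\pi_1,0)$, i.e.\ player $2$ always plays the null action $0\in\Pi_2$. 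The point of this choice is that by \eqref{eq:rewards-bstar} the reward vanishes identically on $\Pi_1\times\{0\}$, so player $1$ has no profitable deviation and the pure observation $\ocirc\sim\Unif(\MT)$ carries no reward information; hence $\hm(\pi_1,0)$ reduces to player $2$'s deviation benefit alone. Writing $M=\E_{\MT\sim\nu}[M_\MT]$ and using linearity of the value functions in the model, a direct computation shows this benefit equals $\max\{0,\,(1+\delta)\max_{c}\nu(\{\MT:(\pi_1,c)\in\MB^\st(\MT)\})-1\}$, which is at most $\delta=10^{-3}$ and is strictly positive only when some column $c$ carries $\nu$-trap-mass exceeding $1/(1+\delta)$ at row $\pi_1$.

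Next, splitting $\bar\nu=(1-\beta)\bar\nu_1+\beta\bar\nu_2$ (for $\Mbar=\E_{\bar\nu}[M_\MT]$) according to whether $\MT\in\sT_1$ or $\MT\in\sT_2$, I would assume without loss of generality (the other case being symmetric, with the two blocks of rows exchanged) that $\beta\le\tfrac12$, and take $p$ to play rows in the block $[n+1,2n]$. These rows are \emph{safe} for every model in $\MM_1$, since their trap sets lie in rows $[n]$; and each model in $\MM_2$ traps each such row at a single column. Because the total $\MM_2$-mass is $\beta\le\tfrac12<1/(1+\delta)$, no column of any played row reaches trap-mass $1/(1+\delta)$ under $\Mbar$ itself, so $\E_{\pi\sim p}[\hmbar(\pi)]=0$. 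The remaining task is to control an adversary $M$ that shifts mass onto trapping $\MM_2$-models to raise $\E_{\pi\sim p}[\hm(\pi)]$ up to $\delta$ per row, and to show this is paid for by the Hellinger term. This is where the scale of $\gamma$ enters: a concentrated $\MM_2$-model, whose observation law is $\Unif(\MT^*)$ with $|\MT^*|=2N/3$, and $\Mbar$, whose observation law is a mixture dominated by size-$N/3$ uniforms, have single-round squared Hellinger distance bounded below by an absolute constant, so for $\gamma\ge C_0$ the penalty $\gamma\,\E_{\pi\sim p}[\hell{M(\pi)}{\Mbar(\pi)}]$ dominates the at-most-$\delta$ gain—exactly the regime $\gamma\ge C_0$ asserted in \cref{thm:2p0s-poldep}.

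The main obstacle is the genuinely mixed regime, where $\bar\nu_2$ is spread over many sets $\MT\in\sT_2$ and $\beta$ is close to $\tfrac12$: then an adversary playing a single $M_{\MT^*}$ with $\MT^*\in\supp(\bar\nu_2)$ can have observation law close to part of $\Mbar$, so the constant Hellinger lower bound above no longer obviously applies. This is precisely where \cref{cond:injective} and \cref{cond:unif-hit} are used. The extractor property \eqref{eq:qcond-u2} guarantees that, conditioned on containing any small subset $\MQ\subset[N]$ (all an adversary can effectively pin down in the low-Hellinger regime, by the support-estimation hardness underlying the observation channel), at most a $2/q$ fraction of $\MM_2$-models trap a given cell $(\pi_1,c)$; and the Reed--Solomon distance \cref{cond:injective} forces the trap sets of distinct models to disagree on all but $k-1$ of their coordinates, so trapping many of the played rows requires a model whose underlying set $\MT^*$ disagrees with $\Mbar$'s support on a $\Theta(N)$-sized set. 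I would make this quantitative by averaging over the played rows: to gain on a $\rho$-fraction of the $n$ rows the adversary must move $\bar\nu$-mass onto such far-away models, forcing $\E_{\pi\sim p}[\hell{M(\pi)}{\Mbar(\pi)}]\gtrsim\rho$ against a gain of at most $\delta\rho$, and taking $C_0$ a suitable multiple of $\delta$ then gives the desired bound uniformly in $M$. Assembling the two cases and taking $\sup_{\Mbar}$ completes the proof; the delicate step—converting the combinatorial extractor and code guarantees into a lower bound on averaged squared Hellinger distance—is the crux where I expect the real work to lie.
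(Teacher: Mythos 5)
Your proposal gets several ingredients right (restricting $p$ to $\Pi_1\times\{0\}$, the per-row deviation formula $\max\{0,(1+\delta)\max_c\nu_c-1\}\le\delta$, and the need for $\gamma\ge C_0$ only in the concentrated-adversary case), but it has two genuine gaps. First, the case split is wrong: you split on the mixture weight $\beta=\bar\nu(\sT_2)$, whereas the correct split (used in the paper) is on the size of the set $\MT(\Mbar)=\{i:\Mbar[i]\ge 1/N\}$ determined by $\Mbar$'s \emph{observation law}. These are not interchangeable, because a mixture of $\MM_1$ models can exactly replicate the observation law of an $\MM_2$ model. Concretely, take $\MT_1,\MT_2\in\sT_1$ disjoint with $\MT_1\cup\MT_2=\MT^\st\in\sT_2$ and $\Mbar=\tfrac12 M_{\MT_1}+\tfrac12 M_{\MT_2}$, so $\beta=0$ and your $p$ plays rows in $[n+1,2n]$. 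The adversary $M=M_{\MT^\st}$ then satisfies $\hell{M_{\MT^\st}((\pi_1,0))}{\Mbar((\pi_1,0))}=0$ (both have pure observation $\Unif(\MT^\st)$ and deterministic zero rewards on column $0$), yet $h\sups{M_{\MT^\st}}((\pi_1,0))=\delta$ for every $\pi_1\in[n+1,2n]$, so your candidate $p$ certifies only $\decoreg[\gamma](\co(\sJ),\Mbar)\le\delta=10^{-3}$, which exceeds $\ep$ when $\ep$ is small. The split on $|\MT(\Mbar)|$ is what forces a concentrated adversary (say $\nu(M_\MT)\ge 14/15$ with $|\MT|=N/3$ against $|\MT(\Mbar)|\ge N/2$) to incur a constant observation-level Hellinger cost; the sizes $N/3$ and $2N/3$ are engineered around the threshold $N/2$ precisely for this.

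Second, the "genuinely mixed regime" that you flag as the crux is resolved in the paper without any Hellinger accounting and without \cref{cond:unif-hit}: if the deviation benefit exceeds $\ep$, then at least $\ep n$ played rows each have a column carrying $\nu$-mass $>1/(1+\delta)>1-\delta$; since no single model carries mass $\ge 14/15$, a Markov argument produces two \emph{distinct} models $M_{\MT_1},M_{\MT_2}$ in the support each agreeing with the consensus column on a $(1-2\sqrt\delta)$-fraction of those rows, hence agreeing with each other on at least $n\ep/2=N$ coordinates, contradicting the Reed--Solomon distance guarantee of \cref{cond:injective}. This bounds the \emph{gain} by $\ep$ outright, uniformly over $M\in\co(\MM)$, with no need to lower-bound $\E_{\pi\sim p}[\hell{M(\pi)}{\Mbar(\pi)}]$ by $\rho$. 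Your proposed route via \cref{cond:unif-hit} does not fit here: that condition concerns the conditional law of $\Phi(\MT)$ given $\MQ\subset\MT$ for small $\MQ$, which is the right tool for the $T$-round sample-complexity lower bound (\cref{lem:rs-sc-lb}) where the learner observes elements of $\MT$, but there is no analogue of "pinning down a small subset" inside the single-round minimax quantity $\decoreg[\gamma]$. As written, the claimed inequality $\E_{\pi\sim p}[\hell{M(\pi)}{\Mbar(\pi)}]\gtrsim\rho$ is unsubstantiated and, by the counterexample above, false in general.
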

The proof of \cref{lem:com-dec} uses that $\Phi$ satisfies \cref{cond:injective}.  Finally, the third claimed statement is established by the following lemma.
\begin{lemma}
  \label{lem:rs-sc-lb}
  There is a constant $C > 0$ so that the following holds. For any algorithm that has at most $T \leq \sqrt{N} / C$ rounds of interaction, there is some model $\Mstar \in \MM$ so that
  \begin{align}
    \E\sups{\Mstar}[\RiskDM] = \E\sups{\Mstar}\left[ \hmstar(\wh \pi) \right] > \delta/100 = 10^{-5}\nonumber.
  \end{align}
  Recall that above $\wh\pi$ denotes the output policy of the algorithm.
\end{lemma}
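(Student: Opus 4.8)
The plan is to prove a minimax lower bound by the probabilistic method (Yao's principle): I exhibit a prior over $\MM$ under which the expected risk of every $T$-round algorithm is at least $\delta/4 > \delta/100$, which produces a single bad model $\Mstar$. The prior is $\mu = \tfrac12\mu_1 + \tfrac12\mu_2$, where $\mu_i$ draws $\MT \sim \Unif(\sT_i)$ and sets $\Mstar = M_\MT$. The argument has two pieces: (i) a structural claim that a single output $\wh\pi$ cannot be near-equilibrium for both an $\MM_1$-model and an $\MM_2$-model, and (ii) an information-theoretic claim that the learner cannot distinguish $\MM_1$ from $\MM_2$ within $T \le \sqrt N/C$ rounds.

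For (i) I would read the exploitability off \eqref{eq:rewards-bstar}. The point is that for $M_\MT \in \MM_1$ the ``safe'' first coordinates are $\pi_1 \in [n+1,2n]$ (no trap lives there), whereas for $M_\MT \in \MM_2$ they are $\pi_1 \in [n]$, and these ranges are disjoint. Concretely: if $M_\MT \in \MM_1$ and $\wh\pi_1 \in [n]$, then $\hmstar(\wh\pi) \ge \delta$, since player $2$ can deviate to $\pi_2' = \Phi(\MT)(\wh\pi_1)$, moving $(\wh\pi_1,\pi_2')$ into $\MB^\st(\MT)$ and gaining $\delta$ when $\wh\pi_2 = 0$ (and the exploitability is only larger, $\ge 1$, when $\wh\pi_2 \ne 0$); the symmetric statement holds for $\MM_2$ and $\wh\pi_1 \in [n+1,2n]$. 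Writing $A = \{\wh\pi_1 \in [n]\}$, this gives $\E_\mu[\RiskDM] \ge \tfrac{\delta}{2}\bigl(\BP_{\mu_1}[A] + \BP_{\mu_2}[A^c]\bigr) \ge \tfrac{\delta}{2}\bigl(1 - \tvd{\BP_{\mu_1}}{\BP_{\mu_2}}\bigr)$, where $\BP_{\mu_i}$ is the law of the full transcript (including the randomized output $\wh\pi$) under $\mu_i$.

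The heart of the proof is (ii): showing $\tvd{\BP_{\mu_1}}{\BP_{\mu_2}} \le \tfrac12$. I would separate the two channels of information about $\MT$. The pure observations $\ocirc\^t \sim \Unif(\MT)$ are independent of the played decisions, so over the whole run they are just i.i.d. draws from $\Unif(\MT)$; the rewards are deterministic and, away from $\MB^\st(\MT)$, model-independent (equal to $\mathbf 1\{\pi_2 \ne 0\}$). I would introduce a reference process $\tilde\BP_{\mu_i}$ whose environment always returns these ``clean'' rewards (never signalling a trap), couple it to the real process on the same $\MT$, observations, and internal randomness, and apply \cref{lem:pq-tvd-cond} with the event that no trap is hit; since the two processes agree until the first trap, this yields $\tvd{\BP_{\mu_i}}{\tilde\BP_{\mu_i}} \le \tilde\BP_{\mu_i}\bigl[\exists t:\ \pi\^t \in \MB^\st(\MT)\bigr]$. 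Under $\tilde\BP_{\mu_i}$ the decisions depend on $\MT$ only through the observed distinct support $\MQ\^{t-1}$ (of size $\le T-1 \le \sqrt N$), so on the no-collision event the posterior of $\MT$ is exactly $\Unif(\{\MT' \in \sT_i : \MQ\^{t-1} \subseteq \MT'\})$, and \cref{cond:unif-hit} bounds the per-round trap probability by $2/q$, giving $\tilde\BP_{\mu_i}[\text{trap}] \le 2T/q + O(T^2/N)$. Finally, since the reference transcript is a fixed randomized function of $\ocirc^{1:T}$ alone, $\tvd{\tilde\BP_{\mu_1}}{\tilde\BP_{\mu_2}}$ is at most the total variation between $T$ i.i.d. samples from $\Unif$ on a random $N/3$-set versus a random $2N/3$-set, which a collision-free birthday computation bounds by $O(T^2/N) = O(1/C^2)$. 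The triangle inequality over these three terms, with $C$ and $N$ large, gives at most $\tfrac12$.

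The main obstacle I anticipate is carrying out step (ii) correctly in the adaptive setting. The extractor bound of \cref{cond:unif-hit} conditions only on $\MQ \subset \MT$, so I must argue that under the clean reference process the history really reveals nothing about $\MT$ beyond ``$\MQ \subseteq \MT$'' — this is exactly where the reward-obliviousness of the reference process and the no-collision event are indispensable — and I must take the trap-probability bound in the \emph{reference} process rather than the real one, so that the played decisions are oblivious to $\MB^\st(\MT)$ and the estimate avoids the circularity of real actions adapting to discovered traps.
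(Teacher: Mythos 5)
Your proposal is correct and follows essentially the same route as the paper's proof: the same dichotomy on whether $\wh\pi_1 \in [n]$ or $[n+1,2n]$ (with the $\wh\pi_2 \neq 0$ case handled separately), the same trap-free reference models with clean rewards (the paper's $M_{\MT,0}$), the same posterior-uniformity argument on the no-collision event feeding into \cref{cond:unif-hit} to bound the per-round trap probability by $2/q$, and the same use of \cref{lem:pq-tvd-cond} to pass back to the real models. The only difference is organizational — you run a Le Cam two-point bound between the mixtures $\mu_1,\mu_2$ directly, whereas the paper routes both through an auxiliary model $M_0$ with observations uniform on $[N]$ and argues by contradiction via Markov's inequality — and this is immaterial since, conditioned on no collision, all three observation processes induce the uniform distribution over distinct $T$-tuples.
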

The proof of \cref{lem:rs-sc-lb} uses that $\Phi$ satisfies \cref{cond:unif-hit}. 
It remains to prove \cref{lem:com-dec,lem:rs-sc-lb}; we do so in the remainder of this section.
\end{proof}

\begin{proof}[\pfref{lem:com-dec}]
    For $\Mbar \in \co(\MM)$ and  $i \in [N]$, let $\ol M[i] \in [0,1]$ denote the probability $\BP_{(r_1, r_2, \ocirc) \sim \ol M(\pi)}[\ocirc = i]$, for an arbitrary decision $\pi \in \Pi$ (note that the choice of decision does not affect the distribution over the pure observation $\ocirc$). 
    For any $\ol M \in \co(\MM)$, we define the set $\MT(\ol M) \subset [N]$ as follows:
  \begin{align}
\MT(\ol M) := \left\{ i \in [N] \ : \  \ol M[i] \geq 1/N \right\}\nonumber,
  \end{align}

 Now fix any $\ol M \in \co(\MM)$. %
  Define $p^\st \in \Delta(\Pi)$ as follows, as a function of $\ol M$: 
  \begin{align}
    p^\st = \begin{cases}
      \Unif( \{(1,0), \ldots, (n,0) \}) &: |\MT(\ol M)| \geq N/2 \\
      \Unif(\{(n+1,0), \ldots, (2n,0)\}) &: |\MT(\ol M)| < N/2.
    \end{cases}\nonumber
  \end{align}
  We have that 
  \begin{align}
    \decoreg[\gamma](\co(\sJ), \ol M) \leq &  \sup_{M \in \co(\MM)} \E_{\pi \sim p^\st} \left[ \hm(\pi) - \gamma \cdot \hell{M(\pi)}{\ol M(\pi)}\right] \nonumber\\
    = & \sup_{M \in \co(\MM)} \E_{\pi \sim p^\st} \left[\sum_{k=1}^2 \max_{\pi_k' \in \Pi_k} \fm_k(\pi_k', \pi_{-k}) - \fm_k(\pi) - \gamma \cdot \hell{M(\pi)}{\ol M(\pi)}\right]\nonumber\\
    =& \sup_{M \in \co(\MM)} \E_{\pi \sim p^\st} \left[ \max_{\pi_2' \in \Pi_2} \fm_2(\pi_1, \pi_2') - \fm_2(\pi) - \gamma \cdot\hell{M(\pi)}{\ol M(\pi)}\right]\label{eq:sup-mcom},
  \end{align}
  where the final equality follows because for all $M \in \co(\MM)$ and all $\pi$ in the support of $p^\st$, $\max_{\pi_1' \in \Pi_1} \fm_1(\pi_1', \pi_2) = 0 = \fm_1(\pi)$.

  Fix $\nu \in \Delta(\MM)$ so that $M = \ol M_{\nu}(\pi) := \E_{M \sim \nu}[M(\pi)]$ attains the supremum in \eqref{eq:sup-mcom}. %
  We consider the following possibilities:

\paragraph{Case 1} Suppose first that $|\MT(\Mbar)| \geq N/2$. We consider the following sub-cases:
    \begin{enumerate}
    \item First suppose that $\nu(\MM_1) \leq \frac{1}{1+\delta}$, where we recall that $\delta\ldef 10^{-3}$ (\eqref{eq:rewards-bstar}). Then for all $\pi_1 \in [n]$ and $\pi_2 \in [2q]$, it holds that \dfcomment{explain how this uses $\cB^{\star}(\cT)$ definition}\noah{added below}
      \begin{align}
f_2^{\ol M_\nu}((\pi_1, \pi_2)) \leq \frac{1}{1+\delta} \cdot \delta - \left( 1 - \frac{1}{1+\delta} \right) =0\label{eq:f2mbar-nu},
      \end{align}
      since $f_2^{M_\MT}((\pi_1, \pi_2))$ is only positive when $\pi \in \MB^\st(\MT)$, which happens with probability at most $\frac{1}{1+\delta}$ under $M_\MT \sim \nu$, as $\pi_1 \in [n]$; moreover, when it is positive, it is $\delta$, and when it is not positive, it is $-1$. 
      Using \cref{eq:f2mbar-nu}, since for all decisions $\pi$ in the support of $p^\st$ (which have $\pi_1 \in [n]$ in this sub-case), the expression in \cref{eq:sup-mcom} is bounded above by 0. 
    \item Next suppose that there is some model $M_\MT \in \MM_1$ so that $\nu(M_\MT) \geq 14/15$. %
      Thus, we must have
$
\sum_{j \in \MT(\ol M) \backslash \MT} \ol M_\nu[j] \leq \sum_{j \in [N] \backslash \MT} \Mbar_\nu[j] \leq \frac{1}{15}. 
$
On the other hand, since $|\MT(\Mbar)| \geq N/2$, we have $\sum_{j \in \MT(\ol M) \backslash \MT} \ol M[j] \geq \frac{1}{N} \cdot \frac{N}{6} = 1/6$. Thus, for any decision $\pi \in \Pi$,
\begin{align}
\hell{\ol M_\nu(\pi)}{\ol M(\pi)} \geq (\tvd{\ol M_\nu(\pi)}{\ol M(\pi)})^2 \geq (1/6 -1/15)^2 =1/100\nonumber,
      \end{align}
      Thus, as long as $\gamma \geq 100$, since $\fm_2(\pi_1, \pi_2') \leq 1$ for all $\pi_1, \pi_2'$,
      if we recall that $M = \ol M_\nu$ is chosen to maximize the expression in \cref{eq:sup-mcom}, we have that this expression is bounded above by 0.
\item \label{it:eps-contradiction} In the remaining case, we must have $\nu(\MM_1) \geq \frac{1}{1+\delta}$, yet for each $M_\MT \in\MM_1$, $\nu(M_\MT) < 14/15$.  Suppose for the purpose of contradiction that
  \begin{align}
\E_{\pi \sim p^\st} \left[ \max_{\pi_2' \in \Pi_2} f_2\sups{\ol M_\nu}(\pi_1, \pi_2') - f_2\sups{\ol M_\nu}(\pi) \right]=\E_{\pi \sim p^\st} \left[ \max_{\pi_2' \in \Pi_2} f_2\sups{\ol M_\nu}(\pi_1, \pi_2') \right] > \ep \label{eq:deviation-epn}.
  \end{align}
  Write $\MI = \{ \pi_1 \in [n] : \max_{\pi_2' \in \Pi_2} f_2\sups{\ol M_\nu}(\pi_1, \pi_2') \geq 0 \}$; since $\max_{\pi_2' \in \Pi_2} f\sups{\Mbar_\nu}_2(\pi_1, \pi_2') \leq 1$ for all $\pi_1$, \cref{eq:deviation-epn} tells us that $|\MI| \geq \ep n$.  By construction, for each $\pi_1 \in \MI$, there is at most one value of $\pi_2 \in [q]$ so that $f_2\sups{\ol M_\nu} (\pi_1, \pi_2) \geq 0$; let this value of $\pi_2$ be denoted by $\pi_2^\st(\pi_1)$, if such $\pi_2$ exists given $\pi_1$, and otherwise set $\pi_2^\st(\pi_1) = -1$.

  Note that if $f_2\sups{\ol M_\nu}(\pi_1, \pi_2) \geq 0$ for any $\pi_1 \in \Pi_1$ \dfcomment{for $\pi_1\in\brk{n}$?}\noah{added clarification}, then we must have that
$ 
    \nu( \{ M_\MT \in \MM_1 : \Phi(\MT)(\pi_1) = \pi_2 \}) \geq \frac{1}{1+\delta} > 1-\delta.
    $
    \dfcomment{where is $\MI$ defined? this seems to be the first place it is used}\noah{moved defn beforehand} 
Therefore, for all $\pi_1$, if $\pi_2^\st(\pi_1) > 0$, then 
\begin{align}
\nu(\{M_\MT \in \MM_1 : \Phi(\MT)(\pi_1) = \pi_2^\st(\pi_1) \}) > 1-\delta \label{eq:t-pi-delta}.
\end{align}

  For each $M_\MT \in \MM_1$, define
  \begin{align}
\zeta(\MT) := \left| \left\{ \pi_1 \in \MI \ : \ \Phi(\MT)(\pi_1) \neq \pi_2^\st(\pi_1) \right\} \right|\nonumber.
  \end{align}
  We have that
  \begin{align}
|\MI| - \sum_{M_\MT \in \MM_1} \nu(M_\MT) \zeta(\MT)  = \sum_{\pi_1 \in \MI} \sum_{M_\MT \in \MM_1} \nu(M_\MT) \cdot \One{\Phi(\MT)(\pi_1) = \pi_2^\st(\pi_1)} \geq |\MI| \cdot (1-\delta)\nonumber,
  \end{align}
  where the inequality uses \cref{eq:t-pi-delta}. 
  Thus, by Markov's inequality, for some subset $\MM_1' \subset \MM_1$, it holds that $\nu(\MM_1 \backslash \MM_1') \leq \sqrt{\delta}$ and for all $M_\MT \in \MM_1'$, $\zeta (\MT) \leq |\MI|\cdot \sqrt{\delta}$. Since $\nu(\MM_1) \geq 1-\delta$, it follows that $\nu(\MM_1') \geq 1 - \delta - \sqrt{\delta} \geq 1-2\sqrt{\delta}$. Since $1-2\sqrt{\delta} > 14/15$ by our choice of $\delta = 10^{-3}$, there must be at least two distinct elements of $\MM_1'$, which we denote by $M_{\MT_1}$ and  $M_{\MT_2}$.

  To proceed, by definition of $\MM_1'$, it holds that
  \begin{align}
\left| \left\{ \pi_1 \in \MI \ : \ \Phi(\MT_1)(\pi_1) = \Phi(\MT_2)(\pi_1) = \pi_2^\st(\pi_1) \right\} \right| \geq |\MI| \cdot (1-2\sqrt \delta) \geq |\MI|/2 \geq n\ep / 2\nonumber.
  \end{align}
  It follows that $\ham{\Phi(\MT_1)}{\Phi(\MT_2)} \leq  n -n\ep/2 = n(1-\ep/2)$, which contradicts \cref{cond:injective}, since $n(1-\ep/2)=\frac{2N}{\ep} \cdot (1 - \ep/2) <  \frac{2N}{\ep} - N + 1= q-k+1$. Thus, \cref{eq:deviation-epn} is false, and therefore the expression in \cref{eq:sup-mcom} corresponding to choosing $M =\Mbar_\nu$ is bounded above by $\ep$. %
    \end{enumerate}
    \paragraph{Case 2} Now suppose that $|\MT(\ol M)| < N/2$.  In this case an argument symmetric to that in the case that $|\MT(\ol M)| \geq N/2$ may be applied to establish the same upper bound on the multi-agent DEC. (In particular, the roles of $\MM_1, \MM_2$ are swapped; the symmetry arises from the fact that sets in $\sT_1$ have size $N/3 = N/2 - N/6$ whereas sets in $\sT_2$ have size $2N/3 = N/2 + N/6$.) Below we expand on the details for completeness.
    \begin{enumerate}
    \item If $\nu(\MM_2) \leq \frac{1}{1+\delta}$, then for all $\pi_1 \in [n+1,2n]$ and $\pi_2 \in [2q]$, $f\sups{\Mbar_\nu}_2((\pi_1, \pi_2)) \leq 0$, meaning that, since for all decisions $\pi$ in the support of $p^\st$, $\pi_2 \in [n+1,2n]$, the expression in \cref{eq:sup-mcom} is non-positive.
    \item Next suppose there is some model $M_\MT \in \MM_2$ so that $\nu(M_\MT) \geq 14/15$. We must have that $\sum_{j \in \MT \backslash \MT(\Mbar)} \Mbar[j] \leq |\MT \backslash \MT(\Mbar)| \cdot \frac{1}{N}$. On the other hand, %
      since for each $i \in \MT$ we have $M_\MT[i] = 3/(2N)$, we have $\sum_{j \in \MT \backslash \MT(\Mbar)} \Mbar_\nu[j] \geq \frac{14}{15} \cdot \frac{3}{2N} \cdot |\MT\backslash\MT(\Mbar)|  \geq \frac{7}{5N} \cdot |\MT\backslash \MT(\Mbar)|$. Thus, for any $\pi \in \Pi$, since $|\MT(\Mbar)| \leq N/2$ and $|\MT| = 2N/3$ (as $M_\MT \in \MM_2$),
      \begin{align}
       &  \hell{\Mbar_\nu(\pi)}{\Mbar(\pi)} \geq \left( \tvd{\Mbar_\nu(\pi)}{\Mbar(\pi)}\right)^2 \geq \left(|\MT\backslash \MT(\Mbar)| \cdot \left( \frac{7}{5N} - \frac{1}{N} \right) \right)^2 \nonumber\\
        & \geq \left( \frac{N}{6} \cdot \left( \frac{7}{5N} - \frac{1}{N} \right)\right)^2 =\frac{1}{225}\nonumber.
      \end{align}
      Thus, as long as $\gamma \geq 225$, since $\fm_2(\pi_1, \pi_2') \leq 1$ for all $\pi_1, \pi_2'$, the expression in \cref{eq:sup-mcom} for $ M=  \Mbar_\nu$ is bounded above by 0.
    \item In the remaining case, we must have $\nu(\MM_2) \geq \frac{1}{1+\delta}$, yet for each $M_\MT \in \MM_2$, $\nu(M_\MT) < 14/15$. In this case, the expression in \cref{eq:sup-mcom} for $M = \Mbar_\nu$ is bounded above by $\ep$, via an argument identical to the one in \cref{it:eps-contradiction} above where one replaces all intances of $\MM_1$ with $\MM_2$. 
    \end{enumerate}
    Summarizing, we have shown that \cref{eq:sup-mcom} is bounded above by $\ep$ for an arbitrary choice of $\Mbar$, which completes the proof of the lemma. 
\end{proof}

\begin{proof}[\pfref{lem:rs-sc-lb}]
 Fix any $T \leq \sqrt{N} / C$ (for a constant $C$ to be specified below), and consider any algorithm $(p,q) = \{ (q\^t(\cdot | \cdot), p(\cdot | \cdot) \}_{t=1}^T$. Recall that, for any model $M$, $\hist\^T$ denotes the history of interaction between the algorithm $(p,q)$ and the model $M$, and is defined by $\hist\^T = (\pi\^1, o\^1), \ldots, (\pi\^T, o\^T)$. $\hist\^T$ is associated with the measure space $(\Omega\^T, \mathscr{F}\^T)$.  For each model $M \in \MM$, we use the abbreviate $\BP\sups{M} \equiv \BP\sups{M, (p,q)}$ as the law of $\hist\ind{T}$, and write $\E\sups{M}$ for the corresponding expectation. We will show the stronger statement that the algorithm $(p,q)$ has large risk for a \emph{uniformly random} model $\Mstar \in \MM$; in particular,
  \begin{align}
\E_{\Mstar \sim \Unif(\MM)} \E\sups{\Mstar} \left[\hmstar(\wh\pi) \right]  > \delta/100\label{eq:unif-mstar-lb}. 
  \end{align}
  Clearly \cref{eq:unif-mstar-lb} implies the statement of \cref{lem:rs-sc-lb}.

  In order to prove \cref{lem:rs-sc-lb}, we first prove a few intermediate results. To start, we define an additional model $M_0$: the distribution of $(r_1, r_2, \ocirc) \sim M_0(\pi)$ are as follows:
\begin{itemize}
\item The rewards $r_1, r_2$ are given as in \cref{eq:rewards-bstar} with $\MB^\st = \emptyset$; in particular, $r_k = f_k\sups{M_0}(\pi)$ are deterministic with
  \begin{align}
    f_1\sups{M_0}(\pi) = -f_2\sups{M_0}(\pi) = \begin{cases}
      0 &: \pi \in \Pi_1 \times \{0 \} \\
      1 &: \pi \in \Pi_2 \times \{1, 2, \ldots, 2q \}.
    \end{cases}\nonumber
  \end{align}
\item The pure observation $\ocirc\in [N]$ is a uniformly random element of $[N]$. %
\end{itemize}
Next, recall that we write, for $\pi \in \Pi, k \in \{1,2\}, M \in \MM$, $\hm_k(\pi) = \max_{\dev \in \Dev} \fm_k(\Sw(\dev,\pi)) - \fm_k(\pi)$. 

\cref{lem:m0-halfprob} below shows that for each $i\in\crl{1,2}$, under the model $M_0$, with constant probability either all models in $\MM_1$ or all models in $\MM_2$ have high risk with respect to the algorithm's output policy $\wh \pi$.
\begin{lemma}
  \label{lem:m0-halfprob}
  There is some $i \in \{1,2\}$ (depending on the algorithm $(p,q)$) so that 
  \begin{align}
\BP\sups{M_0} \left( \forall M \in \MM_i: \hm(\wh\pi) \geq \delta \right) \geq \frac{1}{2}\nonumber.
  \end{align}
\end{lemma}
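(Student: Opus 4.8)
The plan is to reduce the statement to a purely combinatorial fact about the output decision $\wh\pi$, by first giving a complete, \emph{model-uniform} characterization of the risk $\hm(\wh\pi)$ within each family $\MM_1$ and $\MM_2$. Throughout, recall that $\sJ$ is a two-player zero-sum NE instance, so $\hm(\wh\pi) = \hm_1(\wh\pi) + \hm_2(\wh\pi)$ with $\hm_k(\wh\pi) = \max_{\pi_k' \in \Pi_k} \fm_k(\pi_k', \wh\pi_{-k}) - \fm_k(\wh\pi)$, and that $\wh\pi = (\wh\pi_1, \wh\pi_2) \in \Pi_1 \times \Pi_2$ is a pure joint decision with $\Pi_1 = \{1,\dots,2n\}$ and $\Pi_2 = \{0,1,\dots,2q\}$.

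First I would fix an arbitrary $\wh\pi$ and evaluate $\hm(\wh\pi)$ against an arbitrary $M_\MT \in \MM_1$, using the reward structure \eqref{eq:rewards-bstar} together with the fact that for $M_\MT \in \MM_1$ the active set $\MB^\st(\MT)$ is supported on first coordinates in $[n]$ and second coordinates in $[q]$. The analysis splits on whether $\wh\pi_2 = 0$ or $\wh\pi_2 \in [2q]$, and on whether $\wh\pi_1 \in [n]$ or $\wh\pi_1 \in [n+1,2n]$. When $\wh\pi_2 \in [2q]$, either $(\wh\pi_1,\wh\pi_2)\in\MB^\st(\MT)$, so $\fmstar_1(\wh\pi) = -\delta$ while player $1$ can deviate to reward $1$, forcing $\hm_1 \geq 1+\delta$; or $(\wh\pi_1,\wh\pi_2)\notin\MB^\st(\MT)$, so $\fmstar_1(\wh\pi) = 1$ while player $2$ can deviate (to $\pi_2'=0$, or to an element of $\MB^\st(\MT)$ when $\wh\pi_1\in[n]$) to drop player $1$'s value by at least $1$, forcing $\hm_2 \geq 1$. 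Either way $\hm(\wh\pi) \geq \delta$, uniformly over $\MT$. When $\wh\pi_2 = 0$ and $\wh\pi_1 \in [n]$, player $2$ can deviate to $\Phi(\MT)(\wh\pi_1)$ and gain exactly $\delta$, so $\hm(\wh\pi) = \delta$ for every $\MT \in \sT_1$. Finally, when $\wh\pi_2 = 0$ and $\wh\pi_1 \in [n+1,2n]$, no coordinate $(\wh\pi_1,\cdot)$ lies in $\MB^\st(\MT)$ for any $\MT\in\sT_1$, neither player can profitably deviate, and $\hm(\wh\pi) = 0$.

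The essential feature, which I would emphasize, is that this dichotomy is uniform across the whole family: the risk is $\geq \delta$ for \emph{every} $M_\MT \in \MM_1$ unless $\wh\pi$ lies in the safe region $S_1 := \{\wh\pi : \wh\pi_2 = 0,\ \wh\pi_1 \in [n+1,2n]\}$, in which case it is $0$ for every $M_\MT \in \MM_1$. Running the identical argument for $\MM_2$ — whose active sets are supported on first coordinates in $[n+1,2n]$ and second coordinates in $[q+1,2q]$ — yields the same statement with safe region $S_2 := \{\wh\pi : \wh\pi_2 = 0,\ \wh\pi_1 \in [n]\}$. Consequently, for each $i\in\{1,2\}$ the event $E_i := \{\forall M \in \MM_i : \hm(\wh\pi) \geq \delta\}$ equals $\{\wh\pi \notin S_i\}$ and is a measurable function of $\wh\pi$ alone (it does not depend on which model within $\MM_i$ is chosen).

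To conclude, I would simply observe that $S_1$ and $S_2$ are disjoint, since they pin $\wh\pi_1$ to the disjoint ranges $[n+1,2n]$ and $[n]$; hence $S_1 \cap S_2 = \emptyset$, so $E_1 \cup E_2 = \Omega$. Therefore $\BP\sups{M_0}(E_1) + \BP\sups{M_0}(E_2) \geq \BP\sups{M_0}(E_1 \cup E_2) = 1$, and so $\max_{i}\BP\sups{M_0}(E_i) \geq \tfrac12$, which is exactly the claim. I do not anticipate a genuine obstacle: the argument is essentially a counting observation once the reward computation is done, and the only real care needed is the bookkeeping in the risk case analysis and the verification that the two families force \emph{good} (low-risk) play into disjoint pure-decision regions — which is precisely what placing $\MB^\st(\MT)$ on $[n]$ for $\MM_1$ versus $[n+1,2n]$ for $\MM_2$ is engineered to guarantee.
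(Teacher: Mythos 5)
Your proposal is correct and follows essentially the same route as the paper: you identify that for every model in $\MM_1$ the risk is at least $\delta$ unless $\wh\pi \in [n+1,2n]\times\{0\}$ (and symmetrically for $\MM_2$ with $[n]\times\{0\}$), and then use that these two "safe" regions are disjoint so the two model-uniform bad events cover the whole sample space, forcing one to have probability at least $\tfrac12$. The paper packages this slightly differently (first showing risk $\geq 1$ on all of $\Pi_1\times[2q]$ for every model, then handling the $\pi_2=0$ strips per family), but the case analysis and the final covering/pigeonhole step are the same.
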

The proof of \cref{lem:m0-halfprob} is provided at the end of this section. 
Since $M_0$ is not in $\MM$, \cref{lem:m0-halfprob} is not enough to prove \cref{lem:rs-sc-lb}; we will next use a series of change-of-measure arguments to reason about the history of interaction when the true model is a uniformly random model in $M$. In particular, for each model $M_\MT \in \MM$, we define an intermediate model $M_{\MT, 0}$: the distribution of $(r_1, r_2, \ocirc) \sim M_{\MT, 0}(\pi)$ is as follows:
\begin{itemize}
\item The rewards $(r_1, r_2)$ are given identically to the rewards under $M_0(\pi)$ (in particular, they are deterministic).
\item The pure observation $\ocirc$ is a uniformly random element of $\MT$. %
\end{itemize}
\cref{lem:m0-mt0-close} below shows that under a history drawn from $M_{\MT, 0}$ for a uniformly random $\MT \sim \Unif(\sT_i)$, with high probability the algorithm will not query any decision belonging to $\MB^\st(\MT) \subset \Pi$; 
furthermore, the distribution of the history $\hist\^T$ is close under $M_0$ and under $M_{\MT, 0}$, again for a uniformly random $\MT \sim \Unif(\sT_i)$:
\begin{lemma}
  \label{lem:m0-mt0-close}
  For each $i \in \{1,2\}$, the following holds:
  \begin{align}
\E_{\MT \sim \Unif(\sT_i)} \E\sups{M_{\MT, 0}} \left[\One{\{\pi\^1, \ldots, \pi\^T \} \cap \MB^\st(\MT) \neq \emptyset} \right]\leq \frac{2T}{q} + \frac{1}{100}\label{eq:intersect-ub}.
  \end{align}
  Furthermore, for any measurable subset $\MF \in \CF\^T$ of histories,
  \begin{align}
\left| \E\sups{M_0}\left[\One{\hist\^T \in \MF}\right] - \E_{\MT \sim \Unif(\sT_i)} \E\sups{M_{\MT, 0}} \left[ \One{\hist\^T \in \MF} \right] \right| \leq \frac{1}{100}\label{eq:lem-m0mt0-close}.
  \end{align}
\end{lemma}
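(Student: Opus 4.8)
The plan is to prove the two displayed inequalities separately, in both cases exploiting the defining feature of the auxiliary models $M_{\MT,0}$: their rewards are exactly those of $M_0$ (hence deterministic and independent of $\MT$), so the \emph{only} channel through which $\MT$ influences the history $\hist\^T$ is the pure observation $\ocirc$, which under $M_{\MT,0}(\pi)$ is uniform over $\MT$ and, crucially, independent of the decision $\pi$ played.

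For \eqref{eq:intersect-ub} I would run a Bayesian posterior argument. Viewing $\MT\sim\Unif(\sT_i)$ as a prior and running the algorithm against $M_{\MT,0}$, the likelihood of any history under $M_{\MT,0}$ depends on $\MT$ only through the indicator that the observed set $\MQ\^{t-1}:=\{\ocirc\^1,\dots,\ocirc\^{t-1}\}$ is contained in $\MT$ (the deterministic reward factors cancel, and all sets in $\sT_i$ have the same cardinality, so the $(1/|\MT|)$ factors are equal). Hence the posterior law of $\MT$ given $\hist\^{t-1}$ is precisely $\Unif(\sT_i)$ conditioned on $\MQ\^{t-1}\subset\MT$, and sampling $\pi\^t=(a_1\^t,a_2\^t)$ adds no further information about $\MT$. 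Since $\pi\^t\in\MB^\st(\MT)$ can hold only if $a_1\^t,a_2\^t$ lie in the coordinate ranges relevant to $\MB^\st(\MT)$ and $\Phi(\MT)(a_1\^t)=a_2\^t$, the conditional probability that $\pi\^t\in\MB^\st(\MT)$ is at most $2/q$ by \cref{cond:unif-hit}; this is legitimate because $|\MQ\^{t-1}|\le t-1<T\le\sqrt N/C\le\sqrt N$. A union bound over $t\in[T]$ then gives $\le 2T/q$, which is stronger than the claimed $2T/q+1/100$.

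For \eqref{eq:lem-m0mt0-close} I would bound the total variation between $\BP\sups{M_0}$ and the mixture law $\BP\sups{\Mbar_i}$, where $\Mbar_i:=\E_{\MT\sim\Unif(\sT_i)}[M_{\MT,0}]$, by coupling on the event $\ME$ that the observations $\ocirc\^1,\dots,\ocirc\^T$ are all distinct. The key combinatorial fact is that a uniform $T$-subset of a uniform $|\MT|$-subset of $[N]$ is itself a uniform $T$-subset of $[N]$ (valid for $T\le|\MT|$); a short computation of $\BP\sups{\Mbar_i}(\ocirc\^{1:T}=(o_1,\dots,o_T))$ for distinct $o_1,\dots,o_T$ shows it is constant in the tuple, so conditioned on $\ME$ the ordered observation sequence has identical law under $M_0$ and $\Mbar_i$. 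Because the rewards are deterministic and $\MT$-independent and the decision kernels are the same, the full history is a fixed measurable function of the observation sequence and the algorithm's (independent) internal randomness; therefore $\BP\sups{M_0}(\cdot\mid\ME)=\BP\sups{\Mbar_i}(\cdot\mid\ME)$. The usual decomposition then yields $\tvd{\BP\sups{M_0}}{\BP\sups{\Mbar_i}}\le |\BP\sups{M_0}(\ME)-\BP\sups{\Mbar_i}(\ME)| + \max\{\BP\sups{M_0}(\ME^\c),\BP\sups{\Mbar_i}(\ME^\c)\}\le 2\bigl(\BP\sups{M_0}(\ME^\c)+\BP\sups{\Mbar_i}(\ME^\c)\bigr)$. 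Birthday-bound estimates give $\BP\sups{M_0}(\ME^\c)\le\binom T2/N$ and $\BP\sups{\Mbar_i}(\ME^\c)\le\binom T2/|\MT|\le 3\binom T2/N$, so the bound is $O(T^2/N)=O(1/C^2)\le 1/100$ once the constant $C$ (hence $C_0$) is chosen large enough; this gives \eqref{eq:lem-m0mt0-close}.

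I expect the main obstacle to be the bookkeeping in \eqref{eq:lem-m0mt0-close}: one must verify that conditioning on distinctness genuinely equalizes the two \emph{marginal} observation laws (not merely the conditional-on-$\MT$ laws), and that this equalization then lifts from observations to full histories despite the interleaving of adaptive decisions. The combinatorial ``subset of a subset'' identity together with the independence of observations from decisions is exactly what makes this lift clean; everything else reduces to the posterior computation of part one and elementary birthday-bound estimates.
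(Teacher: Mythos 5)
Your proposal is correct and follows essentially the same route as the paper's proof: both parts hinge on the facts that, since rewards under $M_{\MT,0}$ are deterministic and $\MT$-independent and the observation law is decision-independent, the posterior of $\MT$ given the history is $\Unif(\sT_i)$ conditioned on containing the observed elements (so \cref{cond:unif-hit} applies), and that conditioned on the observations being distinct the history laws under $M_0$ and the $\Unif(\sT_i)$-mixture coincide, with birthday bounds controlling the failure event. The one (minor) difference is in the first inequality: the paper conditions on the distinctness event before doing the change-of-measure computation and therefore pays the additive $1/100$, whereas your direct posterior argument applies \cref{cond:unif-hit} with $\MQ\^{t-1}=\{\ocirc\^1,\dots,\ocirc\^{t-1}\}$ (which has size at most $\sqrt{N}$ regardless of repeats) and so obtains the slightly stronger bound $2T/q$.
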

The proof of \cref{lem:m0-mt0-close} is provided at the end of this section.

Next, \cref{lem:mt0-mt-close} shows that if, for some model $M_\MT$, the algorithm does not query any decision in $\MB^\st(\MT)$ with high probability, then the distribution of histories under $\BP\sups{M_{\MT,0}}$ and $\BP\sups{M_\MT}$ are close.
\begin{lemma}
  \label{lem:mt0-mt-close}
  Fix some model $M_\MT \in \MM$ so that $\BP\sups{M_{\MT,0}}(\{ \pi\^1, \ldots, \pi\^T\} \cap \MB^\st(M_\MT) \neq \emptyset) \leq \eta$ for some $\eta > 0$. Then $\tvd{\BP\sups{M_{\MT,0}}}{\BP\sups{M_\MT}} \leq \eta$.
\end{lemma}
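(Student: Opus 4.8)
The plan is to apply \cref{lem:pq-tvd-cond} with $\BP = \BP\sups{M_{\MT,0}}$, $\BQ = \BP\sups{M_\MT}$, and the event $\ME \in \CF\^T$ that the algorithm never plays a decision in $\MB^\st(\MT)$, namely
\[
\ME := \{ \hist\^T \ : \ \{\pi\^1,\ldots,\pi\^T\} \cap \MB^\st(\MT) = \emptyset \}.
\]
By hypothesis $\BP\sups{M_{\MT,0}}(\ME) \geq 1-\eta$, so it remains to verify that $\BP\sups{M_{\MT,0}}(\ME \cap \MF) = \BP\sups{M_\MT}(\ME \cap \MF)$ for every $\MF \in \CF\^T$; the bound $\tvd{\BP\sups{M_{\MT,0}}}{\BP\sups{M_\MT}} \leq \eta$ is then immediate from \cref{lem:pq-tvd-cond} with $\delta = \eta$.

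The first step is the key structural observation that $M_{\MT,0}$ and $M_\MT$ induce identical full-observation distributions off of $\MB^\st(\MT)$, i.e. $M_{\MT,0}(\pi) = M_\MT(\pi)$ for every $\pi \notin \MB^\st(\MT)$. Both models draw the pure observation $\ocirc$ uniformly from $\MT$ and have deterministic, zero-sum rewards, so it suffices to compare the reward functions. The rewards of $M_{\MT,0}$ are those of $M_0$, and comparing these with \eqref{eq:rewards-bstar}, the two assignments differ only on $\MB^\st(\MT)$ (where $M_0$ gives player~1 reward $1$ and $M_\MT$ gives $-\delta$) and coincide everywhere else.

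Given this, the second step is to propagate the agreement to the level of histories restricted to $\ME$. I would argue by induction on the round $t$, using that the algorithm kernels $q\^t(\cdot \mid \hist\^{t-1})$ and the output kernel $p(\cdot \mid \hist\^T)$ are shared by the two measures (they do not depend on the true model), and that whenever a prefix $\hist\^{t-1}$ lies in the corresponding event $\ME_{t-1}$ and the newly played $\pi\^t$ keeps the history in $\ME_t$, we have $\pi\^t \notin \MB^\st(\MT)$, so $o\^t$ has the same conditional law under $M_{\MT,0}$ and $M_\MT$ by the first step. Concretely, writing the likelihood of a history as the product of the shared algorithm kernels and the model kernels, on $\ME$ every model-kernel factor is evaluated at a decision outside $\MB^\st(\MT)$ and is therefore identical under the two measures; incorporating the (also shared) output kernel $p$ extends the agreement to the joint law of $(\hist\^T, \wh\pi)$.

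The main obstacle is making this restricted measure-agreement rigorous for arbitrary measurable $\MF$, since $\ME$ is specified through all $T$ decisions at once rather than round by round. I expect the cleanest route is the prefix induction just sketched: define $\ME_t \subseteq \Omega\^t$ and show that the restrictions of $\BP\sups{M_{\MT,0}}$ and $\BP\sups{M_\MT}$ to $\ME_t$ coincide, carrying the induction through the alternating application of the (model-independent) algorithm kernel and the model kernel. Once the first-step identity $M_{\MT,0}(\pi) = M_\MT(\pi)$ for $\pi \notin \MB^\st(\MT)$ is in hand, the remainder is routine bookkeeping and \cref{lem:pq-tvd-cond} closes the argument.
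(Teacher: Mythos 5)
Your proposal is correct and follows essentially the same route as the paper: the paper's proof also invokes \cref{lem:pq-tvd-cond} with the event $\ME$ that no played decision lies in $\MB^\st(\MT)$, justified by the observation that $M_{\MT,0}(\pi)$ and $M_\MT(\pi)$ coincide for $\pi \notin \MB^\st(\MT)$. The prefix-by-prefix factorization you sketch is exactly the bookkeeping the paper leaves implicit.
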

The proof of \cref{lem:mt0-mt-close} is provided at the end of this section. Given the above lemmas, we now establish \cref{eq:unif-mstar-lb}.   Suppose for the purpose of contradiction that $\E_{M \sim \Unif(\MM)} \E\sups{M} \left[ \hm(\wh \pi) \right] \leq \delta/100$. Then by Markov's inequality, $\E_{M \sim \Unif(\MM)} \E\sups{M} \left[ \One{\hm(\wh \pi) \geq \delta} \right] \leq 1/100$. Since $\Unif(\MM)$ is the uniform average of $\Unif(\MM_1)$ and $\Unif(\MM_2)$, it follows that for each $i \in \{1,2\}$,
  \begin{align}
   \E_{M \sim \Unif(\MM_i)} \E\sups{M} \left[ \One{\hm(\wh \pi) \geq \delta }\right] \leq 1/50.\label{eq:t-mmi-ub}
  \end{align}

  We next note that \cref{lem:m0-halfprob} gives that for some $i^\st \in \{1,2\}$,
  \begin{align}
\BP\sups{M_0} \left( \forall M \in \MM_{i^\st}:  \hm(\wh \pi) \geq \delta \right) \geq \frac 12 \nonumber.
  \end{align}
  By the conclusion \cref{eq:lem-m0mt0-close} of \cref{lem:m0-mt0-close}, it follows that
  \begin{align}
\E_{\MT \sim \Unif(\sT_{i^\st})} \E\sups{M_{\MT,0}} \left[ \indic\crl*{\forall M \in \MM_{i^\st}: \hm(\wh \pi) \geq \delta }\right] \geq 1/2 - 1/100\label{eq:hybrid-allm-lb}.
  \end{align}

  Next, by the statement \cref{eq:intersect-ub} of \cref{lem:m0-mt0-close} and using that $2T \leq \sqrt{N}$ and $\sqrt{N}/q \leq 1/\sqrt{N} \leq 1/100$ %
  for sufficiently large $N$, 
  \begin{align}
   \E_{\MT \sim \Unif(\sT_{i^\st})} \E\sups{M_{\MT, 0}} \left[ \One{\{\pi\^1, \ldots, \pi\^T\} \cap \MB^\st(M_\MT) \neq \emptyset} \right] \leq \frac{\sqrt{N}}{q} + \frac{1}{100}  \leq \frac{1}{50}
    \label{eq:m0-mt0-close}. 
  \end{align}
  Now, for $\eta = 1/7$, let us write $\chi(\MT) := \One{ \E\sups{M_{\MT,0}}\left[\indic\crl*{ \{\pi\^1, \ldots, \pi\^T\} \cap \MB^\st(M_\MT) \neq \emptyset }\right] > \eta} \in \{0,1\}$; \eqref{eq:m0-mt0-close} together with Markov's inequality give that $\E_{\MT \sim \Unif(\sT_{i^\st})}[\chi(\MT)] \leq 1/7$.

  Next, \cref{lem:mt0-mt-close} gives that, for all $\MT \in \sT_1 \cup \sT_2$,
  \begin{align}
     \BP\sups{M_\MT} \left( \forall M \in \MM_{i^\st}: \ \hm(\wh \pi) \geq \delta \right) 
    \geq \BP\sups{M_{\MT,0}} \left( \forall M \in \MM_{i^\st}: \ \hm(\wh \pi) \geq \delta \right) - \chi(\MT) - \eta \nonumber,
  \end{align}
  and taking expectation over $\MT \sim \Unif(\sT_{i^\st})$ and using that $\E_{\MT \sim \Unif(\sT_{i^\st})}[\chi(\MT)] \leq 1/7$ and the choice of $\eta=1/7$ gives that
  \begin{align}
    & \BP_{\MT \sim \Unif(\sT_{i^\st})} \E\sups{M_\MT}\left[\One{\forall M \in \MM_{i^\st}:\ \hm(\wh \pi) \geq \delta }\right]\nonumber\\
    \geq & \BP_{\MT \sim \Unif(\sT_{i^\st})} \E\sups{M_{\MT,0}} \left[\One{ \forall M \in \MM_{i^\st}: \ \hm(\wh \pi) \geq \delta} \right] - 2/7\geq 1/2 - 1/100 - 2/7\nonumber,
  \end{align}
  where the final inequality follows by \eqref{eq:hybrid-allm-lb}. In particular, using that $M_\MT \in \MM_{i^\st}$ if $\MT \in \sT_{i^\st}$, we have
  \begin{align}
\E_{\MT \sim \Unif(\sT_{i^\st})} \E\sups{M_\MT} \left[ \One{h\sups{M_\MT}(\wh \pi) \geq \delta }\right] \geq 1/2 - 1/100 - 2/7 > 1/5\nonumber,
  \end{align}
which contradicts \eqref{eq:t-mmi-ub}, thus completing the proof.  
\end{proof}

\begin{proof}[\pfref{lem:m0-halfprob}]
We write $\til\Pi := \Pi_1\times(\Pi_2\setminus\crl{0}) = \Pi_1 \times \{1, 2, \ldots, 2q \} \subset \Pi$.  First, we claim that for all $\pi \in \til\Pi$, and all $M \in \MM$, it holds that $\hm(\pi) = \hm_1(\pi) + \hm_2(\pi) \geq \delta$. To see this, consider any $M = M_\MT \in \MM$, and  we consider the following two cases:
  \begin{itemize}
  \item If $\pi \not \in \MB^\st(\MT)$, then $f_2\sups{M}((\pi_1, 0)) - f_2\sups{M}(\pi) = 0 - (-1) = 1$.
  \item If $\pi \in \MB^\st(\MT)$, then there must be some $\pi_1' \in \Pi_1$ with $(\pi_1', \pi_2) \not \in \MB^\st(\MT)$, and so $f_1\sups{M}((\pi_1', \pi_2)) - f_1\sups{M}(\pi) = 1 - (-\delta) = 1+\delta$.
  \end{itemize}
  \dfcomment{I am a little confused as to where these bullets are invoked below. is the point just to say that we can now restrict our attention to $\pi_2=0$ below?}\noah{yes, added clarification sentence} 
Next, note that
  \begin{align}
\max\left\{\BP\sups{M_0} \left( \wh \pi \in \til \Pi \cup ([n] \times \{0 \})\right) , \BP\sups{M_0} \left( \wh \pi \in \til \Pi \cup ([n+1, 2n] \times \{0\}) \right)\right\} \geq 1/2\nonumber.
  \end{align}
  Let us first suppose that $\BP\sups{M_0} \left( \wh \pi \in \til \Pi\cup ([n] \times \{0 \})\right) \geq 1/2$. Note that if $M=M_\MT \in \MM_1$ and $\pi \in [n] \times \{0\}$, then $h\sups{M}(\pi) \geq h_2\sups{M}(\pi) = f_2\sups{M}((\pi_1, \Phi(\MT)(\pi_1))) - f_2\sups{M}(\pi) = \delta - 0 = \delta$. Moreover, the two bullet points above establish that if $\wh \pi \in \til \Pi$, then $\hm(\wh \pi) \geq 1 > \delta$. Thus, in this case, we have established that $\BP\sups{M_0} \left(\forall M \in \MM_1, \  \hm(\wh\pi) \geq \delta \right) \geq 1/2$.

  In the other case, where $\BP\sups{M_0} \left( \wh \pi \in \til \Pi \cup ([n+1, 2n] \times \{0\}) \right)\geq1/2$, it follows inb a symmetric manner that, $\BP\sups{M_0} \left(\forall M \in \MM_2,\ \hm(\wh\pi) \geq \delta \right) \geq 1/2$.
\end{proof}

\begin{proof}[\pfref{lem:m0-mt0-close}]
  Fix any $i \in \{1,2\}$.  For a model $M \in \{M_0\} \cup \bigcup_{\MT \in \sT_i} \{ M_{\MT, 0} \}$, consider a draw of $\hist\^T = (\pi\^1, (r_1\^1, r_2\^1, \ocirc\^1), \ldots, \pi\^T, (r_1\^T, r_2\^T, \ocirc\^T)) \sim \BP\sups{M}$, where we have written out the full observations $o\^t = (r_1\^t, r_2\^t, \ocirc\^t)$. Since the distribution of the pure observations $\ocirc\^t \sim M(\pi)$ does not depend on the policy $\pi$, the distribution of $\hist\^T$ is identical to the following one: first, $\ocirc\^1, \ldots, \ocirc\^T$ are drawn i.i.d.~from $M(\pi_0)$ (for an arbitrary decision $\pi_0$), and then the decisions $\pi\^t$ are chosen adaptively, $\pi\^t \sim q\^t(\cdot | \hist\^{t-1})$, with the rewards $r_1\^t, r_2\^t$ being determined by $\pi\^t$.

For any $\MT \in \sT_i$, and for any $t,t' \in [T]$ with $t \neq t'$, we have $\BP\sups{M_{\MT, 0}}[\ocirc\^t = \ocirc\^{t'}]=1/|\MT| \leq 3/N$. Thus
  \begin{align}
\BP\sups{M_{\MT, 0}} \left( \exists t \neq t' \ : \ \ocirc\^t = \ocirc\^{t'} \right) \leq T^2 \cdot 3/N \leq 1/100\label{eq:birthday},
  \end{align}
  where the final inequality follows since $T \leq \sqrt{N/300}$ (as long as the constant $C$ in the statement of \cref{lem:rs-sc-lb} is sufficiently large). Let $\ME \in \CF\^T$ denote the event that for all $t \neq t'$, $\ocirc\^t \neq \ocirc\^{t'}$. The inequality \cref{eq:birthday} gives that
  \begin{align}
    \E_{\MT \sim \Unif(\sT_i)} \E\sups{M_{\MT, 0}} \left[ \One{\ME} \right] \geq 1-1/100
    \label{eq:e-prob-lb}.
  \end{align}
  In a similar manner, we also have that
  \begin{align}
\E\sups{M_0}[\One{\ME}] \geq 1-1/100\label{eq:e-prob-m0}.
  \end{align}
Now, we may compute
  \begin{align}
    & \E_{\MT \sim \Unif(\sT_i)} \E\sups{M_{\MT, 0}} \left[ \One{\pi\^t \in \MB^\st(\MT) } \ | \ \ME \right]\nonumber\\
    =& \sum_{\MT \in \sT_i} \frac{1}{|\sT_i|} \sum_{\omega_1, \ldots, \omega_T \in [N]}  \BP\sups{M_{\MT, 0}} \left(\ocirc\^{1:T} = \omega_{1:T} \ | \ \ME \right) \cdot \E\sups{M_{\MT, 0}} \left[ \One{\pi\^t \in \MB^\st(\MT)} \ | \ \ocirc\^{1:T} = \omega_{1:T}\right]\nonumber\\
    =& \sum_{\MT \in \sT_i} \frac{1}{|\sT_i|} \sum_{\substack{\omega_1, \ldots, \omega_T \in \MT \\ \text{distinct}}} \frac{1}{N_i (N_i-1) \cdots (N_i-T+1)} \cdot \E\sups{M_{ 0}} \left[ \One{\pi\^t \in \MB^\st(\MT)} \ | \ \ocirc\^{1:T} = \omega_{1:T}\right]  \nonumber\\
    =& \sum_{\substack{\omega_1, \ldots, \omega_T \in [N] \\ \text{distinct}}}  \frac{1}{N(N-1) \cdots (N-T+1)} \E_{\MT \sim \Unif(\sT_i)} \E\sups{M_0}\left[ \One{\pi\^t \in \MB^\st(\MT)} \ | \ \ocirc\^{1:T} = \omega_{1:T},\ \{ \omega_1, \ldots, \omega_{T}\} \subset \MT\right] \label{eq:mt0-m0-relate}\\
    \leq & 2/q,\nonumber
  \end{align}
  where:
  \begin{itemize}
  \item The second equality uses that the distribution of $\hist\^T$ conditioned on $o\^{1:T}$ is identical under $\BP\sups{M_0}$ and $\BP\sups{M_{\MT, 0}}$.
  \item The third equality switches the order of summation and uses that $1/|\sT_i| = {N \choose N_i}^{-1} = \frac{N_i!}{N(N-1)\cdots (N-N_i+1)}$, as well as the fact that 
    the number of sets $\MT$ containing any tuple $\omega_1, \ldots, \omega_T \in [N]$ of distinct integers is $\frac{(N-T)(N-T-1) \cdots (N-N_i+1)}{(N_i-T)!}$.
  \item The final inequality uses the fact that, for fixed $\omega_1, \ldots, \omega_T$, the distribution of $\MT \sim \Unif(\sT_i) | \{ \omega_1, \ldots, \omega_T \} \subset \MT$ is independent of the distribution of $\hist\^T \sim \BP\sups{M_0} | \ocirc\^{1:T} = \omega_{1:T}$. Moreover, the definition of $\MB^\st(\MT)$ in terms of $\Phi(\MT)$ and the fact $\Phi$ satisfies \cref{cond:unif-hit} means that, for any fixed $\pi = (\pi_1, \pi_2) \in \Pi$ with $\pi_1 > 0$,
    \begin{align}
\BP_{\MT \sim \Unif(\sT_i)}(\pi \in \MB^\st(\MT) | \{ \omega_1, \ldots, \omega_T \} \subset \MT ) = \BP_{\MT \sim \Unif(\sT_i)} (\Phi(\MT)(\pi_1) = \pi_2 | \{ \omega_1, \ldots, \omega_T\}\subset \MT) \leq 2/q\nonumber,
    \end{align}
    where we take $\Phi(\MT)(\pi_1) = -1$ if $\pi_1$ is not in the domain of $\Phi(\MT)$. (Here we have also used that $T \leq \sqrt{N}$.) In particular, the above inequality holds with the random choice of $\pi\^t \sim \BP\sups{M_0} | \ocirc\^{1:T} = \omega_{1:T}$ replacing $\pi$. 
  \end{itemize}
Taking a union bound over all $T$ values of $t \in [T]$ and applying \cref{eq:e-prob-lb}, the first claim \cref{eq:intersect-ub} of the lemma follows.

To show the second claim \cref{eq:lem-m0mt0-close} of the lemma, we note that for any fixed subset $\MF \in \CF\^T$ (not depending on $\MT$) the chain of equalities ending in \cref{eq:mt0-m0-relate} implies that
\begin{align}
\E_{\MT \sim \Unif(\sT_i)} \E\sups{M_{\MT, 0}} [\One{\hist\^T \in \MF} \ | \ \ME] = \E\sups{M_0}[\One{\hist\^T \in \MF} \ | \ \ME] \nonumber,
\end{align}
\eqref{eq:lem-m0mt0-close} follows from the above equality combined with \cref{eq:e-prob-lb} and \cref{eq:e-prob-m0}. 
\end{proof}

\begin{proof}[\pfref{lem:mt0-mt-close}]
Let $\ME$ denote the event that $\{\pi\^1, \ldots, \pi\^T\} \cap \MB^\st(M_\MT) =\emptyset$.  Consider any subset $\MF \subset \CF\^T$ of histories. Then
  \begin{align}
\BP\sups{M_{\MT,0}} (\ME \cap \MF) = \BP\sups{M_\MT} (\ME \cap \MF)\nonumber,
  \end{align}
  which follows since for any decition $\pi \not \in \MB^\st(M_\MT)$, the distribution over the full observation $o \sim M_\MT(\pi)$ and $o \sim M_{\MT, 0}(\pi)$ is identical. The statement of the lemma then follows from \cref{lem:pq-tvd-cond}. 
\end{proof}

\subsection{Supplementary lemmas}

The following lemma, which is an elementary fact from coding theory, states the dimension and distance properties of the \emph{Reed-Solomon code}. To present it, we recall the definition of Hamming distance: for $q,n \in \BN$, and $w,w' \in [q]^n$, we let $\ham{w}{w'} = | \{ i \in [n] : w_i \neq w_i' \} |$ to be the number of positions at which $w,w'$ differ. 
\begin{lemma}[Reed-Solomon code; Section 5.2 of \cite{guruswami2022essential}]
  \label{thm:rs}
  Fix any integers $n,q,k$ satisfying $q \geq k$. Then there is a mapping $\Phi : [q]^k \ra [q]^n$ so that for any two vectors $v,v' \in [q]^k$ with $v \neq v'$, it holds that $\ham{\Phi(v)}{\Phi(v')} \geq n-k+1$.

Furthermore, $\Phi$ may be chosen so that if $X \in [q]^k$ is uniformly random, then for each $i \in [n]$, the value $\Phi(X)_i \in [q]$ is uniformly random. 
\end{lemma}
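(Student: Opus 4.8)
The plan is to realize $\Phi$ as a \emph{Reed--Solomon encoder} obtained from polynomial evaluation. First I would identify the alphabet $[q]$ with a finite field $\BF_q$ and fix $n$ \emph{distinct} evaluation points $\alpha_1,\dots,\alpha_n\in\BF_q$; this is the step that really needs $n\le q$ together with a field structure on $[q]$ (in the regime relevant to \cref{thm:2p0s-poldep} one has $n=q$, so all field elements are used as evaluation points). Each message $v=(v_0,\dots,v_{k-1})\in\BF_q^k$ is associated with the polynomial $P_v(X)=\sum_{j=0}^{k-1}v_jX^j$ of degree at most $k-1$, and I define $\Phi(v)_i:=P_v(\alpha_i)$ for $i\in[n]$, so that $\Phi(v)\in[q]^n$.

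For the distance bound I would use the single algebraic fact that a nonzero polynomial of degree at most $k-1$ over a field has at most $k-1$ roots. Given distinct $v\neq v'$, the difference $P_v-P_{v'}$ is a nonzero polynomial of degree at most $k-1$, hence vanishes at no more than $k-1$ of the points $\alpha_1,\dots,\alpha_n$. Equivalently $\Phi(v)$ and $\Phi(v')$ agree in at most $k-1$ coordinates, and therefore differ in at least $n-(k-1)=n-k+1$ coordinates, giving $\ham{\Phi(v)}{\Phi(v')}\ge n-k+1$. For the marginal-uniformity claim I would observe that for each fixed $i$ the coordinate map $v\mapsto\Phi(v)_i=\sum_{j=0}^{k-1}v_j\alpha_i^{\,j}$ is an $\BF_q$-linear functional on $\BF_q^k$, and it is nonzero since the coefficient of $v_0$ equals $\alpha_i^0=1$. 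A nonzero linear functional $\BF_q^k\ra\BF_q$ is surjective with every fiber of cardinality $q^{k-1}$, so it pushes the uniform distribution on $\BF_q^k$ to the uniform distribution on $\BF_q$. Hence if $X$ is uniform on $[q]^k$ then each $\Phi(X)_i$ is uniform on $[q]$; note this holds for \emph{any} choice of distinct evaluation points, so it is fully compatible with the distance construction and requires no extra tuning of $\Phi$.

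The one genuine subtlety — and the main obstacle to a self-contained argument — is the alphabet/field mismatch: the lemma is phrased for an arbitrary integer $q$ (with the stated hypothesis $q\ge k$), whereas the polynomial construction needs $[q]$ to carry a field structure admitting at least $n$ distinct elements, i.e.\ $q$ a prime power with $q\ge n$. Since the distance bound $n-k+1$ is used \emph{tightly} in \cref{thm:2p0s-poldep} (the proof of \cref{lem:com-dec} exploits $q-k+1>n(1-\ep/2)$ with only an additive slack of $1$), one cannot afford to lose coordinates by padding a code over a smaller subfield, so the clean route is to take $q$ to be a prime power with $n\le q$ — which is exactly the case $n=q$ arising in the application — and to invoke the standard maximum-distance-separable guarantee of the Reed--Solomon code recorded in \cite{guruswami2022essential} for the general statement.
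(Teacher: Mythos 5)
Your proposal is correct and is exactly the standard polynomial-evaluation construction that the paper's cited reference (\citet{guruswami2022essential}, Section 5.2) gives; the paper itself supplies no proof beyond that citation, so there is nothing to diverge from. You are also right to flag the prime-power/alphabet-size caveat ($q$ a prime power with $n\le q$, versus the lemma's stated hypothesis $q\ge k$) — the paper glosses over this, and in the application one indeed has $n=q$, so the evaluation-point requirement is met provided $q$ is taken to be a prime power.
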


 \cref{lem:extractor} below shows that a certain type of \emph{randomness extractor} exists.
\begin{lemma}
  \label{lem:extractor}
  There is a sufficiently large constant $C \geq 1$ so that the following holds. 
  Consider any positive integers $N, N_0, R, q$ with $N_0 \leq 2N/3$, $R \leq N/6 \leq N_0 - N/6$, and $3 \log q \leq \log{5N/6 \choose N/6} -C$.  Let $\Psi : {[N] \choose N_0} \ra [q]$ be a uniformly random function. Then with probability at least $1 - N^{R+1} \cdot 2^{-{5N/6 \choose N/6} /(Cq^2)}$ over the choice of $\Psi$,  for all subsets $\MQ \subset [N]$ of size $|\MQ| \leq R$, and all $j \in [q]$,
  \begin{align}
    \BP_{\MT \sim \Unif{[N] \choose N_0} }\left( \Psi(\MT) = j | \MQ \subset \MT \right) \leq \frac{2}{q}.\nonumber %
  \end{align}
\end{lemma}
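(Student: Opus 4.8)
The plan is to prove Lemma~\ref{lem:extractor} by a Chernoff estimate for each fixed pair $(\MQ, j)$ followed by a union bound. First I would fix a subset $\MQ \subseteq [N]$ with $|\MQ| = q_0 \le R$ and an index $j \in [q]$. Conditioning on $\MQ \subset \MT$ makes $\MT$ uniform over the $m := \binom{N-q_0}{N_0 - q_0}$ subsets of size $N_0$ containing $\MQ$, so
\[
\BP_{\MT \sim \Unif\binom{[N]}{N_0}}\left( \Psi(\MT) = j \mid \MQ \subset \MT \right) = \frac{1}{m} \sum_{\MT \supseteq \MQ,\, |\MT| = N_0} \One{\Psi(\MT) = j}.
\]
Since $\Psi$ is a uniformly random function, the indicators $\One{\Psi(\MT)=j}$ over distinct $\MT$ are i.i.d.\ $\mathrm{Bernoulli}(1/q)$, so the right-hand side is an empirical mean of $m$ independent Bernoullis with mean $1/q$. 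The event that this conditional probability exceeds $2/q$ is exactly the event that the corresponding sum exceeds twice its expectation $m/q$, and a standard multiplicative Chernoff bound gives that it occurs with probability at most $\exp(-m/(3q))$.

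Next I would establish a clean combinatorial lower bound on $m$ that is uniform over $(\MQ, j)$. Using the identity $\binom{N-q_0}{N_0-q_0} = \binom{N-q_0}{N-N_0}$, the coefficient $m$ has lower index $N - N_0$ independent of $q_0$; since $\binom{n}{k}$ is increasing in $n$ for fixed $k$ and $q_0 \le R \le N/6$, we get $m \ge \binom{N-R}{N-N_0} \ge \binom{5N/6}{N-N_0}$. The hypotheses force $N_0 \in [N/3, 2N/3]$, hence $N-N_0 \in [N/3, 2N/3]$, and over this range the minimum of $\binom{5N/6}{\,\cdot\,}$ is attained at $N - N_0 = 2N/3$ (the endpoint farther from the center $5N/12$), where $\binom{5N/6}{2N/3} = \binom{5N/6}{N/6}$. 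Thus $m \ge \binom{5N/6}{N/6} =: B$ for every pair $(\MQ, j)$.

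Finally I would union bound. There are at most $\sum_{r=0}^R \binom{N}{r} \le N^{R+1}$ choices of $\MQ$ and $q$ choices of $j$, so the probability that the stated inequality fails for some pair is at most $N^{R+1} \cdot q \cdot \exp(-B/(3q))$. It then remains to absorb $q \cdot \exp(-B/(3q))$ into $2^{-B/(Cq^2)}$. Writing $q\,\exp(-B/(3q)) = 2^{\log_2 q - B/(3q\ln 2)}$ and using $q \ge 1$ (so that $B/(Cq^2) \le B/(Cq)$), it suffices to show $q \log_2 q \le c' B$ for a suitable constant $c' > 0$. This is precisely where the assumption $3\log q \le \log\binom{5N/6}{N/6} - C = \log B - C$ is used: it gives $q \le B^{1/3} 2^{-C/3}$, whence $q\log_2 q \lesssim B^{1/3} \log_2 B = o(B)$, which is dominated by $c' B$ once $C$ is a large enough absolute constant.

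The hard part --- really the only delicate point --- is this last bookkeeping step, reconciling the union-bound overhead (the $N^{R+1}$ sets $\MQ$, the $q$ values of $j$, and the $\log q$ slack from the $2^{\log_2 q}$ factor) with the stated exponent $B/(Cq^2)$. The generous $q^2$ rather than $q$ in the denominator, together with the factor of three in $3\log q \le \log B - C$, are exactly the amount of slack needed to make these terms negligible; the Chernoff estimate and the binomial monotonicity argument are otherwise routine.
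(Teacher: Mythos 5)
Your proof is correct, but it takes a genuinely different and more self-contained route than the paper's. The paper conditions on $\MQ \subset \MT$, observes that the resulting distribution is a flat source of min-entropy at least $\log\binom{5N/6}{N/6}$, and then invokes a black-box probabilistic existence result for extractors (Proposition 6.12 of Vadhan's pseudorandomness notes) with $\veps = 1/q$; that result certifies that $\Psi(\MT)$ is $1/q$-close to uniform in total variation, which immediately gives the per-point bound $2/q$ for all $j$ simultaneously, so the union bound is only over the $N^{R+1}$ choices of $\MQ$. You instead prove the per-point bound directly: for each fixed pair $(\MQ, j)$ the conditional probability is an empirical mean of $m = \binom{N-|\MQ|}{N_0-|\MQ|}$ i.i.d.\ $\mathrm{Bernoulli}(1/q)$ indicators, a multiplicative Chernoff bound controls the deviation, your monotonicity argument gives the uniform lower bound $m \geq \binom{5N/6}{N/6}$ (which is the same quantity the paper's min-entropy bound isolates), and you then pay an extra factor of $q$ in the union bound that you absorb into the exponent using $3\log q \leq \log\binom{5N/6}{N/6} - C$. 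The bookkeeping in your last step checks out: with $B := \binom{5N/6}{N/6}$ the hypothesis gives $q \leq (B 2^{-C})^{1/3}$, so $q\log_2 q = O(B^{1/3}\log B)$ is dominated by a constant fraction of $B/q \geq B^{2/3}$ once $C$ is large, which is exactly what is needed to bury both the $2^{\log_2 q}$ overhead and the slack between $\exp(-B/(3q))$ and $2^{-B/(Cq^2)}$. What the paper's approach buys is brevity and a slightly stronger conclusion (TV-closeness to uniform, not just a one-sided per-point bound); what yours buys is that the argument is elementary and avoids the external citation, at the cost of carrying the extra factor of $q$ through the union bound.
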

We clarify that the distribution of the uniformly random function $\Psi : {[N] \choose N_0} \ra [q]$ in the above lemma statement is given as follows: for each $\MS \in {[N] \choose N_0}$, $\Psi(\MS)$ is an independent random variable, distributed uniformly on $[q]$. 

\dfcomment{it would be good to clarify definition of uniformly random function above}\noah{did so}

\begin{proof}[\pfref{lem:extractor}]
  Since $R \leq N/6\leq N_0 - N/6$ and $N_0 \leq 2N/3$, for any subset $\MQ \subset [N]$ of size $|\MQ| \leq R$, the distribution of $\MT \sim \Unif{[N] \choose N_0} | \MQ \subset \MT$ puts mass at most $1/{5N/6 \choose N/6}$ on any subset $\MT$ (such a distribution is known as a \emph{flat $k$-source} for some $k \geq \log {5N/6 \choose N/6}$). By \citet[Proposition 6.12]{vadhan2012pseudorandomness} with $\vep = 1/q$, for a sufficiently large constant $C$, as long as $3 \log q \leq \log{5N/6 \choose N/6} - C$, with probability at least $1-2^{-{5N/6 \choose N/6}/(Cq^2)}$ over the choice of $\Psi$, it holds that, for any fixed $\MQ$ of size at most $R$, the distribution of $\Psi(\MT)$, with $\MT \sim \Unif{[N] \choose N_0} | \MQ \subset \MT$, is $1/q$-close (in total variation distance) to uniform on $[q]$, which in particular implies that $\Psi(\MT) = j$ with probability at most $2/q$ for any $j \in [q]$ (again under $\MT \sim \Unif{[N] \choose N_0}|\MQ \subset \MT$). 

  Taking a union bound over all $\sum_{r=0}^R {N \choose R} \leq N^{R+1}$ possible sets $\MQ$, we obtain that $\Psi$ satisfies the desired property with probability at least $1 - N^{R+1} \cdot 2^{-{5N/6 \choose N/6} /(Cq^2)}$.
\end{proof}

\end{document}